\documentclass[11pt]{article}
\pdfoutput=1

\usepackage{mathrsfs}
\usepackage{amsmath,amssymb}
\usepackage{bm}
\usepackage{natbib}
\usepackage[usenames]{color}
\usepackage{amsthm}

\usepackage{multirow} 
\usepackage{enumitem}
\usepackage{wrapfig}

\usepackage[colorlinks,
linkcolor=red,
anchorcolor=blue,
citecolor=blue
]{hyperref}

\newcommand{\Sb}{\mathbf{S}}

\newcommand{\E}{\mathbb{E}}
\newcommand{\R}{\mathbb{R}}
\newcommand{\wstar}{\wb^\star}
\newcommand{\dd}{\,\mathrm{d}}

\newcommand{\ip}[2]{\left\langle#1,#2\right\rangle}
\newcommand{\FLSA}{F_{\mathrm{LSA}}}

\usepackage{mylatexstyle}

\usepackage{setspace}
\usepackage[left=1in, right=1in, top=1in, bottom=1in]{geometry}

\usepackage{xcolor}

\ifdefined\final
\usepackage[disable]{todonotes}
\else
\usepackage[textsize=tiny]{todonotes}
\fi
\allowdisplaybreaks[4]

\title{\huge Transformers with CoT Discover Heavy-Ball Acceleration Through Training}

\author
{
Yuxuan Cai\thanks{
\scriptsize School of Computing \& Data Science, 
	The University of Hong Kong;  {\tt chriscai@connect.hku.hk}}
\quad and \quad 
Yuan Cao\thanks{
\scriptsize School of Computing \& Data Science, 
	The University of Hong Kong;  {\tt yuancao@hku.hk}}
}

\date{}

\begin{document}

\maketitle

\begin{abstract}
Modern transformers often solve complex tasks by generating and refining
intermediate reasoning steps. To develop a theoretical understanding of
chain-of-thought (CoT) reasoning, a notable recent study by
\citet{huang2025cotgd} viewed transformers as algorithm executors, and showed
that when transformers' CoT steps are supervised by a gradient-descent
teacher, they can learn to implement gradient descent for in-context weight
prediction in linear regression. In this paper, we study the same in-context
weight prediction task but ask whether CoT reasoning can go beyond merely
imitating the teacher algorithm. We show that, after initial supervision
from a gradient-descent teacher, further trajectory-level supervision using
the true regression weights leads the model to implement the Heavy-Ball
method, thereby achieving acceleration over the original teacher. Notably,
this accelerated algorithm emerges without supervision from any Heavy-Ball
teacher. Our results provide a concrete theoretical example in which
transformer reasoning can discover new algorithms through training, rather
than merely imitate the algorithm used for supervision.
\end{abstract}

\section{Introduction}
Transformers have demonstrated remarkable success across a wide range of applications, including natural language processing \citep{vaswani2017attention} and computer vision \citep{dosovitskiy2020image}. These models have also exhibited complex reasoning abilities, particularly in mathematical problem solving and coding \citep{chowdhery2023palm}. Chain-of-thought (CoT) reasoning has emerged as a key approach for enhancing such reasoning capabilities, enabling models to generate intermediate reasoning steps before producing the final answer \citep{wei2022chain,kojima2022large}.

\begin{wrapfigure}{R}{0.5\textwidth}
\vskip -0.2in
     \centering
     \includegraphics[width=0.48\columnwidth]{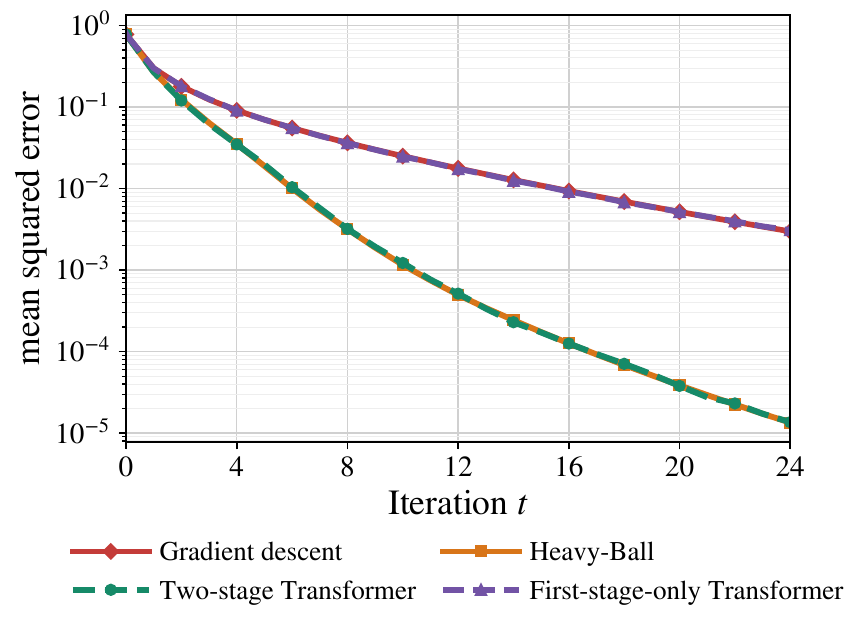}
      \vskip -0.2in
    \caption{Comparison of gradient descent, Heavy-Ball, a transformer trained solely based on gradient-descent supervision, and a transformer trained first using the gradient-descent teacher and then using ground-truth regression weights.}
 \label{fig:intro-four-methods}
 \vspace{-5pt}
\end{wrapfigure}

To theoretically understand the capabilities of transformers, recent works have viewed transformers as ``algorithm executors.'' In particular, transformers have been shown to implement optimization algorithms for in-context learning tasks, where the context is modeled as the training dataset for a learning problem. \citet{akyurek2022what,vonoswald2023transformers} empirically demonstrated that transformers can perform in-context linear regression via gradient descent. \citet{zhang2023trained} theoretically showed that one-layer transformers can be trained to perform in-context linear regression, while \citet{ahn2023linear} showed that deep transformers can be trained to implement preconditioned gradient descent. \citet{guo2023representations,bai2023transformers} further demonstrated the in-context learning capabilities of transformers by constructing multi-head ReLU transformers that implement a variety of learning algorithms. \citet{gatmiry2024looped} showed that looped transformers can implement gradient descent for in-context linear regression. More recently, \citet{huang2025cotgd} showed that, when supervised by a gradient descent teacher, transformers can learn to use CoT reasoning for in-context weight prediction, with each reasoning step implementing one gradient descent update.

In this work, we consider a setting similar to that of \citet{huang2025cotgd}, where transformers use CoT to perform in-context weight prediction. However, rather than asking whether transformers can mimic a gradient-descent teacher, we ask a different question: \emph{Can transformers with CoT go beyond mimicking the teacher algorithm and discover new algorithms through training?} To investigate this question, we consider a two-stage training procedure. In Stage 1, we supervise the transformer's CoT trajectory using a gradient-descent teacher, following the setup of \citet{huang2025cotgd}. In Stage 2, we replace the teacher supervision with supervision from the true regression weights. We find that, when trained via this two-stage procedure, transformers with CoT can learn the Heavy-Ball method, achieving acceleration over the gradient descent teacher in Stage 1 (see Figure~\ref{fig:intro-four-methods}). We also establish rigorous theoretical guarantees to further demonstrate and explain this phenomenon.

The main contributions of our work are as follows:
\begin{itemize}[leftmargin = *,nosep]
\item We identify an interesting phenomenon: as illustrated in Figure~1, a
transformer trained solely with gradient-descent supervision learns to
implement multi-step gradient descent, consistent with previous results
\citep{huang2025cotgd}. However, when further trained under trajectory supervision of the ground-truth regression weights, the transformer surprisingly learns to generate CoT trajectories that approximate the Heavy-Ball method, achieving acceleration. Notably, the Heavy-Ball method emerges without supervision from a Heavy-Ball teacher, demonstrating the capability of CoT to discover ``new'' algorithms unseen during training.

\item We provide a theoretical explanation for how transformers with CoT learn the Heavy-Ball method. We show that, despite the nonconvex loss landscape, both stages of training succeed, with the corresponding objective functions attaining near-optimal values. We further identify the algorithm implemented by the trained transformer after the two-stage procedure and prove that its learned CoT iterates approximate the Heavy-Ball method. This establishes that the model not only optimizes the prescribed objectives, but also internalizes an accelerated optimization procedure. Our result therefore gives a rigorous demonstration of how transformers with CoT can go beyond mimicking a gradient descent teacher and discover a faster algorithm through training.



\item The setting in this paper goes beyond classic in-context learning formulations. First, to distinguish the Heavy-Ball method from gradient descent, we consider a setting with a Marchenko–Pastur spectral limit, in contrast to prior settings with well-concentrated sample covariance matrices \citep{zhang2023trained,huang2025cotgd}. Second, the second-stage training is without teacher forcing, and the loss lacks a clean analytical form due to the autoregressive nature of the outputs. To overcome these difficulties, we introduce auxiliary parameters to establish training convergence and develop recursively defined matrix polynomials to analyze the second-stage loss. The matrix polynomials also serve as a central tool for characterizing the connection between the learned algorithm and the Heavy-Ball method. These new proof techniques may be of independent interest.

\end{itemize}



\section{Preliminaries}
\label{sec:preliminaries}
In this section, we introduce the in-context weight prediction task, the setting of one-layer linear self-attention
with CoT, and the two-stage training procedure considered in
this work.
\subsection{In-Context Weight Prediction with CoT}\label{subsec:setting1}
\noindent\textbf{In-context weight prediction.} A line of recent works
\citep{ahn2023linear,bai2023transformers,zhang2023trained} has studied
in-context learning  under a theoretical setting where the goal is to
predict an unknown response \(y_{\rm query}\), given a query
\(\xb_{\rm query}\) and a collection of context data pairs
\(\{(\xb_i,y_i)\}_{i\in[n]}\). Beyond directly outputting the query response, 
\citet{huang2025cotgd,zhang2026normalized} further studied in-context weight prediction,
where the goal is to predict the regression model weights.

In our work, we study the same in-context weight prediction task as in
\citet{huang2025cotgd}. Specifically, the context is given as a training
data set \(\{(\xb_i,y_i)\}_{i\in[n]}\) of a linear regression task. We assume
that each context is associated with a ground-truth linear regression vector
\(\wstar\), randomly generated according to the standard Gaussian distribution
\(\wstar\sim N(0,\Ib_d)\). Given \(\wstar\), for each \(i\in[n]\), we independently generate  
\(\xb_i\sim N(0,\Ib_d)\) and set
\(y_i=\xb_i^\top\wstar\). In the in-context weight prediction task, independent
copies of \((\{(\xb_i,y_i)\}_{i\in[n]},\wstar)\) are available for training,
and the goal is to obtain a model that can predict the corresponding \(\wstar\) based on new
contexts.

\noindent\textbf{Linear self-attention architecture.} 
We consider a simplified linear
self-attention (LSA) model by removing
the softmax nonlinearity, merging the projection and value matrices into \(\Vb\in\R^{d_e\times d_e}\), and merging
the key and query matrices into  \(\Wb\in\R^{d_e\times d_e}\),  which follows the
settings in recent works \citep{zhang2023trained,ahn2023linear,gatmiry2024looped,huang2025cotgd}. Here $d_e$ denotes the dimension of the token embeddings.
For an input matrix \(\Zb\), we denote
\begin{equation}
 F_{\mathrm{LSA}}(\Zb;\Vb,\Wb)=\Zb+\Vb\Zb\cdot \Zb^\top \Wb\Zb.
 \label{eq:lsa}
\end{equation}
\noindent\textbf{Chain-of-thought reasoning.} 
In CoT reasoning, a transformer model autoregressively generates
reasoning steps before reaching the final result. For in-context weight
prediction, we follow a similar setup to that considered in
\citet{huang2025cotgd}, where the reasoning steps correspond to iterations of
an optimization algorithm, and consider the initial input of the transformer
model as follows:
\begin{align}
 \Zb_0:=
 \begin{bmatrix}
 \Xb & \zero_{d\times1}\\
 \yb & 0\\
 \zero_{d\times n} & \wb_0\\
 \zero_{1\times n} & 1\\
 \zero_{T\times n} & \eb_0
 \end{bmatrix}
 \in\R^{d_e\times(n+1)},
 ~~~
\begin{split}
\Xb
&:=\frac{1}{\sqrt n}
\begin{bmatrix}
\xb_1 & \xb_2 & \cdots & \xb_n
\end{bmatrix}
\in\mathbb{R}^{d\times n},\\
\yb
&:=\frac{1}{\sqrt n}
\begin{bmatrix}
y_1 & y_2 & \cdots & y_n
\end{bmatrix}
\in\mathbb{R}^{1\times n}.
\end{split}
 \label{eq:prompt-initial}
\end{align}
where
\(\wb_0\) is
the initialization of the optimization algorithm. Following
\citet{huang2025cotgd}, we assume \(\wb_0=\zero_d\), and also assume that
the training data set \(\{(\xb_i,y_i)\}_{i\in[n]}\) and the weight vectors are
encoded in separate dimensions in the token embeddings. However, different
from \citet{huang2025cotgd}, we include factors \(1/\sqrt n\) in the
definitions of \(\Xb\) and \(\yb\). This difference is minor,
since \citet{huang2025cotgd} instead include an additional factor of \(1/n\)
in their definition of linear attention. Consequently, the well-trained model
analyzed in their work is equivalent to one in which the \(1/n\) factor in
linear attention is omitted and the factors of \(1/\sqrt n\) are included in
\(\Xb\) and \(\yb\). A more important difference is that we introduce a one-hot positional encoding
\(\eb_0=[1,0,\ldots,0]^\top\in\R^T\) for the initialization token, where
\(T\) denotes the maximum number of CoT steps.
Notably, incorporating these positional encodings requires us to specify a maximum number of reasoning steps before defining the model. Accordingly, each token has total embedding dimension \(d_e:=2d+T+2\).

Now we can define the input of the
\(t\)-th reasoning step recursively. Specifically, we denote the model prediction as
\(\wb_{t+1}:=\FLSA(\Zb_t;\Vb,\Wb)_{[d+2:2d+1,-1]}\) and append it to \(\Zb_t\) so that
\[\Zb_{t+1}=\left[\Zb_t,(\zero_d^\top,0,\wb_{t+1}^\top,1,\eb_{t+1}^\top)^\top\right],\quad 0\le t<T-1,\]
where 
\(\eb_t=\bigl(\underbrace{0,\ldots,0}_{t},1,\underbrace{0,\ldots,0}_{T-t-1}\bigr)^\top\in\R^T\) is the one-hot positional encoding.
\subsection{Two-Stage Training}
We consider a two-stage training framework.
Stage 1 introduces supervision from a gradient-descent teacher, while in Stage 2, we supervise the CoT trajectory with
the true regression weights.

\noindent\textbf{Stage 1.} In Stage 1, following \citet{huang2025cotgd}, we consider supervision from a
gradient-descent teacher with teacher forcing. The input to the transformer
at iteration \(t\) is given as follows:
\[
 \Zb_t^{\mathrm{GD}}=
 \begin{bmatrix}
 \Xb&\zero_{d\times1}&\zero_{d\times1}&\cdots&\zero_{d\times1}\\
 \yb&0&0&\cdots&0\\
 \zero_{d\times n}&\wb_0&\wb_1^{\mathrm{GD}}&\cdots&\wb_t^{\mathrm{GD}}\\
 \zero_{1\times n}&1&1&\cdots&1\\
 \zero_{T\times n}&\eb_0&\eb_1&\cdots&\eb_t
 \end{bmatrix}
 \in\R^{d_e\times(n+t+1)},\qquad 0\le t<T,
\]
where, for \(t\ge1\),
\(\wb_t^{\mathrm{GD}}=\wb_{t-1}^{\mathrm{GD}}-\eta_0 \Xb(\Xb^\top \wb_{t-1}^{\mathrm{GD}}-\yb^\top)\) is the teacher's
iterate after \(t\) gradient steps. The learning rate
\(\eta_0\) is fixed across all steps and tasks. Given
\(\Zb_t^{\mathrm{GD}}\), we aim to train the transformer to predict the next gradient-descent iterate.
We consider gradient flow over the population loss:
\begin{equation}
 \cL_{\rm GD}(\Vb,\Wb)
 =\frac1{2T}\E
 \left[\sum_{t=0}^{T-1}
 \left\|\FLSA(\Zb_t^{\mathrm{GD}};\Vb,\Wb)_{[d+2:2d+1,-1]}
 -\wb_{t+1}^{\mathrm{GD}}\right\|^2\right].
 \label{eq:GD-guide-loss}
\end{equation}
The gradient flow dynamics of the parameters are
\begin{equation}
 \frac{\dd\btheta}{\dd\tau}=-\nabla_{\btheta}\cL_{\rm GD}(\btheta),
 \qquad \btheta:=(\Vb,\Wb).
 \label{eq:GD-gradient-flow}
\end{equation}
We train the model until $\cL_{\rm GD}$ is minimized below a threshold.

\noindent\textbf{Stage 2.} We then consider further training the transformer on ground-truth solutions \(\wstar\) without supervision from any classical algorithm. At the
\(t\)-th iteration, the transformer makes predictions based on its own autoregressively generated sequence $\wb_0,\wb_1,\ldots,\wb_t$. Therefore, we define
\[
 \Zb_t=
 \begin{bmatrix}
 \Xb&\zero_{d\times1}&\zero_{d\times1}&\cdots&\zero_{d\times1}\\
 \yb&0&0&\cdots&0\\
 \zero_{d\times n}&\wb_0&\wb_1&\cdots&\wb_t\\
 \zero_{1\times n}&1&1&\cdots&1\\
 \zero_{T\times n}&\eb_0&\eb_1&\cdots&\eb_t
 \end{bmatrix}
 \in\R^{d_e\times(n+t+1)},\qquad 0\le t<T,
\]
and define the training loss in Stage 2 as
\begin{equation}
 \cL(\Vb,\Wb)
 =\frac1{2T}\E\left[\sum_{t=0}^{T-1}
 \left\|\FLSA(\Zb_t;\Vb,\Wb)_{[d+2:2d+1,-1]}-\wstar\right\|^2\right].
 \label{eq:loss}
\end{equation}
We use exact population gradient flow on the raw parameters:
\begin{equation}
 \frac{\dd\btheta}{\dd\tau}=-\nabla_{\btheta}\cL(\btheta).
 \label{eq:outcome-gradient-flow}
\end{equation}
Notably, without teacher forcing, Equation~\eqref{eq:loss} does not have a clean 
analytic form because \(\Zb_t\) is generated autoregressively. During training,
\(\wb_1,\ldots,\wb_T\) can be very complex, which differs substantially from Stage 1, as well as
the setting of \citet{huang2025cotgd}. 

We note that Stage 1 is analogous to pretraining with next-token prediction and to supervised fine-tuning: the input contains teacher weights rather than the model's own predictions. The loss function in Stage 2 can be viewed as an objective resembling reinforcement learning post-training with discount factor 1. However, because our setting of in-context linear regression is simplified, these analogies serve only as high-level motivation for considering such a training procedure. They do not imply that our analysis directly yields conclusions about general pretraining or post-training.


\section{Main Result}
\label{sec:dynamics}

In this section, we  prove the convergence of the two-stage training procedure and show that the final solution approximates the time-varying Heavy-Ball method,  defined as follows. 

Let \(\{(\alpha_t,\beta_t)\}_{t=0}^{T-1}\) be any sequence with
\(\beta_0=0\). Initialize \(\wb_0^{\rm HB}=\zero_d\) and for $1\le t <T$, define
\begin{equation}
 \begin{aligned}
  \wb_1^{\rm HB}=\alpha_0 \Xb\yb^\top, 
  \quad\wb_{t+1}^{\rm HB}
  =\wb_t^{\rm HB}-\alpha_t(\Xb\Xb^\top \wb_t^{\rm HB}-\Xb\yb^\top)
   +\beta_t(\wb_t^{\rm HB}-\wb_{t-1}^{\rm HB}).
 \end{aligned}
 \label{eq:learned-hb-update}
\end{equation}
We consider the best coefficients under the same loss objective:
\[
 \{(\alpha_t^\star,\beta_t^\star)\}_{t=0}^{T-1}
 \in\argmin_{\substack{\{(\alpha_t,\beta_t)\}_{t=0}^{T-1},
                                      \beta_0=0}}
 \frac1{2T}\sum_{t=1}^T\E\|\wb_t^{\rm HB}-\wstar\|^2.
\]

In the following, we introduce two technical assumptions. We first note that the transformer input \eqref{eq:prompt-initial} is given as a block matrix, motivating us to consider the weight matrices as block matrices:
\[
\begin{aligned}
 \Vb&=\begin{bmatrix}
 \Vb_{11}&\Vb_{12}&\Vb_{13}&\Vb_{14}&\Vb_{15}\\
 \Vb_{21}&\Vb_{22}&\Vb_{23}&\Vb_{24}&\Vb_{25}\\
 \Vb_{31}&\Vb_{32}&\Vb_{33}&\Vb_{34}&\Vb_{35}\\
 \Vb_{41}&\Vb_{42}&\Vb_{43}&\Vb_{44}&\Vb_{45}\\
 \Vb_{51}&\Vb_{52}&\Vb_{53}&\Vb_{54}&\Vb_{55}
 \end{bmatrix},\qquad
 \Wb=\begin{bmatrix}
 \Wb_{11}&\Wb_{12}&\Wb_{13}&\Wb_{14}&\Wb_{15}\\
 \Wb_{21}&\Wb_{22}&\Wb_{23}&\Wb_{24}&\Wb_{25}\\
 \Wb_{31}&\Wb_{32}&\Wb_{33}&\Wb_{34}&\Wb_{35}\\
 \Wb_{41}&\Wb_{42}&\Wb_{43}&\Wb_{44}&\Wb_{45}\\
 \Wb_{51}&\Wb_{52}&\Wb_{53}&\Wb_{54}&\Wb_{55}
 \end{bmatrix}.
\end{aligned}
\]
where the block row and column sizes are \((d,1,d,1,T)\). We make the following assumptions.
\begin{assumption}
\label{ass:init}
We assume that the
initialization of the parameters satisfies
\begin{equation}
 \Vb(0)=
 \begin{pmatrix}
  \zero&\zero&\zero&\zero&\zero\\
  \zero&0&\zero&0&\zero\\
  \zero&\zero&v_{33}(0)\Ib_d&\zero&\zero\\
  \zero&0&\zero&0&\zero\\
  \zero&\zero&\zero&\zero&\zero
 \end{pmatrix},\qquad
 \Wb(0)=
 \begin{pmatrix}
  \zero&\zero&\zero&\zero&\zero\\
  \zero&0&\zero&\sigma&\zero\\
  \zero&\zero&\zero&\zero&\zero\\
  \zero&0&\zero&0&\zero\\
  \zero&\zero&\zero&\zero&\zero
 \end{pmatrix}.
 \label{eq:warm-init}
\end{equation}
Here \(v_{33}(0)\) and \(\sigma\) are any constants satisfying
\(v_{33}(0)>0\) and \(\sigma<0\), independent of \(d,n,T\).
Further, we fix \(\Wb_{33}=\zero_{d\times d}\) for all \(\tau>0\).
\end{assumption}
Assumption~\ref{ass:init} is a technical assumption that poses constraints on the initialization and on freezing one of the blocks in the \(\Wb\) matrix. We note that similar assumptions have been frequently considered in theoretical studies of the training dynamics of transformer models. For example, \citet{huang2025cotgd,wu2026looped} considered almost zero initializations with certain nonzero blocks. \citet{huang2025cotgd,gatmiry2024looped} considered freezing certain blocks in the weight matrices. 

We also make the following assumption specifying hyperparameter scalings considered in this paper.
\begin{assumption}
\label{ass:scaling}
Suppose that \(T=\Theta((\log d)^{1/4})\), \(\eta_0\in[0.4,0.9]\), \(n=\Theta(d)\), and \(d/n \le 0.1\). 
\end{assumption}
We emphasize that Assumption~\ref{ass:scaling} introduces a key difference compared with related work \citep{huang2025cotgd}, where the authors assumed that \(n=\Theta(d\log^5d)\gg d\), implying that the sample covariance matrix \(\Sb = n^{-1}\sum_{i=1}^n \xb_i\xb_i^\top\) concentrates around \(\Ib_d\). Notably, \(\Sb\approx\Ib_d\) is a setting where the Heavy-Ball method and gradient descent achieve the same convergence rate. Hence, to convincingly demonstrate the learning of the Heavy-Ball method, we consider a different regime where $n$ and $d$ are proportional to each other. In this setting, the eigenvalue distribution of $\Sb$ is asymptotically characterized by the Marchenko-Pastur (MP) law~\citep{marchenko1967}.

We now show that, under the initialization above and the two-stage training
procedure, our transformer model discovers the Heavy-Ball method. 
\begin{theorem}
\label{thm:main-result}
For any \(0<\epsilon<1\), suppose that $n,d = \Omega(\exp(\log^4(1/\epsilon)))$.
Further suppose that the training with gradient flow \eqref{eq:GD-gradient-flow} in Stage 1 stops after \(\tau_1 = d^{o(1)}+O(T\log d)\), and the follow-up Stage 2 training with gradient flow \eqref{eq:outcome-gradient-flow} runs for \(\tau-\tau_1 = O(d^{1/\alpha+o(1)}\epsilon^{-1/\alpha}) \),
where \(\alpha>0\) is a constant independent of \(d,n,T\). Then under Assumptions~\ref{ass:init} and~\ref{ass:scaling}, the following results hold:
\begin{itemize}[leftmargin = *,nosep]
\item The training loss in Stage 1 reaches \(\cL_{\rm GD} \le d^{-1/4}\).
\item The training loss in Stage 2 reaches \(\cL-\min \cL \le \epsilon\).
\item Uniformly for \(T/2\le t<T\), it holds that 
\begin{align*}
    \frac{\E\|\wb_{t+1}(\tau)-\wb_t^{\rm HB}\|^2}
      {\E\|\wb_t^{\rm HB}-\wstar\|^2}
 \le\epsilon.
\end{align*}
Here \(\wb_t^{\rm HB}\) denotes the $t$-th iterate of the Heavy-Ball method with
parameters \(\{(\alpha_t^\star,\beta_t^\star)\}_{t=0}^{T-1}\).
\end{itemize}
\end{theorem}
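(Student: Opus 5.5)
The argument has three parts, matching the three bullets of Theorem~\ref{thm:main-result}: Stage~1 dynamics, Stage~2 dynamics, and identification of the learned algorithm.

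\textbf{Stage 1.} First I would prove an invariance lemma. Under the initialization of Assumption~\ref{ass:init}, with $\Wb_{33}$ frozen at zero, the symmetry of the Gaussian data (sign flips and rotations $\xb\mapsto \Ub\xb$, $\wstar\mapsto\Ub\wstar$) keeps the gradient flow inside a low-dimensional manifold. On this manifold the only active blocks are $\Vb_{33}=v_{33}\Ib_d$, $\Vb_{31}=v_{31}\Ib_d$, scalar entries of $\Wb$ coupling the $y$-row to the constant row and the positional rows, and the positional blocks $\Wb_{5\cdot}$, $\Wb_{\cdot5}$. With these blocks the update at step $t$ has the form
\[
\wb_{t+1}=\wb_t+a_t\Xb\yb^\top+b_t\Xb\Xb^\top\wb_t+\sum_{s\le t}c_{t,s}\wb_s,
\]
so the attention can read off $\Xb\yb^\top$, $\Xb\Xb^\top\wb_t$ (through the $\Xb$-$\wb$ cross term), and earlier iterates through the position-weighted sums. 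I would then write $\cL_{\rm GD}$ explicitly in these scalar parameters. This is possible because of teacher forcing: the iterates are fixed polynomials in $\Sb=\Xb\Xb^\top$. I would show the reduced loss is strongly convex, or satisfies a PL inequality, near the GD solution $a_t=\eta_0$, $b_t=-\eta_0$, $c=0$. The $d^{o(1)}$ phase escapes the small-$\sigma$ region, and the $O(T\log d)$ phase gives linear convergence to $\cL_{\rm GD}\le d^{-1/4}$. For the moments of $\Sb$ up to degree $O(T)$, I would use Marchenko–Pastur concentration, i.e.\ $\frac1d\mathrm{tr}\,\Sb^k\to m_k$ with $O(1/d)$ errors. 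Since $T=\Theta((\log d)^{1/4})$, these errors stay polynomially small even though the moments grow like $C^{T}$.

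\textbf{Stage 2.} In the invariant parametrization, the autoregressive iterates satisfy $\wb_t=P_t(\Sb)\Xb\yb^\top=P_t(\Sb)\Sb\wstar$. Here $P_t$ is a matrix polynomial defined recursively by $P_{t+1}(\lambda)=P_t+a_t+b_t\lambda P_t+\sum_s c_{t,s}P_s$. The loss then becomes
\[
\cL=\frac1{2T}\sum_t \int (1-\lambda P_t(\lambda))^2\,d\mu_{\rm MP}(\lambda)+o(1).
\]
The loss is nonconvex in $(a,b,c)$, but I would introduce auxiliary parameters: the polynomial coefficients themselves, or the coefficients $\rho_t=1-\lambda P_t$. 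In those variables the loss is convex, indeed a quadratic in orthogonal-polynomial coordinates for $\mu_{\rm MP}$. I would then show that the Jacobian of the map from raw parameters to auxiliary parameters is well conditioned along the trajectory, starting from the Stage~1 endpoint near GD. This transfers to a Łojasiewicz-type inequality $\|\nabla\cL\|^2\ge c\,d^{-O(1)}(\cL-\min\cL)^{1+\alpha}$, and integrating it gives $\cL-\min\cL\le\epsilon$ within time $O(d^{1/\alpha+o(1)}\epsilon^{-1/\alpha})$.

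\textbf{Identification.} The minimizer over all degree-$t$ polynomials of $\int(1-\lambda P)^2d\mu$ is the residual orthogonal polynomial for $\lambda\,d\mu$. Orthogonal polynomials satisfy a three-term recurrence, which is precisely a time-varying Heavy-Ball recursion, so the optimal Heavy-Ball coefficients $(\alpha_t^\star,\beta_t^\star)$ attain the unrestricted optimum. Near-optimality of $\cL$ then forces each $P_t$ to be close in $L^2(\mu)$ to the Heavy-Ball polynomial. The relative bound uses $\E\|\wb_t^{\rm HB}-\wstar\|^2\ge e^{-O(t)}d$. Here $n,d\ge\exp(\log^4(1/\epsilon))$ makes $T$ large enough that the required indexing $t$ vs.\ $t+1$ holds for $t\ge T/2$, where the residuals stop changing at leading order.

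\textbf{Main obstacle.} I expect the hardest step to be the nondegeneracy of the reparametrization Jacobian in Stage~2 along the whole trajectory. To handle it, I would first try to prove that the $b_t$ stay bounded away from zero and that the trajectory remains in a compact neighborhood, using monotonicity of $\cL$.
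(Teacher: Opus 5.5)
Your Stage~1 outline is broadly consistent with the paper: auxiliary coordinates, a PL inequality in them, and a Jacobian lower bound that only holds after the flow escapes $v_{31}=w_{13}=0$. Stage~2 and the identification step, however, rest on a wrong picture of the model class.

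\textbf{The coefficient $b_t$ cannot depend on $t$.} In this architecture the coefficient of $\Sb\wb_t$ is $v_{31}w_{13}=\widetilde\theta_2$, a single scalar shared by all steps. The positional encodings enter only through $w_{24}+\wb_{25}\eb_t$ (the coefficient of $\Xb\yb^\top$) and through the history weights $v_{33}(w_{44}+\wb_{45}\eb_t+\eb_\ell^\top\wb_{54}+\eb_\ell^\top\Wb_{55}\eb_t)$. A step-dependent coefficient on $\Sb\wb_t$ would need a $\wb_t$--$\eb_t$ interaction, which the bilinear score $\Zb^\top\Wb\Zb$ cannot produce.

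This is harmless in Stage~1, where the teacher itself uses the constant step $\eta_0$. It is decisive in Stage~2: the leading coefficient of the $(t+1)$-th rollout polynomial is forced to equal $\widetilde\theta_1\widetilde\theta_2^{\,t}$. Consequently:
\begin{enumerate}
\item The reachable polynomials exclude the unrestricted optima $p^\star_t$, and $\min\cL$ lies strictly above the unrestricted minimum.
\item Passing to polynomial coefficients does not convexify the problem, because the top coefficients are tied together along a two-parameter curve.
\end{enumerate}

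\textbf{What replaces your convexity claim.} The paper splits $\cL-\cL^\star=\cL_A+\cL_B$. Here $\cL_A$ is quadratic in the free lower-order coefficients, and $\cL_B$ is governed by the scalar function $x\mapsto B_\nu(x)^2/C_\nu(x)$. The Stage~2 PL inequality hinges on proving that this function has a unique nondegenerate maximizer. The paper does this via Descartes' rule applied to the coefficient ratios of $(1+x)^2B^2$ and $(1+x)^2C$, together with a finite-Wishart-to-MP transfer with error $o((d/n)^T)$. An unquantified $o(1)$ MP approximation of $\cL$, as in your display, is too coarse, since $(d/n)^T$ is itself $o(1)$. Nothing in your plan substitutes for this step.

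\textbf{Compactness.} Monotonicity of $\cL$ alone cannot confine the raw trajectory to a compact set. The loss is invariant under rescalings such as $v_{31}\mapsto e^{\zeta}v_{31}$, $(w_{13},w_{24},\wb_{25})\mapsto e^{-\zeta}(w_{13},w_{24},\wb_{25})$. The paper only shows $v_{31}^2+v_{33}^2\lesssim d^{o(1)}+\tau(d/n)/T$, and the resulting decay of the Jacobian bound is what produces the polynomial rate.

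\textbf{The identification step.} Your three-term-recurrence argument that the unrestricted optima are exactly the optimal Heavy-Ball iterates is correct. The rest of this step fails as written.

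If training reached the unrestricted optimum, you would get $\wb_{t+1}(\tau)\approx\wb^{\rm HB}_{t+1}$. The shift to $\wb^{\rm HB}_t$ cannot then be absorbed by residuals that ``stop changing''. The normal equations give $\langle p^\star_{t+1}-1,\,p^\star_t-p^\star_{t+1}\rangle_{d,n}=0$. Hence $\E\|\wb^{\rm HB}_{t+1}-\wb^{\rm HB}_t\|^2/\E\|\wb^{\rm HB}_t-\wstar\|^2$ equals $1-\|p^\star_{t+1}-1\|^2_{d,n}/\|p^\star_t-1\|^2_{d,n}=1-(d/n)S_t/S_{t+1}+o(1)\ge 0.9-o(1)$, where $S_t=\sum_{r=0}^t(d/n)^r$. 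Residuals contract by a factor of about $d/n$ at every step, so under your picture the third bullet would be false.

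In the paper the shift is produced precisely by the pinned leading coefficient:
\begin{itemize}
\item The limit $\wb^\star_{t+1}$ is the best degree-$(t+1)$ residual whose top coefficient equals $\ell_t=\widetilde\theta_1^\star(\widetilde\theta_2^\star)^t$.
\item This constrained minimizer is affine in $\ell_t$ and equals the degree-$t$ optimum $\wb^{\rm HB}_t$ at $\ell_t=0$.
\item Hence the relative squared gap is at most $4\ell_t^2/(1-d/n)\lesssim(1-(d/n)^2/128)^{2t}$.
\end{itemize}
This needs the quantitative location bound $-\widetilde\theta_2^\star<1-(d/n)^2/128$ for the maximizer of $B_\nu^2/C_\nu$. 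For $t\ge T/2$ with $T=\Theta((\log d)^{1/4})$, the gap is at most $\exp[-2(\log d)^{1/4}]$. Making this at most $\epsilon/4$ is exactly where the assumption $n,d=\Omega(\exp(\log^4(1/\epsilon)))$ enters.

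Your proposal is missing both the leading-coefficient constraint and the bound $|\widetilde\theta_2^\star|<1$ that makes the pinned coefficient decay.
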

Theorem~\ref{thm:main-result} shows that the training in both Stage 1 and Stage 2 converges despite the nonconvex loss landscape. Moreover, the last conclusion of Theorem~\ref{thm:main-result} shows that  transformer produces CoT steps whose $(t+1)$-th step is close to the $t$-th iteration of the Heavy-Ball method. This demonstrates that the  algorithm learned by the transformer is essentially equivalent to the Heavy-Ball method. Notably, the two-stage training procedure never involves supervision from any Heavy-Ball teacher. Our result thus indicates that transformers with CoT are capable of discovering unseen algorithms through proper training, providing complementary findings to \citet{huang2025cotgd} where the authors demonstrated the capability of transformers to recover the gradient-descent teacher algorithm.

Theorem~\ref{thm:main-result} is under the assumption that $n,d = \Omega(\exp(\log^4(1/\epsilon)))$. This condition is super-polynomial but subexponential in $1/\epsilon$. While such a super-polynomial assumption is relatively strong, the last conclusion in Theorem~\ref{thm:main-result} with $\epsilon=0.5$ can already imply that $\E\|\wb_{t+1}(\tau)-\wstar\|^2 = O(\E\|\wb_t^{\rm HB}-\wstar\|^2)$. Therefore, Theorem~\ref{thm:main-result} demonstrates that transformers with CoT can discover an algorithm with an accelerated convergence rate under mild assumptions.

\section{Proof sketch}
\label{sec:proof-sketch}
In this section, we outline the proof of Theorem~\ref{thm:main-result}. Since we consider the setting where  \(n=\Theta(d)\) and the eigenvalues of \(\Sb\) follow the MP law, 
even in Stage 1, our analysis is significantly different from the prior work \citep{huang2025cotgd}.  Stage 2 is also more challenging without teacher forcing. To address these challenges, we develop a proof
technique based on auxiliary parameters.

We first identify the parameter structure preserved throughout training.
\begin{lemma}\label{lem:invariant-parameters}
Throughout the two-stage training, the parameters \(\Vb(\tau),\Wb(\tau)\) retain the form
\begingroup
\small
\[
\setlength{\arraycolsep}{2pt}
 \begin{aligned}
 \Vb(\tau)&=\begin{pmatrix}
 \zero&\zero&\zero&\zero&\zero\\
 \zero&0&\zero&0&\zero\\
 v_{31}(\tau)\Ib_d&\zero&v_{33}(\tau)\Ib_d&\zero&\zero\\
 \zero&0&\zero&0&\zero\\
 \zero&\zero&\zero&\zero&\zero
 \end{pmatrix},\quad
 \Wb(\tau)=\begin{pmatrix}
 \zero&\zero&w_{13}(\tau)\Ib_d&\zero&\zero\\
 \zero&0&\zero&w_{24}(\tau)&\wb_{25}(\tau)\\
 \zero&\zero&\zero&\zero&\zero\\
 \zero&0&\zero&w_{44}(\tau)&\wb_{45}(\tau)\\
 \zero&\zero&\zero&\wb_{54}(\tau)&\Wb_{55}(\tau)
 \end{pmatrix}.
 \end{aligned}
\]
\endgroup
\end{lemma}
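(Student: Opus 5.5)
The plan is an invariant-manifold argument for gradient flow. Let $\mathcal{M}$ be the set of parameter pairs $(\Vb,\Wb)$ of the displayed form. It is a linear subspace, and it contains the initialization of Assumption~\ref{ass:init} (take $v_{31}=0$, $w_{13}=0$, $w_{24}=\sigma$, and the remaining free entries zero). Both $\cL_{\rm GD}$ and $\cL$ are polynomial, hence smooth, in $\btheta$, so the gradient flows \eqref{eq:GD-gradient-flow} and \eqref{eq:outcome-gradient-flow} have unique solutions. It therefore suffices to show that for every $\btheta\in\mathcal{M}$ the gradient $\nabla_{\btheta}\cL_{\rm GD}(\btheta)$, respectively $\nabla_{\btheta}\cL(\btheta)$ (with the frozen block $\Wb_{33}$ removed), again lies in $\mathcal{M}$. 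Given that, the flow restricted to $\mathcal{M}$ is a well-defined ODE whose solution is also a solution of the full flow, and uniqueness places the full trajectory in $\mathcal{M}$. Stage 2 starts from the Stage 1 endpoint, which lies in $\mathcal{M}$, so the same argument covers it.

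To prove that the gradient stays in $\mathcal{M}$, I would use a symmetry argument instead of computing every block. Fix an orthogonal $\Ub\in\R^{d\times d}$ and define $\Pb=\mathrm{diag}(\Ub,1,\Ub,1,\Ib_T)$. Since $\xb_i$ and $\wstar$ are isotropic Gaussians, the map $(\xb_i,\wstar)\mapsto(\Ub\xb_i,\Ub\wstar)$ preserves the data distribution. It leaves $\yb$ unchanged, maps $\Zb_t^{\rm GD}$ to $\Pb\Zb_t^{\rm GD}$, and maps the targets $\wb^{\rm GD}_{t+1}$ and $\wstar$ to $\Ub$ times themselves. Because $\Pb^\top\Pb=\Ib$, we have $\FLSA(\Pb\Zb;\Pb\Vb\Pb^\top,\Pb\Wb\Pb^\top)=\Pb\,\FLSA(\Zb;\Vb,\Wb)$. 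For Stage 2 I would also check by induction on $t$ that the autoregressive iterates transform equivariantly, i.e.\ $\wb_t\mapsto\Ub\wb_t$, so that $\Zb_t\mapsto\Pb\Zb_t$. Together these give $\cL(\Pb\Vb\Pb^\top,\Pb\Wb\Pb^\top)=\cL(\Vb,\Wb)$, and likewise for $\cL_{\rm GD}$. If $\btheta\in\mathcal{M}$ then $\btheta$ is fixed by this conjugation, so the gradient at $\btheta$ is fixed as well. Taking $\Ub=-\Ib$ kills every block that pairs a $d$-dimensional index with a scalar or positional index. Ranging over all orthogonal $\Ub$ forces the $d\times d$ blocks to be scalar multiples of $\Ib_d$. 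This follows from a Schur-type argument: a matrix $\Mb$ with $\Ub\Mb\Ub^\top=\Mb$ for all orthogonal $\Ub$ satisfies $\Mb=c\Ib$.

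Symmetry alone does not explain why $\Vb_{11},\Vb_{13},\Vb_{31},\Wb_{11},\Wb_{31},\Wb_{33}$, the scalar and positional rows of $\Vb$, and the column-$1$ and column-$2$ entries of $\Wb$ behave as stated. These are the remaining blocks, and I would handle them by direct computation. For the loss, only rows $d{+}2$ to $2d{+}1$ of the last column of $\Vb\Zb\Zb^\top\Wb\Zb$ matter, so the gradient with respect to $\Vb$ is supported on block row $3$. Rows $1,2,4,5$ therefore have zero gradient and stay zero. The last column of $\Zb$ has zero entries in blocks $1,2$, so $\Wb$ acts on it only through block columns $3,4,5$. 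This makes the gradients of $\Wb_{\cdot1}$ and $\Wb_{\cdot2}$ vanish, and $\Wb_{33}$ is frozen. For the remaining blocks ($\Vb_{32},\Vb_{34},\Vb_{35},\Wb_{14},\Wb_{15},\Wb_{23},\Wb_{43},\Wb_{53}$, and so on) I would write the gradient as an expectation of a polynomial in $\Xb,\yb,\wstar$. With the sign symmetry $\wstar\mapsto-\wstar$ (which flips $\yb$ and every $\wb_t$), each such expectation is odd and hence zero; with $\Xb\mapsto$ orthogonal rotations, the $d\times d$ blocks $\Vb_{33}$, $\Vb_{31}$, $\Wb_{13}$ reduce to scalar multiples of the identity.

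I expect the main obstacle to be Stage 2. There the iterates $\wb_t$ are complicated polynomials in the parameters, so I cannot check parities by expanding the loss. Instead I would state parity and equivariance as an inductive lemma on $t$: within $\mathcal{M}$, each $\wb_t$ is an odd function of $(\yb,\wstar)$, equal to a matrix polynomial in $\Xb\Xb^\top$ applied to $\Xb\yb^\top$, and the positional blocks enter only through scalars. Once this lemma is in place, the vanishing of the off-structure gradient blocks follows uniformly for both stages.
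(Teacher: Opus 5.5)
Your argument is correct, but it is organized differently from the paper's.

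The paper works in two passes. It first writes explicit gradient formulas for \(\Vb_{32},\Vb_{34},\Vb_{35},\Wb_{14},\Wb_{15},\Wb_{23},\Wb_{34},\Wb_{35},\Wb_{43},\Wb_{53}\) at structured parameters. In Stage 2 this requires differentiating through the rollout via the recursion for \(\partial\wb_s/\partial\wb_{t+1}\). It then shows each integrand is odd under \(\wstar\mapsto-\wstar\) with \(\Xb\) fixed. Separately, it uses rotation equivariance of the whole trajectory and ODE uniqueness to get \(\Ub\Ab(\tau)\Ub^\top=\Ab(\tau)\) for \(\Vb_{31},\Wb_{13},\Vb_{33}\).

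You instead prove that the gradient field is tangent to the linear subspace \(\mathcal M\) and invoke uniqueness once. What makes this economical is that \(\Ub=-\Ib_d\) is admissible in your conjugation by \(\mathbf{P}=\mathrm{diag}(\Ub,1,\Ub,1,\Ib_T)\). The data map \((\xb_i,\wstar)\mapsto(-\xb_i,-\wstar)\) fixes \(\yb\) and fixes every point of \(\mathcal M\). It therefore annihilates every gradient block pairing \(\{1,3\}\) with \(\{2,4,5\}\), with no computation.

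The paper's flip \(\wstar\mapsto-\wstar\) behaves differently. It corresponds to conjugation by \(\mathrm{diag}(\Ib_d,-1,-\Ib_d,1,\Ib_T)\), which sends \(v_{31}\mapsto-v_{31}\) and so does not fix \(\mathcal M\). That is why the paper must apply it at the level of the integrands.

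Your route therefore needs no gradient formulas. In Stage 2 it needs only the equivariance \(\wb_t\mapsto\Ub\wb_t\) of the rollout, which holds at all parameters by a one-line induction. It also makes explicit the uniqueness step behind ``the gradient of a zero block vanishes on the structured set, hence the block stays zero,'' which the paper's part (i) leaves implicit. The paper's computation, in exchange, exhibits the oddness of the iterates in \(\wstar\) that reappears in \(\wb_t=p_{t,\bar{\btheta}}(\Sb)\wstar\).

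Two small corrections to your write-up.
\begin{enumerate}
\item The parity step in your third paragraph is redundant. Every block listed there pairs a \(d\)-dimensional index with a scalar or positional one and is already killed by \(\Ub=-\Ib_d\). Your anticipated Stage 2 obstacle therefore reduces to the equivariance induction you already describe.
\item \(\Vb_{31}\) does not belong among the blocks symmetry cannot handle, since symmetry gives exactly \(v_{31}\Ib_d\). What symmetry cannot do is kill \(\Vb_{11},\Vb_{13},\Wb_{11},\Wb_{31}\) and the \(\{2,4,5\}\times\{2,4,5\}\) blocks lying outside \(\mathcal M\). Your support argument handles these. In Stage 2, however, it must be stated for the whole rollout by induction on \(t\), since \(\Zb_t\) itself depends on the parameters.
\end{enumerate}
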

Lemma~\ref{lem:invariant-parameters} shows that our parameters remain on a
manifold throughout two-stage training. By this lemma, we can reduce the
analysis of the full matrices \(\Vb,\Wb\) to that of
\[
 \bar{\btheta}(\tau)=\bigl(\Vb_{31}(\tau),\Wb_{13}(\tau),\Vb_{33}(\tau),
 w_{24}(\tau),\wb_{25}(\tau),w_{44}(\tau),\wb_{45}(\tau),
 \wb_{54}(\tau),\Wb_{55}(\tau)\bigr).
\]
This conclusion also allows us to establish the following lemma.
\begin{lemma}\label{lem:polynomial-output}
Throughout training in Stages 1 and 2, the \(t\)-th CoT iterate
\(\wb_t(\tau)\) has the form
\(
 \wb_t(\tau)=p_{t,\bar{\btheta}(\tau)}(\Sb)\wb^\star,
\)
where \(p_{t,\bar{\btheta}(\tau)}\) is a polynomial of degree at most \(t\) with zero constant term.
\end{lemma}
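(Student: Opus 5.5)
We prove the statement by induction on $t$, computing the last column of $\FLSA(\Zb_t;\Vb,\Wb)$ explicitly from the parameter structure of Lemma~\ref{lem:invariant-parameters}. For Stage 1 (teacher forcing) the statement should be read with the inputs being the teacher iterates $\wb^{\rm GD}_s$. Those are polynomials in $\Sb=\Xb\Xb^\top$ applied to $\wstar$, since $\wb^{\rm GD}_{s}=\wb^{\rm GD}_{s-1}-\eta_0(\Sb\wb^{\rm GD}_{s-1}-\Sb\wstar)$ using $\Xb\yb^\top=\Sb\wstar$. The same computation below then shows that the model's prediction is a polynomial in $\Sb$ applied to $\wstar$. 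So the core claim is one structural step: if every appended weight column $\wb_s$ ($s\le t$) equals $p_s(\Sb)\wstar$ with $\deg p_s\le s$ and $p_s(0)=0$, then the prediction $\wb_{t+1}$ has the same form with degree at most $t+1$. The base case is $\wb_0=\zero=p_0(\Sb)\wstar$ with $p_0\equiv 0$.

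\textbf{Computing the prediction.} Write the columns of $\Zb_t$ as $\zb_i=(\xb_i/\sqrt n, y_i/\sqrt n,\zero,0,\zero)$ for $i\le n$, and $\zb^{(s)}=(\zero,0,\wb_s,1,\eb_s)$ for $s\le t$. The last column of $\Vb\Zb_t\Zb_t^\top\Wb\Zb_t$ is $\Vb\big(\sum_{j}\zb_j\zb_j^\top\big)\Wb\zb^{(t)}$, summed over all columns $j$.

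First compute $\Wb\zb^{(t)}$. From the structure, $\Wb\zb^{(t)}$ has first block $w_{13}\wb_t$, second entry $w_{24}+\wb_{25}^\top\eb_t$, third block $\zero$, fourth entry $w_{44}+\wb_{45}^\top\eb_t$, and fifth block $\wb_{54}+\Wb_{55}\eb_t$.

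Next, since $\Vb$ only has nonzero blocks in row block 3, multiplied against column blocks 1 and 3, only the $\xb$-part and the $\wb$-part of each $\zb_j$ matter on the left. For the data columns, $\Vb\zb_i=v_{31}\xb_i/\sqrt n$. Their inner product with $\Wb\zb^{(t)}$ is $w_{13}\xb_i^\top\wb_t/\sqrt n+(w_{24}+\wb_{25}^\top\eb_t)y_i/\sqrt n$. Summing over $i$ gives
\[v_{31}\big(w_{13}\Sb\wb_t+(w_{24}+\wb_{25}^\top\eb_t)\Sb\wstar\big).\]
For the weight columns, $\Vb\zb^{(s)}=v_{33}\wb_s$. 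Its inner product with $\Wb\zb^{(t)}$ is
\[c_{s,t}:=(w_{44}+\wb_{45}^\top\eb_t)+\eb_s^\top(\wb_{54}+\Wb_{55}\eb_t),\]
a scalar depending only on $\bar\btheta$. It involves no data, because $\Wb_{33}=\zero$ and the third block of $\Wb\zb^{(t)}$ vanishes. The residual connection contributes $\wb_t$. Therefore
\[\wb_{t+1}=\wb_t+v_{31}w_{13}\Sb\wb_t+v_{31}(w_{24}+\wb_{25}^\top\eb_t)\Sb\wstar+v_{33}\sum_{s=0}^t c_{s,t}\wb_s.\]

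\textbf{Closing the induction.} Substituting the hypothesis gives
\[p_{t+1}(x)=(1+v_{31}w_{13}x)\,p_t(x)+v_{31}(w_{24}+\wb_{25}^\top\eb_t)\,x+v_{33}\sum_{s\le t}c_{s,t}\,p_s(x).\]
Each term has degree at most $t+1$ and vanishes at $0$. The coefficients depend only on $\bar\btheta(\tau)$ (and, in Stage 1, on $\eta_0$ through the teacher inputs), which gives the claimed form.

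\textbf{Main obstacle.} The only delicate point is the bookkeeping of block indices: we must confirm that no term couples $\wb_s$ to $\wb_t$ through a data-dependent matrix other than $\Sb$. This relies on $\Wb_{33}=\zero$ (Assumption~\ref{ass:init}) and on $\Vb$ having no nonzero entries in the $\yb$ and positional blocks. Both are guaranteed by Lemma~\ref{lem:invariant-parameters}, so the rest of the argument is a direct substitution.
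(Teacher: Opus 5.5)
Your proposal is correct and follows essentially the same route as the paper. You use the invariant block structure and $\Xb\yb^\top=\Sb\wstar$ to derive the one-step update
$\wb_{t+1}=\wb_t+v_{31}w_{13}\Sb\wb_t+v_{31}(w_{24}+\wb_{25}\eb_t)\Sb\wstar+v_{33}\sum_{s\le t}c_{s,t}\wb_s$, then induct on $t$ through the resulting polynomial recurrence exactly as in \eqref{eq:stage2-raw-polynomial-recurrence}. For Stage 1 you apply the same step once to the teacher inputs $p_s^{\rm GD}(\Sb)\wstar$, which matches the expression the paper uses in the proof of Lemma~\ref{lem:stage1-loss}.
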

Lemma~\ref{lem:polynomial-output} is central to our analysis. Since we
consider the challenging setting where \(n=\Theta(d)\), the sample covariance
matrix \(\Sb\) cannot be approximated by the identity matrix, and the study of
the training dynamics requires precise characterization of their dependence
on \(\Sb\). Lemma~\ref{lem:polynomial-output} provides this characterization.
Recall that the losses in both stages are defined by the distance from
\(\wb_t\) to \(\wb^\star\) or \(\wb_t^{\rm GD}\). By
Lemma~\ref{lem:polynomial-output}, we can express the losses in the two stages
as functions of matrix polynomials in \(\Sb\).
We now give the two-stage analysis.

\noindent\textbf{Stage 1.} 
Thanks to Lemma~\ref{lem:polynomial-output}, we can give
the following decomposition of the training loss.
\begin{lemma}\label{lem:stage1-loss}
The population loss can be written as
\begin{equation*}
 \cL_{\rm GD}(\tau)=\E\left[\begin{aligned}
 &A(\Sb)(\widetilde\theta_1(\tau)-\eta_0)^2
 +B(\Sb)(\widetilde\theta_2(\tau)+\eta_0)^2\\
 &\quad+\norm{
 \Cb_1(\Sb)(\widetilde\theta_2(\tau)+\eta_0)
 +\Cb_2(\Sb)\operatorname{vec}(\widetilde{\bTheta}_3(\tau)-\eta_0\eb_1(\one_T-\eb_0)^{\mathsf T})}^2
\end{aligned}\right],
\end{equation*}
where \(A,B \in \R\), \(\Cb_1\in \R^{(T-1)d^2 \times 1}\),
\(\Cb_2 \in \R^{(T-1)d^2 \times T^2}\) are explicit functions of \(\Sb\), and
\begin{align*}
    &\widetilde\theta_1:=v_{31}(w_{24}+\wb_{25}\eb_0),
 \qquad \widetilde\theta_2:=v_{31}w_{13},\qquad \widetilde{\bTheta}_3 \in \R^{T\times T},&&\\
 &(\widetilde{\bTheta}_3)_{1t}
 :=v_{31}(w_{24}+\wb_{25}\eb_t) +\eta_0v_{33}\bigl(w_{44}+\wb_{45}\eb_t
 +\eb_1^{\mathsf T}\wb_{54}+\eb_1^{\mathsf T}\Wb_{55}\eb_t\bigr), &&
 1\le t<T,\\
 &(\widetilde{\bTheta}_3)_{\ell t}
 :=v_{33}\bigl(w_{44}+\wb_{45}\eb_t
 +\eb_\ell^{\mathsf T}\wb_{54}+\eb_\ell^{\mathsf T}\Wb_{55}\eb_t\bigr),&&
 2\le\ell\le t<T.
\end{align*}
\end{lemma}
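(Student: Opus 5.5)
We will compute the Stage~1 prediction explicitly on the invariant manifold of Lemma~\ref{lem:invariant-parameters}. The plan is to write $\FLSA(\Zb_t^{\rm GD})_{[d+2:2d+1,-1]}$ as an affine function of the teacher iterates $\wb_0^{\rm GD},\dots,\wb_t^{\rm GD}$, subtract $\wb_{t+1}^{\rm GD}$, and then expand the squared norm after substituting $\wb_s^{\rm GD}=p_s^{\rm GD}(\Sb)\wstar$ with $p_s^{\rm GD}(x)=1-(1-\eta_0x)^s$.

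First, the last column of the output. Only the third block row of $\Vb$ is nonzero, so the relevant output is
\[
\wb_t+(v_{31}\Xb_{\rm full}+v_{33}\Wb_{\rm full}^{\rm rows})\,\Zb_t^\top\Wb\,\zb_{t},
\]
where $\zb_t$ is the last column. Next compute $\Wb\zb_t$ for $\zb_t=(\zero,0,\wb_t,1,\eb_t)$. Its block components are $w_{13}\wb_t$ in block 1, $w_{24}+\wb_{25}\eb_t$ in block 2, and $w_{44}+\wb_{45}\eb_t$ in block 4, together with a block-5 term $\wb_{54}+\Wb_{55}\eb_t$. Taking the inner product with column $j$ of $\Zb_t$ gives:
\begin{itemize}
\item For context columns: $w_{13}\xb_j^\top\wb_t/\sqrt n+y_j(w_{24}+\wb_{25}\eb_t)/\sqrt n$.
\item For CoT column $\ell$: $(w_{44}+\wb_{45}\eb_t)+\eb_\ell^\top(\wb_{54}+\Wb_{55}\eb_t)$. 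Block 1 vanishes there because the $\Xb$ rows are zero.
\end{itemize}
Multiplying by $\Vb\Zb_t$ and using $\Xb\yb^\top=\Sb\wstar$ and $\Xb\Xb^\top=\Sb$, the update becomes
\[
\wb_t+\widetilde\theta_2\Sb\wb_t+v_{31}(w_{24}+\wb_{25}\eb_t)\Sb\wstar+\sum_{\ell=0}^t v_{33}\bigl(w_{44}+\wb_{45}\eb_t+\eb_\ell^\top\wb_{54}+\eb_\ell^\top\Wb_{55}\eb_t\bigr)\wb_\ell^{\rm GD}.
\]
Here we use $\wb_t=\wb_t^{\rm GD}$ under teacher forcing, and $\wb_0=\zero$ kills the $\ell=0$ term.

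Second, compare with the target $\wb_{t+1}^{\rm GD}=\wb_t^{\rm GD}-\eta_0\Sb\wb_t^{\rm GD}+\eta_0\Sb\wstar$. The residual is
\[
(\widetilde\theta_2+\eta_0)\Sb\wb_t^{\rm GD}+\bigl(v_{31}(w_{24}+\wb_{25}\eb_t)-\eta_0\bigr)\Sb\wstar+\sum_{\ell=1}^t(\cdots)_{\ell t}\wb_\ell^{\rm GD}.
\]
For $t=0$ only the $\Sb\wstar$ term survives, with coefficient $\widetilde\theta_1-\eta_0$. Its squared expectation over $\wstar$ gives $A(\Sb)=\tfrac1{2T}\operatorname{tr}(\Sb^2)$ (up to normalization), which is the $A$ term.

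For $t\ge1$, the term $\Sb\wstar$ is not independent of the $\wb_\ell^{\rm GD}$. Using $\wb_1^{\rm GD}=\eta_0\Sb\wstar$, we absorb $v_{31}(w_{24}+\wb_{25}\eb_t)\Sb\wstar$ into the $\ell=1$ coefficient as $\eta_0^{-1}$ times it. This is exactly why $(\widetilde{\bTheta}_3)_{1t}$ contains the extra $v_{31}$ piece, and after rescaling the target becomes $\eta_0\eb_1(\one_T-\eb_0)^\top$. One should double-check the factor of $\eta_0$ placement: the statement's normalization, with $\eta_0 v_{33}$ multiplying the $v_{33}$ part in row 1, suggests dividing the whole $\ell=1$ coefficient so that row 1 is measured in units of $\Sb\wstar$. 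We will fix the convention that matches the given definition. The target $-\eta_0$ coefficient on $\Sb\wstar$ matches the entry $\eta_0$ at position $(1,t)$.

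Third, vectorize. Write the $t$-th residual as $\Mb_t(\Sb)\wstar$, a linear combination of the matrices $\Sb p_t^{\rm GD}(\Sb)$, $\Sb$, and $p_\ell^{\rm GD}(\Sb)$ with coefficients $(\widetilde\theta_2+\eta_0)$ and the shifted entries of $\widetilde{\bTheta}_3$. Taking $\E_{\wstar}$ gives $\|\Mb_t\|_F^2$. Stacking $\operatorname{vec}(\Mb_t)$ over $t=1,\dots,T-1$ gives a vector in $\R^{(T-1)d^2}$ that is linear in $(\widetilde\theta_2+\eta_0,\operatorname{vec}(\widetilde{\bTheta}_3-\eta_0\eb_1(\one-\eb_0)^\top))$. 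This defines $\Cb_1$ and $\Cb_2$, with zero columns for the unused indices ($\ell>t$, $\ell=0$, $t=0$).

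The $B$ term arises from $\widetilde\theta_2$ at $t=0$, where it multiplies $\Sb\wb_0=\zero$. It therefore contributes nothing there, so $B$ can be taken to be $0$, or alternatively the part of the orthogonal split that is kept separate. We would present the decomposition with $B\ge0$ defined accordingly.

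The main obstacle is the bookkeeping of the $\ell=1$ merger and its $\eta_0$ scaling. The rest is mechanical.
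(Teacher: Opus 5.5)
Taken with $B\equiv0$, your direct expansion is correct. It proves the identity as literally stated, with $A(\Sb)=\operatorname{tr}(\Sb^2)/(2T)$, $[\Cb_1(\Sb)]_t=\operatorname{vec}(\Sb\,p_t^{\rm GD}(\Sb))/\sqrt{2T}$, and $\Cb_2$ built from $\operatorname{vec}(\Sb)$ and $\operatorname{vec}(p_\ell^{\rm GD}(\Sb))$. Your version even holds pointwise in $\Sb$, before the outer expectation.

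The paper uses a genuinely different decomposition. It takes $P_t$ to be the orthogonal projection, in $\langle f,g\rangle_{d,n}=\frac1d\E\operatorname{tr}(f(\Sb)g(\Sb))$, onto $\operatorname{span}\{\lambda,p_2^{\rm GD},\ldots,p_t^{\rm GD}\}$. It splits the step-$t$ residual as $(\widetilde\theta_2+\eta_0)(\lambda p_t^{\rm GD}-P_t[\lambda p_t^{\rm GD}])+P_tA_t$. It then sets $B(\Sb)=\frac1{2T}\sum_{t=1}^{T-1}\|(\lambda p_t^{\rm GD}-P_t[\lambda p_t^{\rm GD}])(\Sb)\|_F^2$ and $[\Cb_1(\Sb)]_t=\operatorname{vec}(P_t[\lambda p_t^{\rm GD}](\Sb))/\sqrt{2T}$. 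The cross term vanishes only after taking $\E$ over $\Sb$, so this identity is not pointwise.

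What the paper's route buys is that block $t$ of $\Cb_1$ equals $\sum_\ell u_{\ell t}[\Cb_2]_{t,(\ell,t)}$, where $u_{\ell t}$ are the projection coefficients. The $\widetilde\theta_2$-error therefore separates into two parts:
\begin{itemize}
\item a part that no choice of $\widetilde{\bTheta}_3$ can cancel, weighted by $\E B(\Sb)\ge c\,d(d/n)/T$;
\item a part absorbable by shifting $\widetilde{\bTheta}_3$.
\end{itemize}
Lemmas~\ref{lem:guide-auxiliary-nondegenerate}, \ref{lem:guide-zero-integrated-forcing} and~\ref{lem:guide-zero-entry} rely on exactly this structure. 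With $B\equiv0$ the lemma is merely a rewriting of the loss, and the split would have to be done later anyway.

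Two smaller points.
\begin{itemize}
\item Your account of where $B$ comes from is wrong. $\widetilde\theta_2$ does not enter the $t=0$ residual at all. The paper's $B$ collects, for $t\ge1$, the part of the degree-$(t+1)$ polynomial $\lambda p_t^{\rm GD}$ that is orthogonal to the lower-degree span.
\item The $\ell=1$ merge is $v_{31}(w_{24}+\wb_{25}\eb_t)\Sb\wstar+v_{33}(\cdots)\wb_1^{\rm GD}=(\widetilde{\bTheta}_3)_{1t}\Sb\wstar$, because $\wb_1^{\rm GD}=\eta_0\Sb\wstar$. So row $1$ is measured in units of $\Sb\wstar$. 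Your first description (scaling by $\eta_0^{-1}$, i.e.\ units of $\wb_1^{\rm GD}$) uses the wrong convention, although you eventually settle on the right one.
\end{itemize}
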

Lemma~\ref{lem:stage1-loss} implies that, as gradient flow minimizes \(\cL_{\rm GD}(\tau)\), we have 
\(\widetilde\theta_1(\tau)\to\eta_0\), \(\widetilde\theta_2(\tau)\to-\eta_0\), and
\(\widetilde{\bTheta}_3(\tau)\to\eta_0\eb_1(\one_T-\eb_0)^{\mathsf T}\).
This further motivates us to use $\widetilde\theta_1, \widetilde\theta_2,\widetilde{\bTheta}_3 $ as auxiliary parameters 
to study convergence. We emphasize that the training dynamics we analyze
are still gradient flow in the original parameters; the auxiliary
parameters are only a tool for the proof.

The next lemma gives a PL inequality in these auxiliary parameters.
\begin{lemma}\label{lem:auxiliary-pl}
There exists a constant $c_1>0$ such that 
\begin{equation*}
 \left|\partial_{\widetilde\theta_1(\tau)}\cL_{\rm GD}(\tau)\right|^2
 +\frac1d\left|\partial_{\widetilde\theta_2(\tau)}\cL_{\rm GD}(\tau)\right|^2
 +\norm{\nabla_{\widetilde{\bTheta}_3(\tau)}\cL_{\rm GD}(\tau)}_F^2
 \ge c_1\frac{d/n}{T}\cL_{\rm GD}(\tau).
\end{equation*}
\end{lemma}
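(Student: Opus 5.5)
Treating $\widetilde\theta_1,\widetilde\theta_2,\widetilde{\bTheta}_3$ as free variables, Lemma~\ref{lem:stage1-loss} writes $\cL_{\rm GD}$ as a convex quadratic in the shifted variables $a:=\widetilde\theta_1-\eta_0$, $b:=\widetilde\theta_2+\eta_0$, $\bGamma:=\widetilde{\bTheta}_3-\eta_0\eb_1(\one_T-\eb_0)^{\mathsf T}$, with no constant term. Write $\cL_{\rm GD}=\ub^{\mathsf T}\Hb\ub$ with $\ub=(a,b,\operatorname{vec}\bGamma)$ and $\Hb=\E[\cdot]$ the averaged Gram-type matrix built from $A,B,\Cb_1,\Cb_2$. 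The gradients with respect to the auxiliary parameters are then $2\Hb\ub$. The weighted squared gradient norm on the left is at least $2\lambda_{\min}(\Db^{1/2}\Hb\Db^{1/2})$ times $\ub^{\mathsf T}\Hb\ub$, where $\Db=\mathrm{diag}(1,1/d,\Ib)$. So it suffices to prove a lower bound $\Hb\succeq c\,\frac{d/n}{T}\,\Db^{-1}$ up to constants, or more directly $\|\Db^{1/2}\Hb\ub\|^2\ge c\frac{d/n}{T}\,\ub^{\mathsf T}\Hb\ub$.

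\textbf{Step 1: make the blocks explicit.} I would unpack $A,B,\Cb_1,\Cb_2$ from the teacher-forced outputs. By Lemma~\ref{lem:polynomial-output} and teacher forcing, each step-$t$ residual is a linear combination of $\Sb\wstar$, $\Sb^2\wstar$ and $\Sb\wb_s^{\rm GD}$, where $\wb_s^{\rm GD}=(\Ib-(\Ib-\eta_0\Sb)^s)\wstar$. The coefficients are exactly $a$, $b$ and the entries of $\bGamma$. The $t=0$ step isolates $a$: it gives $A=\frac1{2T}\E\|\Sb\wstar\|^2/\cdot$, of order $d/T$ after normalizing. The term $b$ multiplies the $\Sb\Xb\yb^\top$-type contribution, i.e.\ $\Sb^2\wstar$ with weight $w_{13}$, which explains the $1/d$ weight from scaling. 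Taking $\E_{\wstar}$ turns every quadratic form into $\operatorname{tr}$ of products of polynomials in $\Sb$. The matrix $\Hb$ is then a Gram matrix $\E\,\mathrm{tr}[q_i(\Sb)q_j(\Sb)]$ of the polynomial family $\{\Sb\,(\Ib-(\Ib-\eta_0\Sb)^{s})\}_{s}$ together with $\Sb^2$ and $\Sb$, with $1/T$ from averaging.

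\textbf{Step 2: lower bound the Gram matrix.} The smallest eigenvalue of such a Gram matrix is controlled by how linearly independent the polynomials $q_i$ are in $L^2$ of the spectral measure of $\Sb$. By Marchenko–Pastur with $d/n\le0.1$, this spectrum spreads over an interval of width $\asymp\sqrt{d/n}$ around $1$. This is where the factor $d/n$ enters: if $\Sb\approx\Ib$ the polynomials collapse and the PL constant would vanish. I would bound
\[
\sum_{i,j}c_ic_j\,\tfrac1d\E\operatorname{tr}[q_iq_j(\Sb)]=\tfrac1d\E\operatorname{tr}\Bigl[\Bigl(\textstyle\sum_i c_iq_i(\Sb)\Bigr)^2\Bigr].
\]
Writing $\sum_i c_iq_i$ as a polynomial of degree $\le T+1$, I would lower bound its $L^2(\mu_{\rm MP})$ norm by its coefficient vector in a well-conditioned basis. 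The shifted variable $(\Sb-\Ib)/\sqrt{d/n}$ has a semicircle-like limit, and low-degree orthogonal polynomials there cost only a factor polynomial in $T$ and $d/n$. Since $T=\Theta((\log d)^{1/4})$, the fluctuation of the empirical spectrum is negligible, and I would replace $\E\frac1d\operatorname{tr}$ by the MP integral up to $d^{-c}$ errors.

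\textbf{Main obstacle.} The hard part is getting the exact dependence $\frac{d/n}{T}$ rather than something like $(d/n)^T$. A naive conditioning bound for the monomial-type basis $\{\Sb(\Ib-(\Ib-\eta_0\Sb)^s)\}$ degrades geometrically in $T$. I would exploit the block structure instead. For fixed output step $t$, $\Cb_2$ acts only on column $t$ of $\bGamma$, and consecutive differences $q_{s+1}-q_s=\eta_0\Sb^2(\Ib-\eta_0\Sb)^s$ are nearly orthogonal. I would therefore first bound the norm of each column's residual by a triangular (telescoping) argument, using only the first-order separation $\|\Sb-\Ib\|_{L^2}^2\asymp d/n$ once per column. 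The $1/T$ comes from the loss normalization. The coupling term $\Cb_1 b$ would be handled by completing the square against $\Cb_2\operatorname{vec}\bGamma$, and the $1/d$ weight absorbs its larger scale.
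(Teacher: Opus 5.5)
Your reduction is sound and is what the paper uses implicitly. Since $\cL_{\rm GD}=u^{\mathsf T}\Hb u$ in the shifted auxiliary variables, it suffices to show $\Hb\succeq c\frac{d/n}{T}\diag(1,d,\Ib)$. The $\widetilde\theta_1$ block decouples, with $\E A(\Sb)\ge d/(2T)$. The remaining block in $b=\widetilde\theta_2+\eta_0$ and $\widetilde{\bTheta}_3$ is exactly Lemma~\ref{lem:guide-auxiliary-nondegenerate}. The gap is in how you propose to prove that block bound. Column $t$ of the shifted $\widetilde{\bTheta}_3$ multiplies $\lambda,p_2^{\rm GD},\ldots,p_t^{\rm GD}$, while $b$ multiplies $\lambda p_t^{\rm GD}$. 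You claim the Gram matrix of $\{\lambda,p_2^{\rm GD},\ldots,p_t^{\rm GD}\}$ can be bounded below by spending the separation $d/n$ only once per column, via near-orthogonality of consecutive differences. That is false, and so is the Step~2 claim that passing to orthogonal polynomials costs only a factor polynomial in $T$ and $d/n$. These functions span $\operatorname{span}\{\lambda,\ldots,\lambda^t\}$. Take $f=\lambda\pi_{t-1}$, with $\pi_{t-1}$ the monic MP orthogonal polynomial of Lemma~\ref{lem:mp-family}. Then $\|f\|_{\rm MP}^2\le(1+\sqrt{d/n})^4(d/n)^{t-1}$, while the coefficient of $f$ on $p_t^{\rm GD}$ is $(-1)^{t+1}\eta_0^{-t}$, of modulus at least $1$. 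Hence the smallest eigenvalue of that Gram matrix, in the $\widetilde{\bTheta}_3$ coordinates, is $O((d/n)^{t-1})$ (up to a factor $3/2$ under $\nu_{d,n}$). The exponential-in-$T$ loss you want to avoid is unavoidable there.

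The missing idea is that this loss is harmless exactly where you try to remove it, while the sharp factor is needed in a direction your plan leaves open. Since $\cL_{\rm GD}=\frac{d}{2T}\sum_t\|A_t\|_{d,n}^2$, every entry of $\Hb$ carries a factor $d$. So in the $\widetilde\theta_1$ and $\widetilde{\bTheta}_3$ directions the target $\frac{d/n}{T}$ is only a $1/d$ fraction of the natural scale. The crude bound $\exp\{CT\log(e+T)\}(n/d)^{CT}=d^{o(1)}$ (Lemma~\ref{lem:finite-wishart-moment-lower} plus the triangular change of basis) therefore suffices, because $T=\Theta((\log d)^{1/4})$. Conversely, the $1/d$ weight on $\partial_{\widetilde\theta_2}$ does not absorb anything; it removes all slack in the $b$ direction. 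There the Schur complement $\min_{\widetilde{\bTheta}_3}(\cL_{{\rm GD},B}+\cL_{{\rm GD},C})/b^2$ equals $\E B(\Sb)=\frac{d}{2T}\sum_t\|\lambda p_t^{\rm GD}-P_t[\lambda p_t^{\rm GD}]\|_{d,n}^2$. It must be at least $c\frac{d(d/n)}{T}$ with no $d^{-o(1)}$ loss. The paper gets this by keeping only $t=1$, where the residual is $\eta_0^2\min_{c}\frac1d\E\|\Sb^2-c\Sb\|_F^2\ge\eta_0^2\,d/n$ by exact Wishart moments (Lemma~\ref{lem:finite-wishart-quadratic-gap}). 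It then recovers $\widetilde{\bTheta}_3$ from $P_tA_t-b\,P_t[\lambda p_t^{\rm GD}]$ with the crude inverse and absorbs the resulting $b$-contribution into $\cL_{{\rm GD},B}$. Your ``first-order separation $\|\Sb-\Ib\|^2\asymp d/n$'' is the right ingredient, but it must be spent on this degree-two residual in the $b$ direction, not once per column of $\widetilde{\bTheta}_3$.
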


Notably, this inequality concerns auxiliary parameters, whereas convergence
requires a PL inequality for the original parameters. Therefore, we define
\[
 \widetilde{\btheta}(\tau):=
 \bigl(\widetilde\theta_1(\tau),\widetilde\theta_2(\tau),
 \widetilde{\bTheta}_3(\tau)\bigr),
 \qquad \Jb_1(\bar{\btheta}(\tau)):=\frac{\partial\widetilde{\btheta}(\tau)}{\partial\bar{\btheta}(\tau)}.
\] Then we have the following lemma.
\begin{lemma}\label{lem:stage1-jacobian}
There is a time \(\tau_0=d^{o(1)}\) and a constant \(c_2>0\) such that, for
every \(\tau\ge\tau_0\),
\begin{equation*}
 \Jb_1(\bar{\btheta}(\tau))\Jb_1(\bar{\btheta}(\tau))^{\mathsf T}
 \succeq c_2\,\diag(1,d^{-1},\Ib).
\end{equation*}
\end{lemma}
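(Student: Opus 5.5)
We prove Lemma~\ref{lem:stage1-jacobian} by writing out the Jacobian explicitly. From there the task is to show that, after a short burn-in, a few distinguished columns are well conditioned. By Lemma~\ref{lem:invariant-parameters}, $\Vb_{31}=v_{31}\Ib_d$, $\Wb_{13}=w_{13}\Ib_d$ and $\Vb_{33}=v_{33}\Ib_d$. We therefore treat $\bar{\btheta}$ as the scalar parameters $(v_{31},w_{13},v_{33},w_{24},w_{44})$ together with the vectors $\wb_{25},\wb_{45},\wb_{54}$ and the matrix $\Wb_{55}$. The factor $d^{-1}$ in the target bound, and the factor $1/d$ in Lemma~\ref{lem:auxiliary-pl}, indicate the intended normalization of the $\widetilde\theta_2$ coordinate. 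The $\widetilde\theta_2$-gradient carries a $d$-fold trace, because $\widetilde\theta_2$ multiplies $d\times d$ identity blocks. So we only need $\partial\widetilde\theta_2/\partial(\cdot)$ of order one in unnormalized scalar coordinates, with that row rescaled by $d^{-1/2}$.

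\textbf{Step 1: the Jacobian and a block-triangular minor.} It suffices to exhibit a subset of columns of $\Jb_1$ whose submatrix $M$ satisfies $MM^{\mathsf T}\succeq c_2\diag(1,d^{-1},\Ib)$, since $\Jb_1\Jb_1^{\mathsf T}\succeq MM^{\mathsf T}$. We use the following columns.
\begin{itemize}[leftmargin=*,nosep]
\item For $\widetilde\theta_1=v_{31}(w_{24}+(\wb_{25})_0)$: the column $\partial/\partial w_{24}$, whose entry is $v_{31}$.
\item For $\widetilde\theta_2=v_{31}w_{13}$: the column $\partial/\partial w_{13}$, with entry $v_{31}$. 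In the trace normalization this scales like $\sqrt d\,v_{31}$, which is why the $d^{-1}$ weight appears.
\item For $(\widetilde{\bTheta}_3)_{\ell t}$ with $2\le\ell\le t$: the column $\partial/\partial(\Wb_{55})_{\ell t}$, with entry $v_{33}$.
\item For $(\widetilde{\bTheta}_3)_{1t}$: the column $\partial/\partial(\Wb_{55})_{1t}$, with entry $\eta_0v_{33}$.
\end{itemize}
Each entry of $\Wb_{55}$ enters exactly one coordinate of $\widetilde{\bTheta}_3$. Also, $w_{24}$ and $w_{13}$ enter $\widetilde\theta_1$ and $\widetilde\theta_2$, and $w_{24}$ additionally enters the first row of $\widetilde{\bTheta}_3$. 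Ordering rows as $(\widetilde{\bTheta}_3,\widetilde\theta_1,\widetilde\theta_2)$ and columns as $(\Wb_{55},w_{24},w_{13})$ gives a block-triangular $M$. Its diagonal blocks are $\diag(\eta_0 v_{33},v_{33},\dots)$, $v_{31}$ and $v_{31}$. The off-diagonal coupling is the $w_{24}$-column, which contributes $v_{31}$ in each $(1,t)$ row of $\widetilde{\bTheta}_3$. We remove it by a Schur-complement or row-reduction argument: the smallest singular value of $M$ is at least a constant times $\min(|v_{31}|,\eta_0|v_{33}|)/\mathrm{poly}(T)$. Because $T=\Theta((\log d)^{1/4})$, any $\mathrm{poly}(T)$ loss is $d^{o(1)}$. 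If we need $c_2$ to be a genuine constant, we instead use the $(\wb_{25})_0$ column for $\widetilde\theta_1$, which appears only in $\widetilde\theta_1$. That keeps $M$ exactly diagonal, with entries $v_{31},v_{31},\eta_0v_{33},v_{33}$.

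\textbf{Step 2: lower bounds on $|v_{31}|$ and $v_{33}$ for $\tau\ge\tau_0$.} This is the main obstacle. At initialization $v_{31}(0)=0$, so $\Jb_1$ is degenerate, and we must show that gradient flow pushes $|v_{31}|$ up to order one within time $d^{o(1)}$. Meanwhile $v_{33}$ must stay bounded below, starting from $v_{33}(0)>0$. The plan is as follows.
\begin{itemize}[leftmargin=*,nosep]
\item Use Lemma~\ref{lem:stage1-loss} to compute $\dot v_{31}$ near initialization. With $w_{13},w_{24}$ small, the $A$- and $B$-terms give a linear drive $\dot v_{31}\approx c\,\eta_0\,\E[A+\dots]\cdot(\text{signal from } \sigma=w_{24}(0)<0)$. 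Then $w_{24}$ and $v_{31}$ grow jointly, as in a two-layer linear network, with $\widetilde\theta_1=v_{31}w_{24}$ increasing toward $\eta_0$. The sign choice $\sigma<0$ makes $v_{31}$ move to a negative value, which is consistent with $\widetilde\theta_2=v_{31}w_{13}\to-\eta_0$ via $w_{13}>0$.
\item Use the conservation laws of gradient flow for bilinear products: $v_{31}^2-w_{24}^2-(\text{other factors})^2$ is approximately preserved up to controlled terms. These imply that once $\widetilde\theta_1$ is of order $\eta_0$, both factors are of order one, so $|v_{31}|\ge c$.
\item For $v_{33}$, a similar balance with $\Wb_{55},w_{44},\ldots$, together with the observation that $v_{33}$ is only driven toward the values making $\widetilde{\bTheta}_3$ correct, keeps $v_{33}\ge v_{33}(0)/2$.
\end{itemize}
The escape time from the saddle is logarithmic in the initial scale times $\mathrm{poly}(T)$, which gives $\tau_0=d^{o(1)}$.

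\textbf{Step 3: persistence for $\tau\ge\tau_0$.} Since $\cL_{\rm GD}$ is nonincreasing, the loss bound together with the balancedness invariants keeps $|v_{31}|$ and $v_{33}$ in a compact interval bounded away from $0$ for all later times. Combining this with Step 1 yields $\Jb_1\Jb_1^{\mathsf T}\succeq c_2\diag(1,d^{-1},\Ib)$ for all $\tau\ge\tau_0$.
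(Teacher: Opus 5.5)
Your Step 1 is sound. The diagonal variant, with columns $(\wb_{25})_0$, the direction $\Ib_d$ in $\Wb_{13}$, and the entries of $\Wb_{55}$, is essentially the paper's right-inverse construction. The paper uses $(\wb_{25})_t$ rather than $(\Wb_{55})_{1t}$ for the first row of $\widetilde{\bTheta}_3$; either choice works. One slip: a unit Frobenius variation of $\Wb_{13}$ along $\Ib_d$ changes $\widetilde\theta_2$ by $v_{31}/\sqrt d$, not $\sqrt d\,v_{31}$, and this is the source of the $d^{-1}$ weight. The bound on $v_{33}$ needs no drift argument. The rescaling symmetry makes $dv_{33}^2-w_{44}^2-\|\wb_{45}\|^2-\|\wb_{54}\|^2-\|\Wb_{55}\|_F^2=dv_{33}(0)^2$ an exact invariant, so $v_{33}(\tau)\ge v_{33}(0)$ for all $\tau$.

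\textbf{The gap: the $\widetilde\theta_1$ channel cannot give $|v_{31}|\ge c$.} The spatial invariant is exact but weighted: $v_{31}^2-w_{13}^2-(w_{24}^2+\|\wb_{25}\|^2)/d=-\sigma^2/d$. The weight arises because $v_{31}$ and $w_{13}$ multiply $d\times d$ identity blocks while $w_{24},\wb_{25}$ are scalars. So while $w_{13}$ is small, $\|(w_{24},\wb_{25})\|\approx\sqrt d\,|v_{31}|$. Then $\widetilde\theta_1\approx\eta_0$ forces only $|v_{31}|\gtrsim d^{-1/4}$, which is exactly the paper's intermediate bound \eqref{eq:guide-zero-anchor-sublevel}. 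Your claim that ``both factors are of order one'' is therefore false. With $|v_{31}|$ of order $d^{-1/4}$, the first two diagonal entries of your Step 1 bound lose a factor $d^{-1/2}$. Note also that $w_{24}(0)=\sigma$ is not small, and there is no small initial scale to escape from: $v_{31}$ grows linearly because $\sigma\neq0$.

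\textbf{Where the order-one bound actually comes from.} It must come from $\widetilde\theta_2$. The same invariant gives $w_{13}^2\le v_{31}^2+\sigma^2/d$, so $-v_{31}w_{13}\ge\eta_0/4$ implies $|v_{31}|\ge c$. Proving that $-v_{31}w_{13}$ reaches this level within time $d^{o(1)}$ is the real content of Lemma~\ref{lem:guide-zero-entry}, and your plan does not address it. Three difficulties arise there.
\begin{itemize}
\item The only sign-definite drive on $w_{13}$ is the $B$-term. Its coefficient $\E B(\Sb)\asymp d(d/n)/T$ is small, and it acts through $v_{31}$ of size only $d^{-1/4}$ at first.
\item The $\mathcal L_{\rm GD,C}$ contribution to $\dot w_{13}$ has no sign. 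The paper controls it by writing it as $v_{31}/(dv_{33})$ times a fixed linear combination of the entries of $\dot\Wb_{55}$ and integrating by parts (Lemma~\ref{lem:guide-zero-integrated-forcing}).
\item The early drift of $v_{31}$ likewise requires a Gr\"onwall bound on $\sum_t\|P_tA_t\|_{d,n}^2$.
\end{itemize}
Finally, your Step 3 needs a quantitative threshold, not just monotonicity of the loss. The paper pushes the loss below $\frac{9}{16}\eta_0^2\E B(\Sb)$ by time $\tau_0$. Then any later time with $-v_{31}w_{13}\le\eta_0/4$ would make $\mathcal L_{\rm GD,B}$ alone exceed the current loss, which is impossible.
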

Note that the inequality in Lemma~\ref{lem:stage1-jacobian} holds only after a time \(\tau_0=d^{o(1)}\).
This highlights the challenge of the problem: the PL condition is established
only in the later phase of training.

By the chain rule,
 \(\nabla_{\bar{\btheta}(\tau)}\cL_{\rm GD}(\tau)=\Jb_1(\bar{\btheta}(\tau))^{\mathsf T}
\nabla_{\widetilde{\btheta}(\tau)}\cL_{\rm GD}(\tau)\).
Combining Lemmas~\ref{lem:auxiliary-pl} and~\ref{lem:stage1-jacobian}
therefore gives the PL inequality with respect to $\bar{\btheta}$. This further leads to the following inequality:
\begin{equation*}
 -\frac{\dd}{\dd\tau}\cL_{\rm GD}(\tau)
 =\norm{\nabla_{\bar{\btheta}(\tau)}\cL_{\rm GD}(\tau)}^2
 \ge c_1c_2\frac{d/n}{T}\cL_{\rm GD}(\tau),\qquad \tau\ge\tau_0.
\end{equation*}
Solving this inequality proves the first conclusion of
Theorem~\ref{thm:main-result}: it takes a training time
\(
 \tau=d^{o(1)}+O((nT/d)\log d)
\)
to reach \(\cL_{\rm GD}\le d^{-1/4}\).

\noindent\textbf{Stage 2.}
Our Stage 2 analysis has two parts: (i) convergence analysis and
(ii) comparison with the Heavy-Ball method. We first discuss part (i) convergence analysis. We have the following lemma.
\begin{lemma}\label{lem:stage2-lemma1}
In Stage 2, the training loss can be written as
\begin{equation*}
\cL(\tau)=\frac{1}{2T}\mathbb{E}\sum_{t=1}^{T}
 \|p_{t,\bar{\btheta}(\tau)}(\Sb)-\Ib_d\|_F^2,
\end{equation*}
where \(p_{t,\bar{\btheta}(\tau)}(\Sb)\) is a polynomial defined recursively by
\begin{align*}
 &p_{0,\bar{\btheta}(\tau)}(\Sb)=\zero_{d\times d},\qquad p_{1,\bar{\btheta}(\tau)}(\Sb)=\widetilde\theta_1(\tau)\Sb,\\
 &p_{t+1,\bar{\btheta}(\tau)}(\Sb)=(\Ib_d+\widetilde\theta_2(\tau)\Sb)p_{t,\bar{\btheta}(\tau)}(\Sb)
 +\sum_{1\le\ell\le t} p_{\ell,\bar{\btheta}(\tau)}(\Sb)\,
 (\widetilde{\bTheta}_4(\tau))_{\ell t},
 \quad 1\le t<T.
\end{align*}
Here, \(\widetilde\theta_1(\tau)\) and \(\widetilde\theta_2(\tau)\)  are defined in Lemma~\ref{lem:stage1-loss}, and \(\widetilde{\bTheta}_4 \in \R^{T \times T}\) is given as
\begin{align*}
 (\widetilde{\bTheta}_4)_{1t}
 &:=\widetilde\theta_1^{-1}v_{31}(w_{24}+\wb_{25}\eb_t)
 +v_{33}\bigl(w_{44}+\wb_{45}\eb_t+\eb_1^{\mathsf T}\wb_{54}+\eb_1^{\mathsf T}\Wb_{55}\eb_t\bigr),
 &&1\le t<T,\\
 (\widetilde{\bTheta}_4)_{\ell t}
 &:=v_{33}\bigl(w_{44}+\wb_{45}\eb_t
 +\eb_\ell^{\mathsf T}\wb_{54}+\eb_\ell^{\mathsf T}\Wb_{55}\eb_t\bigr),
 &&2\le\ell\le t<T.
\end{align*}
\end{lemma}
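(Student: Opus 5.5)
Fix $\tau$ and drop it from the notation. By Lemma~\ref{lem:invariant-parameters} the parameters keep the block form stated there, so the whole proof is a direct computation of $\FLSA(\Zb_t;\Vb,\Wb)_{[d+2:2d+1,-1]}$. We argue by induction on $t$ that $\wb_t=p_t(\Sb)\wstar$, which is the content of Lemma~\ref{lem:polynomial-output}, and the induction will also produce the recursion. Once the recursion is in hand, the loss formula is immediate. The squared error is $\|(p_t(\Sb)-\Ib_d)\wstar\|^2$. Since $\wstar\sim N(0,\Ib_d)$ is independent of $\Xb$, conditioning on $\Xb$ gives $\E_{\wstar}\|(p_t(\Sb)-\Ib_d)\wstar\|^2=\|p_t(\Sb)-\Ib_d\|_F^2$. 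Shifting the index by one (the output at step $t$ is $\wb_{t+1}$) yields the sum over $t=1,\dots,T$. Here we use $\Sb=\Xb\Xb^\top$, since $\Xb$ already carries the $1/\sqrt n$ factor, and $\Xb\yb^\top=\Sb\wstar$.

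For the computation, write $\Zb_t=[\Zb^{(x)},\Zb^{(w)}]$. The first $n$ columns contain $(\Xb;\yb;0;0;0)$. Column $s$ of the remaining block, for $s=0,\dots,t$, is $\zb_s=(0;0;\wb_s;1;\eb_s)$. The last column of $\Vb\Zb\Zb^\top\Wb\Zb$ equals $\Vb\Zb\Zb^\top\Wb\zb_t$, and its $\wb$-block is $(\Vb\Zb\Zb^\top)_{3,\cdot}\Wb\zb_t$. With the invariant form, the third block row of $\Vb$ picks out $v_{31}$ times the first block and $v_{33}$ times the third block.
\begin{itemize}[nosep]
\item The first block of $\Zb\Zb^\top$ has rows $\Xb\Xb^\top=\Sb$ and $\Xb\yb^\top=\Sb\wstar$, and zeros against the $\wb$, $1$ and $\eb$ blocks.
\item The third block has rows $\sum_s\wb_s\wb_s^\top$, $\sum_s\wb_s$ and $\sum_s\wb_s\eb_s^\top$.
\end{itemize}
The vector $\Wb\zb_t$ has the following blocks:
\begin{itemize}[nosep]
\item first block $w_{13}\wb_t$;
\item second block $w_{24}+\wb_{25}\eb_t$;
\item third block $0$, because $\Wb_{33}$ is frozen at zero;
\item fourth block $w_{44}+\wb_{45}\eb_t$;
\item fifth block $\wb_{54}+\Wb_{55}\eb_t$.
\end{itemize}

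Collecting terms gives
\[
\wb_{t+1}=\wb_t+v_{31}w_{13}\Sb\wb_t+v_{31}(w_{24}+\wb_{25}\eb_t)\Sb\wstar+v_{33}\sum_{s=0}^t\wb_s\bigl(w_{44}+\wb_{45}\eb_t+\eb_s^\top\wb_{54}+\eb_s^\top\Wb_{55}\eb_t\bigr).
\]
The $\wb_s\wb_s^\top$ term does not appear because it multiplies the zero third block. Insert $\wb_s=p_s(\Sb)\wstar$, and use $\wb_0=0$, so the $s=0$ term drops. The term $v_{31}(w_{24}+\wb_{25}\eb_t)\Sb\wstar$ equals $\widetilde\theta_1^{-1}v_{31}(w_{24}+\wb_{25}\eb_t)\,p_1(\Sb)\wstar$, because $p_1(\Sb)=\widetilde\theta_1\Sb$. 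It can therefore be absorbed into the $\ell=1$ coefficient, which reproduces $(\widetilde{\bTheta}_4)_{1t}$. All matrices involved are polynomials in $\Sb$ and commute, so the recursion holds with $\widetilde\theta_2=v_{31}w_{13}$.

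The base case is $t=0$. There the sum vanishes, and $\eb_0$ gives $\wb_1=v_{31}(w_{24}+\wb_{25}\eb_0)\Sb\wstar=\widetilde\theta_1\Sb\wstar$. This matches $p_1$ and the definition of $\widetilde\theta_1$.

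The main delicate point is the division by $\widetilde\theta_1$ in $(\widetilde{\bTheta}_4)_{1t}$. Two things must be checked. First, $\widetilde\theta_1(\tau)\neq0$ throughout training; this should follow from Stage~1 driving $\widetilde\theta_1\to\eta_0>0$ and from Stage~2 staying near it, and it may need a separate continuity or invariance argument. Second, the index bookkeeping between $\eb_t$ and $\ell$ must be correct, including the coefficient convention for $\ell\le t$.
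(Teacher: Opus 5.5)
Your proposal is correct and follows the paper's own argument. The paper likewise computes the block product to obtain the raw update \eqref{eq:stage2-retained-update}, proves $\wb_t=p_t(\Sb)\wstar$ by induction, absorbs the forcing term $v_{31}(w_{24}+\wb_{25}\eb_t)\Sb\wstar$ into the $\ell=1$ coefficient via $p_1=\widetilde\theta_1\lambda$, and conditions on $\Sb$ to get the loss formula. The nonvanishing of $\widetilde\theta_1$ that you flag is also just presupposed there: the paper defines $\widetilde{\bTheta}_4$ only for $\widetilde\theta_1\ne0$, and proves $\widetilde\theta_1>1/100$ along Stage~2 separately in Lemma~\ref{lem:auxiliary-parameter-bounds}.
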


Lemma~\ref{lem:stage2-lemma1} gives the loss function based on the matrix polynomials $p_{t,\bar{\btheta}(\tau)}(\Sb)$. Compared with Lemma~\ref{lem:stage1-loss} in Stage 1, this result is significantly more complicated, as $p_{t,\bar{\btheta}(\tau)}(\Sb)$ is defined recursively. This is due to the fact that the transformer takes its own outputs in the previous reasoning steps in Stage 2. Despite this more challenging setting, we can still show the PL inequality for these auxiliary parameters $\widetilde\theta_1, \widetilde\theta_2,\widetilde{\bTheta}_4 $. 
Write \(\cL^*\) for the minimum of \(\cL\). Then we have the following result.
\begin{lemma}\label{lem:stage2-pl-inequality}
In Stage 2, there exists a constant $c_3>0$ such that the following PL inequality holds:
\begin{equation*}
 \frac1d\left|\partial_{\widetilde\theta_1(\tau)}\cL(\tau)\right|^2
 +\frac1d\left|\partial_{\widetilde\theta_2(\tau)}\cL(\tau)\right|^2
 +\frac1{\sqrt d}\left\|\nabla_{\widetilde{\bTheta}_4(\tau)}\cL(\tau)\right\|_F^2
 \ge c_3\frac{d/n}{T}(\cL(\tau)-\cL^*).
\end{equation*}
\end{lemma}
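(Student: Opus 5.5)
Our approach is to exploit the structure from Lemma~\ref{lem:stage2-lemma1}: the loss is a function of the auxiliary parameters $\widetilde{\btheta}' := (\widetilde\theta_1,\widetilde\theta_2,\widetilde{\bTheta}_4)$ only, through the recursively defined polynomials $p_t := p_{t,\bar{\btheta}}$.

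\textbf{Spectral reduction.} Since every $p_t(\Sb)$ is a polynomial in $\Sb$, we diagonalize $\Sb=\Ub\Lambda\Ub^{\mathsf T}$ and write
\[
\cL=\frac{1}{2T}\E\sum_{i=1}^d\sum_{t=1}^T\bigl(p_t(\lambda_i)-1\bigr)^2 .
\]
Up to $O(d/\sqrt{n})$-type fluctuations, this equals $\frac{d}{2T}\int\sum_t(p_t(\lambda)-1)^2\,\dd\mu_{\rm MP}(\lambda)$. The problem therefore reduces to a scalar least-squares problem over the family of polynomials $p_t$ on the Marchenko--Pastur spectrum. The support of this law is bounded away from $0$, because $d/n\le 0.1$.

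\textbf{Gradient formulas.} Differentiating the recursion, each $\partial p_t/\partial(\widetilde{\bTheta}_4)_{\ell s}$ for $s<t$ satisfies the same linear recursion, initialized with the source term $p_\ell$ at step $s+1$. Hence
\[
\partial_{(\widetilde{\bTheta}_4)_{\ell s}}\cL=\frac1T\E\sum_{t>s}\operatorname{tr}\bigl[(p_t-\Ib)\,\Pi_{t,s+1}\,p_\ell\bigr],
\]
where $\Pi_{t,s+1}$ is the propagator of the linearized recursion. Similar formulas hold for $\widetilde\theta_1$, with source $\Sb$ at step $1$ propagated through $\widetilde{\bTheta}_4$, and for $\widetilde\theta_2$, with source $\Sb p_s$. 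The key structural observation is that, for fixed $\widetilde\theta_2$, the top-step polynomial $p_T$ is affine in the last column $(\widetilde{\bTheta}_4)_{\cdot,T-1}$. Moving along that column is therefore a linear least-squares problem in the span of $\{p_\ell\}_{\ell\le T-1}$.

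\textbf{PL inequality by induction over $t$.} We lower-bound the left side by a sum over steps: $\sum_t\|\Pi_{\rm span}(p_t-\Ib)\|^2$. Here the projection is onto the directions $\Sb p_s$ and $p_\ell$ that are reachable by perturbing the parameters feeding step $t$. Because the span of $\{p_1,\dots,p_t,\Sb p_t\}$ is the full degree-$(t+1)$ polynomial space (with zero constant term) whenever $\widetilde\theta_2\ne0$ and the leading coefficients are nonzero, this projection captures all of $p_t-\Ib$ except the part orthogonal to degree-$t$ polynomials. That orthogonal part is exactly the irreducible error, so summing gives $\cL-\cL^*$.

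\textbf{Conditioning and scaling.} The constant $c_3\frac{d/n}{T}$ comes from conditioning. The Gram matrix of the monomials $\lambda^k$ under $\mu_{\rm MP}$ has smallest eigenvalue of order the spectral width $d/n$ (for the low degrees $T=O((\log d)^{1/4})$). Propagators and triangular changes of basis cost at most $T$-dependent factors, which $T=\Theta((\log d)^{1/4})$ absorbs. The weights $1/d$ and $1/\sqrt d$ normalize the trace, since $\operatorname{tr}$ gives a factor $d$.

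\textbf{Main obstacle.} The hardest step is showing that the reachable directions really span the needed space uniformly along the Stage 2 trajectory. This requires a priori control that $\widetilde\theta_1$ and $\widetilde\theta_2$ stay bounded away from $0$, that the leading coefficients of the $p_t$ do not degenerate, and that $\widetilde{\bTheta}_4$ stays bounded. We would establish this by a continuity/bootstrap argument. Stage 1 ends near the GD configuration ($\widetilde\theta_1\approx\eta_0$, $\widetilde\theta_2\approx-\eta_0$), and the loss is nonincreasing in Stage 2, which confines $\{p_t\}$ to a sublevel set where these nondegeneracies hold. We would also use that $\cL^*$ is attained, with Heavy-Ball-type optimal polynomials realized inside this set.
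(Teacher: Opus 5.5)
Your step ``PL inequality by induction over $t$'' treats the problem as separable across steps, and this is where the argument breaks. The top-degree coefficients cannot be adjusted step by step. By Lemma~\ref{lem:global-recurrence}, $[\lambda^t]p_t=\widetilde\theta_1\widetilde\theta_2^{\,t-1}$ for every $t$ at once, and $\widetilde{\bTheta}_4$ only moves lower-degree coefficients. So the directions $\Sb p_t$ that you count as ``reachable at step $t$'' all belong to the single coordinate $\widetilde\theta_2$. The derivative $\partial_{\widetilde\theta_2}\mathcal L$ is one scalar, a sum over $t$ of terms $\langle p_t-1,\partial_{\widetilde\theta_2}p_t\rangle_{d,n}$ that can cancel, and it is not bounded below by $\sum_t\|\Pi_{\rm span}(p_t-\Ib)\|^2$.

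For the same reason your ``irreducible error'' is misidentified. The per-step orthogonal parts add up to $\frac{d}{2T}\sum_t\min_{R(0)=-1,\,\deg R\le t}\|R\|_{d,n}^2$, which is the time-varying Heavy-Ball value. This is strictly below $\mathcal L^\star$, because the per-step optimal leading coefficients are not a geometric sequence: under the MP law they are $(-1)^{t+1}/S_t$ with $S_t=\sum_{r\le t}(d/n)^r$, and $S_2^2-S_1S_3=(d/n)^2\neq0$. An inequality $\|\nabla\mathcal L\|^2\gtrsim\mathcal L-(\text{sum of per-step minima})$ therefore fails at the true minimizer, where the gradient vanishes. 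Likewise, the Heavy-Ball polynomials are \emph{not} realized inside the model class. This is exactly why Theorem~\ref{thm:main-result} compares $\wb_{t+1}$ with $\wb_t^{\rm HB}$ and incurs an exponentially small leading-coefficient defect.

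Once the steps are decoupled, your projection argument can deliver only the PL bound for $\mathcal L_A$, the quadratic form in the Gram matrices $\Hb_t$. The paper does the decoupling by passing to the free lower monomial coefficients $\widetilde\theta_5$ of each $p_{t+1}$, at the price of a $d^{o(1)}$ Jacobian back to $\widetilde{\bTheta}_4$. The missing piece is $\mathcal L_B$. After profiling out $\widetilde\theta_5$ and $\widetilde\theta_1$, the loss reduces, up to constants, to $-B_\nu(-\widetilde\theta_2)^2/C_\nu(-\widetilde\theta_2)$, a nonconvex rational function of degree about $2T$. The PL inequality needs this function to have a unique nondegenerate maximizer, with uniform curvature and gradient bounds on the region $\frac1{150}\le-\widetilde\theta_2\le\frac{10}{3}$, $\frac dn\widetilde\theta_2^2\le\frac9{10}$. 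The paper proves this in Lemmas~\ref{lem:finite-function-unique} and~\ref{lem:finite-scalar-landscape}, via Descartes' rule of signs on the coefficient ratios of $(1+x)^2B^2$ and $(1+x)^2C$ under MP, followed by a transfer to $\nu_{d,n}$. Nothing in your outline rules out spurious critical points in $\widetilde\theta_2$. Also, MP can enter only through such stable qualitative properties, since a PL inequality near $\mathcal L^\star$ cannot absorb an additive MP-approximation error.

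Two smaller corrections:
\begin{itemize}
\item Loss monotonicity from the GD endpoint does not by itself give the a priori bounds. That sublevel set contains auxiliary points with $\widetilde\theta_2\to0$ and $\widetilde{\bTheta}_4$ unbounded, so Lemma~\ref{lem:auxiliary-parameter-bounds} uses a short-time argument to first reach $2T\mathcal L/d\le\frac74\frac dn$.
\item The weight $1/\sqrt d$ is not a trace normalization. It is slack absorbing the $d^{o(1)}$ factor $\Lambda_T$ from $\|\Hb_t^{-1}\|$ and the Jacobian, since $\lambda_{\min}(\Hb_t)$ decays geometrically in $t$ rather than being of order $d/n$.
\end{itemize}
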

Similar to Stage 1, we aim to translate the PL condition from the auxiliary parameters to $\bar{\btheta}$. 
Define
\[
 \widetilde{\btheta}'(\tau):=
 \bigl(\widetilde\theta_1(\tau),\widetilde\theta_2(\tau),
 \widetilde{\bTheta}_4(\tau)\bigr),
 \qquad \Jb_2(\bar{\btheta}(\tau)):=\frac{\partial\widetilde{\btheta}'(\tau)}{\partial\bar{\btheta}(\tau)}.
\]
Then we have the following lemma.
\begin{lemma}\label{lem:stage2-jacobian}
There exists a constant $c_4>0$ such that for all \(\tau\ge\tau_1\),
\begin{equation*}
 \Jb_2(\bar{\btheta}(\tau))\Jb_2(\bar{\btheta}(\tau))^{\mathsf T}
 \succeq\frac{c_4}{d^{o(1)}+\frac{d/n}{T}\tau}
 \diag\!\left(d^{-1},d^{-1},d^{-1/2}\Ib\right).
\end{equation*}
\end{lemma}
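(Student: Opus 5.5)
The plan is to lower-bound $\Jb_2\Jb_2^{\mathsf T}$ by exhibiting, for each coordinate of $\widetilde{\btheta}'=(\widetilde\theta_1,\widetilde\theta_2,\widetilde{\bTheta}_4)$, a small set of original parameters in $\bar{\btheta}$ whose partial derivatives control it. Equivalently, for any unit vector $\ub$ in the space of $\widetilde{\btheta}'$, we want
\[
\|\Jb_2^{\mathsf T}\ub\|^2\ge \frac{c_4}{d^{o(1)}+\frac{d/n}{T}\tau}\,\ub^{\mathsf T}\diag(d,d,d^{1/2}\Ib)^{-1}\ub .
\]
The $d^{-1}$ and $d^{-1/2}$ weights come from the fact that $\Vb_{31}=v_{31}\Ib_d$, $\Wb_{13}=w_{13}\Ib_d$ and $\Vb_{33}=v_{33}\Ib_d$ are scalar multiples of $\Ib_d$. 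The Jacobian is computed with respect to the matrix entries, so the derivative with respect to the scalar $v_{31}$ picks up a factor $\sqrt d$ in Frobenius norm. The unnormalized scalar parameters $w_{24},w_{44},\wb_{25},\wb_{45},\wb_{54},\Wb_{55}$ therefore enter with unit weight.

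\textbf{Step 1: structure of $\Jb_2$.} Differentiate the explicit formulas in Lemmas~\ref{lem:stage1-loss} and~\ref{lem:stage2-lemma1}. The following derivatives are simple:
\begin{itemize}[leftmargin=*,nosep]
\item $\partial\widetilde\theta_2/\partial w_{13}=v_{31}$;
\item $\partial\widetilde\theta_1/\partial w_{24}=v_{31}$;
\item $\partial(\widetilde{\bTheta}_4)_{\ell t}/\partial(\Wb_{55})_{\ell t}=v_{33}$ for $\ell\ge2$;
\item $\partial(\widetilde{\bTheta}_4)_{1t}/\partial(\Wb_{55})_{1t}=v_{33}$ as well, because the $\widetilde\theta_1^{-1}v_{31}(w_{24}+\wb_{25}\eb_t)$ term does not involve $\Wb_{55}$.
\end{itemize}
The $\Wb_{55}$ block thus gives a diagonal contribution $v_{33}^2\Ib$ to the $\widetilde{\bTheta}_4$ block of $\Jb_2\Jb_2^{\mathsf T}$. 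The $w_{13}$ and $w_{24}$ directions give $v_{31}^2$ contributions to $\widetilde\theta_2$ and $\widetilde\theta_1$, but each also moves other coordinates. To decouple them, I will restrict to the submatrix of columns corresponding to $(w_{13},\,w_{24},\,\Wb_{55})$, plus possibly $\wb_{25}$. I will show this submatrix is block lower-triangular after a change of basis: subtracting the effect of $w_{24}$ on $(\widetilde{\bTheta}_4)_{1t}$ can be compensated by $(\Wb_{55})_{1t}$. Since $JJ^{\mathsf T}\succeq J_SJ_S^{\mathsf T}$ for any column subset $S$, it then suffices to lower-bound the smallest singular value of this triangular-structured block. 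That reduces to lower bounds on $|v_{31}|$, $|v_{33}|$, and the size of the off-diagonal coupling $\partial(\widetilde{\bTheta}_4)_{1t}/\partial w_{24}=\widetilde\theta_1^{-1}v_{31}-\widetilde\theta_1^{-2}v_{31}^2(w_{24}+\wb_{25}\eb_t)$. The latter is bounded once $|\widetilde\theta_1|$ is bounded below and the parameters are bounded. The $d$-weights are absorbed by noting that the scalar blocks of $\Ib_d$-type carry a $d$ factor in the opposite direction and only help.

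\textbf{Step 2: parameter control along the trajectory.} This is the main obstacle. I need lower bounds $|v_{31}(\tau)|,|v_{33}(\tau)|\gtrsim 1$ and $|\widetilde\theta_1|\gtrsim 1$, together with upper bounds on all parameters growing at most like $(d^{o(1)}+\frac{d/n}{T}\tau)^{1/2}$. The decay factor in the statement comes from the upper bounds through the off-diagonal coupling terms, or equivalently through $1/(\text{norm of parameters})^2$.

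At $\tau_1$, the Stage 1 analysis (Lemma~\ref{lem:stage1-jacobian} and its proof) provides parameters of size $d^{o(1)}$, with $\widetilde\theta_1\approx\eta_0$, $\widetilde\theta_2\approx-\eta_0$ and $v_{31},v_{33}$ bounded away from zero. For $\tau\ge\tau_1$ I will use three facts:
\begin{itemize}[leftmargin=*,nosep]
\item the loss is nonincreasing, so $\cL(\tau)\le\cL(\tau_1)$;
\item the loss is coercive in the auxiliary parameters, which keeps $\widetilde{\btheta}'$ in a compact set;
\item gradient flow satisfies $\|\bar{\btheta}(\tau)-\bar{\btheta}(\tau_1)\|\le\int\|\nabla\cL\|\le\sqrt{(\tau-\tau_1)(\cL(\tau_1)-\cL^*)}$ by Cauchy--Schwarz.
\end{itemize}
The third fact bounds parameter growth by $\sqrt{\tau}$ times a factor of order $(d/n)/T$, giving exactly the denominator $d^{o(1)}+\frac{d/n}{T}\tau$.

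For the lower bounds on $v_{31}$ and $v_{33}$, I will look for conserved quantities of the gradient flow coming from the rescaling symmetries of the products, in the style of balancedness. An example is $d\,v_{31}^2-d\,w_{13}^2-(w_{24}^2+\|\wb_{25}\|^2)$ up to corrections, together with an analogous identity pairing $v_{33}$ with $(w_{44},\wb_{45},\wb_{54},\Wb_{55})$. These hold only approximately because of the shared $v_{31}$ in $(\widetilde{\bTheta}_4)_{1t}$. I will then combine them with the compactness of $\widetilde{\btheta}'$: if $v_{31}\to0$ while $\widetilde\theta_2=v_{31}w_{13}$ stays bounded away from zero, then $w_{13}$ must blow up, which the balancedness identity forbids. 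Carrying out this balancedness argument carefully, given the coupling through $\widetilde\theta_1^{-1}$, is where I expect most of the work.
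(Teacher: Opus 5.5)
\textbf{Gap in Step 1.} The problem is your choice of $w_{24}$ (or $\wb_{25}$) as the direction that moves $\widetilde\theta_1$. Write $a_t:=w_{24}+\wb_{25}\eb_t$. Then $(\widetilde{\bTheta}_4)_{1t}=a_t/a_0+v_{33}(\cdots)$, so your coupling is $\kappa_t=\frac{v_{31}}{\widetilde\theta_1}(1-a_t/a_0)$. After cancelling it with $(\Wb_{55})_{1t}$, the raw cost of a unit change in $\widetilde\theta_1$ is of order $v_{31}^{-2}+\widetilde\theta_1^{-2}v_{33}^{-2}\sum_t(1-a_t/a_0)^2$.

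You treat this as harmless because ``the parameters are bounded'' and grow at most like $G^{1/2}$, with $G:=d^{o(1)}+\frac{d/n}{T}\tau$. That holds only for the $\Ib_d$-type scalars $v_{31},w_{13},v_{33}$: their displacement bound $\int\|\dot{\bar\btheta}\|^2\le\cL(\tau_1)\lesssim d/T$ gets divided by $\|\Ib_d\|_F^2=d$. For the unit-weight coordinates you only have $w_{24}^2+\|\wb_{25}\|^2\le d^{o(1)}+Cd\tau/T$. Alternatively, the balancedness identity gives $w_{24}^2+\|\wb_{25}\|^2\le dv_{31}^2+\sigma^2$; this identity is exact, not approximate, because each $v_{31}a_t$ is rescaling-invariant and hence so is the raw recurrence. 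Either way, $\sum_t(a_t/a_0)^2=(v_{31}/\widetilde\theta_1)^2\sum_t a_t^2$ can only be bounded by roughly $Tdv_{31}^4\lesssim TdG^2$. So your lift gives at best $c/(TdG^2)$ in the $\widetilde\theta_1$ direction. That is quadratic in $G$, not the linear $c/(dG)$ the lemma asserts. Moreover, a $G^{-2}$ rate is integrable in $\tau$, so the downstream PL argument would not even yield convergence.

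\textbf{The missing observation.} $\widetilde{\bTheta}_4$ does not depend on $v_{31}$ at all, since $v_{31}/\widetilde\theta_1=1/a_0$ (and $w_{13}$ does not enter it either). The paper therefore uses the lift
\[
\delta v_{31}=\tfrac{v_{31}}{\widetilde\theta_1}\delta\widetilde\theta_1,\qquad \delta w_{13}=\bigl(\delta\widetilde\theta_2-\tfrac{\widetilde\theta_2}{\widetilde\theta_1}\delta\widetilde\theta_1\bigr)/v_{31},\qquad \delta(\Wb_{55})_{\ell t}=\delta(\widetilde{\bTheta}_4)_{\ell t}/v_{33},
\]
with all other variations zero. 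This lift has no coupling into $\widetilde{\bTheta}_4$. Its raw norm is $\lesssim d(v_{31}^2+v_{31}^{-2})(|\delta\widetilde\theta_1|^2+|\delta\widetilde\theta_2|^2)+v_{33}^{-2}\|\delta\widetilde{\bTheta}_4\|_F^2$. The factor $d$ from $\|\Ib_d\|_F^2$ is exactly the lemma's $d^{-1}$ weight, so your remark that the $\Ib_d$-type blocks ``only help'' is backwards: they are the expensive directions.

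The only inputs needed are then:
\begin{itemize}
\item $c\le\widetilde\theta_1\le C$ and $|\widetilde\theta_2|\le C$;
\item $|v_{31}|\ge c(d/n)^{1/4}$, obtained from $\widetilde\theta_2^2\le v_{31}^2(v_{31}^2+\sigma^2/d)$ together with $-\widetilde\theta_2>1/150$ (this is your balancedness idea, and it is correct);
\item $v_{33}\ge v_{33}^{\rm init}$;
\item $v_{31}^2+v_{33}^2\le d^{o(1)}+C\frac{d/n}{T}\tau$, from the energy identity restricted to the $\Ib_d$-type scalars.
\end{itemize}
An equivalent fix is to rescale $(w_{24},\wb_{25})$ jointly. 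This leaves every $a_t/a_0$ unchanged and costs $(w_{24}^2+\|\wb_{25}\|^2)/\widetilde\theta_1^2\le(dv_{31}^2+\sigma^2)/\widetilde\theta_1^2$ per unit $\delta\widetilde\theta_1$. Moving $w_{24}$ alone does not work.
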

Combining Lemmas~\ref{lem:stage2-pl-inequality} and~\ref{lem:stage2-jacobian} gives
\begin{equation*}
 -\frac{\dd}{\dd\tau}(\cL(\tau)-\cL^*)
 =\|\nabla_{\bar{\btheta}(\tau)}\cL(\tau)\|^2
 \ge\frac{c_3c_4(d/n)}{T\left(d^{o(1)}+\frac{d/n}{T}\tau\right)}
 (\cL(\tau)-\cL^*).
\end{equation*}
Solving this inequality gives, for some constant \(\alpha>0\) and every
\(0<\epsilon<1\), \(\cL(\tau)-\cL^*\le\epsilon\) after time
\(O\!\left(d^{o(1)}(\epsilon/d)^{-1/\alpha}\right)\). This proves the second conclusion of
Theorem~\ref{thm:main-result}.

We now explain the proof of the last conclusion of
Theorem~\ref{thm:main-result} and discuss the comparison with the Heavy-Ball method. 
By Lemma~\ref{lem:polynomial-output}, the iterates given by the trained transformer can be expressed by a matrix polynomial of $\Sb$. This motivates us to give the following definition.
Define
\[
 p_t^{\star}:=\operatorname*{arg\,min}_{\substack{p(0)=0\\\deg p\le t}}
 \E\norm{p(\Sb)\wb^\star-\wb^\star}^2, \qquad 0\le t<T.
\]
In fact, recent works have pointed out that \(p_t^{\star}\) is related to the Heavy-Ball method under the MP law
\citep{pedregosa2020spectral,scieur2020polyak}. However, these results mostly focus on the asymptotic case when \(n,d \to \infty\). Here we give the following lemma under the finite \(n,d\) setting.
\begin{lemma}\label{lem:model-optimal-polynomial-comparison}
For every \(0\le t<T\),
\[
 p_t^{\star}(\Sb)\wb^\star=\wb_t^{\mathrm{HB}}.
\]
Moreover, there exists a constant $c_5 >0$ such that, uniformly for \(T/2\le t<T\),
\[
 \E\norm{\wb_t^{\mathrm{HB}}-\wb^\star}^2
 \ge c_5d\left(\frac dn\right)^t.
\]
\end{lemma}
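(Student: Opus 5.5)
The plan is to reduce both claims to a statement about a single deterministic measure, the expected spectral distribution of $\Sb$. Since $y_i=\xb_i^\top\wstar$, we have $\Xb\yb^\top=\Sb\wstar$. Induction on the recursion \eqref{eq:learned-hb-update} then shows $\wb_t^{\rm HB}=p^{\rm HB}_t(\Sb)\wstar$, where the residual $r_t:=1-p^{\rm HB}_t$ satisfies
\[
r_0=1,\qquad r_1=1-\alpha_0\lambda,\qquad r_{t+1}=(1-\alpha_t\lambda)r_t+\beta_t(r_t-r_{t-1}).
\]
Because $\wstar\sim N(0,\Ib_d)$ is independent of $\Sb$, for any polynomial $p$ with $p(0)=0$ we get
\[
\E\|p(\Sb)\wstar-\wstar\|^2=\E\operatorname{tr}\bigl((\Ib-p(\Sb))^2\bigr)=d\int(1-p(\lambda))^2\,\dd\nu(\lambda),
\]
where $\nu:=\E\bigl[\tfrac1d\sum_i\delta_{\lambda_i(\Sb)}\bigr]$. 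The measure $\nu$ has a density on $(0,\infty)$ and hence infinite support. So $p_t^\star$ is the unique minimizer of $\int r^2\dd\nu$ over residual polynomials with $\deg r\le t$ and $r(0)=1$.

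\textbf{Step 1 (identification with Heavy-Ball).} I would invoke the classical kernel-polynomial fact. Let $\{P_k\}$ be the orthonormal polynomials of $\nu$ and $K_t(x,y)=\sum_{k\le t}P_k(x)P_k(y)$. The optimal residual is $r_t^\star(\lambda)=K_t(\lambda,0)/K_t(0,0)$, with optimal value $1/K_t(0,0)$. Each $r_t^\star$ is orthogonal to all polynomials of degree $<t$ with respect to the positive measure $\lambda\,\dd\nu$ (positive because $\Sb\succeq0$). Hence $\{r_t^\star\}$ is, up to normalization at $0$, the orthogonal family of $\lambda\dd\nu$. It therefore satisfies a three-term recurrence
\[
r^\star_{t+1}=(a_t\lambda+b_t)r^\star_t+c_tr^\star_{t-1}.
\]
Evaluating at $\lambda=0$ forces $b_t+c_t=1$, so the recurrence is exactly the Heavy-Ball form with $\alpha_t=-a_t$ and $\beta_t=-c_t$ (and $\beta_0=0$). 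It follows that there exist coefficients for which every term of $\frac1{2T}\sum_t\E\|\wb^{\rm HB}_t-\wstar\|^2$ attains its individual minimum. Since every Heavy-Ball residual is a feasible residual polynomial, any joint minimizer $\{(\alpha_t^\star,\beta_t^\star)\}$ must attain each termwise minimum. Uniqueness of the minimizer of each term then gives $p^{\rm HB}_t=p^\star_t$, that is, $p_t^\star(\Sb)\wstar=\wb_t^{\rm HB}$.

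\textbf{Step 2 (lower bound).} By Step 1, $\E\|\wb_t^{\rm HB}-\wstar\|^2=d/K_t^\nu(0,0)$, so it suffices to show $K_t^\nu(0,0)\le C\gamma^{-t}$ with $\gamma=d/n\le0.1$. For the Marchenko--Pastur law $\mu_\gamma$ the orthonormal polynomials are explicit (free Poisson / shifted Chebyshev type). One computes $P_k(0)^2\asymp\gamma^{-k}$, giving $K_t^{\mu_\gamma}(0,0)\le C\gamma^{-t}/(1-\gamma)$. To transfer this to $\nu$ I would use the Hankel representation $K_t(0,0)=e_0^\top H_t^{-1}e_0$, where $H_t=(m_{i+j})_{i,j\le t}$ is the moment matrix in the monomial basis. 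Standard moment computations for Wishart matrices give $|m_k(\nu)-m_k(\mu_\gamma)|\le C^k k^{C}/d$ for $k\le 2T$.

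\textbf{Main obstacle.} The hard part is the perturbation step, because $H_t(\mu_\gamma)$ is ill-conditioned, with $\|H_t^{-1}\|$ growing like $e^{O(t)}\gamma^{-O(t)}$. I would instead work in the orthonormal basis of $\mu_\gamma$. There the Gram matrix of $\nu$ equals $\Ib+E$, and each entry of $E$ is a combination of moment errors with coefficients of size $e^{O(t^2)}$, so $\|E\|\le e^{O(T^2)}/d=o(1)$ because $T=\Theta((\log d)^{1/4})$. This is precisely where the assumed scaling of $T$ enters. It yields $K_t^\nu(0,0)\le(1+o(1))K_t^{\mu_\gamma}(0,0)$, and hence $\E\|\wb_t^{\rm HB}-\wstar\|^2\ge c_5d\gamma^t$ uniformly for $T/2\le t<T$.
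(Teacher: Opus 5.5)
Your proposal is correct and matches the paper's proof in substance. The paper likewise obtains the Heavy-Ball form from orthogonality of the optimal residuals under $\lambda\,\mathrm{d}\nu_{d,n}$ plus evaluation at $0$, and gets joint optimality from termwise minimality. It then gets the lower bound from the norm equivalence $\|f\|_{d,n}^2\ge\frac12\|f\|_{\mathrm{MP}}^2$, which is exactly your $\|E\|=o(1)$ in the MP-orthonormal basis, combined with the explicit MP value $\gamma^t/\sum_{r\le t}\gamma^r$, which is your $1/K_t^{\mu_\gamma}(0,0)$. (Your $e^{O(T^2)}$ coefficient bound is cruder than needed, since the MP orthonormal polynomials have coefficient sums at most $4^k\gamma^{-k/2}$, but it suffices under $T=\Theta((\log d)^{1/4})$.)
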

Now we relate the algorithm learned by the transformer model to the Heavy-Ball method by comparing \(p_{t,\bar{\btheta}(\tau)}\) with \(p_t^{\star}\).
Based on Lemma~\ref{lem:stage2-pl-inequality}, we can further show that
\(\widetilde{\btheta}'(\tau)\to\widetilde{\btheta}'^{\star}\) and \(|\widetilde\theta_2^{\star}|<1\). By the definition of \(p_{t,\bar{\btheta}(\tau)}\) in
Lemma~\ref{lem:stage2-lemma1}, we see that \(p_{t,\bar{\btheta}(\tau)}\) can also be equivalently defined by \(\widetilde{\btheta}'(\tau)\). Therefore, we introduce a slight abuse of notation and denote by  \(p_{t,\widetilde{\btheta}'^{\star}}\) the training limit of \(p_{t,\bar{\btheta}(\tau)}\).
By  Lemma~\ref{lem:polynomial-output}, \(p_{t,\widetilde{\btheta}'^{\star}}\) further determines a unique \(\wb_t^{\star}\)
 by
\[
 \wb_t^{\star}:=p_{t,\widetilde{\btheta}'^{\star}}(\Sb)\wb^\star,\qquad 0\le t\le T.
\]
It follows from
Lemma~\ref{lem:stage2-lemma1} that the highest-degree coefficient of
\(p_{t+1,\widetilde{\btheta}'^{\star}}\) is
\(\widetilde\theta_1^{\star}(\widetilde\theta_2^{\star})^t\).
For fixed \(\widetilde\theta_1^{\star}\) and \(\widetilde\theta_2^{\star}\),
the coefficients of degrees \(1,\ldots,t\) can be freely adjusted
through \(\widetilde{\bTheta}_4\). Hence
\(p_{t+1,\widetilde{\btheta}'^{\star}}\) is the optimal degree-\((t+1)\)
polynomial with a certain highest-degree coefficient. Since
\(|\widetilde\theta_2^{\star}|<1\), the highest-degree coefficient decays
exponentially, suggesting that for late \(t\),
\(p_{t+1,\widetilde{\btheta}'^{\star}}\) approaches \(p_t^\star\). Motivated by this, we have the following lemma.
\begin{lemma}\label{lem:optimal-polynomial-hb-comparison}
There exists a constant $c_6 >0$ such that, uniformly for \(T/2\le t<T\),
\begin{equation*}
\begin{aligned}
 \E\norm{\wb_{t+1}^{\star}-\wb_t^{\mathrm{HB}}}^2
 &=\E\norm{[p_{t+1,\widetilde{\btheta}'^{\star}}(\Sb)-p_t^{\star}(\Sb)]\wb^\star}^2
 \le c_6 d\exp\!\left[-2(\log d)^{1/4}\right] \left(\frac dn\right)^t
 .
\end{aligned}
\end{equation*}

\end{lemma}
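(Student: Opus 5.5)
The plan is to write the error in the spectral basis of \(\Sb\) and compare the two polynomials through the quadratic form that defines them. Since \(\wb^\star\sim N(0,\Ib_d)\) is independent of \(\Sb\), for any polynomial \(q\) we have \(\E\norm{q(\Sb)\wb^\star}^2=\E\operatorname{tr}\,q(\Sb)^2\). This equals \(d\) times the integral of \(q^2\) against the expected spectral measure \(\mu_{\Sb}\) of \(\Sb\). So both \(p_t^\star\) and \(p_{t+1,\widetilde{\btheta}'^{\star}}\) are least-squares objects in \(L^2(\mu_{\Sb})\) (with the weight \(d\)), and the orthogonal polynomials \(\{\pi_k\}\) of \(\mu_{\Sb}\) are the natural basis. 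By Lemma~\ref{lem:model-optimal-polynomial-comparison}, \(\wb_t^{\mathrm{HB}}=p_t^\star(\Sb)\wb^\star\), so the claimed equality is immediate. The task then reduces to bounding \(\int (p_{t+1,\widetilde{\btheta}'^{\star}}-p_t^\star)^2\,d\mu_{\Sb}\).

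The first step is to characterize \(p_{t+1,\widetilde{\btheta}'^{\star}}\). By Lemma~\ref{lem:stage2-lemma1}, its leading coefficient is \(\widetilde\theta_1^\star(\widetilde\theta_2^\star)^t\), and the lower coefficients (degrees \(1,\ldots,t\)) can be adjusted freely through \(\widetilde{\bTheta}_4\). Since \(\widetilde{\btheta}'^{\star}\) minimizes \(\cL\), a first-order/stationarity argument should show that \(p_{t+1,\widetilde{\btheta}'^{\star}}\) minimizes \(\int(1-p)^2d\mu_{\Sb}\) over \(p(0)=0\), \(\deg p\le t+1\), with the leading coefficient fixed at \(\kappa_t:=\widetilde\theta_1^\star(\widetilde\theta_2^\star)^t\). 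Care is needed because changing \(\widetilde{\bTheta}_4\) also affects the later polynomials. I would instead argue directly at the level of minimizers, using that the map from \(\widetilde{\bTheta}_4\) to the lower coefficients is triangular and surjective.

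Next I would use the Pythagorean identity. Write \(1-p\) in the space of polynomials with value \(1\) at zero. The constrained minimizer then decomposes as \(p_{t+1,\widetilde{\btheta}'^{\star}}-p_t^\star=\kappa_t\,r_t\), where \(r_t\) is the minimum-norm polynomial in \(L^2(\mu_{\Sb})\) that has leading coefficient \(1\) at degree \(t+1\), vanishes at \(0\), and is orthogonal to the degree-\(\le t\) zero-constant-term polynomials. This yields
\[
\E\norm{[p_{t+1,\widetilde{\btheta}'^{\star}}(\Sb)-p_t^\star(\Sb)]\wb^\star}^2\le d\,\kappa_t^2\int \hat\pi_{t+1}^2\,d\mu_{\Sb},
\]
where \(\hat\pi_{t+1}\) is the monic degree-\((t+1)\) polynomial vanishing at zero that is closest to the orthogonal complement. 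Its norm can be bounded by the norm of the monic orthogonal polynomial of the measure \(x\,d\mu_{\Sb}\). For Marchenko–Pastur-type measures, these norms behave like \(((1+\sqrt{d/n})^2/4\cdot \ldots)\), i.e.\ geometrically as \((d/n)^{t/2}\) times constants. A comparison with the finite-\(d\) expected spectral measure is needed, and I would obtain it from moment concentration up to degree \(2T=O((\log d)^{1/4})\).

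Finally, I would combine this with \(|\widetilde\theta_2^\star|<1\). The factor \(\kappa_t^2\le C|\widetilde\theta_2^\star|^{2t}\) decays exponentially in \(t\ge T/2\). Since \(T=\Theta((\log d)^{1/4})\), this gives \(|\widetilde\theta_2^\star|^{2t}\le \exp[-2(\log d)^{1/4}]\), after adjusting constants (the constant in \(T\) absorbs \(\log(1/|\widetilde\theta_2^\star|)\)). The monic-polynomial norm supplies the \((d/n)^t\) factor, possibly after trading a constant-power factor. I expect the main obstacle to be the quantitative bound on \(\int\hat\pi_{t+1}^2d\mu_{\Sb}\) at finite \(n,d\), uniformly in \(t\le T\). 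It requires showing that the finite-sample moments match the MP moments closely enough that the Gram/Hankel determinants, whose ratios give these norms, are perturbed only by a factor \(1+o(1)\). I would attempt this via the standard bound \(|\E\operatorname{tr}\Sb^k/d-m_k^{\rm MP}|\le C^k k^{O(1)}/d\) with \(k\le 2T\), together with the lower bound on the Hankel determinants implied by Lemma~\ref{lem:model-optimal-polynomial-comparison}.
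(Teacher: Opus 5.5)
Your outline matches the paper's proof (Lemma~\ref{lem:finite-hb-defect} with Corollary~\ref{cor:growing-horizon-hb}). At the optimum, $p_{t+1,\widetilde{\btheta}'^{\star}}-1$ is the minimal-norm residual with leading coefficient $\kappa_t=\widetilde\theta_1^\star(\widetilde\theta_2^\star)^t$; your triangular-reparametrization argument is the paper's $\widetilde\theta_5$ decomposition. $p_t^\star-1$ solves the same problem with leading coefficient $0$, and the difference is $\kappa_t r_t$, where $r_t$ has zero constant term, is monic of degree $t+1$, and is orthogonal to $\lambda,\dots,\lambda^t$.

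The gap is in your last step. Knowing only $|\widetilde\theta_2^\star|<1$ does not suffice, because $\widetilde\theta_2^\star$ depends on $d$. For example, $-\widetilde\theta_2^\star=1-1/\log d$ is consistent with it, yet gives $|\widetilde\theta_2^\star|^{2t}\to1$ for all $t<T$. Your remark that ``the constant in $T$ absorbs $\log(1/|\widetilde\theta_2^\star|)$'' presupposes a $d$-independent lower bound on $\log(1/|\widetilde\theta_2^\star|)$, and that bound is the real content of the lemma.

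The paper obtains it as Lemma~\ref{lem:finite-function-location}: $-\widetilde\theta_2^\star<1-(d/n)^2/128$. This requires an analysis of the reduced objective $B_\nu^2/C_\nu$:
\begin{itemize}
\item uniqueness of its positive critical point, via coefficient-ratio unimodality and Descartes' rule;
\item the pairing identity giving $(B_\mu^2/C_\mu)'\le-c(d/n)^3$ at $1-(d/n)^2/128$;
\item transfer of that derivative to the finite-Wishart law with $d^{-1+o(1)}$ error (Lemmas~\ref{lem:mp-function-unique} and~\ref{lem:finite-objective-comparison}).
\end{itemize}
You also need $\widetilde\theta_1^\star=O(1)$; the paper uses $\widetilde\theta_1^\star=B_\nu/C_\nu\le 8$. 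Finally, you need the extra hypothesis \eqref{eq:common-tail-margin}, $(d/n)^2T\ge1024(\log d)^{1/4}$, which Assumption~\ref{ass:scaling} does not imply. It turns $(1-(d/n)^2/128)^{2t}$ for $t\ge T/2$ into $\exp[-8(\log d)^{1/4}]$.

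Conversely, the step you single out as the main obstacle is easy and needs no Hankel-determinant perturbation. Since $r_t$ minimizes $\|\cdot\|_{d,n}$ over monic zero-constant polynomials of degree $t+1$, use the competitor $\lambda\pi_t$, with $\pi_t$ the MP monic orthogonal polynomial of Lemma~\ref{lem:mp-family}. Then $\|r_t\|_{d,n}^2\le\|\lambda\pi_t\|_{d,n}^2\le\frac32\|\lambda\pi_t\|_{\mathrm{MP}}^2\le C(d/n)^t$, by the norm equivalence of Lemma~\ref{lem:finite-transfer} and the bounded MP support. The paper uses the two-sided version of this estimate, Lemma~\ref{lem:finite-leading-gap}.

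Also, the relevant weight is $\lambda^2\,d\mu$ rather than $\lambda\,d\mu$: $r_t=\lambda q_t$ with $q_t$ monic orthogonal for $\lambda^2\,d\mu$. Comparing the two families through a pointwise bound on $\lambda$ fails for the finite-Wishart measure, whose support is unbounded.
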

Note that \(\wb_t(\tau)\to \wb_t^{\star}\) as \(\cL(\tau)\to\cL^*\).
By choosing a sufficiently large \(\tau\), we can make \(\wb_t(\tau)\)
sufficiently close to \(\wb_t^{\star}\) so that, uniformly for \(T/2\le t<T\),
\[
\E\norm{\wb_{t+1}(\tau)-\wb_{t+1}^{\star}}^2
\le \frac{c_5\epsilon}{4}d\left(\frac dn\right)^T.
\]
Combining Lemmas~\ref{lem:model-optimal-polynomial-comparison} and
\ref{lem:optimal-polynomial-hb-comparison} with the triangle inequality gives,
uniformly for \(T/2\le t<T\),
\[
 \frac{\E\norm{\wb_{t+1}(\tau)-\wb_t^{\mathrm{HB}}}^2}
      {\E\norm{\wb_t^{\mathrm{HB}}-\wb^\star}^2}
 \le \epsilon/2+\frac{2c_6}{c_5} \exp\!\left[-2(\log d)^{1/4}\right]
 \le\epsilon
\]
for \(n,d=\Omega(\exp(\log^4(1/\epsilon)))\) and \(0<\epsilon<1\), the preceding estimates prove
Theorem~\ref{thm:main-result}.

\section{Experiment}
\label{sec:experiment}

In this section, we introduce our experimental setup to empirically examine our theoretical results.
In all the experiments, we test two settings with 
\((d,n)=(16,32)\) and
\((12,24)\) respectively. We set the maximum CoT steps as \(T=24\), and let the learning rate of the GD teacher in Stage 1 be \(\eta_0=0.5\). 
The entries of \(\Vb\) and \(\Wb\) are initialized independently from \(N(0,10^{-4})\). We use float64 online Adam(W) with zero weight decay, learning rate \( 10^{-6}\), and batch size \(6144\) to train the models. The in-context weight prediction data are generated exactly following the definitions in Subsection~\ref{subsec:setting1}.

Several aspects of our experiments differ from the theoretical setting. First of all, we consider Gaussian random initialization for the weight matrices $\Vb$ and $\Wb$, without freezing any blocks or entries. Moreover, in Stage 1, we remove teacher forcing and define the loss using iterates generated autoregressively by the transformer itself. In addition, we train the model using online stochastic Adam rather than gradient descent/flow. Finally, while our theory assumes \(d/n \le0.1\), this condition is not satisfied in our experimental setup. We remark that strictly following the theoretical setting yields similar experimental results. The above modifications are included mainly to empirically show that our theoretical findings can capture more general training behavior in practice.

We evaluate several methods for in-context weight prediction. Besides transformers trained with the two-stage procedure, we consider transformers trained using only Stage 1 or only Stage 2. We also examine a transformer variant without positional encodings. We compare these models with a well-tuned time-varying Heavy-Ball method and the gradient-descent teacher used in Stage 1.

Figure~\ref{fig:typical-spectral} presents experiment results in two settings, \(d=16,n=32\) and \(d=12,n=24\). In both settings, the transformer trained with the two-stage procedure closely matches the Heavy-Ball method, consistent with Theorem~\ref{thm:main-result}. In contrast, both (i) the transformer without positional encodings, trained under the same two-stage procedure, and (ii) the transformer trained only in the first stage with gradient-descent supervision perform similarly to gradient descent. This further demonstrates the necessity of the two-stage training procedure together with positional encodings for the CoT steps. The second-stage-only transformer is not stable and therefore is not included in Figure~\ref{fig:typical-spectral}.


\begingroup
\setlength{\intextsep}{4pt}
\begin{figure}[!ht]
 \centering
 \begin{minipage}{0.98\textwidth}
   \centering
   \begin{minipage}[c]{0.035\textwidth}
     \centering
     \hspace*{0.012\textwidth}\rotatebox[origin=c]{90}{\(\frac{1}{2}\|\wb_t-\wb^\star\|^2\)}
   \end{minipage}%
   \hspace{0.004\textwidth}%
   \begin{minipage}[c]{0.925\textwidth}
     \centering
     {\small\makebox[\linewidth][c]{%
       \textcolor[HTML]{D87519}{\(-\!{\scriptstyle\blacksquare}\!-\)}\,Heavy-Ball\hspace{0.24em}%
       \textcolor[HTML]{B83B36}{\(-\!\times\!-\)}\,GD\hspace{0.24em}%
       \textcolor[HTML]{168A68}{\(-\!\bullet\!-\)}\,Two-stage TF\hspace{0.24em}%
       \textcolor[HTML]{7253A5}{\(-\!\star\!-\)}\,First-stage-only TF\hspace{0.24em}%
       \textcolor[HTML]{3569A8}{\(-\!\blacktriangle\!-\)}\,Two-stage TF no PE}}\\
     \includegraphics[width=\linewidth]{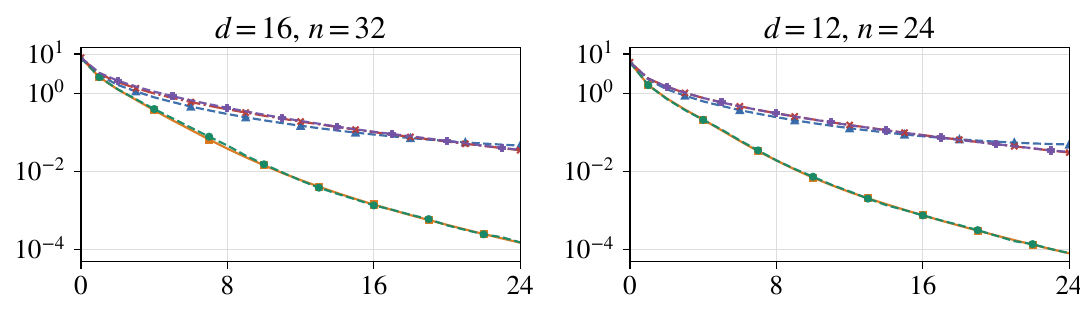}\\[-0.55em]
     \makebox[\linewidth][l]{\hspace*{0.03\linewidth}%
       \makebox[0.5\linewidth]{\(\mathrm{Iteration}\ t\)}%
       \makebox[0.5\linewidth]{\(\mathrm{Iteration}\ t\)}}
   \end{minipage}
 \end{minipage}
 \vspace{-0.4em}
 \caption{In-context weight prediction performance of several models and baseline algorithms. }
 \label{fig:typical-spectral}
\end{figure}
\endgroup

Figure~\ref{fig:operator-heatmaps and one-stage failure} further shows example heatmaps of
the weight matrices learned via two-stage training, along with evidence that training a randomly initialized transformer directly in the second stage is unstable. The learned parameter structure is consistent with Lemma~\ref{lem:invariant-parameters}:
the dominant blocks are
\(\Wb_{13},\Vb_{31},\Vb_{33},w_{24},\Wb_{55}\), while the remaining entries
are small. \(\Vb_{31},\Wb_{13},\Vb_{33}\) are approximately
diagonal, whereas \(\Wb_{55}\) has clear nonzero off-diagonal entries. By
contrast, direct second-stage training from random initialization, tested for
\(d\in\{12,16\}\) with \(n=32\), consistently leads to nonfinite
gradients, with smaller learning rates only delaying failure. 



\begin{figure}[!ht]
 \centering
 \vspace{-1.0em}
 \includegraphics[width=\textwidth]{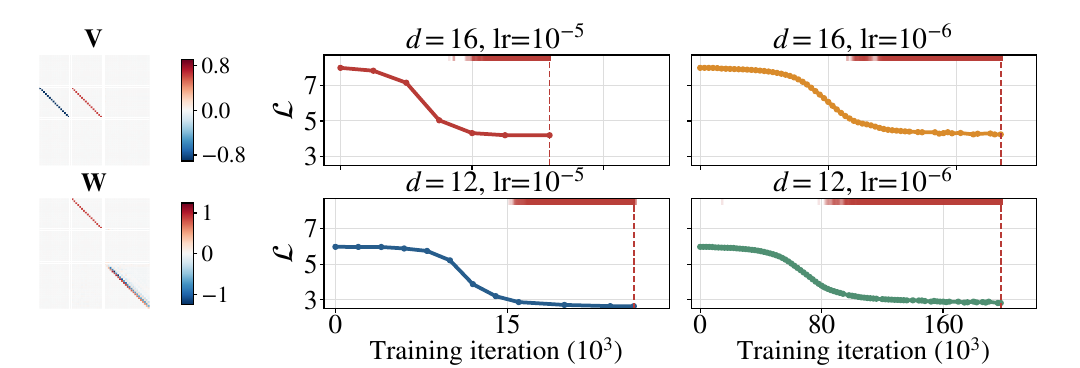}
 \vspace{-1.5em}
 \caption{Examples of the  heatmaps of learned weight matrices  after the two-stage training for \(d=16,n=32\) (left), together with the progress of second-stage-only training for \(n=32\) under four different combinations of dimension \(d\) and learning rate (right). For second-stage-only training, nonfinite gradient values are marked in red. When nonfinite gradients occur, we do not stop training immediately; instead, we allow the model to retry using a newly generated independent in-context dataset. Training is terminated only after nonfinite gradients are observed for 20 consecutive retries. Dashed lines indicate termination.}
 \label{fig:operator-heatmaps and one-stage failure}
 \vspace{-0.5em}
\end{figure}


\section{Conclusion}
In this paper, we study in-context weight prediction with transformers using CoT reasoning. We focus on whether transformers with CoT can go beyond mimicking a teacher algorithm and discover new algorithms. We show that a two-stage training paradigm enables such discovery: the first stage supervises the CoT trajectory with a gradient-descent teacher, while the second stage trains the model using only the ground-truth regression weights. The learned model discovers the Heavy-Ball method, achieving acceleration over the gradient-descent teacher used in the first stage.

Our current analysis is limited to linear self-attention for in-context linear regression. Extending the findings to more practical softmax attention and more general learning tasks is an important future work direction. Moreover, our current setup considers training with gradient flow. Extending the training methods to other training algorithms, including stochastic gradient descent, AdamW, and reinforcement learning-based training techniques, is also an interesting future work direction.
It is also interesting to extend our findings to other types of positional encodings.

\appendix

\section{Related Works}
\label{app:related}

\noindent\textbf{Expressive power and generalization of transformers as algorithm executors.}
A line of recent works studies transformers as algorithm executors, aiming to characterize what algorithms can be implicitly implemented by transformers through in-context learning. \citet{li2023transformersalgorithms} provide a theoretical framework for understanding transformers as algorithm executors and establish generalization and stability properties for in-context learning. \citet{chen2024varyingdepth} investigate how transformer depth affects sequence learning capabilities, showing that while shallow transformers can achieve memorization, more sophisticated reasoning and generalization behaviors require deeper architectures. \citet{cao2025mle} study transformer expressivity in Bayesian network sequence modeling and show that transformers can implement in-context maximum likelihood estimation and autoregressive sampling. \citet{shi2025hierarchicalvit} characterize the approximation and estimation capabilities of vision transformers for hierarchical compositional models.

Recent works further explore the connection between transformer computation and classical statistical algorithms. \citet{mousavi2026outperform} study the statistical advantages of transformers over feedforward and recurrent architectures by analyzing their sample complexity under dynamic sparse retrieval settings. \citet{li2026contexthijacking} analyze the robustness of transformers in the presence of context hijacking for linear classification tasks. \citet{li2026unlabeled} study when unlabeled data can provably improve in-context learning performance and show that deeper or looped transformers can implicitly construct iterative estimators related to classical semi-supervised learning algorithms. \citet{wu2026moirai} investigate transformer-based time-series forecasting through the case study of Moirai and provide theoretical insights into how transformers perform sequential prediction tasks.

\noindent\textbf{Training guarantees of transformers.}
A complementary line of work studies whether gradient-based training can enable transformers to recover nontrivial algorithmic structures and computational mechanisms.
\citet{jelassi2022vision} show that vision transformers can provably learn spatial structures under appropriate training settings.
\citet{nichani2024causal} investigate how gradient descent enables transformers to recover latent causal structures, while \citet{wang2024sparse} demonstrate that transformers can learn sparse token selection mechanisms that are difficult for fully connected networks to recover.
\citet{chen2024induction} characterize the emergence and training dynamics of induction heads, revealing how attention mechanisms learn retrieval-like behaviors from data.
\citet{li2024one} prove that a one-layer transformer can learn one-nearest-neighbor prediction in context, and \citet{wang2025learning} study the learning of compositional functions with transformers under easy-to-hard data curricula.

Recent works further extend these training guarantees to broader algorithmic settings.
\citet{chen2025provably} establish provable learning guarantees for multi-head attention layers.
\citet{gozeten2025test} show that test-time training can improve transformers as in-context learners.
\citet{shi2026towards} analyze how transformers learn random walk structures, while \citet{zhang2026teacher} study teacher-student settings and prove that gradient-trained transformers can learn classes of teacher models.
\citet{wu2026looped} and \citet{zhang2026normalized} further investigate looped transformers and normalized gradient descent, providing theoretical guarantees for iterative optimization procedures implemented by transformers.

\noindent\textbf{Studies on chain-of-thought reasoning.}
Recent works also study chain-of-thought (CoT) reasoning as a mechanism that enables transformers to perform multi-step computation.
\citet{wei2022chain} introduced the CoT prompting method to enhance the reasoning capability of language models by generating intermediate reasoning steps.
Before the formalization of CoT, \citet{nye2021scratchpads} demonstrated that allowing models to produce intermediate computations through scratchpads can substantially improve multi-step problem solving.
Subsequent works proposed various extensions of CoT and scratchpad prompting to improve reasoning efficiency and reliability~\citep{wang2022self,yao2023tree}.

Recent studies have further investigated the mechanisms underlying CoT from empirical, interpretability, and theoretical perspectives.
\citet{wang2023towards,saparov2022language} empirically analyzed the reasoning behaviors induced by CoT, while \citet{wu2023analyzing,hou2023towards} examined how intermediate reasoning emerges from transformer computation through mechanistic analysis.
On the theoretical side, \citet{merrill2024expressive,feng2023towards} characterized the expressive power of transformers with CoT, showing that intermediate reasoning steps can substantially extend their computational capabilities.
\citet{hu2026unveiling} further investigated the statistical foundations of CoT prompting.

Going beyond these studies, \citet{huang2025cotgd} provided the first theoretical analysis of CoT training dynamics in an in-context learning setting, showing that transformers trained with CoT supervision can learn to implement multi-step gradient descent.
\citet{yang2025symbolic} showed that transformers with CoT can learn to implement path finding algorithms in trees.
More recently, \citet{yang2026agentic} showed that transformers with CoT can be trained by policy gradient under a depth-wise curriculum to perform randomized depth-first search.
Our work builds upon this line of research by studying whether CoT trajectory-level training can enable transformers to discover accelerated optimization algorithms beyond the supervised teacher algorithm.

\section{Proof of Stage 1 convergence}
\label{app:invariant}

We use lower-case unbolded letters for scalars, lower-case bold letters for
vectors, and upper-case bold letters for matrices.  For a positive integer
\(m\), let \([m]=\{1,\ldots,m\}\).  We use \(\Ib_m\) for the
\(m\times m\) identity matrix, \(\zero_m\) and \(\one_m\) for the zero
and all-ones vectors in \(\R^m\), and \(\zero_{m\times k}\) for the
\(m\times k\) zero matrix.  For a matrix \(\Ab\), \(\Ab_{[:,-1]}\)
denotes its last column, \(\Ab_{[i:j,-1]}\) denotes rows \(i\) through \(j\)
of that column, and \(\operatorname{vec}(\Ab)\) stacks its columns into a
vector.  For an effective array, entries outside the displayed active index
ranges are defined to be zero.  The arrays retain their stated ambient
dimensions; sums and derivatives below are restricted to the displayed
active ranges.  We write \(\Vb_{ij}\) and \(\Wb_{ij}\) for the blocks of
the parameter matrices \(\Vb\) and \(\Wb\), with the block sizes specified
below.  We use \(\|\vb\|=\|\vb\|_2\) for the Euclidean norm of a vector,
and \(\|\Ab\|_2\) and \(\|\Ab\|_F\) for the spectral and Frobenius norms
of a matrix, respectively.  The notation \(\ind\{\cdot\}\) denotes the indicator.

\newpage
The output is
\(\wb_{t+1}:=\FLSA(\Zb_t^{\rm GD};\Vb,\Wb)_{[d+2:2d+1,-1]}\).
Before imposing any zero-gradient identities, direct multiplication with the
single block \(\Wb_{33}\) fixed to zero gives
\begin{align}
 \wb_{t+1}
 ={}&\wb_t^{\rm GD}
 +\Vb_{31}\sum_{i=1}^n\frac{\xb_i}{\sqrt n}
       \left(\begin{pmatrix}\xb_i/\sqrt n\\y_i/\sqrt n\\\zero_d\\0\\\zero_T\end{pmatrix}^{\!\mathsf T}
       \Wb\begin{pmatrix}\zero_d\\0\\\wb_t^{\rm GD}\\1\\\eb_t\end{pmatrix}\right)\notag\\
 &+\Vb_{32}\sum_{i=1}^n\frac{y_i}{\sqrt n}
       \left(\begin{pmatrix}\xb_i/\sqrt n\\y_i/\sqrt n\\\zero_d\\0\\\zero_T\end{pmatrix}^{\!\mathsf T}
       \Wb\begin{pmatrix}\zero_d\\0\\\wb_t^{\rm GD}\\1\\\eb_t\end{pmatrix}\right)\notag\\
 &+\sum_{\ell=0}^t
   (\Vb_{33}\wb_\ell^{\rm GD}+\Vb_{34}+\Vb_{35}\eb_\ell)
   \left(\begin{pmatrix}\zero_d\\0\\\wb_\ell^{\rm GD}\\1\\\eb_\ell\end{pmatrix}^{\!\mathsf T}
       \Wb\begin{pmatrix}\zero_d\\0\\\wb_t^{\rm GD}\\1\\\eb_t\end{pmatrix}\right).
\label{eq:stage1-direct-retained}
\end{align}
For the data-token terms, the block multiplication is
\begin{align}
&\Vb_{31}\sum_{i=1}^n\frac{\xb_i}{\sqrt n}
       \left(\begin{pmatrix}\xb_i/\sqrt n\\y_i/\sqrt n\\\zero_d\\0\\\zero_T\end{pmatrix}^{\!\mathsf T}
       \Wb\begin{pmatrix}\zero_d\\0\\\wb_t^{\rm GD}\\1\\\eb_t\end{pmatrix}\right)\notag\\
={}&\Vb_{31}\sum_{i=1}^n\frac{\xb_i}{\sqrt n}
 \left[\frac{\xb_i^{\mathsf T}}{\sqrt n}
       (\Wb_{13}\wb_t^{\rm GD}+\Wb_{14}+\Wb_{15}\eb_t)
 +\frac{y_i}{\sqrt n}
       (\Wb_{23}\wb_t^{\rm GD}+w_{24}+\wb_{25}\eb_t)\right]\notag\\
={}&\frac1n\sum_{i=1}^n\Vb_{31}\xb_i
 \left[\xb_i^{\mathsf T}
       (\Wb_{13}\wb_t^{\rm GD}+\Wb_{14}+\Wb_{15}\eb_t)
 +y_i(\Wb_{23}\wb_t^{\rm GD}+w_{24}+\wb_{25}\eb_t)\right],
\label{eq:stage1-data-v31-product}
\end{align}
\begin{align}
&\Vb_{32}\sum_{i=1}^n\frac{y_i}{\sqrt n}
       \left(\begin{pmatrix}\xb_i/\sqrt n\\y_i/\sqrt n\\\zero_d\\0\\\zero_T\end{pmatrix}^{\!\mathsf T}
       \Wb\begin{pmatrix}\zero_d\\0\\\wb_t^{\rm GD}\\1\\\eb_t\end{pmatrix}\right)\notag\\
={}&\frac1n\sum_{i=1}^n\Vb_{32}y_i
 \left[\xb_i^{\mathsf T}
       (\Wb_{13}\wb_t^{\rm GD}+\Wb_{14}+\Wb_{15}\eb_t)
 +y_i(\Wb_{23}\wb_t^{\rm GD}+w_{24}+\wb_{25}\eb_t)\right].
\label{eq:stage1-data-v32-product}
\end{align}
For the teacher-weight columns, the corresponding multiplication is
\begin{align}
&\sum_{\ell=0}^t
   (\Vb_{33}\wb_\ell^{\rm GD}+\Vb_{34}+\Vb_{35}\eb_\ell)
   \left(\begin{pmatrix}\zero_d\\0\\\wb_\ell^{\rm GD}\\1\\\eb_\ell\end{pmatrix}^{\!\mathsf T}
       \Wb\begin{pmatrix}\zero_d\\0\\\wb_t^{\rm GD}\\1\\\eb_t\end{pmatrix}\right)\notag\\
={}&\sum_{\ell=0}^t
   (\Vb_{33}\wb_\ell^{\rm GD}+\Vb_{34}+\Vb_{35}\eb_\ell)
 \Bigl[(\wb_\ell^{\rm GD})^{\mathsf T}(\Wb_{34}+\Wb_{35}\eb_t)
 +\Wb_{43}\wb_t^{\rm GD}+w_{44}+\wb_{45}\eb_t\notag\\
 &\hspace{46mm}
 +\eb_\ell^{\mathsf T}(\Wb_{53}\wb_t^{\rm GD}+\wb_{54}+\Wb_{55}\eb_t)\Bigr].
\label{eq:stage1-history-product}
\end{align}
Combining these three block products gives
\begin{align}
 \wb_{t+1}
={}&\wb_t^{\rm GD}
 +\frac1n\sum_{i=1}^n(\Vb_{31}\xb_i+\Vb_{32}y_i)
 \Bigl[\xb_i^{\mathsf T}(\Wb_{13}\wb_t^{\rm GD}+\Wb_{14}+\Wb_{15}\eb_t)\notag\\
&\hspace{27mm}+y_i(\Wb_{23}\wb_t^{\rm GD}+w_{24}+\wb_{25}\eb_t)\Bigr]\notag\\
&+\sum_{\ell=0}^t(\Vb_{33}\wb_\ell^{\rm GD}+\Vb_{34}+\Vb_{35}\eb_\ell)
 \Bigl[(\wb_\ell^{\rm GD})^{\mathsf T}(\Wb_{34}+\Wb_{35}\eb_t)\notag\\
&\hspace{27mm}+\Wb_{43}\wb_t^{\rm GD}+w_{44}+\wb_{45}\eb_t\notag\\
&\hspace{43mm}+\eb_\ell^{\mathsf T}(\Wb_{53}\wb_t^{\rm GD}+\wb_{54}+\Wb_{55}\eb_t)\Bigr].
\label{eq:full-open-update}
\end{align}

\subsection{Stage 1 invariant parameter structure}
\label{app:invariant-reduced}

\begin{lemma}\label{lem:stage1-invariant}
During Stage 1 training, the parameter matrices retain the following form,
which is the Stage-1 part of Lemma~\ref{lem:invariant-parameters}:
\begin{equation}
\Vb=
\begin{pmatrix}
\zero&\zero&\zero&\zero&\zero\\
\zero&0&\zero&0&\zero\\
v_{31}\Ib_d&\zero&v_{33}\Ib_d&\zero&\zero\\
\zero&0&\zero&0&\zero\\
\zero&\zero&\zero&\zero&\zero
\end{pmatrix},\qquad
\Wb=
\begin{pmatrix}
\zero&\zero&w_{13}\Ib_d&\zero&\zero\\
\zero&0&\zero&w_{24}&\wb_{25}\\
\zero&\zero&\zero&\zero&\zero\\
\zero&0&\zero&w_{44}&\wb_{45}\\
\zero&\zero&\zero&\wb_{54}&\Wb_{55}
\end{pmatrix}.
\label{eq:VW}
\end{equation}
Here $B$ denotes any trainable zero block displayed in~\eqref{eq:VW}.
\end{lemma}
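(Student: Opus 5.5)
The approach is an invariant-manifold argument for the gradient flow \eqref{eq:GD-gradient-flow}. Let $\mathcal M$ be the set of $(\Vb,\Wb)$ of the form \eqref{eq:VW}, with $\Wb_{33}=\zero$ frozen as required by Assumption~\ref{ass:init}. $\mathcal M$ is a linear subspace and $(\Vb(0),\Wb(0))\in\mathcal M$ by \eqref{eq:warm-init}. It therefore suffices to prove that at every $\btheta\in\mathcal M$ the gradient $\nabla_{\btheta}\cL_{\rm GD}$ lies in $\mathcal M$. Concretely, this means three things. First, the partial derivatives of $\cL_{\rm GD}$ with respect to every block displayed as zero in \eqref{eq:VW} vanish. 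Second, the gradients in $\Vb_{31},\Vb_{33},\Wb_{13}$ are scalar multiples of $\Ib_d$. Third, the remaining blocks are unconstrained. Given this, the flow restricted to $\mathcal M$ (a finite ODE in the reduced parameters) yields a solution of the full ODE. Uniqueness of solutions to \eqref{eq:GD-gradient-flow}, which holds because the loss is polynomial in $\btheta$ and hence locally Lipschitz, then shows that the full trajectory stays in $\mathcal M$.

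To verify the vanishing of these gradient components, I would write the residual using \eqref{eq:full-open-update}. First I specialize to $\btheta\in\mathcal M$. A short induction on the teacher recursion gives $\wb_t^{\rm GD}=q_t(\Sb)\wb^\star$, where $q_t$ is a fixed polynomial with zero constant term and $\Xb\yb^\top=\Sb\wb^\star$. Hence
\[
\wb_{t+1}-\wb_{t+1}^{\rm GD}=r_t(\Sb)\wb^\star,
\]
with $r_t$ a polynomial whose coefficients are determined by the reduced parameters. Each gradient with respect to a block $\Bb$ is then $\frac1T\sum_t\E[\,\partial_{\Bb}\wb_{t+1}^{\mathsf T}\,r_t(\Sb)\wb^\star]$. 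I would differentiate \eqref{eq:full-open-update} in each zero block and evaluate at $\mathcal M$. Every resulting term has the form $\E[\,u(\Sb,\wb^\star,\text{one-hot data})\,]$ built from $\wb^\star$ and $\Sb$. The key symmetry is the pair of distributional invariances
\[
(\wb^\star,\Xb)\mapsto(-\wb^\star,\Xb),\qquad (\wb^\star,\Xb)\mapsto(\Ob\wb^\star,\Ob\Xb)\ \text{for orthogonal }\Ob.
\]
Under these maps $\yb$ is invariant or flips sign accordingly, $\Sb\mapsto\Ob\Sb\Ob^{\mathsf T}$, and the teacher iterates transform equivariantly.

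The cases split as follows. Blocks that multiply an odd number of $\wb^\star$-dependent factors against the residual (for example $\Vb_{34},\Vb_{35},\Wb_{14},\Wb_{15},\Wb_{34},\Wb_{35},\Wb_{43},\Wb_{53},\Wb_{23}$, and $\Vb_{32}$ paired with $\xb$-terms) give expectations that are odd under $\wb^\star\mapsto-\wb^\star$, so they vanish. Gradients in the $d\times d$ blocks ($\Vb_{31},\Vb_{33},\Wb_{13}$, and $\Vb_{11}$-type blocks, whose output rows do not enter $\wb_{t+1}$ anyway) are expectations of matrices $\Mb$ satisfying $\Ob\Mb\Ob^{\mathsf T}\overset{d}{=}\Mb$ for all orthogonal $\Ob$. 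Such expectations are multiples of $\Ib_d$. The same argument kills the $d\times d$ blocks that must be zero, because they appear paired with factors of the wrong parity. Blocks in rows other than the third block of $\Vb$ do not affect $\wb_{t+1}=\FLSA(\cdot)_{[d+2:2d+1,-1]}$ at all, so their gradient is identically zero. For the $\Wb$ blocks, I would also check that $\Wb_{ij}$ with $i\in\{1,2\}$ and $j\in\{1,2\}$ (query side on the data columns) is multiplied by the zero lower part of the last column and therefore has zero gradient.

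The main obstacle is the bookkeeping for the mixed blocks such as $\Vb_{32},\Wb_{23},\Wb_{12},\Wb_{21}$. Here both $\xb_i$ and $y_i=\xb_i^{\mathsf T}\wb^\star$ appear, so a single sign flip in $\wb^\star$ need not decide parity. For these I would count the total degree in $\wb^\star$ of each term: the residual contributes degree $1$, $y_i$ contributes degree $1$, and teacher iterates contribute degree $1$. For any term whose total degree is even, I would instead use the orthogonal-equivariance argument to show that the relevant vector-valued expectation (of the form $\E[\vb]$ with $\Ob\vb\overset{d}{=}\vb$) must be zero. Organizing this as a table with one row per block, recording its parity or equivariance type, completes the argument.
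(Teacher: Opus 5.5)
Your proposal is correct and is essentially the paper's argument: oddness under $\wb^\star\mapsto-\wb^\star$ kills the gradients of the zero blocks, rotation invariance forces $\Vb_{31},\Wb_{13},\Vb_{33}$ to remain multiples of $\Ib_d$, and uniqueness for the polynomial gradient-flow ODE closes the argument. The only difference in framing is that the paper applies the conjugation symmetry to whole trajectories, via $\Ub\Ab(\tau)\Ub^\top=\Ab(\tau)$, whereas you apply it pointwise to gradients on $\mathcal M$; this is the same idea. The ``main obstacle'' you flag is not one. At points of $\mathcal M$, every zero-block gradient integrand is odd in $\wb^\star$ for fixed $\Xb$, because the residual is linear in $\wb^\star$ and $\frac1n\sum_i\xb_iy_i=\Sb\wb^\star$. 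Moreover, $\Wb_{12},\Wb_{21}$ and the zero $d\times d$ blocks $\Vb_{11},\Vb_{13},\Wb_{11},\Wb_{31}$ never enter $\wb_{t+1}$; the same holds for every $\Wb_{ij}$ with $j\in\{1,2\}$ and every $\Vb_{ij}$ with $i\ne3$. The reason is that the first $d+1$ entries of the last column vanish and only block row $3$ of $\Vb$ is read out, so these gradients are identically zero rather than killed by parity.
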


\begin{proof}[Proof of Lemma~\ref{lem:stage1-invariant}]
\begingroup
\setlength{\parskip}{4pt}
\setlength{\abovedisplayskip}{5pt}
\setlength{\belowdisplayskip}{5pt}
\setlength{\abovedisplayshortskip}{3pt}
\setlength{\belowdisplayshortskip}{5pt}
\setlength{\jot}{2pt}
\allowdisplaybreaks[1]
\noindent\textup{(i)} Since $\Zb_t^{\rm GD}$ is independent of the parameters,
differentiating \eqref{eq:GD-guide-loss} using \eqref{eq:full-open-update}
at \eqref{eq:VW} gives
\[
 \nabla_{\Vb_{ij}}\cL_{\rm GD}=0\quad(i\ne3),
 \qquad
 \nabla_{\Wb_{ij}}\cL_{\rm GD}=0\quad(j\in\{1,2\}).
\]
For $\Vb_{32},\Vb_{34},\Vb_{35}$, we obtain
\begin{align*}
 \nabla_{\Vb_{32}}\cL_{\rm GD}
 &=\frac1T\sum_{t=0}^{T-1}\E\!\left[
 (\wb_{t+1}-\wb_{t+1}^{\rm GD})\left\{
 (\wstar)^{\mathsf T}\Sb\Wb_{13}\wb_t^{\rm GD}
 +(w_{24}+\wb_{25}\eb_t)(\wstar)^{\mathsf T}\Sb\wstar\right\}\right]
 =\zero_d,\\
 \nabla_{\Vb_{34}}\cL_{\rm GD}
 &=\frac1T\sum_{t=0}^{T-1}\E\left[
 (\wb_{t+1}-\wb_{t+1}^{\rm GD})\sum_{\ell=0}^t
 \left(w_{44}+\wb_{45}\eb_t+\eb_\ell^{\mathsf T}\wb_{54}
                 +\eb_\ell^{\mathsf T}\Wb_{55}\eb_t\right)\right]
 =\zero_d,\\
 \nabla_{\Vb_{35}}\cL_{\rm GD}
 &=\frac1T\sum_{t=0}^{T-1}\E\left[
 (\wb_{t+1}-\wb_{t+1}^{\rm GD})\sum_{\ell=0}^t
 \left(w_{44}+\wb_{45}\eb_t+\eb_\ell^{\mathsf T}\wb_{54}
                 +\eb_\ell^{\mathsf T}\Wb_{55}\eb_t\right)\eb_\ell^{\mathsf T}\right]
 =\zero_{d\times T}.
\end{align*}
Similarly,
\begin{align*}
 \nabla_{\Wb_{14}}\cL_{\rm GD}
 &=\frac1T\sum_{t=0}^{T-1}\E\left[
 \Sb\Vb_{31}^{\mathsf T}(\wb_{t+1}-\wb_{t+1}^{\rm GD})\right]
 =\zero_d,\\
 \nabla_{\Wb_{15}}\cL_{\rm GD}
 &=\frac1T\sum_{t=0}^{T-1}\E\left[
 \Sb\Vb_{31}^{\mathsf T}(\wb_{t+1}-\wb_{t+1}^{\rm GD})\eb_t^{\mathsf T}\right]
 =\zero_{d\times T},\\
 \nabla_{\Wb_{23}}\cL_{\rm GD}
 &=\frac1T\sum_{t=0}^{T-1}\E\left[
 (\wstar)^{\mathsf T}\Sb\Vb_{31}^{\mathsf T}
 (\wb_{t+1}-\wb_{t+1}^{\rm GD})(\wb_t^{\rm GD})^{\mathsf T}\right]
 =\zero_{1\times d},\\
 \nabla_{\Wb_{34}}\cL_{\rm GD}
 &=\frac1T\sum_{t=0}^{T-1}\E\left[
 \sum_{\ell=0}^t\wb_\ell^{\rm GD}(\wb_\ell^{\rm GD})^{\mathsf T}
 \Vb_{33}^{\mathsf T}(\wb_{t+1}-\wb_{t+1}^{\rm GD})\right]
 =\zero_d,\\
 \nabla_{\Wb_{35}}\cL_{\rm GD}
 &=\frac1T\sum_{t=0}^{T-1}\E\left[
 \sum_{\ell=0}^t\wb_\ell^{\rm GD}(\wb_\ell^{\rm GD})^{\mathsf T}
 \Vb_{33}^{\mathsf T}(\wb_{t+1}-\wb_{t+1}^{\rm GD})\eb_t^{\mathsf T}\right]
 =\zero_{d\times T},\\
 \nabla_{\Wb_{43}}\cL_{\rm GD}
 &=\frac1T\sum_{t=0}^{T-1}\E\left[
 \sum_{\ell=0}^t(\wb_\ell^{\rm GD})^{\mathsf T}
 \Vb_{33}^{\mathsf T}(\wb_{t+1}-\wb_{t+1}^{\rm GD})(\wb_t^{\rm GD})^{\mathsf T}\right]
 =\zero_{1\times d},\\
 \nabla_{\Wb_{53}}\cL_{\rm GD}
 &=\frac1T\sum_{t=0}^{T-1}\E\left[
 \sum_{\ell=0}^t\eb_\ell(\wb_\ell^{\rm GD})^{\mathsf T}
 \Vb_{33}^{\mathsf T}(\wb_{t+1}-\wb_{t+1}^{\rm GD})(\wb_t^{\rm GD})^{\mathsf T}\right]
 =\zero_{T\times d}.
\end{align*}

Note that
\[
\wb_t^{\rm GD}=\left(\Ib_d-(\Ib_d-\eta_0\Sb)^t\right)\wstar
\]
for every $t$, and \eqref{eq:full-open-update} shows that
$\wb_{t+1}$ is also odd in $\wstar$. Thus, every expression inside the
expectations above is odd in $\wstar$. Since $\wstar\sim N(0,\Ib_d)$,
all these gradients are zero. Consequently,
\[
\frac{\dd B}{\dd\tau}=-\nabla_B\cL_{\rm GD}=0,
\qquad B(\tau)=B(0)=0
\]
for every zero block $B$ in \eqref{eq:VW}.

\noindent\textup{(ii)} For every $\Ub\in\R^{d\times d}$ with
$\Ub^\top\Ub=\Ub\Ub^\top=\Ib_d$,
\[
 (\xb_i,\wstar,\wb_t^{\rm GD})
 \mapsto(\Ub\xb_i,\Ub\wstar,\Ub\wb_t^{\rm GD}),
\]
and
\begin{align*}
 &(\Vb_{31},\Wb_{13},\Vb_{33})
 \mapsto(\Ub\Vb_{31}\Ub^\top,\Ub\Wb_{13}\Ub^\top,\Ub\Vb_{33}\Ub^\top),\\
 &\cL_{\rm GD}(\Vb_{31},\Wb_{13},\Vb_{33})
 \mapsto\cL_{\rm GD}(\Ub\Vb_{31}\Ub^\top,\Ub\Wb_{13}\Ub^\top,\Ub\Vb_{33}\Ub^\top),\\
 &\left.\nabla_{\Ab}\cL_{\rm GD}\right|_{(\Vb_{31},\Wb_{13},\Vb_{33})}
 \mapsto\Ub\left.\nabla_{\Ab}\cL_{\rm GD}\right|_{(\Vb_{31},\Wb_{13},\Vb_{33})}\Ub^\top,
 \quad\Ab\in\{\Vb_{31},\Wb_{13},\Vb_{33}\}.
\end{align*}
The loss is unchanged by this transformation.
Since $\Ub\Ab(0)\Ub^\top=\Ab(0)$ for all three blocks,
uniqueness of the gradient-flow ODE gives
\[
 \Ub\Ab(\tau)\Ub^\top=\Ab(\tau)
 \qquad(\Ub^\top\Ub=\Ib_d,\ \tau\ge0).
\]
Fix $\tau$. Taking $\Ub$ to flip the sign of coordinate $i$ gives, for $j\ne i$,
\[
 (\Ab)_{ij}=(\Ub\Ab\Ub^\top)_{ij}=-(\Ab)_{ij},
 \qquad (\Ab)_{ij}=0.
\]
Thus $\Ab$ is diagonal. Taking $\Ub$ to exchange coordinates $i$ and $j$ gives
\[
 (\Ab)_{ii}=(\Ub\Ab\Ub^\top)_{ii}=(\Ab)_{jj}.
\]
Hence
\[
 \Vb_{31}(\tau)=v_{31}(\tau)\Ib_d,\qquad
 \Wb_{13}(\tau)=w_{13}(\tau)\Ib_d,\qquad
 \Vb_{33}(\tau)=v_{33}(\tau)\Ib_d.
\]
\endgroup
\end{proof}

\subsection{Proofs of Lemmas 4.3--4.5}
\label{app:GD-guide}

Throughout the appendices, we use the same scaling assumptions:
\begin{equation}
 n=\Theta(d),\qquad \frac dn\le\frac1{10},\qquad
 T=\Theta\!\left((\log d)^{1/4}\right),                         \label{eq:common-growing-regime}
\end{equation}
and, for all sufficiently large \(d\),
\begin{equation}
 \left(\frac dn\right)^2T
 \ge 1024(\log d)^{1/4}.                    \label{eq:common-tail-margin}
\end{equation}
\subsection{Proof of Lemma~\ref{lem:stage1-loss}}

For the spectral form of the loss $\mathcal L_{\rm GD}$, let \(\nu_{d,n}\) be the expected
spectral law of \(\Sb\), and let \(\mu_{d/n}\) be the Marchenko-Pastur law.
We use the polynomial inner products and norms
\begin{align*}
 \E_{\lambda\sim\nu_{d,n}}f(\lambda)
 &=\frac1d\E\,\tr f(\Sb),\\
 \langle f,g\rangle_{d,n}
 &=\frac1d\E\,\tr\!\left(f(\Sb)g(\Sb)\right),
 &\|R\|_{d,n}^2&=\langle R,R\rangle_{d,n},\\
 \langle f,g\rangle_{\mathrm{MP}}
 &=\E_{\lambda\sim\mu_{d/n}}[f(\lambda)g(\lambda)],
 &\|R\|_{\mathrm{MP}}^2&=\langle R,R\rangle_{\mathrm{MP}}.
\end{align*}

\begin{proof}[Proof of Lemma~\ref{lem:stage1-loss}]
By the definition of the Stage-1 loss~\eqref{eq:GD-guide-loss},
\begin{equation}
 \mathcal L_{\rm GD}
 =\frac1{2T}\E\sum_{t=0}^{T-1}
 \left\|
 \FLSA(\Zb_t^{\rm GD};\Vb,\Wb)_{[d+2:2d+1,-1]}
 -\wb_{t+1}^{\rm GD}
 \right\|^2 .
 \label{eq:guide-loss-from-definition}
\end{equation}
Under the parameter structure in Lemma~\ref{lem:stage1-invariant},
the update~\eqref{eq:update} and the identity
\(\Xb\yb^\top=\Sb\wstar\) give
\begin{equation*}
\begin{aligned}
 \FLSA(\Zb_0^{\rm GD};\Vb,\Wb)_{[d+2:2d+1,-1]}
 &=\widetilde\theta_1\Sb\wstar,\\
 \FLSA(\Zb_t^{\rm GD};\Vb,\Wb)_{[d+2:2d+1,-1]}
 &=\wb_t^{\rm GD}+\widetilde\theta_2\Sb\wb_t^{\rm GD}
   +(\widetilde{\bTheta}_3)_{1t}\Sb\wstar\\
 &\quad+\sum_{\ell=2}^{t}(\widetilde{\bTheta}_3)_{\ell t}\wb_\ell^{\rm GD},
 \qquad 1\le t<T.
\end{aligned}
\end{equation*}
The GD teacher follows
\begin{equation*}
 \wb_{t+1}^{\rm GD}=\wb_t^{\rm GD}
 +\eta_0\Sb(\wstar-\wb_t^{\rm GD}).
\end{equation*}
Therefore, the residual in~\eqref{eq:guide-loss-from-definition} is
\begin{equation*}
\begin{aligned}
 \FLSA(\Zb_0^{\rm GD};\Vb,\Wb)_{[d+2:2d+1,-1]}-\wb_1^{\rm GD}
 &= (\widetilde\theta_1-\eta_0)\Sb\wstar,\\
 \FLSA(\Zb_t^{\rm GD};\Vb,\Wb)_{[d+2:2d+1,-1]}-\wb_{t+1}^{\rm GD}
 &= (\widetilde\theta_2+\eta_0)\Sb\wb_t^{\rm GD}
   +\bigl((\widetilde{\bTheta}_3)_{1t}-\eta_0\bigr)\Sb\wstar\\
 &\quad+\sum_{\ell=2}^{t}(\widetilde{\bTheta}_3)_{\ell t}\wb_\ell^{\rm GD},
 \qquad 1\le t<T.
\end{aligned}
\end{equation*}
For \(p_t^{\rm GD}(\lambda):=1-(1-\eta_0\lambda)^t\),
\(\wb_t^{\rm GD}=p_t^{\rm GD}(\Sb)\wstar\). Thus, after conditioning on $\Sb$ and
using \(\E[\wstar(\wstar)^{\mathsf T}\mid\Sb]=\Ib_d\),
\begin{equation}
\begin{aligned}
 A_0(\lambda)&=(\widetilde\theta_1-\eta_0)\lambda,\\
 A_t(\lambda)&=(\widetilde\theta_2+\eta_0)\lambda p_t^{\rm GD}(\lambda)
 +\bigl((\widetilde{\bTheta}_3)_{1t}-\eta_0\bigr)\lambda
 +\sum_{\ell=2}^{t}(\widetilde{\bTheta}_3)_{\ell t}p_\ell^{\rm GD}(\lambda),
 \qquad 1\le t<T,\\
 \mathcal L_{\rm GD}
 &=\frac1{2T}\sum_{t=0}^{T-1}\E\,\tr\bigl(A_t(\Sb)^2\bigr).
\end{aligned}
\label{eq:guide-auxiliary-loss}
\end{equation}
Let $P_t$ be the orthogonal projection in $\langle\cdot,\cdot\rangle_{d,n}$ onto
\(\operatorname{span}\{\lambda,p_2^{\rm GD},\ldots,p_t^{\rm GD}\}\). Thus, $P_t$
separates the degree-$(t+1)$ component $\lambda p_t^{\rm GD}$ from the lower-degree
span $\operatorname{span}\{\lambda,p_2^{\rm GD},\ldots,p_t^{\rm GD}\}$. By orthogonality,
\begin{equation}
 A_t=(\widetilde\theta_2+\eta_0)\bigl(\lambda p_t^{\rm GD}-P_t[\lambda p_t^{\rm GD}]\bigr)+P_tA_t.
\label{eq:guide-orthogonal-decomposition}
\end{equation}
The two summands in~\eqref{eq:guide-orthogonal-decomposition} are orthogonal
in $\langle\cdot,\cdot\rangle_{d,n}$.
For $1\le t<T$, direct projection of the expression for $A_t$ in
\eqref{eq:guide-auxiliary-loss} gives
\begin{equation*}
\begin{aligned}
(P_tA_t)(\Sb)
={}&(\widetilde\theta_2+\eta_0)(P_t[\lambda p_t^{\rm GD}])(\Sb)
 +\bigl((\widetilde{\bTheta}_3)_{1t}-\eta_0\bigr)\Sb\\
&+\sum_{\ell=2}^{t}(\widetilde{\bTheta}_3)_{\ell t}p_\ell^{\rm GD}(\Sb).
\end{aligned}
\end{equation*}
Substituting this expansion and the orthogonality in
\eqref{eq:guide-orthogonal-decomposition} into
\eqref{eq:guide-auxiliary-loss} gives
\begin{equation*}
\begin{aligned}
\mathcal L_{\rm GD}
={}&\E\Bigg[\frac{\|\Sb\|_F^2}{2T}(\widetilde\theta_1-\eta_0)^2
 +\frac1{2T}\sum_{t=1}^{T-1}
 \bigl\|(\widetilde\theta_2+\eta_0)
 (\lambda p_t^{\rm GD}-P_t[\lambda p_t^{\rm GD}])(\Sb)\bigr\|_F^2\\
&\quad+\frac1{2T}\sum_{t=1}^{T-1}
 \Bigl\|(\widetilde\theta_2+\eta_0)(P_t[\lambda p_t^{\rm GD}])(\Sb)
 +\bigl((\widetilde{\bTheta}_3)_{1t}-\eta_0\bigr)\Sb\\
&\hspace{42mm}
 +\sum_{\ell=2}^{t}(\widetilde{\bTheta}_3)_{\ell t}p_\ell^{\rm GD}(\Sb)\Bigr\|_F^2\Bigg].
\end{aligned}
\end{equation*}
Here $t=1,\ldots,T-1$ indexes blocks of $d^2$ rows. To write the
expanded expression compactly, define
\begin{equation*}
\begin{aligned}
 &A(\Sb):=\frac{\|\Sb\|_F^2}{2T},\qquad
 B(\Sb):=\frac1{2T}\sum_{t=1}^{T-1}
 \bigl\|(\lambda p_t^{\rm GD}-P_t[\lambda p_t^{\rm GD}])(\Sb)\bigr\|_F^2,\\
 &[\Cb_1(\Sb)]_t:=\frac1{\sqrt{2T}}
 \operatorname{vec}\bigl((P_t[\lambda p_t^{\rm GD}])(\Sb)\bigr),\\
 &[\Cb_2(\Sb)]_{t,(\ell,s)}
 :=\frac{\ind\{t=s\}}{\sqrt{2T}}
 \begin{cases}
 \operatorname{vec}(\Sb),&\ell=1,\ 1\le s<T,\\
 \operatorname{vec}(p_\ell^{\rm GD}(\Sb)),&2\le\ell\le s<T,\\
 \zero_{d^2},&\text{otherwise},
 \end{cases}
 \qquad 0\le\ell,s<T.
\end{aligned}
\end{equation*}
Here $\Cb_2(\Sb)$ is the vertical stacking of the block rows
$[\Cb_2(\Sb)]_t$, $1\le t<T$. Equivalently, for
$\Delta\in\R^{T\times T}$,
\begin{equation*}
 \bigl[\Cb_2(\Sb)\operatorname{vec}(\Delta)\bigr]_t
 =\frac1{\sqrt{2T}}\left(
 \Delta_{1t}\operatorname{vec}(\Sb)
 +\sum_{\ell=2}^t\Delta_{\ell t}
   \operatorname{vec}\bigl(p_\ell^{\rm GD}(\Sb)\bigr)\right),
 \qquad 1\le t<T.
\end{equation*}
Therefore, the Stage-1 loss can be written as
\begin{equation}
\begin{aligned}
\mathcal L_{\rm GD}
={}&\underbrace{\E\bigl[A(\Sb)(\widetilde\theta_1-\eta_0)^2\bigr]}_{\mathcal L_{\rm GD,A}}\\
&+\underbrace{\E\bigl[B(\Sb)(\widetilde\theta_2+\eta_0)^2\bigr]}_{\mathcal L_{\rm GD,B}}\\
&+\underbrace{\E\Bigl\|\Cb_1(\Sb)(\widetilde\theta_2+\eta_0)
 +\Cb_2(\Sb)\operatorname{vec}
 \bigl(\widetilde{\bTheta}_3-\eta_0\eb_1(\one_T-\eb_0)^{\mathsf T}\bigr)\Bigr\|^2}_{\mathcal L_{\rm GD,C}}.
\end{aligned}
\label{eq:guide-loss-decomposition}
\end{equation}

\end{proof}

By Lemma~\ref{lem:stage1-invariant}, the gradient with respect to the trainable
blocks of $(\Vb,\Wb)$ reduces to $\nabla_{\bar{\btheta}}\mathcal L_{\rm GD}$.
Hence,
\begin{align}
 -\frac{\mathrm d}{\mathrm d\tau}\mathcal L_{\rm GD}
 &=\left\|\nabla_{\bar{\btheta}}\mathcal L_{\rm GD}\right\|^2\notag\\
 &=\left\|\nabla_{\Vb_{31}}\mathcal L_{\rm GD}\right\|_F^2
 +\left\|\nabla_{\Wb_{13}}\mathcal L_{\rm GD}\right\|_F^2
 +\left\|\nabla_{\Vb_{33}}\mathcal L_{\rm GD}\right\|_F^2\notag\\
 &\quad+\left|\partial_{w_{24}}\mathcal L_{\rm GD}\right|^2
 +\left\|\nabla_{\wb_{25}}\mathcal L_{\rm GD}\right\|^2
 +\left|\partial_{w_{44}}\mathcal L_{\rm GD}\right|^2\notag\\
 &\quad+\left\|\nabla_{\wb_{45}}\mathcal L_{\rm GD}\right\|^2
 +\left\|\nabla_{\wb_{54}}\mathcal L_{\rm GD}\right\|^2
 +\left\|\nabla_{\Wb_{55}}\mathcal L_{\rm GD}\right\|_F^2\notag\\
 &=\frac1d\left(
 \left|\partial_{v_{31}}\mathcal L_{\rm GD}\right|^2
 +\left|\partial_{w_{13}}\mathcal L_{\rm GD}\right|^2
 +\left|\partial_{v_{33}}\mathcal L_{\rm GD}\right|^2\right)\notag\\
 &\quad+\left|\partial_{w_{24}}\mathcal L_{\rm GD}\right|^2
 +\left\|\nabla_{\wb_{25}}\mathcal L_{\rm GD}\right\|^2
 +\left|\partial_{w_{44}}\mathcal L_{\rm GD}\right|^2\notag\\
 &\quad+\left\|\nabla_{\wb_{45}}\mathcal L_{\rm GD}\right\|^2
 +\left\|\nabla_{\wb_{54}}\mathcal L_{\rm GD}\right\|^2
 +\left\|\nabla_{\Wb_{55}}\mathcal L_{\rm GD}\right\|_F^2\notag\\
 &\quad\left(\partial_{v_{31}}\mathcal L_{\rm GD}
 =\left\langle\nabla_{\Vb_{31}}\mathcal L_{\rm GD},\Ib_d\right\rangle_F,
 \quad\nabla_{\Vb_{31}}\mathcal L_{\rm GD}
 =\frac{\partial_{v_{31}}\mathcal L_{\rm GD}}d\Ib_d\right).
                                                               \label{eq:guide-energy}
\end{align}

\begin{lemma}
\label{lem:guide-invariants}
Along the Stage-1 flow, for every training time $\tau$,
\begin{align}
 v_{31}(\tau)^2-w_{13}(\tau)^2
 -\frac{w_{24}(\tau)^2+\|\wb_{25}(\tau)\|^2}{d}
 &=-\frac{\sigma^2}{d},                                  \label{eq:guide-spatial-invariant}\\
 d v_{33}(\tau)^2-|w_{44}(\tau)|^2-\|\wb_{45}(\tau)\|^2
 -\|\wb_{54}(\tau)\|^2-\|\Wb_{55}(\tau)\|_F^2
 &=d v_{33}(0)^2,                                        \label{eq:guide-history-invariant}\\
 \sum_{t=0}^{T-1}(\wb_{25}(\tau))_t-w_{24}(\tau)
 &=-\sigma.                                             \label{eq:guide-positional-kernel}
\end{align}
Consequently, \(v_{33}(\tau)\ge v_{33}(0)>0\).
\end{lemma}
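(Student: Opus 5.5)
These are conservation laws of the balanced-layer type for gradient flow, and I will derive each from a scaling symmetry of the loss. By Lemma~\ref{lem:stage1-loss}, $\mathcal L_{\rm GD}$ depends on $\bar{\btheta}$ only through the auxiliary parameters
\[
\widetilde\theta_1=v_{31}(w_{24}+\wb_{25}\eb_0),\qquad
\widetilde\theta_2=v_{31}w_{13},
\]
and the entries of $\widetilde{\bTheta}_3$. In these, $v_{31}$ multiplies the group $(w_{13},w_{24},\wb_{25})$, and $v_{33}$ multiplies the group $(w_{44},\wb_{45},\wb_{54},\Wb_{55})$. The first row of $\widetilde{\bTheta}_3$ mixes the two products, but each product is separately of the form ``scalar $\times$ linear in its group''.

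For the first invariant, consider the rescaling $v_{31}\mapsto e^{s}v_{31}$, $(w_{13},w_{24},\wb_{25})\mapsto e^{-s}(w_{13},w_{24},\wb_{25})$. It leaves every auxiliary parameter unchanged, so differentiating at $s=0$ gives
\[
v_{31}\partial_{v_{31}}\mathcal L_{\rm GD}
=w_{13}\partial_{w_{13}}\mathcal L_{\rm GD}+w_{24}\partial_{w_{24}}\mathcal L_{\rm GD}+\wb_{25}^{\mathsf T}\nabla_{\wb_{25}}\mathcal L_{\rm GD}.
\]
The scalar time derivatives are read off from the energy identity \eqref{eq:guide-energy}. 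Since $\Vb_{31}=v_{31}\Ib_d$, we have $\dot v_{31}=-\frac1d\partial_{v_{31}}\mathcal L_{\rm GD}$ and likewise for $w_{13}$ and $v_{33}$, while $\dot w_{24}=-\partial_{w_{24}}\mathcal L_{\rm GD}$ and $\dot{\wb}_{25}=-\nabla_{\wb_{25}}\mathcal L_{\rm GD}$. Then
\[
\tfrac{d}{d\tau}\Bigl(d v_{31}^2-d w_{13}^2-w_{24}^2-\|\wb_{25}\|^2\Bigr)
=-2\Bigl(v_{31}\partial_{v_{31}}-w_{13}\partial_{w_{13}}-w_{24}\partial_{w_{24}}-\wb_{25}^{\mathsf T}\nabla_{\wb_{25}}\Bigr)\mathcal L_{\rm GD}=0.
\]
Dividing by $d$ and evaluating at $\tau=0$ gives \eqref{eq:guide-spatial-invariant}. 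At initialization $v_{31}=w_{13}=0$, $w_{24}=\sigma$ and $\wb_{25}=\zero$, so the constant is $-\sigma^2/d$.

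The second invariant follows the same way from the rescaling $v_{33}\mapsto e^sv_{33}$ with the group $(w_{44},\wb_{45},\wb_{54},\Wb_{55})$ scaled by $e^{-s}$. This combination enters the loss only through the products $v_{33}(\cdots)$ appearing in $\widetilde{\bTheta}_3$, so the loss is invariant. Because $\Vb_{33}=v_{33}\Ib_d$, the factor $d$ now sits in front of $v_{33}^2$. Evaluating at $\tau=0$, where only $v_{33}(0)$ is nonzero, gives $d v_{33}(0)^2$. Since the subtracted terms are nonnegative, $d v_{33}^2\ge d v_{33}(0)^2$, so $|v_{33}|\ge v_{33}(0)>0$. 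Because $v_{33}$ is continuous and can never reach zero, it keeps its initial sign, which yields $v_{33}(\tau)\ge v_{33}(0)$.

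For the third invariant I use a translation symmetry rather than a scaling. The pair $(w_{24},\wb_{25})$ enters the loss only through the combinations $w_{24}+(\wb_{25})_t$ for $t=0,\dots,T-1$: through $\widetilde\theta_1$ at $t=0$ and through $(\widetilde{\bTheta}_3)_{1t}$ for $t\ge1$. The loss is therefore invariant under $w_{24}\mapsto w_{24}+s$, $\wb_{25}\mapsto\wb_{25}-s\one_T$, which gives $\partial_{w_{24}}\mathcal L_{\rm GD}=\one_T^{\mathsf T}\nabla_{\wb_{25}}\mathcal L_{\rm GD}$. Hence
\[
\tfrac{d}{d\tau}\Bigl(\one_T^{\mathsf T}\wb_{25}-w_{24}\Bigr)=-\one_T^{\mathsf T}\nabla_{\wb_{25}}\mathcal L_{\rm GD}+\partial_{w_{24}}\mathcal L_{\rm GD}=0,
\]
and the initial value is $0-\sigma=-\sigma$.

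The main point needing care is confirming that the loss depends on these parameters only through the claimed combinations, in the reduced form of Lemma~\ref{lem:stage1-invariant}. This is read off from the projected update in the proof of Lemma~\ref{lem:stage1-loss}. The other delicate point is the factor-$d$ bookkeeping for the blocks that are multiples of the identity.
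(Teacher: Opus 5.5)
Your proposal is correct and follows essentially the same route as the paper's proof. The paper also uses the rescaling symmetry $v_{31}\mapsto e^{\zeta}v_{31}$, $(w_{13},w_{24},\wb_{25})\mapsto e^{-\zeta}(w_{13},w_{24},\wb_{25})$ with the flow $\dot v_{31}=-\frac1d\partial_{v_{31}}\mathcal L_{\rm GD}$, the analogous rescaling for the $v_{33}$ group, and the translation $(w_{24},\wb_{25})\mapsto(w_{24}+\zeta,\wb_{25}-\zeta\one_T^{\mathsf T})$, each evaluated at the initialization.
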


\begin{proof}
\noindent\textup{(i)} Define
\[
 I_1(\tau):=v_{31}(\tau)^2-w_{13}(\tau)^2
 -\frac{w_{24}(\tau)^2+\|\wb_{25}(\tau)\|^2}{d}.
\]
For fixed $\tau$, consider the rescaling
\[
 (v_{31}(\tau),w_{13}(\tau),w_{24}(\tau),\wb_{25}(\tau))\mapsto
 (e^\zeta v_{31}(\tau),e^{-\zeta}w_{13}(\tau),
 e^{-\zeta}w_{24}(\tau),e^{-\zeta}\wb_{25}(\tau)).
\]
This rescaling preserves the products
\begin{align*}
 (e^\zeta v_{31}(\tau))(e^{-\zeta}w_{13}(\tau))
 &=v_{31}(\tau)w_{13}(\tau),\\
 (e^\zeta v_{31}(\tau))
 \bigl(e^{-\zeta}w_{24}(\tau)+e^{-\zeta}\wb_{25}(\tau)\eb_t\bigr)
 &=v_{31}(\tau)\bigl(w_{24}(\tau)+\wb_{25}(\tau)\eb_t\bigr),
 \quad 0\le t<T.
\end{align*}
Hence $\widetilde\theta_1$, $\widetilde\theta_2$, and
$\widetilde{\bTheta}_3$ are unchanged by the rescaling, so
\eqref{eq:guide-loss-decomposition} shows that the loss is unchanged as well.
Differentiating the loss along this rescaling at $\zeta=0$ gives
\begin{align*}
 0&=\left.\frac{\mathrm d}{\mathrm d\zeta}\mathcal L_{\rm GD}
 \right|_{\zeta=0}\\
 &=v_{31}(\tau)\partial_{v_{31}}\mathcal L_{\rm GD}(\tau)
 -w_{13}(\tau)\partial_{w_{13}}\mathcal L_{\rm GD}(\tau)\\
 &\quad-w_{24}(\tau)\partial_{w_{24}}\mathcal L_{\rm GD}(\tau)
 -\bigl\langle\wb_{25}(\tau),
 \nabla_{\wb_{25}}\mathcal L_{\rm GD}(\tau)\bigr\rangle.
\end{align*}
By~\eqref{eq:GD-gradient-flow} and the scalar gradient relations in
\eqref{eq:guide-energy},
\begin{align*}
 \dot v_{31}(\tau)&=-\frac1d\partial_{v_{31}}\mathcal L_{\rm GD}(\tau),
 &\dot w_{13}(\tau)&=-\frac1d\partial_{w_{13}}\mathcal L_{\rm GD}(\tau),\\
 \dot w_{24}(\tau)&=-\partial_{w_{24}}\mathcal L_{\rm GD}(\tau),
 &\dot{\wb}_{25}(\tau)&=-\nabla_{\wb_{25}}\mathcal L_{\rm GD}(\tau).
\end{align*}
Consequently,
\begin{align*}
 \frac{\mathrm d}{\mathrm d\tau}I_1(\tau)
 &=2v_{31}(\tau)\dot v_{31}(\tau)
 -2w_{13}(\tau)\dot w_{13}(\tau)\\
 &\quad-\frac2d\Bigl(w_{24}(\tau)\dot w_{24}(\tau)
 +\bigl\langle\wb_{25}(\tau),\dot{\wb}_{25}(\tau)\bigr\rangle\Bigr)\\
 &=-\frac2d\Bigl(v_{31}(\tau)\partial_{v_{31}}\mathcal L_{\rm GD}(\tau)
 -w_{13}(\tau)\partial_{w_{13}}\mathcal L_{\rm GD}(\tau)\\
 &\qquad\quad-w_{24}(\tau)\partial_{w_{24}}\mathcal L_{\rm GD}(\tau)
 -\bigl\langle\wb_{25}(\tau),
 \nabla_{\wb_{25}}\mathcal L_{\rm GD}(\tau)\bigr\rangle\Bigr)\\
 &=0.
\end{align*}
Thus $I_1(\tau)=I_1(0)=-\sigma^2/d$, which proves
\eqref{eq:guide-spatial-invariant}.

\noindent\textup{(ii)} The same argument applies to the rescaling
\begingroup\small
\thinmuskip=1mu \thickmuskip=3mu
\[
 (v_{33}(\tau),w_{44}(\tau),\wb_{45}(\tau),\wb_{54}(\tau),\Wb_{55}(\tau))
 \mapsto(e^\zeta v_{33}(\tau),e^{-\zeta}w_{44}(\tau),
 e^{-\zeta}\wb_{45}(\tau),e^{-\zeta}\wb_{54}(\tau),e^{-\zeta}\Wb_{55}(\tau)).
\]
\endgroup
Each product with $v_{33}$ in $\widetilde{\bTheta}_3$ is unchanged, so the
same argument gives~\eqref{eq:guide-history-invariant}.
It follows that $v_{33}(\tau)^2\ge v_{33}(0)^2$; continuity and
$v_{33}(0)>0$ then give $v_{33}(\tau)\ge v_{33}(0)>0$.

\noindent\textup{(iii)} Let
\[
 I_3(\tau):=\sum_{t=0}^{T-1}(\wb_{25}(\tau))_t-w_{24}(\tau).
\]
For fixed $\tau$, consider the translation
\[
 (w_{24}(\tau),\wb_{25}(\tau))
 \mapsto(w_{24}(\tau)+\zeta,\wb_{25}(\tau)-\zeta\one_T^{\mathsf T}).
\]
For every $0\le t<T$, it preserves
\[
 w_{24}(\tau)+\zeta+
 (\wb_{25}(\tau)-\zeta\one_T^{\mathsf T})\eb_t
 =w_{24}(\tau)+\wb_{25}(\tau)\eb_t.
\]
Thus $\widetilde\theta_1$, $\widetilde\theta_2$, and
$\widetilde{\bTheta}_3$, and hence the loss, are unchanged.
Differentiating along this translation at $\zeta=0$ gives
\[
 0=\left.\frac{\mathrm d}{\mathrm d\zeta}\mathcal L_{\rm GD}\right|_{\zeta=0}
 =\partial_{w_{24}}\mathcal L_{\rm GD}(\tau)
 -\sum_{t=0}^{T-1}(\nabla_{\wb_{25}}\mathcal L_{\rm GD}(\tau))_t.
\]
Substituting the gradient flow~\eqref{eq:GD-gradient-flow} into the
derivative of $I_3(\tau)$ now gives
\begin{align*}
 \frac{\mathrm d}{\mathrm d\tau}I_3(\tau)
 &=\sum_{t=0}^{T-1}(\dot{\wb}_{25}(\tau))_t-\dot w_{24}(\tau)\\
 &=-\sum_{t=0}^{T-1}(\nabla_{\wb_{25}}\mathcal L_{\rm GD}(\tau))_t
 +\partial_{w_{24}}\mathcal L_{\rm GD}(\tau)\\
 &=0.
\end{align*}
Thus $I_3(\tau)=I_3(0)=-\sigma$, which proves
\eqref{eq:guide-positional-kernel}.
\end{proof}

\begin{lemma}
\label{lem:guide-auxiliary-nondegenerate}
For any $\widetilde\theta_2,\widetilde{\bTheta}_3$ as in
Lemma~\ref{lem:stage1-loss}, for all sufficiently large $d$,
\begin{equation}
\begin{aligned}
 &\E\Bigl[B(\Sb)(\widetilde\theta_2+\eta_0)^2
 +\Bigl\|\Cb_1(\Sb)(\widetilde\theta_2+\eta_0)\\
 &\qquad\quad+\Cb_2(\Sb)\operatorname{vec}
 \bigl(\widetilde{\bTheta}_3-\eta_0\eb_1(\one_T-\eb_0)^{\mathsf T}\bigr)
 \Bigr\|^2\Bigr]\\
 &\quad\ge c\frac{d/n}{T}\Bigl(
 d(\widetilde\theta_2+\eta_0)^2
 +\bigl\|\widetilde{\bTheta}_3-\eta_0\eb_1(\one_T-\eb_0)^{\mathsf T}\bigr\|_F^2
 \Bigr).
\end{aligned}
\label{eq:guide-auxiliary-nondegenerate}
\end{equation}
\end{lemma}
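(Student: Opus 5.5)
By the proof of Lemma~\ref{lem:stage1-loss}, the left-hand side of \eqref{eq:guide-auxiliary-nondegenerate} equals
\[
\frac1{2T}\sum_{t=1}^{T-1}\E\,\tr\bigl(R_t(\Sb)^2\bigr),\qquad
R_t(\lambda):=a\lambda p_t^{\rm GD}(\lambda)+\Delta_{1t}\lambda+\sum_{\ell=2}^t\Delta_{\ell t}p_\ell^{\rm GD}(\lambda),
\]
where $a:=\widetilde\theta_2+\eta_0$ and $\Delta:=\widetilde{\bTheta}_3-\eta_0\eb_1(\one_T-\eb_0)^{\mathsf T}$. We have $\E\tr(R_t(\Sb)^2)=d\|R_t\|_{d,n}^2$, and the loss splits into one term per $t$. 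It therefore suffices to prove, for each $1\le t<T$,
\[
\|R_t\|_{d,n}^2\ \ge\ c\,\frac dn\Bigl(a^2+\frac1d\sum_{\ell}\Delta_{\ell t}^2\Bigr),
\]
since summing over $t$ and multiplying by $d/(2T)$ gives the claim. In fact a constant lower bound $\|R_t\|^2_{d,n}\ge c'(d/n)(a^2+\sum_\ell\Delta_{\ell t}^2)$ is stronger than what is needed for the $\Delta$ part.

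The key structural fact is that $\{\lambda,p_2^{\rm GD},\dots,p_t^{\rm GD},\lambda p_t^{\rm GD}\}$ is a basis of the polynomials of degree $\le t+1$ vanishing at $0$. This holds because $p_\ell^{\rm GD}$ has exact degree $\ell$ with leading coefficient $\pm\eta_0^\ell$ and zero constant term. So $R_t=\lambda q_t(\lambda)$, where $q_t$ has degree $\le t$ and its coefficient vector is $\Mb_t(a,\Delta_{1t},\dots,\Delta_{tt})^{\mathsf T}$ for a fixed triangular change-of-basis matrix $\Mb_t$ depending only on $\eta_0$. I would lower bound
\[
\|\lambda q\|_{d,n}^2\ \ge\ \sigma_{\min}(\Gb_t)\,\|\text{coeffs of } \lambda q \text{ in the chosen basis}\|^2,
\]
where $\Gb_t$ is the Gram matrix of the basis under $\langle\cdot,\cdot\rangle_{d,n}$. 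The proof then reduces to lower bounding $\sigma_{\min}(\Gb_t)$.

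For this I would first pass to the Marchenko--Pastur inner product. For polynomials of degree $\le 2T=O((\log d)^{1/4})$, the moments of $\nu_{d,n}$ agree with the MP moments up to $O(C^{T}/d)$. The coefficients of the basis elements are at most $(1+\eta_0)^{T}=d^{o(1)}$, so $\Gb_t^{d,n}=\Gb_t^{\rm MP}+O(d^{-1+o(1)})$. The MP law with ratio $\gamma=d/n\le0.1$ has support $[(1-\sqrt\gamma)^2,(1+\sqrt\gamma)^2]$, bounded away from $0$. On this support, $p_t^{\rm GD}(\lambda)=1-(1-\eta_0\lambda)^t$ is close to $1$, because $|1-\eta_0\lambda|\le \max(1-0.4\cdot 0.47,\,0.9\cdot1.73-1)<0.82$.

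The difficulty is near-collinearity. The basis vectors $p_\ell^{\rm GD}$ and $\lambda p_t^{\rm GD}$ are all nearly $1$ or nearly $\lambda$, and distinguishing them requires exploiting the width $\asymp\sqrt\gamma$ of the MP support. That width is exactly where the $d/n$ factor comes from. I would use orthogonal polynomials for MP (shifted Chebyshev-type, with recurrence coefficients of order $\sqrt\gamma$) and write each $p_\ell^{\rm GD}-p_{\ell-1}^{\rm GD}=\eta_0\lambda(1-\eta_0\lambda)^{\ell-1}$ in that orthonormal basis. The least singular value of an $O(T)$-dimensional triangular system with diagonal entries at least $c\gamma^{\ell/2}$ would naively give $\gamma^{T}$, which is far too weak.

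\textbf{Main obstacle.} The intended bound of order $\gamma$, uniform in $t$, must come from a smarter decomposition. I would first try telescoping. Rewrite $\sum_\ell\Delta_{\ell t}p_\ell^{\rm GD}$ in the increment basis $\lambda(1-\eta_0\lambda)^{\ell-1}$, and bound $\|\sum_\ell c_\ell\lambda(1-\eta_0\lambda)^{\ell-1}\|^2$ from below using only the degree-$0$ and degree-$1$ components of the orthogonal expansion. These components carry the variance $\asymp\gamma$ of $\lambda$ under MP. Combined with the separation of the $a$-term through its top-degree component, this gives $\|R_t\|^2\gtrsim\gamma(a^2+\|\Delta_{\cdot t}\|^2)$. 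If $T$-dependent losses such as $\gamma^{O(1)}C^{-T}$ appear, they are absorbed by \eqref{eq:common-tail-margin} and $T=\Theta((\log d)^{1/4})$, since the claimed bound only needs the factor $(d/n)/T$ in front of a loss weighted by $1/d$ relative to the $a$ term.
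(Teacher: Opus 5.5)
Your identification of the left side as $\frac d{2T}\sum_{t=1}^{T-1}\|R_t\|_{d,n}^2$ is correct. The reduction you make next is false, however, so the ``main obstacle'' cannot be overcome by any decomposition.

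The per-$t$ inequality $\|R_t\|_{d,n}^2\ge c\,\frac dn\bigl(a^2+\frac1d\sum_\ell\Delta_{\ell t}^2\bigr)$, with $c$ independent of $t$, fails. Choose $\Delta_{\cdot t}$ to be $-a$ times the coefficients of $P_t[\lambda p_t^{\rm GD}]$, so that $R_t=a\bigl(\lambda p_t^{\rm GD}-P_t[\lambda p_t^{\rm GD}]\bigr)$. Then $\|R_t\|_{d,n}^2$ is $a^2$ times the squared distance from $\lambda p_t^{\rm GD}$ (leading coefficient $\pm\eta_0^t$) to the zero-constant polynomials of degree at most $t$. Under MP this distance equals $\eta_0^{2t}(d/n)^tS_{t+1}/S_t$, with $S_k=\sum_{r=0}^k(d/n)^r$ (cf.\ \eqref{eq:finite-teacher-mp-projection}), and under $\nu_{d,n}$ it differs from this by $d^{-1+o(1)}$. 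That quantity is exponentially small in $t$, whereas your right side is at least $c\,(d/n)a^2$, and $t$ runs up to $T-1\to\infty$.

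Two related steps fail for the same reason. The top-degree component of $R_t$ carries only $a^2\eta_0^{2t}(d/n)^{t+1}$, so it cannot give order-$d/n$ control of $a$. A lower bound built only from the degree-$0$ and degree-$1$ orthogonal components cannot control $t$ free coefficients once $t\ge3$, since it vanishes on a subspace of codimension at most two.

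The missing idea is that you need the $a^2$ control only once. The right side weights $a^2$ by $d\,(d/n)/T$ but $\|\Delta\|_F^2$ only by $(d/n)/T$.
\begin{itemize}
\item \textbf{The $a$ term.} Take it from the single summand $t=1$, where $R_1=a\eta_0\lambda^2+\Delta_{11}\lambda$. Lemma~\ref{lem:finite-wishart-quadratic-gap} (exact Wishart moments through order four) gives $\|R_1\|_{d,n}^2\ge\eta_0^2a^2\min_b\|\lambda^2-b\lambda\|_{d,n}^2\ge\eta_0^2a^2\,d/n$. Hence the left side is at least $\frac d{2T}\|R_1\|_{d,n}^2\ge c\,\frac{d/n}{T}\,d\,a^2$. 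This is the paper's bound $\E B(\Sb)\ge c\,d(d/n)/T$.
\item \textbf{The $\Delta$ term.} The Gram bound you dismissed as ``far too weak'' already suffices. The triangular change of basis to monomials has inverse norm at most $\exp\{CT\log(e+T)\}$. Lemma~\ref{lem:finite-wishart-moment-lower} then gives a least singular value of at least $\exp\{-CT\log(e+T)\}(d/n)^{CT}=d^{-o(1)}$. Hence $\|\Delta_{\cdot t}\|^2\le d^{o(1)}\|R_t\|_{d,n}^2$, and $\frac{d/n}{T}\|\Delta\|_F^2\le d^{-1+o(1)}$ times the left side, which is at most the left side for large $d$.
\end{itemize}
Adding the two estimates proves the lemma.

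This is essentially the paper's proof, with one difference. The paper inverts only the $\Delta$-part $P_tA_t-aP_t[\lambda p_t^{\rm GD}]$ and absorbs the leftover $a^2\sum_t\|P_t[\lambda p_t^{\rm GD}]\|_{d,n}^2\le CTa^2$ using the $t=1$ bound. Keeping $\lambda p_t^{\rm GD}$ in your basis instead gives $\|R_t\|_{d,n}^2\ge d^{-o(1)}(a^2+\|\Delta_{\cdot t}\|^2)$ directly and skips that absorption.
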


\begin{proof}
By the definitions of $B,\Cb_1,\Cb_2$ and the expansion in
\eqref{eq:guide-auxiliary-loss},
\[
 \mathcal L_{\rm GD,B}=(\widetilde\theta_2+\eta_0)^2\E B(\Sb),
 \qquad
 \mathcal L_{\rm GD,C}
 =\frac1{2T}\sum_{t=1}^{T-1}\E\tr\bigl((P_tA_t)(\Sb)^2\bigr).
\]
Since $P_1$ projects onto $\operatorname{span}\{\lambda\}$,
\begin{align}
 \E B(\Sb)
 &=\frac1{2T}\sum_{t=1}^{T-1}\E\bigl\|
 (\lambda p_t^{\rm GD}-P_t[\lambda p_t^{\rm GD}])(\Sb)\bigr\|_F^2\notag\\
 &\ge\frac1{2T}\E\bigl\|
 (\lambda p_1^{\rm GD}-P_1[\lambda p_1^{\rm GD}])(\Sb)\bigr\|_F^2\notag\\
 &=\frac{\eta_0^2}{2T}\min_{b\in\R}
 \E\|\Sb^2-b\Sb\|_F^2
 \qquad(p_1^{\rm GD}(\lambda)=\eta_0\lambda)\notag\\
 &=\frac{\eta_0^2}{2T}\min_{b\in\R}\Bigl\{
 \E\tr(\Sb^4)-2b\E\tr(\Sb^3)+b^2\E\tr(\Sb^2)\Bigr\}\notag\\
 &=\frac{\eta_0^2}{2T}\left(
 \E\tr(\Sb^4)-\frac{(\E\tr(\Sb^3))^2}{\E\tr(\Sb^2)}\right)
 \qquad\left(b=\frac{\E\tr(\Sb^3)}{\E\tr(\Sb^2)}\right)\notag\\
 &\ge c\frac{d(d/n)}{T}.
 \label{eq:guide-auxiliary-gap}
\end{align}
where the last line follows from Lemma~\ref{lem:finite-wishart-quadratic-gap}
and $\eta_0\ge0.4$.

For the remaining coefficients, the projection of
\eqref{eq:guide-auxiliary-loss} gives
\[
 \bigl((\widetilde{\bTheta}_3)_{1t}-\eta_0\bigr)\Sb
 +\sum_{\ell=2}^t(\widetilde{\bTheta}_3)_{\ell t}p_\ell^{\rm GD}(\Sb)
 =(P_tA_t)(\Sb)
 -(\widetilde\theta_2+\eta_0)(P_t[\lambda p_t^{\rm GD}])(\Sb).
\]
We first consider general coefficients $u_1,\ldots,u_t\in\R$ for
$1\le t<T$, and write
\[
 f(\lambda)=u_1\lambda+\sum_{\ell=2}^t u_\ell p_\ell^{\rm GD}(\lambda)
 =\sum_{k=1}^t c_k\lambda^k.
\]
With $z=1-\eta_0\lambda$, the two representations give
\begin{align*}
 f\left(\frac{1-z}{\eta_0}\right)
 &=\frac{u_1}{\eta_0}(1-z)+\sum_{\ell=2}^t u_\ell(1-z^\ell)\\
 &=\sum_{k=1}^t c_k\eta_0^{-k}
   \sum_{j=0}^k(-1)^j\binom{k}{j}z^j.
\end{align*}
Comparing the coefficients of $z,z^2,\ldots,z^t$ yields
\[
 u_1=\sum_{k=1}^t k\eta_0^{1-k}c_k,
 \qquad
 u_\ell=(-1)^{\ell+1}\sum_{k=\ell}^t
 \binom{k}{\ell}\eta_0^{-k}c_k,\quad 2\le\ell\le t.
\]
Since $\eta_0^{-1}\le2.5$ and $k,\binom{k}{\ell}\le2^t$,
Cauchy--Schwarz gives
\[
 \sum_{\ell=1}^t u_\ell^2
 \le t\sum_{k=1}^t\left[k^2\eta_0^{2-2k}
 +\sum_{\ell=2}^k\binom{k}{\ell}^2\eta_0^{-2k}\right]c_k^2
 \le t^2 25^t\sum_{k=1}^t c_k^2.
\]
The finite-Wishart moment bound in Lemma~\ref{lem:finite-wishart-moment-lower}
therefore implies
\begin{align}
 \frac1d\E\tr\bigl(f(\Sb)^2\bigr)
 &\ge\frac{(d/n)^t}{2(t+1)16^t}\sum_{k=1}^t c_k^2\notag\\
 &\ge\frac{(d/n)^t}{2t^2(t+1)400^t}\sum_{\ell=1}^t u_\ell^2\notag\\
 &\ge\exp\{-CT\log(e+T)\}(d/n)^{CT}\sum_{\ell=1}^t u_\ell^2.
\end{align}
We then apply this estimate column by column with
\[
 (u_1,\ldots,u_t)^{\mathsf T}
 =\bigl[\widetilde{\bTheta}_3-\eta_0\eb_1(\one_T-\eb_0)^{\mathsf T}\bigr]_{1:t,t},
\]
and sum over $1\le t<T$. Splitting the squared norm gives
\begin{align*}
 &\bigl\|\widetilde{\bTheta}_3-\eta_0\eb_1(\one_T-\eb_0)^{\mathsf T}\bigr\|_F^2\\
 &\quad\le\exp\{CT\log(e+T)\}(n/d)^{CT}\frac1d
 \sum_{t=1}^{T-1}\E\tr\!\Bigl[\bigl(
 (P_tA_t)(\Sb)-(\widetilde\theta_2+\eta_0)(P_t[\lambda p_t^{\rm GD}])(\Sb)
 \bigr)^2\Bigr]\\
 &\quad\le\exp\{CT\log(e+T)\}(n/d)^{CT}
 \frac2d\sum_{t=1}^{T-1}\left[
 \E\bigl\|(P_tA_t)(\Sb)\bigr\|_F^2
 +(\widetilde\theta_2+\eta_0)^2
 \E\bigl\|(P_t[\lambda p_t^{\rm GD}])(\Sb)\bigr\|_F^2\right]\\
 &\quad=\exp\{CT\log(e+T)\}(n/d)^{CT}
 \left[\frac2d\sum_{t=1}^{T-1}\E\bigl\|(P_tA_t)(\Sb)\bigr\|_F^2
 +2(\widetilde\theta_2+\eta_0)^2
 \sum_{t=1}^{T-1}\|P_t[\lambda p_t^{\rm GD}]\|_{d,n}^2\right]\\
 &\quad=\exp\{CT\log(e+T)\}(n/d)^{CT}
 \left[\frac{4T}{d}\mathcal L_{\rm GD,C}
 +2(\widetilde\theta_2+\eta_0)^2
 \sum_{t=1}^{T-1}\|P_t[\lambda p_t^{\rm GD}]\|_{d,n}^2\right]\\
 &\quad\le\exp\{CT\log(e+T)\}(n/d)^{CT}
 \left[\frac{4T}{d}\mathcal L_{\rm GD,C}
 +CT(\widetilde\theta_2+\eta_0)^2\right].
\end{align*}
Here the loss decomposition~\eqref{eq:guide-loss-decomposition} gives
\[
 \sum_{t=1}^{T-1}\E\bigl\|(P_tA_t)(\Sb)\bigr\|_F^2
 =\sum_{t=1}^{T-1}\E\tr\bigl((P_tA_t)(\Sb)^2\bigr)
 =2T\mathcal L_{\rm GD,C},
\]
and the last inequality uses $\|P_t[\lambda p_t^{\rm GD}]\|_{d,n}\le
\|\lambda p_t^{\rm GD}\|_{d,n}$ and Lemma~\ref{lem:finite-teacher-trace-estimates}.
Substituting the lower bound for $\E B(\Sb)$ yields
\begin{align*}
 &\frac{d/n}{T}
 \bigl\|\widetilde{\bTheta}_3-\eta_0\eb_1(\one_T-\eb_0)^{\mathsf T}\bigr\|_F^2\\
 &\quad\le\frac{CT}{d}\exp\{CT\log(e+T)\}(n/d)^{CT}
 \bigl(\mathcal L_{\rm GD,B}+\mathcal L_{\rm GD,C}\bigr)\\
 &\quad\le\mathcal L_{\rm GD,B}+\mathcal L_{\rm GD,C}
 \qquad\bigl(n=\Theta(d),\ T=\Theta((\log d)^{1/4})\bigr).
\end{align*}
Combining this estimate with
$d(d/n)(\widetilde\theta_2+\eta_0)^2/T\le C\mathcal L_{\rm GD,B}$
proves~\eqref{eq:guide-auxiliary-nondegenerate}.
\end{proof}

\begin{lemma}
\label{lem:guide-early-energy}
Under~\eqref{eq:warm-init}, for every deterministic
$H=d^{o(1)}$, uniformly for $0\le\tau\le H$,
\begin{align}
 \sup_{0\le r\le\tau}
 \bigl(|v_{31}(r)|+|w_{13}(r)|+|v_{33}(r)-v_{33}(0)|\bigr)
 &\le C\sqrt{\frac{\tau}{T}},\notag\\
 \sup_{0\le r\le\tau}\|\Wb_{55}(r)\|_F
 &\le C\sqrt{\frac{d\tau}{T}},\notag\\
 \int_0^\tau\bigl(|\dot v_{31}(r)|+|\dot v_{33}(r)|\bigr)\,\mathrm dr
 &\le C\sqrt{\frac{\tau}{T}},\notag\\
 \int_0^\tau\left|\left(\frac{v_{31}}{v_{33}}\right)'(r)\right|\,\mathrm dr
 &\le C\left(\sqrt{\frac{\tau}{T}}+\frac{\tau}{T}\right).
 \label{eq:guide-early-energy}
\end{align}
\end{lemma}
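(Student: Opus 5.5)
The plan is an energy-dissipation argument along the Stage-1 gradient flow, using the gradient identity \eqref{eq:guide-energy} together with the invariants of Lemma~\ref{lem:guide-invariants}.

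\textbf{Energy bound.} Since $-\frac{\dd}{\dd\tau}\cL_{\rm GD}=\|\nabla_{\bar\btheta}\cL_{\rm GD}\|^2$, integrating from $0$ to $\tau$ gives
\[
\int_0^\tau\|\dot{\bar\btheta}(r)\|^2\dd r\le\cL_{\rm GD}(0).
\]
At the warm initialization \eqref{eq:warm-init} we have $v_{31}(0)=w_{13}(0)=0$, so $\widetilde\theta_1=\widetilde\theta_2=0$. All entries of $\widetilde\bTheta_3$ are $O(1)$, because $v_{33}(0)$ and $\sigma$ are constants and all other blocks vanish. Lemma~\ref{lem:stage1-loss} then bounds $\cL_{\rm GD}(0)$ through $\E A(\Sb)$, $\E B(\Sb)$, $\E\|\Cb_1\|^2$ and $\E\|\Cb_2\|^2$.

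The mismatch between the $1/(2T)$ normalization and the $d$-dimensional traces is handled as follows. Every term in $\cL_{\rm GD}$ has the form $\frac{1}{2T}\sum_t\E\,\tr(\cdot)$ with a trace of size $O(d)$, and Lemma~\ref{lem:finite-teacher-trace-estimates} controls the trace moments of $\Sb$ and of $p^{\rm GD}_t(\Sb)$. This gives $\cL_{\rm GD}(0)\le Cd$ at worst, and $Cd/T$ per effective coordinate.

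Next I would use the scalar form of the gradient flow. For the isotropic blocks, $\dot v_{31}=-\frac1d\partial_{v_{31}}\cL_{\rm GD}$, so $d\,\dot v_{31}^2$ is one summand of the dissipation. Cauchy--Schwarz then gives
\[
\int_0^\tau|\dot v_{31}|\le\sqrt{\tau\int_0^\tau\dot v_{31}^2}\le\sqrt{\tau\,\cL_{\rm GD}(0)/d}\le C\sqrt{\tau/T},
\]
provided $\cL_{\rm GD}(0)\le Cd/T$. The same bound holds for $w_{13}$ and $v_{33}$. For $\Wb_{55}$ the flow has no $1/d$ factor, so $\int\|\dot\Wb_{55}\|_F\le\sqrt{\tau\,\cL_{\rm GD}(0)}\le C\sqrt{d\tau/T}$. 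Since $v_{31},w_{13},\Wb_{55}$ start at $0$, these integrals give the first two displayed bounds and the third.

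The step I expect to be the main obstacle is establishing $\cL_{\rm GD}(0)\le Cd/T$ rather than $Cd$. The $(\widetilde\bTheta_3)_{\ell t}$ terms at initialization equal $v_{33}(0)\sigma$-type quantities only through $w_{24}$ and $w_{44}$, but $w_{44}(0)=0$ and $v_{31}(0)=0$, so in fact $\widetilde\bTheta_3(0)=0$. The initial residual is then the teacher increment itself, $\frac1{2T}\sum_t\E\|\wb^{\rm GD}_{t+1}-\wb^{\rm GD}_t\|^2$. Because $\eta_0\le0.9$ and the spectrum of $\Sb$ lies in $[(1-\sqrt{d/n})^2,(1+\sqrt{d/n})^2]$ with high probability, the increments decay geometrically in $t$. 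Their sum is therefore $O(d)$ uniformly in $T$, and the $1/(2T)$ prefactor yields $Cd/T$. I would verify this carefully, handling the rare event of a large $\|\Sb\|$ by moment bounds.

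For the ratio bound, write $(v_{31}/v_{33})'=\dot v_{31}/v_{33}-v_{31}\dot v_{33}/v_{33}^2$. Lemma~\ref{lem:guide-invariants} gives $v_{33}\ge v_{33}(0)>0$, so the first term integrates to $C\sqrt{\tau/T}$. For the second term, $\sup|v_{31}|\le C\sqrt{\tau/T}$ and $\int|\dot v_{33}|\le C\sqrt{\tau/T}$, so its integral is at most $C\tau/T$. Summing gives the final display. The requirement $H=d^{o(1)}$ is used only to keep the constants uniform and to absorb the high-probability spectral events.
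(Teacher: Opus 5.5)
Your proposal is correct and is essentially the paper's proof: bound $\mathcal L_{\rm GD}(0)\le Cd/T$ using the initial residual $-\eta_0\lambda(1-\eta_0\lambda)^t$, integrate the dissipation identity \eqref{eq:guide-energy}, apply Cauchy--Schwarz with the $1/d$ scaling of the isotropic scalars, and treat the ratio by the quotient rule with $v_{33}\ge v_{33}(0)$. The one difference is that the paper obtains the geometric decay from the moment transfer in Lemma~\ref{lem:finite-teacher-transfer} ($|1-\eta_0\lambda|<17/20$ on the MP support, which uses $\eta_0\ge0.4$ as well as $\eta_0\le0.9$), whereas you use spectral localization plus a tail bound; also, contrary to your last remark, $H=d^{o(1)}$ plays no role here, since the bounds hold for every $\tau\ge0$.
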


\begin{proof}
At initialization,
$A_t(\lambda)=-\eta_0\lambda(1-\eta_0\lambda)^t$.  Lemma~\ref{lem:finite-teacher-transfer} gives the bound needed below:
\begin{equation}
\begin{aligned}
&\frac{\mathcal L_{\rm GD}(0)}d
 =\frac{\eta_0^2}{2T}\sum_{t=0}^{T-1}
   \frac1d\E\tr\!\left[\Sb^2(\Ib-\eta_0\Sb)^{2t}\right]\\
&\le \frac{C}{T}\sum_{t=0}^{T-1}(17/20)^{2t}+d^{-1+o(1)}\\
&\le \frac{C}{T}.
\end{aligned}
\label{eq:guide-early-initial-loss}
\end{equation}
Indeed, Lemma~\ref{lem:finite-teacher-transfer} is applied with
$(a,b)=(2t,2)$, and the
summed error remains $d^{-1+o(1)}$; since
$T=\Theta((\log d)^{1/4})$, this is $o(T^{-1})$.  Thus
\begin{equation}
 \mathcal L_{\rm GD}(0)\le \frac{Cd}{T}.
 \label{eq:guide-early-initial-loss-raw}
\end{equation}
Write
\[
 \bar{\btheta}=(\Vb_{31},\Wb_{13},\Vb_{33},w_{24},\wb_{25},
 w_{44},\wb_{45},\wb_{54},\Wb_{55})
\]
for the active scalar, vector, and matrix coordinates.  By
~\eqref{eq:guide-energy},
\begin{equation*}
\begin{aligned}
&-\frac{\mathrm d}{\mathrm d\tau}\mathcal L_{\rm GD}
 =\|\nabla_{\bar{\btheta}}\mathcal L_{\rm GD}\|^2\\
&=d\bigl(\dot v_{31}^{\,2}+\dot w_{13}^{\,2}+\dot v_{33}^{\,2}\bigr)
 +\dot w_{24}^{\,2}+\|\dot{\wb}_{25}\|^2
 +\dot w_{44}^{\,2}+\|\dot{\wb}_{45}\|^2\\
&\qquad+\|\dot{\wb}_{54}\|^2+\|\dot{\Wb}_{55}\|_F^2.
\end{aligned}
\end{equation*}
Consequently, for $0\le\tau\le H$,
\begin{equation*}
\begin{aligned}
&\int_0^\tau d\bigl(\dot v_{31}^{\,2}+\dot w_{13}^{\,2}+\dot v_{33}^{\,2}\bigr)\,\mathrm dr\\
&\quad+\int_0^\tau\bigl(\dot w_{24}^{\,2}+\|\dot{\wb}_{25}\|^2+\dot w_{44}^{\,2}
 +\|\dot{\wb}_{45}\|^2+\|\dot{\wb}_{54}\|^2
 +\|\dot{\Wb}_{55}\|_F^2\bigr)\,\mathrm dr\\
&=\mathcal L_{\rm GD}(0)-\mathcal L_{\rm GD}(\tau)
 \le\frac{Cd}{T},\\
&\int_0^\tau\bigl(\dot v_{31}^{\,2}+\dot w_{13}^{\,2}+\dot v_{33}^{\,2}\bigr)\,\mathrm dr
 \le\frac CT,\\
&\int_0^\tau\|\dot{\Wb}_{55}\|_F^2\,\mathrm dr
 \le\frac{Cd}{T}.
\end{aligned}
\end{equation*}

\noindent\textup{(i)} By Cauchy--Schwarz and the initialization,
\begin{equation*}
\begin{aligned}
&\sup_{0\le r\le\tau}|v_{31}(r)|
 \le |v_{31}(0)|+\sqrt{\tau}\left(\int_0^\tau\dot v_{31}^{\,2}\,\mathrm dr\right)^{1/2}
 \le C\sqrt{\frac{\tau}{T}},\\
&\sup_{0\le r\le\tau}|w_{13}(r)|
 \le |w_{13}(0)|+\sqrt{\tau}\left(\int_0^\tau\dot w_{13}^{\,2}\,\mathrm dr\right)^{1/2}
 \le C\sqrt{\frac{\tau}{T}},\\
&\sup_{0\le r\le\tau}|v_{33}(r)-v_{33}(0)|
 \le\sqrt{\tau}\left(\int_0^\tau\dot v_{33}^{\,2}\,\mathrm dr\right)^{1/2}
 \le C\sqrt{\frac{\tau}{T}}.
\end{aligned}
\end{equation*}

\noindent\textup{(ii)} Similarly,
\begin{equation*}
\begin{aligned}
&\sup_{0\le r\le\tau}\|\Wb_{55}(r)\|_F
 \le \|\Wb_{55}(0)\|_F
 +\sqrt{\tau}\left(\int_0^\tau\|\dot{\Wb}_{55}\|_F^2\,\mathrm dr\right)^{1/2}
 \le C\sqrt{\frac{d\tau}{T}},\\
&\int_0^\tau\bigl(|\dot v_{31}|+|\dot v_{33}|\bigr)\,\mathrm dr\\
&\quad\le\sqrt{\tau}\left[
 \left(\int_0^\tau\dot v_{31}^{\,2}\,\mathrm dr\right)^{1/2}
 +\left(\int_0^\tau\dot v_{33}^{\,2}\,\mathrm dr\right)^{1/2}\right]
 \le C\sqrt{\frac{\tau}{T}}.
\end{aligned}
\end{equation*}

\noindent\textup{(iii)} Finally, $v_{33}(r)\ge v_{33}(0)>0$ by
~\eqref{eq:guide-history-invariant}, and hence
\begin{equation*}
\begin{aligned}
&\int_0^\tau\left|\left(\frac{v_{31}}{v_{33}}\right)'\right|\,\mathrm dr\\
&=\int_0^\tau
 \left|\frac{\dot v_{31}v_{33}-v_{31}\dot v_{33}}{v_{33}^2}\right|\,\mathrm dr\\
&\le C\int_0^\tau|\dot v_{31}|\,\mathrm dr
 +C\sup_{0\le r\le\tau}|v_{31}(r)|
       \int_0^\tau|\dot v_{33}|\,\mathrm dr\\
&\le C\sqrt{\frac{\tau}{T}}+C\frac{\tau}{T}
 \qquad\bigl(v_{33}(r)\ge v_{33}(0)>0\bigr).
\end{aligned}
\end{equation*}
\end{proof}

\begin{lemma}
\label{lem:guide-zero-integrated-forcing}
Under~\eqref{eq:warm-init}, for every deterministic interval
\([0,H]\) with $H=d^{o(1)}$, uniformly for $0\le s\le\tau\le H$,
\begin{equation}
 \begin{aligned}
 &\left|w_{13}(\tau)-w_{13}(s)+\frac{2\mathbb E[B(\Sb)]}{d}
       \int_s^\tau v_{31}(r)(\widetilde\theta_2(r)+\eta_0)\,\mathrm dr\right|\\
 &\quad\le\frac{\exp\{CT\log(e+T)\}(n/d)^{CT}}{\sqrt d}
                    (1+H/T)^{3/2}
 =d^{-1/2+o(1)}.
 \end{aligned}
 \label{eq:guide-zero-b-integrated}
\end{equation}
\end{lemma}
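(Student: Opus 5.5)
We will derive the evolution equation for \(w_{13}\) directly from the gradient flow and the loss decomposition \eqref{eq:guide-loss-decomposition}.

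\textbf{Step 1: reduce \(\dot w_{13}\) to derivatives in \(\widetilde\theta_2\).} By \eqref{eq:guide-energy}, \(\dot w_{13}=-\frac1d\partial_{w_{13}}\mathcal L_{\rm GD}\). The parameter \(w_{13}\) enters the loss only through \(\widetilde\theta_2=v_{31}w_{13}\). Indeed, \(\widetilde\theta_1\) and \(\widetilde{\bTheta}_3\) do not involve \(w_{13}\). Hence
\[
\dot w_{13}=-\frac{v_{31}}{d}\,\partial_{\widetilde\theta_2}\mathcal L_{\rm GD}.
\]
Differentiating \eqref{eq:guide-loss-decomposition} gives
\[
\partial_{\widetilde\theta_2}\mathcal L_{\rm GD}=2\E[B(\Sb)](\widetilde\theta_2+\eta_0)+2\E\bigl\langle \Cb_1(\Sb),\,\Cb_1(\Sb)(\widetilde\theta_2+\eta_0)+\Cb_2(\Sb)\operatorname{vec}(\Delta_3)\bigr\rangle ,
\]
where \(\Delta_3=\widetilde{\bTheta}_3-\eta_0\eb_1(\one_T-\eb_0)^{\mathsf T}\). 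The first term, integrated, is exactly the forcing term in \eqref{eq:guide-zero-b-integrated}. It therefore suffices to bound
\[
\frac1d\int_s^\tau |v_{31}(r)|\,\bigl|R(r)\bigr|\,\mathrm dr,\qquad R:=2\E\langle\Cb_1,\Cb_1(\widetilde\theta_2+\eta_0)+\Cb_2\operatorname{vec}(\Delta_3)\rangle,
\]
uniformly by \(d^{-1/2}\exp\{CT\log(e+T)\}(n/d)^{CT}(1+H/T)^{3/2}\).

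\textbf{Step 2: Cauchy--Schwarz against the C-part of the loss.} By Cauchy--Schwarz,
\[
|R|\le 2\bigl(\E\|\Cb_1(\Sb)\|^2\bigr)^{1/2}\,\mathcal L_{\rm GD,C}^{1/2}.
\]
Here \(\E\|\Cb_1\|^2=\frac1{2T}\sum_t\|P_t[\lambda p_t^{\rm GD}]\|_{d,n}^2\cdot d\le Cd\) by Lemma~\ref{lem:finite-teacher-trace-estimates}, and \(\mathcal L_{\rm GD,C}\le\mathcal L_{\rm GD}(0)\le Cd/T\) by \eqref{eq:guide-early-initial-loss-raw}. This crude bound gives \(|R|/d=O(1)\), which is too weak. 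The gain of \(d^{-1/2}\) must come from integrating in time. Specifically, I will use the energy identity \(\int_0^H\|\nabla_{\widetilde{\bTheta}_3\text{-coordinates}}\mathcal L\|^2\le \mathcal L_{\rm GD}(0)\le Cd/T\), which controls the \(\widetilde{\bTheta}_3\)-gradient \(2\E\Cb_2^{\mathsf T}(\Cb_1(\widetilde\theta_2+\eta_0)+\Cb_2\operatorname{vec}\Delta_3)\) through the \(\Wb_{55},\wb_{54},\ldots\) velocities. That gradient is the same residual vector tested against \(\Cb_2\) instead of \(\Cb_1\). The key observation is that \(P_t[\lambda p_t^{\rm GD}]\) lies in the span \(\{\lambda,p_2^{\rm GD},\ldots,p_t^{\rm GD}\}\), so \(\Cb_1=\Cb_2\operatorname{vec}(\Gamma)\) for an explicit coefficient matrix \(\Gamma\). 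By the coefficient conversion in the proof of Lemma~\ref{lem:guide-auxiliary-nondegenerate} together with Lemma~\ref{lem:finite-wishart-moment-lower}, \(\|\Gamma\|_F\le\exp\{CT\log(e+T)\}(n/d)^{CT}\). Consequently
\[
|R|\le\|\Gamma\|_F\,\|\nabla_{\widetilde{\bTheta}_3}\mathcal L_{\rm GD}\|_F .
\]

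\textbf{Step 3: convert to original-parameter velocities and integrate.} The \(\widetilde{\bTheta}_3\) entries depend on \(w_{44},\wb_{45},\wb_{54},\Wb_{55}\) linearly, with coefficient \(v_{33}\ge v_{33}(0)>0\) by Lemma~\ref{lem:guide-invariants}. The first row also involves \(v_{31}(w_{24}+\wb_{25}\eb_t)\). Hence \(\|\nabla_{\widetilde{\bTheta}_3}\mathcal L\|_F\le C\|\dot\Wb_{55}\|_F+\cdots\), i.e.\ it is bounded by the velocities of these blocks divided by \(v_{33}(0)\), plus a correction from the first row controlled using \(\dot w_{24},\dot\wb_{25}\) and \(v_{31}\). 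Combining this with \(\sup|v_{31}|\le C\sqrt{\tau/T}\) from Lemma~\ref{lem:guide-early-energy} and Cauchy--Schwarz in time,
\[
\frac1d\int_s^\tau|v_{31}||R|\le\frac{C\|\Gamma\|_F}{d}\sqrt{H/T}\,\sqrt H\Bigl(\int_0^H\|\dot\Wb_{55}\|_F^2+\cdots\Bigr)^{1/2}\le\frac{C\|\Gamma\|_F}{d}\frac{H}{\sqrt T}\sqrt{\frac dT},
\]
which is at most \(d^{-1/2}\exp\{CT\log(e+T)\}(n/d)^{CT}(1+H/T)^{3/2}\). Since \(T=\Theta((\log d)^{1/4})\) and \(H=d^{o(1)}\), this equals \(d^{-1/2+o(1)}\).

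\textbf{Main obstacle.} The delicate step is the first-row correction in Step 3. There, \(\partial(\widetilde{\bTheta}_3)_{1t}\) is not solely \(v_{33}\)-driven, so the passage from the \(\widetilde{\bTheta}_3\)-gradient to original velocities must separate the \(\eta_0 v_{33}\) contribution from the \(v_{31}\) contribution. I would handle this by using only the \(\wb_{54},\Wb_{55}\) coordinates, whose Jacobian onto each entry \((\ell,t)\) is exactly \(v_{33}\) (or \(\eta_0v_{33}\) for \(\ell=1\)) and is diagonal. This avoids \(w_{24}\) altogether.
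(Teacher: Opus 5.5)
Your proof is correct and follows the paper's skeleton, but the final estimate takes a shorter route. The shared skeleton is:
\begin{itemize}
\item isolate the $\mathbb E[B(\Sb)]$ forcing term;
\item write $\Cb_1=\Cb_2\operatorname{vec}(\Gamma)$, where $\Gamma_{\ell t}=u_{\ell t}$ are the projection coefficients of $P_t[\lambda p_t^{\rm GD}]$, so the remainder equals $\langle\Gamma,\nabla_{\widetilde{\bTheta}_3}\mathcal L_{\rm GD}\rangle_F$;
\item convert this gradient into $\dot\Wb_{55}$ through the diagonal Jacobian $\nabla_{(\Wb_{55})_{\ell t}}\mathcal L_{\rm GD}=\eta_0^{\ind\{\ell=1\}}v_{33}(\nabla_{\widetilde{\bTheta}_3}\mathcal L_{\rm GD})_{\ell t}$.
\end{itemize}

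The paper integrates by parts in time. It moves the derivative onto $v_{31}/v_{33}$ and then uses $\sup_{[0,H]}\|\Wb_{55}\|_F\le C\sqrt{dH/T}$ together with the total-variation bound $\int_0^H|(v_{31}/v_{33})'|\le C(\sqrt{H/T}+H/T)$ from Lemma~\ref{lem:guide-early-energy}.

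You instead apply Cauchy--Schwarz directly to $\int_s^\tau|v_{31}|\,\|\dot\Wb_{55}\|_F\,\mathrm dr$. You use $\sup|v_{31}|\le C\sqrt{H/T}$ and the dissipation bound $\int_0^H\|\dot\Wb_{55}\|_F^2\le Cd/T$.

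Both arguments exploit the same fact. The energy available to $\Wb_{55}$ is only $O(d/T)$, so its displacement is $O(\sqrt{dH/T})$ rather than $O(d)$, and this is where the $d^{-1/2}$ comes from. Your route is shorter and does not need the bound on $(v_{31}/v_{33})'$. It also yields the factor $H/T$ in place of $(1+H/T)^{3/2}$.

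Your bound on $\|\Gamma\|_F$ is an equivalent substitute for the paper's appeal to the Gram-coefficient lemma. You combine the coefficient conversion, Lemma~\ref{lem:finite-wishart-moment-lower}, and $\|P_t[\lambda p_t^{\rm GD}]\|_{d,n}\le\|\lambda p_t^{\rm GD}\|_{d,n}\le C$.

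Two small corrections:
\begin{itemize}
\item The first-row correction via $\dot w_{24},\dot{\wb}_{25}$ mentioned in Step~3 is unnecessary.
\item $\wb_{54}$ does not have a diagonal Jacobian: its $\ell$-th entry enters every $(\widetilde{\bTheta}_3)_{\ell t}$ in row $\ell$. Use $\Wb_{55}$ alone. This already suffices, because $v_{33}\ge v_{33}(0)>0$ and $\eta_0\ge 0.4$ give $\|\nabla_{\widetilde{\bTheta}_3}\mathcal L_{\rm GD}\|_F\le C\|\dot\Wb_{55}\|_F$.
\end{itemize}
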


\begin{proof}
Since the projection $P_t$ enters the calculations below, we record its
coefficients explicitly before estimating them. For $1\le t<T$, define
the projection coefficients $u_{1t},\ldots,u_{tt}$ by
\begin{equation}
 P_t[\lambda p_t^{\rm GD}]
 =u_{1t}\lambda+\sum_{\ell=2}^t u_{\ell t}p_\ell^{\rm GD}.
\end{equation}
The residual is orthogonal to the $t$-dimensional projection space, so
\begin{align*}
 \left\langle\lambda p_t^{\rm GD}-P_t[\lambda p_t^{\rm GD}],\lambda\right\rangle_{d,n}&=0,\\
 \left\langle\lambda p_t^{\rm GD}-P_t[\lambda p_t^{\rm GD}],p_j^{\rm GD}\right\rangle_{d,n}&=0,
 \qquad 2\le j\le t.
\end{align*}
Substituting the projection expansion into these identities gives
\begin{align*}
 \left\langle u_{1t}\lambda+\sum_{\ell=2}^t u_{\ell t}p_\ell^{\rm GD},\lambda\right\rangle_{d,n}
 &=\left\langle\lambda p_t^{\rm GD},\lambda\right\rangle_{d,n},\\
 \left\langle u_{1t}\lambda+\sum_{\ell=2}^t u_{\ell t}p_\ell^{\rm GD},p_j^{\rm GD}\right\rangle_{d,n}
 &=\left\langle\lambda p_t^{\rm GD},p_j^{\rm GD}\right\rangle_{d,n},
 \qquad 2\le j\le t.
\end{align*}
Thus the projection conditions give the explicit system
\[
\begin{pmatrix}
 \langle\lambda,\lambda\rangle_{d,n}
   &\langle p_2^{\rm GD},\lambda\rangle_{d,n}&\cdots&\langle p_t^{\rm GD},\lambda\rangle_{d,n}\\
 \langle\lambda,p_2^{\rm GD}\rangle_{d,n}
   &\langle p_2^{\rm GD},p_2^{\rm GD}\rangle_{d,n}&\cdots&\langle p_t^{\rm GD},p_2^{\rm GD}\rangle_{d,n}\\
 \vdots&\vdots&\ddots&\vdots\\
 \langle\lambda,p_t^{\rm GD}\rangle_{d,n}
   &\langle p_2^{\rm GD},p_t^{\rm GD}\rangle_{d,n}&\cdots&\langle p_t^{\rm GD},p_t^{\rm GD}\rangle_{d,n}
\end{pmatrix}
\begin{pmatrix}u_{1t}\\u_{2t}\\\vdots\\u_{tt}\end{pmatrix}
=
\begin{pmatrix}
 \langle\lambda p_t^{\rm GD},\lambda\rangle_{d,n}\\
 \langle\lambda p_t^{\rm GD},p_2^{\rm GD}\rangle_{d,n}\\
 \vdots\\
 \langle\lambda p_t^{\rm GD},p_t^{\rm GD}\rangle_{d,n}
\end{pmatrix}.
\]
By Lemma~\ref{lem:finite-teacher-gram-lower}, the Gram matrix in this
display is positive definite, so the system is nonsingular.  Since
$p_1^{\rm GD}=\eta_0\lambda$ and $0.4\le\eta_0\le0.9$, replacing $p_1^{\rm GD}$ by
$\lambda$ changes the first column only by a bounded factor.  The coefficient
bound in Lemma~\ref{lem:finite-teacher-gram-coefficients}, applied to the
displayed projection identity, gives,
column by column,
\begin{align*}
 \sum_{\ell=1}^{t}u_{\ell t}^2
 &\le \exp\{CT\log(e+T)\}(n/d)^{CT}
 \sum_{\ell=1}^{t}
 \bigl|\langle P_t[\lambda p_t^{\rm GD}],p_{\ell}^{\rm GD}\rangle_{d,n}\bigr|^2\\
 &\le \exp\{CT\log(e+T)\}(n/d)^{CT}
 \sum_{\ell=1}^{t}
 \|P_t[\lambda p_t^{\rm GD}]\|_{d,n}^2\|p_\ell^{\rm GD}\|_{d,n}^2\\[-2pt]
 &\le C t\exp\{CT\log(e+T)\}(n/d)^{CT}
 \|P_t[\lambda p_t^{\rm GD}]\|_{d,n}^2.
\end{align*}
After enlarging $C$, the factor $t\le T$ is absorbed into
$\exp\{CT\log(e+T)\}$.  Summing this estimate and using
Lemma~\ref{lem:finite-teacher-trace-estimates} yields
\begin{align}
 \sum_{t=1}^{T-1}\sum_{\ell=1}^t u_{\ell t}^2
 &\le\exp\{CT\log(e+T)\}(n/d)^{CT}
 \sum_{t=1}^{T-1}\|P_t[\lambda p_t^{\rm GD}]\|_{d,n}^2\notag\\
 &\le\exp\{CT\log(e+T)\}(n/d)^{CT}
 \sum_{t=1}^{T-1}\|\lambda p_t^{\rm GD}\|_{d,n}^2\notag\\
 &\le CT\exp\{CT\log(e+T)\}(n/d)^{CT}.
 \label{eq:guide-zero-projection-coefficient-bound}
\end{align}
For each $1\le t<T$, the definitions of $\Cb_1,\Cb_2$ give
\begin{align*}
 [\Cb_1(\Sb)]_t
 &=\frac1{\sqrt{2T}}\operatorname{vec}\left(
 u_{1t}\Sb+\sum_{\ell=2}^t u_{\ell t}p_\ell^{\rm GD}(\Sb)\right)\\
 &=\sum_{\ell=1}^t u_{\ell t}[\Cb_2(\Sb)]_{t,(\ell,t)}.
\end{align*}
For $1\le\ell\le t<T$, direct differentiation gives
{\small
\begin{align*}
 (\nabla_{\widetilde{\bTheta}_3}\mathcal L_{\rm GD,C})_{\ell t}
 =2\E\Bigl\langle [\Cb_1(\Sb)]_t(\widetilde\theta_2+\eta_0)
 \bigl[\Cb_2(\Sb)\operatorname{vec}
 (\widetilde{\bTheta}_3-\eta_0\eb_1(\one_T-\eb_0)^{\mathsf T})\bigr]_t,
 [\Cb_2(\Sb)]_{t,(\ell,t)}\Bigr\rangle.
\end{align*}
}
Since
$\partial_{w_{13}}\widetilde\theta_1=0$,
$\partial_{w_{13}}\widetilde\theta_2=v_{31}$, and
$\partial_{w_{13}}\widetilde{\bTheta}_3=0$, the chain rule and the preceding
block identities give
\begin{align*}
 \partial_{w_{13}}\mathcal L_{\rm GD,C}
 &=2v_{31}\sum_{t=1}^{T-1}\E\Bigl\langle [\Cb_1(\Sb)]_t(\widetilde\theta_2+\eta_0)
 \bigl[\Cb_2(\Sb)\operatorname{vec}
 (\widetilde{\bTheta}_3-\eta_0\eb_1(\one_T-\eb_0)^{\mathsf T})\bigr]_t,
 [\Cb_1(\Sb)]_t\Bigr\rangle\\
 &=v_{31}\sum_{t=1}^{T-1}\sum_{\ell=1}^t u_{\ell t}
 (\nabla_{\widetilde{\bTheta}_3}\mathcal L_{\rm GD,C})_{\ell t}.
\end{align*}
Consequently,
\begin{align*}
 \partial_{w_{13}}\mathcal L_{\rm GD}
 &=2v_{31}\E[B(\Sb)](\widetilde\theta_2+\eta_0)
 +\partial_{w_{13}}\mathcal L_{\rm GD,C}\\
 &=2v_{31}\E[B(\Sb)](\widetilde\theta_2+\eta_0)
 +v_{31}\sum_{t=1}^{T-1}\sum_{\ell=1}^t u_{\ell t}
 (\nabla_{\widetilde{\bTheta}_3}\mathcal L_{\rm GD,C})_{\ell t}.
\end{align*}
For $1\le\ell\le t<T$, the chain rule for the auxiliary
parameterization gives
\begin{align*}
 \nabla_{(\Wb_{55})_{\ell t}}\mathcal L_{\rm GD,C}
 &=\eta_0^{\ind\{\ell=1\}}v_{33}
 (\nabla_{\widetilde{\bTheta}_3}\mathcal L_{\rm GD,C})_{\ell t},\\
 (\dot{\Wb}_{55})_{\ell t}
 &=-\nabla_{(\Wb_{55})_{\ell t}}\mathcal L_{\rm GD,C}\\
 &=-\eta_0^{\ind\{\ell=1\}}v_{33}
 (\nabla_{\widetilde{\bTheta}_3}\mathcal L_{\rm GD,C})_{\ell t}.
\end{align*}
Substituting this identity into
$\dot w_{13}=-\partial_{w_{13}}\mathcal L_{\rm GD}/d$ yields
\begin{equation}
 \dot w_{13}
 =-\frac{2v_{31}}d\E[B(\Sb)](\widetilde\theta_2+\eta_0)
 +\frac{v_{31}}{d v_{33}}\sum_{t=1}^{T-1}\sum_{\ell=1}^t
 \frac{u_{\ell t}}{\eta_0^{\ind\{\ell=1\}}}
 (\dot{\Wb}_{55})_{\ell t}.
 \label{eq:guide-zero-b-primitive}
\end{equation}
By Lemma~\ref{lem:guide-early-energy}, the four bounds in
~\eqref{eq:guide-early-energy} hold on $[0,H]$.
Integrating~\eqref{eq:guide-zero-b-primitive} by parts therefore gives
\begin{align*}
 &w_{13}(\tau)-w_{13}(s)+\frac{2\E[B(\Sb)]}d
 \int_s^\tau v_{31}(r)(\widetilde\theta_2(r)+\eta_0)\,\mathrm dr\\
 &\quad=\frac1d\left[\frac{v_{31}}{v_{33}}
 \sum_{t=1}^{T-1}\sum_{\ell=1}^t
 \frac{u_{\ell t}(\Wb_{55})_{\ell t}}
 {\eta_0^{\ind\{\ell=1\}}}\right]_s^\tau\\
 &\qquad-\frac1d\int_s^\tau\left(\frac{v_{31}}{v_{33}}\right)'
 \sum_{t=1}^{T-1}\sum_{\ell=1}^t
 \frac{u_{\ell t}(\Wb_{55})_{\ell t}}
 {\eta_0^{\ind\{\ell=1\}}}\,\mathrm dr.
\end{align*}
Taking absolute values and applying Cauchy--Schwarz gives
\begin{align*}
 &\left|\frac1d\left[\frac{v_{31}}{v_{33}}
 \sum_{t=1}^{T-1}\sum_{\ell=1}^t
 \frac{u_{\ell t}(\Wb_{55})_{\ell t}}
 {\eta_0^{\ind\{\ell=1\}}}\right]_s^\tau
 -\frac1d\int_s^\tau\left(\frac{v_{31}}{v_{33}}\right)'
 \sum_{t=1}^{T-1}\sum_{\ell=1}^t
 \frac{u_{\ell t}(\Wb_{55})_{\ell t}}
 {\eta_0^{\ind\{\ell=1\}}}\,\mathrm dr\right|\\
 &\le \frac{C}{d}
 \left(\sum_{t=1}^{T-1}\sum_{\ell=1}^t u_{\ell t}^2\right)^{1/2}
 \Biggl[
 \sup_{0\le r\le H}\left|\frac{v_{31}(r)}{v_{33}(r)}\right|
 \bigl(\|\Wb_{55}(\tau)\|_F+\|\Wb_{55}(s)\|_F\bigr)\\
 &\hspace{42mm}
 +\int_0^H\left|\left(\frac{v_{31}}{v_{33}}\right)'(r)\right|\,\mathrm dr
 \sup_{0\le r\le H}\|\Wb_{55}(r)\|_F\Biggr]\\
 &\le \frac{C}{\sqrt d}
 \left(\sum_{t=1}^{T-1}\sum_{\ell=1}^t u_{\ell t}^2\right)^{1/2}
 \left[\frac HT+\left(\sqrt{H/T}+H/T\right)\sqrt{H/T}\right]\\
 &\le \frac{C}{\sqrt d}
 \left(\sum_{t=1}^{T-1}\sum_{\ell=1}^t u_{\ell t}^2\right)^{1/2}
 (1+H/T)^{3/2}\\
 &\le \frac{\exp\{CT\log(e+T)\}(n/d)^{CT}}{\sqrt d}
 (1+H/T)^{3/2},
 \qquad\bigl(\eta_0^{-1}\le2.5,\ v_{33}(r)\ge v_{33}(0)>0\bigr).
\end{align*}
This proves Lemma~\ref{lem:guide-zero-integrated-forcing}.
\end{proof}

\begin{lemma}
\label{lem:guide-zero-entry}
Under \eqref{eq:warm-init}, there is a time \(\tau_0=d^{o(1)}\) such that
\begin{equation}
 v_{31}<0,\qquad w_{13}>0,\qquad
 v_{31}(w_{24}+\wb_{25}\eb_0)\ge\frac{\eta_0}{2},\qquad
 -v_{31}w_{13}\ge\frac{\eta_0}{2}.
 \label{eq:guide-zero-entry}
\end{equation}
At that time, \(\mathcal L_{\rm GD}<9\eta_0^2\mathbb E[B(\Sb)]/16\).  Consequently, for all \(\tau\ge\tau_0\),
\begin{equation}
 -v_{31}w_{13}>\frac{\eta_0}{4},\qquad |v_{31}|\ge C^{-1}>0.
 \label{eq:guide-zero-no-return}
\end{equation}
\end{lemma}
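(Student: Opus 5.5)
The plan is to track the reduced flow of Lemma~\ref{lem:stage1-invariant} through an explicit early phase starting from \eqref{eq:warm-init}. At $\tau=0$ we have $v_{31}=w_{13}=0$, $w_{24}=\sigma<0$, $\wb_{25}=\zero$, so $\widetilde\theta_1=\widetilde\theta_2=0$.

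\textbf{Phase (a): $v_{31}$ turns negative.} Differentiating $\mathcal L_{\rm GD,A}$ in \eqref{eq:guide-loss-decomposition} gives
\[
\partial_{v_{31}}\mathcal L_{\rm GD,A}=2\E[A(\Sb)](\widetilde\theta_1-\eta_0)(w_{24}+\wb_{25}\eb_0).
\]
At initialization this equals $-2\eta_0\sigma\E A>0$, with $\E A\asymp d/T$. The $B$- and $C$-parts of $\partial_{v_{31}}\mathcal L_{\rm GD}$ carry a factor $w_{13}$, which is initially zero. Hence $\dot v_{31}=-\frac1d\partial_{v_{31}}\mathcal L_{\rm GD}\approx \eta_0\sigma/T<0$. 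Over times $O(T)$, the bounds of Lemma~\ref{lem:guide-early-energy} show that $w_{24}+\wb_{25}\eb_0$ and $v_{33}$ stay within a small constant of their initial values. Therefore $v_{31}$ becomes negative, and $\widetilde\theta_1=v_{31}(w_{24}+\wb_{25}\eb_0)$ increases toward $\eta_0$. I would show that $\widetilde\theta_1$ approaches $\eta_0$ up to an arbitrarily small constant fraction, using the restricted one-dimensional PL structure of $\mathcal L_{\rm GD,A}$ in $\widetilde\theta_1$.

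\textbf{Phase (b): $w_{13}$ grows.} Lemma~\ref{lem:guide-zero-integrated-forcing} gives, up to a $d^{-1/2+o(1)}$ error,
\[
w_{13}(\tau)-w_{13}(s)\approx -\frac{2\E B}{d}\int_s^\tau v_{31}(\widetilde\theta_2+\eta_0)\,\mathrm dr.
\]
By \eqref{eq:guide-auxiliary-gap}, $\E B/d\gtrsim (d/n)/T$. Since $v_{31}<0$ and $\widetilde\theta_2+\eta_0>0$ as long as $\widetilde\theta_2>-\eta_0$, the quantity $w_{13}$ increases at rate at least $c\,|v_{31}|(d/n)/T$. Consequently $\widetilde\theta_2=v_{31}w_{13}$ decreases. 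Comparing with the scalar ODE $\dot\theta_2\approx -c(d/n)T^{-1}v_{31}^2(\theta_2+\eta_0)$, where the $\dot v_{31}$ contribution is controlled, $\widetilde\theta_2$ reaches any prescribed level in $(-\eta_0,-\eta_0/2]$ after time $O(T(n/d)\log(\cdot))=d^{o(1)}$. Because $T=\Theta((\log d)^{1/4})$ and $n/d=\Theta(1)$, every phase lasts polylogarithmic time, so the whole analysis stays inside the window $H=d^{o(1)}$ where Lemmas~\ref{lem:guide-early-energy}--\ref{lem:guide-zero-integrated-forcing} apply.

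\textbf{Phase (c): $\mathcal L_{\rm GD,C}$ becomes small, and the loss bound at $\tau_0$.} Meanwhile $\Wb_{55}$ enters $\widetilde{\bTheta}_3$ with coefficient $v_{33}\ge v_{33}(0)$ and moves at $O(1)$ rate. I would use the nondegeneracy bound of Lemma~\ref{lem:guide-auxiliary-nondegenerate}, specialized to $\widetilde{\bTheta}_3$ at fixed $\widetilde\theta_2$, to drive $\mathcal L_{\rm GD,C}$ below any small constant multiple of $\E B\asymp d(d/n)/T$. Choose $\tau_0$ so that simultaneously:
\begin{itemize}
\item $\widetilde\theta_2\in[-\eta_0+\delta,-\eta_0/2]$, so that $\mathcal L_{\rm GD,B}\le\eta_0^2\E B/4$;
\item $\mathcal L_{\rm GD,A}+\mathcal L_{\rm GD,C}\le\eta_0^2\E B/4$.
\end{itemize}
Then $\mathcal L_{\rm GD}(\tau_0)<9\eta_0^2\E B/16$, and \eqref{eq:guide-zero-entry} holds.

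\textbf{Main obstacle.} The hardest step is making $\mathcal L_{\rm GD,A}$ small. Since $\E A\asymp d/T$ is larger than $\E B$ by a factor $n/d$, the condition $\widetilde\theta_1\ge\eta_0/2$ alone is not enough. We need $|\widetilde\theta_1-\eta_0|\le c\sqrt{d/n}$, and this must be achieved while the scales of $w_{24},\wb_{25}$ are still controlled through the invariants of Lemma~\ref{lem:guide-invariants}.

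\textbf{After $\tau_0$: \eqref{eq:guide-zero-no-return}.} The loss is nonincreasing along gradient flow, so for $\tau\ge\tau_0$,
\[
\E[B](\widetilde\theta_2+\eta_0)^2\le\mathcal L_{\rm GD}(\tau)<\tfrac{9}{16}\eta_0^2\E B.
\]
Hence $|\widetilde\theta_2+\eta_0|<3\eta_0/4$, which gives $-v_{31}w_{13}>\eta_0/4$. In particular $v_{31}$ never vanishes, so by continuity it stays negative. For the lower bound on $|v_{31}|$, invariant \eqref{eq:guide-spatial-invariant} gives $w_{13}^2\le v_{31}^2+\sigma^2/d$. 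Multiplying by $v_{31}^2$ yields
\[
\eta_0^2/16<v_{31}^2w_{13}^2\le v_{31}^4+v_{31}^2\sigma^2/d,
\]
which forces $v_{31}^2\ge\eta_0/4-o(1)$, i.e.\ $|v_{31}|\ge C^{-1}$.
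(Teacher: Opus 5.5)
\textbf{Verdict: genuine gap.} Your final step matches the paper: after $\tau_0$, monotonicity of $\mathcal L_{\rm GD}$ forces $|\widetilde\theta_2+\eta_0|<3\eta_0/4$, and \eqref{eq:guide-spatial-invariant} then gives $|v_{31}|\ge C^{-1}$. The gap is in constructing $\tau_0$, and it starts in phase (a).

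\textbf{Phase (a) uses a false premise.} Lemma~\ref{lem:guide-early-energy} gives no control of $w_{24}$ or $\wb_{25}$. These are un-normalized coordinates, and the energy identity only bounds their displacement by $\sqrt{Cd\tau/T}$. They also do not stay near $\sigma$:
\begin{itemize}
\item By \eqref{eq:guide-spatial-invariant}, $w_{24}^2+\|\wb_{25}\|^2=d(v_{31}^2-w_{13}^2)+\sigma^2$.
\item $\partial_{(\wb_{25})_0}\mathcal L_{\rm GD}=v_{31}\partial_{\widetilde\theta_1}\mathcal L_{\rm GD}$ has size $d|v_{31}|/T$ while $\widetilde\theta_1$ is far from $\eta_0$.
\item So $(v_{31},w_{24}+\wb_{25}\eb_0)$ behaves like a balanced product with learning rates $1/d$ and $O(1)$.
\end{itemize}
Heuristically, then, $\widetilde\theta_1$ approaches $\eta_0$ on a time scale of about $d^{-1/2+o(1)}$. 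At that point $|v_{31}|$ is only of order $d^{-1/4}$ and $|w_{24}+\wb_{25}\eb_0|$ is of order $d^{1/4}$. This matches the paper's \eqref{eq:guide-zero-anchor-sublevel}, which only guarantees $-v_{31}\ge C^{-1}d^{-1/4}$ at this stage.

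A smaller error: the $C$-part of $\partial_{v_{31}}\mathcal L_{\rm GD}$ carries the factor $w_{24}+\wb_{25}\eb_t=\sigma$, not $w_{13}$. So it does not vanish at initialization. It is harmless only because $\Wb_{55}$ drives $P_tA_t$ down to $d^{-1+o(1)}$ within time $d^{-1+o(1)}$.

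\textbf{Consequence for phase (b).} Phase (b) starts with $|v_{31}|\asymp d^{-1/4}$, not order one. Lemma~\ref{lem:guide-zero-integrated-forcing} then only gives $\dot w_{13}\gtrsim d^{-1/4}(d/n)/T$. Your comparison ODE $\dot{\widetilde\theta}_2\approx -c(d/n)T^{-1}v_{31}^2(\widetilde\theta_2+\eta_0)$, with $v_{31}^2\sim d^{-1/2}$, predicts a time of order $d^{1/2}$, not $d^{o(1)}$. Your proposal also has nothing that keeps $v_{31}$ away from zero during this phase.

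\textbf{What is missing.} You need to use \eqref{eq:guide-spatial-invariant} as a growth mechanism, together with loss-level barriers. The paper does three things.
\begin{itemize}
\item \emph{Barrier.} At any point with $v_{31}=0$ one has $\widetilde\theta_1=\widetilde\theta_2=0$. Hence $\mathcal L_{\rm GD}\ge\frac{d\eta_0^2}{2T}(\|\lambda\|_{d,n}^2+\sum_t\|\lambda p_t^{\rm GD}-P_t[\lambda p_t^{\rm GD}]\|_{d,n}^2)$. This exceeds $\mathcal L_{\rm GD}(\tau_b)$ by \eqref{eq:guide-zero-loss-gap}, so $v_{31}<0$ for all $\tau\ge\tau_b$.
\item \emph{Linear growth of $w_{13}$.} The deficit--mobility bound \eqref{eq:guide-zero-deficit-mobility} reaches, in time $d^{o(1)}$, either the target threshold or the sublevel where $\widetilde\theta_1\in[\eta_0/2,3\eta_0/2]$. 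By the invariant and monotonicity, this sublevel keeps $-v_{31}\ge C^{-1}d^{-1/4}$. So $w_{13}$ grows linearly past the $d^{-1/2+o(1)}$ forcing error up to $3d^{-1/4}$.
\item \emph{Exponential growth.} From then on $v_{31}^2\ge w_{13}^2-\sigma^2/d$ gives $-v_{31}\ge w_{13}/2$. So $w_{13}$ grows exponentially, and $-v_{31}w_{13}$ reaches $3\eta_0/4$ at some $\tau_*$ after $O(T\log d)$ more time.
\end{itemize}

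\textbf{Your ``main obstacle'' is resolved by fast relaxation, not a one-dimensional PL argument.} At $\tau_*$ the invariant gives $|v_{31}|\ge C^{-1}$. The fast coordinates $(\wb_{25})_0$ and $\Wb_{55}$ then dissipate the $A$- and $C$-parts at rate $d^{1-o(1)}$, by \eqref{eq:guide-zero-dissipation}. Over an extra time $\tau_b=d^{-1+o(1)}$, the slow variables $v_{31},w_{13}$ move only $O(\sqrt{\tau_b/T})$. So at $\tau_0=\tau_*+\tau_b$ one has $\widetilde\theta_2=-3\eta_0/4+o(1)$ and $\mathcal L_{\rm GD}(\tau_0)<\frac{9}{16}\eta_0^2\E B(\Sb)$. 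Finally, $\E B/\E A\le 2d/n$ gives $|\widetilde\theta_1-\eta_0|<\eta_0/2$.
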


\begin{proof}
By the finite-Wishart identity
\(\|\lambda\|_{d,n}^2=d^{-1}\E\tr(\Sb^2)=1+(d+1)/n\) in
Lemma~\ref{lem:finite-wishart-second-moment},
\begin{equation}
\begin{aligned}
&1\le\|\lambda\|_{d,n}^2
 =1+\frac{d+1}{n}\le C,\\
&c\frac dn\le\sum_{t=1}^{T-1}
 \bigl\|\lambda p_t^{\rm GD}-P_t[\lambda p_t^{\rm GD}]\bigr\|_{d,n}^2
 \le2\frac dn.
\end{aligned}
\label{eq:guide-zero-spectral-scales}
\end{equation}
The lower bound in the second line is estimated in~
\eqref{eq:guide-auxiliary-gap}.  The upper bound is
Lemma~\ref{lem:finite-teacher-projection-upper}.
The loss decomposition reads
\begin{align*}
 \frac{\mathcal L_{\rm GD}}d
 &=\frac{\|\lambda\|_{d,n}^2}{2T}
 \bigl(v_{31}(w_{24}+\wb_{25}\eb_0)-\eta_0\bigr)^2\\
 &\quad+\frac1{2T}\sum_{t=1}^{T-1}
 \bigl\|\lambda p_t^{\rm GD}-P_t[\lambda p_t^{\rm GD}]\bigr\|_{d,n}^2
 (v_{31}w_{13}+\eta_0)^2\\
 &\quad+\frac1{2T}\sum_{t=1}^{T-1}
 \bigl\|P_tA_t\bigr\|_{d,n}^2.
\end{align*}

\noindent\textup{(i) Dynamics of $v_{31}$.}
Set
\[
 \tau_b=d^{-1}\exp\{\tfrac14\sqrt{\log d}\}=d^{-1+o(1)}.
\]
At initialization,
\(v_{31}(0)=w_{13}(0)=0\), \(\Wb_{55}(0)=0\),
\(w_{24}(0)+(\wb_{25}(0))_t=\sigma<0\), and \(v_{33}(0)>0\).
On the region
\begin{align*}
 |v_{31}|+|w_{13}|&\le d^{-1/3},\\
 \left\|(w_{24}+\wb_{25}\eb_t)_{t=0}^{T-1}-\sigma\one_T\right\|_2&\le\frac12,
 \qquad v_{33}\le2v_{33}(0),
\end{align*}
Lemma~\ref{lem:guide-early-energy} gives the needed early-energy bounds on
$[0,\tau_b]$.  For each $0\le t<T$, the gradient-flow equation first gives
\begin{align*}
 \frac{\mathrm d}{\mathrm d\tau}(w_{24}+\wb_{25}\eb_t)
 &=-\partial_{w_{24}}\mathcal L_{\rm GD}
   -(\nabla_{\wb_{25}}\mathcal L_{\rm GD})_t.
\end{align*}
The chain rule for
$\widetilde\theta_1=v_{31}(w_{24}+\wb_{25}\eb_0)$ and the first row of
$\widetilde{\bTheta}_3$ gives
\begin{align*}
 \partial_{w_{24}}\mathcal L_{\rm GD}
 &=v_{31}\partial_{\widetilde\theta_1}\mathcal L_{\rm GD}
   +v_{31}\sum_{s=1}^{T-1}
     \partial_{(\widetilde{\bTheta}_3)_{1s}}\mathcal L_{\rm GD},\\
 (\nabla_{\wb_{25}}\mathcal L_{\rm GD})_t
 &=v_{31}\ind\{t=0\}\partial_{\widetilde\theta_1}\mathcal L_{\rm GD}
   +v_{31}\ind\{1\le t<T\}
     \partial_{(\widetilde{\bTheta}_3)_{1t}}\mathcal L_{\rm GD},
 \qquad 0\le t<T.
\end{align*}
The loss decomposition gives the parameter derivatives
{\scriptsize
\begin{equation*}
\begin{aligned}
 \partial_{\widetilde\theta_1}\mathcal L_{\rm GD}
 &=2\E[A(\Sb)](\widetilde\theta_1-\eta_0),\\
 (\nabla_{\widetilde{\bTheta}_3}\mathcal L_{\rm GD})_{\ell t}
 &=2\E\Bigl\langle
 [\Cb_1(\Sb)]_t(\widetilde\theta_2+\eta_0)\\[-2pt]
 &\hspace{17mm}+\bigl[\Cb_2(\Sb)\operatorname{vec}
 (\widetilde{\bTheta}_3-\eta_0\eb_1(\one_T-\eb_0)^{\mathsf T})\bigr]_t,
 [\Cb_2(\Sb)]_{t,(\ell,t)}\Bigr\rangle,
 \qquad 1\le\ell\le t<T.
\end{aligned}
\end{equation*}
By Cauchy--Schwarz,
\begin{align*}
 |\partial_{\widetilde\theta_1}\mathcal L_{\rm GD}|^2
 &=4\E[A(\Sb)]\,\mathcal L_{\rm GD,A}
 \le Cd\,\mathcal L_{\rm GD,A},\\
 \|\nabla_{\widetilde{\bTheta}_3}\mathcal L_{\rm GD}\|_F^2
 &=4\sum_{t=1}^{T-1}\sum_{\ell=1}^{t}
 \left|\E\Bigl\langle
 [\Cb_1(\Sb)]_t(\widetilde\theta_2+\eta_0)
 +\bigl[\Cb_2(\Sb)\operatorname{vec}
 (\widetilde{\bTheta}_3-\eta_0\eb_1(\one_T-\eb_0)^{\mathsf T})\bigr]_t,
 [\Cb_2(\Sb)]_{t,(\ell,t)}\Bigr\rangle\right|^2\\
 &\le Cd\,\E\Bigl\|\Cb_1(\Sb)(\widetilde\theta_2+\eta_0)
 +\Cb_2(\Sb)\operatorname{vec}
 (\widetilde{\bTheta}_3-\eta_0\eb_1(\one_T-\eb_0)^{\mathsf T})\Bigr\|^2\\
 &=Cd\bigl(\mathcal L_{\rm GD,B}+\mathcal L_{\rm GD,C}\bigr).
\end{align*}
}
Here the coefficient bounds use Lemma~\ref{lem:finite-teacher-trace-estimates}
and the loss decomposition~\eqref{eq:guide-loss-decomposition}.  Hence,
\begin{align}
 \left\|\left(\frac{\mathrm d}{\mathrm d\tau}
 (w_{24}+\wb_{25}\eb_t)\right)_{t=0}^{T-1}\right\|_2^2
 &=\left\|\left(\partial_{w_{24}}\mathcal L_{\rm GD}
 + (\nabla_{\wb_{25}}\mathcal L_{\rm GD})_t\right)_{t=0}^{T-1}\right\|_2^2\notag\\
 &\le C v_{31}^2T\left(
 |\partial_{\widetilde\theta_1}\mathcal L_{\rm GD}|^2
 +\|\nabla_{\widetilde{\bTheta}_3}\mathcal L_{\rm GD}\|_F^2\right)\notag\\
 &\le C v_{31}^2Td\mathcal L_{\rm GD}.\notag
\end{align}
Taking square roots and using
$\mathcal L_{\rm GD}(\tau)\le\mathcal L_{\rm GD}(0)\le Cd/T$ gives
\begin{align}
 \left\|\left(\frac{\mathrm d}{\mathrm d\tau}
 (w_{24}+\wb_{25}\eb_t)\right)_{t=0}^{T-1}\right\|_2
 &=\left\|\left(\partial_{w_{24}}\mathcal L_{\rm GD}
 + (\nabla_{\wb_{25}}\mathcal L_{\rm GD})_t\right)_{t=0}^{T-1}\right\|_2\notag\\
 &\le C|v_{31}|\sqrt{Td\mathcal L_{\rm GD}}\notag\\
 &\le C|v_{31}|\sqrt{Td(Cd/T)}
 \le Cd|v_{31}|.
 \label{eq:guide-zero-kernel-velocity}
\end{align}
Integrating the two scalar gradient-flow equations from the initialization and
using this estimate gives, for $0\le\tau\le\tau_b$,
\begin{align}
 |v_{31}(\tau)|
 &=\left|\int_0^\tau\dot v_{31}(r)\,\mathrm dr\right|
 \le \frac{C\tau}{\sqrt T}
 \le C\frac{\tau_b}{\sqrt T}=d^{-1+o(1)},\notag\\
 |w_{13}(\tau)|
 &=\left|\int_0^\tau\dot w_{13}(r)\,\mathrm dr\right|
 \le \frac C{\sqrt T}\int_0^\tau |v_{31}(r)|\,\mathrm dr
 \le C\frac{\tau_b^2}{T}=d^{-2+o(1)},\notag\\
 \left\|(w_{24}+\wb_{25}\eb_t)_{t=0}^{T-1}-\sigma\one_T\right\|_2
 &=\left\|\int_0^\tau
 \left(\frac{\mathrm d}{\mathrm dr}
 (w_{24}+\wb_{25}\eb_t)\right)_{t=0}^{T-1}\,\mathrm dr\right\|_2\notag\\
 &\le Cd\int_0^\tau |v_{31}(r)|\,\mathrm dr
 \le Cd\frac{\tau_b^2}{\sqrt T}=d^{-1+o(1)},\notag\\
 |v_{33}(\tau)-v_{33}(0)|
 &=\left|\int_0^\tau\dot v_{33}(r)\,\mathrm dr\right|
 \le\int_0^\tau|\dot v_{33}(r)|\,\mathrm dr
 \le C\sqrt{\frac{\tau_b}{T}}=d^{-1/2+o(1)}.
 \label{eq:guide-zero-bootstrap}
\end{align}
These estimates keep the trajectory in the displayed region.  For
$1\le t<T$, the functions $p_t^{\rm GD}$, the projections $P_t$, and the spectral
variable $\lambda$ are fixed along the flow.  Differentiating the expression
for $A_t$ in~\eqref{eq:guide-auxiliary-loss} therefore gives
\begin{align*}
 \frac{\mathrm d}{\mathrm d\tau}A_t(\lambda)
 ={}&(v_{31}w_{13})'\lambda p_t^{\rm GD}(\lambda)
 +\bigl((\widetilde{\bTheta}_3)_{1t}\bigr)'\lambda
 +\sum_{\ell=2}^{t}\bigl((\widetilde{\bTheta}_3)_{\ell t}\bigr)'p_\ell^{\rm GD}(\lambda)\\
 ={}&(v_{31}w_{13})'\lambda p_t^{\rm GD}(\lambda)
 +\bigl(v_{31}(w_{24}+\wb_{25}\eb_t)\bigr)'\lambda\\
 &+\partial_{v_{33}}A_t(\lambda)\dot v_{33}
 +\partial_{w_{44}}A_t(\lambda)\dot w_{44}
 +\left\langle\nabla_{\wb_{45}}A_t(\lambda),\dot{\wb}_{45}\right\rangle\\
 &+\left\langle\nabla_{\wb_{54}}A_t(\lambda),\dot{\wb}_{54}\right\rangle
 +\left\langle\nabla_{\Wb_{55}}A_t(\lambda),\dot{\Wb}_{55}\right\rangle_F.
\end{align*}
Indeed, the first two terms are obtained from
$(\widetilde\theta_2+\eta_0)\lambda p_t^{\rm GD}$ and
$((\widetilde{\bTheta}_3)_{1t}-\eta_0)\lambda$, respectively, using
$\widetilde\theta_2=v_{31}w_{13}$ and the definition of the first row of
$\widetilde{\bTheta}_3$; the remaining terms are the chain-rule derivatives
through $v_{33},w_{44},\wb_{45},\wb_{54},\Wb_{55}$.  Applying $P_t$ and
using $P_t\lambda=\lambda$ gives
{\small
\begin{align*}
 \frac{\mathrm d}{\mathrm d\tau}(P_tA_t)
 ={}&\frac{\mathrm d}{\mathrm d\tau}(v_{31}w_{13})P_t(\lambda p_t^{\rm GD})
 +\frac{\mathrm d}{\mathrm d\tau}\bigl[v_{31}(w_{24}+\wb_{25}\eb_t)\bigr]\lambda\\
 &+P_t\partial_{v_{33}}A_t\frac{\mathrm d v_{33}}{\mathrm d\tau}
 +P_t\partial_{w_{44}}A_t\frac{\mathrm d w_{44}}{\mathrm d\tau}\notag\\
 &+P_t\left\langle\nabla_{\wb_{45}}A_t,
       \frac{\mathrm d\wb_{45}}{\mathrm d\tau}\right\rangle
 +P_t\left\langle\nabla_{\wb_{54}}A_t,
       \frac{\mathrm d\wb_{54}}{\mathrm d\tau}\right\rangle\\
 &+P_t\left\langle\nabla_{\Wb_{55}}A_t,
       \frac{\mathrm d\Wb_{55}}{\mathrm d\tau}\right\rangle_F.
\end{align*}
}
Consequently, the chain rule, the gradient-flow equations, and
\eqref{eq:guide-energy} give
{\scriptsize
\begin{align}
 \frac{\mathrm d}{\mathrm d\tau}
 \sum_{t=1}^{T-1}\frac1d\E\bigl\|(P_tA_t)(\Sb)\bigr\|_F^2
 ={}&2\sum_{t=1}^{T-1}
 \frac1d\E\left\langle (P_tA_t)(\Sb),
 \frac{\mathrm d}{\mathrm d\tau}\bigl[(P_tA_t)(\Sb)\bigr]\right\rangle_F\notag\\
 ={}&-2T\left|\partial_{v_{33}}\frac{\mathcal L_{\rm GD}}d\right|^2
 -2dT\left|\partial_{w_{44}}\frac{\mathcal L_{\rm GD}}d\right|^2\notag\\
 &-2dT\left(\left\|\nabla_{\wb_{45}}\frac{\mathcal L_{\rm GD}}d\right\|^2
 +\left\|\nabla_{\wb_{54}}\frac{\mathcal L_{\rm GD}}d\right\|^2
 +\left\|\nabla_{\Wb_{55}}\frac{\mathcal L_{\rm GD}}d\right\|_F^2\right)\notag\\
 &+2\sum_{t=1}^{T-1}\frac1d\E\Bigl\langle (P_tA_t)(\Sb),
 \Bigl((v_{31}w_{13})'P_t(\lambda p_t^{\rm GD})
 +\bigl(v_{31}(w_{24}+\wb_{25}\eb_t)\bigr)'\lambda\Bigr)(\Sb)
 \Bigr\rangle_F.
\end{align}
}
For example, the $v_{33}$ term in the first line above is
\begin{align*}
 2\sum_{t=1}^{T-1}\left\langle P_tA_t,
 P_t\partial_{v_{33}}A_t\,\dot v_{33}\right\rangle_{d,n}
 &=2T\partial_{v_{33}}\!\left(\frac{\mathcal L_{\rm GD}}d\right)\dot v_{33}\\
 &=-2T\left|\partial_{v_{33}}\frac{\mathcal L_{\rm GD}}d\right|^2,
\end{align*}
while the four ordinary block terms are, respectively,
\begin{align*}
 2\sum_{t=1}^{T-1}\left\langle P_tA_t,
 P_t\partial_{w_{44}}A_t\,\dot w_{44}\right\rangle_{d,n}
 &=-2dT\left|\partial_{w_{44}}\frac{\mathcal L_{\rm GD}}d\right|^2,\\
 2\sum_{t=1}^{T-1}\left\langle P_tA_t,
 P_t\nabla_{\wb_{45}}A_t\,\dot{\wb}_{45}\right\rangle_{d,n}
 &=-2dT\left\|\nabla_{\wb_{45}}\frac{\mathcal L_{\rm GD}}d\right\|^2,\\
 2\sum_{t=1}^{T-1}\left\langle P_tA_t,
 P_t\nabla_{\wb_{54}}A_t\,\dot{\wb}_{54}\right\rangle_{d,n}
 &=-2dT\left\|\nabla_{\wb_{54}}\frac{\mathcal L_{\rm GD}}d\right\|^2,\\
 2\sum_{t=1}^{T-1}\left\langle P_tA_t,
 P_t\nabla_{\Wb_{55}}A_t\,\dot{\Wb}_{55}\right\rangle_{d,n}
 &=-2dT\left\|\nabla_{\Wb_{55}}\frac{\mathcal L_{\rm GD}}d\right\|_F^2.
\end{align*}
For $1\le\ell\le t<T$, differentiating the coefficient multiplying
$p_\ell^{\rm GD}$ in the auxiliary parameterization gives
\begin{align*}
 \nabla_{(\Wb_{55})_{\ell t}}\frac{\mathcal L_{\rm GD}}d
 &=\frac{v_{33}}T\left\langle P_tA_t,p_\ell^{\rm GD}\right\rangle_{d,n},\\
 \left\|\nabla_{\Wb_{55}}\frac{\mathcal L_{\rm GD}}d\right\|_F^2
 &=\frac{v_{33}^2}{T^2}\sum_{t=1}^{T-1}\sum_{\ell=1}^{t}
 \left|\left\langle P_tA_t,p_\ell^{\rm GD}\right\rangle_{d,n}\right|^2\\
 &\ge\frac{v_{33}(0)^2}{T^2}
 \exp\{-CT\log(e+T)\}(d/n)^{CT}
 \sum_{t=1}^{T-1}\|P_tA_t\|_{d,n}^2,
\end{align*}
where the last inequality uses $v_{33}\ge v_{33}(0)>0$ from
Lemma~\ref{lem:guide-invariants} and
Lemma~\ref{lem:finite-teacher-gram-lower}.  Substituting this bound into the preceding differential identity and
discarding the other negative squares yields
\begin{align*}
 \frac{\mathrm d}{\mathrm d\tau}
 \sum_{t=1}^{T-1}\|P_tA_t\|_{d,n}^2
 &\le-\frac{2d\,v_{33}(0)^2}{T}
 \exp\{-CT\log(e+T)\}(d/n)^{CT}
 \sum_{t=1}^{T-1}\|P_tA_t\|_{d,n}^2\\
 &\quad+2\sum_{t=1}^{T-1}\left\langle P_tA_t,
 (v_{31}w_{13})'P_t(\lambda p_t^{\rm GD})
 +\bigl(v_{31}(w_{24}+\wb_{25}\eb_t)\bigr)'\lambda
 \right\rangle_{d,n}.
\end{align*}
For the first forcing term, Cauchy--Schwarz and
Lemma~\ref{lem:finite-teacher-trace-estimates} give
\begin{align*}
 &\frac{|(v_{31}w_{13})'|\displaystyle\sum_{t=1}^{T-1}
 \bigl|\langle P_tA_t,P_t(\lambda p_t^{\rm GD})\rangle_{d,n}\bigr|}
 {\left(\displaystyle\sum_{t=1}^{T-1}\|P_tA_t\|_{d,n}^2\right)^{1/2}}\\
 &\quad\le |(v_{31}w_{13})'|
 \left(\sum_{t=1}^{T-1}\|P_t(\lambda p_t^{\rm GD})\|_{d,n}^2\right)^{1/2}\\
 &\quad\le |(v_{31}w_{13})'|
 \left(\sum_{t=1}^{T-1}\|\lambda p_t^{\rm GD}\|_{d,n}^2\right)^{1/2}
 \le C\sqrt T\,|(v_{31}w_{13})'|.
\end{align*}
For the second forcing term, the same argument, together with
$\|\lambda\|_{d,n}\le C$ from Lemma~\ref{lem:finite-wishart-second-moment}, gives
\begin{align*}
 &\frac{\displaystyle\sum_{t=1}^{T-1}
 \left|\left\langle P_tA_t,
 \bigl(v_{31}(w_{24}+\wb_{25}\eb_t)\bigr)'\lambda\right\rangle_{d,n}\right|}
 {\left(\displaystyle\sum_{t=1}^{T-1}\|P_tA_t\|_{d,n}^2\right)^{1/2}}\\
 &\quad\le \|\lambda\|_{d,n}
 \left(\sum_{t=1}^{T-1}
 \left|\bigl(v_{31}(w_{24}+\wb_{25}\eb_t)\bigr)'\right|^2\right)^{1/2}\\
 &\quad\le C\left\|\left(v_{31}(w_{24}+\wb_{25}\eb_t)\right)'_{1\le t<T}\right\|_2.
\end{align*}
Thus, with the explicit damping coefficient displayed above,
\begin{equation}
\begin{aligned}
 \nabla_\tau\left(\sum_{t=1}^{T-1}\|P_tA_t\|_{d,n}^2\right)^{1/2}
 &\le -a_d\left(\sum_{t=1}^{T-1}\|P_tA_t\|_{d,n}^2\right)^{1/2}\\
 &\quad+C\left(\sqrt T\,|(v_{31}w_{13})'|
 +\left\|\left(v_{31}(w_{24}+\wb_{25}\eb_t)\right)'_{1\le t<T}\right\|_2\right),
\end{aligned}
\label{eq:guide-zero-projection-gradient}
\end{equation}
where
\[
 a_d:=\frac dT\exp\{-CT\log(e+T)\}(d/n)^{CT}.
\]
By Grönwall's inequality applied to~\eqref{eq:guide-zero-projection-gradient},
{\footnotesize
\hfuzz=26pt
\begin{align}
 \left(\sum_{t=1}^{T-1}\|P_tA_t(\tau_b)\|_{d,n}^2\right)^{1/2}
 &\le e^{-a_d\tau_b}
 \left(\sum_{t=1}^{T-1}\|P_tA_t(0)\|_{d,n}^2\right)^{1/2}
 \notag\\
 &\quad+C\int_0^{\tau_b}e^{-a_d(\tau_b-r)}\Biggl[\sqrt T\,|(v_{31}w_{13})'(r)|
 \notag\\
 &\hspace{25mm}+\left\|\left(v_{31}(w_{24}+\wb_{25}\eb_t)\right)'_{1\le t<T}\right\|_2\Biggr]\,\mathrm dr
 \notag\\
 &\le d^{-20}+C\sqrt T\int_0^{\tau_b}|(v_{31}w_{13})'|\,\mathrm dr
 +C\int_0^{\tau_b}
 \left\|\left(v_{31}(w_{24}+\wb_{25}\eb_t)\right)'_{1\le t<T}\right\|_2\,\mathrm dr
 \notag\\
 &\le d^{-20}
 +C\sqrt T\left[
 \sup_{0\le r\le\tau_b}|v_{31}(r)|\int_0^{\tau_b}|\dot w_{13}(r)|\,\mathrm dr
 +\sup_{0\le r\le\tau_b}|w_{13}(r)|\int_0^{\tau_b}|\dot v_{31}(r)|\,\mathrm dr\right]
 \notag\\
 &\quad+C\int_0^{\tau_b}|v_{31}(r)|\left\|\partial_r
 (w_{24}+\wb_{25}\eb_t)_{0\le t<T}\right\|_2\,\mathrm dr
 \notag\\
 &\quad+C\int_0^{\tau_b}|\dot v_{31}(r)|\left\|(w_{24}+\wb_{25}\eb_t)_{0\le t<T}\right\|_2\,\mathrm dr
 \notag\\
 &\le d^{-20}+C\frac{\tau_b^3}{T}
 +C\int_0^{\tau_b}\left(d|v_{31}(r)|^2+\sqrt T\,|\dot v_{31}(r)|\right)\,\mathrm dr
 \notag\\
 &\le d^{-20}+C\frac{\tau_b^3}{T}
 +C\frac{d\tau_b^3}{T}+C\tau_b
 \notag\\
 &\le d^{-20}+\frac Td\exp\{CT\log(e+T)\}
 \left(\frac nd\right)^{CT}.
 \label{eq:guide-zero-projection-bound}
\end{align}
}
The fourth displayed inequality in~\eqref{eq:guide-zero-projection-bound} uses~\eqref{eq:guide-zero-kernel-velocity} and~\eqref{eq:guide-zero-bootstrap}.  On the displayed region,
\[
 \left\|(w_{24}+\wb_{25}\eb_t)_{0\le t<T}\right\|_2\le C\sqrt T.
\]
Integrating the same bound once more gives
\begin{align}
 \int_0^{\tau_b}\left(\sum_{t=1}^{T-1}\|P_tA_t\|_{d,n}^2\right)^{1/2}\mathrm dr
 &\le \tau_b d^{-20}
 +C\tau_b\sqrt T\int_0^{\tau_b}|(v_{31}w_{13})'|\,\mathrm dr
 \notag\\
 &\quad+C\tau_b\int_0^{\tau_b}
 \left\|\left(v_{31}(w_{24}+\wb_{25}\eb_t)\right)'_{1\le t<T}\right\|_2\,\mathrm dr
 \notag\\
 &\le \frac Td\exp\{CT\log(e+T)\}(n/d)^{CT}(1+\tau_b).
 \label{eq:guide-zero-projection-integral}
\end{align}
The exact $v_{31}$ equation, obtained from
\eqref{eq:guide-loss-decomposition} and the chain rule, is
\begin{equation}
\begin{aligned}
 \dot v_{31}={}&-\frac{\|\lambda\|_{d,n}^2}{T}
 (w_{24}+\wb_{25}\eb_0)\{v_{31}(w_{24}+\wb_{25}\eb_0)-\eta_0\}\\
 &-\frac{w_{13}}{T}\sum_{t=1}^{T-1}
 \|\lambda p_t^{\rm GD}-P_t[\lambda p_t^{\rm GD}]\|_{d,n}^2
 (v_{31}w_{13}+\eta_0)\\
 &-\frac1T\sum_{t=1}^{T-1}
 \left\langle P_tA_t,
 (w_{24}+\wb_{25}\eb_t)\lambda+w_{13}P_t(\lambda p_t^{\rm GD})
 \right\rangle_{d,n}.
\end{aligned}
\label{eq:guide-zero-v31-flow}
\end{equation}
The first term in~\eqref{eq:guide-zero-v31-flow} is bounded by
\begin{equation}
\begin{aligned}
 &-\frac{\|\lambda\|_{d,n}^2}{T}(w_{24}+\wb_{25}\eb_0)
 \{v_{31}(w_{24}+\wb_{25}\eb_0)-\eta_0\}\\
 &=-\frac{\|\lambda\|_{d,n}^2}{T}
 |w_{24}+\wb_{25}\eb_0|
 \{\eta_0-v_{31}(w_{24}+\wb_{25}\eb_0)\}\\
 &\le-\frac{c}{T}
 \qquad\text{on the displayed region}.
\end{aligned}
\label{eq:guide-zero-v31-main-drift}
\end{equation}
Here we used~\eqref{eq:guide-zero-spectral-scales} and
\(\eta_0\ge0.4\).  For the second term,~\eqref{eq:guide-zero-spectral-scales} and~\eqref{eq:guide-zero-bootstrap} give
\begin{align}
 &\int_0^{\tau_b}\left|\frac{w_{13}}{T}\sum_{t=1}^{T-1}
 \|\lambda p_t^{\rm GD}-P_t[\lambda p_t^{\rm GD}]\|_{d,n}^2
 (v_{31}w_{13}+\eta_0)\right|\,\mathrm dr
 \notag\\
 &\quad\le \frac{C}{T}\sum_{t=1}^{T-1}
 \|\lambda p_t^{\rm GD}-P_t[\lambda p_t^{\rm GD}]\|_{d,n}^2
 \int_0^{\tau_b}|w_{13}(r)|\,\mathrm dr
 \notag\\
 &\quad\le \frac{C}{T}\frac dn\,
 \tau_b\sup_{0\le r\le\tau_b}|w_{13}(r)|
 \le C\frac dn\frac{\tau_b^3}{T^2}
 =o\!\left(\frac{\tau_b}{T}\right),
 \notag\\[4pt]
 &\int_0^{\tau_b}\left|\frac1T\sum_{t=1}^{T-1}
 \left\langle P_tA_t,
 (w_{24}+\wb_{25}\eb_t)\lambda+w_{13}P_t(\lambda p_t^{\rm GD})
 \right\rangle_{d,n}\right|\,\mathrm dr
 \notag\\
 &\quad\le \frac1T\int_0^{\tau_b}
 \left(\sum_{t=1}^{T-1}\|P_tA_t\|_{d,n}^2\right)^{1/2}
 \left(\sum_{t=1}^{T-1}
 \|(w_{24}+\wb_{25}\eb_t)\lambda+w_{13}P_t(\lambda p_t^{\rm GD})\|_{d,n}^2\right)^{1/2}\,\mathrm dr
 \notag\\
 &\quad\le \frac{C}{\sqrt T}
 \int_0^{\tau_b}
 \left(\sum_{t=1}^{T-1}\|P_tA_t\|_{d,n}^2\right)^{1/2}\,\mathrm dr
 \notag\\
 &\quad\le \frac{C}{\sqrt T}\frac Td
 \exp\{CT\log(e+T)\}\left(\frac nd\right)^{CT}(1+\tau_b)
 \notag\\
 &=\frac{C\sqrt T}{d}
 \exp\{CT\log(e+T)\}\left(\frac nd\right)^{CT}(1+\tau_b)
 =o\!\left(\frac{\tau_b}{T}\right).
 \label{eq:guide-zero-v31-errors}
\end{align}
Here the middle inequality uses the displayed region,
\eqref{eq:guide-zero-spectral-scales},
\eqref{eq:guide-zero-projection-coefficient-bound}, and
\eqref{eq:guide-zero-projection-integral}.
Consequently, integrating~\eqref{eq:guide-zero-v31-flow} and using
~\eqref{eq:guide-zero-v31-main-drift} and
~\eqref{eq:guide-zero-v31-errors} gives
\begin{align}
 v_{31}(\tau_b)
 &=\int_0^{\tau_b}\dot v_{31}(r)\,\mathrm dr
 \le-\frac{\tau_b}{CT}+o\!\left(\frac{\tau_b}{T}\right)
 \le-\frac{\tau_b}{2CT}
 \le-\frac1{Cd},
 \label{eq:guide-zero-v31-at-anchor}\\
 v_{31}(\tau_b)(w_{24}+\wb_{25}\eb_0)\big|_{\tau_b}
 &=|v_{31}(\tau_b)|\,|w_{24}(\tau_b)+\wb_{25}(\tau_b)\eb_0|\notag\\
 &\ge |v_{31}(\tau_b)|
 \left(|\sigma|-\left\|(w_{24}+\wb_{25}\eb_t)_{t=0}^{T-1}
 -\sigma\one_T\right\|_2\right)\notag\\
 &\ge |v_{31}(\tau_b)|\bigl(|\sigma|-1/2\bigr)
 \ge\frac1{Cd},
 \label{eq:guide-zero-product-at-anchor}\\
 |v_{31}(\tau_b)w_{13}(\tau_b)|
 &\le\sup_{0\le r\le\tau_b}|v_{31}(r)|
 \sup_{0\le r\le\tau_b}|w_{13}(r)|
\notag\\
 &\le C\frac{\tau_b}{\sqrt T}\frac{\tau_b^2}{T}
 =C\frac{\tau_b^3}{T^{3/2}}
 =d^{-3+o(1)},
 \label{eq:guide-zero-product-small}\\
 \sum_{t=1}^{T-1}\|P_tA_t(\tau_b)\|_{d,n}^2
 &\le2\left(\sum_{t=1}^{T-1}\|P_tA_t(\tau_b)\|_{d,n}^2\right)^{1/2}\!^2
 \notag\\
 &\le2d^{-40}
 +2\left[\frac Td\exp\{CT\log(e+T)\}
 \left(\frac nd\right)^{CT}\right]^2
 \le d^{-2+o(1)}.
 \label{eq:guide-zero-projection-at-anchor}
\end{align}
Here the first line uses~\eqref{eq:guide-zero-v31-main-drift} and
~\eqref{eq:guide-zero-v31-errors}; the second uses the third bound in
~\eqref{eq:guide-zero-bootstrap} and the initialization; the third uses
~\eqref{eq:guide-zero-bootstrap}; and the fourth uses
~\eqref{eq:guide-zero-projection-bound}.
Using~\eqref{eq:guide-loss-decomposition}, we further obtain; all
quantities in the next display are evaluated at $\tau_b$:
\begin{align}
 &\frac{\eta_0^2}{2T}\left(\|\lambda\|_{d,n}^2
 +\sum_{t=1}^{T-1}\|\lambda p_t^{\rm GD}-P_t[\lambda p_t^{\rm GD}]\|_{d,n}^2\right)
 -\frac{\mathcal L_{\rm GD}(\tau_b)}d\notag\\
 &=\frac1{2T}\Biggl[
 \|\lambda\|_{d,n}^2\left\{\eta_0^2
 -\bigl(v_{31}(w_{24}+\wb_{25}\eb_0)-\eta_0\bigr)^2\right\}\notag\\
 &\qquad+\sum_{t=1}^{T-1}\|\lambda p_t^{\rm GD}-P_t[\lambda p_t^{\rm GD}]\|_{d,n}^2
 \left\{\eta_0^2-(v_{31}w_{13}+\eta_0)^2\right\}
 -\sum_{t=1}^{T-1}\|P_tA_t(\tau_b)\|_{d,n}^2\Biggr]
 \notag\\
 &=\frac1{2T}\Biggl[
 2\eta_0\|\lambda\|_{d,n}^2v_{31}(w_{24}+\wb_{25}\eb_0)
 -\|\lambda\|_{d,n}^2\{v_{31}(w_{24}+\wb_{25}\eb_0)\}^2\notag\\
 &\qquad-2\eta_0\sum_{t=1}^{T-1}\|\lambda p_t^{\rm GD}-P_t[\lambda p_t^{\rm GD}]\|_{d,n}^2v_{31}w_{13}
 -\sum_{t=1}^{T-1}\|\lambda p_t^{\rm GD}-P_t[\lambda p_t^{\rm GD}]\|_{d,n}^2(v_{31}w_{13})^2\notag\\
 &\qquad-\sum_{t=1}^{T-1}\|P_tA_t(\tau_b)\|_{d,n}^2\Biggr]
 \notag\\
 &\ge\frac1{2T}\Biggl[
 \frac c d-C|v_{31}(w_{24}+\wb_{25}\eb_0)|^2
 -C\frac dn|v_{31}w_{13}|
 -C\frac dn|v_{31}w_{13}|^2
 -\sum_{t=1}^{T-1}\|P_tA_t(\tau_b)\|_{d,n}^2\Biggr]
 \notag\\
 &\ge\frac1{2T}\left[\frac{c}{d}-Cd^{-2+o(1)}
 -C\frac dn d^{-3+o(1)}-C\frac dn d^{-6+o(1)}-d^{-2+o(1)}\right]
 \notag\\
 &\ge\frac1{CdT}.
 \label{eq:guide-zero-loss-gap}
\end{align}
Here the fourth line uses~\eqref{eq:guide-zero-spectral-scales} and
~\eqref{eq:guide-zero-product-at-anchor}, while the fifth line uses
~\eqref{eq:guide-zero-product-small} and
~\eqref{eq:guide-zero-projection-at-anchor}.
At any point with $v_{31}=0$, the loss decomposition and
~\eqref{eq:guide-zero-loss-gap} give
\begin{align}
 \frac{\mathcal L_{\rm GD}(\tau)}d
 &\ge\frac{\eta_0^2}{2T}\left(\|\lambda\|_{d,n}^2
 +\sum_{t=1}^{T-1}\|\lambda p_t^{\rm GD}-P_t[\lambda p_t^{\rm GD}]\|_{d,n}^2\right)\notag\\
 &\ge\frac{\mathcal L_{\rm GD}(\tau_b)}d+\frac1{CdT}
 \ge\frac{\mathcal L_{\rm GD}(\tau)}d+\frac1{CdT},
 \label{eq:guide-zero-v31-barrier}
\end{align}
\raggedbottom
Since the loss is nonincreasing, continuity implies
\[
 v_{31}(\tau)<0,\qquad \tau\ge\tau_b.
\]

\noindent\textup{(ii) Dynamics of $w_{13}$.}
Before $-v_{31}w_{13}$ reaches $3\eta_0/4$, the integrated identity from Lemma~\ref{lem:guide-zero-integrated-forcing}, together with~\eqref{eq:guide-zero-v31-barrier}, gives
\begin{equation}
\begin{aligned}
&\left|w_{13}(\tau)-w_{13}(\tau_b)
 -\frac1T\sum_{t=1}^{T-1}
 \|\lambda p_t^{\rm GD}-P_t[\lambda p_t^{\rm GD}]\|_{d,n}^2
 \int_{\tau_b}^{\tau}(-v_{31})(v_{31}w_{13}+\eta_0)\,\mathrm dr\right|\\
&\hspace{35mm}\le d^{-1/2+o(1)},\\
&-v_{31}w_{13}<\frac{3\eta_0}{4}
 \quad\Longrightarrow\quad
 v_{31}w_{13}+\eta_0
 =\eta_0-(-v_{31}w_{13})\ge\frac{\eta_0}{4},\\
&w_{13}(\tau)\ge w_{13}(\tau_b)-d^{-1/2+o(1)}
 \ge-d^{-1/2+o(1)},\\
&-d^{-1/2+o(1)}\le-v_{31}(\tau)w_{13}(\tau)<\frac{3\eta_0}{4},\\
&|v_{31}w_{13}+\eta_0|
 \le\eta_0+d^{-1/2+o(1)}=(1+o(1))\eta_0,\\
&\frac1{2T}\sum_{t=1}^{T-1}
 \|\lambda p_t^{\rm GD}-P_t[\lambda p_t^{\rm GD}]\|_{d,n}^2
 (v_{31}w_{13}+\eta_0)^2\\
&\le\frac{(1+o(1))\eta_0^2}{2T}
 \sum_{t=1}^{T-1}\|\lambda p_t^{\rm GD}-P_t[\lambda p_t^{\rm GD}]\|_{d,n}^2
 \le(1+o(1))\frac{\eta_0^2}{T}\frac dn .
\end{aligned}
\label{eq:guide-zero-w13-prethreshold}
\end{equation}
Here the last inequality uses the upper bound in
~\eqref{eq:guide-zero-spectral-scales}.

By~\eqref{eq:guide-energy} and~\eqref{eq:guide-early-initial-loss-raw},
\begin{equation}
 \mathcal L_{\rm GD}(\tau)\le\mathcal L_{\rm GD}(0)\le\frac{Cd}{T},
 \qquad \tau\ge0.
 \label{eq:guide-zero-loss-monotonicity}
\end{equation}
We first prove that $v_{31}(\tau)\ne0$ for every $\tau\ge\tau_b$.
By~\eqref{eq:guide-zero-loss-gap} and loss monotonicity, for every such
$\tau$,
\begin{equation}
\begin{aligned}
&\frac{\eta_0^2}{2T}\left(\|\lambda\|_{d,n}^2
 +\sum_{t=1}^{T-1}\|\lambda p_t^{\rm GD}-P_t[\lambda p_t^{\rm GD}]\|_{d,n}^2\right)
 -\frac{\mathcal L_{\rm GD}(\tau)}d\\
&=\frac{\eta_0^2}{2T}\left(\|\lambda\|_{d,n}^2
 +\sum_{t=1}^{T-1}\|\lambda p_t^{\rm GD}-P_t[\lambda p_t^{\rm GD}]\|_{d,n}^2\right)
 -\frac{\mathcal L_{\rm GD}(\tau_b)}d
 +\frac{\mathcal L_{\rm GD}(\tau_b)-\mathcal L_{\rm GD}(\tau)}d\\
&\ge\frac1{CdT}>0.
\end{aligned}
\label{eq:guide-zero-deficit-positive}
\end{equation}
Using~\eqref{eq:guide-loss-decomposition}, expand the same deficit after
multiplying by $T$:
\begin{equation}
\begin{aligned}
&\frac{\eta_0^2}{2}\left(\|\lambda\|_{d,n}^2
 +\sum_{t=1}^{T-1}\|\lambda p_t^{\rm GD}-P_t[\lambda p_t^{\rm GD}]\|_{d,n}^2\right)
 -T\frac{\mathcal L_{\rm GD}(\tau)}d\\
&=\frac12\Biggl[
 \|\lambda\|_{d,n}^2\left\{\eta_0^2
 -\bigl(v_{31}(w_{24}+\wb_{25}\eb_0)-\eta_0\bigr)^2\right\}\\
&\qquad+\sum_{t=1}^{T-1}\|\lambda p_t^{\rm GD}-P_t[\lambda p_t^{\rm GD}]\|_{d,n}^2
 \left\{\eta_0^2-(v_{31}w_{13}+\eta_0)^2\right\}
 -\sum_{t=1}^{T-1}\|P_tA_t\|_{d,n}^2\Biggr]\\
&=\frac12\Biggl[
 2\eta_0\|\lambda\|_{d,n}^2v_{31}(w_{24}+\wb_{25}\eb_0)
 -\|\lambda\|_{d,n}^2\{v_{31}(w_{24}+\wb_{25}\eb_0)\}^2\\
&\qquad-2\eta_0\sum_{t=1}^{T-1}
 \|\lambda p_t^{\rm GD}-P_t[\lambda p_t^{\rm GD}]\|_{d,n}^2v_{31}w_{13}
 -\sum_{t=1}^{T-1}
 \|\lambda p_t^{\rm GD}-P_t[\lambda p_t^{\rm GD}]\|_{d,n}^2(v_{31}w_{13})^2\\
&\qquad-\sum_{t=1}^{T-1}\|P_tA_t\|_{d,n}^2\Biggr]\\
&\le C\left(|v_{31}(w_{24}+\wb_{25}\eb_0)|
 +\sum_{t=1}^{T-1}\|\lambda p_t^{\rm GD}-P_t[\lambda p_t^{\rm GD}]\|_{d,n}^2|v_{31}w_{13}|\right).
\end{aligned}
\label{eq:guide-zero-deficit-upper}
\end{equation}
\flushbottom
By~\eqref{eq:guide-spatial-invariant}, the required product bounds are
\begin{equation}
\begin{aligned}
 w_{24}^2+\|\wb_{25}\|^2
 &=d v_{31}^2-d w_{13}^2+\sigma^2
 \le d v_{31}^2+\sigma^2,\\
 |w_{24}+\wb_{25}\eb_0|^2
 &\le2\bigl(w_{24}^2+|\wb_{25}\eb_0|^2\bigr)\\
 &\le2\bigl(w_{24}^2+\|\wb_{25}\|^2\bigr)
 \le2(\sigma^2+d v_{31}^2),\\
 |v_{31}(w_{24}+\wb_{25}\eb_0)|
 &\le\sqrt2|v_{31}|\sqrt{\sigma^2+d v_{31}^2},\\
 w_{13}^2
 &=v_{31}^2-\frac{w_{24}^2+\|\wb_{25}\|^2}{d}+\frac{\sigma^2}{d}
 \le v_{31}^2+\frac{\sigma^2}{d},\\
 |v_{31}w_{13}|
 &\le|v_{31}|\sqrt{v_{31}^2+\sigma^2/d}.
\end{aligned}
\label{eq:guide-zero-spatial-product-bounds}
\end{equation}
Substituting~\eqref{eq:guide-zero-spatial-product-bounds} into
~\eqref{eq:guide-zero-deficit-upper}, and using
~\eqref{eq:guide-zero-spectral-scales}, gives
\begin{equation}
\begin{aligned}
&\frac{\eta_0^2}{2}\left(\|\lambda\|_{d,n}^2
 +\sum_{t=1}^{T-1}\|\lambda p_t^{\rm GD}-P_t[\lambda p_t^{\rm GD}]\|_{d,n}^2\right)
 -T\frac{\mathcal L_{\rm GD}(\tau)}d\\
&\le C|v_{31}|\sqrt{\sigma^2+d v_{31}^2}
 +C\left(\sum_{t=1}^{T-1}
 \|\lambda p_t^{\rm GD}-P_t[\lambda p_t^{\rm GD}]\|_{d,n}^2\right)
 |v_{31}|\sqrt{v_{31}^2+\sigma^2/d}\\
&\le C\bigl(|v_{31}|+\sqrt d\,v_{31}^2\bigr).
\end{aligned}
\label{eq:guide-zero-deficit-mobility-upper}
\end{equation}
Indeed, $d/n\le1/10$ and $\sigma$ is fixed, so the second term is bounded by
$C(|v_{31}|+v_{31}^2)$ and the first by
$C(|v_{31}|+\sqrt d\,v_{31}^2)$.
The elementary implication
\[
x\le C(y+\sqrt d\,y^2)
\quad\Longrightarrow\quad
 y^2\ge C^{-1}\min\{x^2,x/\sqrt d\}
\]
now applies with $x$ equal to the left-hand side of
~\eqref{eq:guide-zero-deficit-mobility-upper} and $y=|v_{31}|$.  Thus
\begin{equation}
\begin{aligned}
 v_{31}^2
&\ge C^{-1}\min\Biggl\{
 \left[\frac{\eta_0^2}{2}\left(\|\lambda\|_{d,n}^2
 +\sum_{t=1}^{T-1}\|\lambda p_t^{\rm GD}-P_t[\lambda p_t^{\rm GD}]\|_{d,n}^2\right)
 -T\frac{\mathcal L_{\rm GD}(\tau)}d\right]^2,\\
&\hspace{25mm}\frac1{\sqrt d}\left[\frac{\eta_0^2}{2}\left(\|\lambda\|_{d,n}^2
 +\sum_{t=1}^{T-1}\|\lambda p_t^{\rm GD}-P_t[\lambda p_t^{\rm GD}]\|_{d,n}^2\right)
 -T\frac{\mathcal L_{\rm GD}(\tau)}d\right]
 \Biggr\}.
\end{aligned}
\label{eq:guide-zero-deficit-mobility}
\end{equation}
The common expression
\begin{equation*}
 \frac{\eta_0^2}{2}\left(\|\lambda\|_{d,n}^2
 +\sum_{t=1}^{T-1}\|\lambda p_t^{\rm GD}-P_t[\lambda p_t^{\rm GD}]\|_{d,n}^2\right)
 -T\frac{\mathcal L_{\rm GD}(\tau)}d
\end{equation*}
appearing in both entries of the minimum in~\eqref{eq:guide-zero-deficit-mobility}
is at least $1/(Cd)$ by~\eqref{eq:guide-zero-deficit-positive}.  Therefore the
right-hand side of~\eqref{eq:guide-zero-deficit-mobility} is strictly positive,
and hence $v_{31}(\tau)\ne0$ for every $\tau\ge\tau_b$.

By the chain rule,~\eqref{eq:guide-energy}, the coefficient estimate in
Lemma~\ref{lem:guide-auxiliary-nondegenerate}, and
Lemma~\ref{lem:finite-teacher-gram-lower}, the relevant raw-coordinate derivatives
are as follows.  Since $(\wb_{25})_0$ enters only through
$\widetilde\theta_1=v_{31}(w_{24}+\wb_{25}\eb_0)$,
\begin{align*}
 \partial_{(\wb_{25})_0}\mathcal L_{\rm GD}
 &=\partial_{\widetilde\theta_1}\mathcal L_{\rm GD}
   \partial_{(\wb_{25})_0}\!\bigl[v_{31}(w_{24}+\wb_{25}\eb_0)\bigr]
   =v_{31}\partial_{\widetilde\theta_1}\mathcal L_{\rm GD},\\
 \partial_{\widetilde\theta_1}\mathcal L_{\rm GD}
 &=\frac1T\E\tr\bigl(A_0(\Sb)\Sb\bigr)
   =\frac1T\E\tr\bigl((\widetilde\theta_1-\eta_0)\Sb^2\bigr)\\
 &=\frac dT\|\lambda\|_{d,n}^2
   \bigl(v_{31}(w_{24}+\wb_{25}\eb_0)-\eta_0\bigr).
\end{align*}
For $1\le\ell\le t<T$, the coefficient directions satisfy
\begin{align*}
 \partial_{(\Wb_{55})_{1t}}\mathcal L_{\rm GD}
 &=\eta_0v_{33}\frac1T\E\tr\bigl((P_tA_t)(\Sb)\Sb\bigr)
   =\frac{d v_{33}}T\langle P_tA_t,p_1^{\rm GD}\rangle_{d,n},\\
 \partial_{(\Wb_{55})_{\ell t}}\mathcal L_{\rm GD}
 &=v_{33}\frac1T\E\tr\bigl((P_tA_t)(\Sb)p_\ell^{\rm GD}(\Sb)\bigr)
   =\frac{d v_{33}}T\langle P_tA_t,p_\ell^{\rm GD}\rangle_{d,n},
 \qquad 2\le\ell\le t.
\end{align*}
Hence, by Lemma~\ref{lem:finite-teacher-gram-coefficients},
\begin{align*}
 \frac1d\left\|\nabla_{\Wb_{55}}\mathcal L_{\rm GD}\right\|_F^2
 &=\frac{d v_{33}^2}{T^2}
   \sum_{t=1}^{T-1}\sum_{\ell=1}^{t}
   \left|\langle P_tA_t,p_\ell^{\rm GD}\rangle_{d,n}\right|^2\\
 &\ge\frac{d v_{33}(0)^2}{T^2}
   \exp\{-CT\log(e+T)\}(d/n)^{CT}
   \sum_{t=1}^{T-1}\|P_tA_t\|_{d,n}^2.
\end{align*}
Therefore,
\begin{equation}
\begin{aligned}
-\frac{\mathrm d}{\mathrm d\tau}\frac{\mathcal L_{\rm GD}}d
&=\frac1d\|\nabla_{\bar{\btheta}}\mathcal L_{\rm GD}\|^2\\
&\ge\frac1d\left|\partial_{(\wb_{25})_0}\mathcal L_{\rm GD}\right|^2
 +\frac1d\left\|\nabla_{\Wb_{55}}\mathcal L_{\rm GD}\right\|_F^2\\
&=\frac{d v_{31}^2\|\lambda\|_{d,n}^4}{T^2}
 \bigl(v_{31}(w_{24}+\wb_{25}\eb_0)-\eta_0\bigr)^2\\
&\quad+\frac{d v_{33}(0)^2}{T^2}
 \exp\{-CT\log(e+T)\}(d/n)^{CT}
 \sum_{t=1}^{T-1}\|P_tA_t\|_{d,n}^2\\
&\ge c\exp\{-CT\log(e+T)\}(d/n)^{CT}\min\{d v_{31}^2,d\}\\
&\quad\times\left[\frac{\|\lambda\|_{d,n}^2}{2T^2}
 \bigl(v_{31}(w_{24}+\wb_{25}\eb_0)-\eta_0\bigr)^2
 +\frac1{2T^2}\sum_{t=1}^{T-1}\|P_tA_t\|_{d,n}^2\right]\\
&\ge c\exp\{-CT\log(e+T)\}(d/n)^{CT}\min\{d v_{31}^2,d\}\\
&\quad\times\left[\frac{\|\lambda\|_{d,n}^2}{2T}
 \bigl(v_{31}(w_{24}+\wb_{25}\eb_0)-\eta_0\bigr)^2
 +\frac1{2T}\sum_{t=1}^{T-1}\|P_tA_t\|_{d,n}^2\right]\\
&=c\exp\{-CT\log(e+T)\}(d/n)^{CT}\min\{d v_{31}^2,d\}\\
&\quad\times\left[\frac{\mathcal L_{\rm GD}}d
 -\frac1{2T}\sum_{t=1}^{T-1}
 \|\lambda p_t^{\rm GD}-P_t[\lambda p_t^{\rm GD}]\|_{d,n}^2
 (v_{31}w_{13}+\eta_0)^2\right].
\end{aligned}
\label{eq:guide-zero-dissipation}
\end{equation}
The inequality changing $T^{-2}$ to $T^{-1}$ uses $\|\lambda\|_{d,n}^2\ge1$,
$v_{33}\ge v_{33}(0)>0$, and absorbs one factor $T^{-1}$ into
$\exp\{-CT\log(e+T)\}$.  The final equality is the loss decomposition
~\eqref{eq:guide-loss-decomposition}.
As long as
$\mathcal L_{\rm GD}/d\ge\eta_0^2\|\lambda\|_{d,n}^2/(8T)$, the prethreshold
bound~\eqref{eq:guide-zero-w13-prethreshold} yields
\begin{equation}
\begin{aligned}
&\frac{\mathcal L_{\rm GD}}d
 -\frac1{2T}\sum_{t=1}^{T-1}
 \|\lambda p_t^{\rm GD}-P_t[\lambda p_t^{\rm GD}]\|_{d,n}^2
 (v_{31}w_{13}+\eta_0)^2\\
&\ge\frac{\eta_0^2\|\lambda\|_{d,n}^2}{8T}
 -(1+o(1))\frac{\eta_0^2}{T}\frac dn\\
&\ge\frac{c}{T},
\end{aligned}
\label{eq:guide-zero-dissipation-terms}
\end{equation}
where the last line uses~\eqref{eq:guide-zero-spectral-scales},
$\|\lambda\|_{d,n}^2\ge1$, and $d/n\le1/10$.
Combining~\eqref{eq:guide-zero-deficit-mobility} and
~\eqref{eq:guide-zero-dissipation-terms} with
~\eqref{eq:guide-zero-dissipation} gives
\begin{equation}
\begin{aligned}
-\frac{\mathrm d}{\mathrm d\tau}\frac{\mathcal L_{\rm GD}}d
&\ge\frac{c}{T}\exp\{-CT\log(e+T)\}(d/n)^{CT}\\
&\quad\times\min\Biggl\{
 d\left[\frac{\eta_0^2}{2}\left(\|\lambda\|_{d,n}^2
 +\sum_{t=1}^{T-1}\|\lambda p_t^{\rm GD}-P_t[\lambda p_t^{\rm GD}]\|_{d,n}^2\right)
 -T\frac{\mathcal L_{\rm GD}}d\right]^2,\\
&\hspace{21mm}\sqrt d\left[\frac{\eta_0^2}{2}\left(\|\lambda\|_{d,n}^2
 +\sum_{t=1}^{T-1}\|\lambda p_t^{\rm GD}-P_t[\lambda p_t^{\rm GD}]\|_{d,n}^2\right)
 -T\frac{\mathcal L_{\rm GD}}d\right]
 \Biggr\}.
\end{aligned}
\label{eq:guide-zero-deficit-growth}
\end{equation}
The expression in~\eqref{eq:guide-zero-deficit-growth}
consists of a $\tau$-independent term and
$-T\mathcal L_{\rm GD} /d$.  Its derivative is therefore
$-T\,\mathrm d(\mathcal L_{\rm GD}/d)/\mathrm d\tau$, and
~\eqref{eq:guide-zero-deficit-growth} gives a scalar ODE inequality with the
two rates $d x^2$ and $\sqrt d\,x$, after removing its common factor.
Starting from~\eqref{eq:guide-zero-loss-gap}, solving first with the $d x^2$
rate from $1/(Cd)$ to $d^{-1/2}$ gives
\begin{equation}
\int_{1/(Cd)}^{d^{-1/2}}\frac{\mathrm dx}{d x^2}
=\frac1d\left(Cd-\sqrt d\right)\le C.
\end{equation}
Once the quantity reaches $d^{-1/2}$, solving with the $\sqrt d\,x$
rate up to $C$ gives
\begin{equation}
\int_{d^{-1/2}}^{C}\frac{\mathrm dx}{\sqrt d\,x}
=\frac1{\sqrt d}\log(C\sqrt d)\le C d^{-1/2}\log d.
\end{equation}
Restoring the common factor in~\eqref{eq:guide-zero-deficit-growth}, these
two time contributions are bounded by
\begin{equation}
 T\exp\{CT\log(e+T)\}(n/d)^{CT}
 \quad\text{and}\quad
 T\exp\{CT\log(e+T)\}(n/d)^{CT}d^{-1/2}\log d,
\end{equation}
respectively.  Thus the total time is
\begin{equation}
 T\exp\{CT\log(e+T)\}(n/d)^{CT}
 \bigl(1+d^{-1/2}\log d\bigr)
 =d^{o(1)}.
\label{eq:guide-zero-deficit-time}
\end{equation}
Therefore, before time $H/2$, either
\begin{equation}
 -v_{31}w_{13}=\frac{3\eta_0}{4},
\label{eq:guide-zero-product-threshold}
\end{equation}
or
\begin{equation}
\begin{aligned}
&\frac{\mathcal L_{\rm GD}}d\le\frac{\eta_0^2\|\lambda\|_{d,n}^2}{8T},\\
&\frac{\eta_0}{2}
 \le v_{31}(w_{24}+\wb_{25}\eb_0)
 \le\frac{3\eta_0}{2},\\
&-v_{31}\ge C^{-1}d^{-1/4}.
\end{aligned}
\label{eq:guide-zero-anchor-sublevel}
\end{equation}
Indeed, the first term in~\eqref{eq:guide-loss-decomposition} gives
\begin{equation}
\begin{aligned}
\left|v_{31}(w_{24}+\wb_{25}\eb_0)-\eta_0\right|^2
&\le\frac{2T}{\|\lambda\|_{d,n}^2}\frac{\mathcal L_{\rm GD}}d\\
&\le\frac{\eta_0^2}{4},\\
\left|v_{31}(w_{24}+\wb_{25}\eb_0)\right|^2
&\le2d v_{31}^2\left(v_{31}^2+\frac{\sigma^2}{d}\right),\\
\frac{\eta_0^2}{4}
&\le2d v_{31}^2\left(v_{31}^2+\frac{\sigma^2}{d}\right),
\end{aligned}
\label{eq:guide-zero-anchor-consequences}
\end{equation}
which gives the second and third lines of~\eqref{eq:guide-zero-anchor-sublevel}.

If~\eqref{eq:guide-zero-product-threshold} has not occurred, let $s$ be the first time at which
~\eqref{eq:guide-zero-anchor-sublevel} holds.  Then Lemma~\ref{lem:guide-zero-integrated-forcing}
and~\eqref{eq:guide-zero-w13-prethreshold} give, for every later time before~\eqref{eq:guide-zero-product-threshold},
\begin{equation}
\begin{aligned}
w_{13}(\tau)-w_{13}(s)
&\ge\frac1T\sum_{t=1}^{T-1}
 \|\lambda p_t^{\rm GD}-P_t[\lambda p_t^{\rm GD}]\|_{d,n}^2
 \int_s^\tau(-v_{31})(v_{31}w_{13}+\eta_0)\,\mathrm dr
 -d^{-1/2+o(1)}\\
&\ge\frac{c}{T}d^{-1/4}
 \sum_{t=1}^{T-1}\|\lambda p_t^{\rm GD}-P_t[\lambda p_t^{\rm GD}]\|_{d,n}^2(\tau-s)
 -d^{-1/2+o(1)}.
\end{aligned}
\label{eq:guide-zero-w13-linear-growth}
\end{equation}
Since $w_{13}(s)\ge-d^{-1/2+o(1)}$, after an additional time at most
\begin{equation}
 C T\left(\sum_{t=1}^{T-1}
 \|\lambda p_t^{\rm GD}-P_t[\lambda p_t^{\rm GD}]\|_{d,n}^2\right)^{-1}
 =d^{o(1)},
\end{equation}
we obtain
\begin{equation}
\begin{aligned}
 w_{13}(\tau)&\ge4d^{-1/4}-d^{-1/2+o(1)}\ge3d^{-1/4},\\
 -v_{31}(\tau)&\ge\sqrt{w_{13}(\tau)^2-\sigma^2/d}
 \ge\frac12w_{13}(\tau).
\end{aligned}
\label{eq:guide-zero-w13-anchor-growth}
\end{equation}
Let $\tau_1$ be the first time at which
$w_{13}(\tau_1)\ge3d^{-1/4}$.  On the interval before~\eqref{eq:guide-zero-product-threshold},
Lemma~\ref{lem:guide-zero-integrated-forcing} and~\eqref{eq:guide-zero-w13-anchor-growth}
give
\begin{equation}
\begin{aligned}
w_{13}(\tau)-w_{13}(\tau_1)
&\ge\frac1T\sum_{t=1}^{T-1}
 \|\lambda p_t^{\rm GD}-P_t[\lambda p_t^{\rm GD}]\|_{d,n}^2
 \int_{\tau_1}^{\tau}(-v_{31})(v_{31}w_{13}+\eta_0)\,\mathrm dr
 -d^{-1/2+o(1)}\\
&\ge\frac{c}{T}\sum_{t=1}^{T-1}
 \|\lambda p_t^{\rm GD}-P_t[\lambda p_t^{\rm GD}]\|_{d,n}^2
 \int_{\tau_1}^{\tau}w_{13}(r)\,\mathrm dr
 -d^{-1/2+o(1)}.
\end{aligned}
\label{eq:guide-zero-w13-exponential-integral}
\end{equation}
Gronwall's inequality, using $w_{13}(\tau_1)\ge3d^{-1/4}$ and the error in
~\eqref{eq:guide-zero-w13-exponential-integral}, yields
\begin{equation}
\begin{aligned}
 w_{13}(\tau)
&\ge\left(2d^{-1/4}-d^{-1/2+o(1)}\right)
 \exp\left\{\frac{c}{T}\sum_{t=1}^{T-1}
 \|\lambda p_t^{\rm GD}-P_t[\lambda p_t^{\rm GD}]\|_{d,n}^2(\tau-\tau_1)\right\}\\
&\qquad-d^{-1/2+o(1)}.
\end{aligned}
\label{eq:guide-zero-w13-exponential-growth}
\end{equation}
Consequently, $w_{13}$ reaches $2$ after at most
\begin{equation}
 C T\left(\sum_{t=1}^{T-1}
 \|\lambda p_t^{\rm GD}-P_t[\lambda p_t^{\rm GD}]\|_{d,n}^2\right)^{-1}\log d
 =d^{o(1)}
\end{equation}
additional time.  At that time,
\begin{equation}
 -v_{31}w_{13}
 \ge w_{13}\sqrt{w_{13}^2-\sigma^2/d}
 \ge2\sqrt{4-\sigma^2/d}>\frac{3\eta_0}{4},
\end{equation}
which contradicts the assumption that~\eqref{eq:guide-zero-product-threshold} has not occurred.
Therefore~\eqref{eq:guide-zero-product-threshold} occurs at a time $\tau_*\le H/2$.

\noindent\textup{(iii) No exit.}
At $\tau_*$,~\eqref{eq:guide-spatial-invariant} gives
\begin{equation}
\begin{aligned}
\left(\frac{3\eta_0}{4}\right)^2
&=v_{31}(\tau_*)^2w_{13}(\tau_*)^2\\
&\le v_{31}(\tau_*)^2
 \left(v_{31}(\tau_*)^2+\frac{\sigma^2}{d}\right),
\end{aligned}
\end{equation}
and hence $|v_{31}(\tau_*)|\ge C^{-1}$.  By
~\eqref{eq:guide-early-energy}, for
$\tau_*\le\tau\le\tau_*+\tau_b$,
\begin{equation}
\begin{aligned}
 |v_{31}(\tau)-v_{31}(\tau_*)|
&\le\sqrt{\tau_b}\left(\int_{\tau_*}^{\tau_*+\tau_b}\dot v_{31}^{\,2}\,\mathrm dr\right)^{1/2}
 \le C\sqrt{\frac{\tau_b}{T}},\\
 |w_{13}(\tau)-w_{13}(\tau_*)|
&\le\sqrt{\tau_b}\left(\int_{\tau_*}^{\tau_*+\tau_b}\dot w_{13}^{\,2}\,\mathrm dr\right)^{1/2}
 \le C\sqrt{\frac{\tau_b}{T}},\\
 -v_{31}(\tau)w_{13}(\tau)
&=\frac{3\eta_0}{4}+o(1),
 \qquad |v_{31}(\tau)|\ge\frac1{2C}.
\end{aligned}
\label{eq:guide-zero-no-exit-short-layer}
\end{equation}
Thus the loss term minus the projection-error term in~\eqref{eq:guide-zero-dissipation} satisfies
\begin{equation}
\begin{aligned}
&\frac{\mathcal L_{\rm GD}(\tau)}d
 -\frac1{2T}\sum_{t=1}^{T-1}
 \|\lambda p_t^{\rm GD}-P_t[\lambda p_t^{\rm GD}]\|_{d,n}^2
 (v_{31}w_{13}+\eta_0)^2\\
&\ge\frac{\mathcal L_{\rm GD}(\tau)}d
 -\left(\frac{\eta_0^2}{32}+o(1)\right)
 \frac1T\sum_{t=1}^{T-1}
 \|\lambda p_t^{\rm GD}-P_t[\lambda p_t^{\rm GD}]\|_{d,n}^2.
\end{aligned}
\label{eq:guide-zero-no-exit-terms}
\end{equation}
Since $|v_{31}|\ge1/(2C)$ on this interval,~\eqref{eq:guide-zero-dissipation} and
~\eqref{eq:guide-zero-no-exit-terms} give
\begin{equation}
\begin{aligned}
-\frac{\mathrm d}{\mathrm d\tau}\frac{\mathcal L_{\rm GD}}d
&\ge c\exp\{-CT\log(e+T)\}(d/n)^{CT}d\\
&\quad\times\left[\frac{\mathcal L_{\rm GD}}d
 -\left(\frac{\eta_0^2}{32}+o(1)\right)
 \frac1T\sum_{t=1}^{T-1}
 \|\lambda p_t^{\rm GD}-P_t[\lambda p_t^{\rm GD}]\|_{d,n}^2\right].
\end{aligned}
\label{eq:guide-zero-no-exit-dissipation}
\end{equation}
Let $\tau_0=\tau_*+\tau_b$.  Integrating the preceding linear inequality gives
\begin{equation}
\begin{aligned}
\frac{\mathcal L_{\rm GD}(\tau_0)}d
&\le\left(\frac{\eta_0^2}{32}+o(1)\right)
 \frac1T\sum_{t=1}^{T-1}
 \|\lambda p_t^{\rm GD}-P_t[\lambda p_t^{\rm GD}]\|_{d,n}^2\\
&\quad+\exp\left\{-c d\tau_b\exp\{-CT\log(e+T)\}(d/n)^{CT}\right\}
 \frac{\mathcal L_{\rm GD}(\tau_*)}d\\
&\le\left(\frac{\eta_0^2}{32}+o(1)\right)
 \frac1T\sum_{t=1}^{T-1}
 \|\lambda p_t^{\rm GD}-P_t[\lambda p_t^{\rm GD}]\|_{d,n}^2\\
&\quad+o\left(\frac1T\sum_{t=1}^{T-1}
 \|\lambda p_t^{\rm GD}-P_t[\lambda p_t^{\rm GD}]\|_{d,n}^2\right)\\
&<\frac{9\eta_0^2}{32T}
 \sum_{t=1}^{T-1}\|\lambda p_t^{\rm GD}-P_t[\lambda p_t^{\rm GD}]\|_{d,n}^2.
\end{aligned}
\label{eq:guide-zero-no-exit-loss}
\end{equation}
Here $d\tau_b\exp\{-CT\log(e+T)\}(d/n)^{CT}\ge20\log d$ for large $d$,
by the definition of $\tau_b$.
Since
\begin{equation}
 \mathbb E B(\Sb)=\frac d{2T}\sum_{t=1}^{T-1}
 \|\lambda p_t^{\rm GD}-P_t[\lambda p_t^{\rm GD}]\|_{d,n}^2,
\end{equation}
~\eqref{eq:guide-zero-no-exit-loss} gives
\begin{equation}
 \mathcal L_{\rm GD}(\tau_0)
 <\frac{9\eta_0^2}{16}\mathbb E B(\Sb).
\label{eq:guide-zero-no-exit-loss-raw}
\end{equation}
The first term of~\eqref{eq:guide-loss-decomposition} then gives
\begin{equation}
\begin{aligned}
\left|v_{31}(w_{24}+\wb_{25}\eb_0)-\eta_0\right|^2
&\le\frac{2T}{\|\lambda\|_{d,n}^2}
 \frac{\mathcal L_{\rm GD}(\tau_0)}d\\
&<\frac{9\eta_0^2}{16}
 \frac{\sum_{t=1}^{T-1}\|\lambda p_t^{\rm GD}-P_t[\lambda p_t^{\rm GD}]\|_{d,n}^2}
 {\|\lambda\|_{d,n}^2}\\
&\le\frac{9\eta_0^2}{16}\frac{2d/n}{1}
 <\frac{\eta_0^2}{4},
\end{aligned}
\label{eq:guide-zero-no-exit-first-term}
\end{equation}
The second inequality uses~\eqref{eq:guide-zero-spectral-scales}, and the last uses
$d/n\le1/10$.  Since $v_{31}<0$ by~\eqref{eq:guide-zero-v31-barrier},
\begin{equation}
\begin{aligned}
 w_{13}(\tau_0)&=\frac{-v_{31}(\tau_0)w_{13}(\tau_0)}{-v_{31}(\tau_0)}>0,\\
 v_{31}(\tau_0)(w_{24}+\wb_{25}\eb_0)
 &\ge\eta_0-\left|v_{31}(w_{24}+\wb_{25}\eb_0)-\eta_0\right|\ge\frac{\eta_0}{2},\\
 \widetilde\theta_2(\tau_0)=v_{31}(\tau_0)w_{13}(\tau_0)
 &=-\frac{3\eta_0}{4}+o(1)\le-\frac{\eta_0}{2}.
\end{aligned}
\label{eq:guide-zero-no-exit-anchor}
\end{equation}

If $-v_{31}w_{13}\le\eta_0/4$ at a later time, then
\begin{equation}
\begin{aligned}
 v_{31}w_{13}+\eta_0
 &=\eta_0-(-v_{31}w_{13})\ge\frac{3\eta_0}{4},\\
 \mathcal L_{\rm GD}
&\ge\frac d{2T}\sum_{t=1}^{T-1}
 \|\lambda p_t^{\rm GD}-P_t[\lambda p_t^{\rm GD}]\|_{d,n}^2
 (v_{31}w_{13}+\eta_0)^2\\
&\ge\frac{9\eta_0^2}{16}\mathbb E B(\Sb)
 >\mathcal L_{\rm GD}(\tau_0),
\end{aligned}
\label{eq:guide-zero-no-return-loss}
\end{equation}
contradicting loss monotonicity.  Therefore, for every $\tau\ge\tau_0$,
\begin{equation}
\begin{aligned}
-v_{31}(\tau)w_{13}(\tau)&>\frac{\eta_0}{4},\\
\left(\frac{\eta_0}{4}\right)^2
&<v_{31}(\tau)^2w_{13}(\tau)^2\\
&\le v_{31}(\tau)^2
 \left(v_{31}(\tau)^2+\frac{\sigma^2}{d}\right),\\
|v_{31}(\tau)|&\ge C^{-1}>0.
\end{aligned}
\label{eq:guide-zero-no-return-lower}
\end{equation}
This proves~\eqref{eq:guide-zero-entry} and
~\eqref{eq:guide-zero-no-return}.
\end{proof}

\subsection{Proof of Lemma~\ref{lem:auxiliary-pl}}

\begin{proof}
The first term in~\eqref{eq:guide-loss-decomposition} depends only on
\(\widetilde\theta_1\).  The moment estimate for \(A(\Sb)\) gives
\[
 |\partial_{\widetilde\theta_1}\mathcal L_{\rm GD}|^2
 \ge c\frac{d/n}{T}\mathcal L_{\rm GD,A}.
\]
For the remaining two terms, the bound~\eqref{eq:guide-auxiliary-nondegenerate}
and the decomposition~\eqref{eq:guide-loss-decomposition} give
\begin{equation*}
 \frac1d|\partial_{\widetilde\theta_2}\mathcal L_{\rm GD}|^2
 +\|\nabla_{\widetilde{\bTheta}_3}\mathcal L_{\rm GD}\|_F^2
 \ge c\frac{d/n}{T}
 \bigl(\mathcal L_{\rm GD,B}+\mathcal L_{\rm GD,C}\bigr).
\end{equation*}

Combining the two estimates proves
\[
 |\partial_{\widetilde\theta_1}\mathcal L_{\rm GD}|^2
 +\frac1d|\partial_{\widetilde\theta_2}\mathcal L_{\rm GD}|^2
 +\|\nabla_{\widetilde{\bTheta}_3}\mathcal L_{\rm GD}\|_F^2
 \ge c\frac{d/n}{T}\mathcal L_{\rm GD}.
\]
\end{proof}

\subsection{Proof of Lemma~\ref{lem:stage1-jacobian}}

\begin{proof}
Lemmas~\ref{lem:guide-zero-entry} and~\ref{lem:guide-invariants} give
$|v_{31}|\ge c$ and $v_{33}\ge c$ for $\tau\ge\tau_0$.
Use the active raw blocks $\bar{\btheta}$ from
Lemma~\ref{lem:stage1-invariant}, with the Frobenius norm on matrix blocks.
For the auxiliary map in Lemma~\ref{lem:stage1-loss},
\begin{equation*}
\begin{aligned}
 &\Jb_1=\nabla_{\bar{\btheta}}\widetilde{\btheta},\qquad
 \delta\widetilde{\btheta}=\Jb_1\delta\bar{\btheta},\qquad
 \widetilde{\btheta}=(\widetilde\theta_1,\widetilde\theta_2,
 \widetilde{\bTheta}_3).
\end{aligned}
\end{equation*}
We construct a right inverse $R$ by specifying
$\delta\bar{\btheta}=R\delta\widetilde{\btheta}$:
\begin{equation*}
\begin{aligned}
 &\delta\Vb_{31}=\delta\Vb_{33}=\zero,\qquad
 \delta w_{24}=\delta w_{44}=0,\qquad
 \delta\wb_{45}=\delta\wb_{54}=\zero,\\
 &\delta\Wb_{13}=\frac{\delta\widetilde\theta_2}{v_{31}}\Ib_d,
 \qquad \delta(\wb_{25})_0=\frac{\delta\widetilde\theta_1}{v_{31}},\\
 &\delta(\wb_{25})_t
 =\frac{\delta(\widetilde{\bTheta}_3)_{1t}}{v_{31}},
 \qquad 1\le t<T,\\
 &\delta(\Wb_{55})_{\ell t}
 =\begin{cases}
  \delta(\widetilde{\bTheta}_3)_{\ell t}/v_{33},
                                  &2\le\ell\le t<T,\\
  0,&\text{otherwise}.
 \end{cases}
\end{aligned}
\end{equation*}
Since $\delta\Wb_{13}=\delta w_{13}\Ib_d$ and
$\delta v_{31}=\delta v_{33}=0$, differentiating the two scalar products gives
\begin{equation*}
\begin{aligned}
 &\langle\nabla_{\bar{\btheta}}\widetilde\theta_1,
       \delta\bar{\btheta}\rangle
 =(w_{24}+\wb_{25}\eb_0)\delta v_{31}
   +v_{31}(\delta w_{24}+\delta\wb_{25}\eb_0)\\
 &\quad=v_{31}\delta(\wb_{25})_0
 =v_{31}\frac{\delta\widetilde\theta_1}{v_{31}}
 =\delta\widetilde\theta_1,\\
 &\langle\nabla_{\bar{\btheta}}\widetilde\theta_2,
       \delta\bar{\btheta}\rangle
 =w_{13}\delta v_{31}+v_{31}\delta w_{13}
 =v_{31}\frac{\delta\widetilde\theta_2}{v_{31}}
 =\delta\widetilde\theta_2.
\end{aligned}
\end{equation*}
For the first row and the remaining entries, respectively, the same definitions give
\begin{equation*}
\begin{aligned}
 &\langle\nabla_{\bar{\btheta}}(\widetilde{\bTheta}_3)_{1t},
       \delta\bar{\btheta}\rangle
 =v_{31}\delta(\wb_{25})_t+\eta_0v_{33}\delta(\Wb_{55})_{1t}\\
 &\quad=v_{31}\frac{\delta(\widetilde{\bTheta}_3)_{1t}}{v_{31}}+0
 =\delta(\widetilde{\bTheta}_3)_{1t},\qquad 1\le t<T,\\
 &\langle\nabla_{\bar{\btheta}}(\widetilde{\bTheta}_3)_{\ell t},
       \delta\bar{\btheta}\rangle\\
 &\quad=v_{33}\bigl(\delta w_{44}+\delta\wb_{45}\eb_t
       +\eb_\ell^{\mathsf T}\delta\wb_{54}
       +\eb_\ell^{\mathsf T}\delta\Wb_{55}\eb_t\bigr)\\
 &\quad=v_{33}\delta(\Wb_{55})_{\ell t}
 =v_{33}\frac{\delta(\widetilde{\bTheta}_3)_{\ell t}}{v_{33}}
 =\delta(\widetilde{\bTheta}_3)_{\ell t},\qquad 2\le\ell\le t<T.
\end{aligned}
\end{equation*}
Thus, for every auxiliary variation,
\begin{equation*}
 \Jb_1R\delta\widetilde{\btheta}
 =\delta\widetilde{\btheta},\qquad \Jb_1R=\Ib.
\end{equation*}
Moreover, $\|\Ib_d\|_F^2=d$ and the lower bounds on
$|v_{31}|,v_{33}$ give
\begin{equation*}
\begin{aligned}
 &\|R\delta\widetilde{\btheta}\|^2
 =\|\delta\Wb_{13}\|_F^2
   +\|\delta\wb_{25}\|^2+\|\delta\Wb_{55}\|_F^2\\
 &=\frac{d}{v_{31}^2}|\delta\widetilde\theta_2|^2
   +\frac1{v_{31}^2}\left[|\delta\widetilde\theta_1|^2
      +\sum_{t=1}^{T-1}|\delta(\widetilde{\bTheta}_3)_{1t}|^2\right]\\
 &\quad+\frac1{v_{33}^2}\sum_{t=1}^{T-1}\sum_{\ell=2}^{t}
       |\delta(\widetilde{\bTheta}_3)_{\ell t}|^2\\
 &\le C\left[|\delta\widetilde\theta_1|^2
       +d|\delta\widetilde\theta_2|^2
       +\|\delta\widetilde{\bTheta}_3\|_F^2\right].
\end{aligned}
\end{equation*}
Equivalently, $\|R\,\diag(1,d^{-1/2},\Ib)\|_{\rm op}^2\le C$.
To obtain the matrix lower bound, use $\Jb_1R=\Ib$ and Cauchy--Schwarz:
\begin{equation*}
\begin{aligned}
 &|\delta\widetilde\theta_1|^2+d^{-1}|\delta\widetilde\theta_2|^2
       +\|\delta\widetilde{\bTheta}_3\|_F^2\\
 &=\left\langle\delta\widetilde{\btheta},
       \Jb_1R\bigl(\delta\widetilde\theta_1,
       d^{-1}\delta\widetilde\theta_2,
       \delta\widetilde{\bTheta}_3\bigr)\right\rangle\\
 &=\left\langle\Jb_1^{\mathsf T}\delta\widetilde{\btheta},
       R\bigl(\delta\widetilde\theta_1,
       d^{-1}\delta\widetilde\theta_2,
       \delta\widetilde{\bTheta}_3\bigr)\right\rangle\\
 &\le\|\Jb_1^{\mathsf T}\delta\widetilde{\btheta}\|
       \left\|R\bigl(\delta\widetilde\theta_1,
       d^{-1}\delta\widetilde\theta_2,
       \delta\widetilde{\bTheta}_3\bigr)\right\|\\
 &\le\sqrt C\,\|\Jb_1^{\mathsf T}\delta\widetilde{\btheta}\|
       \left[|\delta\widetilde\theta_1|^2
       +d^{-1}|\delta\widetilde\theta_2|^2
       +\|\delta\widetilde{\bTheta}_3\|_F^2\right]^{1/2}.
\end{aligned}
\end{equation*}
Consequently,
\begin{equation*}
\begin{aligned}
 &(\delta\widetilde{\btheta})^{\mathsf T}
       \Jb_1\Jb_1^{\mathsf T}\delta\widetilde{\btheta}
 =\|\Jb_1^{\mathsf T}\delta\widetilde{\btheta}\|^2\\
 &\ge C^{-1}\left[|\delta\widetilde\theta_1|^2
       +d^{-1}|\delta\widetilde\theta_2|^2
       +\|\delta\widetilde{\bTheta}_3\|_F^2\right],\\
 &\Jb_1\Jb_1^{\mathsf T}\succeq c\,\diag(1,d^{-1},\Ib).
\end{aligned}
\end{equation*}
\end{proof}

\begin{lemma}
\label{lem:guide-PL}
For \(\tau\ge\tau_0\),
\begin{equation}
 -\frac{\mathrm d}{\mathrm d\tau}\mathcal L_{\rm GD}
 \ge c\frac{d/n}{T}\mathcal L_{\rm GD}.
 \label{eq:guide-PL-raw}
\end{equation}
\end{lemma}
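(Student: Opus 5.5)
The plan is to combine the auxiliary-parameter PL inequality (Lemma~\ref{lem:auxiliary-pl}) with the Jacobian lower bound (Lemma~\ref{lem:stage1-jacobian}) through the chain rule. By Lemma~\ref{lem:stage1-invariant}, the gradient flow only moves the active blocks $\bar{\btheta}$. Moreover, by Lemma~\ref{lem:stage1-loss}, $\mathcal L_{\rm GD}$ depends on $\bar{\btheta}$ only through $\widetilde{\btheta}=(\widetilde\theta_1,\widetilde\theta_2,\widetilde{\bTheta}_3)$. Hence, as recorded in~\eqref{eq:guide-energy},
\[
 -\frac{\mathrm d}{\mathrm d\tau}\mathcal L_{\rm GD}
 =\|\nabla_{\bar{\btheta}}\mathcal L_{\rm GD}\|^2
 =\bigl\|\Jb_1^{\mathsf T}\nabla_{\widetilde{\btheta}}\mathcal L_{\rm GD}\bigr\|^2
 =(\nabla_{\widetilde{\btheta}}\mathcal L_{\rm GD})^{\mathsf T}\Jb_1\Jb_1^{\mathsf T}\nabla_{\widetilde{\btheta}}\mathcal L_{\rm GD}.
\]
Here the norm on $\bar{\btheta}$ is the Frobenius norm on the matrix blocks, which is the same metric in which the flow~\eqref{eq:GD-gradient-flow} is the gradient flow.

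For $\tau\ge\tau_0$, I would then apply Lemma~\ref{lem:stage1-jacobian}, $\Jb_1\Jb_1^{\mathsf T}\succeq c_2\diag(1,d^{-1},\Ib)$, to the quadratic form above. This gives
\[
 -\frac{\mathrm d}{\mathrm d\tau}\mathcal L_{\rm GD}\ge c_2\Bigl(|\partial_{\widetilde\theta_1}\mathcal L_{\rm GD}|^2+\tfrac1d|\partial_{\widetilde\theta_2}\mathcal L_{\rm GD}|^2+\|\nabla_{\widetilde{\bTheta}_3}\mathcal L_{\rm GD}\|_F^2\Bigr).
\]
The right-hand side is exactly the weighted quantity bounded in Lemma~\ref{lem:auxiliary-pl}, so
\[
 -\frac{\mathrm d}{\mathrm d\tau}\mathcal L_{\rm GD}\ge c_1c_2\frac{d/n}{T}\mathcal L_{\rm GD}.
\]
Setting $c=c_1c_2$ proves~\eqref{eq:guide-PL-raw}.

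The only point needing care is bookkeeping: the weights $\diag(1,d^{-1},\Ib)$ in the Jacobian bound must match the weights in the PL inequality, and in particular the $1/d$ on $\widetilde\theta_2$. That factor comes from $\|\Ib_d\|_F^2=d$ in the right inverse $R$ constructed in the proof of Lemma~\ref{lem:stage1-jacobian}. One also has to check that the entries of $\widetilde{\bTheta}_3$ outside the active index range are treated consistently, meaning they are zero in both lemmas. The weights do match, so no genuine obstacle remains. The substantive difficulty, namely that $|v_{31}|$ and $v_{33}$ stay bounded below after $\tau_0$, has already been handled by Lemma~\ref{lem:guide-zero-entry} and Lemma~\ref{lem:guide-invariants}.
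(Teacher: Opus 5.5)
Your proposal is correct and matches the paper's proof. Both write $-\frac{\mathrm d}{\mathrm d\tau}\mathcal L_{\rm GD}=\|\nabla_{\bar{\btheta}}\mathcal L_{\rm GD}\|^2=\|\Jb_1^{\mathsf T}\nabla_{\widetilde{\btheta}}\mathcal L_{\rm GD}\|^2$, then apply the Jacobian bound of Lemma~\ref{lem:stage1-jacobian} for $\tau\ge\tau_0$ and the weighted auxiliary PL inequality of Lemma~\ref{lem:auxiliary-pl}, which gives $c=c_1c_2$; your check that the weights $\diag(1,d^{-1},\Ib)$ agree in the two lemmas is exactly what makes this short chain valid.
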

\begin{proof}
By the chain rule and Lemmas~\ref{lem:auxiliary-pl}--\ref{lem:stage1-jacobian},
\[
 -\frac{\mathrm d}{\mathrm d\tau}\mathcal L_{\rm GD}
 =\|\nabla_{\bar{\btheta}}\mathcal L_{\rm GD}\|^2
 \ge c\left[
 |\partial_{\widetilde\theta_1}\mathcal L_{\rm GD}|^2
 +d^{-1}|\partial_{\widetilde\theta_2}\mathcal L_{\rm GD}|^2
 +\|\nabla_{\widetilde{\bTheta}_3}\mathcal L_{\rm GD}\|_F^2\right]
 \ge c\frac{d/n}{T}\mathcal L_{\rm GD}.
\]
\end{proof}

\begin{proof}[Proof of Theorem~\ref{thm:main-result}]
Solving Lemma~\ref{lem:guide-PL} with~\eqref{eq:guide-early-initial-loss-raw}
and $\tau_0=d^{o(1)}$ from Lemma~\ref{lem:guide-zero-entry} gives
\begin{equation*}
 \mathcal L_{\rm GD}(\tau)
 \le\frac{Cd}{T}
 \exp\!\left[-c\frac{d/n}{T}(\tau-\tau_0)\right],
 \qquad\tau\ge\tau_0.
\end{equation*}
Thus $\mathcal L_{\rm GD}\le\epsilon_{\rm GD}=d^{-1/4}$ is reached within
$d^{o(1)}+O(T\log d)$ time.
\end{proof}

\section{Handoff from Stage 1 to Stage 2}
\label{app:handoff}

Under \eqref{eq:common-growing-regime}, Lemma~\ref{lem:guide-PL} gives a
Stage-1 time \(\tau_1=d^{o(1)}+O(T\log d)\) at which
\(\mathcal L_{\rm GD}(\tau_1)\le d^{-1/4}\).

\subsection{The Stage-1 endpoint}
\begin{lemma}
\label{lem:guide-coefficient-control}
At every Stage-1 point,
\begin{align}
 &|v_{31}(w_{24}+\wb_{25}\eb_0)-\eta_0|^2
 +|v_{31}w_{13}+\eta_0|^2  +\sum_{t=1}^{T-1}
 |(\widetilde{\bTheta}_3)_{1t}-\eta_0|^2
 +\sum_{t=1}^{T-1}\sum_{\ell=2}^{t}|(\widetilde{\bTheta}_3)_{\ell t}|^2\notag\\
 &\qquad\qquad\qquad\qquad\qquad\qquad\qquad \le
 \exp\{CT\log(e+T)\}(n/d)^{CT}
 \frac{\mathcal L_{\rm GD}}d
 =d^{o(1)}\frac{\mathcal L_{\rm GD}}d .              \label{eq:guide-coefficient-bound}
\end{align}
\end{lemma}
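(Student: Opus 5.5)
The bound is a direct consequence of the three-term decomposition \eqref{eq:guide-loss-decomposition}. The plan is to show that each group of coefficient deviations on the left of \eqref{eq:guide-coefficient-bound} is controlled by one of $\mathcal L_{\rm GD,A}$, $\mathcal L_{\rm GD,B}$, $\mathcal L_{\rm GD,C}$, up to factors $\exp\{CT\log(e+T)\}(n/d)^{CT}/d$.

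\textbf{Terms A and B.} The first term equals $\frac{d\|\lambda\|_{d,n}^2}{2T}(\widetilde\theta_1-\eta_0)^2$. Since $\|\lambda\|_{d,n}^2\ge1$ by Lemma~\ref{lem:finite-wishart-second-moment}, we get
\[
|\widetilde\theta_1-\eta_0|^2\le 2T\,\mathcal L_{\rm GD}/d .
\]
For the second term, \eqref{eq:guide-auxiliary-gap} gives $\E B(\Sb)\ge c\,d(d/n)/T$, hence
\[
|\widetilde\theta_2+\eta_0|^2\le C\,T(n/d)\,\mathcal L_{\rm GD,B}/d .
\]
Both prefactors are absorbed into $\exp\{CT\log(e+T)\}(n/d)^{CT}$.

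\textbf{Term C.} The entries of $\widetilde{\bTheta}_3-\eta_0\eb_1(\one_T-\eb_0)^{\mathsf T}$ are exactly the two sums $\sum_t|(\widetilde{\bTheta}_3)_{1t}-\eta_0|^2$ and $\sum_{\ell\ge2}|(\widetilde{\bTheta}_3)_{\ell t}|^2$. We reuse the column-by-column estimate in the proof of Lemma~\ref{lem:guide-auxiliary-nondegenerate}. That argument rewrites $u_1\lambda+\sum_{\ell\ge2}u_\ell p_\ell^{\rm GD}$ in the monomial basis, converts between the two coefficient vectors at cost $t^225^t$, and applies the finite-Wishart moment lower bound Lemma~\ref{lem:finite-wishart-moment-lower}. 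It already yields
\[
\bigl\|\widetilde{\bTheta}_3-\eta_0\eb_1(\one_T-\eb_0)^{\mathsf T}\bigr\|_F^2\le \exp\{CT\log(e+T)\}(n/d)^{CT}\Bigl[\tfrac{4T}{d}\mathcal L_{\rm GD,C}+CT(\widetilde\theta_2+\eta_0)^2\Bigr].
\]
We then substitute the term-B bound for $(\widetilde\theta_2+\eta_0)^2$, which is again $\le d^{o(1)}\mathcal L_{\rm GD,B}/d$. Summing the three pieces and using $\mathcal L_{\rm GD,A}+\mathcal L_{\rm GD,B}+\mathcal L_{\rm GD,C}=\mathcal L_{\rm GD}$ gives \eqref{eq:guide-coefficient-bound}.

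The final equality $\exp\{CT\log(e+T)\}(n/d)^{CT}=d^{o(1)}$ follows from $T=\Theta((\log d)^{1/4})$ and $n/d=\Theta(1)$. The exponent is then $O((\log d)^{1/4}\log\log d)=o(\log d)$.

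\textbf{Main obstacle.} The only delicate point is making sure the column-wise inversion constant from Lemma~\ref{lem:finite-wishart-moment-lower} is uniform in $t<T$ and in $d$, i.e.\ that $\frac{(d/n)^t}{2(t+1)16^t}$ times the $25^{-t}t^{-2}$ conversion loss remains of the form $\exp\{-CT\log(e+T)\}(d/n)^{CT}$. This holds because $d/n\le1/10$ makes $(d/n)^t\ge(d/n)^{T}$, and the remaining factors are $\exp\{O(T\log(e+T))\}$. Everything else is bookkeeping of already established estimates.
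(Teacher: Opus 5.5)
Your proof is correct, but it is organized differently from the paper's.

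The paper never uses the orthogonal three-term splitting \eqref{eq:guide-loss-decomposition} or the projections $P_t$. Instead, it reads the scalar deviations off as monomial coefficients of the residual polynomials in \eqref{eq:guide-auxiliary-loss}, namely $\widetilde\theta_1-\eta_0=[\lambda]A_0$ and $\widetilde\theta_2+\eta_0=\eta_0^{-1}[\lambda^2]A_1$. It then inverts the triangular change of basis on $A_t-(\widetilde\theta_2+\eta_0)\lambda p_t^{\rm GD}$ to recover the $t$-th column of $\widetilde{\bTheta}_3$; the subtracted teacher term costs only $4^t|\widetilde\theta_2+\eta_0|^2$ in coefficient norm. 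Finally, it applies Lemma~\ref{lem:finite-wishart-moment-lower} once to each full residual $A_t$, giving $\sum_t\sum_k|[\lambda^k]A_t|^2\le 4T(T+1)16^T(n/d)^T\mathcal L_{\rm GD}/d$.

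You instead route $\widetilde\theta_2+\eta_0$ through $\mathcal L_{\rm GD,B}$ and the lower bound $\E B(\Sb)\ge c\,d(d/n)/T$ from \eqref{eq:guide-auxiliary-gap}. You route the $\widetilde{\bTheta}_3$ block through $\mathcal L_{\rm GD,C}$ via the intermediate display in the proof of Lemma~\ref{lem:guide-auxiliary-nondegenerate}. You are right to use that intermediate display: the lemma's stated conclusion is weaker by a factor of roughly $d$ and would not suffice.

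What each route buys:
\begin{itemize}
\item The paper's argument is shorter and self-contained. It uses a single moment lower bound, needs no projection bookkeeping, and does not depend on the exact fourth-moment Wishart gap of Lemma~\ref{lem:finite-wishart-quadratic-gap}.
\item Your argument reuses machinery already in place. It also gives sharper control of the two scalar coordinates, $|\widetilde\theta_1-\eta_0|^2\le 2T\mathcal L_{\rm GD,A}/d$ and $|\widetilde\theta_2+\eta_0|^2\le CT(n/d)\mathcal L_{\rm GD,B}/d$, so the $d^{o(1)}$ loss enters only through the $\widetilde{\bTheta}_3$ block.
\end{itemize}
For the lemma as stated, the two routes give the same bound.
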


\begin{proof}
By \eqref{eq:guide-auxiliary-loss},
\[
 \widetilde\theta_1-\eta_0=[\lambda]A_0,
 \qquad \widetilde\theta_2+\eta_0=\eta_0^{-1}[\lambda^2]A_1.
\]
For \(1\le t<T\), the same identity gives
\[
 A_t-(\widetilde\theta_2+\eta_0)\lambda p_t^{\rm GD}
 =\bigl((\widetilde{\bTheta}_3)_{1t}-\eta_0\bigr)\lambda
 +\sum_{\ell=2}^t(\widetilde{\bTheta}_3)_{\ell t}p_\ell^{\rm GD}.
\]
The coefficient inversion proved in Lemma~\ref{lem:guide-auxiliary-nondegenerate}
therefore yields
\begin{align*}
 &| (\widetilde{\bTheta}_3)_{1t}-\eta_0|^2
 +\sum_{\ell=2}^t|(\widetilde{\bTheta}_3)_{\ell t}|^2\\
 &\quad\le t^2 25^t\sum_{k=1}^t
 \left|[\lambda^k]\bigl(A_t-(\widetilde\theta_2+\eta_0)
 \lambda p_t^{\rm GD}\bigr)\right|^2\\
 &\quad\le 2t^2 25^t\left(
 \sum_{k=1}^{t+1}|[\lambda^k]A_t|^2
 +4^t|\widetilde\theta_2+\eta_0|^2\right).
\end{align*}
(Here \(\sum_{k=1}^t|[\lambda^k](\lambda p_t^{\rm GD})|^2
 \le\sum_{k=0}^t\binom tk^2\le4^t\), since \(\eta_0\le1\).)
Summing over \(t\) and using the first two identities gives
\begin{align*}
 &|\widetilde\theta_1-\eta_0|^2+|\widetilde\theta_2+\eta_0|^2
 +\sum_{t=1}^{T-1}|(\widetilde{\bTheta}_3)_{1t}-\eta_0|^2
 +\sum_{t=1}^{T-1}\sum_{\ell=2}^t|(\widetilde{\bTheta}_3)_{\ell t}|^2\\
 &\quad\le\exp\{CT\log(e+T)\}
 \sum_{t=0}^{T-1}\sum_{k=1}^{t+1}|[\lambda^k]A_t|^2.
\end{align*}
Finally, \eqref{eq:finite-wishart-moment-lower} in
Lemma~\ref{lem:finite-wishart-moment-lower} implies
\begin{align*}
 \sum_{t=0}^{T-1}\sum_{k=1}^{t+1}|[\lambda^k]A_t|^2
 &\le 2(T+1)16^T(n/d)^T\frac1d
 \sum_{t=0}^{T-1}\E\tr\bigl(A_t(\Sb)^2\bigr)\\
 &=4T(T+1)16^T(n/d)^T\frac{\mathcal L_{\rm GD}}d.
\end{align*}
Substitution proves \eqref{eq:guide-coefficient-bound}, with
\(\exp\{CT\log(e+T)\}(n/d)^{CT}=d^{o(1)}\) by
\eqref{eq:common-growing-regime}.
\end{proof}

\begin{lemma}
\label{lem:guide-handoff-raw-bound}
At any Stage-1 time satisfying
\(\mathcal L_{\rm GD}\le d^{-1/4}\),
\begin{align}
 |v_{31}|+|w_{13}|+|v_{33}|&\le d^{o(1)},\notag\\
 |w_{24}|+\|\wb_{25}\|
 +\sum_{B\in\{44,45,54,55\}}\|\Wb_B\|_F&\le d^{o(1)},\notag\\
 \max_{1\le\ell\le t<T}
 v_{33}|w_{44}+\wb_{45}\eb_t+\eb_\ell^{\mathsf T}\wb_{54}
 +\eb_\ell^{\mathsf T}\Wb_{55}\eb_t|&\le d^{o(1)} .
                                                               \label{eq:guide-first-hitting-raw-bound}
\end{align}
\end{lemma}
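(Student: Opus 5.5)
The plan is to combine three earlier ingredients: the coefficient control of Lemma~\ref{lem:guide-coefficient-control}, the conservation laws of Lemma~\ref{lem:guide-invariants}, and energy/path-length estimates along the flow. I will do it in the order: scalar blocks $(v_{31},w_{13},v_{33})$ first, then the positional kernel $(w_{24},\wb_{25})$, then the history blocks $(w_{44},\wb_{45},\wb_{54},\Wb_{55})$, and finally the products in the third line.

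\textbf{Scalar blocks.} The handoff time is $\tau_1=d^{o(1)}+O(T\log d)=d^{o(1)}$. The energy identity \eqref{eq:guide-energy} carries the weight $d$ on $\dot v_{31}^2,\dot w_{13}^2,\dot v_{33}^2$. Exactly as in the proof of Lemma~\ref{lem:guide-early-energy} with $H=\tau_1$, this gives
\[
\int_0^{\tau_1}\bigl(\dot v_{31}^2+\dot w_{13}^2+\dot v_{33}^2\bigr)\,\mathrm dr\le \frac{\mathcal L_{\rm GD}(0)}{d}\le \frac CT .
\]
By Cauchy--Schwarz, $|v_{31}|+|w_{13}|+|v_{33}-v_{33}(0)|\le C\sqrt{\tau_1/T}=d^{o(1)}$. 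Lemma~\ref{lem:guide-zero-entry} also gives $|v_{31}|\ge C^{-1}$, and Lemma~\ref{lem:guide-invariants} gives $v_{33}\ge v_{33}(0)$. These lower bounds are what allow the auxiliary coefficients to be divided out later.

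\textbf{Third line and the positional kernel.} The third line of \eqref{eq:guide-first-hitting-raw-bound} will be read off directly from Lemma~\ref{lem:guide-coefficient-control}. With $\mathcal L_{\rm GD}\le d^{-1/4}$, every $(\widetilde{\bTheta}_3)_{\ell t}$ is within $d^{o(1)}d^{-5/8}$ of its target, so it is $O(1)$. For $\ell\ge2$ the entry is exactly $v_{33}(w_{44}+\wb_{45}\eb_t+\eb_\ell^{\mathsf T}\wb_{54}+\eb_\ell^{\mathsf T}\Wb_{55}\eb_t)$. For $\ell=1$ one must subtract $v_{31}u_t$, where $u_t:=w_{24}+\wb_{25}\eb_t$, and divide by $\eta_0\ge0.4$. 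Hence the third line reduces to showing $|u_t|\le d^{o(1)}$ for all $t$.

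For $t=0$ this is immediate: $u_0=\widetilde\theta_1/v_{31}$, with $\widetilde\theta_1$ close to $\eta_0$ and $|v_{31}|\ge C^{-1}$. Once all $u_t$ are bounded, I recover the raw blocks $w_{24},\wb_{25}$ from the translation invariant \eqref{eq:guide-positional-kernel}. It gives $\sum_t u_t=(T+1)w_{24}-\sigma$, so $w_{24}$ and then each $(\wb_{25})_t=u_t-w_{24}$ are $d^{o(1)}$.

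\textbf{History blocks.} For $(w_{44},\wb_{45},\wb_{54},\Wb_{55})$ I will use the history invariant \eqref{eq:guide-history-invariant}:
\[
|w_{44}|^2+\|\wb_{45}\|^2+\|\wb_{54}\|^2+\|\Wb_{55}\|_F^2=d\bigl(v_{33}^2-v_{33}(0)^2\bigr).
\]
It therefore suffices to show $v_{33}^2-v_{33}(0)^2\le d^{-1+o(1)}$. Equivalently, I need $\int_0^{\tau_1}|\dot v_{33}|\le d^{-1+o(1)}$, which is sharper than the $d^{o(1)}$ bound obtained above.

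\textbf{Main obstacle.} The crux is to control $u_t$ for $t\ge1$ and $\int|\dot v_{33}|$ at the $d^{o(1)}$ (respectively $d^{-1+o(1)}$) scale. The naive energy bound only gives $O(\sqrt{d\tau_1/T})$ for unweighted blocks, which is far too weak.

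My first attempt is a path-length (Łojasiewicz-type) argument, split at $\tau_0$.

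On $[\tau_0,\tau_1]$, Lemma~\ref{lem:guide-PL} gives $\frac{\mathrm d}{\mathrm d\tau}\sqrt{\mathcal L_{\rm GD}}\le-\tfrac12\sqrt{\kappa}\,\|\nabla_{\bar{\btheta}}\mathcal L_{\rm GD}\|$ with $\kappa=c(d/n)/T$. This bounds the total path length by $C\sqrt{\mathcal L_{\rm GD}(\tau_0)/\kappa}$. Lemma~\ref{lem:guide-zero-entry} gives $\mathcal L_{\rm GD}(\tau_0)<9\eta_0^2\E B(\Sb)/16=O(d\cdot d/n/T)$, so the result is again only $O(\sqrt d)$. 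To beat this, I will exploit the block structure of the gradient. One has $\nabla_{\wb_{25}}=v_{31}\nabla_{\widetilde{\bTheta}_3,\text{row }1}$ and $\partial_{v_{33}}=\langle\nabla_{\widetilde{\bTheta}_3},\partial_{v_{33}}\widetilde{\bTheta}_3\rangle$. The idea is to apply the PL argument only to the component $\mathcal L_{\rm GD,A}+\mathcal L_{\rm GD,C}$ that drives these blocks. That component is $O(T\cdot d^{-1+o(1)}\cdot d)$ smaller after the fast early contraction of $P_tA_t$ shown in \eqref{eq:guide-zero-projection-at-anchor}. The $\mathcal L_{\rm GD,B}$ part enters only through $w_{13}$, which is already controlled.

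On $[0,\tau_0]$, I will reuse the bootstrap estimates \eqref{eq:guide-zero-kernel-velocity}--\eqref{eq:guide-zero-bootstrap} on $[0,\tau_b]$. After $\tau_b$, I will reuse the integrated-forcing structure of Lemma~\ref{lem:guide-zero-integrated-forcing}, whose errors are $d^{-1/2+o(1)}$.

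If this splitting fails to produce the $d^{-1}$ scale for $v_{33}$, the fallback is a weaker but still $d^{o(1)}$ bound on $\|\Wb_{55}\|_F$. I would obtain it directly from $\dot{\Wb}_{55}=-v_{33}\nabla_{\widetilde{\bTheta}_3}\mathcal L_{\rm GD}$ together with Lemma~\ref{lem:finite-teacher-gram-lower}.
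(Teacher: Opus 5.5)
Your reductions are sound: the scalar bounds from the $d$-weighted energy identity, reading the third line off Lemma~\ref{lem:guide-coefficient-control} once $|u_t|\le d^{o(1)}$ for $u_t=w_{24}+\wb_{25}\eb_t$, and recovering $(w_{24},\wb_{25})$ from the $u_t$ through \eqref{eq:guide-positional-kernel}. The gap is that the two estimates that carry the lemma are never established.

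For the history blocks $\mathbf h=(w_{44},\wb_{45},\wb_{54},\Wb_{55})$, the invariant \eqref{eq:guide-history-invariant} says exactly $\|\mathbf h\|^2=d\bigl(v_{33}^2-v_{33}(0)^2\bigr)$. So asking for $v_{33}^2-v_{33}(0)^2\le d^{-1+o(1)}$ restates the claim rather than reducing it.

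For $u_t$ with $t\ge1$, integrating speeds cannot reach $d^{o(1)}$. The block $w_{24}$ enters every $u_t$ and is pushed by $\partial_{\widetilde\theta_1}\mathcal L_{\rm GD}$. Moreover, $\widetilde\theta_1$ can be fitted while $|v_{31}|$ is only of order $d^{-1/4}$; this is all \eqref{eq:guide-zero-anchor-sublevel} guarantees. That forces $|u_0|=|\widetilde\theta_1/v_{31}|$ up to order $d^{1/4}$ before it returns to $O(1)$ once $|v_{31}|\ge c$. The path length can thus be polynomially large even though the endpoint value is not.

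Your proposed refinement is also unavailable, for four reasons:
\begin{itemize}
\item $\mathcal L_{\rm GD,A}+\mathcal L_{\rm GD,C}$ is not monotone along the flow. $\mathcal L_{\rm GD,C}$ depends on $\widetilde\theta_2$, which is driven by $\mathcal L_{\rm GD,B}$, and it is re-forced by the motion of $v_{31}u_t$.
\item \eqref{eq:guide-zero-projection-at-anchor} bounds $\mathcal L_{\rm GD,C}$ only at $\tau_b$.
\item $\mathcal L_{\rm GD,A}$ is of order $d/T$ until $\widetilde\theta_1$ has converged.
\item Lemma~\ref{lem:guide-zero-integrated-forcing} concerns $w_{13}$ only.
\end{itemize}
The fallback does not close either. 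Lemma~\ref{lem:finite-teacher-gram-lower} is a damping bound. Turning it into a bound on $\int\|\dot\Wb_{55}\|_F$ requires controlling the forcing $(v_{31}u_t)'$, i.e.\ the $u_t$ themselves. And even then $w_{44},\wb_{45},\wb_{54}$ would remain open.

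The missing idea is structural, in two parts.

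(a) The blocks $\mathbf h$ start at zero, and $\nabla_{\mathbf h}\mathcal L_{\rm GD}$ always lies in $\operatorname{range}M_2^{\mathsf T}$. Here $M_2\mathbf h=(w_{44}+\wb_{45}\eb_t+\eb_\ell^{\mathsf T}\wb_{54}+\eb_\ell^{\mathsf T}\Wb_{55}\eb_t)_{\ell,t}$ is a fixed linear map with $M_2M_2^{\mathsf T}\succeq\Ib$. Hence $\|\mathbf h\|\le\|M_2\mathbf h\|$, which is \eqref{eq:raw-four-block-norm}. The entries of $M_2\mathbf h$ are $(\widetilde{\bTheta}_3)_{\ell t}/v_{33}$ for $\ell\ge2$ and $\bigl((\widetilde{\bTheta}_3)_{1t}-v_{31}u_t\bigr)/(\eta_0v_{33})$ for $\ell=1$.

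(b) Let $\Pi$ remove from $\mathbf u=(w_{24},\wb_{25})$ its component along the $\widetilde\theta_1$-direction $(1,\eb_0)$. Then $\Pi\mathbf u$ and $\mathbf h$ are driven by the same row-$1$ gradient of $\widetilde{\bTheta}_3$, so $\Pi\dot{\mathbf u}=r\Kb\dot{\mathbf h}$ with $r=v_{31}/(\eta_0v_{33})$. The definition of $\widetilde{\bTheta}_3$ gives $\Kb\mathbf h+r\Kb\Kb^{\mathsf T}\Pi\mathbf u=\mathbf b$, with $\sup_\tau\|\mathbf b\|\le d^{o(1)}$ by Lemma~\ref{lem:guide-coefficient-control}. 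Eliminating $\mathbf h$ yields the exact identity
\[
\frac{\mathrm d}{\mathrm d\tau}\Bigl[(\Ib+r^2\Kb\Kb^{\mathsf T})^{1/2}\Pi\mathbf u\Bigr]
=r\,(\Ib+r^2\Kb\Kb^{\mathsf T})^{-1/2}\dot{\mathbf b}.
\]
One integration by parts then bounds $\Pi\mathbf u$ by $d^{o(1)}$ along the whole trajectory, regardless of the excursion of $u_0$. It needs only $\sup|r|+\int|\dot r|\le d^{o(1)}$, which your energy estimate already supplies. At the endpoint the remaining part of $\mathbf u$ is $\frac{\widetilde\theta_1}{2v_{31}}(1,\eb_0)$, bounded because $|v_{31}|\ge c$. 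Part (a) then bounds $\mathbf h$ and gives the third line.

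Finally, the lemma covers every Stage-1 time with $\mathcal L_{\rm GD}\le d^{-1/4}$, not only $\tau_1$, and your $C\sqrt{\tau/T}$ bounds degrade as $\tau$ grows. The paper handles later times with the PL path-length bound $\int_\tau^\infty\|\dot{\btheta}\|\le C\sqrt{T\mathcal L_{\rm GD}(\tau)/(d/n)}\le d^{-1/8+o(1)}$, started at the first hitting time. That is exactly where a path-length argument does work.
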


\begin{proof}
First consider the first time the loss reaches \(d^{-1/4}\).
Lemmas~\ref{lem:guide-zero-entry} and~\ref{lem:guide-PL} bound this time by
\[
 d^{o(1)}+C\frac{T}{d/n}\log(d^{5/4})=d^{o(1)}.
\]
The scalar energy identity gives, up to this time,
\[
 \sup_\tau(|v_{31}|+|w_{13}|+|v_{33}|)
 +\int(|\dot v_{31}|+|\dot w_{13}|+|\dot v_{33}|)\,\mathrm d\tau
 \le d^{o(1)},\qquad v_{33}\ge v_{33}(0)>0.
\]
For \(1\le\ell\le t<T\), the two coefficient maps are
\begin{equation}
\begin{aligned}
 &\bigl[\ind_{\{\ell=1\}}(1\ \ \eb_t^{\mathsf T})\bigr]_{\ell,t}
       \begin{pmatrix}w_{24}\\\wb_{25}^{\mathsf T}\end{pmatrix}
 =\bigl(\ind_{\{\ell=1\}}(w_{24}+\wb_{25}\eb_t)\bigr)_{\ell,t},\\
 &\bigl[(1\ \ \eb_t^{\mathsf T}\ \ \eb_\ell^{\mathsf T}\
                    (\eb_t\otimes\eb_\ell)^{\mathsf T})\bigr]_{\ell,t}
       \begin{pmatrix}w_{44}\\\wb_{45}^{\mathsf T}\\
                       \wb_{54}\\\operatorname{vec}(\Wb_{55})\end{pmatrix}\\
 &\qquad=\bigl(w_{44}+\wb_{45}\eb_t+\eb_\ell^{\mathsf T}\wb_{54}
                       +\eb_\ell^{\mathsf T}\Wb_{55}\eb_t\bigr)_{\ell,t}.
\end{aligned}
\label{eq:raw-fixed-coefficient-maps}
\end{equation}
Their second row Gram matrix satisfies
\[
 \bigl[(1+\ind_{\{\ell=j\}})(1+\ind_{\{t=s\}})\bigr]_{(\ell,t),(j,s)}
 \succeq\Ib,\qquad
 \left\|\bigl[(1+\ind_{\{\ell=j\}})(1+\ind_{\{t=s\}})\bigr]\right\|_2
 \le CT^2.
\]
Define the fixed matrix
\begin{equation}
\begin{aligned}
 \Kb:={}&
 \left[\Ib-\frac12\begin{pmatrix}1\\\eb_0\end{pmatrix}
                    (1\ \ \eb_0^{\mathsf T})\right]
 \bigl[\ind_{\{\ell=1\}}\begin{pmatrix}1\\\eb_t\end{pmatrix}
                         \bigr]_{\ell,t}\\
 &\quad\cdot
 \bigl[(1+\ind_{\{\ell=j\}})(1+\ind_{\{t=s\}})\bigr]_{(\ell,t),(j,s)}^{-1}\\
 &\quad\cdot
 \bigl[(1\ \ \eb_s^{\mathsf T}\ \ \eb_j^{\mathsf T}\
                    (\eb_s\otimes\eb_j)^{\mathsf T})\bigr]_{j,s},
 \qquad \|\Kb\|_2\le CT^2.
\end{aligned}
\label{eq:raw-block-matrix}
\end{equation}
The chain rule  of
\(w_{44},\wb_{45},\wb_{54},\Wb_{55}\) give
\begin{align}
 &\begin{pmatrix}w_{44}\\\wb_{45}^{\mathsf T}\\
                       \wb_{54}\\\operatorname{vec}(\Wb_{55})\end{pmatrix}
 \in\operatorname{range}
 \bigl[(1\ \ \eb_t^{\mathsf T}\ \ \eb_\ell^{\mathsf T}\
                    (\eb_t\otimes\eb_\ell)^{\mathsf T})\bigr]_{\ell,t}^{\mathsf T},
 \notag\\
 &\sum_{B\in\{44,45,54,55\}}\|\Wb_B\|_F^2
 \le\sum_{1\le\ell\le t<T}
 \left|w_{44}+\wb_{45}\eb_t+\eb_\ell^{\mathsf T}\wb_{54}
                       +\eb_\ell^{\mathsf T}\Wb_{55}\eb_t\right|^2,
 \label{eq:raw-four-block-norm}\\
 &\left[\Ib-\frac12\begin{pmatrix}1\\\eb_0\end{pmatrix}
                    (1\ \ \eb_0^{\mathsf T})\right]
 \begin{pmatrix}\dot w_{24}\\\dot\wb_{25}^{\mathsf T}\end{pmatrix}
 =\frac{v_{31}}{\eta_0v_{33}}\Kb
 \begin{pmatrix}\dot w_{44}\\\dot\wb_{45}^{\mathsf T}\\
                \dot\wb_{54}\\\operatorname{vec}(\dot\Wb_{55})\end{pmatrix}.
 \label{eq:raw-block-velocity}
\end{align}
By the definition of $\widetilde{\bTheta}_3$ in Lemma~\ref{lem:stage1-loss} and
\(v_{31}(w_{24}+\wb_{25}\eb_0)=\widetilde\theta_1\),
\begin{equation}
\begin{aligned}
 &\Kb\begin{pmatrix}w_{44}\\\wb_{45}^{\mathsf T}\\
                       \wb_{54}\\\operatorname{vec}(\Wb_{55})\end{pmatrix}
 +\frac{v_{31}}{\eta_0v_{33}}\Kb\Kb^{\mathsf T}
 \left[\Ib-\frac12\begin{pmatrix}1\\\eb_0\end{pmatrix}
                    (1\ \ \eb_0^{\mathsf T})\right]
 \begin{pmatrix}w_{24}\\\wb_{25}^{\mathsf T}\end{pmatrix}\\
 &\quad=\Kb
 \begin{pmatrix}
 \zero_{2T+1}\\
 \displaystyle\operatorname{vec}\!\left(
 \left[\frac{(\widetilde{\bTheta}_3)_{\ell t}
                  -\frac12\widetilde\theta_1\ind_{\{\ell=1\}}}
              {\eta_0^{\ind_{\{\ell=1\}}}v_{33}}\right]_{\ell,t}\right)
 \end{pmatrix}.
\end{aligned}
\label{eq:raw-block-affine}
\end{equation}
Equation~\eqref{eq:guide-coefficient-bound} implies
\[
 \left\|\Kb
 \begin{pmatrix}
 \zero_{2T+1}\\
 \displaystyle\operatorname{vec}\!\left(
 \left[\frac{(\widetilde{\bTheta}_3)_{\ell t}
                  -\frac12\widetilde\theta_1\ind_{\{\ell=1\}}}
              {\eta_0^{\ind_{\{\ell=1\}}}v_{33}}\right]_{\ell,t}\right)
 \end{pmatrix}\right\|_2\le d^{o(1)}.
\]
Differentiating \eqref{eq:raw-block-affine} and using
\eqref{eq:raw-block-velocity} yields
\begin{align*}
 &\frac{\mathrm d}{\mathrm d\tau}
 \left\{\left(\Ib+\frac{v_{31}^2}{\eta_0^2v_{33}^2}
                     \Kb\Kb^{\mathsf T}\right)^{1/2}
 \left[\Ib-\frac12\begin{pmatrix}1\\\eb_0\end{pmatrix}
                    (1\ \ \eb_0^{\mathsf T})\right]
 \begin{pmatrix}w_{24}\\\wb_{25}^{\mathsf T}\end{pmatrix}\right\}\\
 &\qquad=\frac{v_{31}}{\eta_0v_{33}}
 \left(\Ib+\frac{v_{31}^2}{\eta_0^2v_{33}^2}
                     \Kb\Kb^{\mathsf T}\right)^{-1/2}
 \Kb\frac{\mathrm d}{\mathrm d\tau}
 \begin{pmatrix}
 \zero_{2T+1}\\
 \displaystyle\operatorname{vec}\!\left(
 \left[\frac{(\widetilde{\bTheta}_3)_{\ell t}
                  -\frac12\widetilde\theta_1\ind_{\{\ell=1\}}}
              {\eta_0^{\ind_{\{\ell=1\}}}v_{33}}\right]_{\ell,t}\right)
 \end{pmatrix}.
\end{align*}
Integration by parts gives
\begin{equation}
\begin{aligned}
 &\left[
 \left(\Ib+\frac{v_{31}^2}{\eta_0^2v_{33}^2}
                     \Kb\Kb^{\mathsf T}\right)^{1/2}
 \left[\Ib-\frac12\begin{pmatrix}1\\\eb_0\end{pmatrix}
                    (1\ \ \eb_0^{\mathsf T})\right]
 \begin{pmatrix}w_{24}\\\wb_{25}^{\mathsf T}\end{pmatrix}\right.\\
 &\qquad\left.
 -\frac{v_{31}}{\eta_0v_{33}}
 \left(\Ib+\frac{v_{31}^2}{\eta_0^2v_{33}^2}
                     \Kb\Kb^{\mathsf T}\right)^{-1/2}
 \Kb\begin{pmatrix}
 \zero_{2T+1}\\
 \displaystyle\operatorname{vec}\!\left(
 \left[\frac{(\widetilde{\bTheta}_3)_{\ell t}
                  -\frac12\widetilde\theta_1\ind_{\{\ell=1\}}}
              {\eta_0^{\ind_{\{\ell=1\}}}v_{33}}\right]_{\ell,t}\right)
 \end{pmatrix}\right]_0^\tau\\
 &\quad=-\int_0^\tau
 \left(\frac{\mathrm d}{\mathrm ds}\frac{v_{31}}{\eta_0v_{33}}\right)
 \left(\Ib+\frac{v_{31}^2}{\eta_0^2v_{33}^2}
                     \Kb\Kb^{\mathsf T}\right)^{-3/2}
 \Kb\begin{pmatrix}
 \zero_{2T+1}\\
 \displaystyle\operatorname{vec}\!\left(
 \left[\frac{(\widetilde{\bTheta}_3)_{\ell t}
                  -\frac12\widetilde\theta_1\ind_{\{\ell=1\}}}
              {\eta_0^{\ind_{\{\ell=1\}}}v_{33}}\right]_{\ell,t}\right)
 \end{pmatrix}\,\mathrm ds .
\end{aligned}
\label{eq:raw-block-integration-by-parts}
\end{equation}
Since
\[
 \left\|\left(\Ib+\frac{v_{31}^2}{\eta_0^2v_{33}^2}
                 \Kb\Kb^{\mathsf T}\right)^{-\alpha}\right\|_2\le1
 \quad(\alpha\ge0),\qquad
 \sup_\tau\left|\frac{v_{31}}{\eta_0v_{33}}\right|
 +\int\left|\frac{\mathrm d}{\mathrm d\tau}
                  \frac{v_{31}}{\eta_0v_{33}}\right|\,\mathrm d\tau
 \le d^{o(1)},
\]
we obtain
\begin{equation}
 \sup_\tau\left\|
 \left[\Ib-\frac12\begin{pmatrix}1\\\eb_0\end{pmatrix}
                    (1\ \ \eb_0^{\mathsf T})\right]
 \begin{pmatrix}w_{24}(\tau)\\\wb_{25}(\tau)^{\mathsf T}\end{pmatrix}
 \right\|_2\le d^{o(1)}.
\label{eq:raw-cancellation-primitive}
\end{equation}
At the first hitting time, \eqref{eq:guide-coefficient-bound} and
\eqref{eq:guide-spatial-invariant} give
\[
 |v_{31}|\ge c,\qquad
 \left\|\frac{\widetilde\theta_1}{2v_{31}}
              \begin{pmatrix}1\\\eb_0\end{pmatrix}\right\|_2\le d^{o(1)}.
\]
Thus \eqref{eq:raw-four-block-norm} and the definition of $\widetilde{\bTheta}_3$ in Lemma~\ref{lem:stage1-loss} imply
\[
 |w_{24}|+\|\wb_{25}\|_2
 +\sum_{B\in\{44,45,54,55\}}\|\Wb_B\|_F\le d^{o(1)}.
\]
In fact, the coefficient
estimate and the two identities in \eqref{eq:guide-spatial-invariant}--\eqref{eq:guide-history-invariant}
also give, at this endpoint,
\begin{equation}
 v_{31}^2=\eta_0+o(1),\qquad w_{13}^2=\eta_0+o(1),\qquad
 v_{33}^2=v_{33}(0)^2+d^{-1+o(1)}.
 \label{eq:guide-balanced-endpoint}
\end{equation}
The signs are \(v_{31}<0<w_{13}\).

For any subsequent time \(\tau\), the definition of \eqref{eq:outcome-gradient-flow} and
Lemma~\ref{lem:guide-PL} give
\[
 \int_\tau^\infty
 \left(\|\dot \Vb(s)\|_F^2+\|\dot \Wb(s)\|_F^2\right)^{1/2}\,\mathrm ds
 \le C\sqrt{\frac{T}{d/n}\mathcal L_{\rm GD}(\tau)}
 \le d^{-1/8+o(1)}.
\]
This follows by integrating
\(-\mathcal L_{\rm GD}'/\sqrt{\mathcal L_{\rm GD}}\) and using
Lemma~\ref{lem:guide-PL}.  Thus the raw bounds remain valid at every later
Stage-1 time, and the balance identities again give
\eqref{eq:guide-balanced-endpoint}.
\end{proof}

\subsection{The Stage-2 initial loss}
The auxiliary parameters
\(\widetilde\theta_1,\widetilde\theta_2,\widetilde{\bTheta}_4\) and the
rollout polynomials \(p_t\) are those in Lemma~\ref{lem:stage2-lemma1}.
The following lemma compares this autoregressive rollout with the GD
teacher trajectory.

\begin{lemma}
\label{lem:direct-stage2-entry}
Let \(\tau_1\) be a Stage-1 time at which
\(\mathcal L_{\rm GD}(\tau_1)\le d^{-1/4}\), and keep exactly the same raw
parameters as the initial parameters of Stage~2, resetting only the time
origin.  Then \(\Wb_{33}=\zero_{d\times d}\),
scalar-identity invariance is preserved, and
\(v_{33}(\tau_1)\ge v_{33}(0)>0\). At this endpoint,
\begin{equation}
\begin{aligned}
 &(\widetilde\theta_1-\eta_0)^2+(\widetilde\theta_2+\eta_0)^2\\
 &\quad+\sum_{t=1}^{T-1}
 \left|\widetilde\theta_1(\widetilde{\bTheta}_4)_{1t}-\eta_0\right|^2
 +\sum_{t=1}^{T-1}\sum_{\ell=2}^{t}(\widetilde{\bTheta}_4)_{\ell t}^2
 \le d^{-1/4+o(1)}.
\end{aligned}
\label{eq:direct-stage2-coefficients}
\end{equation}
The autoregressively generated polynomials from the unchanged raw endpoint
therefore satisfy
\begin{equation}
 \frac1{2Td}\sum_{t=1}^{T}\E\,\tr\!\left(
 \left[p_t(\Sb)-\{\Ib-(\Ib-\eta_0\Sb)^t\}\right]^2\right)
 \le d^{-1/4+o(1)}.                                  \label{eq:direct-stage2-rollout}
\end{equation}
Consequently its Stage-2 outcome loss obeys
\begin{equation}
 \sqrt{\frac{\mathcal L(0)}d}
 \le
 \left\{\frac1{2Td}\sum_{t=1}^{T}
 \E\,\tr\!\left[(\Ib-\eta_0\Sb)^{2t}\right]\right\}^{1/2}
 +d^{-1/8+o(1)}.
\label{eq:direct-stage2-loss}
\end{equation}
\end{lemma}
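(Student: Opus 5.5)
The plan has three parts: translate the Stage-1 coefficient control into the Stage-2 auxiliary coordinates, propagate it through the recursion of Lemma~\ref{lem:stage2-lemma1} with a perturbation argument, and close with a triangle inequality.

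\textbf{Structural facts and coefficient bound.} The freezing $\Wb_{33}=\zero$ and the scalar-identity structure hold because Stage~2 starts from exactly the raw Stage-1 endpoint, where Lemma~\ref{lem:stage1-invariant} applies. The bound $v_{33}(\tau_1)\ge v_{33}(0)$ is Lemma~\ref{lem:guide-invariants}. Comparing the definitions in Lemma~\ref{lem:stage1-loss} and Lemma~\ref{lem:stage2-lemma1}, we have $\widetilde\theta_1(\widetilde{\bTheta}_4)_{1t}=(\widetilde{\bTheta}_3)_{1t}+(\widetilde\theta_1-\eta_0)v_{33}(\cdots)$ and $(\widetilde{\bTheta}_4)_{\ell t}=(\widetilde{\bTheta}_3)_{\ell t}$ for $\ell\ge2$. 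Here $(\cdots)$ is the history factor $w_{44}+\wb_{45}\eb_t+\eb_1^{\mathsf T}\wb_{54}+\eb_1^{\mathsf T}\Wb_{55}\eb_t$. By Lemma~\ref{lem:guide-handoff-raw-bound}, $v_{33}$ times this factor is at most $d^{o(1)}$. Lemma~\ref{lem:guide-coefficient-control}, applied with $\mathcal L_{\rm GD}\le d^{-1/4}$, bounds all the $\widetilde{\bTheta}_3$ deviations and $|\widetilde\theta_1-\eta_0|$ by $d^{o(1)}d^{-1/4}/d$. Together these give \eqref{eq:direct-stage2-coefficients}, even with an extra factor $d^{-1}$ to spare.

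\textbf{Rollout comparison.} The GD polynomials $q_t=1-(1-\eta_0\lambda)^t$ satisfy the recursion of Lemma~\ref{lem:stage2-lemma1} exactly at the ideal coefficients $\widetilde\theta_1=\eta_0$, $\widetilde\theta_2=-\eta_0$, $\widetilde\theta_1(\widetilde{\bTheta}_4)_{1t}=\eta_0$, and $(\widetilde{\bTheta}_4)_{\ell t}=0$ for $\ell\ge2$. Indeed, $q_{t+1}=(1-\eta_0\lambda)q_t+\eta_0\lambda$ and $p_1=\widetilde\theta_1\lambda$. Setting $\delta_t=p_t-q_t$ and subtracting the two recursions gives
\[
\delta_{t+1}=(1+\widetilde\theta_2\lambda)\delta_t+(\widetilde\theta_2+\eta_0)\lambda q_t+\bigl(\widetilde\theta_1(\widetilde{\bTheta}_4)_{1t}-\eta_0\bigr)\lambda+\sum_{\ell}(\widetilde{\bTheta}_4)_{\ell t}p_\ell+\cdots,
\]
where the omitted terms are of the same type (coefficient perturbations times polynomials, including $(\widetilde{\bTheta}_4)_{1t}\delta_1$-type corrections). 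I would bound the coefficient vector of $\delta_t$ in the monomial basis. Each step multiplies the existing coefficients by at most $(1+|\widetilde\theta_2|)\le2$, since $\widetilde\theta_2\approx-\eta_0$, and adds perturbations of size $d^{-5/8+o(1)}$ times the coefficients of $q_\ell,p_\ell$, which are at most $C^T$. A discrete Gr\"onwall argument over $T$ steps then bounds the coefficients of $\delta_t$ by $C^{T^2}d^{-5/8+o(1)}=d^{-5/8+o(1)}$, using $T=\Theta((\log d)^{1/4})$. Converting coefficients to the trace norm uses $\frac1d\E\tr(\Sb^{2k})\le C^k$, a finite-Wishart moment bound, at the cost of another $C^{T^2}$ factor. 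This yields \eqref{eq:direct-stage2-rollout}, with room to spare.

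\textbf{Loss bound.} By Lemma~\ref{lem:stage2-lemma1}, $\mathcal L(0)=\frac1{2T}\E\sum_t\|p_t(\Sb)-\Ib\|_F^2$. Writing $p_t-1=(q_t-1)+\delta_t$ with $q_t-1=-(1-\eta_0\lambda)^t$, Minkowski's inequality in the norm $(\frac1{2Td}\sum_t\E\tr[\cdot]^2)^{1/2}$ gives \eqref{eq:direct-stage2-loss}. The error term is the square root of \eqref{eq:direct-stage2-rollout}, namely $d^{-1/8+o(1)}$. The main obstacle is the rollout step: controlling how perturbations compound through the autoregressive recursion, since $p_\ell$ appears on the right-hand side. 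I expect to handle it by induction on $t$, maintaining the bound $\max_{\ell\le t}\|\mathrm{coef}(\delta_\ell)\|\le C^{t^2}d^{-5/8+o(1)}$ and checking that the $T$-dependent constants stay $d^{o(1)}$.
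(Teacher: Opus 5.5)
Your proposal is correct and follows the paper's proof essentially step for step. It uses the same identity $\widetilde\theta_1(\widetilde{\bTheta}_4)_{1t}-(\widetilde{\bTheta}_3)_{1t}=(\widetilde\theta_1-\eta_0)v_{33}(\cdots)$ together with Lemmas~\ref{lem:guide-coefficient-control} and~\ref{lem:guide-handoff-raw-bound}, the same difference recursion against $1-(1-\eta_0\lambda)^t$ propagated in the monomial coefficient norm, the finite-Wishart moment bound to pass to the trace norm, and the triangle inequality for the loss. (The difference recursion is exact with the sum restricted to $\ell\ge2$, because $p_1(\widetilde{\bTheta}_4)_{1t}=\widetilde\theta_1(\widetilde{\bTheta}_4)_{1t}\lambda$, so no extra ``$\delta_1$-type'' corrections arise.)
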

\begin{proof}
Lemma~\ref{lem:guide-coefficient-control}
and \(\mathcal L_{\rm GD}(\tau_1)\le d^{-1/4}\) give
\begin{align}
 &(\widetilde\theta_1-\eta_0)^2+(\widetilde\theta_2+\eta_0)^2\notag\\
 &\quad+\sum_{t=1}^{T-1}
 |(\widetilde{\bTheta}_3)_{1t}-\eta_0|^2
 +\sum_{t=1}^{T-1}\sum_{\ell=2}^{t}(\widetilde{\bTheta}_3)_{\ell t}^2
 \le d^{-5/4+o(1)}.                                  \label{eq:teacher-endpoint-coefficients}
\end{align}
In particular, \(\widetilde\theta_1>0>\widetilde\theta_2\) for sufficiently large \(d\).
By the definitions of \(\widetilde{\bTheta}_3\) and \(\widetilde{\bTheta}_4\),
\begingroup\small
\[
 \widetilde\theta_1(\widetilde{\bTheta}_4)_{1t}-(\widetilde{\bTheta}_3)_{1t}
 =(\widetilde\theta_1-\eta_0)v_{33}
 \bigl(w_{44}+\wb_{45}\eb_t+\eb_1^{\mathsf T}\wb_{54}+\eb_1^{\mathsf T}\Wb_{55}\eb_t\bigr).
\]
\endgroup
Lemma~\ref{lem:guide-handoff-raw-bound} gives
\[
 v_{33}^2\sum_{t=1}^{T-1}
 |w_{44}+\wb_{45}\eb_t+\eb_1^{\mathsf T}\wb_{54}
 +\eb_1^{\mathsf T}\Wb_{55}\eb_t|^2\le d^{o(1)}.
\]
Therefore,
\begin{align*}
&\sum_{t=1}^{T-1}
\left|\widetilde\theta_1(\widetilde{\bTheta}_4)_{1t}-\eta_0\right|^2\\
&\quad\le
2\sum_{t=1}^{T-1}
 |(\widetilde{\bTheta}_3)_{1t}-\eta_0|^2
 +2|\widetilde\theta_1-\eta_0|^2d^{o(1)}
 \le d^{-5/4+o(1)}
 \le d^{-1/4+o(1)}.
\end{align*}
Since \((\widetilde{\bTheta}_4)_{\ell t}=(\widetilde{\bTheta}_3)_{\ell t}\) for \(\ell\ge2\),
\eqref{eq:teacher-endpoint-coefficients} now gives \eqref{eq:direct-stage2-coefficients}.

We compare \(p_t\) with \(1-(1-\eta_0\lambda)^t\).
Subtract at each step the exact GD identity
\[
 \Ib-(\Ib-\eta_0\Sb)^{t+1}
 =(\Ib-\eta_0\Sb)\{\Ib-(\Ib-\eta_0\Sb)^t\}+\eta_0\Sb
\]
from the recurrence in Lemma~\ref{lem:stage2-lemma1}. The difference
satisfies the exact identity
\begin{align}
 &p_{t+1}(\lambda)-\{1-(1-\eta_0\lambda)^{t+1}\}\notag\\
 &=(1+\widetilde\theta_2\lambda)
   \bigl[p_t(\lambda)-\{1-(1-\eta_0\lambda)^t\}\bigr]\notag\\
 &\quad+(\eta_0+\widetilde\theta_2)\lambda
       \{1-(1-\eta_0\lambda)^t\}\notag\\
 &\quad+\bigl[\widetilde\theta_1(\widetilde{\bTheta}_4)_{1t}-\eta_0\bigr]\lambda\notag\\
 &\quad+\sum_{\ell=2}^t(\widetilde{\bTheta}_4)_{\ell t}
    \bigl[p_\ell(\lambda)-\{1-(1-\eta_0\lambda)^\ell\}\bigr]\notag\\
 &\quad+\sum_{\ell=2}^t(\widetilde{\bTheta}_4)_{\ell t}
       \{1-(1-\eta_0\lambda)^\ell\}.                 \label{eq:direct-stage2-difference}
\end{align}
For the base step,
\[
 p_1(\lambda)-\{1-(1-\eta_0\lambda)\}
 =(\widetilde\theta_1-\eta_0)\lambda,\qquad
 \sum_{j=0}^1\left|[\lambda^j]
   \bigl[p_1-\{1-(1-\eta_0\lambda)\}\bigr]\right|^2
 =|\widetilde\theta_1-\eta_0|^2.
\]
For the induction step, by
\eqref{eq:direct-stage2-coefficients}, for all sufficiently large \(d\),
\[
 |\widetilde\theta_2|\le2,\qquad
 \sum_{t=1}^{T-1}\sum_{\ell=2}^t(\widetilde{\bTheta}_4)_{\ell t}^2\le d^{-1/4+o(1)}.
\]
The sum of the absolute coefficients of
\(1-(1-\eta_0\lambda)^\ell\) is at most \(2^\ell+1\).
Multiplication by \(1+\widetilde\theta_2\lambda\) increases a coefficient
norm by at most \(1+|\widetilde\theta_2|\le3\).
Thus \eqref{eq:direct-stage2-difference} and Cauchy--Schwarz give
\begingroup\small
\begin{align*}
 &\left(\sum_{j=0}^{t+1}\left|[\lambda^j]
       \bigl[p_{t+1}-\{1-(1-\eta_0\lambda)^{t+1}\}\bigr]\right|^2\right)^{1/2}\\
 &\quad\le
 \left(3+\sqrt T\left[\sum_{\ell=2}^t
       (\widetilde{\bTheta}_4)_{\ell t}^2\right]^{1/2}\right)
 \max_{1\le s\le t}
 \left(\sum_{j=0}^s\left|[\lambda^j]
       \bigl[p_s-\{1-(1-\eta_0\lambda)^s\}\bigr]\right|^2\right)^{1/2}\\
 &\qquad+C2^T\sqrt T
 \left[(\widetilde\theta_2+\eta_0)^2
       +|\widetilde\theta_1(\widetilde{\bTheta}_4)_{1t}-\eta_0|^2
       +\sum_{\ell=2}^t(\widetilde{\bTheta}_4)_{\ell t}^2\right]^{1/2}.
\end{align*}
\endgroup
Here \(\sqrt T\,d^{-1/8+o(1)}<1\), so the factor multiplying the maximum is
at most four. Iterating this inequality from the base step gives
\begingroup\small
\begin{align*}
 \sum_{t=1}^T
 \sum_{j=0}^t\left|[\lambda^j]
 \bigl[p_t-\{1-(1-\eta_0\lambda)^t\}\bigr]\right|^2
 &\le\exp\{CT\log(e+T)\}\Bigg[
       (\widetilde\theta_1-\eta_0)^2+(\widetilde\theta_2+\eta_0)^2\\
 &\qquad+\sum_{t=1}^{T-1}
       |\widetilde\theta_1(\widetilde{\bTheta}_4)_{1t}-\eta_0|^2
       +\sum_{t=1}^{T-1}\sum_{\ell=2}^t
       (\widetilde{\bTheta}_4)_{\ell t}^2\Bigg]
 \le d^{-1/4+o(1)}.
\end{align*}
\endgroup
Finally, Lemma~\ref{lem:wishart} bounds the norm of the finite-Wishart
moment matrix through degree \(T\) by \(\exp\{CT\log(e+T)\}=d^{o(1)}\).
Therefore
\begingroup\small
\begin{align*}
 &\frac1{2T}\sum_{t=1}^T\E_{\lambda\sim\nu_{d,n}}
       \left[\bigl(p_t(\lambda)-\{1-(1-\eta_0\lambda)^t\}\bigr)^2\right]\\
 &\quad\le\frac{\exp\{CT\log(e+T)\}}{2T}
 \sum_{t=1}^T\sum_{j=0}^t
 \left|[\lambda^j]\bigl[p_t-\{1-(1-\eta_0\lambda)^t\}\bigr]\right|^2
 \le d^{-1/4+o(1)},
\end{align*}
\endgroup
which proves \eqref{eq:direct-stage2-rollout}.

The triangle inequality and \eqref{eq:direct-stage2-rollout} give
\begin{align*}
 \sqrt{\frac{\mathcal L(0)}d}
 &\le\left\{\frac1{2Td}\sum_{t=1}^T
 \E\tr\!\left[(\Ib-\eta_0\Sb)^{2t}\right]\right\}^{1/2}\\
 &\quad+\left\{\frac1{2Td}\sum_{t=1}^T\E\tr\!\left(
 \left[p_t(\Sb)-\{\Ib-(\Ib-\eta_0\Sb)^t\}\right]^2\right)\right\}^{1/2}\\
 &\le\left\{\frac1{2Td}\sum_{t=1}^T
 \E\tr\!\left[(\Ib-\eta_0\Sb)^{2t}\right]\right\}^{1/2}
 +d^{-1/8+o(1)},
\end{align*}
which proves \eqref{eq:direct-stage2-loss}.
\end{proof}

\section{Proof of Stage 2 convergence}
\label{app:auxiliary}

\subsection{Stage 2 part of Lemma~\ref{lem:invariant-parameters}}
\label{app:stage2-invariant}
\label{app:stage2-invariant-reduced}
\begin{proof}[Proof of the Stage 2 part of Lemma~\ref{lem:invariant-parameters}]
At the Stage-1/Stage-2 handoff, \eqref{eq:VW} holds by the
preceding proof. For the autoregressive inputs in Stage 2, the block
multiplication gives
\begin{equation}
\begin{aligned}
 \wb_{t+1}&=\wb_t+\frac1n\sum_{i=1}^n(\Vb_{31}\xb_i+\Vb_{32}y_i)
 \Big[\xb_i^T(\Wb_{13}\wb_t+\Wb_{14}+\Wb_{15}\eb_t)
 \\
 &\qquad+y_i(\Wb_{23}\wb_t+w_{24}+\wb_{25}\eb_t)\Big]\\
 &\quad+\sum_{\ell=0}^t(\Vb_{33}\wb_\ell+\Vb_{34}+\Vb_{35}\eb_\ell)
 \Big[\wb_\ell^T(\Wb_{34}+\Wb_{35}\eb_t)+\Wb_{43}\wb_t+w_{44}+\wb_{45}\eb_t\\
  &\hspace{4em}+\eb_\ell^T(\Wb_{53}\wb_t+\wb_{54}+\Wb_{55}\eb_t)\Big].
\end{aligned}
\label{eq:full-open-retained}
\end{equation}
Substituting \eqref{eq:VW} and using \(\Xb\yb^T=\Sb\wstar\) yields
\begin{equation}
\begin{aligned}
 \wb_{t+1}&=\wb_t+v_{31}w_{13}\Sb\wb_t
 +v_{31}(w_{24}+\wb_{25}\eb_t)\Xb\yb^T\\
 &\quad+v_{33}\sum_{\ell=0}^t
 (w_{44}+\wb_{45}\eb_t+\eb_\ell^T\wb_{54}+\eb_\ell^T\Wb_{55}\eb_t)\wb_\ell.
\end{aligned}
\label{eq:stage2-retained-update}
\end{equation}
\begingroup
\allowdisplaybreaks[1]
\noindent\textup{(i)} Differentiating \eqref{eq:loss} using
\eqref{eq:full-open-retained} at \eqref{eq:VW} gives
\[
 \nabla_{\Vb_{ij}}\cL=0\quad(i\ne3),\qquad
 \nabla_{\Wb_{ij}}\cL=0\quad(j\in\{1,2\}),
\]
since these blocks do not occur in the output recurrence.
To include the dependence through the generated histories, differentiate
\eqref{eq:stage2-retained-update} with respect to \(\wb_{t+1}\), holding
\(\wb_0,\ldots,\wb_t\) fixed and propagating the derivative through
every later iterate. For \(t+1\le s<T\), this gives explicitly
\[
\begin{aligned}
 \frac{\partial\wb_{t+1}}{\partial\wb_{t+1}}&=\Ib_d,\\
 \frac{\partial\wb_{s+1}}{\partial\wb_{t+1}}
 &=(\Ib_d+v_{31}w_{13}\Sb)
       \frac{\partial\wb_s}{\partial\wb_{t+1}}\\
 &\quad+v_{33}\sum_{\ell=t+1}^s
 \bigl(w_{44}+\wb_{45}\eb_s+\eb_\ell^T\wb_{54}
                   +\eb_\ell^T\Wb_{55}\eb_s\bigr)
       \frac{\partial\wb_\ell}{\partial\wb_{t+1}}.
\end{aligned}
\]
For \(\Vb_{32},\Vb_{34},\Vb_{35}\), the chain rule therefore gives
\begin{align*}
 \nabla_{\Vb_{32}}\cL
 &=\frac1T\sum_{t=0}^{T-1}\sum_{s=t+1}^T\E\Bigl[
 \left(\frac{\partial\wb_s}{\partial\wb_{t+1}}\right)^T
 (\wb_s-\wstar)\Bigl\{
 (\wstar)^T\Sb\Wb_{13}\wb_t\\
 &\hspace{12em}
 +(w_{24}+\wb_{25}\eb_t)(\wstar)^T\Sb\wstar\Bigr\}\Bigr]
 =\zero_d,\\
 \nabla_{\Vb_{34}}\cL
 &=\frac1T\sum_{t=0}^{T-1}\sum_{s=t+1}^T\E\Bigl[
 \left(\frac{\partial\wb_s}{\partial\wb_{t+1}}\right)^T
 (\wb_s-\wstar)\\
 &\hspace{4em}\cdot\sum_{\ell=0}^t
 \bigl(w_{44}+\wb_{45}\eb_t+\eb_\ell^T\wb_{54}
                   +\eb_\ell^T\Wb_{55}\eb_t\bigr)\Bigr]
 =\zero_d,\\
 \nabla_{\Vb_{35}}\cL
 &=\frac1T\sum_{t=0}^{T-1}\sum_{s=t+1}^T\E\Bigl[
 \left(\frac{\partial\wb_s}{\partial\wb_{t+1}}\right)^T
 (\wb_s-\wstar)\\
 &\hspace{4em}\cdot\sum_{\ell=0}^t
 \bigl(w_{44}+\wb_{45}\eb_t+\eb_\ell^T\wb_{54}
                   +\eb_\ell^T\Wb_{55}\eb_t\bigr)\eb_\ell^T\Bigr]
 =\zero_{d\times T}.
\end{align*}
Similarly,
\begin{align*}
 \nabla_{\Wb_{14}}\cL
 &=\frac1T\sum_{t=0}^{T-1}\sum_{s=t+1}^T\E\Bigl[
 \Sb\Vb_{31}^T
 \left(\frac{\partial\wb_s}{\partial\wb_{t+1}}\right)^T
 (\wb_s-\wstar)\Bigr]=\zero_d,\\
 \nabla_{\Wb_{15}}\cL
 &=\frac1T\sum_{t=0}^{T-1}\sum_{s=t+1}^T\E\Bigl[
 \Sb\Vb_{31}^T
 \left(\frac{\partial\wb_s}{\partial\wb_{t+1}}\right)^T
 (\wb_s-\wstar)\eb_t^T\Bigr]=\zero_{d\times T},\\
 \nabla_{\Wb_{23}}\cL
 &=\frac1T\sum_{t=0}^{T-1}\sum_{s=t+1}^T\E\Bigl[
 (\wstar)^T\Sb\Vb_{31}^T
 \left(\frac{\partial\wb_s}{\partial\wb_{t+1}}\right)^T
 (\wb_s-\wstar)\wb_t^T\Bigr]=\zero_{1\times d},\\
 \nabla_{\Wb_{34}}\cL
 &=\frac1T\sum_{t=0}^{T-1}\sum_{s=t+1}^T\E\Bigl[
 \sum_{\ell=0}^t\wb_\ell\wb_\ell^T\Vb_{33}^T
 \left(\frac{\partial\wb_s}{\partial\wb_{t+1}}\right)^T
 (\wb_s-\wstar)\Bigr]=\zero_d,\\
 \nabla_{\Wb_{35}}\cL
 &=\frac1T\sum_{t=0}^{T-1}\sum_{s=t+1}^T\E\Bigl[
 \sum_{\ell=0}^t\wb_\ell\wb_\ell^T\Vb_{33}^T
 \left(\frac{\partial\wb_s}{\partial\wb_{t+1}}\right)^T
 (\wb_s-\wstar)\eb_t^T\Bigr]=\zero_{d\times T},\\
 \nabla_{\Wb_{43}}\cL
 &=\frac1T\sum_{t=0}^{T-1}\sum_{s=t+1}^T\E\Bigl[
 \sum_{\ell=0}^t\wb_\ell^T\Vb_{33}^T
 \left(\frac{\partial\wb_s}{\partial\wb_{t+1}}\right)^T
 (\wb_s-\wstar)\wb_t^T\Bigr]=\zero_{1\times d},\\
 \nabla_{\Wb_{53}}\cL
 &=\frac1T\sum_{t=0}^{T-1}\sum_{s=t+1}^T\E\Bigl[
 \sum_{\ell=0}^t\eb_\ell\wb_\ell^T\Vb_{33}^T
 \left(\frac{\partial\wb_s}{\partial\wb_{t+1}}\right)^T
 (\wb_s-\wstar)\wb_t^T\Bigr]=\zero_{T\times d}.
\end{align*}
To verify the zero equalities, first note that \(\wb_0=\zero_d\).
Since \(\Xb\yb^T=\Sb\wstar\), induction in
\eqref{eq:stage2-retained-update} shows that every \(\wb_t\) is odd in
\(\wstar\). The displayed derivative recurrence involves only \(\Sb\)
and the parameters, so every
\(\partial\wb_s/\partial\wb_{t+1}\) is independent of \(\wstar\).
In the three \(\Vb\)-gradients, the factor \(\wb_s-\wstar\) is odd,
whereas each remaining factor is even. The same holds for
\(\Wb_{14},\Wb_{15}\). Each of the other five \(\Wb\)-gradients
contains three factors that are odd in \(\wstar\).
Thus every displayed integrand is odd and has zero expectation under
\(\wstar\sim N(\zero,\Ib_d)\). Consequently,
\[
 \frac{\dd B}{\dd\tau}=-\nabla_B\cL=0,\qquad B(\tau)=B(\tau_1)=0
\]
for every zero block \(B\) in \eqref{eq:VW}.
\endgroup

\begin{samepage}
\noindent\textup{(ii)} For every \(\Ub\in\R^{d\times d}\) with
\(\Ub^\top\Ub=\Ub\Ub^\top=\Ib_d\),
\[
 (\xb_i,\wstar,\wb_t)
 \mapsto(\Ub\xb_i,\Ub\wstar,\Ub\wb_t),
\]
and
\begin{align*}
 &(\Vb_{31},\Wb_{13},\Vb_{33})
 \mapsto(\Ub\Vb_{31}\Ub^\top,\Ub\Wb_{13}\Ub^\top,\Ub\Vb_{33}\Ub^\top),\\
 &\cL(\Vb_{31},\Wb_{13},\Vb_{33})
 \mapsto\cL(\Ub\Vb_{31}\Ub^\top,\Ub\Wb_{13}\Ub^\top,\Ub\Vb_{33}\Ub^\top),\\
 &\left.\nabla_A\cL\right|_{(\Vb_{31},\Wb_{13},\Vb_{33})}
 \mapsto\Ub\left.\nabla_A\cL\right|_{(\Vb_{31},\Wb_{13},\Vb_{33})}\Ub^\top,
 \quad A\in\{\Vb_{31},\Wb_{13},\Vb_{33}\}.
\end{align*}
\end{samepage}
The loss is unchanged by this transformation.
Since \(\Ub A(\tau_1)\Ub^\top=A(\tau_1)\) for all three blocks, uniqueness of the
gradient-flow ODE gives
\[
 \Ub A(\tau)\Ub^\top=A(\tau)\qquad(\Ub^\top\Ub=\Ib_d,\ \tau\ge\tau_1).
\]
Fix \(\tau\). Taking \(\Ub\) to flip the sign of coordinate \(i\) gives,
for \(j\ne i\),
\[
 (A)_{ij}=(\Ub A\Ub^\top)_{ij}=-(A)_{ij},\qquad(A)_{ij}=0.
\]
Thus \(A\) is diagonal. Taking \(\Ub\) to exchange coordinates \(i\)
and \(j\) gives
\[
 (A)_{ii}=(\Ub A\Ub^\top)_{ii}=(A)_{jj}.
\]
Hence
\[
 \Vb_{31}(\tau)=v_{31}(\tau)\Ib_d,\qquad
 \Wb_{13}(\tau)=w_{13}(\tau)\Ib_d,\qquad
 \Vb_{33}(\tau)=v_{33}(\tau)\Ib_d.
\]
\end{proof}
Substituting the parameter form \eqref{eq:VW} into the expansion
\eqref{eq:full-open-update} gives the Stage-1 update
\begin{equation}
\begin{aligned}
 \wb_{t+1}&=\wb_t^{\mathrm{GD}}+v_{31}w_{13}\Sb\wb_t^{\mathrm{GD}}
 +v_{31}(w_{24}+\wb_{25}\eb_t)\Xb\yb^\top\\
 &\quad+v_{33}\sum_{\ell=0}^t
 (w_{44}+\wb_{45}\eb_t+\eb_\ell^\top\wb_{54}+\eb_\ell^\top\Wb_{55}\eb_t)
 \wb_\ell^{\mathrm{GD}}.
\end{aligned}
\label{eq:update}
\end{equation}

\subsection{Stage 2 part of Lemma~\ref{lem:polynomial-output}}
\begin{proof}[Proof of the Stage 2 part of Lemma~\ref{lem:polynomial-output}]
Under the parameter structure in Lemma~\ref{lem:stage1-invariant},
the update \eqref{eq:stage2-retained-update} and the identity
\(\Xb\yb^\top=\Sb\wstar\) give
\[
 \wb_0=\zero_d,\qquad
 \wb_1=v_{31}(w_{24}+\wb_{25}\eb_0)\Sb\wstar.
\]
The autoregressive iterates follow \eqref{eq:stage2-retained-update}.
Therefore, define
\begin{equation}
\begin{aligned}
 p_{0,\bar{\btheta}}(\lambda)&=0,\qquad
 p_{1,\bar{\btheta}}(\lambda)=v_{31}(w_{24}+\wb_{25}\eb_0)\lambda,\\
 p_{t+1,\bar{\btheta}}(\lambda)
 &=(1+v_{31}w_{13}\lambda)p_{t,\bar{\btheta}}(\lambda)
 +v_{31}(w_{24}+\wb_{25}\eb_t)\lambda\\
 &\quad+v_{33}\sum_{\ell=0}^t
 (w_{44}+\wb_{45}\eb_t+\eb_\ell^T\wb_{54}+\eb_\ell^T\Wb_{55}\eb_t)
 p_{\ell,\bar{\btheta}}(\lambda),\qquad1\le t<T.
\end{aligned}
\label{eq:stage2-raw-polynomial-recurrence}
\end{equation}
For \(p_{t,\bar{\btheta}}\) defined above,
\(\wb_t=p_{t,\bar{\btheta}}(\Sb)\wstar\). We prove these claims by
mathematical induction on \(t\). The basis step \(t=0\) follows from
\(p_{0,\bar{\btheta}}=0\). For the induction step, assume that
\(p_{t,\bar{\btheta}}(0)=0\) and \(\deg p_{t,\bar{\btheta}}\le t\) hold through
time \(t\). The recurrence then gives
\[
 p_{t+1,\bar{\btheta}}(0)=0,\qquad
 \deg p_{t+1,\bar{\btheta}}\le t+1.
\]
Indeed, the first term in the recurrence has degree at most \(t+1\), the
forcing term has degree one, and every term in the sum has degree at most
\(\ell\le t\); all terms have zero constant coefficient. Hence the same
claims hold at time \(t+1\), and the induction proves the Stage 2 part of
Lemma~\ref{lem:polynomial-output}.
\end{proof}

\subsection{Proof of Lemma 4.6}

On the invariant subspace in Lemma~\ref{lem:invariant-parameters}, define
\begin{equation}
 \widetilde\theta_1=v_{31}(w_{24}+\wb_{25}\eb_0),\qquad \widetilde\theta_2=v_{31}w_{13}.
                                                               \label{eq:global-auxiliary}
\end{equation}
For \(\widetilde\theta_1\ne0\), define \(\widetilde{\bTheta}_4\in\R^{T\times T}\) by
\begingroup\small
\begin{equation}
 \begin{aligned}
 (\widetilde{\bTheta}_4)_{1t}
 &:=\frac{v_{31}(w_{24}+\wb_{25}\eb_t)}{\widetilde\theta_1}
 +v_{33}\bigl(w_{44}+\wb_{45}\eb_t+\eb_1^{\mathsf T}\wb_{54}+\eb_1^{\mathsf T}\Wb_{55}\eb_t\bigr),
 &&1\le t<T,\\
 (\widetilde{\bTheta}_4)_{\ell t}
 &:=v_{33}\bigl(w_{44}+\wb_{45}\eb_t+\eb_\ell^{\mathsf T}\wb_{54}+\eb_\ell^{\mathsf T}\Wb_{55}\eb_t\bigr),
 &&2\le\ell\le t<T.
 \end{aligned}
 \label{eq:global-theta4}
\end{equation}
\endgroup
All other entries are zero. Thus, by the Stage~2 part of
Lemma~\ref{lem:polynomial-output} and \eqref{eq:stage2-raw-polynomial-recurrence},
writing \(p_t=p_{t,\bar{\btheta}}\),
\begin{equation}
 p_0=0,\quad p_1=\widetilde\theta_1\lambda,\qquad
 p_{t+1}=(1+\widetilde\theta_2\lambda)p_t
 +\sum_{\ell=1}^t p_\ell\,(\widetilde{\bTheta}_4)_{\ell t},\quad 1\le t<T.
 \label{eq:global-recurrence}
\end{equation}
Since \(\wb_t=p_t(\Sb)\wstar\) and
\(\E[\wstar(\wstar)^T\mid\Sb]=\Ib_d\), conditioning on \(\Sb\) gives the loss
formula in Lemma~\ref{lem:stage2-lemma1}.

For \(R(\lambda)=\sum_{j=0}^m r_j\lambda^j\) and \([m]=\{1,\ldots,m\}\), write
\[
\begin{aligned}
 [\lambda^j]R&=r_j,
 &\operatorname{coeff}_{[m]}(R)&=(r_1,\ldots,r_m)^{\mathsf T},\\
 \|R\|_{\mathrm{coeff}}^2&=|r_0|^2+\|\operatorname{coeff}_{[m]}(R)\|_2^2.
\end{aligned}
\]
Coefficient vectors are padded by zeros when needed.

\begin{lemma}\label{lem:global-recurrence}
For nonzero \(\widetilde\theta_1,\widetilde\theta_2\), \(p_t\) has leading term
\(\widetilde\theta_1\widetilde\theta_2^{t-1}\lambda^t\), \(1\le t\le T\).
\end{lemma}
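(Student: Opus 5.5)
We argue by induction on \(t\), using the recurrence \eqref{eq:global-recurrence}, and we prove the slightly stronger statement that \(\deg p_t = t\) exactly, with \([\lambda^t]p_t=\widetilde\theta_1\widetilde\theta_2^{\,t-1}\). The base case \(t=1\) is immediate: \(p_1=\widetilde\theta_1\lambda\) has degree one and leading coefficient \(\widetilde\theta_1=\widetilde\theta_1\widetilde\theta_2^{0}\), which is nonzero by assumption.

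For the induction step, fix \(1\le t<T\) and assume that for every \(1\le s\le t\) we have \(\deg p_s\le s\) and \([\lambda^s]p_s=\widetilde\theta_1\widetilde\theta_2^{\,s-1}\). The degree bound \(\deg p_s\le s\) is already available from the Stage 2 part of Lemma~\ref{lem:polynomial-output}. We extract the coefficient of \(\lambda^{t+1}\) from each term of \eqref{eq:global-recurrence}:
\begin{itemize}[leftmargin=*,nosep]
\item The term \((1+\widetilde\theta_2\lambda)p_t\) contributes \(\widetilde\theta_2[\lambda^t]p_t=\widetilde\theta_1\widetilde\theta_2^{\,t}\). The piece \(1\cdot p_t\) contributes nothing, since \(\deg p_t\le t\).
\item Each summand \(p_\ell(\widetilde{\bTheta}_4)_{\ell t}\) with \(1\le\ell\le t\) has degree at most \(\ell\le t\), so it contributes nothing to \(\lambda^{t+1}\), whatever the value of the scalar \((\widetilde{\bTheta}_4)_{\ell t}\).
\end{itemize}
Hence \(\deg p_{t+1}\le t+1\) and \([\lambda^{t+1}]p_{t+1}=\widetilde\theta_1\widetilde\theta_2^{\,t}\). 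This coefficient is nonzero because \(\widetilde\theta_1\) and \(\widetilde\theta_2\) are nonzero, so the leading term is exactly \(\widetilde\theta_1\widetilde\theta_2^{\,t}\lambda^{t+1}\), which closes the induction for \(t+1\le T\).

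The only point that needs care is that \(\widetilde{\bTheta}_4\) is well defined, which requires \(\widetilde\theta_1\neq0\) (assumed). Since its entries enter the recurrence only as scalars multiplying lower-degree polynomials, they cannot affect the top coefficient. The hypotheses \(\widetilde\theta_1\neq 0\) and \(\widetilde\theta_2\neq 0\) are what make the stated term the true leading term rather than a zero coefficient.
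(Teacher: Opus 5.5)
Your proof is correct and follows essentially the same route as the paper. Both argue by induction from $p_1=\widetilde\theta_1\lambda$, using that every term of \eqref{eq:global-recurrence} except $\widetilde\theta_2\lambda p_t$ has degree at most $t$, so that $[\lambda^{t+1}]p_{t+1}=\widetilde\theta_2[\lambda^t]p_t=\widetilde\theta_1\widetilde\theta_2^{\,t}$. You additionally make explicit that this coefficient is nonzero, so it is genuinely the leading term.
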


\begin{proof}
The base case is \(p_1=\widetilde\theta_1\lambda\). Suppose the claim holds through
step \(t\). In \eqref{eq:global-recurrence}, every term other than
\(\widetilde\theta_2\lambda p_t\) has degree at most \(t\), so
\begin{equation}
 [\lambda^{t+1}]p_{t+1}=\widetilde\theta_2[\lambda^t]p_t
 =\widetilde\theta_1\widetilde\theta_2^t.
 \label{eq:global-leading}
\end{equation}
This proves the claim by induction.
\end{proof}

On the invariant subspace, the normalized Frobenius metric is
\begin{equation}
\begin{aligned}
 \frac1d\bigl(\|\mathrm d\Vb\|_F^2+\|\mathrm d\Wb\|_F^2\bigr)
 &=(\mathrm d v_{31})^2+(\mathrm d w_{13})^2+(\mathrm d v_{33})^2\\
 &\quad+\frac1d\bigl\{(\mathrm d w_{24})^2+\|\mathrm d\wb_{25}\|_2^2
       +(\mathrm d w_{44})^2+\|\mathrm d\wb_{45}\|_2^2\\
 &\hspace{24mm}+\|\mathrm d\wb_{54}\|_2^2+\|\mathrm d\Wb_{55}\|_F^2\bigr\}.
\end{aligned}\label{eq:normalized-raw-metric}
\end{equation}
Consequently, gradient flow for \(\mathcal L/d\) is the coordinate system
\begin{equation}
\begin{aligned}
\dot v_{31}&=-\nabla_{v_{31}}(\mathcal L/d),&
\dot w_{13}&=-\nabla_{w_{13}}(\mathcal L/d),&
\dot v_{33}&=-\nabla_{v_{33}}(\mathcal L/d),\\
\dot w_{24}&=-d\,\nabla_{w_{24}}(\mathcal L/d),&
\dot \wb_{25}&=-d\,\nabla_{\wb_{25}}(\mathcal L/d),\\
\dot w_{44}&=-d\,\nabla_{w_{44}}(\mathcal L/d),&
\dot \wb_{45}&=-d\,\nabla_{\wb_{45}}(\mathcal L/d),\\
\dot \wb_{54}&=-d\,\nabla_{\wb_{54}}(\mathcal L/d),&
\dot \Wb_{55}&=-d\,\nabla_{\Wb_{55}}(\mathcal L/d).
\end{aligned}                                             \label{eq:normalized-flow}
\end{equation}

\begin{lemma}\label{lem:invariants}
Along the Stage-2 flow, for every training time \(\tau\),
\begin{align}
 v_{31}(\tau)^2-w_{13}(\tau)^2
 -\frac{w_{24}(\tau)^2+\|\wb_{25}(\tau)\|^2}d
 &=-\frac{\sigma^2}d,\label{eq:spatial-invariant}\\
 dv_{33}(\tau)^2-|w_{44}(\tau)|^2-\|\wb_{45}(\tau)\|^2
 -\|\wb_{54}(\tau)\|^2-\|\Wb_{55}(\tau)\|_F^2
 &=d(v_{33}^{\rm init})^2,\label{eq:history-invariant}\\
 \sum_{t=0}^{T-1}(\wb_{25}(\tau))_t-w_{24}(\tau)&=-\sigma.
 \label{eq:positional-kernel-invariant}
\end{align}
Here \(v_{33}^{\rm init}>0\) is the scalar at the original initialization
\eqref{eq:warm-init}. Consequently, \(v_{33}(\tau)\ge v_{33}^{\rm init}>0\).
\end{lemma}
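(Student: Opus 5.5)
The proof follows the Stage-1 argument in Lemma~\ref{lem:guide-invariants}: each identity is a conserved quantity generated by a continuous symmetry of the Stage-2 loss. We then add continuity across the handoff. The key structural input is Lemma~\ref{lem:stage2-lemma1} together with \eqref{eq:stage2-raw-polynomial-recurrence}. On the invariant manifold of Lemma~\ref{lem:invariant-parameters}, the rollout polynomials $p_t$, and hence $\mathcal L$, depend on the raw parameters only through three kinds of quantity: the products $v_{31}w_{13}$; $v_{31}(w_{24}+\wb_{25}\eb_t)$ for $0\le t<T$; and $v_{33}(w_{44}+\wb_{45}\eb_t+\eb_\ell^{\mathsf T}\wb_{54}+\eb_\ell^{\mathsf T}\Wb_{55}\eb_t)$. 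We work directly with the raw recurrence \eqref{eq:stage2-raw-polynomial-recurrence}. This avoids dividing by $\widetilde\theta_1$, which keeps the argument valid even if $\widetilde\theta_1$ vanished.

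\emph{Step 1 (the three symmetries).} First, the scaling
\[
(v_{31},w_{13},w_{24},\wb_{25})\mapsto(e^\zeta v_{31},e^{-\zeta}w_{13},e^{-\zeta}w_{24},e^{-\zeta}\wb_{25})
\]
preserves every product of the first two kinds. Second, the scaling of $v_{33}$ by $e^\zeta$ and of $(w_{44},\wb_{45},\wb_{54},\Wb_{55})$ by $e^{-\zeta}$ preserves the third kind. Third, the translation $(w_{24},\wb_{25})\mapsto(w_{24}+\zeta,\wb_{25}-\zeta\one_T^{\mathsf T})$ preserves each $w_{24}+\wb_{25}\eb_t$. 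So $\mathcal L$ is invariant under all three. Differentiating at $\zeta=0$ gives the Euler-type identities:
\[
v_{31}\partial_{v_{31}}\mathcal L-w_{13}\partial_{w_{13}}\mathcal L-w_{24}\partial_{w_{24}}\mathcal L-\langle\wb_{25},\nabla_{\wb_{25}}\mathcal L\rangle=0,
\]
the analogue for the $v_{33}$ group, and $\partial_{w_{24}}\mathcal L=\sum_t(\nabla_{\wb_{25}}\mathcal L)_t$.

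\emph{Step 2 (conservation).} We use the coordinate form of the flow on the invariant manifold. The scalar-identity blocks $\Vb_{31},\Wb_{13},\Vb_{33}$ carry Frobenius weight $d$, so $\dot v_{31}=-d^{-1}\partial_{v_{31}}\mathcal L$, and similarly for $w_{13}$ and $v_{33}$, while the other coordinates have unit weight. This is the same bookkeeping as \eqref{eq:guide-energy}, or equivalently \eqref{eq:normalized-flow} after rescaling. Differentiating each left-hand side of \eqref{eq:spatial-invariant}--\eqref{eq:positional-kernel-invariant} in $\tau$ and substituting the flow, each derivative becomes $-\tfrac{2}{d}$ times (or, for \eqref{eq:positional-kernel-invariant}, $\pm1$ times) the corresponding Euler identity, hence zero. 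The factors of $d$ must match exactly; this is why $d v_{33}^2$ and $(w_{24}^2+\|\wb_{25}\|^2)/d$ appear.

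\emph{Step 3 (matching constants).} The quantities are constant on Stage 2, so they equal their values at the handoff time $\tau_1$. By Lemma~\ref{lem:guide-invariants}, those values are the Stage-1 constants, which were fixed at the original initialization \eqref{eq:warm-init}: $-\sigma^2/d$, $d(v_{33}^{\rm init})^2$, and $-\sigma$. The parameters are kept unchanged at the handoff (Lemma~\ref{lem:direct-stage2-entry}), so the constants carry over. Finally, \eqref{eq:history-invariant} gives $v_{33}^2\ge(v_{33}^{\rm init})^2$. Since $v_{33}$ is continuous and positive at $\tau_1$, and cannot cross zero, we get $v_{33}(\tau)\ge v_{33}^{\rm init}$.

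The main obstacle is Step 1. In Stage 2 the loss is not an explicit quadratic, because of the autoregressive rollout. One must check carefully that $\mathcal L$ depends on the raw parameters \emph{only} through the listed invariant combinations, including through all later iterates. This follows because the recurrence \eqref{eq:stage2-raw-polynomial-recurrence} involves only these combinations, so every $p_t$, and hence $\mathcal L$, is a function of them alone.
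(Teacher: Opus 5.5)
Your proposal is correct and matches the paper's proof. The paper likewise notes that the two rescalings and the translation from Lemma~\ref{lem:guide-invariants} leave the raw Stage-2 recurrence invariant, so the same conservation computation applies, and the constants carry over because the raw parameters are unchanged at the handoff. One harmless slip: for \eqref{eq:history-invariant} the time derivative is $-2$ (not $-2/d$) times the Euler identity, since $v_{33}$ has weight $d$ in the flow and $d v_{33}^2$ appears in the invariant.
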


\begin{proof}
The exact identities in Lemma~\ref{lem:guide-invariants} hold here as well:
the same rescalings and translation preserve the Stage-2 recurrence
\eqref{eq:stage2-retained-update}, and the raw parameters are unchanged at the handoff.
\end{proof}

\subsection{Polynomial coefficients and loss decomposition}
\label{app:function}

\begin{lemma}
\label{lem:triangular}
For \(\widetilde\theta_1\widetilde\theta_2\ne0\) and \(1\le t\le T\),
\begin{equation}
 \operatorname{span}\{p_1,\ldots,p_t\}
 =\operatorname{span}\{\lambda,\ldots,\lambda^t\}.
 \label{eq:triangular-span}
\end{equation}
\end{lemma}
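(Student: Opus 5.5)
The plan is induction on $t$, combining the degree and leading-coefficient information from Lemma~\ref{lem:global-recurrence} with the zero-constant-term property of the recurrence \eqref{eq:global-recurrence}. Both spans sit inside the space of polynomials of degree at most $t$ with zero constant term, and $\operatorname{span}\{\lambda,\ldots,\lambda^t\}$ has dimension exactly $t$. It therefore suffices to show two things: each $p_s$ with $s\le t$ lies in $\operatorname{span}\{\lambda,\ldots,\lambda^t\}$, and $p_1,\ldots,p_t$ are linearly independent.

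The first step is containment. From \eqref{eq:global-recurrence}, $p_1=\widetilde\theta_1\lambda$ has zero constant term. Each later $p_{s+1}$ is a sum of two kinds of terms: $(1+\widetilde\theta_2\lambda)p_s$, and linear combinations of $p_1,\ldots,p_s$. Inductively, all of these vanish at $0$ and have degree at most $s+1$. This is the same induction as in the Stage~2 part of Lemma~\ref{lem:polynomial-output}. Hence $p_s\in\operatorname{span}\{\lambda,\ldots,\lambda^s\}\subseteq\operatorname{span}\{\lambda,\ldots,\lambda^t\}$ for every $s\le t$.

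The second step is independence, which comes from triangularity. By Lemma~\ref{lem:global-recurrence}, $p_s$ has exact degree $s$ with leading coefficient $\widetilde\theta_1\widetilde\theta_2^{s-1}$. This coefficient is nonzero because $\widetilde\theta_1\widetilde\theta_2\ne0$. Write the coefficient vectors $\operatorname{coeff}_{[t]}(p_1),\ldots,\operatorname{coeff}_{[t]}(p_t)$ as the columns of a $t\times t$ matrix. That matrix is upper triangular, with diagonal entries $\widetilde\theta_1\widetilde\theta_2^{s-1}$, so its determinant is $\widetilde\theta_1^t\widetilde\theta_2^{t(t-1)/2}\ne0$. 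The $p_s$ are therefore independent, and a dimension count gives \eqref{eq:triangular-span}.

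No step here is a genuine obstacle. The only point needing care is that the leading coefficient is exact and not cancelled. This holds because every summand of \eqref{eq:global-recurrence} other than $\widetilde\theta_2\lambda p_s$ has degree at most $s$, which is exactly what Lemma~\ref{lem:global-recurrence} records. The hypothesis $\widetilde\theta_1\widetilde\theta_2\ne0$ is used only to make the diagonal of the triangular matrix nonzero.
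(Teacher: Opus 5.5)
Your proposal is correct and matches the paper's proof. Both arguments use Lemma~\ref{lem:global-recurrence} together with $p_j(0)=0$ to see that the coefficient matrix of $p_1,\ldots,p_t$ in the basis $\lambda,\ldots,\lambda^t$ is upper triangular with nonzero diagonal entries $\widetilde\theta_1\widetilde\theta_2^{s-1}$, hence invertible; your containment-plus-dimension-count framing just spells out what the paper leaves implicit.
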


\begin{proof}
By Lemma~\ref{lem:global-recurrence} and \(p_j(0)=0\),
\[
 p_1=\widetilde\theta_1\lambda,\quad
 p_2=\widetilde\theta_1\widetilde\theta_2\lambda^2+\cdots,\quad
 \ldots,\quad p_t=\widetilde\theta_1\widetilde\theta_2^{t-1}\lambda^t+\cdots.
\]
Their coefficient matrix in \(\lambda,\ldots,\lambda^t\) is triangular with
nonzero diagonal, hence invertible. This proves \eqref{eq:triangular-span}.
\end{proof}

Building on Lemma~\ref{lem:triangular}, and given that analyzing
\(\mathcal L\) through \(\operatorname{span}\{p_1,\ldots,p_t\}\) is difficult
because \(p_t\) is generated autoregressively, we transform the recurrence
\eqref{eq:global-recurrence} to the basis \(\lambda,\ldots,\lambda^t\).
For \(1\le j\le t<T\),
define the triangular coefficient array \(\widetilde\theta_5\in\R^{T\times T}\) by
\begin{equation}
 (\widetilde\theta_5)_{jt}:=[\lambda^j]p_{t+1}(\lambda).
 \label{eq:theta5-coefficients}
\end{equation}
All other entries are zero. By Lemma~\ref{lem:global-recurrence},
\begin{equation}
 p_{t+1}(\lambda)
 =\widetilde\theta_1\widetilde\theta_2^{\,t}\lambda^{t+1}
 +\sum_{j=1}^{t}(\widetilde\theta_5)_{jt}\lambda^j .
 \label{eq:theta5}
\end{equation}
In this way, we obtain another expression for \(p_t\) in the monomial
basis. By Lemma~\ref{lem:triangular}, \(p_1,\ldots,p_t\) form a basis of
\(\operatorname{span}\{\lambda,\ldots,\lambda^t\}\). Since the leading
terms cancel in \(p_{t+1}-(1+\widetilde\theta_2\lambda)p_t\),
\eqref{eq:global-recurrence} and \eqref{eq:theta5} give equivalent
parameterizations, with a unique correspondence between the active entries
of \(\widetilde{\bTheta}_4\) and \(\widetilde\theta_5\) for fixed nonzero
\(\widetilde\theta_1,\widetilde\theta_2\).
For \(1\le t<T\), let
\begin{equation}
 \Hb_t=\left[\left\langle\lambda^i,\lambda^j\right\rangle_{d,n}\right]_{1\le i,j\le t}.
 \label{eq:theta5-gram}
\end{equation}
The moment bound \eqref{eq:finite-wishart-moment-lower} in
Lemma~\ref{lem:finite-wishart-moment-lower} gives
\[
 \lambda_{\min}(\Hb_t)
 \ge\frac{(d/n)^t}{2(t+1)16^t}>0,\qquad 1\le t<T.
\]
For the rollout polynomials, write
\[
 R_t(\lambda)=p_t(\lambda)-1,\qquad
 \wb_t-\wstar=R_t(\Sb)\wstar.
\]

\begin{lemma}\label{lem:stage2-loss-decomposition}
For nonzero \(\widetilde\theta_1,\widetilde\theta_2\), let
\begin{equation}
 \widetilde\theta_5^\star(\widetilde\theta_1,\widetilde\theta_2)
 :=\argmin_{\widetilde\theta_5}
       \mathcal L(\widetilde\theta_1,\widetilde\theta_2,\widetilde\theta_5).
 \label{eq:theta5-conditional-minimum}
\end{equation}
Define the two nonnegative loss terms by
\begin{equation}
 \mathcal L_A:=\mathcal L-\min_{\widetilde\theta_5}\mathcal L,
 \qquad
 \mathcal L_B:=\min_{\widetilde\theta_5}\mathcal L-\mathcal L^\star.
 \label{eq:stage2-loss-AB-definition}
\end{equation}
Then \(\mathcal L-\mathcal L^\star=\mathcal L_A+\mathcal L_B\), with
\begin{equation}
\begin{aligned}
 \frac{\mathcal L_A}{d}
 &=\frac1{2T}\sum_{t=1}^{T-1}
  \bigl(\widetilde\theta_5-
       \widetilde\theta_5^\star(\widetilde\theta_1,\widetilde\theta_2)\bigr)_{1:t,t}^{\mathsf T}
  \Hb_t
  \bigl(\widetilde\theta_5-
       \widetilde\theta_5^\star(\widetilde\theta_1,\widetilde\theta_2)\bigr)_{1:t,t},\\
 \frac{\mathcal L_B}{d}
 &=\frac1{2T}\left[
  C_\nu(-\widetilde\theta_2)
       \bigl(\widetilde\theta_1-\frac{B_\nu(-\widetilde\theta_2)}{C_\nu(-\widetilde\theta_2)})\bigr)^2
  +\frac{B_\nu(-\widetilde\theta_2^\star)^2}{C_\nu(-\widetilde\theta_2^\star)}
  -\frac{B_\nu(-\widetilde\theta_2)^2}{C_\nu(-\widetilde\theta_2)}
 \right].
\end{aligned}
 \label{eq:stage2-completed-square}
\end{equation}
We write $  \nu=\nu_{d,n} $. Here \(B_\nu,C_\nu\) are given by \eqref{eq:stage2-finite-BC-expectations},
and \(-\widetilde\theta_2^\star\) is the maximizer in
Lemma~\ref{lem:finite-function-unique}. Notably, as $L$ is minimized, $\widetilde\theta_1(\widetilde\theta_2)=\frac{B_\nu(-\widetilde\theta_2)}{C_\nu(-\widetilde\theta_2)}$ for a certain $ \widetilde\theta_2$.
Thus, the global minimizing parameters satisfy
\(\widetilde\theta_1^\star
=\widetilde\theta_1^\star(\widetilde\theta_2^\star)\) and
\(\widetilde\theta_5^\star
=\widetilde\theta_5^\star(\widetilde\theta_1^\star,\widetilde\theta_2^\star)\).
\end{lemma}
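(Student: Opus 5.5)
The plan is to use the reparametrization \eqref{eq:theta5} and recognize that, for fixed nonzero \((\widetilde\theta_1,\widetilde\theta_2)\), the loss is a convex quadratic in \(\widetilde\theta_5\) whose columns decouple across \(t\).

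\textbf{Step 1: reduce to a weighted least-squares problem.} By Lemma~\ref{lem:stage2-lemma1} and conditioning on \(\Sb\),
\[
 \frac{\mathcal L}{d}=\frac1{2T}\sum_{t=1}^{T}\|R_t\|_{d,n}^2,\qquad R_t=p_t-1 .
\]
By \eqref{eq:theta5}, for \(t\ge1\),
\[
 R_{t+1}=\widetilde\theta_1\widetilde\theta_2^{\,t}\lambda^{t+1}-1+\sum_{j\le t}(\widetilde\theta_5)_{jt}\lambda^j .
\]
The \(t=0\) term is \(\|\widetilde\theta_1\lambda-1\|_{d,n}^2\), which does not involve \(\widetilde\theta_5\). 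Each column \((\widetilde\theta_5)_{1:t,t}\) enters only the single term \(\|R_{t+1}\|^2_{d,n}\).

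\textbf{Step 2: obtain \(\mathcal L_A\) by completing the square.} Each \(\|R_{t+1}\|_{d,n}^2\) is a quadratic in the column with Hessian \(\Hb_t\), which is invertible by the stated bound on \(\lambda_{\min}(\Hb_t)\). Completing the square around the minimizer \(\widetilde\theta_5^\star(\widetilde\theta_1,\widetilde\theta_2)_{1:t,t}\), the orthogonal projection coefficients, gives exactly the formula for \(\mathcal L_A/d\). The residual minimum equals \(\|\widetilde\theta_1\widetilde\theta_2^{\,t}\lambda^{t+1}-1-\Pi_t[\,\cdot\,]\|^2\), where \(\Pi_t\) projects onto \(\operatorname{span}\{\lambda,\dots,\lambda^t\}\).

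Let \(q_t\) denote the component of \(\lambda^{t+1}\) orthogonal to that span, and let \(r_t\) denote the component of \(1\) orthogonal to it. The minimum is then \(\|\widetilde\theta_1\widetilde\theta_2^{\,t}q_t-r_t\|^2\). The \(t=0\) term has the same form, with \(q_0=\lambda\) and \(r_0=1\).

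\textbf{Step 3: the profile \(\min_{\widetilde\theta_5}\mathcal L\) in \((\widetilde\theta_1,\widetilde\theta_2)\).} Summing over \(t\),
\[
 \min_{\widetilde\theta_5}\frac{\mathcal L}{d}=\frac1{2T}\Big[\widetilde\theta_1^2\sum_t\widetilde\theta_2^{2t}\|q_t\|^2-2\widetilde\theta_1\sum_t\widetilde\theta_2^{t}\langle q_t,r_t\rangle+\sum_t\|r_t\|^2\Big].
\]
Defining \(C_\nu(-\widetilde\theta_2)=\sum_t\widetilde\theta_2^{2t}\|q_t\|^2\) and \(B_\nu(-\widetilde\theta_2)=\sum_t\widetilde\theta_2^{t}\langle q_t,r_t\rangle\), up to the sign convention of \eqref{eq:stage2-finite-BC-expectations}, this profile is a quadratic in \(\widetilde\theta_1\). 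Completing the square gives
\[
 C_\nu\big(\widetilde\theta_1-B_\nu/C_\nu\big)^2-B_\nu^2/C_\nu+\text{const}.
\]
Minimizing over \(\widetilde\theta_1\) leaves \(-B_\nu^2/C_\nu\) plus a constant. By Lemma~\ref{lem:finite-function-unique}, \(B_\nu^2/C_\nu\) has the unique maximizer \(-\widetilde\theta_2^\star\). Hence
\[
 \mathcal L^\star/d=\frac{1}{2T}\big[\text{const}-B_\nu(-\widetilde\theta_2^\star)^2/C_\nu(-\widetilde\theta_2^\star)\big].
\]
Subtracting this from the profile yields \(\mathcal L_B/d\), and \(\mathcal L-\mathcal L^\star=\mathcal L_A+\mathcal L_B\) follows trivially. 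The identification of global minimizers follows by making all three squares vanish.

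\textbf{Main obstacle.} The delicate point is the handling of infima. The conditions \(\widetilde\theta_1,\widetilde\theta_2\neq0\) are needed for the \(\widetilde\Theta_4\leftrightarrow\widetilde\theta_5\) bijection, yet \(\mathcal L^\star\) is the global minimum over all parameters. I would argue that the boundary cases \(\widetilde\theta_2=0\) or \(\widetilde\theta_1=0\) are covered by the same formulas by continuity: they simply restrict the degree, and the profile formula remains valid there. This lets \(\mathcal L^\star\) be identified with the minimum of the explicit profile.

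The second obstacle is confirming that \(C_\nu>0\). This holds because \(q_0=\lambda\neq0\). Finally, I must check that \(B_\nu,C_\nu\) exactly match the paper's definitions in \eqref{eq:stage2-finite-BC-expectations}, including the argument \(-\widetilde\theta_2\). I would verify this by expanding the Gram–Schmidt residuals.
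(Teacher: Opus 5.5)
Your proposal is correct and is essentially the paper's argument: your $-r_t$ and $q_t$ are exactly the paper's $R^\nu_{t+1,0}$ and $R^\nu_{t+1,1}-R^\nu_{t+1,0}$, so your three steps coincide with the paper's use of the affine formula $R^\nu_{t,\ell}=R^\nu_{t,0}+\ell(R^\nu_{t,1}-R^\nu_{t,0})$ (columnwise completion of the square with $\Hb_t$, scalar completion in $\widetilde\theta_1$, then Lemma~\ref{lem:finite-function-unique}). The one ingredient you leave implicit is attainment. If $\mathcal L^\star$ is the minimum over raw parameters, as the flow analysis needs, your continuity/degree argument gives only the lower bound, and you also need a raw representative of $(\widetilde\theta_1^\star,\widetilde\theta_2^\star,\widetilde\theta_5^\star)$ compatible with the invariants, which the paper supplies via Lemma~\ref{lem:auxiliary-surjective}.
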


\begin{proof}
For \(1\le t\le T\) and \(\ell\in\mathbb R\), let
\[
 R_{t,\ell}^\nu
 =\argmin_{\substack{R(0)=-1,\ \deg R\le t\\{}[\lambda^t]R=\ell}}
      \|R\|_{d,n}^2.
\]
For \(t\ge2\), write each feasible polynomial as
\[
 R(\lambda)=-1+\sum_{i=1}^{t-1}([\lambda^i]R)\lambda^i+\ell\lambda^t.
\]
For \(1\le j<t\),
\begin{align*}
 \nabla_{[\lambda^j]R}\|R\|_{d,n}^2
 &=2\langle R,\lambda^j\rangle_{d,n}\\
 &=2\sum_{i=1}^{t-1}
       \langle\lambda^i,\lambda^j\rangle_{d,n}[\lambda^i]R
       -2\langle\lambda^j,1-\ell\lambda^t\rangle_{d,n}.
\end{align*}
At the minimum these derivatives vanish, so
\begin{align*}
 \sum_{i=1}^{t-1}
       \langle\lambda^i,\lambda^j\rangle_{d,n}
                     [\lambda^i]R_{t,\ell}^\nu
 &=\langle\lambda^j,1-\ell\lambda^t\rangle_{d,n},
       \qquad 1\le j<t,\\
 \Hb_{t-1}\operatorname{coeff}_{[t-1]}(R_{t,\ell}^\nu)
 &=\bigl(\langle\lambda^j,1-\ell\lambda^t\rangle_{d,n}\bigr)_{j=1}^{t-1}.
\end{align*}
(The displayed Gram bound makes the quadratic strictly convex.)
Solving this system gives
\begin{align*}
 \operatorname{coeff}_{[t-1]}(R_{t,\ell}^\nu)
 &=\Hb_{t-1}^{-1}
     \bigl(\langle\lambda^j,1-\ell\lambda^t\rangle_{d,n}\bigr)_{j=1}^{t-1}\\
 &=\Hb_{t-1}^{-1}
     \bigl(\langle\lambda^j,1\rangle_{d,n}\bigr)_{j=1}^{t-1}
    -\ell\Hb_{t-1}^{-1}
     \bigl(\langle\lambda^j,\lambda^t\rangle_{d,n}\bigr)_{j=1}^{t-1}\\
 &=\operatorname{coeff}_{[t-1]}(R_{t,0}^\nu)
   +\ell\left[
       \operatorname{coeff}_{[t-1]}(R_{t,1}^\nu)
       -\operatorname{coeff}_{[t-1]}(R_{t,0}^\nu)\right].
\end{align*}
The constant and leading coefficients agree as well. Therefore
\begin{equation}
 R_{t,\ell}^\nu
 =R_{t,0}^\nu+\ell(R_{t,1}^\nu-R_{t,0}^\nu),\qquad
 \langle R_{t,\ell}^\nu,\lambda^j\rangle_{d,n}=0\quad(1\le j<t).
 \label{eq:stage2-projection-affine}
\end{equation}
(For \(t=1\), \(R_{1,\ell}^\nu=\ell\lambda-1\).)
For degree \(t+1\) and \(\ell=\widetilde\theta_1\widetilde\theta_2^t\),
this gives
\begin{equation}
 \Hb_t\bigl(\widetilde\theta_5^\star
                  (\widetilde\theta_1,\widetilde\theta_2)\bigr)_{1:t,t}
 =\bigl(\langle\lambda^j,
       1-\widetilde\theta_1\widetilde\theta_2^t\lambda^{t+1}
           \rangle_{d,n}\bigr)_{j=1}^t.
 \label{eq:stage2-conditional-normal}
\end{equation}
Thus
\begin{equation}
\begin{aligned}
 p_{t+1}-1
 &=R_{t+1,\widetilde\theta_1\widetilde\theta_2^t}^\nu
   +\sum_{j=1}^t
       \bigl(\widetilde\theta_5-
           \widetilde\theta_5^\star(\widetilde\theta_1,\widetilde\theta_2)\bigr)_{jt}
            \lambda^j,\\
 \left\langle R_{t+1,\widetilde\theta_1\widetilde\theta_2^t}^\nu,
              \lambda^j\right\rangle_{d,n}
 &=0,\qquad 1\le j\le t<T.
\end{aligned}
 \label{eq:stage2-conditional-orthogonality}
\end{equation}
Squaring and summing therefore yields
\begin{equation}
\begin{aligned}
 \frac{\mathcal L}{d}
 &=\frac1{2T}\sum_{t=1}^T\|p_t-1\|_{d,n}^2\\
 &=\frac{\mathcal L_A}{d}
   +\frac1{2T}\sum_{t=1}^T
           \|R_{t,\widetilde\theta_1\widetilde\theta_2^{t-1}}^\nu\|_{d,n}^2\\
 &=\frac{\mathcal L_A}{d}
   +\frac1{2T}\left[
      \sum_{t=1}^T\|R_{t,0}^\nu\|_{d,n}^2
      -2\widetilde\theta_1B_\nu(-\widetilde\theta_2)
      +\widetilde\theta_1^2C_\nu(-\widetilde\theta_2)\right],
\end{aligned}
 \label{eq:stage2-loss-after-projection}
\end{equation}
where the affine formula \eqref{eq:stage2-projection-affine} gives
\begin{equation}
\begin{aligned}
 B_\nu(-\widetilde\theta_2)
 &=-\sum_{t=1}^T\widetilde\theta_2^{t-1}
          \langle R_{t,0}^\nu,R_{t,1}^\nu-R_{t,0}^\nu\rangle_{d,n},\\
 C_\nu(-\widetilde\theta_2)
 &=\sum_{t=1}^T\widetilde\theta_2^{2t-2}
           \|R_{t,1}^\nu-R_{t,0}^\nu\|_{d,n}^2
 \ge\|\lambda\|_{d,n}^2>0.
\end{aligned}
 \label{eq:stage2-finite-BC-expectations}
\end{equation}
Completing the scalar square gives
\[
 \frac{\mathcal L-\mathcal L_A}{d}
 =\frac1{2T}\left[
     \sum_{t=1}^T\|R_{t,0}^\nu\|_{d,n}^2
     -\frac{B_\nu(-\widetilde\theta_2)^2}{C_\nu(-\widetilde\theta_2)}
     +C_\nu(-\widetilde\theta_2)
       \bigl(\widetilde\theta_1-\widetilde\theta_1^\star(\widetilde\theta_2)\bigr)^2
   \right].
\]
Lemma~\ref{lem:finite-function-unique} identifies the unique maximizer,
and Lemma~\ref{lem:auxiliary-surjective} realizes its parameters. Hence
\begin{equation}
 \frac{\mathcal L^\star}{d}
 =\frac1{2T}\left[
       \sum_{t=1}^T\|R_{t,0}^\nu\|_{d,n}^2
       -\frac{B_\nu(-\widetilde\theta_2^\star)^2}
              {C_\nu(-\widetilde\theta_2^\star)}\right].
 \label{eq:stage2-loss-minimum}
\end{equation}
Subtracting proves \eqref{eq:stage2-completed-square}.
Minimizing its scalar square gives the following identities used below.
In particular, with \(\widetilde\theta_2\) fixed,
\begin{equation}
\begin{aligned}
 \min_{\widetilde\theta_1,\widetilde\theta_5}\mathcal L
 &=\mathcal L^\star+\min_{\widetilde\theta_1}\mathcal L_B,\\
 \frac{2T}{d}\min_{\widetilde\theta_1}\mathcal L_B
 &=\frac{B_\nu(-\widetilde\theta_2^\star)^2}{C_\nu(-\widetilde\theta_2^\star)}
    -\frac{B_\nu(-\widetilde\theta_2)^2}{C_\nu(-\widetilde\theta_2)},\\
 \frac{\mathcal L_A+\mathcal L_B-\min_{\widetilde\theta_1}\mathcal L_B}{d}
 &=\frac{\mathcal L_A}{d}
   +\frac{C_\nu(-\widetilde\theta_2)}{2T}
      \bigl(\widetilde\theta_1-\widetilde\theta_1^\star(\widetilde\theta_2)\bigr)^2.
\end{aligned}
 \label{eq:stage2-conditional-AB}
\end{equation}
\end{proof}

\subsection{Finite-Wishart comparison}
For \(\nu=\nu_{d,n}\) and \(\mu=\mu_{d/n}\), the following estimates
compare polynomial expectations under the two distributions. Conditioning
on \(\Sb\) gives
\begin{equation}
 \frac{\mathcal L}{d}
 =\frac1{2T}\sum_{t=1}^T
   \frac1d\E\,\tr\!\left((p_t(\Sb)-\Ib_d)^2\right).
 \label{eq:finite-loss}
\end{equation}
\begin{lemma}\label{lem:finite-transfer}
Under \eqref{eq:common-growing-regime}, for all sufficiently large \(d\)
and all polynomials \(f,g\) of degree at most \(T\),
\begin{equation}
\begin{aligned}
 &|\langle f,g\rangle_{d,n}-\langle f,g\rangle_{\mathrm{MP}}|\le\exp\{CT\log(e+T)\}\left(\frac dn\right)^{-T}
       \left(\frac1d+\frac1n\right)
       \|f\|_{\mathrm{MP}}\|g\|_{\mathrm{MP}}.
\end{aligned}
\label{eq:finite-inner-product-transfer}
\end{equation}
In particular,
\begin{equation}
 \frac12\|f\|_{\mathrm{MP}}^2\le\|f\|_{d,n}^2
 \le\frac32\|f\|_{\mathrm{MP}}^2.             \label{eq:finite-norm-equivalence}
\end{equation}
\end{lemma}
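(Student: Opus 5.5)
The plan is to reduce everything to a comparison of moments. For polynomials \(f(\lambda)=\sum_{i=0}^T a_i\lambda^i\) and \(g(\lambda)=\sum_{j=0}^T b_j\lambda^j\), write
\[
 \langle f,g\rangle_{d,n}-\langle f,g\rangle_{\mathrm{MP}}
 =\sum_{i,j=0}^T a_ib_j\,\Delta_{i+j},
 \qquad
 \Delta_k:=\frac1d\E\tr(\Sb^k)-\E_{\lambda\sim\mu_{d/n}}\lambda^k .
\]
By Cauchy--Schwarz this gives \(|\langle f,g\rangle_{d,n}-\langle f,g\rangle_{\mathrm{MP}}|\le (T+1)\max_{k\le 2T}|\Delta_k|\,\|\mathbf a\|_2\|\mathbf b\|_2\). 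The proof then needs two estimates. First, a moment error bound \(|\Delta_k|\le \exp\{Ck\log(e+k)\}(1/d+1/n)\) for \(k\le 2T\). Second, a coefficient bound \(\|\mathbf a\|_2^2\le \exp\{CT\log(e+T)\}(d/n)^{-T}\|f\|_{\mathrm{MP}}^2\), and the same for \(\mathbf b\).

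\textbf{Coefficient bound.} I would lower-bound the smallest eigenvalue of the MP Hankel moment matrix \([\E_\mu\lambda^{i+j}]_{0\le i,j\le T}\). This is the MP analogue of Lemma~\ref{lem:finite-wishart-moment-lower}, but now including the constant term. Since \(d/n\le 1/10\), the density of \(\mu_{d/n}\) is bounded below by \(c\sqrt{d/n}\) on a subinterval \(I\) of length \(\asymp\sqrt{d/n}\) inside \([(1-\sqrt{d/n})^2,(1+\sqrt{d/n})^2]\). Hence \(\|f\|_{\mathrm{MP}}^2\ge c\sqrt{d/n}\int_I f^2\). Rescaling \(I\) to \([-1,1]\) and using the Legendre/Chebyshev coefficient bound for degree-\(T\) polynomials on an interval yields \(\|\mathbf a\|_2^2\le \exp\{CT\log(e+T)\}(d/n)^{-T}\|f\|_{\mathrm{MP}}^2\). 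The factor \((d/n)^{-T}\) comes from the interval length \(\sqrt{d/n}\) raised to the power \(2T\), and the center \(\approx 1\) costs only \(C^T\).

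\textbf{Moment bound (main obstacle).} This is the step I expect to be hardest, since \(k\) grows with \(d\) and the error has to be explicit in \(k\). I would use the Wick/genus expansion. Writing \(\Sb=\Xb\Xb^\top\) with i.i.d.\ \(N(0,1/n)\) entries, \(\frac1d\E\tr\Sb^k\) is a sum over the at most \((2k-1)!!\) pairings of the \(2k\) Gaussian factors. Each pairing contributes \(d^{a-1}n^{b-k}\), where \(a,b\) count the free row and column indices and \(a+b\le k+1\). Equality holds exactly for the noncrossing (planar) pairings, and their sum is precisely the MP moment \(\sum_r N(k,r)(d/n)^{r-1}\). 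For the real Gaussian case the extra nonorientable pairings are also non-planar. Every remaining term has \(a+b\le k\), so it is at most \((n/d)\cdot\max(1/d,1/n)\), up to the bounded ratio \(n/d=O(1)\) raised to bounded powers. There are at most \((2k)^k\) such terms, which gives \(|\Delta_k|\le (Ck)^{k}(1/d+1/n)\). The careful part is checking that the Euler-characteristic count \(a+b\le k+1-2g\), or \(k+1-h\) in the nonorientable case, holds for real Gaussians.

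\textbf{Combining.} Multiplying the two estimates gives \eqref{eq:finite-inner-product-transfer}, after absorbing \((T+1)(2T)^{2T}\) into \(\exp\{CT\log(e+T)\}\). For \eqref{eq:finite-norm-equivalence}, take \(g=f\). Under \eqref{eq:common-growing-regime} the prefactor is \(\exp\{CT\log(e+T)\}(n/d)^T(1/d+1/n)=d^{-1+o(1)}\le 1/2\) for large \(d\), because \(T=\Theta((\log d)^{1/4})\) and \(n/d=O(1)\).
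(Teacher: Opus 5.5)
Your proposal is correct and has the same skeleton as the paper's proof: expand $fg$ in monomials, bound the moment differences $\Delta_k$ for $k\le 2T$, use Cauchy--Schwarz to reduce to $\|f\|_{\mathrm{coeff}}\|g\|_{\mathrm{coeff}}$, and convert coefficient norms into $\mathrm{MP}$ norms.

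\textbf{Moment step.} The paper simply cites the real-Wishart genus expansion (Lemma~\ref{lem:wishart}: $0\le\Delta_k\le(2k-1)!!/n$). This is exactly the Wick/Euler-characteristic count you sketch. The point you flag as delicate resolves cleanly:
\begin{itemize}
\item the orientable gluings are exactly the pairings of an $\Xb$-position with an $\Xb^\top$-position in $\operatorname{tr}(\Xb\Xb^\top)^k$;
\item the planar ones among these give the Narayana sum;
\item every other pairing, including the nonorientable ones, has $a+b\le k$, so it contributes at most $d^{-1}(d/n)^a\le 1/n$.
\end{itemize}

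\textbf{Coefficient bound.} This is the genuinely different ingredient. The paper (Lemma~\ref{lem:mp-coefficient-norm}) expands $f$ in the monic $\mathrm{MP}$ orthogonal polynomials $\pi_j$ given by the Jacobi recurrence. It uses that their absolute coefficient sums are at most $4^j$ and that $\|\pi_j\|_{\mathrm{MP}}^2=(d/n)^j$. This yields the explicit bound $\|f\|_{\mathrm{coeff}}\le\sqrt{T+1}\,4^T(d/n)^{-T/2}\|f\|_{\mathrm{MP}}$, and the same family is reused later (Lemmas~\ref{lem:mp-family} and~\ref{lem:projection}). Your argument, a density lower bound on a central subinterval plus Legendre bounds after affine rescaling, is softer but more robust. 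It only needs a density bounded below on an interval of length $\asymp\sqrt{d/n}$, not the explicit orthogonal family.

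\textbf{One small correction.} Near the center of the support, the $\mathrm{MP}$ density is of order $(d/n)^{-1/2}$, not $(d/n)^{1/2}$. Your stated bound $c\sqrt{d/n}$ is true but loose. Used as written, it yields $(d/n)^{-T-1}$ rather than $(d/n)^{-T}$. This is harmless here, because $d/n$ is bounded below when $n=\Theta(d)$.
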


\begin{proof}
Expanding \(fg\) and applying Lemma~\ref{lem:wishart} to each moment
of order at most \(2T\), followed by Cauchy--Schwarz, gives
\[
\begin{aligned}
 &|\langle f,g\rangle_{d,n}-\langle f,g\rangle_{\mathrm{MP}}|\\
 &\quad\le\frac{\exp\{CT\log(e+T)\}}n
       \left(\sum_{i=0}^T|[\lambda^i]f|\right)
       \left(\sum_{j=0}^T|[\lambda^j]g|\right)\\
 &\quad\le\frac{(T+1)\exp\{CT\log(e+T)\}}n
       \|f\|_{\mathrm{coeff}}\|g\|_{\mathrm{coeff}}\\
 &\quad\le\frac{(T+1)^2\,16^T\exp\{CT\log(e+T)\}}n
       \left(\frac dn\right)^{-T}
       \|f\|_{\mathrm{MP}}\|g\|_{\mathrm{MP}}\\
 &\quad\le\exp\{CT\log(e+T)\}\left(\frac dn\right)^{-T}
       \left(\frac1d+\frac1n\right)
       \|f\|_{\mathrm{MP}}\|g\|_{\mathrm{MP}}.
\end{aligned}
\]
The third inequality uses Lemma~\ref{lem:mp-coefficient-norm}
for both \(f\) and \(g\); in the last inequality we enlarge the absolute
constant \(C\). This proves \eqref{eq:finite-inner-product-transfer}.
Under \eqref{eq:common-growing-regime},
\[
 \exp\{CT\log(e+T)\}\left(\frac dn\right)^{-T}
       \left(\frac1d+\frac1n\right)
 =d^{-1+o(1)}\le\frac12.
\]
Setting \(g=f\) in \eqref{eq:finite-inner-product-transfer} therefore gives
\[
 -\frac12\|f\|_{\mathrm{MP}}^2
 \le\|f\|_{d,n}^2-\|f\|_{\mathrm{MP}}^2
 \le\frac12\|f\|_{\mathrm{MP}}^2,
\]
which is \eqref{eq:finite-norm-equivalence}.
\end{proof}

For later use, \eqref{eq:common-growing-regime} also gives the following
numerical estimate. For every fixed absolute \(C\),
\[
 T\log(e+T)+6T\log(n/d)=o(\log d),\qquad
 \log\frac{(d/n)^{3/2}}{8192T}=O(\log\log d).
\]
Hence, for all sufficiently large \(d\),
\begin{equation}
 \exp\{CT\log(e+T)\}\left(\frac dn\right)^{-6T}
       \left(\frac1d+\frac1n\right)
 =d^{-1+o(1)}\le\frac{(d/n)^{3/2}}{8192T}.
 \label{eq:growing-transfer-budget}
\end{equation}

\begin{lemma}
\label{lem:projection-transfer}
Assume \eqref{eq:common-growing-regime} and write
\(\nu=\nu_{d,n}\), \(\mu=\mu_{d/n}\).
For \(1\le t\le T\), \(\ell\in\R\), and
\(\sigma\in\{\nu,\mu\}\), let \(R_{t,\ell}^\sigma\) be the unique minimizer
of
\[
 \min\left\{\|R\|_\sigma^2:
 R(0)=-1,\ \deg R\le t,\ [\lambda^t]R=\ell\right\}.
\]
For each fixed \(A\ge0\) and all sufficiently large \(d\), uniformly for \(|\ell|\le A\),
\begin{equation}
\begin{aligned}
 &\|R_{t,\ell}^\nu-R_{t,\ell}^\mu\|_{\mathrm{coeff}}
 +\left|\|R_{t,\ell}^\nu\|_{d,n}^2-\|R_{t,\ell}^\mu\|_{\mathrm{MP}}^2\right|\\
 &\quad\le C_A\exp\{CT\log(e+T)\}\left(\frac dn\right)^{-6T}
       \left(\frac1d+\frac1n\right).
\end{aligned}
\label{eq:fixed-leading-projection-transfer}
\end{equation}
The constants \(C\) and \(C_A\) are independent of \(d,n,T,d/n\), with
\(C_A\) depending only on \(A\).
\end{lemma}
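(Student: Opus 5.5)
The argument treats $R_{t,\ell}^\sigma$ as the solution of a finite linear system and compares the two systems with the inner-product transfer of Lemma~\ref{lem:finite-transfer}. For $t=1$ there is nothing to prove in the coefficient part, since $R_{1,\ell}^\sigma=\ell\lambda-1$ for both measures. The norm part is then $|\langle R,R\rangle_{d,n}-\langle R,R\rangle_{\mathrm{MP}}|$ for a fixed polynomial of degree one with coefficients bounded by $1+A$, and it is handled directly by \eqref{eq:finite-inner-product-transfer}.

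For $t\ge2$, as in the proof of Lemma~\ref{lem:stage2-loss-decomposition}, the free coefficients are determined by the normal equations
\[
\Hb^\sigma_{t-1}\,\operatorname{coeff}_{[t-1]}(R^\sigma_{t,\ell})=\bb^\sigma_\ell,\qquad (\bb^\sigma_\ell)_j=\langle\lambda^j,1-\ell\lambda^t\rangle_\sigma,\quad 1\le j<t,
\]
where $\Hb^\nu_{t-1}=\Hb_{t-1}$ and $\Hb^\mu_{t-1}$ is its MP analogue. Write $\xb^\sigma$ for the solution. We subtract the two systems:
\[
\xb^\nu-\xb^\mu=(\Hb^\nu_{t-1})^{-1}\bigl[(\bb^\nu_\ell-\bb^\mu_\ell)-(\Hb^\nu_{t-1}-\Hb^\mu_{t-1})\xb^\mu\bigr].
\]
This reduces the coefficient bound to three estimates.
\begin{itemize}
\item \textbf{Inverse.} The smallest eigenvalue satisfies $\lambda_{\min}(\Hb^\nu_{t-1})\ge (d/n)^{t}/(2(t+1)16^t)$ by Lemma~\ref{lem:finite-wishart-moment-lower}. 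Hence $\|(\Hb^\nu_{t-1})^{-1}\|_2\le \exp\{CT\}(d/n)^{-T}$.
\item \textbf{Entrywise differences.} Each entry of $\Hb^\nu-\Hb^\mu$ and of $\bb^\nu_\ell-\bb^\mu_\ell$ is a difference of monomial inner products of degree at most $2T$. By Lemma~\ref{lem:finite-transfer} with $f,g$ monomials, using $\|\lambda^j\|_{\mathrm{MP}}\le C^j$, each entry is at most $\exp\{CT\log(e+T)\}(d/n)^{-T}(1/d+1/n)(1+A)$. Summing over $T^2$ entries costs only a further $\exp\{C\log T\}$ factor.
\item \textbf{Size of $\xb^\mu$.} The MP minimizer has $\|R^\mu_{t,\ell}\|_{\mathrm{MP}}\le\|{-1}+\ell\lambda^t\|_{\mathrm{MP}}\le C_A e^{CT}$, since the feasible competitor $-1+\ell\lambda^t$ gives this bound. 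Lemma~\ref{lem:mp-coefficient-norm} then converts it to $\|R^\mu_{t,\ell}\|_{\mathrm{coeff}}\le C_A\exp\{CT\log(e+T)\}(d/n)^{-T}$.
\end{itemize}
Multiplying these three factors gives the coefficient estimate with loss exponent $(d/n)^{-3T}$, which is within the claimed $(d/n)^{-6T}$. The constant and leading coefficients coincide for both measures, so they add nothing.

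For the norm part we split
\[
\bigl|\|R^\nu\|_{d,n}^2-\|R^\mu\|_{\mathrm{MP}}^2\bigr|\le\bigl|\|R^\nu\|_{d,n}^2-\|R^\mu\|_{d,n}^2\bigr|+\bigl|\|R^\mu\|_{d,n}^2-\|R^\mu\|_{\mathrm{MP}}^2\bigr|.
\]
The second term is bounded by \eqref{eq:finite-inner-product-transfer} times $\|R^\mu\|^2_{\mathrm{MP}}\le C_Ae^{CT}$. For the first term we write it as $\langle R^\nu-R^\mu,R^\nu+R^\mu\rangle_{d,n}$ and bound the $d,n$-norms by coefficient norms using the Wishart moment bound of Lemma~\ref{lem:wishart}, which costs $\exp\{CT\log(e+T)\}$. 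The factors of $(d/n)^{-T}$ accumulate to at most $(d/n)^{-6T}$, which is the room the statement allows, as recorded in \eqref{eq:growing-transfer-budget}. Uniformity in $|\ell|\le A$ holds because every bound depends on $\ell$ only through $1+|\ell|$.

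The main obstacle is the bookkeeping of the exponentially small Gram eigenvalue: the $(d/n)^{-T}$ factors must be tracked so that they do not exceed $(d/n)^{-6T}$. The dependence on $A$ must also be kept only in $C_A$, separated from the factors depending on $T$ and $d/n$. The first step would be to carry out the inverse-times-perturbation bound carefully, counting one $(d/n)^{-T}$ each for the inverse, the transfer, and the coefficient norm of $R^\mu$.
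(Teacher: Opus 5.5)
Your argument is correct, but it takes a different route from the paper's proof.

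\textbf{Your route.} You treat $R^\sigma_{t,\ell}$ as the solution of the normal equations and run a linear-system perturbation bound. This needs two extra inputs:
\begin{itemize}
\item the explicit lower bound on $\lambda_{\min}(\Hb_{t-1})$ from Lemma~\ref{lem:finite-wishart-moment-lower};
\item the coefficient size of $R^\mu_{t,\ell}$ from Lemma~\ref{lem:mp-coefficient-norm}.
\end{itemize}
Each contributes a power of $(d/n)^{-T}$. You end with roughly $(d/n)^{-3T}$ for the coefficients and $(d/n)^{-4T}$ for the values. Both fit inside the stated $(d/n)^{-6T}$.

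\textbf{The paper's route.} The paper never inverts a Gram matrix. It uses two facts:
\begin{itemize}
\item the orthogonality $\langle R^\sigma_{t,\ell},\lambda^j\rangle_\sigma=0$ for $1\le j<t$;
\item the difference $R^\nu_{t,\ell}-R^\mu_{t,\ell}$ lies in $\operatorname{span}\{\lambda,\dots,\lambda^{t-1}\}$.
\end{itemize}
Together these give
\[
\|R^\nu_{t,\ell}-R^\mu_{t,\ell}\|_{d,n}^2=\langle R^\mu_{t,\ell},R^\nu_{t,\ell}-R^\mu_{t,\ell}\rangle_{\mathrm{MP}}-\langle R^\mu_{t,\ell},R^\nu_{t,\ell}-R^\mu_{t,\ell}\rangle_{d,n}.
\]
One application of Lemma~\ref{lem:finite-transfer}, together with norm equivalence, then bounds the MP norm of the difference. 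Lemma~\ref{lem:mp-coefficient-norm} converts this to coefficients, for a total loss of only $(d/n)^{-3T/2}$.

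For the values, the paper uses the minimality sandwich
\[
\|R^\nu_{t,\ell}\|_{d,n}^2-\|R^\nu_{t,\ell}\|_{\mathrm{MP}}^2\le\|R^\nu_{t,\ell}\|_{d,n}^2-\|R^\mu_{t,\ell}\|_{\mathrm{MP}}^2\le\|R^\mu_{t,\ell}\|_{d,n}^2-\|R^\mu_{t,\ell}\|_{\mathrm{MP}}^2 .
\]
This needs only the transfer inequality applied to each minimizer, costs $(d/n)^{-T}$, and involves no coefficient bookkeeping.

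\textbf{Comparison.} The paper's route exploits the variational structure, so it is shorter and loses fewer powers of $d/n$. Yours is more mechanical and would apply to any equality-constrained quadratic problem with a quantitatively well-conditioned Gram matrix.

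One small simplification in your route: the entries of $\Hb^\nu_{t-1}-\Hb^\mu_{t-1}$ and of the right-hand-side difference are plain moment differences. Lemma~\ref{lem:wishart} therefore bounds them directly by $(1+A)\exp\{CT\log(e+T)\}/n$. Going through Lemma~\ref{lem:finite-transfer} with monomials adds an unnecessary factor of $(d/n)^{-T}$.
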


\begin{proof}
The polynomial \(R(\lambda)=-1+\ell\lambda^t\) is admissible under both
distributions. Using it as a comparison polynomial, minimality and
\eqref{eq:finite-norm-equivalence} give
\[
\begin{aligned}
 \|R_{t,\ell}^\nu\|_{\mathrm{MP}}^2
 &\le2\|R_{t,\ell}^\nu\|_{d,n}^2
 \le2\|R_{t,\ell}^\mu\|_{d,n}^2
 \le3\|R_{t,\ell}^\mu\|_{\mathrm{MP}}^2,\\
 \|R_{t,\ell}^\mu\|_{\mathrm{MP}}
 &\le\|-1+\ell\lambda^t\|_{\mathrm{MP}}
 \le1+|\ell|\|\lambda^t\|_{\mathrm{MP}}
 \le1+A(1+\sqrt{d/n})^{2t}\le1+A3^T.
\end{aligned}
\]
Here the MP distribution is supported on
\([(1-\sqrt{d/n})^2,(1+\sqrt{d/n})^2]\), and
\((1+\sqrt{d/n})^2<3\) under \eqref{eq:common-growing-regime}.

For \(t=1\), both minimizers equal \(\ell\lambda-1\). For \(t\ge2\),
the orthogonality argument in the proof of
Lemma~\ref{lem:stage2-loss-decomposition} applies under either distribution:
\[
 \langle R_{t,\ell}^\sigma,\lambda^j\rangle_\sigma=0,
 \qquad \sigma\in\{\nu,\mu\},\quad 1\le j<t.
\]
Both minimizers have constant term \(-1\) and leading coefficient \(\ell\), so
\[
 R_{t,\ell}^\nu-R_{t,\ell}^\mu
 =\sum_{j=1}^{t-1}
   \bigl([\lambda^j]R_{t,\ell}^\nu-[\lambda^j]R_{t,\ell}^\mu\bigr)\lambda^j.
\]
Hence, by orthogonality,
\[
 \langle R_{t,\ell}^\nu,R_{t,\ell}^\nu-R_{t,\ell}^\mu\rangle_{d,n}=0,
 \qquad
 \langle R_{t,\ell}^\mu,R_{t,\ell}^\nu-R_{t,\ell}^\mu
       \rangle_{\mathrm{MP}}=0.
\]
Consequently,
\[
\begin{aligned}
 \|R_{t,\ell}^\nu-R_{t,\ell}^\mu\|_{d,n}^2
 &=\langle R_{t,\ell}^\nu,R_{t,\ell}^\nu-R_{t,\ell}^\mu\rangle_{d,n}
   -\langle R_{t,\ell}^\mu,R_{t,\ell}^\nu-R_{t,\ell}^\mu\rangle_{d,n}\\
 &=-\langle R_{t,\ell}^\mu,R_{t,\ell}^\nu-R_{t,\ell}^\mu\rangle_{d,n}\\
 &=\langle R_{t,\ell}^\mu,R_{t,\ell}^\nu-R_{t,\ell}^\mu\rangle_{\mathrm{MP}}
   -\langle R_{t,\ell}^\mu,R_{t,\ell}^\nu-R_{t,\ell}^\mu\rangle_{d,n}.
\end{aligned}
\]
Now apply Lemma~\ref{lem:finite-transfer} to the last line:
\[
\begin{aligned}
 \frac12\|R_{t,\ell}^\nu-R_{t,\ell}^\mu\|_{\mathrm{MP}}^2
 &\le\|R_{t,\ell}^\nu-R_{t,\ell}^\mu\|_{d,n}^2\\
 &\le\exp\{CT\log(e+T)\}\left(\frac dn\right)^{-T}
       \left(\frac1d+\frac1n\right)
       \|R_{t,\ell}^\mu\|_{\mathrm{MP}}
       \|R_{t,\ell}^\nu-R_{t,\ell}^\mu\|_{\mathrm{MP}}\\
 &\le(1+A3^T)\exp\{CT\log(e+T)\}\left(\frac dn\right)^{-T}
       \left(\frac1d+\frac1n\right)
       \|R_{t,\ell}^\nu-R_{t,\ell}^\mu\|_{\mathrm{MP}}.
\end{aligned}
\]
Thus
\[
 \|R_{t,\ell}^\nu-R_{t,\ell}^\mu\|_{\mathrm{MP}}
 \le2(1+A3^T)\exp\{CT\log(e+T)\}
       \left(\frac dn\right)^{-T}\left(\frac1d+\frac1n\right).
\]
By Lemma~\ref{lem:mp-coefficient-norm},
\[
\begin{aligned}
 \|R_{t,\ell}^\nu-R_{t,\ell}^\mu\|_{\mathrm{coeff}}
 &\le\sqrt{T+1}\,4^T\left(\frac dn\right)^{-T/2}
       \|R_{t,\ell}^\nu-R_{t,\ell}^\mu\|_{\mathrm{MP}}\\
 &\le2\sqrt{T+1}\,4^T(1+A3^T)\exp\{CT\log(e+T)\}
       \left(\frac dn\right)^{-3T/2}\left(\frac1d+\frac1n\right)\\
 &\le C_A\exp\{CT\log(e+T)\}\left(\frac dn\right)^{-6T}
       \left(\frac1d+\frac1n\right).
\end{aligned}
\]

Finally, minimality under each distribution gives
\[
 \|R_{t,\ell}^\nu\|_{d,n}^2\le\|R_{t,\ell}^\mu\|_{d,n}^2,
 \qquad
 \|R_{t,\ell}^\mu\|_{\mathrm{MP}}^2
 \le\|R_{t,\ell}^\nu\|_{\mathrm{MP}}^2.
\]
Subtracting the corresponding terms yields
\[
\begin{aligned}
 \|R_{t,\ell}^\nu\|_{d,n}^2-\|R_{t,\ell}^\nu\|_{\mathrm{MP}}^2
 &\le\|R_{t,\ell}^\nu\|_{d,n}^2-\|R_{t,\ell}^\mu\|_{\mathrm{MP}}^2\\
 &\le\|R_{t,\ell}^\mu\|_{d,n}^2-\|R_{t,\ell}^\mu\|_{\mathrm{MP}}^2.
\end{aligned}
\]
By Lemma~\ref{lem:finite-transfer} and the norm bounds above,
\[
\begin{aligned}
 &\left|\|R_{t,\ell}^\nu\|_{d,n}^2
              -\|R_{t,\ell}^\mu\|_{\mathrm{MP}}^2\right|\\
 &\quad\le
 \max_{\sigma\in\{\nu,\mu\}}
 \left|\|R_{t,\ell}^\sigma\|_{d,n}^2
              -\|R_{t,\ell}^\sigma\|_{\mathrm{MP}}^2\right|\\
 &\quad\le\exp\{CT\log(e+T)\}\left(\frac dn\right)^{-T}
       \left(\frac1d+\frac1n\right)
       \max_{\sigma\in\{\nu,\mu\}}\|R_{t,\ell}^\sigma\|_{\mathrm{MP}}^2\\
 &\quad\le3(1+A3^T)^2\exp\{CT\log(e+T)\}
       \left(\frac dn\right)^{-T}\left(\frac1d+\frac1n\right)\\
 &\quad\le C_A\exp\{CT\log(e+T)\}\left(\frac dn\right)^{-6T}
       \left(\frac1d+\frac1n\right).
\end{aligned}
\]
Combining the coefficient and value estimates, and enlarging \(C_A\),
proves \eqref{eq:fixed-leading-projection-transfer}.
\end{proof}

\subsection{Parameter bounds along Stage 2}
\label{app:compactification}
\label{app:evolution}
\begin{lemma}\label{lem:auxiliary-parameter-bounds}
\label{lem:robust-warm-gap}
Under \eqref{eq:common-growing-regime}, for all sufficiently large \(d\):
\par\medskip\noindent\textbf{(i)}
Starting from the unchanged endpoint in
Lemma~\ref{lem:direct-stage2-entry}, throughout \(0\le\tau\le d^{-1/2}\),
\begin{equation}
 \frac1{100}<\widetilde\theta_1<2,\qquad
 -\widetilde\theta_2=\eta_0+O(d^{-1/4})\in(8/25,23/25),\qquad
 \frac{\mathcal L}{d}\le\frac{49}{100T}.
 \label{eq:stage2-initial-parameter-bounds}
\end{equation}
Moreover, \(c\le |v_{31}|,w_{13},v_{33}\le C\).
\par\medskip\noindent\textbf{(ii)}
Every auxiliary point with \(\widetilde\theta_1\ne0\) and
\(2T\mathcal L/d\le7d/(4n)\) satisfies
\begin{equation}
 \frac35<\widetilde\theta_1<\frac32,\qquad
 \frac1{150}<-\widetilde\theta_2<\frac{10}{3},\qquad
 \frac dn\widetilde\theta_2^2<\frac9{10}<1.
 \label{eq:stage2-sublevel-parameter-bounds}
\end{equation}
\par\medskip\noindent\textbf{(iii)}
The Stage-2 flow from the endpoint of Lemma~\ref{lem:direct-stage2-entry}
exists globally and satisfies, for every \(\tau\ge0\),
\begin{equation}
 \frac1{100}<\widetilde\theta_1<2,\qquad
 \frac1{150}<-\widetilde\theta_2<\frac{10}{3},\qquad
 \frac dn\widetilde\theta_2^2<\frac9{10},\qquad
 \frac{\mathcal L}{d}\le\frac{49}{100T}.
 \label{eq:stage2-global-parameter-bounds}
\end{equation}
Moreover, it reaches
\begin{equation}
 \frac{2T\mathcal L}{d}\le\frac74\frac dn
 \label{eq:finite-warm-gap}
\end{equation}
before \(d^{-1/2}\), and
\begin{equation}
 1+(d/n)^{-1/2}+v_{31}(\tau)^2+v_{33}(\tau)^2
 \le d^{o(1)}+C\frac{d/n}{T}\tau
 \quad(\tau\ge0).
 \label{eq:gauge-growth}
\end{equation}
\end{lemma}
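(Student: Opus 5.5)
We prove the three parts in the order (i), (ii), (iii), with (iii) obtained by a continuity argument that combines (i) and (ii).

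\emph{Part (i).} At $\tau=0$ we use Lemma~\ref{lem:direct-stage2-entry}. The bound \eqref{eq:direct-stage2-coefficients} gives $|\widetilde\theta_1-\eta_0|+|\widetilde\theta_2+\eta_0|\le d^{-1/8+o(1)}$. Since $\eta_0\in[0.4,0.9]$, this places $\widetilde\theta_1$ and $-\widetilde\theta_2$ well inside the stated intervals. The endpoint bound \eqref{eq:guide-balanced-endpoint}, together with $v_{33}\ge v_{33}^{\rm init}$, gives $c\le|v_{31}|,w_{13},v_{33}\le C$.

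For the loss at $\tau=0$ we use \eqref{eq:direct-stage2-loss} and evaluate $\frac1{2Td}\sum_t\E\tr(\Ib-\eta_0\Sb)^{2t}$ through Lemma~\ref{lem:finite-transfer}, which passes to the MP law. On the MP support $[(1-\sqrt{d/n})^2,(1+\sqrt{d/n})^2]\subset[0.46,1.72]$ with $\eta_0\in[0.4,0.9]$, we have $|1-\eta_0\lambda|\le 0.82$. Hence $\sum_{t\ge1}(1-\eta_0\lambda)^{2t}\le 0.67/0.33\approx 2.04$. This gives $\mathcal L(0)/d\le 1.02/T+o(1/T)$, which is \emph{not} below $0.49/T$. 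So the crude bound is too weak, and I will need a sharper moment computation; I expect that averaging $(1-\eta_0\lambda)^{2t}$ against the MP density rather than bounding it by its supremum gives the needed factor. This numerical bound is the first delicate point.

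To propagate to $\tau\le d^{-1/2}$, we first note that $\mathcal L$ is nonincreasing, so the loss bound persists. Next we control the parameter velocities through the energy identity. Under the normalized metric \eqref{eq:normalized-raw-metric}, $\int_0^\tau(\dot v_{31}^2+\dot w_{13}^2)\le \mathcal L(0)/d\le C/T$, so $v_{31}$ and $w_{13}$ move by at most $C\sqrt{\tau/T}\le d^{-1/4}$. For $w_{24}$ and $\wb_{25}$ the velocity carries an extra factor $d$. Integrating by parts, as in the proof of Lemma~\ref{lem:guide-handoff-raw-bound}, keeps $\widetilde\theta_1$ within $O(d^{-1/4})$ of its initial value. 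This yields \eqref{eq:stage2-initial-parameter-bounds}.

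\emph{Part (ii).} This is a static statement about the sublevel set. We use the decomposition in Lemma~\ref{lem:stage2-loss-decomposition} together with the fact that each $t$-term of the loss dominates its minimum over lower coefficients.

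\begin{itemize}
\item The $t=1$ term is $\|\widetilde\theta_1\lambda-1\|_{d,n}^2$. If this is at most $2T\mathcal L/d\le 7d/(4n)$, then $\widetilde\theta_1$ lies near $1/\|\lambda\|$-scale values. Computing the first two MP moments then gives $3/5<\widetilde\theta_1<3/2$.
\item For $t=2$ we use $\|R^\nu_{2,\ell}\|^2\ge$ (the value at $\ell=0$) minus a quadratic correction, with $\ell=\widetilde\theta_1\widetilde\theta_2$. Comparing the $t=1$ and $t=2$ terms gives $\widetilde\theta_2$ bounded away from $0$ and from large values.
\item The bound $\frac dn\widetilde\theta_2^2<9/10$ comes from the large-$t$ terms. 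The MP Gram lower bound of Lemma~\ref{lem:finite-wishart-moment-lower} makes $\|R^\nu_{t,\ell}\|^2\gtrsim \ell^2(d/n)^{t}$, up to factors that are negligible for $T=\Theta((\log d)^{1/4})$. With $\ell=\widetilde\theta_1\widetilde\theta_2^{t-1}$, the condition $\frac dn\widetilde\theta_2^2\ge 9/10$ would make the sum of these terms at least of order $T$ times a constant, contradicting the sublevel bound thanks to \eqref{eq:common-tail-margin}.
\end{itemize}

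I expect making these constants explicit to be the main obstacle.

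\emph{Part (iii).} The argument runs as follows.

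1. Global existence follows from loss monotonicity and the invariants of Lemma~\ref{lem:invariants}, which bound the parameters on any finite interval once $v_{31}$ stays bounded away from $0$.

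2. To reach \eqref{eq:finite-warm-gap} before $d^{-1/2}$, we combine (i) with the PL-type dissipation from Lemma~\ref{lem:stage2-pl-inequality}, with effective rate of order $d$ coming from the $\wb_{25}$ and $\Wb_{55}$ directions. The decrease is fast on a timescale $d^{-1+o(1)}$, by the same Grönwall argument as \eqref{eq:guide-zero-projection-bound}.

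3. After that time, (ii) applies at every later time because the loss is nonincreasing. Together with (i), this gives \eqref{eq:stage2-global-parameter-bounds} for all $\tau$.

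4. For \eqref{eq:gauge-growth}, we integrate $\frac{d}{d\tau}v_{31}^2$ and $\frac{d}{d\tau}v_{33}^2$. Their derivatives are bounded by $|v|\,|\nabla_v\mathcal L/d|$. The invariants \eqref{eq:spatial-invariant}–\eqref{eq:history-invariant} and the gradient bounds $|\nabla(\mathcal L/d)|^2\lesssim \frac{d/n}{T}$ times bounded quantities then give linear-in-$\tau$ growth with slope $C\frac{d/n}{T}$, starting from $d^{o(1)}$ at handoff.
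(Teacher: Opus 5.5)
There is a genuine gap in step~2 of (iii): you never show that the loss reaches \eqref{eq:finite-warm-gap} before $\tau=d^{-1/2}$.

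Lemma~\ref{lem:stage2-pl-inequality} controls $\mathcal L-\mathcal L^\star$, and it cannot hold without its $\frac1d|\partial_{\widetilde\theta_2}\mathcal L|^2$ term. At a conditional minimizer in $(\widetilde\theta_1,\widetilde{\bTheta}_4)$ with $\widetilde\theta_2\neq\widetilde\theta_2^\star$, every other gradient vanishes. But $\widetilde\theta_2=v_{31}w_{13}$ is moved only by coordinates that carry no factor $d$ in \eqref{eq:normalized-flow}. So combining that lemma with Lemma~\ref{lem:stage2-jacobian} gives a rate of order $(d/n)/T$, which is useless on $[0,d^{-1/2}]$. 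The fast coordinates ($\wb_{25}$, $\Wb_{55}$) can only remove $\mathcal L_A$ and the $\widetilde\theta_1$-part of $\mathcal L_B$. This is the paper's conditional inequality \eqref{eq:entry-conditional-pl}, with rate $cd/(T(d^{o(1)}+d\tau))$. Hence the fast dynamics only bring $\mathcal L$ down to the floor $\mathcal L^\star+\min_{\widetilde\theta_1}\mathcal L_B$ at $\widetilde\theta_2\approx-\eta_0$, which is far from $\widetilde\theta_2^\star$. The missing idea is that this floor lies below the threshold. That is Lemma~\ref{lem:warm-gap}: $\sum_t\|R^\nu_{t,0}\|_{d,n}^2-B_\nu(-\widetilde\theta_2)^2/C_\nu(-\widetilde\theta_2)<\frac{149}{100}\frac dn$ for $-\widetilde\theta_2\in[0.4,0.9]$, which rests on the explicit MP estimate \eqref{eq:mp-conditional-uniform-bound}. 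Only with this does Gr\"onwall give $\frac{2T\mathcal L}{d}\le\frac85\frac dn+\frac{49}{50}\exp\{-(\frac c2+o(1))\frac{\log d}{T}\}<\frac74\frac dn$ at $\tau=d^{-1/2}$, using $\log d/T\to\infty$. The threshold $\frac74\frac dn$ is chosen to sit above this floor but below the level at which the $t=1,2$ argument of (ii) breaks.

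Three further steps fail as written.
\begin{enumerate}
\item In (i), $\widetilde\theta_1$ does not stay within $O(d^{-1/4})$ of its initial value, because it moves through the fast coordinates $w_{24},\wb_{25}$. For $\eta_0=0.4$ you have $\widetilde\theta_1(0)=0.4+o(1)$, yet by (ii)--(iii) the flow enters the region $\widetilde\theta_1>3/5$ before $d^{-1/2}$. The paper never tracks $\widetilde\theta_1$. It reads the bound off the $t=1$ loss term: $(\widetilde\theta_1-1)^2\le\|\widetilde\theta_1\lambda-1\|_{d,n}^2\le2T\mathcal L(\tau)/d\le2T\mathcal L(0)/d<\frac{49}{50}$. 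Here $\mathcal L(0)$ is bounded via the exact MP sum \eqref{eq:mp-polynomial-expectation-bound} ($<\frac34$), which is the sharper computation you anticipated.
\item In (ii), the Gram bound of Lemma~\ref{lem:finite-wishart-moment-lower} loses a factor $1/(2(t+1)16^t)$. Against an $O(d/n)$ budget no power of $d$ absorbs this loss, so it cannot yield $\frac dn\widetilde\theta_2^2<\frac9{10}$. You need the lossless bound $\|p_t-1\|_{\mathrm{MP}}^2\ge(d/n)^t\widetilde\theta_1^2\widetilde\theta_2^{2t-2}$, which comes from $\|\pi_t\|_{\mathrm{MP}}^2=(d/n)^t$ (Lemma~\ref{lem:projection}). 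Then $\widetilde\theta_1>3/5$ and $\sum_{k=0}^{7}(9/10)^k>5$ finish the argument; \eqref{eq:common-tail-margin} plays no role.
\item For \eqref{eq:gauge-growth}, a pointwise gradient bound only gives $|v_{31}|\lesssim1+\tau$, i.e.\ quadratic growth of $v_{31}^2$. The linear bound comes from Cauchy--Schwarz against the energy identity: $v_{31}(\tau)^2\le2v_{31}(0)^2+2\tau\int_0^\tau\dot v_{31}^2\le2v_{31}(0)^2+2\tau\mathcal L(0)/d$. Likewise, global existence follows from the path-length bound of Lemma~\ref{lem:stage1-global-flow}, not from the invariants, which do not bound the parameters.
\end{enumerate}
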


\begin{proof}[Proof of (i) and (ii)]
\par\medskip\noindent\textbf{(i)}
Lemmas~\ref{lem:wishart} and \ref{lem:mp-uniform-bounds} give
\[
\begin{aligned}
 &\sup_{0.4\le\eta_0\le0.9}
 \left|\sum_{t=1}^T\left(
    \|(1-\eta_0\lambda)^t\|_{d,n}^2
   -\|(1-\eta_0\lambda)^t\|_{\mathrm{MP}}^2\right)\right|\\
 &\quad\le\frac{\exp\{CT\log(e+T)\}}n
                  \sum_{t=1}^T(1+0.9)^{2t}=d^{-1+o(1)},\\
 &\sum_{t=1}^T\|(1-\eta_0\lambda)^t\|_{d,n}^2
 \le\sum_{t=1}^{\infty}\|(1-\eta_0\lambda)^t\|_{\mathrm{MP}}^2
       +d^{-1+o(1)}<\frac34+d^{-1+o(1)}.
\end{aligned}
\]
By Lemma~\ref{lem:direct-stage2-entry} and the triangle inequality,
\[
\begin{aligned}
 \frac{2T\mathcal L(0)}d
 &\le\left(
 \sqrt{\sum_{t=1}^T\|(1-\eta_0\lambda)^t\|_{d,n}^2}
       +d^{-1/8+o(1)}\right)^2\\
 &\le\left(\sqrt{3/4+d^{-1+o(1)}}+d^{-1/8+o(1)}\right)^2
 <\frac{49}{50}.
\end{aligned}
\]
Cauchy--Schwarz gives, for \(0\le\tau\le d^{-1/2}\),
\begin{equation}
\begin{aligned}
 \int_0^\tau (|\dot v_{31}|+|\dot w_{13}|+|\dot v_{33}|)\,\mathrm ds
 &\le\sqrt{3\tau}
 \left(\int_0^\tau
       (\dot v_{31}^{\,2}+\dot w_{13}^{\,2}+\dot v_{33}^{\,2})
                      \,\mathrm ds\right)^{1/2}\\
 &\le\sqrt{\frac{3\tau\mathcal L(0)}d}
 \le C d^{-1/4}/\sqrt T.
\end{aligned}
\label{eq:entry-scalar-drift}
\end{equation}
The estimates in Lemma~\ref{lem:guide-handoff-raw-bound},
specifically \eqref{eq:guide-balanced-endpoint},
therefore give \(c\le |v_{31}|,w_{13},v_{33}\le C\) and
\(-\widetilde\theta_2=\eta_0+O(d^{-1/4})\).
Since \(0.4\le\eta_0\le0.9\), this lies in \((8/25,23/25)\).
The first summand of the loss satisfies
\[
\begin{aligned}
 (\widetilde\theta_1-1)^2
 &\le(1+d/n+1/n)\widetilde\theta_1^2-2\widetilde\theta_1+1\\
 &=\|\widetilde\theta_1\lambda-1\|_{d,n}^2
 \le\frac{2T\mathcal L(\tau)}d
 \le\frac{2T\mathcal L(0)}d<\frac{49}{50}
 <(99/100)^2.
\end{aligned}
\]
Thus \(1/100<\widetilde\theta_1<199/100<2\).
Global existence follows from Lemma~\ref{lem:stage1-global-flow}.
Also, for every \(\tau\ge0\),
\[
\begin{aligned}
 1+(d/n)^{-1/2}+v_{31}(\tau)^2+v_{33}(\tau)^2
 &\le1+(d/n)^{-1/2}+2v_{31}(0)^2+2v_{33}(0)^2\\
 &\quad+2\tau\int_0^\tau
                     (\dot v_{31}^{\,2}+\dot v_{33}^{\,2})\,\mathrm ds\\
 &\le d^{o(1)}+2\tau\frac{\mathcal L(0)-\mathcal L(\tau)}d
 \le d^{o(1)}+C\frac{d/n}{T}\tau.
\end{aligned}
\]
(Use Lemma~\ref{lem:guide-handoff-raw-bound} and \(n=\Theta(d)\).)
This proves \eqref{eq:gauge-growth}.

\par\medskip\noindent\textbf{(ii)}
Since \(p_1(\lambda)=\widetilde\theta_1\lambda\) and the first two
MP moments are \(\E[\lambda]=1\), \(\E[\lambda^2]=1+d/n\),
\[
\begin{aligned}
 \|p_1-1\|_{\mathrm{MP}}^2
 &=\E_{\lambda\sim\mu_{d/n}}
       [(\widetilde\theta_1\lambda-1)^2]\\
 &=(1+d/n)\widetilde\theta_1^2-2\widetilde\theta_1+1\\
 &=(1-\widetilde\theta_1)^2+\frac dn\widetilde\theta_1^2.
\end{aligned}
\]
The quadratic coefficient of \(p_2\) is
\(\widetilde\theta_1\widetilde\theta_2\). Minimizing over its linear
coefficient and applying \eqref{eq:conditional-norm} in
Lemma~\ref{lem:projection} with
\((t,\ell)=(2,\widetilde\theta_1\widetilde\theta_2)\) gives
\[
\begin{aligned}
 \|p_2-1\|_{\mathrm{MP}}^2
 &\ge\min_{\ell\in\R}
   \|\ell\lambda+\widetilde\theta_1\widetilde\theta_2\lambda^2-1
     \|_{\mathrm{MP}}^2\\
 &=\frac{d/n}{1+d/n}
       (1-\widetilde\theta_1(-\widetilde\theta_2))^2
       +\left(\frac dn\right)^2\widetilde\theta_1^2\widetilde\theta_2^2.
\end{aligned}
\]
The actual linear coefficient of \(p_2\) need not attain this minimum.
Lemma~\ref{lem:finite-transfer} and the loss formula then give
\begin{align*}
 \|p_1-1\|_{\mathrm{MP}}^2+\|p_2-1\|_{\mathrm{MP}}^2
 &\le\sum_{t=1}^T\|p_t-1\|_{\mathrm{MP}}^2\\
 &\le(1+d^{-1+o(1)})\frac{2T\mathcal L}{d}
 \le\frac95\frac dn.
\end{align*}
For \(\widetilde\theta_1\le3/5\),
\[
\begin{aligned}
 (1-\widetilde\theta_1)^2+\frac dn\widetilde\theta_1^2-\frac95\frac dn
 &\ge\left(1-\frac35\right)^2
       +\frac dn\left(\frac35\right)^2-\frac95\frac dn\\
 &=\frac4{25}-\frac{36}{25}\frac dn
 \ge\frac2{125}>0
 \quad\left(\frac35<\frac1{1+d/n}\right).
\end{aligned}
\]
Thus
\[
 (1-\widetilde\theta_1)^2\le\frac95\frac dn\le\frac9{50}<\frac14,
 \qquad \frac35<\widetilde\theta_1<\frac32.
\]
Subtracting \(\min_{\widetilde\theta_1}\|p_1-1\|_{\mathrm{MP}}^2
=(d/n)/(1+d/n)\) gives
\[
\begin{aligned}
 (1-\widetilde\theta_1(-\widetilde\theta_2))^2
 &\le\frac{1+d/n}{d/n}
       \left(\frac95\frac dn-\frac{d/n}{1+d/n}\right)\\
 &=\frac45+\frac95\frac dn
 \le\frac{49}{50}<\left(\frac{99}{100}\right)^2,\\
 \frac1{100}
 &<\widetilde\theta_1(-\widetilde\theta_2)<\frac{199}{100},\\
 \frac1{150}=\frac{1/100}{3/2}
 &<-\widetilde\theta_2
 <\frac{199/100}{3/5}<\frac{10}{3}.
\end{aligned}
\]
For all \(t\), Lemma~\ref{lem:projection} also gives
\[
\begin{aligned}
 \|p_t-1\|_{\mathrm{MP}}^2
 &\ge(d/n)^t\widetilde\theta_1^2\widetilde\theta_2^{2t-2},\\
 \widetilde\theta_1^2
 \sum_{k=0}^{T-1}\bigl((d/n)\widetilde\theta_2^2\bigr)^k
 &\le\frac nd\sum_{t=1}^T\|p_t-1\|_{\mathrm{MP}}^2\le\frac95,\\
 \sum_{k=0}^{T-1}\bigl((d/n)\widetilde\theta_2^2\bigr)^k
 &<5<\sum_{k=0}^7(9/10)^k
 \quad(T\ge8).
\end{aligned}
\]
Hence \((d/n)\widetilde\theta_2^2<9/10\).

\end{proof}

\subsection{Study of \texorpdfstring{\(\mathcal L_A\) and \(\mathcal L_B\)}{LA and LB}}
The Stage-2 optimization reduces to maximizing the scalar function
\(B_{\nu_{d,n}}^2/C_{\nu_{d,n}}\). We now establish that this reduced
objective has a unique, nondegenerate global maximizer, and characterize
its location and landscape.

\medskip\noindent\textbf{(i) Global uniqueness.}
\begin{lemma}
\label{lem:function}
\label{lem:function-unique}
\label{lem:finite-function-unique}
For all sufficiently large \(d\) under \eqref{eq:common-growing-regime},
the function \(B_{\nu_{d,n}}(-\widetilde\theta_2)^2/
C_{\nu_{d,n}}(-\widetilde\theta_2)\) under the finite-Wishart distribution
\(\nu_{d,n}\) has a unique nondegenerate global maximizer
\(\widetilde\theta_2^\star\in\mathbb R\). The point
\(-\widetilde\theta_2^\star>0\) is the only positive critical point
of \(B_{\nu_{d,n}}^2/C_{\nu_{d,n}}\).
\end{lemma}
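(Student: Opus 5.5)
The plan is to prove the statement first for the Marchenko--Pastur law \(\mu=\mu_{d/n}\) and then transfer it to \(\nu_{d,n}\) with Lemma~\ref{lem:projection-transfer}. Write \(x=-\widetilde\theta_2\) and set
\[
F_\sigma(x)=B_\sigma(x)^2/C_\sigma(x),\qquad \sigma\in\{\nu,\mu\}.
\]
By \eqref{eq:stage2-finite-BC-expectations},
\[
B_\sigma(x)=\sum_{t=1}^T(-x)^{t-1}\beta_t^\sigma,\qquad C_\sigma(x)=\sum_{t=1}^T x^{2t-2}\gamma_t^\sigma,
\]
where
\[
\beta_t^\sigma=-\langle R_{t,0}^\sigma,R_{t,1}^\sigma-R_{t,0}^\sigma\rangle_\sigma,\qquad \gamma_t^\sigma=\|R_{t,1}^\sigma-R_{t,0}^\sigma\|_\sigma^2 .
\]
So \(F_\sigma\) is a ratio of explicit polynomials of degree \(O(T)\), and \(C_\sigma\ge\|\lambda\|^2>0\).

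\textbf{Step 1: explicit MP coefficients.} For the MP law, \(R_{t,0}^\mu\) is the residual polynomial of degree \(t-1\) that is optimal for the norm. The difference \(R_{t,1}^\mu-R_{t,0}^\mu\) is \(\lambda^t\) minus its projection onto \(\operatorname{span}\{1,\dots,\lambda^{t-1}\}\), corrected by the constraint \(R(0)=-1\). I would compute both from the orthogonal polynomials of \(\mu\), which are shifted and rescaled Chebyshev polynomials of the second kind in \((\lambda-1-r)/(2\sqrt r)\) with \(r=d/n\). This is the same structure behind the Heavy-Ball optimality of \citet{pedregosa2020spectral}, and it is also what underlies the bound \(\|p_t-1\|_{\rm MP}^2\ge r^t\widetilde\theta_1^2\widetilde\theta_2^{2t-2}\) used in Lemma~\ref{lem:auxiliary-parameter-bounds}(ii). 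I expect closed forms of the shape \(\gamma_t^\mu\asymp r^{t-1}\) times an explicit rational factor, and \(\beta_t^\mu\) with alternating sign \((-1)^{t-1}\) times \(r^{t-1}\) times a positive factor. With these, \(B_\mu(x)\) becomes a sum of positive terms \(\approx\sum (rx)^{t-1}a_t\), and \(C_\mu(x)\approx\sum (rx^2)^{t-1}b_t\). Up to a truncation error of order \((rx^2)^T\), both can be summed as geometric-type series.

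\textbf{Step 2: the MP function has a unique critical point.} On the region that matters, \(x\in(1/150,10/3)\) with \(rx^2<9/10\) (Lemma~\ref{lem:auxiliary-parameter-bounds}), I would show that \(\frac{d}{dx}\log F_\mu=2B_\mu'/B_\mu-C_\mu'/C_\mu\) has exactly one zero \(x^\star\) and changes sign from \(+\) to \(-\) there. I would do this by writing the truncated-series limits in closed form and checking that the numerator of the logarithmic derivative is strictly decreasing. For \(x\le0\), \(|B_\mu(x)|\le B_\mu(|x|)\) while \(C_\mu\) is even in \(x\), so no maximizer lies there. For large \(x\), \(F_\mu\) is dominated by its top-degree terms, and a Cauchy--Schwarz comparison shows it stays strictly below \(F_\mu(x^\star)\). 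I would also estimate \(F_\mu''(x^\star)\le -c\) with \(c\) independent of \(d\), up to factors \(d^{o(1)}\).

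\textbf{Step 3: transfer to \(\nu_{d,n}\).} Lemma~\ref{lem:projection-transfer}, applied with \(\ell\in\{0,1\}\), bounds the coefficient differences between \(R^\nu_{t,\ell}\) and \(R^\mu_{t,\ell}\) by \(d^{-1+o(1)}\) through \eqref{eq:growing-transfer-budget}. Combined with Lemma~\ref{lem:finite-transfer}, this makes \(F_\nu\) and its first two derivatives \(d^{-1+o(1)}\)-close to those of \(F_\mu\), uniformly on compact sets of \(x\). Since the critical point of \(F_\mu\) is nondegenerate with a margin of order \(d^{-o(1)}\), \(F_\nu\) has a unique nondegenerate critical point near \(x^\star\), it is the unique positive critical point, and it is the global maximum. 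Outside a compact set, the same tail comparison as in Step 2 applies.

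The main obstacle is Step 2. It needs exact, \(T\)-uniform control of the MP Christoffel-type quantities \(\beta_t,\gamma_t\), and a sign argument for the logarithmic derivative that survives the truncation at \(T\). The difficulty is that the margins involve \((d/n)^T\)-small factors, which must be compared against \(d^{-1+o(1)}\) transfer errors; this is where the assumption \(T=\Theta((\log d)^{1/4})\) enters.
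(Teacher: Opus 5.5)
Your Steps 1 and 3 point in a workable direction: the MP coefficients are explicit, and a coefficientwise transfer costs only $d^{-1+o(1)}=o((d/n)^T)$. Step 2, which you yourself flag as the obstacle, lacks the idea that makes the lemma work, and as set up it cannot prove the second claim.

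\textbf{What your plan leaves unproved.} The lemma concerns $B_\nu^2/C_\nu$ on all of $\mathbb R$, independently of any training trajectory. It asserts that $-\widetilde\theta_2^\star$ is the \emph{only} critical point on $(0,\infty)$. You restrict to $1/150<x<10/3$, $rx^2<9/10$, which is the region from Lemma~\ref{lem:auxiliary-parameter-bounds}. For $x$ outside it, you offer only a value comparison $F_\mu(x)<F_\mu(x^\star)$, and that says nothing about critical points there.

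\textbf{Why Step 3 cannot repair this.} $C^2$-closeness of $F_\nu$ and $F_\mu$ uniformly on compacts does not rule out spurious critical points of $F_\nu$ at large $x$. There $F$ tends to $b_{T-1}^2/c_{T-1}\asymp (d/n)^{T-1}$, and $F'$ is tiny; the leading terms of $2B'C-BC'$ even cancel exactly.

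\textbf{The closed-form route is doubtful.} The infinite version of $B_\mu$ is the Lambert-type series $(1-r)\sum_{m\ge0}r^m/(1-r^{m+1}x)$, not a geometric series. The monotonicity of the numerator of $(\log F_\mu)'$ is asserted, not shown. The natural Cauchy--Schwarz bound $B^2\le(\sum_k b_k^2/c_k)\,C$ gives a level strictly above $\sup F$, so by itself it cannot show $F(x)<F(x^\star)$.

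\textbf{The paper's missing idea: Descartes' rule of signs.} This handles every $x>0$ at once, with no closed forms and no derivative transfer.
\begin{itemize}
\item For a level $y$, the positive solutions of $F_\sigma(x)=y$ are the positive roots of $(1+x)^2(B_\sigma^2-yC_\sigma)$. Its $j$-th coefficient is $D_j(r_j-y)$, where $D_j=[x^j](1+x)^2C_\sigma>0$ and $r_j=[x^j](1+x)^2B_\sigma^2/D_j$. The factor $(1+x)^2$ fills in the vanishing odd coefficients of $C$.
\item For MP, the explicit coefficients $b_k=(d/n)^k/S_k$ and $c_k=(d/n)^kS_{k+1}/S_k$ give $r_0<r_1<r_2<r_3>r_4>\cdots>r_{2T}$ with $r_{2T}<r_0$. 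All consecutive gaps are at least $C^{-1}(d/n)^T$.
\item The coefficient transfer perturbs each of these $2T+1$ ratios by $o((d/n)^T)$, so the strict ordering survives for $\nu$.
\item Hence every horizontal line meets $F_\nu$ in at most two positive points, counted with multiplicity. Every positive critical point is therefore a double root, and no local minimum can occur.
\item Together with $F_\nu'>0$ near $0$ and $F_\nu'<0$ for large $x$, this gives exactly one positive critical point, and it is a nondegenerate maximum.
\item Evenness of $C_\nu$ and $|B_\nu(-x)|<B_\nu(x)$ then give uniqueness over $\mathbb R$.
\end{itemize}
Because the transfer is done on finitely many coefficient ratios rather than on derivatives over compact sets, it covers the noncompact range that your argument leaves open.
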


\begin{proof}
Write \(\nu=\nu_{d,n}\), \(\mu=\mu_{d/n}\), and
\(S_k=\sum_{r=0}^k(d/n)^r\). The MP counterparts have explicit
coefficients: Lemma~\ref{lem:mp-function-unique} gives
\begin{equation}
 B_\mu(x)=\sum_{k=0}^{T-1}\frac{(d/n)^k}{S_k}x^k,\qquad
 C_\mu(x)=\sum_{k=0}^{T-1}\frac{(d/n)^kS_{k+1}}{S_k}x^{2k}.
 \label{eq:function-BC}
\end{equation}

To control the intersections with a horizontal line, for \(y\in\R\)
we consider
\[
 \frac{B_\nu(x)^2}{C_\nu(x)}=y
 \quad\Longleftrightarrow\quad B_\nu(x)^2-yC_\nu(x)=0.
\]
Multiplication by \((1+x)^2\) preserves the positive roots. For
\(\sigma\in\{\nu,\mu\}\), consider the coefficient ratios
\[
 r_{j,\sigma}
 =\frac{[x^j](1+x)^2B_\sigma(x)^2}
        {[x^j](1+x)^2C_\sigma(x)},\qquad 0\le j\le2T.
\]
The denominator coefficients are positive by
Lemma~\ref{lem:finite-coefficient-ratio-comparison}, and
\[
 [x^j](1+x)^2(B_\nu(x)^2-yC_\nu(x))
 =[x^j](1+x)^2C_\nu(x)\,(r_{j,\nu}-y).
\]
Its bound \eqref{eq:finite-function-ratio-transfer} gives
\[
 \max_{0\le j\le2T}|r_{j,\nu}-r_{j,\mu}|=o((d/n)^T).
\]
Together with the MP ratio gaps and endpoint comparison in
Lemma~\ref{lem:mp-function-unique}, this preserves the ordering
\[
 r_{0,\nu}<r_{1,\nu}<r_{2,\nu}<r_{3,\nu}
 >r_{4,\nu}>\cdots>r_{2T,\nu},\qquad r_{2T,\nu}<r_{0,\nu}.
\]
Descartes' rule gives at most one positive root for \(y\le r_{0,\nu}\)
and at most two otherwise, counting multiplicity. Since the derivative is
positive near zero and negative for large \(x\), there is exactly one
positive critical point, a nondegenerate global maximum.
Coefficient positivity also gives
\[
 |B_\nu(-x)|<B_\nu(x),\qquad C_\nu(-x)=C_\nu(x)>0\quad(x>0),
\]
so the maximum is unique over \(\R\).
\end{proof}

\medskip\noindent\textbf{(ii) Location of the maximizer.}
\begin{lemma}
\label{lem:finite-function-location}
Under \eqref{eq:common-growing-regime}, for all sufficiently large \(d\),
the maximizer in Lemma~\ref{lem:finite-function-unique} satisfies
\begin{equation}
 \frac{1+d/n}{1+d/n+(d/n)^2}
 <-\widetilde\theta_2^\star
 <1-\frac{(d/n)^2}{128}.
 \label{eq:finite-optimum-location}
\end{equation}
\end{lemma}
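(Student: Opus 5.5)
We locate the maximizer by studying the sign of the derivative of $F_\sigma(x):=B_\sigma(x)^2/C_\sigma(x)$ at the two endpoints $x_-:=\frac{1+d/n}{1+d/n+(d/n)^2}$ and $x_+:=1-(d/n)^2/128$. By Lemma~\ref{lem:finite-function-unique}, $-\widetilde\theta_2^\star$ is the unique positive critical point. Moreover, $F_\nu'>0$ near $0$ and $F_\nu'<0$ for large $x$. It therefore suffices to show
\[
 G_\nu(x_-)>0>G_\nu(x_+),\qquad G_\sigma(x):=2B_\sigma'(x)C_\sigma(x)-B_\sigma(x)C_\sigma'(x).
\]
The function $G_\sigma$ has the same sign as $F_\sigma'$ wherever $B_\sigma>0$, and $B_\sigma>0$ on $(0,\infty)$ since its coefficients are positive.

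\textbf{Step 1: MP computation.} Using the explicit formulas~\eqref{eq:function-BC}, write $q=d/n\le 1/10$ and expand $G_\mu$ in powers of $q$. The coefficients $q^k/S_k$ and $q^kS_{k+1}/S_k$ are explicit, and $x$ stays bounded away from $1$ only by $O(q^2)$. The plan is therefore to compute the leading behaviour of $G_\mu$ on the window $x\in[x_-,x_+]$.

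At low order, $B_\mu\approx 1+\frac{q}{1+q}x+\frac{q^2}{1+q+q^2}x^2$ and $C_\mu\approx(1+q)+q\frac{1+q+q^2}{1+q}x^2+\cdots$. For the lower endpoint, the truncation to the terms $k\le 2$ should give $G_\mu(x)\propto q(x_\ast(q)-x)$-type behaviour, where the root of the truncated equation is exactly $\frac{1+q}{1+q+q^2}$ up to a strictly positive higher-order correction. This shows $G_\mu(x_-)\ge c\,q^3$.

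For the upper endpoint, I would use that the maximizer of the full series tends to the classical Heavy-Ball/MP optimum, which is below $1$ by order $q^2$ (the Polyak momentum $\sqrt{q}$ squared). I would show $G_\mu(x_+)\le -c\,q^3$ by summing the tails $k\ge 3$ as geometric series in $qx^2<q$, with $T\ge 8$ making the truncation at $T-1$ harmless.

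\textbf{Step 2: transfer to $\nu$.} Lemma~\ref{lem:projection-transfer} (with $A$ bounded, since $|x|\le 1$) and the ratio transfer \eqref{eq:finite-function-ratio-transfer} give coefficientwise bounds $|[x^k](B_\nu-B_\mu)|,|[x^k](C_\nu-C_\mu)|\le d^{-1+o(1)}$ for $k<2T$. Since $|x|\le1$, this yields $|G_\nu(x)-G_\mu(x)|\le T^{2}d^{-1+o(1)}$ uniformly on the window. The margin $c q^3$ from Step 1 is a constant under $n=\Theta(d)$, so this error is negligible, and hence $G_\nu(x_-)>0>G_\nu(x_+)$. This gives~\eqref{eq:finite-optimum-location}.

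\textbf{Main obstacle.} The main difficulty is the upper bound in Step 1. There the relevant margin is only of order $q^3$ after cancelations, and the bound must hold uniformly in $q\in(0,1/10]$ with the explicit constant $1/128$. I would first try writing $G_\mu(x)$ at $x=1-\delta$ as a series in $q$ and $\delta$, and verifying that the coefficient of $q^3$ is negative when $\delta=q^2/128$. If that proves messy, the fallback is to bound the terms $k\ge 3$ crudely by $\sum_{k\ge 3}k\,q^k\le 4q^3$ and check the sign of the explicit quadratic-truncation part, which has slack because $1/128$ is small.
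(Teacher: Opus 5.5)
Your architecture is exactly the paper's: the sign of $(B^2/C)'$ at the two endpoints under $\mu_{d/n}$, transfer to $\nu_{d,n}$ with a $d^{-1+o(1)}$ error, and then uniqueness of the positive critical point from Lemma~\ref{lem:finite-function-unique}. Step~2 is fine in substance. The coefficientwise bounds you want are \eqref{eq:finite-function-coefficient-transfer}, not the ratio transfer \eqref{eq:finite-function-ratio-transfer}; the paper simply applies Lemma~\ref{lem:finite-objective-comparison} with $j=1$.

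The gap is Step~1. It carries all the content, and it does not go through as sketched. Write $q=d/n$ and $S_k=\sum_{r=0}^k q^r$.
\begin{itemize}
\item The $(1,0)$ contribution to $G_\mu$ is $\frac{2q}{S_1}(S_1-xS_2)$, which vanishes exactly at $x_-=S_1/S_2$. So $G_\mu(x_-)\approx 4q^4$, not $\ge cq^3$. Your own picture (slope $\approx-2q$, root equal to $x_-$ plus a higher-order correction) gives a margin of order $q^4$.
\item The coefficients with $k\ge3$ enter $G_\mu$ through terms such as $6b_3c_0x^2$ in $2B'C$ and $6c_3x^5$ in $BC'$. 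Each is about $6q^3$, and they cancel to $O(q^5)$ only because $x$ is close to $1$.
\item Bounding the tail in absolute value, as a geometric series in $qx^2$, therefore swamps the margin at $x_-$. It also fails at $x_+$: there $|G_\mu(x_+)|$ is only about $2q^3$, already below your fallback bound $\sum_{k\ge3}kq^k\ge 3q^3$.
\item A formal expansion in $q$ and $\delta$ would additionally need explicit remainders uniform in $q\le 1/10$, which you do not supply.
\end{itemize}

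The missing idea is to write $G_\mu$ so that this cancellation is built in. Pairing the $(i,j)$ and $(j,i)$ products gives
\[
 \frac x2\,G_\mu(x)=\sum_{0\le j<i<T}(i-j)\frac{q^{i+j}x^{i+2j}}{S_iS_j}\bigl(S_{j+1}-x^{i-j}S_{i+1}\bigr).
\]
Since $S_{k+1}^2-S_kS_{k+2}=q^{k+1}>0$, the ratios $S_k/S_{k+1}$ increase, so $S_{j+1}/S_{i+1}\ge(S_1/S_2)^{i-j}$.
\begin{itemize}
\item At $x=S_1/S_2$ every summand is therefore nonnegative, and the $(2,0)$ summand equals $2q^4S_1^3/S_2^5$.
\item At $x=1$ every summand is negative, and the $(1,0)$ summand equals $-q^3/S_1$.
\item Summing absolute values of the differentiated summands bounds the derivative of the left side by $O(q)$ on $[0,1]$. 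A mean-value step of length $q^2/128$ then keeps the sum negative at $x_+$.
\end{itemize}
This is precisely \eqref{eq:mp-endpoint-derivatives} in Lemma~\ref{lem:mp-function-unique}. The paper's proof of the present lemma just cites it and transfers. With it in hand, your Step~2 and the uniqueness argument finish the proof.
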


\begin{proof}
Write \(\nu=\nu_{d,n}\), \(\mu=\mu_{d/n}\).
The bound \eqref{eq:mp-endpoint-derivatives} in
Lemma~\ref{lem:mp-function-unique} gives the MP endpoint derivative signs,
while \eqref{eq:finite-ratio-derivative-transfer} in
Lemma~\ref{lem:finite-objective-comparison}, with \(j=1\), gives
\[
 \sup_{0\le x\le1}
 \left|\left(\frac{B_\nu^2}{C_\nu}\right)'(x)
       -\left(\frac{B_\mu^2}{C_\mu}\right)'(x)\right|
 \le d^{-1+o(1)}.
\]
Since \(d/n=\Theta(1)\) under \eqref{eq:common-growing-regime},
the two derivative signs are preserved:
\[
\begin{aligned}
 \left(\frac{B_\nu^2}{C_\nu}\right)'\!
       \left(\frac{1+d/n}{1+d/n+(d/n)^2}\right)
 &\ge c(d/n)^4-d^{-1+o(1)}>0,\\
 \left(\frac{B_\nu^2}{C_\nu}\right)'\!
       \left(1-\frac{(d/n)^2}{128}\right)
 &\le-c(d/n)^3+d^{-1+o(1)}<0.
\end{aligned}
\]
Continuity and Lemma~\ref{lem:finite-function-unique} place the unique
maximizer between these arguments,
proving \eqref{eq:finite-optimum-location}.
\end{proof}

\medskip\noindent\textbf{(iii) Uniform landscape estimates.}
\begin{lemma}
\label{lem:finite-scalar-landscape}
Under \eqref{eq:common-growing-regime}, write \(\nu=\nu_{d,n}\).
For all sufficiently large \(d\), there are constants \(c,C>0\),
independent of \(d,n,T\), such that
\begin{equation}
\begin{aligned}
 -\left(\frac{B_\nu^2}{C_\nu}\right)''(-\widetilde\theta_2)
 &\ge C^{-1},
 &&|\widetilde\theta_2-\widetilde\theta_2^\star|\le c,\\
 \left|\left(\frac{B_\nu^2}{C_\nu}\right)'(-\widetilde\theta_2)\right|
 &\ge C^{-1},
 &&|\widetilde\theta_2-\widetilde\theta_2^\star|\ge c,\\
 \frac{B_\nu(-\widetilde\theta_2^\star)^2}
      {C_\nu(-\widetilde\theta_2^\star)}
 -\frac{B_\nu(-\widetilde\theta_2)^2}{C_\nu(-\widetilde\theta_2)}
 &\ge C^{-1},
 &&|\widetilde\theta_2-\widetilde\theta_2^\star|\ge c.
\end{aligned}
 \label{eq:finite-scalar-uniform-nondegeneracy}
\end{equation}
These estimates hold on
\[
 \frac1{150}\le-\widetilde\theta_2\le\frac{10}{3},\qquad
 \frac dn(-\widetilde\theta_2)^2\le\frac9{10}.
\]
On the same region,
\begin{equation}
\begin{aligned}
 C^{-1}|\widetilde\theta_2-\widetilde\theta_2^\star|^2
 &\le
 \frac{B_\nu(-\widetilde\theta_2^\star)^2}{C_\nu(-\widetilde\theta_2^\star)}
 -\frac{B_\nu(-\widetilde\theta_2)^2}{C_\nu(-\widetilde\theta_2)}
 \le C|\widetilde\theta_2-\widetilde\theta_2^\star|^2.
\end{aligned}
 \label{eq:finite-scalar-quadratic-gap}
\end{equation}
\begin{equation}
\begin{aligned}
 &(\widetilde\theta_2^\star-\widetilde\theta_2)
       \nabla_{\widetilde\theta_2}
       \frac{B_\nu(-\widetilde\theta_2)^2}{C_\nu(-\widetilde\theta_2)}\\
 &\quad\ge C^{-1}\left[
 \frac{B_\nu(-\widetilde\theta_2^\star)^2}{C_\nu(-\widetilde\theta_2^\star)}
 -\frac{B_\nu(-\widetilde\theta_2)^2}{C_\nu(-\widetilde\theta_2)}\right].
\end{aligned}
 \label{eq:stage2-scalar-direction}
\end{equation}
\end{lemma}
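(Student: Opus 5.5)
We will work with the scalar function \(F_\nu(x):=B_\nu(x)^2/C_\nu(x)\) on the compact interval
\[
 K:=\Bigl\{x:\ \tfrac1{150}\le x\le\tfrac{10}{3},\ \tfrac dn x^2\le\tfrac9{10}\Bigr\}.
\]
The plan is to first establish all three estimates for the MP counterpart \(F_\mu=B_\mu^2/C_\mu\), using the explicit coefficients in \eqref{eq:function-BC}, and then transfer them to \(\nu=\nu_{d,n}\). For the transfer we use the derivative comparison \eqref{eq:finite-ratio-derivative-transfer} of Lemma~\ref{lem:finite-objective-comparison}, applied with \(j=0,1,2\), which gives \(\sup_K|F_\nu^{(j)}-F_\mu^{(j)}|\le d^{-1+o(1)}\). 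That comparison is stated on \([0,1]\); we expect the same coefficient argument to extend to \(K\) because the constraint \((d/n)x^2\le 9/10\) makes the series in \eqref{eq:function-BC} geometrically convergent uniformly in \(T\). Since every target bound has an order-one constant, a \(d^{-1+o(1)}\) perturbation is harmless once the MP bounds are uniform in \(d,n,T\).

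For the MP function, let \(T\to\infty\) in \eqref{eq:function-BC}, with \(\rho:=d/n\in(0,1/10]\) treated as a parameter. This gives limiting analytic functions \(B_\infty,C_\infty\) on \(\{\rho x^2<1\}\). The finite-\(T\) truncations differ from them in \(C^2(K)\) by \(O((9/10)^{T/2})=o(1)\). On \(K\) we have \(C_\infty\ge1\), so \(F_\infty\) is smooth. Lemma~\ref{lem:mp-function-unique}, or the same Descartes argument as in Lemma~\ref{lem:finite-function-unique}, gives a unique critical point \(x^\star(\rho)\) in the positive range, and Lemma~\ref{lem:finite-function-location} places it in \((1/2,1)\).

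\textbf{The main obstacle is uniformity in \(\rho\).} We need \(-F_\infty''(x^\star)\ge C^{-1}\) with \(C\) independent of \(\rho\), whereas the endpoint derivative bounds in Lemma~\ref{lem:finite-function-location} degenerate like \(\rho^3,\rho^4\). Note, however, that Assumption~\ref{ass:scaling} has \(n=\Theta(d)\), so \(\rho\) ranges over a compact subinterval \([\rho_0,1/10]\) with \(\rho_0>0\). We therefore plan a compactness argument. The map \((\rho,x)\mapsto F_\infty''(x)\) is continuous. At each \(\rho\), nondegeneracy of the maximizer gives \(F_\infty''(x^\star(\rho))<0\); this follows from the Descartes count, since a root of multiplicity two would exhaust the count. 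Hence \(x^\star\) depends continuously on \(\rho\) by the implicit function theorem. Taking the minimum over the compact \(\rho\)-range, there are \(c,C\) such that \(-F''\ge C^{-1}\) on \(|x-x^\star|\le c\).

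Outside that neighborhood, \(|F'|\) is continuous and nonvanishing on the compact set \(\{(\rho,x):x\in K,\ |x-x^\star(\rho)|\ge c\}\), so it is bounded below. The gap \(F(x^\star)-F(x)\) is bounded below there for the same reason, using uniqueness of the global maximum from Lemma~\ref{lem:finite-function-unique}. All constants then pass to \(\nu\) by the \(d^{-1+o(1)}\) comparison. The location of \(\widetilde\theta_2^\star\) moves by at most \(d^{-1+o(1)}\), by the inverse function theorem applied to \(F'\).

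Finally, the quadratic sandwich \eqref{eq:finite-scalar-quadratic-gap} follows on \(|x-x^\star|\le c\) by Taylor's theorem with \(C^{-1}\le -F''\le C\), where the upper bound comes from boundedness of \(F''\) on \(K\). On \(|x-x^\star|\ge c\) it follows from the gap lower bound together with boundedness of \(F\) and of \(|x-x^\star|\) on \(K\). For the direction inequality \eqref{eq:stage2-scalar-direction}, write \(x=-\widetilde\theta_2\), and note that \(\nabla_{\widetilde\theta_2}\) introduces a sign, so the left side equals \((x-x^\star)F'(x)\) up to the sign convention.
\begin{itemize}
\item Near \(x^\star\), integrating \(F'(x)=\int_{x^\star}^x F''\) gives \((x-x^\star)F'(x)\ge C^{-1}|x-x^\star|^2\), which dominates the gap by the upper half of the sandwich.
\item Away from \(x^\star\), uniqueness of the critical point gives \(\operatorname{sign}F'(x)=\operatorname{sign}(x^\star-x)\). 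The bounds \(|F'|\ge C^{-1}\) and \(|x-x^\star|\ge c\) then give a lower bound of order one, while the gap is bounded above.
\end{itemize}
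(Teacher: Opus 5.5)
Your proposal follows the paper's route: the uniform MP landscape is obtained from the infinite-horizon function via nondegeneracy of its unique maximizer and compactness of the $d/n$-range (this is the paper's Lemma~\ref{lem:mp-scalar-landscape}), then transferred to $\nu_{d,n}$ through Lemma~\ref{lem:finite-objective-comparison} (which is already stated on the full region, not only on $[0,1]$), with Taylor's formula giving the sandwich and direction bounds. The only slip is a sign in your first bullet: since $F''<0$ near $x^\star$, the correct statement is $(x^\star-x)F'(x)\ge C^{-1}|x-x^\star|^2$, which is exactly what the left side of \eqref{eq:stage2-scalar-direction} equals and is consistent with your second bullet.
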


\begin{proof}
Write \(\nu=\nu_{d,n}\), \(\mu=\mu_{d/n}\).
Lemma~\ref{lem:finite-function-unique} gives
\[
 \frac{B_\nu(-\widetilde\theta_2^\star)^2}
      {C_\nu(-\widetilde\theta_2^\star)}
 \ge\frac{B_\nu(-\widetilde\theta_2^{\mathrm{MP}*})^2}
          {C_\nu(-\widetilde\theta_2^{\mathrm{MP}*})}.
\]
Apply the last inequality of \eqref{eq:mp-scalar-uniform-landscape} in
Lemma~\ref{lem:mp-scalar-landscape}, followed by
Lemma~\ref{lem:finite-objective-comparison} with \(j=0\), to obtain
\[
\begin{aligned}
 |\widetilde\theta_2^\star-\widetilde\theta_2^{\mathrm{MP}*}|^2
 &\le C\left[
 \frac{B_\mu(-\widetilde\theta_2^{\mathrm{MP}*})^2}
      {C_\mu(-\widetilde\theta_2^{\mathrm{MP}*})}
 -\frac{B_\mu(-\widetilde\theta_2^\star)^2}
       {C_\mu(-\widetilde\theta_2^\star)}\right]\\
 &\le2C\sup_{0\le x\le1}
       \left|\frac{B_\nu(x)^2}{C_\nu(x)}
             -\frac{B_\mu(x)^2}{C_\mu(x)}\right|
 \le d^{-1+o(1)}.
\end{aligned}
\]
Thus \(|\widetilde\theta_2^\star-\widetilde\theta_2^{\mathrm{MP}*}|<c/2\)
for all sufficiently large \(d\), where \(c\) is from
Lemma~\ref{lem:mp-scalar-landscape}. The triangle inequality gives
\[
\begin{aligned}
 |\widetilde\theta_2-\widetilde\theta_2^\star|\le c
 &\ \Longrightarrow\
 |\widetilde\theta_2-\widetilde\theta_2^{\mathrm{MP}*}|<2c,\\
 |\widetilde\theta_2-\widetilde\theta_2^\star|\ge c
 &\ \Longrightarrow\
 |\widetilde\theta_2-\widetilde\theta_2^{\mathrm{MP}*}|>c/2.
\end{aligned}
\]
Apply the first two inequalities of \eqref{eq:mp-scalar-uniform-landscape}
in Lemma~\ref{lem:mp-scalar-landscape} and
Lemma~\ref{lem:finite-objective-comparison} with \(j=2,1\), respectively.
For the value gap, use the last inequality of
\eqref{eq:mp-scalar-uniform-landscape}, the displayed maximality inequality,
and \eqref{eq:finite-ratio-derivative-transfer} with \(j=0\).
Taking \(d\) sufficiently large and enlarging \(C\), we obtain
\[
\begin{aligned}
 -\left(\frac{B_\nu^2}{C_\nu}\right)''(-\widetilde\theta_2)
 &\ge-\left(\frac{B_\mu^2}{C_\mu}\right)''(-\widetilde\theta_2)
       -d^{-1+o(1)}\ge C^{-1},
 &&|\widetilde\theta_2-\widetilde\theta_2^\star|\le c,\\
 \left|\left(\frac{B_\nu^2}{C_\nu}\right)'(-\widetilde\theta_2)\right|
 &\ge\left|\left(\frac{B_\mu^2}{C_\mu}\right)'(-\widetilde\theta_2)\right|
       -d^{-1+o(1)}\ge C^{-1},
 &&|\widetilde\theta_2-\widetilde\theta_2^\star|\ge c,\\
 \frac{B_\nu(-\widetilde\theta_2^\star)^2}
      {C_\nu(-\widetilde\theta_2^\star)}
 -\frac{B_\nu(-\widetilde\theta_2)^2}{C_\nu(-\widetilde\theta_2)}
 &\ge\frac{B_\mu(-\widetilde\theta_2^{\mathrm{MP}*})^2}
            {C_\mu(-\widetilde\theta_2^{\mathrm{MP}*})}
       -\frac{B_\mu(-\widetilde\theta_2)^2}{C_\mu(-\widetilde\theta_2)}
       -2d^{-1+o(1)}\ge C^{-1},
 &&|\widetilde\theta_2-\widetilde\theta_2^\star|\ge c.
\end{aligned}
\]
For \(|\widetilde\theta_2-\widetilde\theta_2^\star|\le c\), Taylor's formula gives
\begin{align*}
 &\frac{B_\nu(-\widetilde\theta_2^\star)^2}{C_\nu(-\widetilde\theta_2^\star)}
   -\frac{B_\nu(-\widetilde\theta_2)^2}{C_\nu(-\widetilde\theta_2)}\\
 &\quad=|\widetilde\theta_2-\widetilde\theta_2^\star|^2
    \int_0^1(1-s)
       \left[-\left(\frac{B_\nu^2}{C_\nu}\right)''
       \!\left(-\widetilde\theta_2^\star+
                 s(\widetilde\theta_2^\star-\widetilde\theta_2)\right)\right]
       \,\mathrm ds,\\
 &(\widetilde\theta_2^\star-\widetilde\theta_2)
       \nabla_{\widetilde\theta_2}
       \frac{B_\nu(-\widetilde\theta_2)^2}{C_\nu(-\widetilde\theta_2)}\\
 &\quad=|\widetilde\theta_2-\widetilde\theta_2^\star|^2
    \int_0^1
       \left[-\left(\frac{B_\nu^2}{C_\nu}\right)''
       \!\left(-\widetilde\theta_2^\star+
                 s(\widetilde\theta_2^\star-\widetilde\theta_2)\right)\right]
       \,\mathrm ds .
\end{align*}
Here \(C^{-1}\le-(B_\nu^2/C_\nu)''\le C\) on the segment, by
\eqref{eq:finite-scalar-uniform-nondegeneracy} and
Lemma~\ref{lem:finite-BC-comparison}.
For \(c\le|\widetilde\theta_2-\widetilde\theta_2^\star|<4\),
\begin{align*}
 C^{-1}
 &\le
 \frac{B_\nu(-\widetilde\theta_2^\star)^2}{C_\nu(-\widetilde\theta_2^\star)}
 -\frac{B_\nu(-\widetilde\theta_2)^2}{C_\nu(-\widetilde\theta_2)}
 \le C,\\
 (\widetilde\theta_2^\star-\widetilde\theta_2)
       \nabla_{\widetilde\theta_2}
       \frac{B_\nu(-\widetilde\theta_2)^2}{C_\nu(-\widetilde\theta_2)}
 &=|\widetilde\theta_2-\widetilde\theta_2^\star|
       \left|\left(\frac{B_\nu^2}{C_\nu}\right)'(-\widetilde\theta_2)\right|
 \ge cC^{-1}.
\end{align*}
(The derivative points toward the unique maximizer by
Lemma~\ref{lem:finite-function-unique}.)
Enlarging \(C\) proves \eqref{eq:finite-scalar-quadratic-gap}
and \eqref{eq:stage2-scalar-direction}.
\end{proof}

\begin{lemma}\label{lem:warm-gap}
Under \eqref{eq:common-growing-regime}, write \(\nu=\nu_{d,n}\).
For all sufficiently large \(d\),
\begin{equation}
 \sup_{0.4\le-\widetilde\theta_2\le0.9}
 \left[
 \sum_{t=1}^T\|R_{t,0}^\nu\|_{d,n}^2
       -\frac{B_\nu(-\widetilde\theta_2)^2}{C_\nu(-\widetilde\theta_2)}
 \right]
 <\frac{149}{100}\frac dn.
 \label{eq:warm-gap}
\end{equation}
\end{lemma}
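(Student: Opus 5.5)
The plan is to go to the MP limit, where everything is explicit, do the estimate there with room to spare, and transfer back to $\nu_{d,n}$ at the end.

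\textbf{Step 1: rewrite the bracket.} By \eqref{eq:stage2-loss-after-projection}, the bracket in \eqref{eq:warm-gap} is
\[
\min_{\widetilde\theta_1}\sum_{t=1}^T\bigl\|R^\nu_{t,\widetilde\theta_1\widetilde\theta_2^{t-1}}\bigr\|_{d,n}^2 .
\]
This is the smallest Stage-2 value of $2T\mathcal L/d$ with $\widetilde\theta_2$ held fixed. Writing $x=-\widetilde\theta_2$, it is therefore enough to show
\[
\sup_{0.4\le x\le0.9}\ \min_{a\in\R}\ \sum_{t=1}^T\|R^\nu_{t,a(-x)^{t-1}}\|_{d,n}^2<\tfrac{149}{100}\tfrac dn .
\]
For the upper bound I am free to pick any convenient $a$. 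I will take $a=1$.

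\textbf{Step 2: transfer to MP.} Apply Lemma~\ref{lem:projection-transfer} with $A=1$. It replaces each $\|R^\nu_{t,\ell}\|_{d,n}^2$ by $\|R^\mu_{t,\ell}\|_{\mathrm{MP}}^2$, and by \eqref{eq:growing-transfer-budget} the total error over $t\le T$ is $d^{-1+o(1)}$. Since $d/n=\Theta(1)$, this error is negligible.

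\textbf{Step 3: the MP computation.} The minimal norm with prescribed constant term $-1$ and leading coefficient $\ell$ has a closed form through the orthogonal-polynomial formulas already used in Lemma~\ref{lem:projection} and Lemma~\ref{lem:mp-function-unique}. Concretely, \eqref{eq:function-BC} gives
\[
\sum_t\|R^\mu_{t,a(-x)^{t-1}}\|^2_{\mathrm{MP}}=\sum_t\|R^\mu_{t,0}\|^2-2aB_\mu(x)+a^2C_\mu(x),
\]
with the explicit value $\|R^\mu_{t,0}\|^2=(d/n)^{t}\ldots$ coming from Lemma~\ref{lem:projection}.

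Because $B_\mu$ and $C_\mu$ are explicit power series in $r=d/n\le 1/10$, I expand to first order in $r$, treating $x$ as a parameter in $[0.4,0.9]$.
\begin{itemize}
\item[(a)] The $t=1$ term contributes $r/(1+r)$ after optimizing, or $r$ at $a=1$.
\item[(b)] The $t=2$ term contributes roughly $\frac r{1+r}(1-x)^2+r^2x^2$.
\item[(c)] For $t\ge3$, each term carries a factor $r\,(1-x)^{2}$-type plus $O(r^2)$. Summed over $t$, this is controlled by a geometric series in $(1-x)$ and in $r x^2$.
\end{itemize}
Adding these up, I expect the total to be at most $r\,[1+(1-x)^2/(1-(1-x)^2)]+O(r^2)$. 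At the worst endpoint $x=0.4$, the first-order coefficient is $1+0.36/0.64\approx1.5625$. That exceeds $1.49$, so taking $a=1$ naively is too lossy there. Near $x=0.4$ I will instead optimize $a$, i.e.\ use $a=B_\mu/C_\mu$, and exploit the cross term. The bracket is then exactly $\sum\|R_{t,0}\|^2-B_\mu^2/C_\mu$, and I will evaluate it from \eqref{eq:function-BC} as a rational function of $(r,x)$. I will bound it by $1.48\,r$ uniformly, with the $O(r^2)$ remainder absorbed using $r\le 0.1$. Monotonicity in $x$ should reduce the check to the endpoint $x=0.4$.

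\textbf{Main obstacle.} The hard part is this explicit uniform numeric check of $\sum_t\|R^\mu_{t,0}\|^2-B_\mu^2/C_\mu<1.49\,r$ over $x\in[0.4,0.9]$ and $r\le0.1$, with $T$ large but finite; the infinite-$T$ tails only help. If the first-order analysis near $x=0.4$ leaves too little margin, I would fall back on a direct numerical-interval argument using the closed-form series \eqref{eq:function-BC} truncated at a fixed $T_0$, bounding the remaining tail geometrically.
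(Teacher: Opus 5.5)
Your Steps 1 and 2 are sound. By \eqref{eq:stage2-loss-after-projection} the bracket equals $\min_{a}\sum_{t}\|R^\nu_{t,a(-x)^{t-1}}\|_{d,n}^2$, so any fixed $a$ gives an upper bound. Lemma~\ref{lem:projection-transfer} with $A=1$ then moves each term to MP at total cost $d^{-1+o(1)}$, which is negligible because $d/n=\Theta(1)$.

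The gap is Step 3, which carries the entire quantitative content of the lemma. You never prove the inequality; you only announce that you will bound the optimized bracket by $1.48\,r$ and that monotonicity ``should'' reduce to $x=0.4$. The one estimate you do make is wrong. By Lemma~\ref{lem:projection}, and noting $R^\mu_{t,0}=R^*_{t-1}$ so that $\|R^\mu_{t,0}\|^2=r^{t-1}/S_{t-1}$, the $t$-th MP summand is
\[
 \frac{r^{t-1}}{S_{t-1}}\bigl(1-a x^{t-1}\bigr)^2+r^{t}a^2x^{2(t-1)},
 \qquad r=\frac dn,\quad S_k=\sum_{j=0}^k r^j .
\]
So for $t\ge3$ the summand is $O(r^{2})$, not of order $r(1-x)^2$. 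The geometric series in $(1-x)^2$ in your item (c) is a GD-type heuristic that does not apply here, because the lower-degree coefficients are already optimized. At $a=1$ the order-$r$ part is $r+\frac{r}{1+r}(1-x)^2\le1.36\,r$, not $1.5625\,r$, and $a=1$ in fact suffices. Using only $r\le1/10$:
\begin{itemize}
\item the $t=1$ term equals $r$;
\item the $t=2$ term is convex in $x$, so it is at most $\max\{0.36r/(1+r)+0.16r^2,\;0.01r/(1+r)+0.81r^2\}\le0.376\,r$;
\item the $t=3$ term is at most $0.7056r^2+0.6561r^3\le0.0772\,r$;
\item the terms with $t\ge4$ sum to at most $(1+r)r^3/(1-r)\le0.0123\,r$.
\end{itemize}
Hence the bracket is at most $1.466\,r+d^{-1+o(1)}<\frac{149}{100}r$ uniformly on $0.4\le x\le0.9$, and no monotonicity is needed.

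For comparison, the paper takes the route you were falling back on. It transfers $B_\nu^2/C_\nu$ to $B_\mu^2/C_\mu$ via Lemma~\ref{lem:finite-objective-comparison}. It then uses the pairing identity behind Lemma~\ref{lem:mp-function-unique}: all summands are nonnegative for $x\le S_1/S_2$, and $S_1/S_2>0.9$, so $B_\mu^2/C_\mu$ is increasing on $[0.4,0.9]$. This reduces the problem to $x=0.4$, where it plugs in $\widetilde\theta_1=1/S_1$ to obtain $\frac{1618}{1125}r$ in \eqref{eq:mp-conditional-uniform-bound}. If you keep the ``optimal $a$ plus monotonicity'' plan, you must supply that monotonicity argument yourself. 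The corrected fixed-$a$ bound above is shorter and needs only Lemma~\ref{lem:projection-transfer}.
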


\begin{proof}
Write \(\mu=\mu_{d/n}\) and
\(S_k=\sum_{r=0}^k(d/n)^r\). Lemmas~\ref{lem:stage2-loss-decomposition},
\ref{lem:projection-transfer}, and \ref{lem:finite-objective-comparison} give
\begin{align*}
 \sum_{t=1}^T\|R_{t,0}^\nu\|_{d,n}^2
       -\frac{B_\nu(-\widetilde\theta_2)^2}{C_\nu(-\widetilde\theta_2)}
 &=\sum_{k=0}^{T-1}\frac{(d/n)^k}{S_k}
       -\frac{B_\mu(-\widetilde\theta_2)^2}{C_\mu(-\widetilde\theta_2)}
       +O(d^{-1+o(1)})\\
 &\le\sum_{k=0}^{T-1}\frac{(d/n)^k}{S_k}
       -\frac{B_\mu(0.4)^2}{C_\mu(0.4)}
       +O(d^{-1+o(1)})\\
 &\le\frac{36}{25}\frac dn+O(d^{-1+o(1)})
 <\frac{149}{100}\frac dn.
\end{align*}
(The first inequality uses Lemma~\ref{lem:mp-function-unique}:
\(B_\mu^2/C_\mu\) increases on \([0.4,0.9]\), since \(S_1/S_2>0.9\).
The second is \eqref{eq:mp-conditional-uniform-bound} in
Lemma~\ref{lem:mp-uniform-bounds}.)
\end{proof}

\subsection{Coefficient and coordinate estimates}
Lemma~\ref{lem:auxiliary-parameter-bounds}(i)--(ii) gives
\[
 \frac1{100}\le\widetilde\theta_1\le2,\qquad
 \frac1{150}\le-\widetilde\theta_2\le\frac{10}{3},\qquad
 \frac{\mathcal L}{d}\le\frac CT.
\]
These bounds hold along the initial interval \(0\le\tau\le d^{-1/2}\)
and at every low-loss point in part (ii) of that lemma.
Let
\begin{equation}
 \Lambda_T=\exp\{CT^2\log(e+T)\}(n/d)^{CT^2}=d^{o(1)},
 \label{eq:stage2-coefficient-budget}
\end{equation}
where \(C\) is a sufficiently large fixed constant.

\begin{lemma}
\label{lem:stage2-coefficient-coordinate-bounds}
At these points, for all sufficiently large \(d\),
\[
 \sum_{t=1}^T\|p_t\|_{\mathrm{coeff}}^2
 +\|\widetilde{\bTheta}_4\|_F^2\le\Lambda_T,\qquad
 \max_{1\le t<T}
 \left\|\bigl([\lambda^i]p_j\bigr)_{1\le i,j\le t}^{-1}\right\|_2
 \le\Lambda_T.
\]
\end{lemma}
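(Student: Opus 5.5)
We bound the coefficients of the rollout polynomials using the loss, invert the triangular coefficient matrix using the leading coefficients, and then read off $\widetilde{\bTheta}_4$ from the recurrence \eqref{eq:global-recurrence}. At the points under consideration we have $\frac1{100}\le\widetilde\theta_1\le2$, $\frac1{150}\le-\widetilde\theta_2\le\frac{10}{3}$ and $\mathcal L/d\le C/T$.

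\emph{Step 1: coefficient bound on $p_t$.} Each $p_t$ has degree at most $T$, so the finite-Wishart moment bound \eqref{eq:finite-wishart-moment-lower} (equivalently, Lemma~\ref{lem:finite-transfer} combined with Lemma~\ref{lem:mp-coefficient-norm}) gives
\[
\|p_t-1\|_{\mathrm{coeff}}^2\le 2(T+1)16^T(n/d)^T\|p_t-1\|_{d,n}^2 .
\]
Summing over $t$ and using \eqref{eq:finite-loss},
\[
\sum_{t=1}^T\|p_t\|_{\mathrm{coeff}}^2\le 2T+4T(T+1)16^T(n/d)^T\cdot\frac{2T\mathcal L}{d}\le\exp\{CT\log(e+T)\}(n/d)^{T},
\]
which is well within $\Lambda_T$.

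\emph{Step 2: inverse of the triangular matrix.} The matrix $\Pb_t:=([\lambda^i]p_j)_{1\le i,j\le t}$ is upper triangular, and by Lemma~\ref{lem:global-recurrence} its diagonal entries are $\widetilde\theta_1\widetilde\theta_2^{\,j-1}$. These are bounded below in absolute value by $(1/100)(1/150)^{T}$. Write $\Pb_t=\Db(\Ib+\Nb)$ with $\Db$ diagonal and $\Nb$ strictly upper triangular, so that $\Nb^t=0$ and
\[
\Pb_t^{-1}=\Bigl(\sum_{k=0}^{t-1}(-\Nb)^k\Bigr)\Db^{-1}.
\]
The entries of $\Nb$ are bounded by $\|\Db^{-1}\|_2$ times the coefficient bound from Step 1, i.e.\ by $150^T\exp\{CT\log(e+T)\}(n/d)^T$. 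Raising this to powers up to $T$ gives
\[
\|\Pb_t^{-1}\|_2\le T\bigl(150^T e^{CT\log(e+T)}(n/d)^T\bigr)^{T}\le\exp\{CT^2\log(e+T)\}(n/d)^{CT^2}=\Lambda_T .
\]
The $T^2$ in the exponent of $\Lambda_T$ is exactly what this step requires.

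\emph{Step 3: bound on $\widetilde{\bTheta}_4$.} From \eqref{eq:global-recurrence}, the $t$-th column of $\widetilde{\bTheta}_4$ solves
\[
\Pb_t\bigl((\widetilde{\bTheta}_4)_{\ell t}\bigr)_{\ell=1}^{t}=\operatorname{coeff}_{[t]}\bigl(p_{t+1}-(1+\widetilde\theta_2\lambda)p_t\bigr).
\]
The degree-$(t+1)$ terms cancel by Lemma~\ref{lem:global-recurrence}, so the right-hand side lies in $\operatorname{span}\{\lambda,\dots,\lambda^t\}$. Its norm is at most $(1+|\widetilde\theta_2|)\bigl(\|p_{t+1}\|_{\mathrm{coeff}}+\|p_t\|_{\mathrm{coeff}}\bigr)\le C\exp\{CT\log(e+T)\}(n/d)^T$. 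Applying Step 2 column by column and summing over $t<T$ gives $\|\widetilde{\bTheta}_4\|_F^2\le T\Lambda_T^2\cdot e^{CT\log(e+T)}(n/d)^{2T}$, which is at most $\Lambda_T$ once $C$ is enlarged.

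The main obstacle is Step 2. A naive determinant/adjugate bound on the inverse would lose a factor $d^{\Omega(1)}$, so the argument has to rely on the Neumann-series structure of the triangular matrix, which gives only an exponent of order $T^2$. Since $T^2=\Theta((\log d)^{1/2})$, the resulting bound is still $d^{o(1)}$ under \eqref{eq:common-growing-regime}. A second point to check is that the lower bound on $|\widetilde\theta_2|$ is uniform, i.e.\ that it holds both on the initial interval and on the sublevel set in Lemma~\ref{lem:auxiliary-parameter-bounds}; both parts (i) and (ii) of that lemma provide it.
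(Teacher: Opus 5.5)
Your proof is correct and follows the paper's argument essentially step for step: you bound the coefficients of the $p_t$ from the loss via Lemmas~\ref{lem:finite-transfer} and~\ref{lem:mp-coefficient-norm}, factor the triangular coefficient matrix as $\Mb_t=\Db_t(\Ib+\Nb_t)$ and invert it with a terminating Neumann series, and then solve the recurrence column by column for $\widetilde{\bTheta}_4$. Your closing remark that an adjugate bound would lose $d^{\Omega(1)}$ is not accurate, since Hadamard's inequality on the minors together with $|\det\Mb_t|\ge 100^{-T}150^{-T^2}$ gives the same $\exp\{O(T^2\log(e+T))\}(n/d)^{O(T^2)}=d^{o(1)}$ order, but that side comment does not affect your argument.
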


\begin{proof}
By Lemmas~\ref{lem:finite-transfer}
and~\ref{lem:mp-coefficient-norm} and the loss identity,
\[
\begin{aligned}
 \sum_{t=1}^T\|p_t-1\|_{\mathrm{MP}}^2
 &\le2\sum_{t=1}^T\|p_t-1\|_{d,n}^2
   =\frac{4T\mathcal L}{d}\le C,\\
 \sum_{t=1}^T\|p_t\|_{\mathrm{coeff}}^2
 &\le (T+1)16^T(n/d)^T\sum_{t=1}^T\|p_t\|_{\mathrm{MP}}^2\\
 &\le 2(T+1)16^T(n/d)^T
        \left(T+\frac{4T\mathcal L}{d}\right)
 \le\exp\{CT\log(e+T)\}(n/d)^{CT}.
\end{aligned}
\]
For \(1\le t<T\), define
\[
 \Mb_t=
 \begin{pmatrix}
 \vert&\vert&&\vert\\
 \operatorname{coeff}_{[t]}(p_1)&
 \operatorname{coeff}_{[t]}(p_2)&\cdots&
 \operatorname{coeff}_{[t]}(p_t)\\
 \vert&\vert&&\vert
 \end{pmatrix}
 =\bigl([\lambda^i]p_j\bigr)_{1\le i,j\le t}.
\]
The rows correspond to \(\lambda,\lambda^2,\ldots,\lambda^t\).
By Lemma~\ref{lem:global-recurrence}, \(\Mb_t\) is upper triangular
with diagonal \(\widetilde\theta_1\widetilde\theta_2^{j-1}\).
Let
\[
 \Db_t=\diag(\widetilde\theta_1,\widetilde\theta_1\widetilde\theta_2,
                    \ldots,\widetilde\theta_1\widetilde\theta_2^{t-1}),
 \qquad \Nb_t=\Db_t^{-1}\Mb_t-\Ib.
\]
Then \(\Nb_t\) is strictly upper triangular, and hence
\[
 \Nb_t^t=0,\qquad \Mb_t=\Db_t(\Ib+\Nb_t),\qquad
 (\Ib+\Nb_t)\sum_{j=0}^{t-1}(-\Nb_t)^j
     =\Ib-(-\Nb_t)^t=\Ib.
\]
The preceding coefficient estimate and the scalar bounds give
\[
 \|\Mb_t\|_2+\|\Db_t^{-1}\|_2
 \le\exp\{CT\log(e+T)\}(n/d)^{CT}.
\]
Therefore
\begin{equation}
\begin{aligned}
 \Mb_t^{-1}
 &=(\Ib+\Nb_t)^{-1}\Db_t^{-1}
   =\left(\Ib-\Nb_t+\cdots+(-\Nb_t)^{t-1}\right)\Db_t^{-1},\\
 \|\Mb_t^{-1}\|_2
 &\le\|\Db_t^{-1}\|_2\sum_{j=0}^{t-1}\|\Nb_t\|_2^j\\
 &\le t\|\Db_t^{-1}\|_2
        (1+\|\Db_t^{-1}\|_2\|\Mb_t\|_2)^{t-1}\\
 &\le\exp\{CT^2\log(e+T)\}(n/d)^{CT^2}.
\end{aligned}
 \label{eq:quantitative-history-inverse}
\end{equation}
Applying this inverse to the recurrence gives
\[
\begin{aligned}
 (\widetilde{\bTheta}_4)_{1:t,t}
 &=\Mb_t^{-1}\operatorname{coeff}_{[t]}
     [p_{t+1}-(1+\widetilde\theta_2\lambda)p_t],\\
 \|\widetilde{\bTheta}_4\|_F^2
 &\le C(1+\widetilde\theta_2^2)
       \max_{t<T}\|\Mb_t^{-1}\|_2^2
       \sum_{t=1}^T\|p_t\|_{\mathrm{coeff}}^2
 \le\Lambda_T.
\end{aligned}
\]

\end{proof}

\begin{lemma}
\label{lem:stage2-inverse-coordinate}
At the same points, for all sufficiently large \(d\),
\begin{equation}
 \left\|
  \frac{\partial\operatorname{vec}\widetilde{\bTheta}_4}
       {\partial(\widetilde\theta_1,\widetilde\theta_2,
                   \operatorname{vec}\widetilde\theta_5)}
       \right\|_{\mathrm{op}}^2\le\Lambda_T.
 \label{eq:stage2-inverse-coordinate-bound}
\end{equation}
Only the active entries occur in this derivative.
\end{lemma}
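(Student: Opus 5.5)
Since $\widetilde{\bTheta}_4$ is only implicitly a function of $(\widetilde\theta_1,\widetilde\theta_2,\widetilde\theta_5)$, the plan is to write it explicitly as a composition of smooth maps and bound the derivative of each map using Lemma~\ref{lem:stage2-coefficient-coordinate-bounds}. Starting from $(\widetilde\theta_1,\widetilde\theta_2,\widetilde\theta_5)$, I first build the coefficient vectors of $p_1,\ldots,p_T$ via \eqref{eq:theta5}. The map
\[
(\widetilde\theta_1,\widetilde\theta_2,\widetilde\theta_5)\mapsto(\operatorname{coeff}(p_t))_{t\le T}
\]
is polynomial. Its only nonlinear part is the leading coefficient $\widetilde\theta_1\widetilde\theta_2^{t}$, whose derivatives are bounded by $C T\,\max(1,|\widetilde\theta_1|)(10/3)^{T}=d^{o(1)}$ on the region where $1/100\le\widetilde\theta_1\le2$ and $-\widetilde\theta_2\le 10/3$. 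The dependence on $\widetilde\theta_5$ is the identity on the active entries.

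Next I use the column identity from the proof of Lemma~\ref{lem:stage2-coefficient-coordinate-bounds},
\[
(\widetilde{\bTheta}_4)_{1:t,t}=\Mb_t^{-1}\operatorname{coeff}_{[t]}\bigl[p_{t+1}-(1+\widetilde\theta_2\lambda)p_t\bigr],
\]
where $\Mb_t=([\lambda^i]p_j)_{1\le i,j\le t}$. I differentiate this identity with the product rule and the formula $\mathrm d(\Mb_t^{-1})=-\Mb_t^{-1}(\mathrm d\Mb_t)\Mb_t^{-1}$. This gives
\begin{align*}
\mathrm d(\widetilde{\bTheta}_4)_{1:t,t}
={}&-\Mb_t^{-1}(\mathrm d\Mb_t)(\widetilde{\bTheta}_4)_{1:t,t}\\
&+\Mb_t^{-1}\operatorname{coeff}_{[t]}\bigl[\mathrm dp_{t+1}-(1+\widetilde\theta_2\lambda)\mathrm dp_t-\lambda p_t\,\mathrm d\widetilde\theta_2\bigr].
\end{align*}
Each factor here is already controlled at the relevant points by Lemma~\ref{lem:stage2-coefficient-coordinate-bounds}. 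Specifically, $\|\Mb_t^{-1}\|_2\le\Lambda_T$ and $\|\widetilde{\bTheta}_4\|_F\le\Lambda_T^{1/2}$, while $\sum_t\|p_t\|_{\rm coeff}^2\le\Lambda_T$ bounds $\lambda p_t$. The differentials $\mathrm d\Mb_t$ and $\mathrm dp_t$ are bounded by the first step in terms of $\|(\mathrm d\widetilde\theta_1,\mathrm d\widetilde\theta_2,\mathrm d\widetilde\theta_5)\|$. Summing over $t<T$ then gives an operator norm bound of order $T\,\Lambda_T^{3}\,d^{o(1)}$.

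The last step is bookkeeping: this product of polynomial-in-$\Lambda_T$ factors has to be absorbed into a single $\Lambda_T$. The constant $C$ in the definition \eqref{eq:stage2-coefficient-budget} is arbitrary, so I would note that $\Lambda_T^{k}$ and the factors $T$, $3^T$ are again of the form $\exp\{C'T^2\log(e+T)\}(n/d)^{C'T^2}$. After enlarging $C$, both the statement of Lemma~\ref{lem:stage2-coefficient-coordinate-bounds} and \eqref{eq:stage2-inverse-coordinate-bound} then hold with the same $\Lambda_T$.

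The main obstacle I expect is keeping the exponent of $\Lambda_T$ uniformly bounded rather than letting it compound across $t$. The identity above is not recursive in $\widetilde{\bTheta}_4$: it expresses each column directly through $\Mb_t^{-1}$ and the $p$'s, which are explicit in $\widetilde\theta_5$. So no iteration over $t$ occurs, and the only inputs are the per-$t$ bounds already proved. The other thing to check is that $\widetilde\theta_1\widetilde\theta_2\ne0$ at these points, so that $\Mb_t$ is invertible. This is guaranteed by the lower bounds $\widetilde\theta_1>1/100$ and $-\widetilde\theta_2>1/150$ from Lemma~\ref{lem:auxiliary-parameter-bounds}.
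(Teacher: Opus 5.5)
Your proposal is correct and follows essentially the paper's argument. Differentiating the column identity through $\mathrm d(\Mb_t^{-1})=-\Mb_t^{-1}(\mathrm d\Mb_t)\Mb_t^{-1}$ and using $(\mathrm d\Mb_t)(\widetilde{\bTheta}_4)_{1:t,t}=\operatorname{coeff}_{[t]}\bigl[\sum_{\ell\le t}(\widetilde{\bTheta}_4)_{\ell t}\,\mathrm dp_\ell\bigr]$ reproduces exactly the paper's formula \eqref{eq:full-history-jacobian-inverse}, which the paper obtains from the product rule on the recurrence; the remaining bounds via Lemma~\ref{lem:stage2-coefficient-coordinate-bounds} and the explicit leading-coefficient derivatives are the same, and your enlargement of the constant in $\Lambda_T$ just makes explicit an absorption step the paper leaves implicit.
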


\begin{proof}
Use the history-inverse bounds in
Lemma~\ref{lem:stage2-coefficient-coordinate-bounds}. For any coordinate
\(\xi\in\{\widetilde\theta_1,\widetilde\theta_2,
       (\widetilde\theta_5)_{js}:1\le j\le s<T\}\),
the monomial representation gives
\[
\begin{aligned}
 \frac{\partial p_{t+1}}{\partial\widetilde\theta_1}
 &=\widetilde\theta_2^t\lambda^{t+1},&
 \frac{\partial p_{t+1}}{\partial\widetilde\theta_2}
 &=t\widetilde\theta_1\widetilde\theta_2^{t-1}\lambda^{t+1},&
 \frac{\partial p_{t+1}}{\partial(\widetilde\theta_5)_{js}}
 &=\ind_{\{s=t\}}\lambda^j .
\end{aligned}
\]
Also \(\partial p_1/\partial\widetilde\theta_1=\lambda\), and
its other derivatives vanish. Differentiating the recurrence by the
product rule and collecting the coefficients of \(p_1,\ldots,p_t\) gives
\begin{equation}
\begin{aligned}
 \frac{\partial(\widetilde{\bTheta}_4)_{1:t,t}}{\partial\xi}
 =\Mb_t^{-1}\operatorname{coeff}_{[t]}\Bigg[
 &\frac{\partial p_{t+1}}{\partial\xi}
 -(1+\widetilde\theta_2\lambda)\frac{\partial p_t}{\partial\xi}
 -\ind_{\{\xi=\widetilde\theta_2\}}\lambda p_t\\
 &-\sum_{\ell=1}^t(\widetilde{\bTheta}_4)_{\ell t}
                    \frac{\partial p_\ell}{\partial\xi}\Bigg].
\end{aligned}
 \label{eq:full-history-jacobian-inverse}
\end{equation}
The degree-\((t+1)\) coefficients cancel because
\([\lambda^{t+1}]p_{t+1}=\widetilde\theta_2[\lambda^t]p_t\).
The explicit derivatives imply
\[
\begin{aligned}
 \sum_\xi\sum_{t=1}^T
      \left\|\frac{\partial p_t}{\partial\xi}\right\|_{\mathrm{coeff}}^2
 &=1+\sum_{t=1}^{T-1}
       \left(|\widetilde\theta_2|^{2t}
       +t^2\widetilde\theta_1^2|\widetilde\theta_2|^{2t-2}+t\right)\\
 &\le CT^3(10/3)^{2T}.
\end{aligned}
\]
Taking norms and applying Cauchy--Schwarz yields
\[
\begin{aligned}
 \left\|
  \frac{\partial\operatorname{vec}\widetilde{\bTheta}_4}
       {\partial(\widetilde\theta_1,\widetilde\theta_2,
                   \operatorname{vec}\widetilde\theta_5)}
       \right\|_{\mathrm{op}}^2
 &\le\sum_\xi
       \left\|\frac{\partial\widetilde{\bTheta}_4}{\partial\xi}\right\|_F^2\\
 &\le CT\max_{t<T}\|\Mb_t^{-1}\|_2^2
 \Bigg[
 (1+\widetilde\theta_2^2+\|\widetilde{\bTheta}_4\|_F^2)
   \sum_\xi\sum_{t=1}^T
       \left\|\frac{\partial p_t}{\partial\xi}\right\|_{\mathrm{coeff}}^2\\
 &\hspace{25mm}+\sum_{t=1}^T\|p_t\|_{\mathrm{coeff}}^2\Bigg]\\
 &\le\Lambda_T .
\end{aligned}
\]
Only the active entries occur in the derivative and the sum.
\end{proof}

\begin{lemma}
\label{lem:stage2-conditional-minimizer-bounds}
At the same points, for all sufficiently large \(d\),
\begin{equation}
 \max_{1\le t<T}\|\Hb_t^{-1}\|_2
 +\sum_{i=1}^2
   \left\|\frac{\partial\widetilde\theta_5^\star
                 (\widetilde\theta_1,\widetilde\theta_2)}
                {\partial\widetilde\theta_i}\right\|_F^2
 \le\exp\{CT\log(e+T)\}(n/d)^{CT}\le\Lambda_T.
 \label{eq:stage2-normal-equation-bounds}
\end{equation}
The loss gradients satisfy \eqref{eq:stage2-theta5-derivatives}.
\end{lemma}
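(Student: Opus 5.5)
The bound on $\max_t\|\Hb_t^{-1}\|_2$ comes directly from the finite-Wishart moment bound in Lemma~\ref{lem:finite-wishart-moment-lower}, already recorded after \eqref{eq:theta5-gram}:
\[
\lambda_{\min}(\Hb_t)\ge (d/n)^t/(2(t+1)16^t).
\]
This gives $\|\Hb_t^{-1}\|_2\le 2(T+1)16^T(n/d)^T$, which is of the form $\exp\{CT\log(e+T)\}(n/d)^{CT}$.

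For the derivatives of the conditional minimizer, I will use the normal equation \eqref{eq:stage2-conditional-normal} column by column:
\[
\Hb_t(\widetilde\theta_5^\star)_{1:t,t}=\bigl(\langle\lambda^j,1\rangle_{d,n}\bigr)_{j\le t}-\widetilde\theta_1\widetilde\theta_2^t\bigl(\langle\lambda^j,\lambda^{t+1}\rangle_{d,n}\bigr)_{j\le t}.
\]
Since $\Hb_t$ does not depend on the parameters, differentiation gives
\[
\partial_{\widetilde\theta_1}(\widetilde\theta_5^\star)_{1:t,t}=-\widetilde\theta_2^t\Hb_t^{-1}\mathbf m_t,\qquad \partial_{\widetilde\theta_2}(\widetilde\theta_5^\star)_{1:t,t}=-t\widetilde\theta_1\widetilde\theta_2^{t-1}\Hb_t^{-1}\mathbf m_t,
\]
where $\mathbf m_t=(\langle\lambda^j,\lambda^{t+1}\rangle_{d,n})_{j\le t}$. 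Each entry of $\mathbf m_t$ is a moment $\frac1d\E\tr\Sb^{j+t+1}$ of order at most $2T$, and Lemma~\ref{lem:wishart} bounds these moments by $\exp\{CT\log(e+T)\}$; alternatively, the MP support bound $(1+\sqrt{d/n})^2<3$ together with Lemma~\ref{lem:finite-transfer} does the same. At the points under consideration, $|\widetilde\theta_1|\le2$ and $|\widetilde\theta_2|\le10/3$, so $|\widetilde\theta_2|^{2t}\le(10/3)^{2T}$. Summing the squared norms over $t<T$ produces a bound of the form $T^3(10/3)^{2T}\exp\{CT\log(e+T)\}(n/d)^{2T}$. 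This is absorbed after enlarging $C$. Under \eqref{eq:common-growing-regime} the result is $d^{o(1)}\le\Lambda_T$, because $T^2$ dominates $T$ in the exponent of \eqref{eq:stage2-coefficient-budget}.

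For the loss gradients, I take \eqref{eq:stage2-theta5-derivatives} to be the formulas obtained from the decomposition \eqref{eq:stage2-completed-square}. Differentiating $\mathcal L_A$ gives
\[
\nabla_{(\widetilde\theta_5)_{1:t,t}}\mathcal L/d=\tfrac1T\Hb_t(\widetilde\theta_5-\widetilde\theta_5^\star)_{1:t,t},
\]
because $\mathcal L_B$ does not depend on $\widetilde\theta_5$. Differentiating with respect to $\widetilde\theta_i$ gives the $\mathcal L_B$ derivative minus a chain-rule term
\[
\tfrac1T\sum_t(\widetilde\theta_5-\widetilde\theta_5^\star)_{1:t,t}^{\mathsf T}\Hb_t\,\partial_{\widetilde\theta_i}(\widetilde\theta_5^\star)_{1:t,t}.
\]
These are exact identities from the quadratic form, so they need no estimates.

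The step most likely to cause trouble is bookkeeping: I need to check that the powers $|\widetilde\theta_2|^{t}$ (which can reach $(10/3)^T$), the polynomial factors in $T$, and the moment bounds all fit inside the single form $\exp\{CT\log(e+T)\}(n/d)^{CT}$. I will handle this by writing $(10/3)^{2T}\le\exp\{CT\}$ and using $n/d\ge10$, so that the $(n/d)^{CT}$ factor only strengthens the bound.
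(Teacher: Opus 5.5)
Your proposal is correct and follows the paper's proof essentially step for step. You bound $\|\Hb_t^{-1}\|_2$ via the finite-Wishart/MP lower bound, and you differentiate the parameter-free normal equations to get $-\widetilde\theta_2^t\Hb_t^{-1}\mathbf m_t$ and $-t\widetilde\theta_1\widetilde\theta_2^{t-1}\Hb_t^{-1}\mathbf m_t$. You then bound the moment vector with Lemma~\ref{lem:wishart} and read off \eqref{eq:stage2-theta5-derivatives} by differentiating the completed-square decomposition. The only cosmetic difference is that you quote the $\lambda_{\min}(\Hb_t)$ bound directly from Lemma~\ref{lem:finite-wishart-moment-lower}, whereas the paper re-derives it inline from Lemmas~\ref{lem:finite-transfer} and~\ref{lem:mp-coefficient-norm}.
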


\begin{proof}
The matrix \(\Hb_t\) depends only on \(\nu_{d,n}\), so differentiating
the normal equations in Lemma~\ref{lem:stage2-loss-decomposition} gives
\begin{equation}
\begin{aligned}
 \Hb_t(\widetilde\theta_5^\star
                   (\widetilde\theta_1,\widetilde\theta_2))_{1:t,t}
 &=\left(\left\langle\lambda^j,
       1-\widetilde\theta_1\widetilde\theta_2^t\lambda^{t+1}
          \right\rangle_{d,n}\right)_{j=1}^t,\\
 \frac{\partial(\widetilde\theta_5^\star
                   (\widetilde\theta_1,\widetilde\theta_2))_{1:t,t}}
      {\partial\widetilde\theta_1}
 &=-\widetilde\theta_2^t\Hb_t^{-1}
       \left(\langle\lambda^j,\lambda^{t+1}\rangle_{d,n}\right)_{j=1}^t,\\
 \frac{\partial(\widetilde\theta_5^\star
                   (\widetilde\theta_1,\widetilde\theta_2))_{1:t,t}}
      {\partial\widetilde\theta_2}
 &=-t\widetilde\theta_1\widetilde\theta_2^{t-1}\Hb_t^{-1}
       \left(\langle\lambda^j,\lambda^{t+1}\rangle_{d,n}\right)_{j=1}^t.
\end{aligned}
 \label{eq:stage2-theta5-normal-equations}
\end{equation}
For \(a\in\R^t\), Lemmas~\ref{lem:finite-transfer}
and~\ref{lem:mp-coefficient-norm} give
\[
\begin{aligned}
 a^{\mathsf T}\Hb_ta
 &=\sum_{j,k=1}^t a_ja_k\E_{\lambda\sim\nu_{d,n}}[\lambda^{j+k}]
   =\left\|\sum_{j=1}^ta_j\lambda^j\right\|_{d,n}^2\\
 &\ge\frac12\left\|\sum_{j=1}^ta_j\lambda^j\right\|_{\mathrm{MP}}^2
 \ge\frac{(d/n)^T}{2(T+1)16^T}\|a\|_2^2,\\
 \|\Hb_t^{-1}\|_2
 &=\lambda_{\min}(\Hb_t)^{-1}
 \le2(T+1)16^T(n/d)^T.
\end{aligned}
\]
By Lemma~\ref{lem:wishart}, the moment vector satisfies
\[
\begin{aligned}
 \left\|\left(\langle\lambda^j,\lambda^{t+1}
                       \rangle_{d,n}\right)_{j=1}^t\right\|_2^2
 &=\sum_{j=1}^t
       \left(\E_{\lambda\sim\nu_{d,n}}[\lambda^{j+t+1}]\right)^2\\
 &\le T\left(\max_{r\le2T}
                    \E_{\lambda\sim\nu_{d,n}}[\lambda^r]\right)^2
 \le\exp\{CT\log(e+T)\}.
\end{aligned}
\]
Furthermore,
\[
 \sum_{i=1}^2
 \left|\frac{\partial(\widetilde\theta_1\widetilde\theta_2^t)}
                   {\partial\widetilde\theta_i}\right|^2
 =|\widetilde\theta_2|^{2t}
       +t^2\widetilde\theta_1^2|\widetilde\theta_2|^{2t-2}
 \le CT^2(10/3)^{2T}.
\]
Substituting these three bounds into the differentiated normal equations,
\[
\begin{aligned}
 &\sum_{i=1}^2
 \left\|\frac{\partial(\widetilde\theta_5^\star
             (\widetilde\theta_1,\widetilde\theta_2))_{1:t,t}}
             {\partial\widetilde\theta_i}\right\|_2^2\\
 &\quad\le \|\Hb_t^{-1}\|_2^2
 \left\|\left(\langle\lambda^j,\lambda^{t+1}\rangle_{d,n}
                          \right)_{j=1}^t\right\|_2^2
 \sum_{i=1}^2
 \left|\frac{\partial(\widetilde\theta_1\widetilde\theta_2^t)}
                   {\partial\widetilde\theta_i}\right|^2\\
 &\quad\le [2(T+1)16^T(n/d)^T]^2
        \exp\{CT\log(e+T)\}\,CT^2(10/3)^{2T}\\
 &\quad\le\exp\{CT\log(e+T)\}(n/d)^{CT}.
\end{aligned}
\]
Summing over \(t\) proves \eqref{eq:stage2-normal-equation-bounds}.
Differentiating \eqref{eq:stage2-completed-square} gives
\begin{equation}
\begin{aligned}
 \left(\nabla_{\widetilde\theta_5}\frac{\mathcal L_A}{d}\right)_{1:t,t}
 &=\left(\nabla_{\widetilde\theta_5}
     \frac{\mathcal L(\widetilde\theta_1,\widetilde\theta_2,
                        \widetilde\theta_5)}d\right)_{1:t,t}\\
 &=\frac1T\Hb_t
     (\widetilde\theta_5-
        \widetilde\theta_5^\star(\widetilde\theta_1,\widetilde\theta_2))_{1:t,t},\\
 \nabla_{\widetilde\theta_i}
     \frac{\mathcal L(\widetilde\theta_1,\widetilde\theta_2,
                        \widetilde\theta_5)}d
 &=\nabla_{\widetilde\theta_i}\frac{\mathcal L_B}{d}
  -\left\langle\frac{\partial\widetilde\theta_5^\star
                          (\widetilde\theta_1,\widetilde\theta_2)}
                         {\partial\widetilde\theta_i},
        \nabla_{\widetilde\theta_5}
          \frac{\mathcal L(\widetilde\theta_1,\widetilde\theta_2,
                            \widetilde\theta_5)}d\right\rangle_F,
 \quad i=1,2.
\end{aligned}
 \label{eq:stage2-theta5-derivatives}
\end{equation}
\end{proof}

\begin{proof}[Proof of Lemma~\ref{lem:auxiliary-parameter-bounds}(iii)]
For \(0\le\tau\le d^{-1/2}\), use
\eqref{eq:stage2-initial-parameter-bounds}, proved in part (i).
Cauchy--Schwarz and \eqref{eq:normalized-flow} give
\begin{align*}
 |w_{24}(\tau)|^2+\|\wb_{25}(\tau)\|_2^2
 &\le2\bigl(|w_{24}(0)|^2+\|\wb_{25}(0)\|_2^2\bigr)
       +2\tau\int_0^\tau
            (|\dot w_{24}|^2+\|\dot\wb_{25}\|_2^2)\,\mathrm ds\\
 &\le d^{o(1)}+2d\tau\,\frac{\mathcal L(0)-\mathcal L(\tau)}d
 \le d^{o(1)}+\frac{Cd\tau}{T},\\
 \sum_{t=1}^{T-1}|w_{24}+\wb_{25}\eb_t|^2
 &\le2T|w_{24}|^2+2\|\wb_{25}\|_2^2
 \le d^{o(1)}+Cd\tau.
\end{align*}
(The initial bound is Lemma~\ref{lem:guide-handoff-raw-bound}.)

By \eqref{eq:global-auxiliary}--\eqref{eq:global-theta4} and the chain rule,
\begin{align*}
 &\nabla_{\widetilde\theta_1}
       \frac{\mathcal L(\widetilde\theta_1,\widetilde\theta_2,
                                      \widetilde\theta_5)}d\\
 &\quad=\frac1{v_{31}}\nabla_{(\wb_{25})_0}\frac{\mathcal L}{d}
  +\frac{v_{31}}{\widetilde\theta_1^2}
       \sum_{t=1}^{T-1}(w_{24}+\wb_{25}\eb_t)
          \nabla_{(\widetilde{\bTheta}_4)_{1t}}\frac{\mathcal L}{d}
  +\left\langle\nabla_{\widetilde\theta_1}\widetilde{\bTheta}_4,
          \nabla_{\widetilde{\bTheta}_4}\frac{\mathcal L}{d}\right\rangle_F,\\
 &\left(\nabla_{\Wb_{55}}\frac{\mathcal L}{d}\right)_{\ell t}
   =v_{33}\nabla_{(\widetilde{\bTheta}_4)_{\ell t}}\frac{\mathcal L}{d},
 \qquad 1\le\ell\le t<T.
\end{align*}
(The derivative of \(\widetilde{\bTheta}_4\) holds
\(\widetilde\theta_2,\widetilde\theta_5\) fixed.)
Lemma~\ref{lem:stage2-inverse-coordinate} therefore gives
\begin{align*}
 &\left|\nabla_{\widetilde\theta_1}
       \frac{\mathcal L(\widetilde\theta_1,\widetilde\theta_2,
                                      \widetilde\theta_5)}d\right|^2\\
 &\quad\le C\left|\nabla_{(\wb_{25})_0}\frac{\mathcal L}{d}\right|^2
   +C\left(\Lambda_T+
            \sum_{t=1}^{T-1}|w_{24}+\wb_{25}\eb_t|^2\right)
        \left\|\nabla_{\widetilde{\bTheta}_4}\frac{\mathcal L}{d}\right\|_F^2,\\
 &\left\|\nabla_{\widetilde\theta_5}\frac{\mathcal L}{d}\right\|_F^2
 \le\Lambda_T
        \left\|\nabla_{\widetilde{\bTheta}_4}\frac{\mathcal L}{d}\right\|_F^2.
\end{align*}
Write \(\nu=\nu_{d,n}\). The normal equations and scalar completion of
the square in Lemma~\ref{lem:stage2-loss-decomposition} give
\begin{align*}
 &\frac{\mathcal L_A+\mathcal L_B-\min_{\widetilde\theta_1}\mathcal L_B}d\\
 &\quad=\frac{\mathcal L_A}{d}
       +\frac{C_\nu(-\widetilde\theta_2)}{2T}
          \bigl(\widetilde\theta_1-
                 \widetilde\theta_1^\star(\widetilde\theta_2)\bigr)^2\\
 &\quad=\frac T2\sum_{t=1}^{T-1}
       \left(\nabla_{\widetilde\theta_5}\frac{\mathcal L}{d}\right)_{1:t,t}^{\mathsf T}
       \Hb_t^{-1}
       \left(\nabla_{\widetilde\theta_5}\frac{\mathcal L}{d}\right)_{1:t,t}\\
 &\qquad+\frac{T}{2C_\nu(-\widetilde\theta_2)}
       \left[
       \nabla_{\widetilde\theta_1}
          \frac{\mathcal L(\widetilde\theta_1,\widetilde\theta_2,
                                         \widetilde\theta_5)}d
       +\left\langle
          \nabla_{\widetilde\theta_1}
              \widetilde\theta_5^\star(\widetilde\theta_1,\widetilde\theta_2),
          \nabla_{\widetilde\theta_5}\frac{\mathcal L}{d}
        \right\rangle_F\right]^2\\
 &\quad\le CT\left|\nabla_{(\wb_{25})_0}\frac{\mathcal L}{d}\right|^2
       +\left(d^{o(1)}
            +CT\sum_{t=1}^{T-1}|w_{24}+\wb_{25}\eb_t|^2\right)
          \left\|\nabla_{\widetilde{\bTheta}_4}\frac{\mathcal L}{d}\right\|_F^2\\
 &\quad\le CT\bigl(d^{o(1)}+d\tau\bigr)
       \left(\left\|\nabla_{\wb_{25}}\frac{\mathcal L}{d}\right\|_2^2
            +\left\|\nabla_{\Wb_{55}}\frac{\mathcal L}{d}\right\|_F^2\right).
\end{align*}
(Use \(C_\nu\ge1\), Lemma~\ref{lem:stage2-conditional-minimizer-bounds},
\(\Lambda_T^2=d^{o(1)}\), and the displacement bound above.)
Thus \eqref{eq:normalized-flow} implies
\begin{equation}
 -\frac{\mathrm d}{\mathrm d\tau}\frac{\mathcal L}{d}
 \ge\frac{cd}{T(d^{o(1)}+d\tau)}
       \frac{\mathcal L_A+\mathcal L_B-\min_{\widetilde\theta_1}\mathcal L_B}d.
 \label{eq:entry-conditional-pl}
\end{equation}
(Each \(d^{o(1)}\) above is a bound independent of \(\tau\).)

Lemmas~\ref{lem:warm-gap} and \ref{lem:finite-BC-comparison} give
\[
 \frac{2T}{d}\left(\mathcal L^\star+\min_{\widetilde\theta_1}\mathcal L_B\right)
 \le\frac{149}{100}\frac dn+Cd^{-1/4}\le\frac85\frac dn.
\]
Before \eqref{eq:finite-warm-gap} is reached,
\begin{align*}
 -\frac{\mathrm d}{\mathrm d\tau}
       \left(\frac{\mathcal L}{d}-\frac4{5T}\frac dn\right)
 &\ge\frac{cd}{T(d^{o(1)}+d\tau)}
       \left(\frac{\mathcal L}{d}-\frac4{5T}\frac dn\right),\\
 \frac{\mathcal L(\tau)}d-\frac4{5T}\frac dn
 &\le\left(\frac{\mathcal L(0)}d-\frac4{5T}\frac dn\right)
       \exp\left\{-\frac cT
          \log\left(1+\frac{d\tau}{d^{o(1)}}\right)\right\}\\
 &\le\frac{49}{100T}
       \exp\left\{-\frac cT
          \log\left(1+\frac{d\tau}{d^{o(1)}}\right)\right\}.
\end{align*}
At \(\tau=d^{-1/2}\), this yields
\[
\begin{aligned}
 \frac{2T\mathcal L(d^{-1/2})}d
 &\le\frac85\frac dn
       +\frac{49}{50}
         \exp\left\{-\left(\frac c2+o(1)\right)\frac{\log d}{T}\right\}\\
 &<\frac74\frac dn
 \quad\left(T=\Theta((\log d)^{1/4}),\ \frac dn=\Theta(1)\right).
\end{aligned}
\]
Hence \eqref{eq:finite-warm-gap} is reached before \(d^{-1/2}\).
Loss monotonicity preserves \eqref{eq:finite-warm-gap} thereafter.
Combining (i) and (ii) proves \eqref{eq:stage2-global-parameter-bounds}.
\end{proof}

\subsection{Proof of Lemma~\ref{lem:stage2-pl-inequality}}
\begin{proof}
Lemma~\ref{lem:auxiliary-parameter-bounds}(iii) gives
\[
 \frac1{100}\le\widetilde\theta_1\le2,\qquad
 \frac1{150}\le-\widetilde\theta_2\le\frac{10}{3},\qquad
 \frac dn\widetilde\theta_2^2\le\frac9{10},\qquad
 \frac{\mathcal L}{d}\le\frac CT.
\]
Write \(\nu=\nu_{d,n}\).

\par\medskip\noindent\textbf{(i)}
We first use the coordinates
\((\widetilde\theta_1,\widetilde\theta_2,\widetilde\theta_5)\).
By Lemma~\ref{lem:finite-BC-comparison},
\begin{equation}
 1\le C_\nu(-\widetilde\theta_2)\le C,\qquad
 \sum_{j=0}^2\left(
       |B_\nu^{(j)}(-\widetilde\theta_2)|
       +|C_\nu^{(j)}(-\widetilde\theta_2)|\right)\le C.
 \label{eq:stage2-BC-derivative-bounds}
\end{equation}
The two bounds \eqref{eq:finite-scalar-quadratic-gap} and
\eqref{eq:stage2-scalar-direction} in
Lemma~\ref{lem:finite-scalar-landscape} imply
\begin{align*}
 &\frac{B_\nu(-\widetilde\theta_2^\star)^2}{C_\nu(-\widetilde\theta_2^\star)}
    -\frac{B_\nu(-\widetilde\theta_2)^2}{C_\nu(-\widetilde\theta_2)}\\
 &\quad\le C|\widetilde\theta_2-\widetilde\theta_2^\star|
       \left|\nabla_{\widetilde\theta_2}
          \frac{B_\nu(-\widetilde\theta_2)^2}{C_\nu(-\widetilde\theta_2)}\right|\\
 &\quad\le\frac12\left[
    \frac{B_\nu(-\widetilde\theta_2^\star)^2}{C_\nu(-\widetilde\theta_2^\star)}
    -\frac{B_\nu(-\widetilde\theta_2)^2}{C_\nu(-\widetilde\theta_2)}\right]
    +C\left|\nabla_{\widetilde\theta_2}
          \frac{B_\nu(-\widetilde\theta_2)^2}{C_\nu(-\widetilde\theta_2)}\right|^2.
\end{align*}
(The second inequality uses the quadratic gap and Cauchy--Schwarz.)
Differentiating \(\mathcal L_B\) in
Lemma~\ref{lem:stage2-loss-decomposition} gives
\begin{align*}
 T\nabla_{\widetilde\theta_1}\frac{\mathcal L_B}{d}
 &=C_\nu(-\widetilde\theta_2)
       \bigl(\widetilde\theta_1-\widetilde\theta_1^\star(\widetilde\theta_2)\bigr),\\
 2T\nabla_{\widetilde\theta_2}\frac{\mathcal L_B}{d}
 &=-2\widetilde\theta_1\nabla_{\widetilde\theta_2}B_\nu(-\widetilde\theta_2)
    +\widetilde\theta_1^2\nabla_{\widetilde\theta_2}C_\nu(-\widetilde\theta_2),\\
 \nabla_{\widetilde\theta_2}
       \frac{B_\nu(-\widetilde\theta_2)^2}{C_\nu(-\widetilde\theta_2)}
 &=-2T\nabla_{\widetilde\theta_2}\frac{\mathcal L_B}{d}\\
 &\quad+\left[
       \bigl(\widetilde\theta_1+\widetilde\theta_1^\star(\widetilde\theta_2)\bigr)
               \nabla_{\widetilde\theta_2}C_\nu(-\widetilde\theta_2)
       -2\nabla_{\widetilde\theta_2}B_\nu(-\widetilde\theta_2)\right]
       \bigl(\widetilde\theta_1-\widetilde\theta_1^\star(\widetilde\theta_2)\bigr).
\end{align*}
The bracket is bounded by \eqref{eq:stage2-BC-derivative-bounds}, so
\begin{align*}
 &|\widetilde\theta_1-\widetilde\theta_1^\star(\widetilde\theta_2)|^2
    +\left|\nabla_{\widetilde\theta_2}
          \frac{B_\nu(-\widetilde\theta_2)^2}{C_\nu(-\widetilde\theta_2)}\right|^2
 \le CT^2\sum_{i=1}^2
       \left|\nabla_{\widetilde\theta_i}\frac{\mathcal L_B}{d}\right|^2,\\
 &\frac{\mathcal L_B}{d}
 \le\frac CT\left[
       |\widetilde\theta_1-\widetilde\theta_1^\star(\widetilde\theta_2)|^2
       +\left|\nabla_{\widetilde\theta_2}
          \frac{B_\nu(-\widetilde\theta_2)^2}{C_\nu(-\widetilde\theta_2)}\right|^2
       \right]
 \le CT\sum_{i=1}^2
       \left|\nabla_{\widetilde\theta_i}\frac{\mathcal L_B}{d}\right|^2.
\end{align*}
For \(\mathcal L_A\), Lemma~\ref{lem:stage2-conditional-minimizer-bounds} gives
\begin{align*}
 \frac{\mathcal L_A}{d}
 &=\frac T2\sum_{t=1}^{T-1}
   \left(\nabla_{\widetilde\theta_5}\frac{\mathcal L}{d}\right)_{1:t,t}^{\mathsf T}
   \Hb_t^{-1}
   \left(\nabla_{\widetilde\theta_5}\frac{\mathcal L}{d}\right)_{1:t,t}
 \le\frac{T\Lambda_T}{2}
       \left\|\nabla_{\widetilde\theta_5}\frac{\mathcal L}{d}\right\|_F^2,\\
 \nabla_{\widetilde\theta_i}\frac{\mathcal L_B}{d}
 &=\nabla_{\widetilde\theta_i}\frac{\mathcal L}{d}
   +\left\langle
       \nabla_{\widetilde\theta_i}
          \widetilde\theta_5^\star(\widetilde\theta_1,\widetilde\theta_2),
       \nabla_{\widetilde\theta_5}\frac{\mathcal L}{d}\right\rangle_F,
       \qquad i=1,2,\\
 \sum_{i=1}^2
       \left|\nabla_{\widetilde\theta_i}\frac{\mathcal L_B}{d}\right|^2
 &\le2\sum_{i=1}^2
       \left|\nabla_{\widetilde\theta_i}\frac{\mathcal L}{d}\right|^2
       +2\Lambda_T
       \left\|\nabla_{\widetilde\theta_5}\frac{\mathcal L}{d}\right\|_F^2.
\end{align*}
Combining these estimates with
\(\mathcal L-\mathcal L^\star=\mathcal L_A+\mathcal L_B\) proves
\begin{equation}
 \sum_{i=1}^2
       \left|\nabla_{\widetilde\theta_i}
          \frac{\mathcal L(\widetilde\theta_1,\widetilde\theta_2,\widetilde\theta_5)}d
       \right|^2
 +\Lambda_T
       \left\|\nabla_{\widetilde\theta_5}\frac{\mathcal L}{d}\right\|_F^2
 \ge\frac cT\,\frac{\mathcal L-\mathcal L^\star}{d}.
 \label{eq:stage2-auxiliary-PL}
\end{equation}

\par\medskip\noindent\textbf{(ii)}
Return to
\((\widetilde\theta_1,\widetilde\theta_2,\widetilde{\bTheta}_4)\).
For \(\xi=\widetilde\theta_1,\widetilde\theta_2\) or an active entry of
\(\widetilde\theta_5\), the chain rule gives
\[
 \nabla_\xi
     \frac{\mathcal L(\widetilde\theta_1,\widetilde\theta_2,\widetilde\theta_5)}d
 =\sum_{i=1}^2\ind_{\{\xi=\widetilde\theta_i\}}
       \nabla_{\widetilde\theta_i}
     \frac{\mathcal L(\widetilde\theta_1,\widetilde\theta_2,\widetilde{\bTheta}_4)}d
  +\left\langle\nabla_\xi\widetilde{\bTheta}_4,
       \nabla_{\widetilde{\bTheta}_4}\frac{\mathcal L}{d}\right\rangle_F.
\]
Cauchy--Schwarz and Lemma~\ref{lem:stage2-inverse-coordinate} imply
\begin{align*}
 \left\|\nabla_{\widetilde\theta_5}\frac{\mathcal L}{d}\right\|_F^2
 &\le\Lambda_T
       \left\|\nabla_{\widetilde{\bTheta}_4}\frac{\mathcal L}{d}\right\|_F^2,\\
 \sum_{i=1}^2
       \left|\nabla_{\widetilde\theta_i}
          \frac{\mathcal L(\widetilde\theta_1,\widetilde\theta_2,\widetilde\theta_5)}d
       \right|^2
 &\le2\sum_{i=1}^2
       \left|\nabla_{\widetilde\theta_i}
          \frac{\mathcal L(\widetilde\theta_1,\widetilde\theta_2,\widetilde{\bTheta}_4)}d
       \right|^2\\
 &\quad+2\Lambda_T
       \left\|\nabla_{\widetilde{\bTheta}_4}\frac{\mathcal L}{d}\right\|_F^2.
\end{align*}
Substitution into \eqref{eq:stage2-auxiliary-PL} gives
\[
 2\sum_{i=1}^2
       \left|\nabla_{\widetilde\theta_i}\frac{\mathcal L}{d}\right|^2
 +(2\Lambda_T+\Lambda_T^2)
       \left\|\nabla_{\widetilde{\bTheta}_4}\frac{\mathcal L}{d}\right\|_F^2
 \ge\frac cT\,\frac{\mathcal L-\mathcal L^\star}{d}.
\]
Since \(2\Lambda_T+\Lambda_T^2=d^{o(1)}\le2\sqrt d\), multiplying by
\(d/2\) yields
\begin{equation}
 \frac1d\sum_{i=1}^2|\nabla_{\widetilde\theta_i}\mathcal L|^2
 +\frac1{\sqrt d}\|\nabla_{\widetilde{\bTheta}_4}\mathcal L\|_F^2
 \ge c\frac{d/n}{T}(\mathcal L-\mathcal L^\star)
 \quad(d/n\le1/10).
 \label{eq:stage2-theta4-PL}
\end{equation}
\end{proof}

\subsection{Raw-parameter estimates}
\label{app:convergence}

\begin{lemma}\label{lem:right-inverse}
Under \eqref{eq:common-growing-regime}, along the Stage-2 flow, every
variation of the active auxiliary parameters has a raw lift satisfying
\begin{align}
 &(\mathrm d v_{31})^2+(\mathrm d w_{13})^2+(\mathrm d v_{33})^2
 +\frac1d\bigl\{(\mathrm d w_{24})^2+\|\mathrm d\wb_{25}\|^2
 +\|\mathrm d w_{44}\|_F^2+\|\mathrm d\wb_{45}\|_F^2\bigr\}\notag\\
 &\quad+\frac1d\bigl\{\|\mathrm d\wb_{54}\|_F^2
                       +\|\mathrm d\Wb_{55}\|_F^2\bigr\}\notag\\
 &\qquad\le C\bigl(1+(d/n)^{-1/2}+v_{31}^2+v_{33}^2\bigr)
 \left\{|\mathrm d\widetilde\theta_1|^2+|\mathrm d\widetilde\theta_2|^2
             +\frac1d\|\mathrm d\widetilde{\bTheta}_4\|_F^2\right\}.
 \label{eq:right-inverse}
\end{align}
\end{lemma}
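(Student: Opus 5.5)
I will prove Lemma~\ref{lem:right-inverse} by writing down an explicit right inverse $R$ of the auxiliary map, in the same spirit as the proof of Lemma~\ref{lem:stage1-jacobian}. Given a variation $(\mathrm d\widetilde\theta_1,\mathrm d\widetilde\theta_2,\mathrm d\widetilde{\bTheta}_4)$, I will choose a raw variation $\mathrm d\bar{\btheta}$ with $\mathrm d v_{31}=\mathrm d v_{33}=0$, $\mathrm d w_{24}=\mathrm d w_{44}=0$, and $\mathrm d\wb_{45}=\mathrm d\wb_{54}=\zero$. The raw variation uses only $\mathrm d w_{13}$, $\mathrm d\wb_{25}$ and $\mathrm d\Wb_{55}$, and I will then check that the linearized map $\Jb_2$ sends it back to the prescribed auxiliary variation. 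The bound \eqref{eq:right-inverse} follows by computing the normalized metric \eqref{eq:normalized-raw-metric} of this lift.

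The construction goes through the definitions \eqref{eq:global-auxiliary}--\eqref{eq:global-theta4}. First, I set $\mathrm d w_{13}=\mathrm d\widetilde\theta_2/v_{31}$, which produces $\mathrm d\widetilde\theta_2$ and leaves $\widetilde\theta_1$ and $\widetilde{\bTheta}_4$ unchanged. Second, I set $\mathrm d(\wb_{25})_0=\mathrm d\widetilde\theta_1/v_{31}$. This changes $\widetilde\theta_1$, but it also perturbs every $(\widetilde{\bTheta}_4)_{1t}$ through the factor $1/\widetilde\theta_1$, by the amount $-v_{31}(w_{24}+\wb_{25}\eb_t)\,\mathrm d\widetilde\theta_1/\widetilde\theta_1^2$. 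Third, I correct the first row: for $1\le t<T$ I choose $\mathrm d(\wb_{25})_t$ so that
\[
\frac{v_{31}\,\mathrm d(\wb_{25})_t}{\widetilde\theta_1}
=\mathrm d(\widetilde{\bTheta}_4)_{1t}
+\frac{v_{31}(w_{24}+\wb_{25}\eb_t)}{\widetilde\theta_1^2}\,\mathrm d\widetilde\theta_1 .
\]
Finally, for $2\le\ell\le t<T$ I set $\mathrm d(\Wb_{55})_{\ell t}=\mathrm d(\widetilde{\bTheta}_4)_{\ell t}/v_{33}$.

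The cost comes out as follows. The $w_{13}$ entry contributes $|\mathrm d\widetilde\theta_2|^2/v_{31}^2$. The $(\wb_{25})_0$ entry contributes $\frac1d|\mathrm d\widetilde\theta_1|^2/v_{31}^2$. The $\Wb_{55}$ entries contribute $\frac1{dv_{33}^2}\|\mathrm d\widetilde{\bTheta}_4\|_F^2$, which is at most $C\frac1d\|\mathrm d\widetilde{\bTheta}_4\|_F^2$ because $v_{33}\ge v_{33}^{\rm init}$ by Lemma~\ref{lem:invariants}. The first-row correction contributes
\[
\frac1d\sum_t|\mathrm d(\wb_{25})_t|^2\le \frac{C\widetilde\theta_1^2}{d\,v_{31}^2}\|\mathrm d\widetilde{\bTheta}_4\|_F^2+\frac{C}{d\,\widetilde\theta_1^2}\sum_t|w_{24}+\wb_{25}\eb_t|^2\,|\mathrm d\widetilde\theta_1|^2 .
\]
Two ingredients control these terms. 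The global bounds $\frac1{100}<\widetilde\theta_1<2$ come from Lemma~\ref{lem:auxiliary-parameter-bounds}(iii). The spatial invariant \eqref{eq:spatial-invariant} gives $(w_{24}^2+\|\wb_{25}\|^2)/d\le v_{31}^2+\sigma^2/d$, so $\frac1d\sum_t|w_{24}+\wb_{25}\eb_t|^2\le C T(v_{31}^2+1)$.

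The main obstacle is the $1/v_{31}^2$ factor. In Stage~1 the lower bound $|v_{31}|\ge c$ came from Lemma~\ref{lem:guide-zero-entry}; in Stage~2 I need a replacement. I will use the invariant again, $w_{13}^2\le v_{31}^2+\sigma^2/d$, together with $|\widetilde\theta_2|=|v_{31}w_{13}|\ge 1/150$. Together these give $v_{31}^2(v_{31}^2+\sigma^2/d)\ge c$, hence $|v_{31}|\ge c$ uniformly along the flow, and all $1/v_{31}^2$ factors become constants. The remaining growth is the factor $v_{31}^2$ in the correction term and the factor $T$ from the row sum. The former is absorbed into the prefactor $1+v_{31}^2+v_{33}^2$. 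The latter I will try to absorb into $(d/n)^{-1/2}$ or into the constant. If the $T$ cannot be removed this way, I would redo the correction by spreading it through $w_{24}$ and using the translation invariance \eqref{eq:positional-kernel-invariant}. This should reduce the row sum to $O(v_{31}^2)$, and it is the step I would check most carefully.
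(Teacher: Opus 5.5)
Your lift is a valid right inverse: the first-row correction through $\mathrm d(\wb_{25})_t$ is computed correctly. Your lower bound $|v_{31}|\ge c$, from $v_{31}^2(v_{31}^2+\sigma^2/d)\ge\widetilde\theta_2^2\ge 150^{-2}$, is also fine. The gap is the cost of that correction, and the $T$ you get cannot be removed as you hope.

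Write $a_t:=w_{24}+(\wb_{25})_t$. Changing $\widetilde\theta_1$ through $(\wb_{25})_0$ alone changes every ratio $a_t/a_0$. You must therefore add $\frac{a_t}{\widetilde\theta_1}\mathrm d\widetilde\theta_1$ to each $\mathrm d(\wb_{25})_t$, at normalized cost $\frac{1}{d\,\widetilde\theta_1^2}\sum_{t=1}^{T-1}a_t^2\,|\mathrm d\widetilde\theta_1|^2$. The invariants control $w_{24}^2+\|\wb_{25}\|^2$, not $\sum_t a_t^2$. The conversion $\sum_t a_t^2\le 2T(w_{24}^2+\|\wb_{25}\|^2)$ is sharp up to constants whenever $a$ is close to a constant vector. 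For example, at the initialization \eqref{eq:warm-init} one has $a_t\equiv\sigma$, so $\sum_t a_t^2$ is of order $T\sigma^2$ while $w_{24}^2+\|\wb_{25}\|^2=\sigma^2$. Nothing in the Stage-2 analysis rules such configurations out.

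So your argument proves \eqref{eq:right-inverse} only with $CT\,v_{31}^2$ in place of $C\,v_{31}^2$. Since $T=\Theta((\log d)^{1/4})\to\infty$, $C$ must not depend on $T$, and $(d/n)^{-1/2}=\Theta(1)$, the factor $T$ cannot be absorbed into either. It is also not harmless downstream. In Lemma~\ref{lem:stage2-jacobian} it would replace $\frac{d/n}{T}\tau$ by $(d/n)\tau$, which shrinks the exponent $\alpha$ in Theorem~\ref{thm:main-result} by a factor of order $T$.

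The missing idea is to lift $\mathrm d\widetilde\theta_1$ along a direction that fixes every ratio $a_t/a_0$. These ratios are the only way $(v_{31},w_{24},\wb_{25})$ enter the first row of $\widetilde{\bTheta}_4$. The paper's lift does this:
\begin{itemize}
\item It sets $\mathrm d v_{31}=\frac{v_{31}}{\widetilde\theta_1}\mathrm d\widetilde\theta_1$ with $\mathrm d w_{24}=\mathrm d\wb_{25}=0$. This works because $\frac{v_{31}a_t}{\widetilde\theta_1}=\frac{a_t}{a_0}$ does not depend on $v_{31}$.
\item It compensates $\widetilde\theta_2$ by $\mathrm d w_{13}=\bigl(\mathrm d\widetilde\theta_2-\frac{\widetilde\theta_2}{\widetilde\theta_1}\mathrm d\widetilde\theta_1\bigr)/v_{31}$.
\item It puts all of $\mathrm d\widetilde{\bTheta}_4$, first row included, into $\mathrm d(\Wb_{55})_{\ell t}=(\mathrm d\widetilde{\bTheta}_4)_{\ell t}/v_{33}$.
\end{itemize}
The cost is then $C(v_{31}^2+v_{31}^{-2})(|\mathrm d\widetilde\theta_1|^2+|\mathrm d\widetilde\theta_2|^2)+\frac Cd\|\mathrm d\widetilde{\bTheta}_4\|_F^2$, with no row sum.

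Your fallback can also be made to work, more simply than you suggest. Take the scaling lift $(\mathrm d w_{24},\mathrm d\wb_{25})=\frac{\mathrm d\widetilde\theta_1}{\widetilde\theta_1}(w_{24},\wb_{25})$. It changes $\widetilde\theta_1$ by exactly $\mathrm d\widetilde\theta_1$ and fixes every $a_t/a_0$. By \eqref{eq:spatial-invariant} it costs $\frac{w_{24}^2+\|\wb_{25}\|^2}{d\,\widetilde\theta_1^2}|\mathrm d\widetilde\theta_1|^2\le C(v_{31}^2+\sigma^2/d)|\mathrm d\widetilde\theta_1|^2$. The translation invariance \eqref{eq:positional-kernel-invariant} is not needed for this. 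As submitted, however, this step is left open, and the main line does not prove the lemma.
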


\begin{proof}
As in the proof of the Stage-1 Jacobian bound, write
\[
 \Jb_2=\nabla_{\bar{\btheta}}
 \bigl(\widetilde\theta_1,\widetilde\theta_2,
        \widetilde{\bTheta}_4\bigr),
 \qquad
 \Jb_2\,\mathrm d\bar{\btheta}
 =\bigl(\mathrm d\widetilde\theta_1,
         \mathrm d\widetilde\theta_2,
         \mathrm d\widetilde{\bTheta}_4\bigr).
\]
For a prescribed auxiliary variation, choose
\begin{equation}
\begin{aligned}
 \mathrm d v_{31}
 &=\frac{v_{31}}{\widetilde\theta_1}\mathrm d\widetilde\theta_1,\qquad
 \mathrm d w_{13}
 =\frac{\mathrm d\widetilde\theta_2
       -(\widetilde\theta_2/\widetilde\theta_1)\mathrm d\widetilde\theta_1}
       {v_{31}},\\
 \mathrm d v_{33}&=\mathrm d w_{24}=\mathrm d\wb_{25}
               =\mathrm d w_{44}=\mathrm d\wb_{45}=\mathrm d\wb_{54}=0,
\end{aligned}
 \label{eq:right-inverse-spatial}
\end{equation}
and, for \(1\le t<T\),
\begin{equation}
\begin{aligned}
 \mathrm d(\Wb_{55})_{\ell t}
 &=\frac{(\mathrm d\widetilde{\bTheta}_4)_{\ell t}}{v_{33}},
 \qquad 1\le\ell\le t.
\end{aligned}
 \label{eq:right-inverse-history}
\end{equation}
All other entries of \(\mathrm d\Wb_{55}\) are zero.
Substitution into \eqref{eq:global-auxiliary} and
\eqref{eq:global-theta4} gives, entry by entry,
\[
\begin{aligned}
 \langle\nabla_{\bar{\btheta}}\widetilde\theta_1,\mathrm d\bar{\btheta}\rangle
 &=\frac{\widetilde\theta_1}{v_{31}}\mathrm d v_{31}
   =\mathrm d\widetilde\theta_1,\\
 \langle\nabla_{\bar{\btheta}}\widetilde\theta_2,\mathrm d\bar{\btheta}\rangle
 &=w_{13}\mathrm d v_{31}+v_{31}\mathrm d w_{13}
   =\mathrm d\widetilde\theta_2,\\
 \langle\nabla_{\bar{\btheta}}(\widetilde{\bTheta}_4)_{1t},
                         \mathrm d\bar{\btheta}\rangle
 &=v_{33}\mathrm d(\Wb_{55})_{1t}
   =(\mathrm d\widetilde{\bTheta}_4)_{1t},\\
 \langle\nabla_{\bar{\btheta}}(\widetilde{\bTheta}_4)_{\ell t},
                         \mathrm d\bar{\btheta}\rangle
 &=v_{33}\mathrm d(\Wb_{55})_{\ell t}
   =(\mathrm d\widetilde{\bTheta}_4)_{\ell t},
 \qquad 2\le\ell\le t<T.
\end{aligned}
\]
Lemma~\ref{lem:auxiliary-parameter-bounds}(iii) gives
\[
 c\le\widetilde\theta_1\le C,\qquad
 c\sqrt{d/n}\le-\widetilde\theta_2\le C.
\]
The inherited invariants imply
\[
\begin{aligned}
 v_{31}^4+\frac{\sigma^2}{d}v_{31}^2
 &\ge\widetilde\theta_2^2\ge c(d/n),
 &|v_{31}|&\ge c(d/n)^{1/4},\\
 v_{33}^2
 &=(v_{33}^{\rm init})^2+
   \frac1d\sum_{B\in\{44,45,54,55\}}\|\Wb_B\|_F^2,
 &v_{33}&\ge v_{33}^{\rm init}\ge c.
\end{aligned}
\]
The first lower bound uses \(\sigma^2/d=o(\sqrt{d/n})\).
Thus
\[
\begin{aligned}
 |\mathrm d v_{31}|^2+|\mathrm d w_{13}|^2
 &\le C\bigl(v_{31}^2+(d/n)^{-1/2}\bigr)
          \bigl(|\mathrm d\widetilde\theta_1|^2
               +|\mathrm d\widetilde\theta_2|^2\bigr),\\
 \frac1d\|\mathrm d\Wb_{55}\|_F^2
 &\le\frac C d\|\mathrm d\widetilde{\bTheta}_4\|_F^2.
\end{aligned}
\]
Thus the displayed variation is a right inverse of the auxiliary map.
The last two estimates prove \eqref{eq:right-inverse}.
\end{proof}

\subsection{Proof of Lemma~\ref{lem:stage2-jacobian}}

\begin{proof}
As in the proof of the Stage-1 Jacobian bound, use the right inverse
constructed in Lemma~\ref{lem:right-inverse}. We have
\[
\begin{aligned}
 &\Jb_2\,\mathrm d\bar{\btheta}
 =(\mathrm d\widetilde\theta_1,\mathrm d\widetilde\theta_2,
              \bigl((\mathrm d\widetilde{\bTheta}_4)_{\ell t}\bigr)_{1\le\ell\le t<T}^{\mathsf T})^{\mathsf T},\\
 &\|\mathrm d\bar{\btheta}\|^2
 \le C\bigl(1+(d/n)^{-1/2}+v_{31}^2+v_{33}^2\bigr)
 \left\{d|\mathrm d\widetilde\theta_1|^2
       +d|\mathrm d\widetilde\theta_2|^2
       +\|\mathrm d\widetilde{\bTheta}_4\|_F^2\right\}.
\end{aligned}
\]
The factors \(d\) come from
\(\|\mathrm d v_{31}\Ib_d\|_F^2=d|\mathrm d v_{31}|^2\) and
\(\|\mathrm d w_{13}\Ib_d\|_F^2=d|\mathrm d w_{13}|^2\).
Since \(\Jb_2\,\mathrm d\bar{\btheta}
=(\mathrm d\widetilde\theta_1,\mathrm d\widetilde\theta_2,
\mathrm d\widetilde{\bTheta}_4)\), the minimum-norm characterization of
\(\Jb_2\Jb_2^{\mathsf T}\) gives
\[
\begin{aligned}
 \Jb_2\Jb_2^{\mathsf T}
 &\succeq\frac{c}{1+(d/n)^{-1/2}+v_{31}^2+v_{33}^2}
             \diag(d^{-1},d^{-1},\Ib)\\
 &\succeq\frac{c}{1+(d/n)^{-1/2}+v_{31}^2+v_{33}^2}
             \diag(d^{-1},d^{-1},d^{-1/2}\Ib).
\end{aligned}
\]
Apply \eqref{eq:gauge-growth} in
Lemma~\ref{lem:auxiliary-parameter-bounds}(iii).
Consequently,
\[
 \Jb_2\Jb_2^{\mathsf T}
 \succeq\frac{c}{d^{o(1)}+\frac{d/n}{T}\tau}
              \diag(d^{-1},d^{-1},d^{-1/2}\Ib).
\]
In the time convention of Lemma~\ref{lem:stage2-jacobian}, replace
the elapsed time by \(\tau-\tau_1\le\tau\).
\end{proof}

\begin{lemma}\label{lem:weighted-pl}
In the fixed-aspect-ratio regime of Theorem~\ref{thm:main-result}, there is a constant
\(c>0\) such that along the Stage-2 flow,
\begin{equation}
 -\frac{\mathrm d}{\mathrm d\tau}
 \frac{\mathcal L}d
 \ge\frac{c(d/n)}
 {T\bigl(1+(d/n)^{-1/2}+v_{31}^2+v_{33}^2\bigr)}
 \left\{\frac{\mathcal L-\mathcal L^*}{d}\right\}.
                                                               \label{eq:weighted-pl}
\end{equation}
\end{lemma}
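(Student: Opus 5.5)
The plan is to combine the auxiliary PL inequality of Lemma~\ref{lem:stage2-pl-inequality} with the right-inverse estimate of Lemma~\ref{lem:right-inverse}. The key difference from Lemma~\ref{lem:stage2-jacobian} is that we keep the instantaneous factor $1+(d/n)^{-1/2}+v_{31}^2+v_{33}^2$ rather than bounding it through \eqref{eq:gauge-growth}.

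The first step is to express the dissipation in the normalized metric. By \eqref{eq:normalized-flow},
\[
-\frac{\mathrm d}{\mathrm d\tau}\frac{\mathcal L}{d}
=d\,\Bigl\|\nabla_{\bar{\btheta}}\frac{\mathcal L}{d}\Bigr\|_{G^{-1}}^2,
\]
where $G$ is the metric \eqref{eq:normalized-raw-metric}. For any auxiliary covector $g=(g_1,g_2,\Gb_4)=\nabla_{(\widetilde\theta_1,\widetilde\theta_2,\widetilde{\bTheta}_4)}(\mathcal L/d)$, the chain rule gives $\nabla_{\bar{\btheta}}(\mathcal L/d)=\Jb_2^{\mathsf T}g$. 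The squared dual norm is therefore at least
\[
\frac{\langle g,\delta\rangle^2}{\|R\delta\|_G^2}
\]
for any auxiliary variation $\delta$, where $R$ is the right lift of Lemma~\ref{lem:right-inverse}; this holds because $\Jb_2R=\Ib$.

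The second step chooses the test variation. Take $\delta=(g_1,g_2,\Kb\Gb_4)$ with the weight $\Kb$ to be fixed below. By \eqref{eq:right-inverse},
\[
\|R\delta\|_G^2\le CW\bigl(g_1^2+g_2^2+d^{-1}\|\Kb\Gb_4\|_F^2\bigr),
\qquad W:=1+(d/n)^{-1/2}+v_{31}^2+v_{33}^2 .
\]
Taking $\Kb=d\,\Ib$ makes the numerator and the bracket coincide, so
\[
-\frac{\mathrm d}{\mathrm d\tau}\frac{\mathcal L}{d}\ \ge\ \frac{c\,d}{W}\bigl(g_1^2+g_2^2+d\|\Gb_4\|_F^2\bigr).
\]
Rewritten in terms of the unnormalized gradients $\nabla\mathcal L$, this reads
\[
-\frac{\mathrm d}{\mathrm d\tau}\frac{\mathcal L}{d}\ \ge\ \frac{c}{W}\Bigl(\frac1d|\partial_{\widetilde\theta_1}\mathcal L|^2+\frac1d|\partial_{\widetilde\theta_2}\mathcal L|^2+\|\nabla_{\widetilde{\bTheta}_4}\mathcal L\|_F^2\Bigr).
\]

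The third step compares this with the auxiliary PL inequality. The bracket dominates the left side of Lemma~\ref{lem:stage2-pl-inequality}, since $\|\nabla_{\widetilde{\bTheta}_4}\mathcal L\|_F^2\ge d^{-1/2}\|\nabla_{\widetilde{\bTheta}_4}\mathcal L\|_F^2$. Applying that lemma gives the lower bound
\[
\frac{c}{W}\cdot\frac{d/n}{T}\,(\mathcal L-\mathcal L^*).
\]

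The main obstacle is matching the normalization by $d$: Lemma~\ref{lem:stage2-pl-inequality} is stated for $\mathcal L$, while \eqref{eq:weighted-pl} is stated for $\mathcal L/d$, so one power of $d$ must be accounted for. I would track it through the proof of Lemma~\ref{lem:stage2-pl-inequality}. That proof actually establishes \eqref{eq:stage2-theta4-PL} from the normalized inequality preceding it, which reads
\[
\sum_i\Bigl|\nabla_{\widetilde\theta_i}\frac{\mathcal L}{d}\Bigr|^2+d^{o(1)}\Bigl\|\nabla_{\widetilde{\bTheta}_4}\frac{\mathcal L}{d}\Bigr\|_F^2\ \ge\ \frac cT\,\frac{\mathcal L-\mathcal L^*}{d}.
\]
I would use this normalized form directly. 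The $G$-metric bound above gives at least $(cd/W)(g_1^2+g_2^2+d\|\Gb_4\|_F^2)$, and since $d\gg d^{o(1)}$ this dominates $(c/W)$ times the left side of the normalized inequality. The result is
\[
-\frac{\mathrm d}{\mathrm d\tau}\frac{\mathcal L}{d}\ \ge\ \frac{c}{W\,T}\,\frac{\mathcal L-\mathcal L^*}{d}.
\]
Finally, using $d/n\le 1$ yields \eqref{eq:weighted-pl}.
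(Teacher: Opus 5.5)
Your argument follows the paper's own proof: the chain rule through the auxiliary map, Cauchy--Schwarz against the linear right lift of Lemma~\ref{lem:right-inverse}, and then Lemma~\ref{lem:stage2-pl-inequality}. Your test vector $\delta=(g_1,g_2,d\,\nabla_{\widetilde{\bTheta}_4}(\mathcal L/d))$ is exactly what produces the paper's bracket. However, your first display contains a factor-of-$d$ slip, and it is the only source of the normalization \emph{obstacle} you describe.

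Since \eqref{eq:normalized-flow} is the gradient flow of $\mathcal L/d$ in the metric $G$ of \eqref{eq:normalized-raw-metric}, the dissipation is $-\frac{\mathrm d}{\mathrm d\tau}\frac{\mathcal L}{d}=\|\nabla_{\bar{\btheta}}(\mathcal L/d)\|_{G^{-1}}^2=\frac1d\|\nabla_{\bar{\btheta}}\mathcal L\|^2$. The quantity $d\,\|\nabla_{\bar{\btheta}}(\mathcal L/d)\|_{G^{-1}}^2$ that you wrote equals $\|\nabla_{\bar{\btheta}}\mathcal L\|^2=-\frac{\mathrm d}{\mathrm d\tau}\mathcal L$ instead. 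Your formula would hold in the unnormalized Frobenius metric, but the lift bound \eqref{eq:right-inverse} you then use is stated in $G$. So the left-hand sides of your second and third steps should be $-\frac{\mathrm d}{\mathrm d\tau}\mathcal L$. Read literally with $\mathcal L/d$, they assert bounds $d$ times stronger than the lift provides.

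After this relabeling the argument closes immediately. Your third step becomes $-\frac{\mathrm d}{\mathrm d\tau}\mathcal L\ge\frac cW\frac{d/n}{T}(\mathcal L-\mathcal L^*)$, and dividing by $d$ gives exactly \eqref{eq:weighted-pl}.

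Equivalently, $-\frac{\mathrm d}{\mathrm d\tau}\frac{\mathcal L}{d}\ge\frac cW\bigl[\sum_{i=1}^2|\partial_{\widetilde\theta_i}(\mathcal L/d)|^2+d\,\|\nabla_{\widetilde{\bTheta}_4}(\mathcal L/d)\|_F^2\bigr]$, which is the paper's second line. The paper then lowers $d$ to $\sqrt d$ and applies Lemma~\ref{lem:stage2-pl-inequality}. Rewritten for $\mathcal L/d$, that lemma reads $\sum_i|\partial_{\widetilde\theta_i}(\mathcal L/d)|^2+\sqrt d\,\|\nabla_{\widetilde{\bTheta}_4}(\mathcal L/d)\|_F^2\ge c\frac{d/n}{T}\frac{\mathcal L-\mathcal L^*}{d}$, so no power of $d$ is left to account for.

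Your detour through the normalized inequality preceding \eqref{eq:stage2-theta4-PL} (constant $c/T$, then $d/n\le1$) is also correct once the prefactor is fixed. There you only used $d\ge1$ and $d\ge d^{o(1)}$ and discarded the spurious factor, so the detour is valid but unnecessary.
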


\begin{proof}
Similar to the proof of Lemma~\ref{lem:guide-PL} in Section~B,
the chain rule, Lemma~\ref{lem:right-inverse}, and Cauchy--Schwarz give
\begin{align*}
 -\frac{\mathrm d}{\mathrm d\tau}\frac{\mathcal L}{d}
 &=\frac1d\|\nabla_{\bar{\btheta}}\mathcal L\|^2\\
 &\ge\frac{c}{1+(d/n)^{-1/2}+v_{31}^2+v_{33}^2}
       \left[
        \sum_{i=1}^2
           \left|\nabla_{\widetilde\theta_i}\frac{\mathcal L}{d}\right|^2
        +d\left\|\nabla_{\widetilde{\bTheta}_4}
                                      \frac{\mathcal L}{d}\right\|_F^2
       \right]\\
 &\ge\frac{c}{1+(d/n)^{-1/2}+v_{31}^2+v_{33}^2}
       \left[
        \sum_{i=1}^2
           \left|\nabla_{\widetilde\theta_i}\frac{\mathcal L}{d}\right|^2
        +\sqrt d\left\|\nabla_{\widetilde{\bTheta}_4}
                                      \frac{\mathcal L}{d}\right\|_F^2
       \right]\\
 &\ge\frac{c(d/n)}
       {T(1+(d/n)^{-1/2}+v_{31}^2+v_{33}^2)}
       \frac{\mathcal L-\mathcal L^\star}{d}.
\end{align*}
(The last line is Lemma~\ref{lem:stage2-pl-inequality}.)
\end{proof}

\begin{proof}[Convergence part of Theorem~\ref{thm:main-result}]
As in the proof of Theorem~\ref{thm:main-result} for Stage 1,
Lemma~\ref{lem:auxiliary-parameter-bounds}(iii) gives global existence,
the parameter bounds, and entry into the low-loss region. Hence
Lemmas~\ref{lem:stage2-coefficient-coordinate-bounds} and
\ref{lem:weighted-pl} apply from Stage-2 time zero.

\par\medskip\noindent\textbf{(i)}
By \eqref{eq:gauge-growth} and \eqref{eq:weighted-pl},
\[
 -\frac{\mathrm d}{\mathrm d\tau}
       \frac{\mathcal L-\mathcal L^\star}{d}
 \ge\frac{c(d/n)/T}{d^{o(1)}+C(d/n)\tau/T}
       \frac{\mathcal L-\mathcal L^\star}{d}.
\]
Since, exactly as in the Stage-1 proof,
\[
 \frac{\mathcal L(0)-\mathcal L^\star}{d}
 \le\frac{\mathcal L(0)}d
 \le\frac{49}{100T}\le C\frac{d/n}{T},
\]
integration gives
\begin{equation}
 \frac{\mathcal L(\tau)-\mathcal L^\star}{d}
 \le C\frac{d/n}{T}
       \left(1+\frac{(d/n)\tau}{T\,d^{o(1)}}\right)^{-\alpha},
 \qquad \alpha>0.
 \label{eq:uniform-polynomial-rate}
\end{equation}

\par\medskip\noindent\textbf{(ii)}
By \eqref{eq:stage2-loss-AB-definition},
\[
 \frac{\mathcal L-\mathcal L^\star}{d}
 =\frac{\mathcal L_A}{d}+\frac{\mathcal L_B}{d}.
\]
Write \(\nu=\nu_{d,n}\), with
\(\widehat R_{T,t}=p_{t,\widetilde{\btheta}'^\star}-1\).
Lemma~\ref{lem:auxiliary-parameter-bounds}(iii), and (ii) applied at the
minimizer using Lemma~\ref{lem:warm-gap}, place both points in
\[
 \frac1{100}\le\widetilde\theta_1\le2,\qquad
 \frac1{150}\le(-\widetilde\theta_2)\le\frac{10}{3},\qquad
 (d/n)(-\widetilde\theta_2)^2\le\frac9{10}.
\]
By \eqref{eq:finite-BC-extended-bounds} in
Lemma~\ref{lem:finite-BC-comparison},
\[
 c\le C_\nu(-\widetilde\theta_2)\le C,\qquad
 \left|\left(\frac{B_\nu}{C_\nu}\right)'(-\widetilde\theta_2)\right|\le C.
\]
Hence
\[
 \left|\widetilde\theta_1^\star(\widetilde\theta_2)
             -\widetilde\theta_1^\star\right|
 =\left|\int_{-\widetilde\theta_2^\star}^{-\widetilde\theta_2}
          \left(\frac{B_\nu}{C_\nu}\right)'(x)\,\mathrm dx\right|
 \le C|\widetilde\theta_2-\widetilde\theta_2^\star|.
\]
The definition of \(\mathcal L_B\) in \eqref{eq:stage2-completed-square}
and the two inequalities of \eqref{eq:finite-scalar-quadratic-gap} in
Lemma~\ref{lem:finite-scalar-landscape} give
\begin{align*}
 &|\widetilde\theta_1-\widetilde\theta_1^\star|^2
  +|\widetilde\theta_2-\widetilde\theta_2^\star|^2\\
 &\quad\le2\left|\widetilde\theta_1-
                  \widetilde\theta_1^\star(\widetilde\theta_2)\right|^2
       +C|\widetilde\theta_2-\widetilde\theta_2^\star|^2
 \le CT\frac{\mathcal L_B}{d},\\
 \frac{\mathcal L_B}{d}
 &\le\frac CT\left\{
 \left|\widetilde\theta_1-
             \widetilde\theta_1^\star(\widetilde\theta_2)\right|^2
 +|\widetilde\theta_2-\widetilde\theta_2^\star|^2\right\}\\
 &\le\frac CT\left\{
 |\widetilde\theta_1-\widetilde\theta_1^\star|^2
 +|\widetilde\theta_2-\widetilde\theta_2^\star|^2\right\}.
\end{align*}

For \(1\le t<T\), subtracting the two instances of
\eqref{eq:stage2-conditional-normal} gives
\[
 \Hb_t\bigl(\widetilde\theta_5^\star
             (\widetilde\theta_1,\widetilde\theta_2)
                    -\widetilde\theta_5^\star\bigr)_{1:t,t}
 =-\bigl(\widetilde\theta_1\widetilde\theta_2^t
       -\widetilde\theta_1^\star(\widetilde\theta_2^\star)^t\bigr)
       \bigl(\langle\lambda^j,\lambda^{t+1}\rangle_{d,n}\bigr)_{j=1}^t.
\]
Thus, for \(1\le j\le t\),
\begin{align*}
 &\left\langle\sum_{i=1}^t
       \bigl(\widetilde\theta_5^\star(\widetilde\theta_1,\widetilde\theta_2)
                   -\widetilde\theta_5^\star\bigr)_{it}\lambda^i
       +\bigl(\widetilde\theta_1\widetilde\theta_2^t
             -\widetilde\theta_1^\star(\widetilde\theta_2^\star)^t\bigr)
                         \lambda^{t+1},\lambda^j\right\rangle_{d,n}=0.
\end{align*}
By \eqref{eq:stage2-completed-square} and
\eqref{eq:finite-function-quadratics}, squaring the coefficient expansion
of \(R_t-\widehat R_{T,t}\) therefore gives
\begin{align*}
 \sum_{t=1}^T\|R_t-\widehat R_{T,t}\|_{d,n}^2
 &=\sum_{t=1}^{T-1}
 \bigl(\widetilde\theta_5-
       \widetilde\theta_5^\star(\widetilde\theta_1,\widetilde\theta_2)\bigr)_{1:t,t}^{\mathsf T}
 \Hb_t
 \bigl(\widetilde\theta_5-
       \widetilde\theta_5^\star(\widetilde\theta_1,\widetilde\theta_2)\bigr)_{1:t,t}\\
 &\quad+\sum_{k=0}^{T-1}c_{k,\nu}
 \bigl(\widetilde\theta_1(-\widetilde\theta_2)^k
       -\widetilde\theta_1^\star(-\widetilde\theta_2^\star)^k\bigr)^2\\
 &=\frac{2T\mathcal L_A}{d}
 +\sum_{k=0}^{T-1}c_{k,\nu}
 \bigl(\widetilde\theta_1(-\widetilde\theta_2)^k
       -\widetilde\theta_1^\star(-\widetilde\theta_2^\star)^k\bigr)^2.
\end{align*}

The coefficient bounds in Lemma~\ref{lem:finite-scalar-comparison}
and \eqref{eq:finite-BC-extended-bounds} give
\begin{align*}
 c_{0,\nu},c_{1,\nu}&\ge c,\qquad
 \sum_{k=0}^{T-1}c_{k,\nu}(-\widetilde\theta_2)^{2k}
 =C_\nu(-\widetilde\theta_2)\le C,\\
 \sum_{k=1}^{T-1}k^2c_{k,\nu}(-\widetilde\theta_2)^{2k-2}
 &=\frac14\left\{C_\nu''(-\widetilde\theta_2)
                 +\frac{C_\nu'(-\widetilde\theta_2)}{-\widetilde\theta_2}\right\}
 \le C.
\end{align*}
The mean-value theorem gives
\begin{align*}
 &\sum_{k=0}^{T-1}c_{k,\nu}
 \bigl(\widetilde\theta_1(-\widetilde\theta_2)^k
       -\widetilde\theta_1^\star(-\widetilde\theta_2^\star)^k\bigr)^2\\
 &\quad\le2C_\nu(-\widetilde\theta_2)
                     |\widetilde\theta_1-\widetilde\theta_1^\star|^2\\
 &\qquad+2(\widetilde\theta_1^\star)^2
           |\widetilde\theta_2-\widetilde\theta_2^\star|^2
       \sum_{k=1}^{T-1}k^2c_{k,\nu}
       \max\{-\widetilde\theta_2,-\widetilde\theta_2^\star\}^{2k-2}\\
 &\quad\le C\left\{
       |\widetilde\theta_1-\widetilde\theta_1^\star|^2
       +|\widetilde\theta_2-\widetilde\theta_2^\star|^2\right\}
 \le CT\frac{\mathcal L_B}{d}.
\end{align*}
Conversely, since
\[
 \widetilde\theta_1(\widetilde\theta_2-\widetilde\theta_2^\star)
 =\widetilde\theta_1\widetilde\theta_2
  -\widetilde\theta_1^\star\widetilde\theta_2^\star
  -\widetilde\theta_2^\star(\widetilde\theta_1-\widetilde\theta_1^\star),
\]
the first two summands imply
\begin{align*}
 \frac{\mathcal L_B}{d}
 &\le\frac CT\left\{
       |\widetilde\theta_1-\widetilde\theta_1^\star|^2
       +|\widetilde\theta_2-\widetilde\theta_2^\star|^2\right\}\\
 &\le\frac CT\left\{
       |\widetilde\theta_1-\widetilde\theta_1^\star|^2
       +|\widetilde\theta_1\widetilde\theta_2
                   -\widetilde\theta_1^\star\widetilde\theta_2^\star|^2\right\}\\
 &\le\frac CT\sum_{k=0}^{T-1}c_{k,\nu}
 \bigl(\widetilde\theta_1(-\widetilde\theta_2)^k
       -\widetilde\theta_1^\star(-\widetilde\theta_2^\star)^k\bigr)^2.
\end{align*}
Together with \(\mathcal L-\mathcal L^\star=\mathcal L_A+\mathcal L_B\),
these estimates yield
\begin{equation}
 \frac cT\sum_{t=1}^T
 \|R_t(\tau)-\widehat R_{T,t}\|_{d,n}^2
 \le\frac{\mathcal L(\tau)-\mathcal L^*}{d}
 \le\frac CT\sum_{t=1}^T
 \|R_t(\tau)-\widehat R_{T,t}\|_{d,n}^2.
 \label{eq:rollout-quadratic-growth}
\end{equation}
Here \(c,C\) may depend on the fixed comparison bounds in \(n=\Theta(d)\),
but not on \(d,n,T\).
Combining this inequality with \eqref{eq:uniform-polynomial-rate} yields
\begin{equation}
 \frac{\mathcal L(\tau)-\mathcal L^*}{d}
 +\sum_{t=1}^T
 \|R_t(\tau)-\widehat R_{T,t}\|_{d,n}^2
 \le C T^C\left(\frac dn\right)^{-C}\frac{d/n}{T}
 \left(1+\frac{(d/n)\tau}{T\,d^{o(1)}}\right)^{-\alpha}.
 \label{eq:main-convergence-rate}
\end{equation}
Solving \eqref{eq:main-convergence-rate} for the required accuracy gives
the Stage-2 time bound in Theorem~\ref{thm:main-result}.
\end{proof}

\section{Comparison with the Heavy-Ball method}
\label{app:hb}

\subsection{Optimality of the Heavy-Ball trajectory}
\begin{lemma}\label{lem:tv-hb}
The jointly optimal trajectory $\wb_t^{\rm HB}$ defined after \eqref{eq:learned-hb-update}
exists and is unique.  Its residual $ R_t^{\rm HB}$ at every time \(t\), which satisfies $\wb_t^{\rm HB}- \wstar= R_t^{\rm HB}(\Sb)\wstar$, is the unique
minimizer of
\[
 \min_{\substack{R(0)=-1\\\deg R\le t}}\|R\|_{d,n}^2.
\]
 Under \eqref{eq:common-growing-regime}, for all sufficiently large \(d\),
uniformly for \(1\le t\le T\),
\begin{equation}
 \frac1d\E\|\wb_t^{\rm HB}-\wstar\|^2
 \ge\frac{1-d/n}{2}\left(\frac dn\right)^t.
 \label{eq:finite-tail-residual-lower}
\end{equation}
\end{lemma}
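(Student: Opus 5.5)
Three things need proving: existence and uniqueness of the optimal Heavy-Ball trajectory, the identification of its residual with the degree-$t$ minimal-norm residual polynomial, and the lower bound \eqref{eq:finite-tail-residual-lower}. For the first two I will use the classical link between orthogonal polynomials and three-term recurrences. Any trajectory of \eqref{eq:learned-hb-update} satisfies $\wb_t^{\rm HB}=p_t(\Sb)\wstar$, where $p_t$ has degree at most $t$ and $p_t(0)=0$. Hence $R_t^{\rm HB}=p_t-1$ has $R(0)=-1$, and conditioning on $\Sb$ gives
\[
\frac1{2T}\sum_{t=1}^T\E\|\wb_t^{\rm HB}-\wstar\|^2=\frac d{2T}\sum_{t=1}^T\|R_t^{\rm HB}\|_{d,n}^2 .
\]
Each summand is at least $\min_{R(0)=-1,\deg R\le t}\|R\|_{d,n}^2$. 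That minimization is a strictly convex quadratic problem, because the Gram matrices $\Hb_t$ are positive definite by Lemma~\ref{lem:finite-wishart-moment-lower}. So it has a unique minimizer $R_t^\star$, characterized by $\langle R_t^\star,\lambda^j\rangle_{d,n}=0$ for $1\le j\le t$. Equivalently, $R_t^\star$ is orthogonal to every polynomial of degree less than $t$ in the weighted inner product $\langle f,g\rangle':=\langle f,\lambda g\rangle_{d,n}$, since $\lambda g$ ranges over $\operatorname{span}\{\lambda,\ldots,\lambda^t\}$.

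The residuals $R_t^\star$ are therefore the orthogonal polynomials for the measure $\lambda\,\mathrm d\nu_{d,n}$, normalized by $R_t^\star(0)=-1$. They satisfy a three-term recurrence
\[
R_{t+1}^\star=(a_t\lambda+b_t)R_t^\star+c_tR_{t-1}^\star ,
\]
and evaluating at $\lambda=0$ gives $b_t+c_t=1$. Writing $b_t=1+\beta_t$, $c_t=-\beta_t$ and $a_t=-\alpha_t$, this is exactly $R_{t+1}=(1-\alpha_t\lambda)R_t+\beta_t(R_t-R_{t-1})$, which is the Heavy-Ball residual recurrence. The base step $R_1^\star=\alpha_0\lambda-1$ matches $\wb_1^{\rm HB}=\alpha_0\Xb\yb^\top$. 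The recurrence needs $R_t^\star$ to have exact degree $t$; this holds because $\nu_{d,n}$ has infinite support ($\Sb$ has a continuous law), so the weighted measure is nondegenerate. Thus a single Heavy-Ball coefficient sequence attains the per-$t$ minimum simultaneously for every $t$. It is therefore a joint minimizer, and any minimizer must have $R_t=R_t^\star$ for all $t$, which gives uniqueness of the trajectory. The coefficients are unique as well, by reading them off successive degrees, provided the weighted orthogonal polynomials are nondegenerate.

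For the lower bound I will pass to Marchenko--Pastur. By Lemma~\ref{lem:projection-transfer} with $\ell$ free, or more directly by Lemma~\ref{lem:finite-transfer} combined with the two minimality comparisons used in its proof, $\min\|R\|_{d,n}^2$ differs from $\min\|R\|_{\mathrm{MP}}^2$ by $d^{-1+o(1)}$. I will then compute the MP value exactly. With $\gamma=d/n$, the shifted MP measure $\lambda\,\mathrm d\mu$ has explicit orthogonal polynomials, and the minimal residual norm equals $\gamma^t/S_t$ with $S_t=\sum_{r=0}^t\gamma^r$. This is consistent with the leading coefficient $\sum_k(d/n)^k/S_k$ appearing in Lemma~\ref{lem:mp-function-unique} and with the $t=1$ check $(d/n)/(1+d/n)$ computed in Lemma~\ref{lem:auxiliary-parameter-bounds}(ii). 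Then $S_t\le 1/(1-\gamma)$ gives $\gamma^t/S_t\ge(1-\gamma)\gamma^t$.

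The main obstacle is the error term. Since $t\le T$ and $\gamma^T=d^{-o(1)}$, the transfer error $d^{-1+o(1)}$ is at most $\tfrac12(1-\gamma)\gamma^t$, which leaves the claimed constant $\tfrac{1-d/n}{2}$. For the exact MP formula I would, if needed, cite or rederive the Chebyshev-type generating function from \citet{pedregosa2020spectral}.
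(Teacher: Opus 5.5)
Your proposal is correct and follows essentially the paper's proof: the normal equations for the per-$t$ minimizer, exact degree from positivity of $\int\lambda q^2\,\mathrm d\nu_{d,n}$, a three-term recurrence matched to the Heavy-Ball residual recurrence, and joint optimality because a single trajectory attains every per-$t$ minimum. The paper derives the recurrence directly via the Stieltjes argument instead of citing classical orthogonal-polynomial theory. For the lower bound it uses the multiplicative equivalence \eqref{eq:finite-norm-equivalence} together with the exact value $\|R_t^*\|_{\mathrm{MP}}^2=(d/n)^t/\sum_{r=0}^t(d/n)^r$ from Lemma~\ref{lem:mp-family}, which removes the need for your additive-error absorption and for rederiving the MP formula.
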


\begin{proof}
Every trajectory in \eqref{eq:learned-hb-update} has a residual \(R_t\)
with \(R_t(0)=-1\) and \(\deg R_t\le t\).
Use the same definition of \(p_t^\star\) also at \(t=T\).
The finite moment Gram matrices are positive definite, so the minimizers
exist uniquely, and their normal equations are
\begin{equation}
 \langle p_t^\star-1,\lambda p\rangle_{d,n}=0
 \quad(\deg p<t),\qquad
 p_0^\star=0,\qquad p_t^\star(0)-1=-1.
 \label{eq:hb-optimal-normal-equations}
\end{equation}
For \(t\ge1\),
\[
 \deg(p_t^\star-1)<t
 \ \Longrightarrow\
 0=\langle p_t^\star-1,\lambda(p_t^\star-1)\rangle_{d,n}
   =\int_0^\infty\lambda(p_t^\star-1)^2\,\mathrm d\nu_{d,n}>0,
\]
a contradiction.  Hence
\[
 \deg(p_t^\star-1)=t,\qquad
 [\lambda^t](p_t^\star-1)\ne0,\qquad
 \int_0^\infty\lambda(p_t^\star-1)(p_j^\star-1)
                 \,\mathrm d\nu_{d,n}=0\quad(0\le j<t).
\]
Define the HB coefficients by
\begin{equation}
\begin{aligned}
 \alpha_0^{\rm HB}
 &=\frac{\langle1,\lambda\rangle_{d,n}}
         {\langle\lambda,\lambda\rangle_{d,n}}
   =\frac1{1+d/n+1/n},
 &\beta_0^{\rm HB}&=0,\\
 \alpha_t^{\rm HB}
 &=-\frac{[\lambda^{t+1}](p_{t+1}^\star-1)}
          {[\lambda^t](p_t^\star-1)},
 &&1\le t<T,\\
 \beta_t^{\rm HB}
 &=\frac{[\lambda^t](p_{t+1}^\star-1)
          +\alpha_t^{\rm HB}[\lambda^{t-1}](p_t^\star-1)}
         {[\lambda^t](p_t^\star-1)}-1,
 &&1\le t<T.
\end{aligned}
 \label{eq:hb-optimal-coefficients}
\end{equation}
For \(1\le t<T\), the leading terms cancel, and
\begin{align*}
 &\deg\{p_{t+1}^\star-1+\alpha_t^{\rm HB}\lambda(p_t^\star-1)\}\le t,\\
 &\int_0^\infty\lambda
 \{p_{t+1}^\star-1+\alpha_t^{\rm HB}\lambda(p_t^\star-1)\}
                   (p_j^\star-1)\,\mathrm d\nu_{d,n}\\
 &\qquad=\alpha_t^{\rm HB}
      \langle p_t^\star-1,\lambda^2(p_j^\star-1)\rangle_{d,n}=0,
 \qquad 0\le j\le t-2.
\end{align*}
Thus
\[
 p_{t+1}^\star-1+\alpha_t^{\rm HB}\lambda(p_t^\star-1)
 \in\operatorname{span}\{p_t^\star-1,p_{t-1}^\star-1\}.
\]
Evaluating at zero and comparing the degree-\(t\) coefficients gives
\begin{equation}
\begin{aligned}
 p_1^\star(\lambda)&=\alpha_0^{\rm HB}\lambda,\\
 p_{t+1}^\star-1
 &=(1+\beta_t^{\rm HB}-\alpha_t^{\rm HB}\lambda)(p_t^\star-1)
       -\beta_t^{\rm HB}(p_{t-1}^\star-1),
 \qquad 1\le t<T.
\end{aligned}
 \label{eq:hb-optimal-three-term}
\end{equation}
This is the recurrence in \citet[Theorem~3.1]{pedregosa2020spectral}.
Substituting \(\lambda=\Sb\) gives
\[
\begin{aligned}
 p_1^\star(\Sb)\wstar&=\alpha_0^{\rm HB}\Xb\yb^{\mathsf T},\\
 p_{t+1}^\star(\Sb)\wstar
 &=p_t^\star(\Sb)\wstar
   -\alpha_t^{\rm HB}\{\Sb p_t^\star(\Sb)\wstar-\Xb\yb^{\mathsf T}\}\\
 &\quad+\beta_t^{\rm HB}
           \{p_t^\star(\Sb)\wstar-p_{t-1}^\star(\Sb)\wstar\}.
\end{aligned}
\]
Therefore \(p_t^\star(\Sb)\wstar\) is an admissible HB trajectory.
For every admissible residual sequence, \eqref{eq:hb-optimal-normal-equations}
gives the exact identity
\begin{equation}
 \frac d{2T}\sum_{t=1}^T
     \left\{\|R_t\|_{d,n}^2-\|p_t^\star-1\|_{d,n}^2\right\}
 =\frac d{2T}\sum_{t=1}^T
                 \|R_t-(p_t^\star-1)\|_{d,n}^2\ge0.
 \label{eq:hb-joint-optimality}
\end{equation}
Equality holds if and only if \(R_t=p_t^\star-1\) for every \(t\).
Thus the jointly optimal trajectory is unique, and
\eqref{eq:hb-optimal-coefficients} uniquely determines its coefficients.

Finally, \eqref{eq:finite-norm-equivalence} and
\eqref{eq:mp-identities} in Lemma~\ref{lem:mp-family} give
\begin{align*}
 \frac1d\E\|\wb_t^{\rm HB}-\wstar\|^2
 &=\|p_t^\star-1\|_{d,n}^2\\
 &\ge\frac12\min_{\substack{R(0)=-1\\\deg R\le t}}
                 \|R\|_{\mathrm{MP}}^2
 =\frac{(d/n)^t}{2\sum_{r=0}^t(d/n)^r}
 \ge\frac{1-d/n}{2}\left(\frac dn\right)^t.
\end{align*}
\end{proof}

\subsection{The highest-degree coefficient}
\begin{lemma}
\label{lem:finite-hb-defect}
Under \eqref{eq:common-growing-regime}, let \(\widehat R_{T,t}\) be the
finite-Wishart model-optimal residual. For all sufficiently large \(d\),
uniformly for \(0\le t<T\),
\begin{equation}
 \frac{\E\|\widehat R_{T,t+1}(\Sb)\wstar
                         -(\wb_t^{\rm HB}-\wstar)\|^2}
      {\E\|\wb_t^{\rm HB}-\wstar\|^2}
 \le\frac{C}{1-d/n}\left(1-\frac{(d/n)^2}{128}\right)^{2t}.
 \label{eq:finite-optimal-hb-relative}
\end{equation}
\end{lemma}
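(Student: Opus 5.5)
We work entirely in the polynomial picture. Write $\widehat R_{T,t+1}=p_{t+1,\widetilde{\btheta}'^\star}-1$ and $R^\star_t=p_t^\star-1$, so the ratio in \eqref{eq:finite-optimal-hb-relative} is $\|\widehat R_{T,t+1}-R^\star_t\|_{d,n}^2/\|R^\star_t\|_{d,n}^2$. By Lemma~\ref{lem:stage2-loss-decomposition}, $\widehat R_{T,t+1}$ is the constrained minimizer $R^\nu_{t+1,\ell}$ with prescribed leading coefficient $\ell=\widetilde\theta_1^\star(\widetilde\theta_2^\star)^t$. By Lemma~\ref{lem:tv-hb}, $R^\star_t$ is the unconstrained minimizer of degree $\le t$, which is $R^\nu_{t+1,0}$. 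The affine formula \eqref{eq:stage2-projection-affine} then gives exactly
\[
\widehat R_{T,t+1}-R^\star_t=\ell\,\bigl(R^\nu_{t+1,1}-R^\nu_{t+1,0}\bigr),
\]
so the numerator equals $\ell^2\|R^\nu_{t+1,1}-R^\nu_{t+1,0}\|_{d,n}^2$. This is the key reduction; everything after it is estimating two explicit quantities.

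The first step is to bound $\|R^\nu_{t+1,1}-R^\nu_{t+1,0}\|_{d,n}^2$. The difference $Q_{t+1}:=R^\nu_{t+1,1}-R^\nu_{t+1,0}$ is monic of degree $t+1$, vanishes at $0$, and is orthogonal to $\lambda,\dots,\lambda^t$. Hence it is the minimal-norm monic polynomial in $\lambda\cdot\mathrm{span}\{1,\ldots,\lambda^t\}$. We transfer to MP using Lemma~\ref{lem:projection-transfer}, which controls the error by $d^{-1+o(1)}$ with $|\ell|\le1$ in the fixed-leading argument, and then apply the MP identities of Lemma~\ref{lem:mp-family}. These give the norm explicitly as $(d/n)^{t+1}S_{t+1}/S_t$, matching the coefficients $c_{k,\mu}$ in \eqref{eq:function-BC}. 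So the numerator is at most $C\,\ell^2 (d/n)^{t+1}$, with $\ell^2\le C(\widetilde\theta_2^\star)^{2t}$ because $\widetilde\theta_1^\star\le 2$.

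The second step supplies the denominator and the decay. Lemma~\ref{lem:tv-hb} gives $\|R^\star_t\|_{d,n}^2\ge\frac{1-d/n}{2}(d/n)^t$. The ratio is then at most $\frac{C}{1-d/n}(d/n)(\widetilde\theta_2^\star)^{2t}$. Lemma~\ref{lem:finite-function-location} gives $|\widetilde\theta_2^\star|<1-(d/n)^2/128$, which yields \eqref{eq:finite-optimal-hb-relative} with the harmless factor $d/n\le1$ absorbed into $C$.

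The main obstacle is the uniformity of the MP transfer for the monic minimizer over all $t<T$. The raw error bound in Lemma~\ref{lem:projection-transfer} is additive, of size $d^{-1+o(1)}$, whereas the target quantity is of size $(d/n)^{t+1}$. Since $(d/n)^{T}\ge d^{-o(1)}$ under \eqref{eq:common-growing-regime}, the additive error is still negligible relative to it. If that route proves awkward, we would instead bound $\|Q_{t+1}\|_{d,n}^2$ above directly by comparison: evaluate the $d,n$-norm at the MP-optimal monic polynomial and apply Lemma~\ref{lem:finite-transfer}, which gives a factor-$3/2$ norm equivalence. This one-sided bound is all the argument needs.
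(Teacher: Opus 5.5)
Your proposal is correct and follows the paper's proof essentially step for step. The paper uses the same affine reduction $\widehat R_{T,t+1}-(p_t^\star-1)=\ell\,(R^\nu_{t+1,1}-R^\nu_{t+1,0})$ with $\ell=\widetilde\theta_1^\star(\widetilde\theta_2^\star)^t$, an MP comparison for the numerator (packaged there as Lemma~\ref{lem:finite-leading-gap} via the coefficient transfer of Lemma~\ref{lem:finite-scalar-comparison}, which itself rests on Lemma~\ref{lem:projection-transfer}), the MP lower bound on $\|p_t^\star-1\|_{d,n}^2$, and Lemma~\ref{lem:finite-function-location} for $|\widetilde\theta_2^\star|$. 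One harmless slip: $\|R^\mu_{t+1,1}-R^\mu_{t+1,0}\|_{\mathrm{MP}}^2=\|R_t^*\|_{\mathrm{MP}}^2+\|\pi_{t+1}\|_{\mathrm{MP}}^2=(d/n)^tS_{t+1}/S_t=c_{t,\mu}$, not $(d/n)^{t+1}S_{t+1}/S_t$, so there is no spare factor $d/n$; this does not matter because you discarded that factor anyway, and your fallback $\|R^\nu_{t+1,1}-R^\nu_{t+1,0}\|_{d,n}^2\le\|R^\mu_{t+1,1}-R^\mu_{t+1,0}\|_{d,n}^2\le\frac32\|R^\mu_{t+1,1}-R^\mu_{t+1,0}\|_{\mathrm{MP}}^2$ (minimality plus \eqref{eq:finite-norm-equivalence}) is a slightly cleaner route to the needed one-sided bound than the paper's two-sided coefficient transfer.
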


\begin{proof}
By Lemmas~\ref{lem:global-recurrence}
and~\ref{lem:stage2-loss-decomposition}, the lower-degree coefficients are
optimal with the highest-degree coefficient fixed:
\begin{equation}
 \widehat R_{T,t+1}
 =\argmin_{\substack{R(0)=-1,\ \deg R\le t+1\\
 [\lambda^{t+1}]R=\widetilde\theta_1^\star(\widetilde\theta_2^\star)^t}}
       \|R\|_{d,n}^2.
 \label{eq:finite-optimal-leading-coefficient}
\end{equation}
When the highest-degree coefficient is zero, the same minimization is
over degree at most \(t\). Hence Lemma~\ref{lem:tv-hb} gives
\[
 R_{t+1,0}^{\nu_{d,n}}=p_t^\star-1,\qquad
 (p_t^\star(\Sb)-\Ib_d)\wstar=\wb_t^{\rm HB}-\wstar.
\]
Applying Lemma~\ref{lem:finite-leading-gap} with
\(\ell=\widetilde\theta_1^\star(\widetilde\theta_2^\star)^t\),
\begin{align*}
 &\frac{\E\|\widehat R_{T,t+1}(\Sb)\wstar
                         -(\wb_t^{\rm HB}-\wstar)\|^2}
      {\E\|\wb_t^{\rm HB}-\wstar\|^2}\\
 &\quad=\frac{\|\widehat R_{T,t+1}-(p_t^\star-1)\|_{d,n}^2}
                  {\|p_t^\star-1\|_{d,n}^2}
 =\frac{(\widetilde\theta_1^\star)^2(\widetilde\theta_2^\star)^{2t}
                       c_{t,\nu_{d,n}}}{\|p_t^\star-1\|_{d,n}^2}\\
 &\quad\le\frac{4(\widetilde\theta_1^\star)^2}{1-d/n}
                        |\widetilde\theta_2^\star|^{2t}.
\end{align*}
Lemmas~\ref{lem:finite-function-location}
and~\ref{lem:finite-scalar-comparison} give
\[
 0<\widetilde\theta_1^\star
 \le\frac{2\sum_{k=0}^{T-1}(d/n)^k}{(1+d/n)/2}\le8,
 \qquad |\widetilde\theta_2^\star|<1-\frac{(d/n)^2}{128}<1.
\]
Substitution proves \eqref{eq:finite-optimal-hb-relative}.
\end{proof}

\begin{corollary}
\label{cor:growing-horizon-hb}
Under the assumptions of Lemma~\ref{lem:finite-hb-defect}, for all sufficiently
large \(d\), uniformly for \(T/2\le t<T\),
\begin{equation}
 \frac{\E\|\widehat R_{T,t+1}(\Sb)\wstar
                         -(\wb_t^{\rm HB}-\wstar)\|^2}
      {\E\|\wb_t^{\rm HB}-\wstar\|^2}
 \le\exp\!\left[-2(\log d)^{1/4}\right].
 \label{eq:growing-horizon-HB}
\end{equation}
\end{corollary}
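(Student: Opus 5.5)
The plan is to derive Corollary~\ref{cor:growing-horizon-hb} directly from Lemma~\ref{lem:finite-hb-defect}. For every \(0\le t<T\) that lemma gives a relative error of at most \(\frac{C}{1-d/n}(1-(d/n)^2/128)^{2t}\). The right-hand side is decreasing in \(t\), so on the range \(T/2\le t<T\) it is maximized at the smallest index, \(t=\lceil T/2\rceil\ge T/2\). It therefore suffices to show that
\[
 \frac{C}{1-d/n}\Bigl(1-\frac{(d/n)^2}{128}\Bigr)^{T}\le\exp\!\bigl[-2(\log d)^{1/4}\bigr]
\]
for all sufficiently large \(d\).

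To do this, take logarithms and use \(\log(1-x)\le -x\). The left-hand side is at most
\[
 \log\frac{C}{1-d/n}-\frac{(d/n)^2}{128}\,T .
\]
By \eqref{eq:common-growing-regime} we have \(d/n\le 1/10\), so \(\log\frac{C}{1-d/n}\le C'\) is bounded by an absolute constant. For the second term, the tail-margin condition \eqref{eq:common-tail-margin}, namely \((d/n)^2T\ge1024(\log d)^{1/4}\), gives
\[
 \frac{(d/n)^2}{128}\,T\ge 8(\log d)^{1/4}.
\]
Hence the logarithm of the bound is at most \(C'-8(\log d)^{1/4}\). This is below \(-2(\log d)^{1/4}\) as soon as \(6(\log d)^{1/4}\ge C'\), which holds for all sufficiently large \(d\).

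The only delicate point is justifying \eqref{eq:common-tail-margin} itself, since it is the one place where the scaling of \(T\) against \(d/n\) enters. It is stated as a standing assumption alongside \eqref{eq:common-growing-regime}, and it is compatible with \(T=\Theta((\log d)^{1/4})\) and \(d/n=\Theta(1)\) provided the implied constant in \(T\) is large enough. I will invoke it directly rather than rederive it. If it were unavailable, the fallback is weaker: one would only get \(\exp[-c(\log d)^{1/4}]\) with \(c\) depending on the constants in the regime, and the factor \(2\) in the exponent would have to be absorbed by choosing those constants appropriately.
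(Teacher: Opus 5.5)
Your proposal is correct and matches the paper's proof essentially step for step. You use \(2t\ge T\) and \(\log(1-x)\le -x\) to bound the Lemma~\ref{lem:finite-hb-defect} estimate by \(\frac{C}{1-d/n}\exp[-(d/n)^2T/128]\), apply the standing tail-margin condition \eqref{eq:common-tail-margin} to get \(\exp[-8(\log d)^{1/4}]\), and absorb the bounded prefactor for large \(d\).
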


\begin{proof}
By \eqref{eq:common-tail-margin}, for \(t\ge T/2\),
\begin{align*}
 &\frac{C}{1-d/n}\left(1-\frac{(d/n)^2}{128}\right)^{2t}
 \le\frac{C}{1-d/n}\exp\!\left[-\frac{(d/n)^2T}{128}\right]\\
 &\qquad\le\frac{C}{1-d/n}\exp\!\left[-8(\log d)^{1/4}\right]
 \le\exp\!\left[-2(\log d)^{1/4}\right].
\end{align*}
\end{proof}

\subsection{The trained trajectory}

\begin{proof}[Completion of the proof of Theorem~\ref{thm:main-result}]
For every polynomial \(f\), conditioning on \(\Sb\) gives
\(\E\|f(\Sb)\wstar\|^2=d\|f\|_{d,n}^2\).  The exact decomposition is
\begin{equation}
 \wb_{t+1}(\tau)-\wb_t^{\rm HB}
 =\{R_{t+1}(\tau)-\widehat R_{T,t+1}\}(\Sb)\wstar
 +\widehat R_{T,t+1}(\Sb)\wstar-(\wb_t^{\rm HB}-\wstar).
                                                               \label{eq:learned-optimal-hb-difference}
\end{equation}
Equations \eqref{eq:main-convergence-rate} and
\eqref{eq:finite-tail-residual-lower} give, uniformly for \(T/2\le t<T\),
\begin{equation}
 \frac{d\|R_{t+1}(\tau)-\widehat R_{T,t+1}\|_{d,n}^2}
      {\E\|\wb_t^{\rm HB}-\wstar\|^2}
 \le\frac{C}{(1-d/n)(d/n)^{2T+t}}
 \left(1+\frac{(d/n)\tau}{T\,d^{o(1)}}\right)^{-\alpha}.
 \label{eq:learned-optimal-hb-relative}
\end{equation}
Here we used
\(T^{C-1}(d/n)^{1-C}\le C(d/n)^{-2T}\) for large \(d\).
Choose \(\tau\) so that \eqref{eq:learned-optimal-hb-relative} is at most
\(\epsilon/4\) simultaneously for all these indices.  It suffices to take
\[
 \tau=O\!\left(d^{o(1)}\frac{T}{d/n}
 \left[\frac1{(1-d/n)(d/n)^{3T}\epsilon}\right]^{1/\alpha}\right).
\]
Corollary~\ref{cor:growing-horizon-hb} bounds the second term in
\eqref{eq:learned-optimal-hb-difference}, in the same squared relative norm,
by \(\exp[-2(\log d)^{1/4}]\le\epsilon/4\) for sufficiently large \(d\).
The triangle inequality therefore yields
\[
 \frac{\bigl(\E\|\wb_{t+1}(\tau)-\wb_t^{\rm HB}\|^2\bigr)^{1/2}}
      {\bigl(\E\|\wb_t^{\rm HB}-\wstar\|^2\bigr)^{1/2}}
 \le\frac{\sqrt\epsilon}{2}+\frac{\sqrt\epsilon}{2}
 =\sqrt\epsilon.
\]
Squaring proves the claimed comparison.  Under
\eqref{eq:common-growing-regime}, both \(T/(d/n)\) and
\(((1-d/n)(d/n)^{3T})^{-1}\) are \(d^{o(1)}\), so the chosen Stage-2 time
remains \(O(d^{o(1)}\epsilon^{-1/\alpha})\).
\end{proof}

\section{Supporting lemmas}
\label{app:spectral}
In the MP estimates below, \(\mu_{d/n}\) denotes the MP distribution~\citep{marchenko1967} with
\(0<d/n\le1/10\) and support
\[
 [(1-\sqrt{d/n})^2,(1+\sqrt{d/n})^2].
\]
Unless a distribution is indicated explicitly, norms and inner products
are under this distribution.
For
\[
 0<x<(d/n)^{-1/2},
\]
define
\begin{equation}
 B(x)=\sum_{k=0}^{\infty}\frac{\left(\frac{d}{n}\right)^k}{\sum_{r=0}^{k}\left(\frac{d}{n}\right)^r}x^k,\qquad
 C(x)=\sum_{k=0}^{\infty}
 \frac{\left(\frac{d}{n}\right)^k\sum_{r=0}^{k+1}\left(\frac{d}{n}\right)^r}{\sum_{r=0}^{k}\left(\frac{d}{n}\right)^r}x^{2k}.                              \label{eq:infinite-function}
\end{equation}

\begin{lemma}
\label{lem:function-sign-support}
For \(0<d/n\le1/10\), the function \(B(x)^2/C(x)\) in
\eqref{eq:infinite-function} has exactly one critical point
\(x_0\in(0,(d/n)^{-1/2})\).
It satisfies
\begin{equation}
 \frac{1+d/n}{1+d/n+(d/n)^2}<x_0<1,\qquad
 \left.\frac{\mathrm d^2}{\mathrm dx^2}
 \frac{B(x)^2}{C(x)}\right|_{x=x_0}<0.            \label{eq:function-unique-location}
\end{equation}
Moreover,
\begin{equation}
 x_0\le 1-\frac{(d/n)^2}{128}.                       \label{eq:explicit-function-separation}
\end{equation}
\end{lemma}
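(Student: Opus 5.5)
The plan is to pass to the ratio $F(x)=B(x)^2/C(x)$ and analyze the numerator of its derivative,
\[
 N(x):=2B'(x)C(x)-B(x)C'(x),\qquad F'(x)=\frac{B(x)N(x)}{C(x)^2}.
\]
Write $q=d/n\le 1/10$ and $S_k=\sum_{r=0}^k q^r$. Then $B(x)=\sum_k b_kx^k$ with $b_k=q^k/S_k$, and $C(x)=\sum_k c_kx^{2k}$ with $c_k=q^kS_{k+1}/S_k$. Both series converge on $0<x<q^{-1/2}$. All coefficients are positive, so $B,C>0$ there, and critical points of $F$ on the interval are exactly the zeros of $N$.

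\textbf{Existence of a critical point.} At $x=0$ we have $N(0)=2b_1c_0-b_0c_1=2q/S_1-qS_2/S_1>0$, since $S_2<2$. As $x\uparrow q^{-1/2}$, note that $b_k\sim(1-q)q^k$ and $c_k\sim(1-q)q^k$. Hence $B\sim(1-q)/(1-\sqrt q\,x)$ blows up like a simple pole, while $C\sim(1-q)/(1-qx^2)$ also has a simple pole at the same point. Therefore $F\to$ const$\cdot(1+\sqrt q x)/(1-\sqrt q x)\cdot$…, and I need to pin down the limit more carefully. Write $B^2/C\approx(1-q)(1+\sqrt q x)/(1-\sqrt q x)\to\infty$? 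That would give $F'>0$ near the endpoint, which contradicts a unique interior maximum.

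So the sign analysis must use exact series rather than asymptotics. I would instead work with the closed form: $S_k=(1-q^{k+1})/(1-q)$, so
\[
 b_k=(1-q)\frac{q^k}{1-q^{k+1}},\qquad
 c_k=(1-q)\frac{q^k(1-q^{k+2})}{(1-q^{k+1})^2}\cdot\frac{1-q^{k+1}}{1-q}\cdot\frac{1}{\cdots}.
\]
Here I will compute $c_k$ exactly. Both $B$ and $C$ are then Lambert-type series, and I will compare them with geometric series to get the sign of $N$.

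\textbf{Uniqueness.} Follow the Descartes-rule strategy already used in Lemma~\ref{lem:finite-function-unique}. Multiply $B^2-yC$ by $(1+x)^2$ and show that the coefficient ratios $r_j=[x^j](1+x)^2B^2\,/\,[x^j](1+x)^2C$ are unimodal in $j$: increasing for $j\le3$ and decreasing afterward. Then each horizontal level $y$ has at most two positive intersections with $F$, which yields a single critical point, necessarily a nondegenerate maximum. Unimodality is checked from the explicit coefficients, which are rational functions of $q$ with $q\le1/10$. The early ratios can be expanded in $q$, and the tail is handled by geometric comparison.

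\textbf{Location and curvature.} I will evaluate the sign of $N$ at the two endpoints $x_-=(1+q)/(1+q+q^2)$ and $x_+=1-q^2/128$. For the lower endpoint, $N(x_-)>0$ should follow from truncating the series after $k=2$ and bounding the tail by a geometric series in $q x^2$. For the upper endpoint, $N(x_+)<0$ should follow by comparing $N$ with its value at $x=1$. At $x=1$, cancellations in $2b_kc_j-b_jc_{j'}$ should leave a negative term of order $q^2$, and the derivative of $N$ is $O(1)$, which gives the $q^2/128$ margin. Negativity of $F''(x_0)$ follows from $N'(x_0)\ne0$ together with the single sign change.

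\textbf{Main obstacle.} The hardest step is the upper-bound estimate $x_0\le1-q^2/128$. The gap is only of order $q^2$, so the explicit constants in the tail bounds must be tracked carefully. My first attempt would be to expand $N(1)$ in powers of $q$ up to $q^2$ exactly and bound the remainder by $Cq^3$.
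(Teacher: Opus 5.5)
Your overall architecture matches the paper: the sign of $N=2B'C-BC'$, unimodality of the coefficient ratios plus Descartes for uniqueness, and sign checks at $x_-=S_1/S_2$ and $x_+=1-q^2/128$. The gap is in the quantitative location step, which you correctly flag as the crux but misjudge the size of $N$ at both test points, so the planned expansions cannot succeed.

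\textbf{At $x=1$.} The coefficients of $q$ and $q^2$ in $N(1)$ cancel exactly. One has $B(1)=1+q+O(q^3)$, $B'(1)=q+q^2+O(q^3)$, $C(1)=1+2q+q^2+O(q^3)$ and $C'(1)=2q+4q^2+O(q^3)$, and in fact $N(1)=-2q^3+O(q^4)$. So expanding to order $q^2$ and bounding the remainder by $Cq^3$ gives no sign information. Moreover, with the true size $q^3$, an $O(1)$ bound on $N'$ only yields $x_0\le 1-cq^3$, not $1-q^2/128$.

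\textbf{At $x_-$.} Here it is worse: $N(x_-)$ is only of order $q^4$. The terms you would drop by truncating after $k=2$ are individually of order $q^3$; for example $6b_3c_0x^2$ and $-6b_0c_3x^5$ nearly cancel each other. A geometric tail bound in absolute value therefore swamps the main term.

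\textbf{The missing idea.} Pair the $(i,j)$ and $(j,i)$ terms exactly:
\[
 \frac{x}{2}\bigl(2B'C-BC'\bigr)=\sum_{0\le j<i}(i-j)\frac{q^{i+j}x^{i+2j}}{S_iS_j}\bigl(S_{j+1}-S_{i+1}x^{i-j}\bigr).
\]
Combine this with log-concavity, $S_{m+1}^2-S_mS_{m+2}=q^{m+1}>0$. It makes $S_m/S_{m+1}$ increasing, hence $S_{j+1}\ge S_{i+1}(S_1/S_2)^{i-j}$. Consequently:
\begin{itemize}
\item For $x\le S_1/S_2$ every summand is nonnegative. The $(1,0)$ summand vanishes exactly at $x_-$, and the $(2,0)$ summand, of size about $2q^4$, is strictly positive.
\item For $x\ge 1$ every summand is negative. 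At $x=1$ the $(1,0)$ summand equals $-q^3/(1+q)$.
\item Each summand carries $q^{i+j}$ with $i\ge1$, so the derivative of the left side on $[0,1]$ is $O(q)$. The paper bounds it by $3q(1+2q+3q^2)/((1-q)^5(1+q)^2)$.
\end{itemize}
The mean value theorem then gives the $q^2/128$ margin.

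\textbf{Smaller repairs.}
\begin{itemize}
\item Your endpoint asymptotics are wrong. Since $b_k\sim(1-q)q^k$, the series $B$ has radius of convergence $1/q$ and stays bounded as $x\uparrow q^{-1/2}$, while $C\sim\sum_k(qx^2)^k$ diverges. Hence $B^2/C\to0$, and there is no contradiction.
\item Since $C'(0)=0$, you have $N(0)=2b_1c_0=2q$, not the expression you wrote.
\item Existence follows from the intermediate value theorem once the two endpoint signs are proved.
\item In the uniqueness step, ``at most two intersections per level'' does not by itself give a single critical point. To exclude a local minimum you need a third intersection at its level. The paper gets it from $B^2/C\to0$ at the right endpoint, which is exactly the fact your asymptotics got backwards. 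An alternative is the sharper count that a level $y\le r_0=B(0)^2/C(0)$ meets the graph at most once.
\item Nondegeneracy of $x_0$ should come from the multiplicity bound: a critical point at level $y$ is a root of $B^2-yC$ of multiplicity exactly two. A single sign change of $N$ would not exclude a triple zero.
\item Descartes' rule must be used in its power-series form; the paper proves it by induction using Rolle's theorem on $x^{-k}f$.
\item The decreasing part of the ratio sequence for $j\ge3$ needs genuine work. The paper uses the exact convolution formula for $[x^k]B^2$, separate odd/even denominator-ratio bounds, and a separate check at $m=2$. ``Geometric comparison'' points the right way but is not yet an argument.
\end{itemize}
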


\begin{proof}
By the definitions in \eqref{eq:infinite-function}, the coefficients of \(B\)
and \(C\) are, respectively,
\[
 \frac{(d/n)^k}{\sum_{r=0}^k(d/n)^r},
 \qquad
 \frac{(d/n)^k\sum_{r=0}^{k+1}(d/n)^r}{\sum_{r=0}^k(d/n)^r}.
\]
Pairing the \((i,j)\) and \((j,i)\) terms in
\(2B'C-BC'\) gives the absolutely convergent identity.  Writing
\(S_m:=\sum_{r=0}^m(d/n)^r\), we have
\begin{equation}
 \frac{x}{2}\{2B'(x)C(x)-B(x)C'(x)\}
 =\sum_{0\le j<i}(i-j)
   \frac{(d/n)^{i+j}x^{i+2j}}{S_iS_j}
   \bigl\{S_{j+1}-S_{i+1}x^{i-j}\bigr\}.
\label{eq:function-pairing}
\end{equation}
Since
\[
 \left(\sum_{r=0}^{m+1}\left(\frac{d}{n}\right)^r\right)^2
 -\left(\sum_{r=0}^{m}\left(\frac{d}{n}\right)^r\right)
  \left(\sum_{r=0}^{m+2}\left(\frac{d}{n}\right)^r\right)
 =\left(\frac{d}{n}\right)^{m+1}>0,
\]
the ratios \(\sum_{r=0}^{m}\left(\frac{d}{n}\right)^r/\sum_{r=0}^{m+1}\left(\frac{d}{n}\right)^r\) strictly increase.  Using this monotonicity in
\eqref{eq:function-pairing}, every summand is nonnegative and the sum is
strictly positive when
\[
 0<x\le\frac{1+\frac dn}{1+\frac dn+(\frac dn)^2}.
\]
Every summand is negative when \(1\le x<\left(\frac{d}{n}\right)^{-1/2}\).  Therefore
\begin{equation}
 \left(\frac{B^2}{C}\right)'(x)>0\quad\left(0<x\le\frac{1+\frac dn}{1+\frac dn+(\frac dn)^2}\right),\qquad
 \left(\frac{B^2}{C}\right)'(x)<0\quad(1\le x<\left(\frac{d}{n}\right)^{-1/2}).                  \label{eq:outer-function-sign}
\end{equation}

It remains to prove that \((B^2/C)'\) crosses zero only once in
\[
 \left(\frac{1+d/n}{1+d/n+(d/n)^2},1\right).
\]
Multiply numerator and denominator by \((1+x)^2\):
\begin{equation}
 (1+x)^2B(x)^2=\sum_{j\ge0}A_jx^j,\qquad
 (1+x)^2C(x)=\sum_{j\ge0}D_jx^j,\qquad
 D_j>0.                              \label{eq:function-ratio-definition}
\end{equation}
Direct convolution gives
\begin{equation}
 [x^k]B(x)^2
 =\frac{(1-\left(\frac{d}{n}\right))^2\left(\frac{d}{n}\right)^k}{1-\left(\frac{d}{n}\right)^{k+2}}
   \sum_{j=1}^{k+1}\frac{1+\left(\frac{d}{n}\right)^j}{1-\left(\frac{d}{n}\right)^j}.     \label{eq:exact-convolution}
\end{equation}
The first differences are
\begin{align}
 \frac{A_1}{D_1}-\frac{A_0}{D_0}&=\frac{\left(\frac{d}{n}\right)}{(1+\left(\frac{d}{n}\right))^2}>0,   \label{eq:function-ratio-first (1,0)}\\    
 \frac{A_2}{D_2}-\frac{A_1}{D_1}
 &=\frac{\left(\frac{d}{n}\right)(-2\left(\frac{d}{n}\right)^5+\left(\frac{d}{n}\right)^4+9\left(\frac{d}{n}\right)^3
       +13\left(\frac{d}{n}\right)^2+8\left(\frac{d}{n}\right)+2)}
 {(1+\left(\frac{d}{n}\right))^2(1+\left(\frac{d}{n}\right)+\left(\frac{d}{n}\right)^2)
       (\left(\frac{d}{n}\right)^3+2\left(\frac{d}{n}\right)^2+3\left(\frac{d}{n}\right)+1)}>0,  \label{eq:function-ratio-first (2,1)}\\
 \frac{A_3}{D_3}-\frac{A_2}{D_2}
 &=\frac{\left(\frac{d}{n}\right)^2(-2\left(\frac{d}{n}\right)^5+3\left(\frac{d}{n}\right)^3+5\left(\frac{d}{n}\right)^2+4\left(\frac{d}{n}\right)+3)}
 {(1+\left(\frac{d}{n}\right)^2)(1+\left(\frac{d}{n}\right)+\left(\frac{d}{n}\right)^2)^2
       (\left(\frac{d}{n}\right)^3+2\left(\frac{d}{n}\right)^2+3\left(\frac{d}{n}\right)+1)}>0.             \label{eq:function-ratio-first}
\end{align}
The last two numerators are positive because
\(2\left(\frac{d}{n}\right)^5<2\left(\frac{d}{n}\right)^3\).

The coefficient argument below proves the remaining comparisons:
\begin{align}
 \frac{A_{2m+1}}{D_{2m+1}}>\frac{A_{2m+2}}{D_{2m+2}}
 &\qquad(m\ge1),                                            \label{eq:odd-even}\\
 \frac{A_{2m}}{D_{2m}}>\frac{A_{2m+1}}{D_{2m+1}}
 &\qquad(m\ge2).
                                                               \label{eq:even-odd}
\end{align}

We prove this by giving the following estimate:
\begin{equation}
 \frac{A_{j+1}}{D_{j+1}}\le\frac34\frac{A_j}{D_j},\qquad j\ge3.
 \label{eq:infinite-coefficient-contraction}
\end{equation}
The terms \((1+(d/n)^r)/(1-(d/n)^r)\) decrease with \(r\). Thus
\eqref{eq:exact-convolution} gives
\begin{equation}
 \frac{[x^{k+1}]B(x)^2}{[x^k]B(x)^2}
 \le\frac dn\frac{k+2}{k+1}
       \frac{1-(d/n)^{k+2}}{1-(d/n)^{k+3}}
 <\frac dn\frac{k+2}{k+1},\qquad k\ge0.
 \label{eq:infinite-convolution-ratio}
\end{equation}
Since
\(A_j=[x^j]B(x)^2+2[x^{j-1}]B(x)^2+[x^{j-2}]B(x)^2\),
applying this bound to the three terms yields
\begin{equation}
 \frac{A_{j+1}}{A_j}\le\frac dn\frac{j}{j-1},\qquad j\ge3.
 \label{eq:infinite-numerator-ratio}
\end{equation}
For the denominator coefficients, use
\(\sum_{r=0}^{m}(d/n)^r=(1-(d/n)^{m+1})/(1-d/n)\) and differentiate this
identity to obtain
\begin{equation}
 \frac{[x^{2m}]C(x)}{[x^{2m-2}]C(x)}
 =\frac dn\left(1-\frac{(d/n)^m}
                         {\left(\sum_{r=0}^{m}(d/n)^r\right)^2}\right),
 \qquad m\ge1.
 \label{eq:infinite-denominator-ratio}
\end{equation}
Here \(D_{2m}=[x^{2m}]C(x)+[x^{2m-2}]C(x)\) and
\(D_{2m+1}=2[x^{2m}]C(x)\), so
\begin{align}
 \frac{D_{2m+2}}{D_{2m+1}}
 &\ge\frac{1+(d/n)\{1-(d/n)^2\}}{2}
 >\frac32\frac dn,\qquad m\ge1,                  \label{eq:infinite-odd-denominator}\\
 \frac{D_{2m+1}}{D_{2m}}
 &\ge\frac{2(d/n)\{1-(d/n)^3\}}{1+d/n}
 >\frac65\frac dn,\qquad m\ge3.                  \label{eq:infinite-even-denominator}
\end{align}
The strict inequalities follow from
\(1-2(d/n)-(d/n)^3\ge71/8000>0\) and
\(2\{1-(d/n)^3\}/(1+d/n)\ge7271/5800>6/5\).
Together with \eqref{eq:infinite-numerator-ratio}, these prove
\eqref{eq:odd-even} and \eqref{eq:even-odd} except for \(m=2\).

For this remaining case, retain the factor
\((1-(d/n)^4)/(1-(d/n)^5)\) in
\eqref{eq:infinite-convolution-ratio} at \(k=2\).
The bounds at \(k=3,4\) are at most \((5/4)(d/n)\), which is smaller.
Applying these bounds to the three terms in \(A_5\) gives
\[
 \frac{A_5}{A_4}
 \le\frac43\frac dn\frac{1-(d/n)^4}{1-(d/n)^5}
 <\frac{D_5}{D_4}.
\]
Indeed, using \eqref{eq:infinite-denominator-ratio} at \(m=2\)
and clearing positive denominators reduces the last strict inequality to
\[
 1-4(d/n)^2-2(d/n)^3+(d/n)^5\ge\frac{31}{4000}>0.
\]
This proves \eqref{eq:even-odd} also at \(m=2\).

The coarser denominator bounds are
\[
\begin{aligned}
 \frac{D_{2m+2}}{D_{2m+1}}&\ge\frac12,\qquad m\ge1,\\
 \frac{D_{2m+1}}{D_{2m}}&\ge2\frac dn\left(1-\frac dn\right),\qquad m\ge2.
\end{aligned}
\]
Substituting these in \eqref{eq:infinite-numerator-ratio} gives, for every \(j\ge3\),
\[
 \frac{A_{j+1}D_j}{A_jD_{j+1}}
 \le\max\left\{3\frac dn,\frac{2}{3(1-d/n)}\right\}
 \le\frac{20}{27}<\frac34,
\]
which proves \eqref{eq:infinite-coefficient-contraction}.

Combining \eqref{eq:function-ratio-first (1,0)}, \eqref{eq:function-ratio-first (2,1)},  \eqref{eq:function-ratio-first} and  \eqref{eq:infinite-coefficient-contraction}, these give
\begin{equation}
 \frac{A_0}{D_0}<\frac{A_1}{D_1}<\frac{A_2}{D_2}<\frac{A_3}{D_3}>\frac{A_4}{D_4}>\frac{A_5}{D_5}>\cdots.    \label{eq:function-ratio-unimodal}
\end{equation}

We use the following coefficient form of Descartes' rule.  If
\(f(x)=\sum_{j\ge0}b_jx^j\) has positive radius of convergence and its
nonzero coefficients have at most \(r\) sign changes, then \(f\) has at most \(r\)
positive zeros, counted with multiplicity.  We prove this by induction on \(r\).
The case \(r=0\) is immediate.  For \(r\ge1\), let \(k\) be the last index
in the first constant-sign block and set \(g(x)=xf'(x)-kf(x)\).  The
coefficients of \(g\) have at most \(r-1\) sign changes.  Rolle's theorem
applied to \(x^{-k}f(x)\) shows that the total positive-zero multiplicity of
\(g\) is at least that of \(f\) minus one, which completes the induction.

For any real \(y\), the coefficients of
\[
 (1+x)^2B(x)^2-y(1+x)^2C(x)
\]
are \(D_j(A_j/D_j-y)\), and \eqref{eq:function-ratio-unimodal}
therefore gives at most two sign changes.  Every horizontal line thus
meets \(B^2/C\) at at most two positive points, counted with multiplicity.
If \(x>0\) is a critical point, set \(y=B(x)^2/C(x)\).  Then the
corresponding numerator and its derivative both vanish at \(x\), so
\(x\) is a zero of multiplicity at least two.  The preceding
multiplicity bound makes it exactly double; hence every positive
critical point is nondegenerate.  Moreover,
\[
 \frac{B(x)^2}{C(x)}>0,\qquad
 \lim_{x\uparrow(d/n)^{-1/2}}\frac{B(x)^2}{C(x)}=0,
\]
since \(B\) converges and \(C\) diverges at the right endpoint.
A local minimum would give another intersection at its level to its right,
contradicting the same multiplicity bound. Thus every critical point is a
nondegenerate maximum. The signs in \eqref{eq:outer-function-sign} then prove
uniqueness and \eqref{eq:function-unique-location}.

We next prove the bound \(x_0\le 1-(d/n)^2/128\).  At \(x=1\), every summand on the right side of
\eqref{eq:function-pairing} is negative.  Keeping only the term
\((i,j)=(1,0)\) therefore gives
\begin{align}
&\left.\sum_{0\le j<i}(i-j)
 \frac{(\frac dn)^i}{\sum_{r=0}^{i}(\frac dn)^r}
 \frac{(\frac dn)^j}{\sum_{r=0}^{j}(\frac dn)^r}x^{i+2j}
 \left\{\sum_{r=0}^{j+1}(\frac dn)^r
 -\sum_{r=0}^{i+1}(\frac dn)^r x^{i-j}\right\}\right|_{x=1}\notag\\
&\quad\le
 \frac{\frac dn}{\sum_{r=0}^{1}(\frac dn)^r}
 \left\{\sum_{r=0}^{1}(\frac dn)^r
 -\sum_{r=0}^{2}(\frac dn)^r\right\}
 =-\frac{(\frac dn)^3}{1+\frac dn}.
 \label{eq:function-sign-at-one}
\end{align}
For \(0\le x\le1\), we bound the derivative of the left-hand side of
\eqref{eq:function-pairing} by differentiating its summands. The bounds
\[
\frac{(d/n)^k}{\sum_{r=0}^{k}(d/n)^r}\le(d/n)^k,
\qquad
\sum_{r=0}^{k}(d/n)^r\le(1-d/n)^{-1}
\]
then show that
\begin{equation}
 \scriptstyle
 \left|\left[\frac{x}{2}\{2B'(x)C(x)-B(x)C'(x)\}\right]'\right|
 \le
 \frac{3}{1-d/n}\sum_{0\le j<i}(i^2-j^2)(d/n)^{i+j}
 =\frac{3(d/n)\{1+2(d/n)+3(d/n)^2\}}
 {(1-d/n)^5(1+d/n)^2}.                          \label{eq:function-derivative-bound}
\end{equation}
To evaluate the sum, set \(k=i-j\). Then
\begin{align*}
 \sum_{0\le j<i}(i^2-j^2)\left(\frac dn\right)^{i+j}
 &=\sum_{k=1}^{\infty}\sum_{j=0}^{\infty}
   (k^2+2jk)\left(\frac dn\right)^{k+2j}\\
 &=\left(\sum_{k=1}^{\infty}k^2\left(\frac dn\right)^k\right)
   \left(\sum_{j=0}^{\infty}\left(\frac dn\right)^{2j}\right)
  +2\left(\sum_{k=1}^{\infty}k\left(\frac dn\right)^k\right)\\
 &\qquad\times\left(\sum_{j=0}^{\infty}j\left(\frac dn\right)^{2j}\right)\\
 &=\frac{\frac dn\left(1+2\frac dn+3(\frac dn)^2\right)}
 {(1-\frac dn)^4(1+\frac dn)^2}.
\end{align*}
The mean-value theorem, \eqref{eq:function-sign-at-one}, and
\eqref{eq:function-derivative-bound} now give
\begin{equation}
\frac{x}{2}\{2B'(x)C(x)-B(x)C'(x)\}|_{x=1-(d/n)^2/128}
 \le-\frac{(d/n)^3}{1+d/n}
 \left\{1-\frac{3(1+2(d/n)+3(d/n)^2)}{128(1-d/n)^5(1+d/n)}\right\}<0.
 \label{eq:explicit-function-negative}
\end{equation}
Indeed, the fraction subtracted in braces is increasing in \(d/n\) for
\(0<d/n\le1/10\).  Evaluating it at \(d/n=1/10\) gives
\[
 \left.\frac{3(1+2(d/n)+3(d/n)^2)}{128(1-d/n)^5(1+d/n)}\right|_{d/n=1/10}
 =\frac{25625}{577368}<1.
\]
The left side of
\eqref{eq:function-pairing} has the sign of \((B^2/C)'\). Since this derivative changes sign
only once, at its unique maximizer \(x_0\),
\begin{equation}
 x_0<1-\frac{\left(\frac{d}{n}\right)^2}{128}.                     \label{eq:explicit-infinite-separation}
\end{equation}

\end{proof}

We retain the shorthand \(R_t^*\) for the MP projection polynomial.  It is
the same object as \(R_{t+1,0}^{\mu}\) in Section~D whenever \(t+1\le T\);
thus \(R_{t,0}^{\mu}=R_{t-1}^*\) for \(1\le t\le T\).
Equivalently, \(R_t^*\) is the degree-at-most-\(t\) polynomial with
\(R_t^*(0)=-1\) minimizing \(\|R_t^*\|_{\mathrm{MP}}\).

\begin{lemma}\label{lem:mp-family}
Let \(\pi_0=1\), \(\pi_1=\lambda-1\), and
\begin{equation}
 \pi_{t+1}=\{\lambda-(1+\left(\frac{d}{n}\right))\}\pi_t-\left(\frac{d}{n}\right)\pi_{t-1}.
                                                               \label{eq:mp-jacobi}
\end{equation}
Then \(\{\pi_t\}\) is the monic orthogonal family for
\(\mu_{d/n}\), and
\begin{equation}
 \norm{R_t^*}^2=\frac{\left(\frac{d}{n}\right)^t}{\sum_{r=0}^{t}\left(\frac{d}{n}\right)^r},
 \qquad \pi_t(0)=(-1)^t,
 \qquad \norm{\pi_t}^2=\left(\frac{d}{n}\right)^t.                    \label{eq:mp-identities}
\end{equation}
Moreover,
\begin{equation}
 R_t^*(\lambda)
 =-\frac{\left(\frac{d}{n}\right)^t}{\sum_{r=0}^{t}\left(\frac{d}{n}\right)^r}
   \sum_{j=0}^t(-1)^j\left(\frac{d}{n}\right)^{-j}\pi_j(\lambda).       \label{eq:Rstar}
\end{equation}
\end{lemma}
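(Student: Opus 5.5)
The plan is to treat \(\mu_{d/n}\) through its Jacobi (three-term recurrence) structure and to derive the projection polynomial \(R_t^*\) from the Christoffel--Darboux / kernel-polynomial formula.

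\textbf{Step 1: orthogonality of \(\{\pi_t\}\).} The MP law with ratio \(\gamma=d/n\le 1/10\) (no atom at zero) has the well-known Jacobi matrix whose diagonal entries are \(a_0=1\), \(a_t=1+\gamma\) for \(t\ge1\), and whose off-diagonal squares are \(b_t=\gamma\). I would verify this through moments, since that approach is self-contained. The MP moments are \(\E\lambda^k=\sum_{r}\frac1{r+1}\binom{k}{r}\binom{k-1}{r}\gamma^r\) (Narayana numbers). Equivalently, the Stieltjes transform \(m(z)\) of \(\mu_\gamma\) satisfies \(\gamma z m^2-(z-1+\gamma)m+1=0\). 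Expanding this quadratic as a continued fraction \(m(z)=1/(z-1-\gamma/(z-1-\gamma-\gamma/(z-1-\gamma-\cdots)))\) gives exactly the J-fraction with the stated coefficients. By Favard's theorem, the monic polynomials generated by \eqref{eq:mp-jacobi} are then orthogonal. The J-fraction identification is the step I expect to be the main obstacle, mostly as bookkeeping: the first level has diagonal \(1\) rather than \(1+\gamma\). I would handle it by writing \(m=1/(z-1-\gamma G)\), where \(G\) solves the stationary fixed-point equation \(G=1/(z-1-\gamma-\gamma G)\), and then checking algebraically that this reproduces the quadratic.

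\textbf{Step 2: norms and values at zero.} The norms follow from \(\|\pi_t\|^2=b_1\cdots b_t=\gamma^t\), using \(\|\pi_0\|^2=1\) because \(\mu\) is a probability measure. For the values at zero, induction on \eqref{eq:mp-jacobi} at \(\lambda=0\) gives \(\pi_{t+1}(0)=-(1+\gamma)\pi_t(0)-\gamma\pi_{t-1}(0)\). With \(\pi_0(0)=1\) and \(\pi_1(0)=-1\), the sequence \((-1)^t\) satisfies this recursion, since \((1+\gamma)-\gamma=1\).

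\textbf{Step 3: the projection polynomial.} Write \(R=\sum_{j\le t}c_j\pi_j\). We want to minimize \(\sum_j c_j^2\gamma^j\) subject to \(\sum_j c_j(-1)^j=-1\). Lagrange multipliers give \(c_j\propto(-1)^j\gamma^{-j}\). Normalizing yields \(c_j=-(-1)^j\gamma^{-j}/\sum_{i\le t}\gamma^{-i}\), which, after multiplying numerator and denominator by \(\gamma^t\), is exactly \eqref{eq:Rstar}. The minimal value is \(1/\sum_{i\le t}\gamma^{-i}=\gamma^t/\sum_{r=0}^t\gamma^r\). This is the first identity in \eqref{eq:mp-identities}. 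Uniqueness follows from strict convexity.
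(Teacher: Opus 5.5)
Your proposal is correct, but for the orthogonality and norm claims it takes a different route from the paper. The paper does not derive the Jacobi parameters. It imports orthogonality and the norms $\|\pi_t\|^2=(d/n)^t$ from Section~4 of \citet{kusalik2005orthogonal}, after evaluating their degree-$t$ polynomial at $(n/d)\lambda$ and multiplying by $(d/n)^t$. It then reads off $\pi_t(0)=(-1)^t$ from the recurrence. Finally, it obtains $R_t^*$ and $\|R_t^*\|^2$ from the single Cauchy--Schwarz inequality $1=|\sum_j a_j\pi_j(0)|^2\le\|R\|^2\sum_j\pi_j(0)^2/\|\pi_j\|^2$, whose equality case gives both the formula and uniqueness.

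Your Step~1 makes the lemma self-contained instead. The quadratic $\gamma z m^2-(z-1+\gamma)m+1=0$, with your substitution $m=1/(z-1-\gamma G)$ and $G=1/(z-1-\gamma-\gamma G)$, does reproduce the quadratic. This identifies the J-fraction coefficients $a_0=1$, $a_t=1+\gamma$, $b_t=\gamma$.

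The ingredient you actually need there is the Stieltjes/J-fraction correspondence. The recurrence polynomials are the convergent denominators, and they are orthogonal for the moment functional whose generating series is that J-fraction. You also need to note that the formal solution in $1/z$ with leading term $1/z$ is unique, so those moments coincide with those of $\mu_{d/n}$. Favard's theorem by itself only yields existence of some orthogonalizing measure, so cite the correspondence rather than Favard for the identification with the MP law.

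Your Steps~2 and~3 are essentially the paper's argument. The identity $\|\pi_t\|^2=b_1\cdots b_t$ replaces the cited norms, and your Lagrange-multiplier computation with strict convexity is the same diagonal minimization the paper packages as Cauchy--Schwarz. Your route buys independence from the external reference and its rescaling; the paper's route buys brevity.
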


\begin{proof}
Section~4 of \citet{kusalik2005orthogonal} with parameter \(n/d\) and its
degree-\(t\) polynomial evaluated at \((n/d)\lambda\) and multiplied by
\((d/n)^t\), gives \eqref{eq:mp-jacobi}, orthogonality, and squared norms
\((d/n)^t\).  The recurrence gives \(\pi_t(0)=(-1)^t\).  For every
\(R(0)=-1\) of degree at most \(t\), write
\(R=\sum_{j=0}^t a_j\pi_j\).  Orthogonality and Cauchy--Schwarz give
\[
 1=\left|\sum_{j=0}^t a_j\pi_j(0)\right|^2
 \le\left(\sum_{j=0}^t|a_j|^2\|\pi_j\|_{\mathrm{MP}}^2\right)
       \left(\sum_{j=0}^t\frac{\pi_j(0)^2}{\|\pi_j\|_{\mathrm{MP}}^2}\right)
 =\|R\|_{\mathrm{MP}}^2
       \sum_{j=0}^t\frac{\pi_j(0)^2}{\|\pi_j\|_{\mathrm{MP}}^2},
\]
with equality only at \eqref{eq:Rstar}.  This proves the remaining identity
in \eqref{eq:mp-identities} and uniqueness.
\end{proof}

\begin{lemma}\label{lem:mp-coefficient-norm}
For every polynomial \(f\) of degree at most \(T\),
\[
 \|f\|_{\mathrm{coeff}}
 \le\sqrt{T+1}\,4^T(d/n)^{-T/2}\|f\|_{\mathrm{MP}}.
\]
\end{lemma}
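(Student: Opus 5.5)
The plan is to expand \(f\) in the monic orthogonal family \(\{\pi_j\}_{j\le T}\) of Lemma~\ref{lem:mp-family} and then convert back to monomials. Write
\[
 f=\sum_{j=0}^T a_j\pi_j .
\]
Orthogonality and \(\|\pi_j\|_{\mathrm{MP}}^2=(d/n)^j\) give
\[
 \|f\|_{\mathrm{MP}}^2=\sum_{j=0}^T a_j^2(d/n)^j .
\]
Since \(d/n\le 1/10<1\), this yields \(|a_j|\le (d/n)^{-j/2}\|f\|_{\mathrm{MP}}\le (d/n)^{-T/2}\|f\|_{\mathrm{MP}}\) for every \(j\le T\).

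The coefficient vector of \(f\) is \(\sum_j a_j\,\mathrm{coeff}(\pi_j)\), so the triangle inequality and Cauchy--Schwarz give
\[
 \|f\|_{\mathrm{coeff}}\le\sum_{j=0}^T|a_j|\,\|\pi_j\|_{\mathrm{coeff}}
 \le\Bigl(\sum_{j}a_j^2(d/n)^j\Bigr)^{1/2}\Bigl(\sum_{j}(d/n)^{-j}\|\pi_j\|_{\mathrm{coeff}}^2\Bigr)^{1/2}.
\]
It therefore remains to bound \(\|\pi_j\|_{\mathrm{coeff}}\), which I will do through the \(\ell_1\) norm of the coefficients. Let \(s_j\) be the sum of the absolute values of the coefficients of \(\pi_j\). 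The three-term recurrence \eqref{eq:mp-jacobi} gives
\[
 s_{j+1}\le\bigl(1+(1+d/n)\bigr)s_j+(d/n)s_{j-1}\le 2.1\,s_j+0.1\,s_{j-1},\qquad s_0=1,\ s_1=2 .
\]
By induction this gives \(s_j\le 3^j\): indeed \(2.1\cdot3^j+0.1\cdot3^{j-1}<3^{j+1}\). Hence \(\|\pi_j\|_{\mathrm{coeff}}\le s_j\le 3^j\le 4^T\).

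Putting the pieces together,
\[
 \|f\|_{\mathrm{coeff}}\le\|f\|_{\mathrm{MP}}\Bigl(\sum_{j=0}^T (d/n)^{-j}16^T\Bigr)^{1/2}\le\sqrt{T+1}\,4^T(d/n)^{-T/2}\|f\|_{\mathrm{MP}},
\]
where the last step bounds each \((d/n)^{-j}\) by \((d/n)^{-T}\). This is the claimed estimate.

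The only real work is the coefficient growth bound for \(\pi_j\). It is elementary, but the constant in the induction needs a little care: I keep the crude base \(3\), which fits comfortably under \(4\), and rely on \(d/n\le1/10\).
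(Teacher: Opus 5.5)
Your proof is correct and follows essentially the same route as the paper: you expand $f$ in the monic MP orthogonal family $\pi_j$ with $\|\pi_j\|_{\mathrm{MP}}^2=(d/n)^j$, bound the $\ell_1$ coefficient sums of $\pi_j$ by induction on the three-term recurrence, and finish with Cauchy--Schwarz. The only difference is that your induction gives the slightly sharper base $3^j$ where the paper uses $4^j$; both suffice for the stated bound.
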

\begin{proof}
For j=0 and j=1, the coefficient sums of \(\pi_j\) are 1 and 2.
For \(j\ge1\), \eqref{eq:mp-jacobi} gives
\[
\begin{aligned}
 \sum_{k=0}^{j+1}|[\lambda^k]\pi_{j+1}|
 &\le (2+d/n)\sum_{k=0}^{j}|[\lambda^k]\pi_j|
       +(d/n)\sum_{k=0}^{j-1}|[\lambda^k]\pi_{j-1}|\\
 &\le (2+d/n)4^j+(d/n)4^{j-1}\le4^{j+1},
\end{aligned}
\]
where the last inequality uses \(d/n\le1/10\).  Induction therefore gives
\(\sum_{k=0}^{j}|[\lambda^k]\pi_j|\le4^j\) for every j.
By \eqref{eq:mp-identities} and Cauchy--Schwarz,
\begin{equation}
\begin{aligned}
 \|f\|_{\mathrm{coeff}}
 &\le\sum_{j=0}^T4^j\left(\frac dn\right)^{-j}
             |\langle f,\pi_j\rangle_{\mathrm{MP}}|\\
 &\le\sqrt{T+1}\,4^T\left(\frac dn\right)^{-T/2}
             \|f\|_{\mathrm{MP}}.
\end{aligned}
\label{eq:mp-coefficient-norm}
\end{equation}

\end{proof}
\begin{lemma}\label{lem:projection}
For every \(\ell\in\R\),
\begin{equation}
 \argmin_{\substack{\deg R\le t,\ R(0)=-1\\
                  [\lambda^t]R=\ell}}
 \norm R_{\mathrm{MP}}^2
 =\{1+(-1)^t\ell\}R_{t-1}^*+\ell\pi_t,
                                                               \label{eq:conditional-R}
\end{equation}
\begin{equation}
 \min_{\substack{\deg R\le t,\ R(0)=-1\\
                  [\lambda^t]R=\ell}}
 \norm R_{\mathrm{MP}}^2
 =\frac{\left(\frac{d}{n}\right)^{t-1}}{\sum_{r=0}^{t-1}\left(\frac{d}{n}\right)^r}
   \{1+(-1)^t\ell\}^2+\left(\frac{d}{n}\right)^t\ell^2.               \label{eq:conditional-norm}
\end{equation}
\end{lemma}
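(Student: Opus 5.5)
The plan is to reduce the constrained minimization to the orthogonal expansion of Lemma~\ref{lem:mp-family}. Expand any feasible $R$ (with $\deg R\le t$) in the monic orthogonal family, $R=\sum_{j=0}^t a_j\pi_j$. Because every $\pi_j$ is monic, the leading coefficient constraint $[\lambda^t]R=\ell$ becomes $a_t=\ell$. Since $\pi_j(0)=(-1)^j$, the normalization $R(0)=-1$ reads
\[
\sum_{j=0}^{t-1}a_j(-1)^j=-1-(-1)^t\ell .
\]
Orthogonality together with $\|\pi_j\|_{\mathrm{MP}}^2=(d/n)^j$ then splits the norm as
\[
\|R\|_{\mathrm{MP}}^2=\ell^2(d/n)^t+\Bigl\|\sum_{j<t}a_j\pi_j\Bigr\|_{\mathrm{MP}}^2 .
\]
The remaining task is to minimize the norm of $Q:=\sum_{j<t}a_j\pi_j$, which has degree at most $t-1$, subject to $Q(0)=-\{1+(-1)^t\ell\}$.

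I will split into two cases according to the value $s:=1+(-1)^t\ell$.
\begin{itemize}
\item If $s\neq0$, rescale: $Q/s$ has value $-1$ at zero and degree at most $t-1$. By the uniqueness statement in Lemma~\ref{lem:mp-family}, its unique norm minimizer is $R_{t-1}^*$, so the minimizing $Q$ is $sR_{t-1}^*$, with squared norm $s^2\|R_{t-1}^*\|_{\mathrm{MP}}^2$.
\item If $s=0$, the constraint forces $Q(0)=0$, the minimizer is $Q=0$, and the same formula $Q=sR_{t-1}^*$ holds trivially.
\end{itemize}

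Next I will insert $\|R_{t-1}^*\|_{\mathrm{MP}}^2=(d/n)^{t-1}/\sum_{r=0}^{t-1}(d/n)^r$ from \eqref{eq:mp-identities}. This gives \eqref{eq:conditional-norm}. The minimizer is $sR_{t-1}^*+\ell\pi_t$, which is \eqref{eq:conditional-R}; uniqueness follows from the strict convexity of the norm on the affine feasible set.

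I expect no real obstacle. The only step needing care is checking that the two constraints decouple, i.e.\ that they affect only $a_t$ and the value at zero of the lower-degree part. That holds precisely because the $\pi_j$ are monic and $\pi_t(0)=(-1)^t$. The case $t=0$ is excluded by convention, and for $t=1$ the convention $R_0^*=-1$ reproduces $\ell\lambda-1$, consistent with the earlier formula.
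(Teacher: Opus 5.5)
Your proposal is correct and is essentially the paper's proof. You write every feasible $R$ as $q+\ell\pi_t$ with $\deg q\le t-1$ and $q(0)=-1-(-1)^t\ell$, use $\langle q,\pi_t\rangle_{\mathrm{MP}}=0$ and $\|\pi_t\|_{\mathrm{MP}}^2=(d/n)^t$ to split the norm, and then apply Lemma~\ref{lem:mp-family} with its equality case to the lower-degree part. Your explicit $s=0$ case split is a harmless extra: the paper handles it, together with uniqueness, through the scale-invariant bound $\|q\|_{\mathrm{MP}}^2\ge q(0)^2(d/n)^{t-1}/\sum_{r=0}^{t-1}(d/n)^r$ and its equality case.
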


\begin{proof}
Every admissible \(R\) is uniquely \(q+\ell\pi_t\), where
\(\deg q\le t-1\) and \(q(0)=-1-(-1)^t\ell\).  Lemma~\ref{lem:mp-family}
gives
\[
 \norm q^2\ge
 \{1+(-1)^t\ell\}^2\frac{\left(\frac{d}{n}\right)^{t-1}}{\sum_{r=0}^{t-1}\left(\frac{d}{n}\right)^r},
\]
with equality only at
\(q=\{1+(-1)^t\ell\}R_{t-1}^*\).  Since
\(\ip{q}{\pi_t}=0\),
\begin{align*}
 \norm R^2
 &=\norm q^2+2\ell\ip{q}{\pi_t}+\ell^2\norm{\pi_t}^2\\
 &\ge
 \{1+(-1)^t\ell\}^2\frac{\left(\frac{d}{n}\right)^{t-1}}{\sum_{r=0}^{t-1}\left(\frac{d}{n}\right)^r}
 +\ell^2\left(\frac{d}{n}\right)^t.
\end{align*}
This proves \eqref{eq:conditional-R}-\eqref{eq:conditional-norm}, including
uniqueness.
\end{proof}

The preceding lemmas treat the infinite-horizon function.  We now pass to
the finite polynomial needed below and prove the corresponding coefficient
ratio statement.

\begin{lemma}\label{lem:mp-function-unique}
Let \(0<d/n\le1/10\), \(T\ge8\), and
\(S_k=\sum_{r=0}^k(d/n)^r\). Under the MP distribution \(\mu=\mu_{d/n}\),
the coefficient functions are
\begin{equation}
 B_\mu(x)=\sum_{k=0}^{T-1}\frac{(d/n)^k}{S_k}x^k,\qquad
 C_\mu(x)=\sum_{k=0}^{T-1}\frac{(d/n)^kS_{k+1}}{S_k}x^{2k}.
 \label{eq:finite-function-BC}
\end{equation}
The function \(B_\mu^2/C_\mu\) has a unique nondegenerate global
maximizer \(-\widetilde\theta_2^{\mathrm{MP}*}>0\) over \(\mathbb R\), and
\begin{equation}
 \left(\frac{B_\mu^2}{C_\mu}\right)'\!\left(\frac{S_1}{S_2}\right)
 \ge c(d/n)^4,\qquad
 \left(\frac{B_\mu^2}{C_\mu}\right)'\!
       \left(1-\frac{(d/n)^2}{128}\right)\le-c(d/n)^3,\qquad
 -\widetilde\theta_2^{\mathrm{MP}*}\le1-\frac{(d/n)^2}{128}.
 \label{eq:mp-endpoint-derivatives}
\end{equation}
\end{lemma}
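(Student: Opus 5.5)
The plan has three parts: derive the explicit formulas \eqref{eq:finite-function-BC} from the MP orthogonal family, transfer the infinite-horizon analysis of Lemma~\ref{lem:function-sign-support} to the truncated series, and then read off uniqueness and the two endpoint derivative signs.

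\textbf{Step 1: explicit formulas for $B_\mu$ and $C_\mu$.} From \eqref{eq:stage2-finite-BC-expectations} with $\nu$ replaced by $\mu$, we need $R_{t,0}^\mu$ and $R_{t,1}^\mu-R_{t,0}^\mu$. Lemma~\ref{lem:projection} gives $R_{t,\ell}^\mu=\{1+(-1)^t\ell\}R_{t-1}^*+\ell\pi_t$. Hence
\[
R_{t,0}^\mu=R_{t-1}^*,\qquad R_{t,1}^\mu-R_{t,0}^\mu=(-1)^tR_{t-1}^*+\pi_t .
\]
Since $\pi_t\perp R_{t-1}^*$, \eqref{eq:mp-identities} gives
\[
\langle R_{t,0},R_{t,1}-R_{t,0}\rangle=(-1)^t\|R^*_{t-1}\|^2,\qquad
\|R_{t,1}-R_{t,0}\|^2=\|R^*_{t-1}\|^2+(d/n)^t .
\]
With $x=-\widetilde\theta_2$ and $k=t-1$, the signs combine to $B_\mu(x)=\sum_k x^k (d/n)^k/S_k$. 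Likewise $C_\mu(x)=\sum_k x^{2k}\big[(d/n)^k/S_k+(d/n)^{k+1}\big]=\sum_k x^{2k}(d/n)^kS_{k+1}/S_k$. These are exactly the truncations at $T-1$ of \eqref{eq:infinite-function}.

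\textbf{Step 2: coefficient-ratio structure.} Repeat the Descartes argument of Lemma~\ref{lem:function-sign-support} for the truncated series. The coefficients of $(1+x)^2B_\mu^2$ and $(1+x)^2C_\mu$ agree with the infinite ones $A_j$, $D_j$ for small $j$. For larger $j$, truncation removes some convolution terms, so the ratios must be redone.

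For $j\ge3$, I would rerun the contraction bound \eqref{eq:infinite-coefficient-contraction}. The convolution formula \eqref{eq:exact-convolution} becomes a partial sum, so the ratio bound \eqref{eq:infinite-convolution-ratio} still holds for interior indices. Near the top degree ($j$ close to $2T-2$), the $B^2$ coefficients lose terms faster than the $C$ coefficients, which should only decrease $A_j/D_j$ further. I expect this top-index bookkeeping to be the main obstacle. I would handle it by comparing $A^{\rm trunc}_{j+1}/A^{\rm trunc}_j$ directly: both are sums of products $b_ib_{j-i}$ over a shrinking index window.

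The outcome should be that the ratio sequence is unimodal with peak at $j=3$, and that $r_{2T}<r_0$. This gives at most two sign changes of $B_\mu^2-yC_\mu$ times $(1+x)^2$ on $x>0$. It follows, as before, that every positive critical point is a nondegenerate maximum. Since $B_\mu^2/C_\mu\to\langle\text{leading ratio}\rangle$ as $x\to\infty$ and $r_{2T}<r_0$, there is exactly one positive critical point. Coefficient positivity gives $|B_\mu(-x)|<B_\mu(x)$, so this maximizer is global over $\mathbb R$.

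\textbf{Step 3: endpoint derivative signs.} Use the pairing identity \eqref{eq:function-pairing} restricted to $i,j\le T-1$; each summand keeps the sign analysis used there.

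At $x=S_1/S_2$, every summand is nonnegative. The single term $(i,j)=(2,0)$ or $(2,1)$ already gives a contribution of order $(d/n)^4$. Dividing by $C_\mu^2$, which is bounded, yields the lower bound $c(d/n)^4$.

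At $x=1-(d/n)^2/128$, use the $(1,0)$ term $-(d/n)^3/(1+d/n)$ together with the derivative bound \eqref{eq:function-derivative-bound}. That bound only improves under truncation, since it is a sum of nonnegative terms. Exactly as in \eqref{eq:explicit-function-negative}, this gives $\le -c(d/n)^3$. Uniqueness of the sign change then places the maximizer below $1-(d/n)^2/128$.
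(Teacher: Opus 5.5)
Your plan follows the paper's proof step for step. Step~1 is correct: you get $B_\mu,C_\mu$ from Lemma~\ref{lem:projection} using $\pi_t\perp R_{t-1}^*$. Steps~2--3 use the same Descartes/horizontal-line argument on $r_j=A_{j,T}/D_{j,T}$ and the same pairing identity as the paper.

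The gap is in Step~2. The only thing new relative to Lemma~\ref{lem:function-sign-support} is the monotonicity $r_{j+1,T}<r_{j,T}$ for $j\ge T-1$. You leave it as an expectation, and the heuristic you give would not establish it.
\begin{itemize}
\item The fact that truncation lowers each numerator coefficient, $A_{j,T}\le A_j$, says nothing about consecutive ratios. What is needed is $A_{j+1,T}/A_{j,T}<D_{j+1,T}/D_{j,T}$.
\item The picture that only the $B^2$ side loses terms is also wrong at the last index. There $D_{2T,T}=c_{T-1}$ rather than $c_T+c_{T-1}$, so $D_{2T,T}/D_{2T-1,T}=\tfrac12$.
\end{itemize}

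The missing estimate is short. Note that $A_{j,T}=A_j$ for $j\le T-1$ and $D_{j,T}=D_j$ for $j\le 2T-1$.
\begin{itemize}
\item Since $b_{k+1}/b_k=(d/n)S_k/S_{k+1}<d/n$, comparing the convolution sums termwise over the shrinking window gives $[x^{j+1}]B_\mu^2\le\frac dn[x^j]B_\mu^2$ for $T-1\le j\le 2T-2$.
\item Combined with the interior bound \eqref{eq:infinite-convolution-ratio} and $T\ge8$, this yields $A_{j+1,T}/A_{j,T}\le\frac32\frac dn$ at the transitions $j=T-1,T$. It also yields $A_{j+1,T}\le\frac dn A_{j,T}$ for $T+1\le j\le 2T-2$.
\item On the denominator side, $c_m/c_{m-1}\ge\frac dn\{1-(d/n)^4\}$ for $m\ge4$. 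This gives $D_{j+1,T}/D_{j,T}\ge\frac95\frac dn$ at the transitions and $>\frac dn$ in the tail.
\item The final pair is checked directly: $r_{2T,T}/r_{2T-1,T}=b_{T-1}/(b_{T-1}+b_{T-2})<\frac dn$.
\item The endpoint comparison is $r_{2T,T}=(d/n)^{T-1}/(S_{T-1}S_T)<1/S_1=r_{0,T}$.
\end{itemize}

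Two small slips in Step~3:
\begin{itemize}
\item At $x=S_1/S_2$ only the $(2,0)$ summand is of order $(d/n)^4$. The $(2,1)$ summand equals $(d/n)^5x^4/(S_1S_2^2)$.
\item Passing from the pairing expression to $(B_\mu^2/C_\mu)'=\frac{2B_\mu}{xC_\mu^2}\cdot\frac x2(2B_\mu'C_\mu-B_\mu C_\mu')$ needs $B_\mu\ge1$ and $x\le1$. This is in addition to the bound $C_\mu\le S_1/(1-d/n)$ on $[0,1]$.
\end{itemize}
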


\begin{proof}
By Lemma~\ref{lem:projection}, for \(0\le k<T\),
\begin{equation}
\begin{aligned}
&\min_{\substack{R(0)=-1,\ \deg R\le k+1\\
       [\lambda^{k+1}]R=(-1)^k\widetilde\theta_1(-\widetilde\theta_2)^k}}
       \|R\|_{\mathrm{MP}}^2\\
&\quad=\frac{(d/n)^k}{S_k}\bigl(1-\widetilde\theta_1(-\widetilde\theta_2)^k\bigr)^2
 +(d/n)^{k+1}\widetilde\theta_1^2(-\widetilde\theta_2)^{2k}.
\end{aligned}
 \label{eq:mp-fixed-leading-quadratic}
\end{equation}
Expanding the right-hand side gives the coefficient identities
\begin{equation}
 b_{k,\mu}:=[x^k]B_\mu(x)=\frac{(d/n)^k}{S_k},\qquad
 c_{k,\mu}:=[x^{2k}]C_\mu(x)=\frac{(d/n)^kS_{k+1}}{S_k}.
 \label{eq:finite-function-mp-coefficients}
\end{equation}
Summing the expanded identity over \(0\le k<T\) gives \eqref{eq:finite-function-BC}.
We next prove the coefficient-ratio certificate. Write, only in this proof,
\(b_k=b_{k,\mu}\) and \(c_k=c_{k,\mu}\).  Set
\[
 A_{j,T}:=[x^j](1+x)^2B_\mu(x)^2,\qquad
 D_{j,T}:=[x^j](1+x)^2C_\mu(x).\qquad
\]
For later use, the numerator coefficient is explicitly
\[
 A_{j,T}=\sum_{\substack{a+b=j\\0\le a,b<T}}b_ab_b
 +2\sum_{\substack{a+b=j-1\\0\le a,b<T}}b_ab_b
 +\sum_{\substack{a+b=j-2\\0\le a,b<T}}b_ab_b,
\]
where an empty sum is zero.  Also,
\[
 D_{2m,T}=c_m+c_{m-1},\qquad D_{2m+1,T}=2c_m,
\]
where \(c_{-1}=c_T=0\).
For \(j<T\), the coefficients are identical to those of the infinite coefficient formulas
in \eqref{eq:function-ratio-definition}; hence the exact certificate
\eqref{eq:function-ratio-unimodal} gives
\[
\frac{A_{0,T}}{D_{0,T}}<\frac{A_{1,T}}{D_{1,T}}<\frac{A_{2,T}}{D_{2,T}}<\frac{A_{3,T}}{D_{3,T}}>
\frac{A_{4,T}}{D_{4,T}}>\cdots>\frac{A_{T-1,T}}{D_{T-1,T}}.
\]

For \(b_k=(d/n)^k/S_k\),
\[
\frac{b_{k+1}}{b_k}
=\frac dn\frac{S_k}{S_{k+1}}<\frac dn.
\]
For \(T-1\le j\le2T-2\), the coefficient sums give
\[
 [x^{j+1}]B_\mu(x)^2
 =\sum_{i=j-T+2}^{T-1}b_i b_{j+1-i}
 \le\frac dn\sum_{i=j-T+2}^{T-1}b_i b_{j-i}
 \le\frac dn\,[x^j]B_\mu(x)^2.
\]
Consequently,
\([x^{j+1}]B_\mu(x)^2\le(d/n)[x^j]B_\mu(x)^2\) for
\(T-1\le j\le2T-2\), while \eqref{eq:exact-convolution} gives
\[
\frac{[x^j]B_\mu(x)^2}{[x^{j-1}]B_\mu(x)^2}
 \le\frac dn\frac{j+1}{j},
\qquad 1\le j<T.
\]

At the first two coefficient transitions, \(T\ge8\) and
\(0<d/n\le1/10\) give
\begin{align}
\frac{A_{T,T}}{A_{T-1,T}}
&\le
\frac{1+\frac87(2+d/n)(d/n)}
     {2+\frac6{7(d/n)}}\notag\\
&=\frac{(d/n)\{7+16(d/n)+8(d/n)^2\}}{6+14(d/n)}
\le\frac32\frac dn,                                           \label{eq:finite-function-first-transition}\\
\frac{A_{T+1,T}}{A_{T,T}}
&\le
\frac{1+(2+d/n)(d/n)}
     {2+\frac7{8(d/n)}}\notag\\
&=\frac{8(d/n)(1+d/n)^2}{7+16(d/n)}
\le\frac32\frac dn.                                           \label{eq:finite-function-second-transition}
\end{align}
For the last inequalities, substitute \(d/n\le1/10\) in each numerator
after factoring out \(d/n\), and drop the positive linear term in its
denominator.  To treat both parities of \(T\), observe that for \(4\le m<T\),
\[
\frac{c_m}{c_{m-1}}
=\frac dn\left(1-\frac{(d/n)^m}{S_m^2}\right)
\ge\frac dn\{1-(d/n)^4\}.
\]
Each of the two denominator ratios is either
\((1+c_m/c_{m-1})/2\) or
\(2(c_m/c_{m-1})/(1+c_m/c_{m-1})\), with \(4\le m<T\).  Thus
\[
\min\left\{\frac{D_{T,T}}{D_{T-1,T}},
            \frac{D_{T+1,T}}{D_{T,T}}\right\}
\ge\min\left\{\frac12,
 \frac{2(d/n)\{1-(d/n)^4\}}{1+(d/n)\{1-(d/n)^4\}}\right\}
\ge\frac95\frac dn.
\]
Here \(1/2\ge(9/5)(d/n)\), and the second entry of the minimum is at
least \(2(d/n)(1-10^{-4})/(11/10)>(9/5)(d/n)\).
Both coefficient transitions therefore strictly decrease \(r_{j,T}\),
regardless of the parity of \(T\).

For \(T+1\le j\le2T-2\), the coefficient estimates above give
\[
0<\frac{[x^j]B_\mu(x)^2}{[x^{j-1}]B_\mu(x)^2}\le\frac dn,
\qquad
0<\frac{[x^{j-1}]B_\mu(x)^2}{[x^{j-2}]B_\mu(x)^2}\le\frac dn.
\]
Expanding the three coefficient sums in \(A_{j,T}\) and using these two
inequalities yields
\[
\frac{A_{j+1,T}}{A_{j,T}}
\le
\frac{(2+d/n)(d/n)+1}{(d/n)+2+(d/n)^{-1}}
=\frac dn.
\]
For the denominator coefficients, if \(j=2m+1\), then
\[
\frac{D_{j+1,T}}{D_{j,T}}=\frac{1+c_{m+1}/c_m}{2}>\frac12>\frac dn,
\]
whereas if \(j=2m\), then
\[
\frac{D_{j+1,T}}{D_{j,T}}
=\frac{2c_m}{c_m+c_{m-1}}
>\frac dn,
\]
because \(m\ge4\),
\[
\frac{c_m}{c_{m-1}}
=\frac dn\left(1-\frac{(d/n)^m}{S_m^2}\right)
\ge\frac dn\{1-(d/n)^4\},
\]
and \((2-d/n)(1-(d/n)^4)>1\) on \(0<d/n\le1/10\).
Therefore
\begin{equation}
\frac{A_{0,T}}{D_{0,T}}<\frac{A_{1,T}}{D_{1,T}}<\frac{A_{2,T}}{D_{2,T}}<\frac{A_{3,T}}{D_{3,T}}>
\frac{A_{4,T}}{D_{4,T}}>\cdots>\frac{A_{2T,T}}{D_{2T,T}}.                         \label{eq:finite-function-mp-ratios}
\end{equation}
The exact certificate also gives an absolute \(C\) such that
\begin{equation}
\min_{0\le j<2T}|\frac{A_{j+1,T}}{D_{j+1,T}}-\frac{A_{j+1,T}}{D_{j+1,T}}|
\ge C^{-1}(d/n)^T.                                \label{eq:finite-function-mp-gap}
\end{equation}
For \(0\le j<T\), positivity of the coefficient sums gives
\[
 A_{j,T}\ge\left(1-\frac dn\right)(d/n)^j,\qquad
 D_{j,T}\le2\left(1+\frac dn\right)\le3.
\]
For \(3\le j\le T-2\), \eqref{eq:infinite-coefficient-contraction} therefore gives
\[
 \frac{A_{j,T}}{D_{j,T}}-\frac{A_{j+1,T}}{D_{j+1,T}}\ge\frac14\frac{A_{j,T}}{D_{j,T}}
 \ge\frac1{20}(d/n)^T.
\]
The first three gaps have the same lower bound up to an absolute constant
by \eqref{eq:function-ratio-first}.
For the two coefficient transitions, the bounds in
\eqref{eq:finite-function-first-transition}-\eqref{eq:finite-function-second-transition}
and the common denominator-ratio lower bound give
\[
 \max\left\{\frac{A_{T,T}/D_{T,T}}{A_{T-1,T}/D_{T-1,T}},
             \frac{A_{T+1,T}/D_{T+1,T}}{A_{T-1,T}/D_{T-1,T}}\right\}
 \le\frac{(3/2)(d/n)}{(9/5)(d/n)}=\frac56<\frac9{10}.
\]

For \(T+1\le j\le2T-2\), the estimates above give
\[
 \frac{A_{j+1,T}/A_{j,T}}{D_{j+1,T}/D_{j,T}}
 \le\frac9{10}.
\]
For odd \(j\), this is \(2(d/n)\le9/10\).  For even \(j\), it follows from
\[
 \frac{1+(d/n)\{1-(d/n)^4\}}{2\{1-(d/n)^4\}}
 \le\frac9{10},
\]
because
\(4/5-d/n-(9/5)(d/n)^4+(d/n)^5\) is decreasing and positive at
\(d/n=1/10\).  For the last pair,
\[
 \frac{A_{2T,T}}{A_{2T-1,T}}
 =\frac{b_{T-1}}{2(b_{T-1}+b_{T-2})}\le\frac{d/n}{2},
 \qquad
 \frac{D_{2T,T}}{D_{2T-1,T}}=\frac12,
\]
so \(\frac{A_{2T,T}/D_{2T,T}}{A_{2T-1,T}/D_{2T-1,T}}\le d/n<9/10\) as well.  Thus every tail gap is at least
\[
 \frac1{10}{A_{j,T}/D_{j,T}}\ge\frac1{10}{A_{2T,T}/D_{2T,T}}
 =\frac1{10}\frac{(d/n)^{T-1}}{S_{T-1}S_T}
 \ge\frac1{10}\left(1-\frac dn\right)^2(d/n)^T,
\]
which proves \eqref{eq:finite-function-mp-gap} for all remaining indices.

For every real \(y\), the coefficients of
\((1+x)^2(B_\mu(x)^2-yC_\mu(x))\) are
\(D_{j,T}(A_{j,T}/D_{j,T}-y)\), so \eqref{eq:finite-function-mp-ratios} gives at most two sign changes.
Together with the first and last ratio inequalities and
\(r_{2T,T}<r_{0,T}\), the same horizontal-line and multiplicity argument
as in the proof of Lemma~\ref{lem:function-sign-support} shows that
\(B_\mu^2/C_\mu\) has a unique positive critical point, which is a
nondegenerate global maximum on \((0,\infty)\).  Since the coefficients of
\(B_\mu\) are positive and include both parities,
\[
 |B_\mu(-x)|<B_\mu(x),\qquad C_\mu(-x)=C_\mu(x)>0
 \quad(x>0).
\]
The maximum is consequently unique on \(\mathbb R\).

We now prove the derivative bounds. Pairing terms gives
\begin{equation}
 \frac{x}{2}(2B_\mu'C_\mu-B_\mu C_\mu')
 =\sum_{0\le j<i<T}
 \frac{(i-j)(d/n)^{i+j}x^{i+2j}}{S_iS_j}
       (S_{j+1}-x^{i-j}S_{i+1}).
 \label{eq:finite-function-left-derivative}
\end{equation}
Since
\[
 \frac{S_{k+1}}{S_{k+2}}-\frac{S_k}{S_{k+1}}
 =\frac{(d/n)^{k+1}}{S_{k+1}S_{k+2}}>0,\qquad
 \frac{S_{j+1}}{S_{i+1}}
 =\prod_{k=j+1}^{i}\frac{S_k}{S_{k+1}}
 \ge\left(\frac{S_1}{S_2}\right)^{i-j},
\]
every summand in \eqref{eq:finite-function-left-derivative} is
nonnegative at \(x=S_1/S_2\). More specifically, since \(T\ge3\),
we consider the \((i,j)=(2,0)\) summand and obtain
\[
 \left.\frac{x}{2}(2B_\mu'C_\mu-B_\mu C_\mu')\right|_{x=S_1/S_2}
 \ge 2(d/n)^4\left(\frac{S_1}{S_2}\right)^2\frac{S_1}{S_2^3}
 \quad(S_2^2-S_1S_3=(d/n)^2).
\]
By \eqref{eq:finite-function-BC},
\begin{equation}
 B_\mu(x)\ge1,\qquad
 C_\mu(x)\le S_1\sum_{k\ge0}(d/n)^k=\frac{S_1}{1-d/n},
 \qquad 0\le x\le1,
 \label{eq:mp-BC-unit-bounds}
\end{equation}
which proves the first bound in \eqref{eq:mp-endpoint-derivatives}.
For the second bound, every summand at \(x=1\) is negative, and
the \((1,0)\) summand is \(-(d/n)^3/S_1\). The derivative bound in
the proof of Lemma~\ref{lem:function-sign-support} also bounds this
finite sum, since the same bound was obtained by summing absolute values.
Thus the mean-value theorem gives
\[
\begin{aligned}
 &\left.\frac{x}{2}(2B_\mu'C_\mu-B_\mu C_\mu')\right|_{x=1-(d/n)^2/128}\\
 &\quad\le-\frac{(d/n)^3}{1+d/n}
 \left[1-\frac{3(1+2(d/n)+3(d/n)^2)}
 {128(1-d/n)^5(1+d/n)}\right]\le-c(d/n)^3.
\end{aligned}
\]
Together with \eqref{eq:mp-BC-unit-bounds}, this proves the two
derivative inequalities in \eqref{eq:mp-endpoint-derivatives}.
Their signs and the uniqueness already proved place the maximizer at
\(-\widetilde\theta_2^{\mathrm{MP}*}\le1-(d/n)^2/128\), which proves
the last assertion of that equation.
\end{proof}

Let \(-\widetilde\theta_2^{\mathrm{MP}*}\) denote the positive maximizer of
\(B_\mu^2/C_\mu\) from Lemma~\ref{lem:mp-function-unique}. Set
\[
\widetilde\theta_1^{\mathrm{MP}*}=
\frac{B_\mu(-\widetilde\theta_2^{\mathrm{MP}*})}
     {C_\mu(-\widetilde\theta_2^{\mathrm{MP}*})},
\qquad \mu=\mu_{d/n}.
\]
By Lemma~\ref{lem:projection}, define the corresponding polynomials by
\begin{equation}
 R_{T,t}=(1-\widetilde\theta_1^{\mathrm{MP}*}(-\widetilde\theta_2^{\mathrm{MP}*})^{t-1})R_{t,0}^{\mu}
 +(-1)^{t-1}\widetilde\theta_1^{\mathrm{MP}*}(-\widetilde\theta_2^{\mathrm{MP}*})^{t-1}\pi_t,\qquad1\le t\le T.
                                                               \label{eq:optimal-rollout}
\end{equation}

\begin{lemma}\label{lem:mp-uniform-bounds}
For \(0<d/n\le1/10\), \(T\ge8\), and \(S_k=\sum_{r=0}^k(d/n)^r\),
\begin{equation}
 \sum_{k=0}^{T-1}\frac{(d/n)^k}{S_k}
 -\frac{B_\mu(0.4)^2}{C_\mu(0.4)}
 \le\frac{36}{25}\frac dn,
 \qquad \mu=\mu_{d/n},
 \label{eq:mp-conditional-uniform-bound}
\end{equation}
and
\begin{equation}
 \sup_{0.4\le\eta_0\le0.9}
 \sum_{t=1}^\infty\E_{\lambda\sim\mu_{d/n}}
       [(1-\eta_0\lambda)^{2t}]<\frac34.
 \label{eq:mp-polynomial-expectation-bound}
\end{equation}
\end{lemma}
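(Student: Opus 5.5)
For the first inequality, \eqref{eq:mp-conditional-uniform-bound}, I would work directly with the explicit coefficients \eqref{eq:finite-function-BC}. Write $q=d/n\le 1/10$ and $x=0.4$. The subtracted sum is bounded above by $1+q/S_1+q^2/S_2+\sum_{k\ge3}q^k\le 1+\frac{q}{1+q}+\frac{q^2}{1-q}$. I then need a matching lower bound on $B_\mu(x)^2/C_\mu(x)$.
\begin{itemize}
\item For $B_\mu$, keep only the first two coefficients, which are positive: $B_\mu(x)\ge 1+\frac{qx}{1+q}$.
\item For $C_\mu$, use the bound $S_{k+1}/S_k\le 1+q$ for $k\ge1$ and the geometric tail. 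This gives
\[
C_\mu(x)\le (1+q)+\frac{q(1+q+q^2)}{1+q}x^2+\frac{(1+q)q^2x^4}{1-qx^2}.
\]
\end{itemize}
To first order in $q$ the quotient is $1-q(1-x)^2+O(q^2)=1-0.36q+O(q^2)$. The difference is therefore $1.36q+O(q^2)$, against the target $1.44q$. The whole task reduces to checking that the explicit $O(q^2)$ remainder is at most $0.08q$ on $0<q\le 1/10$. I would do this by clearing denominators and verifying that a one-variable polynomial inequality in $q$ is positive on $(0,1/10]$, for instance by bounding each $q^2,q^3,\dots$ term by $q\cdot 10^{-1},q\cdot 10^{-2},\dots$.

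For the second inequality, \eqref{eq:mp-polynomial-expectation-bound}, I would sum the geometric series under the expectation. On the MP support $[(1-\sqrt q)^2,(1+\sqrt q)^2]\subset[0.46,1.74]$ and for $\eta_0\in[0.4,0.9]$ we have $|1-\eta_0\lambda|<1$, so with $a=\eta_0\lambda$,
\[
\sum_{t\ge1}(1-a)^{2t}=\frac{1}{a(2-a)}-1=\frac12\Bigl(\frac1a+\frac1{2-a}\Bigr)-1 .
\]
Taking the MP expectation gives the closed form
\[
\frac1{2\eta_0}\Bigl(\E[\lambda^{-1}]+m(2/\eta_0)\Bigr)-1,\qquad \E[\lambda^{-1}]=\frac1{1-q},
\]
where $m(z)=\E[(z-\lambda)^{-1}]=\frac{z+q-1-\sqrt{(z-1-q)^2-4q}}{2qz}$ is the MP Stieltjes transform. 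This form is valid since $2/\eta_0>(1+\sqrt q)^2$.

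The supremum over $\eta_0$ is attained at $\eta_0=0.4$. The reason is that $(1-\eta_0\lambda)^{2t}$ is decreasing in $\eta_0$ wherever $\eta_0\lambda<1$. The case $\eta_0\lambda>1$ needs a separate argument: there I would compare pointwise with $a\mapsto 2-a$ using the symmetry of $1/(a(2-a))$ about $a=1$, and observe that $a\le 1.57$ implies $2-a\ge 0.43>0.4\lambda_{\min}$. Cleaner still, I would just differentiate the closed form in $\eta_0$.

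Next I would show that the value at $\eta_0=0.4$ is increasing in $q$. Evaluating there at $q=1/10$ (so $z=5$) gives
\[
\tfrac{1}{0.8}\bigl(\tfrac{1}{0.9}+0.2516\bigr)-1\approx0.7035<\tfrac34 .
\]

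The main obstacle is this second part, because the margin is only about $0.05$. Crude Taylor or curvature bounds fail: $1/(a(2-a))$ has a large second derivative near the lower edge of the support. So the exact Stieltjes-transform formula and its monotonicity in $(\eta_0,q)$ are the crux. For monotonicity in $q$ I would first try differentiating $\frac1{1-q}+m_q(5)$ directly. If that is messy, I would fall back on a rigorous bound of the square root, say $\sqrt{(4-q)^2-4q}\ge 4-1.5q$ valid for $q\le 1/10$, which gives an explicit increasing upper bound. That upper bound stays below $3/4$ at $q=1/10$, and this suffices because only an upper bound on the expectation is needed.
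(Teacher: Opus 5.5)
Your proof of \eqref{eq:mp-conditional-uniform-bound} is correct, but it is organized differently from the paper's. The paper uses $c_{k,\mu}=b_{k,\mu}+q^{k+1}$ (with $q=d/n$) to write the left side exactly as
$\min_{\widetilde\theta_1}\sum_{k<T}\bigl[\frac{q^k}{S_k}(1-0.4^k\widetilde\theta_1)^2+q^{k+1}0.4^{2k}\widetilde\theta_1^2\bigr]$.
It then plugs in the feasible value $\widetilde\theta_1=1/S_1$, so it never needs a lower bound on $B_\mu^2/C_\mu$ or a second-order remainder. Your route is an upper bound on $\sum_k q^k/S_k$ together with a coefficientwise lower bound on $B_\mu^2/C_\mu$. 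It also closes: at $q=1/10$ it gives about $0.140<0.144$, slightly sharper than the paper's $\frac{1618}{1125}q$. The price is the polynomial verification.

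For \eqref{eq:mp-polynomial-expectation-bound} you reach the same closed form as the paper, but two of your supporting steps are false.

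(i) The pointwise comparison on $\{\eta_0\lambda>1\}$ fails. Take $f(a)=(1-a)^2/(a(2-a))$, $q=1/10$, $\eta_0=0.9$ and $\lambda=1.73$, which lies inside the support. Then $f(\eta_0\lambda)=f(1.557)\approx0.45$, whereas $f(0.4\lambda)=f(0.692)\approx0.10$. Your observation $2-a\ge0.43>0.4\lambda_{\min}$ only gives $f(\eta_0\lambda)\le\sup_{\lambda'}f(0.4\lambda')$. That does not compare $\E f(\eta_0\lambda)$ with $\E f(0.4\lambda)$.

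(ii) The fallback bound $\sqrt{(4-q)^2-4q}\ge4-1.5q$ points the wrong way. For every $q>0$ we have $(4-q)^2-4q=16-12q+q^2<(4-1.5q)^2$. So the bound gives a lower bound on $m_q(5)$, not an upper bound.

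Both steps are repaired by the alternatives you already mention.

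For the dependence on $\eta_0$, write $u=2/\eta_0$. The closed form becomes $g=\frac{u}{4(1-q)}+\frac{u-1+q-\sqrt{(u-1-q)^2-4q}}{8q}-1$. Since $w\mapsto w/\sqrt{w^2-4q}$ is decreasing, $\partial_u g$ increases in $u$. It vanishes only at $u=2(1+q)$, that is, at $\eta_0=1/(1+q)\ge 10/11$. Hence $g$ is decreasing in $\eta_0$ on $[0.4,0.9]$. The thin margin $0.9<10/11$ is why no pointwise argument can succeed.

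The paper argues differently here. It uses that each $(1-\eta_0\lambda)^{2t}$ is convex in $\eta_0$, reduces to the two endpoints, and evaluates both at $q=1/10$, getting about $0.7034$ and $0.1113$. Your route needs only the $\eta_0=0.4$ value.

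For monotonicity in $q$ there are two options. You can argue as the paper does: $m_q(5)=\sum_{r\ge0}5^{-r-1}\E_q[\lambda^r]$, and the MP moments are nondecreasing in $q$. Or you can differentiate directly: with $s=\sqrt{16-12q+q^2}$, one gets $\frac{d}{dq}\frac{4+q-s}{q}=\frac{16-6q-4s}{q^2s}$. This is positive because $(16-6q)^2-16s^2=20q^2>0$.
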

\begin{proof}
By the coefficient formulas in Lemma~\ref{lem:mp-function-unique},
\[
\begin{aligned}
 &\sum_{k=0}^{T-1}\frac{(d/n)^k}{S_k}
       -\frac{B_\mu(0.4)^2}{C_\mu(0.4)}\\
 &=\min_{\widetilde\theta_1}
   \sum_{k=0}^{T-1}\left[
   \frac{(d/n)^k}{S_k}(1-\widetilde\theta_1\,0.4^k)^2
   +(d/n)^{k+1}\widetilde\theta_1^2\,0.4^{2k}\right]\\
 &\le\frac{d/n}{S_1}
   +\frac{d/n}{S_1}\left(1-\frac{0.4}{S_1}\right)^2
   +\frac{0.4^2(d/n)^2}{S_1^2}
   +S_1\sum_{k=2}^\infty(d/n)^k\\
 &\le\left[\frac{34}{25}
       +\left(\frac4{25}+\frac{11}{9}-\frac35\right)\frac dn\right]\frac dn
 \le\frac{1618}{1125}\frac dn<\frac{36}{25}\frac dn.
\end{aligned}
\]
(The first inequality uses the feasible choice
\(\widetilde\theta_1=1/S_1\) and bounds each \(k\ge2\) summand by
\(S_1(d/n)^k\). The next inequality follows from
\[
 \frac{1+(1-0.4/S_1)^2}{S_1}\le\frac{34}{25}-\frac35\frac dn,
 \qquad
 \frac{0.4^2(d/n)}{S_1^2}
       +\frac{S_1(d/n)}{1-d/n}
 \le\left(\frac4{25}+\frac{11}{9}\right)\frac dn.
\]
For the first bound, the difference between its right and left sides is
\[
 \frac{(d/n)(7+7(d/n)-11(d/n)^2-15(d/n)^3)}
      {25(1+d/n)^3}\ge0.
\]
The second uses \(S_1/(1-d/n)\le11/9\).)

For the second estimate, the MP support gives
\(|1-\eta_0\lambda|<1\) throughout the stated parameter range.
Using the geometric identity and integrating gives
\[
\begin{aligned}
 \sum_{t=1}^\infty\E_{\lambda\sim\mu_{d/n}}
       [(1-\eta_0\lambda)^{2t}]
 &=-1+\frac1{2\eta_0}\E_{\lambda\sim\mu_{d/n}}[\lambda^{-1}]
       +\frac12\E_{\lambda\sim\mu_{d/n}}[(2-\eta_0\lambda)^{-1}]\\
 &=-1+\frac1{2\eta_0(1-d/n)}
       +\frac14\sum_{r=0}^\infty
           \left(\frac{\eta_0}{2}\right)^r
           \E_{\lambda\sim\mu_{d/n}}[\lambda^r].
\end{aligned}
\]
For the displayed expectation, termwise differentiation is justified by
\(\eta_0(1+\sqrt{d/n})^2/2<1\), uniformly on the stated rectangle, and
\begin{equation*}
 \frac{\partial^2}{\partial\eta_0^2}
 \sum_{t=1}^\infty\E[(1-\eta_0\lambda)^{2t}]
 =\sum_{t=1}^\infty 2t(2t-1)\E[\lambda^2(1-\eta_0\lambda)^{2t-2}]\ge0.
\end{equation*}
Thus the maximum in \(\eta_0\in[0.4,0.9]\) is attained at an endpoint.
For \(r\ge1\),
\[
 \E_{\lambda\sim\mu_{d/n}}[\lambda^r]
 =\sum_{j=1}^r\frac1r\binom rj\binom r{j-1}(d/n)^{j-1},\qquad
 \frac{\partial}{\partial(d/n)}\E[\lambda^r]\ge0,
\]
and \(\partial_{(d/n)}[2\eta_0(1-d/n)]^{-1}>0\).  The displayed sum is
therefore increasing in \(d/n\), so its maximum occurs at \(d/n=1/10\)
and \(\eta_0\in\{0.4,0.9\}\). Direct integration gives
\[
 \frac{397}{72}-\frac{\sqrt{1481}}8,\qquad
 \frac{823}{648}-\frac{\sqrt{6961}}{72},
\]
both of which are less than \(3/4\). This proves \eqref{eq:mp-polynomial-expectation-bound}.
\end{proof}
\begin{lemma}
\label{lem:tail-error}
For the polynomials \(R_{T,t}\) defined in \eqref{eq:optimal-rollout},
\begin{equation}
 \frac{\|R_{T,t}-R_{t,0}^{\mu}\|^2}{\|R_{t,0}^{\mu}\|^2}
 =\left(\widetilde\theta_1^{\mathrm{MP}*}\right)^2(-\widetilde\theta_2^{\mathrm{MP}*})^{2t-2}\sum_{r=0}^{t}\left(\frac{d}{n}\right)^r
 \le\frac{(1-\left(\frac{d}{n}\right)^2/128)^{2t-2}}{1-\left(\frac{d}{n}\right)}.         \label{eq:tail-error}
\end{equation}
\end{lemma}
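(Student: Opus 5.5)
By \eqref{eq:optimal-rollout}, the difference is a two-term combination. Write $\ell_t:=(-1)^{t-1}\widetilde\theta_1^{\mathrm{MP}*}(-\widetilde\theta_2^{\mathrm{MP}*})^{t-1}$, so that $(-1)^t\ell_t=-\widetilde\theta_1^{\mathrm{MP}*}(-\widetilde\theta_2^{\mathrm{MP}*})^{t-1}$. Then
\[
R_{T,t}-R_{t,0}^{\mu}=\ell_t\bigl(\pi_t+(-1)^tR_{t,0}^{\mu}\bigr).
\]
Since $R_{t,0}^{\mu}=R_{t-1}^*$ has degree at most $t-1$, it is orthogonal to $\pi_t$ in $\mu_{d/n}$. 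Hence
\[
\|R_{T,t}-R_{t,0}^{\mu}\|^2=\ell_t^2\bigl(\|\pi_t\|^2+\|R_{t-1}^*\|^2\bigr).
\]
Lemma~\ref{lem:mp-family} gives $\|\pi_t\|^2=(d/n)^t$ and $\|R_{t-1}^*\|^2=(d/n)^{t-1}/S_{t-1}$, where $S_m:=\sum_{r=0}^m(d/n)^r$.

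Divide by $\|R_{t,0}^\mu\|^2=(d/n)^{t-1}/S_{t-1}$. This yields
\[
\ell_t^2\bigl(1+(d/n)S_{t-1}\bigr)=\ell_t^2 S_t,
\]
using the identity $1+(d/n)S_{t-1}=S_t$. This is exactly the claimed equality, because $\ell_t^2=(\widetilde\theta_1^{\mathrm{MP}*})^2(-\widetilde\theta_2^{\mathrm{MP}*})^{2t-2}$.

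For the inequality, bound $S_t\le(1-d/n)^{-1}$ and use $-\widetilde\theta_2^{\mathrm{MP}*}\le1-(d/n)^2/128$ from \eqref{eq:mp-endpoint-derivatives}. It remains to show $(\widetilde\theta_1^{\mathrm{MP}*})^2\le1$, and I expect this to be the only real obstacle.

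The route is $\widetilde\theta_1^{\mathrm{MP}*}=B_\mu(x_0)/C_\mu(x_0)$ with $x_0=-\widetilde\theta_2^{\mathrm{MP}*}\in(0,1)$, together with the coefficient formulas \eqref{eq:finite-function-BC}. Comparing termwise, $c_{k,\mu}x_0^{2k}=b_{k,\mu}S_{k+1}x_0^{2k}$. One needs $\sum_k b_kS_{k+1}x_0^{2k}\ge\sum_k b_kx_0^k$, and termwise this fails because $x_0^{2k}<x_0^k$.

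I would therefore argue via optimality instead. Since $\widetilde\theta_1^{\mathrm{MP}*}$ minimizes $\sum_k[b_k(1-\theta x_0^k)^2+(d/n)^{k+1}\theta^2x_0^{2k}]$, compare this with the value at $\theta=1$. Alternatively, use the explicit bound $B_\mu\le S_1$-type estimates together with $C_\mu\ge B_\mu$-type comparisons near $x_0\ge S_1/S_2$ to get $\widetilde\theta_1^{\mathrm{MP}*}\le 1$ up to a factor absorbed by $(1-(d/n)^2/128)^{-2}$.

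If this cannot be made sharp, I would weaken the displayed constant to $C/(1-d/n)$, as is done in Lemma~\ref{lem:finite-hb-defect}, which only needs a bounded $\widetilde\theta_1^{\mathrm{MP}*}$.
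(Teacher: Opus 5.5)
\textbf{There is a genuine gap: the inequality rests on $\widetilde\theta_1^{\mathrm{MP}*}\le 1$, which you flag but do not prove.} Your derivation of the equality is correct and is the paper's argument: $R_{t,0}^{\mu}=R_{t-1}^*$ is orthogonal to $\pi_t$, Lemma~\ref{lem:mp-family} gives the norms, and you make the identity $1+(d/n)S_{t-1}=S_t$ explicit.

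Neither of your suggested routes establishes $\widetilde\theta_1^{\mathrm{MP}*}\le 1$.
\begin{itemize}
\item Comparing the value of $f(\theta)=\sum_k[b_k(1-\theta x_0^k)^2+(d/n)^{k+1}\theta^2x_0^{2k}]$ at its minimizer with $f(1)$ only gives $f(1)-f(\widetilde\theta_1^{\mathrm{MP}*})=C_\mu(x_0)(1-\widetilde\theta_1^{\mathrm{MP}*})^2$. That controls $|1-\widetilde\theta_1^{\mathrm{MP}*}|$, not which side of $1$ the minimizer is on. What you need is the sign of $f'(1)$, i.e.\ $C_\mu(x_0)>B_\mu(x_0)$.
\item Your second suggestion is left unspecified, and there is no constant-factor slack to absorb anything. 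The factor $(1-(d/n)^2/128)^{2t-2}$ is fully used to bound $x_0^{2t-2}$. At $t=1$ the claim reads $(\widetilde\theta_1^{\mathrm{MP}*})^2(1+d/n)\le(1-d/n)^{-1}$, which tolerates only a factor $(1-(d/n)^2)^{-1}$.
\end{itemize}
Your fallback to $C/(1-d/n)$ proves a weaker statement, not the lemma.

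\textbf{The missing step.} That the comparison fails termwise does not matter, because the $k=0$ term carries a fixed surplus that dominates all later deficits. Write $C_\mu(x)-B_\mu(x)=\sum_{k=0}^{T-1}b_k\bigl(S_{k+1}x^{2k}-x^k\bigr)$ with $b_k=(d/n)^k/S_k$.
\begin{itemize}
\item The $k=0$ term is exactly $S_1-1=d/n$.
\item For $k\ge1$, complete the square in $y=x^k$: $S_{k+1}y^2-y\ge-1/(4S_{k+1})$. So the $k$-th term is at least $-\frac{(d/n)^k}{4S_kS_{k+1}}=-\frac{1}{4(d/n)}\bigl(\frac{1}{S_k}-\frac{1}{S_{k+1}}\bigr)$.
\item These bounds telescope. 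Using $1/S_T\ge 1-d/n$, the total deficit is at most $\frac{1}{4(d/n)}\bigl(\frac{1}{1+d/n}-(1-d/n)\bigr)=\frac{d/n}{4(1+d/n)}$.
\item Hence $C_\mu(x_0)-B_\mu(x_0)\ge\frac{d}{n}\bigl(1-\frac{1}{4(1+d/n)}\bigr)>0$.
\end{itemize}
Since $B_\mu(x_0)>0$, this gives $0<\widetilde\theta_1^{\mathrm{MP}*}<1$. Your bounds $S_t\le(1-d/n)^{-1}$ and $0<x_0\le 1-(d/n)^2/128$ then close the proof exactly as in the paper.
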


\begin{proof}
Subtracting \(R_{t,0}^{\mu}\) in \eqref{eq:optimal-rollout} and using
\(R_{t,0}^{\mu}\perp\pi_t\) gives the equality in
\eqref{eq:tail-error}.  For \(0\le(-\widetilde\theta_2^{\mathrm{MP}*})\le1\), completion of squares
and telescoping give
\[
\begin{aligned}
 C_\mu(-\widetilde\theta_2^{\mathrm{MP}*})-B_\mu(-\widetilde\theta_2^{\mathrm{MP}*})
 &\ge \frac dn-\frac1{4(d/n)}
 \left(\frac1{1+d/n}-\frac1{\sum_{r=0}^{T}(d/n)^r}\right)\\
 &\ge\frac dn\left(1-\frac1{4(1+d/n)}\right)>0.
\end{aligned}
\]
Thus \(0<\widetilde\theta_1^{\mathrm{MP}*}<1\).  The upper
bound in \eqref{eq:tail-error} now follows from the last inequality in
\eqref{eq:mp-endpoint-derivatives} and
\(\sum_{r=0}^{t}(d/n)^r\le(1-d/n)^{-1}\).
\end{proof}

\begin{lemma}
\label{lem:finite-scalar-comparison}
Assume \eqref{eq:common-growing-regime}, and write
\(\nu=\nu_{d,n}\), \(\mu=\mu_{d/n}\). For \(\sigma\in\{\nu,\mu\}\),
write the existing coefficient functions as
\[
 B_\sigma(x)=\sum_{k=0}^{T-1}b_{k,\sigma}x^k,\qquad
 C_\sigma(x)=\sum_{k=0}^{T-1}c_{k,\sigma}x^{2k}.
\]
For all sufficiently large \(d\), their coefficients are positive and
\begin{equation}
 \max_{0\le k<T}
 (|b_{k,\nu}-b_{k,\mu}|+|c_{k,\nu}-c_{k,\mu}|)
 \le\exp\{CT\log(e+T)\}(n/d)^{6T}
       \left(\frac1d+\frac1n\right)=d^{-1+o(1)}.
 \label{eq:finite-function-coefficient-transfer}
\end{equation}
All constants are independent of \(d,n,T\).
\end{lemma}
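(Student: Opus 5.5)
The coefficients $b_{k,\sigma},c_{k,\sigma}$ will be read off from the fixed-leading-coefficient projection problem. For either $\sigma\in\{\nu,\mu\}$, the proof of Lemma~\ref{lem:stage2-loss-decomposition} gives the affine formula $R_{t,\ell}^\sigma=R_{t,0}^\sigma+\ell(R_{t,1}^\sigma-R_{t,0}^\sigma)$. Taking $t=k+1$ and comparing with \eqref{eq:stage2-finite-BC-expectations}, this yields
\[
 b_{k,\sigma}=(-1)^{k+1}\langle R_{k+1,0}^\sigma,R_{k+1,1}^\sigma-R_{k+1,0}^\sigma\rangle_\sigma,\qquad
 c_{k,\sigma}=\|R_{k+1,1}^\sigma-R_{k+1,0}^\sigma\|_\sigma^2 .
\]
The sign factor comes from writing $\widetilde\theta_2^{k}=(-1)^k(-\widetilde\theta_2)^k$. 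Equivalently, the same expansion gives $\|R_{k+1,\ell}^\sigma\|_\sigma^2=\|R_{k+1,0}^\sigma\|_\sigma^2+2(-1)^{k+1}\ell\, b_{k,\sigma}+\ell^2 c_{k,\sigma}$. Evaluating at $\ell\in\{0,1,-1\}$ then expresses both coefficients as fixed linear combinations of three values $\|R_{k+1,\ell}^\sigma\|_\sigma^2$ with $|\ell|\le1$. In the MP case these coincide with \eqref{eq:finite-function-mp-coefficients}, which shows that the positivity of $b_{k,\mu},c_{k,\mu}$ is explicit.

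The transfer bound now follows almost directly from Lemma~\ref{lem:projection-transfer} with $A=1$. For each $t=k+1\le T$ and each $\ell\in\{-1,0,1\}$, that lemma gives
\[
 \left|\|R_{t,\ell}^\nu\|_{d,n}^2-\|R_{t,\ell}^\mu\|_{\mathrm{MP}}^2\right|\le C\exp\{CT\log(e+T)\}(d/n)^{-6T}(1/d+1/n).
\]
Writing $c_{k,\sigma}=\tfrac12(\|R_{t,1}\|^2+\|R_{t,-1}\|^2)-\|R_{t,0}\|^2$, and similarly $b_{k,\sigma}=\tfrac{(-1)^{k+1}}4(\|R_{t,1}\|^2-\|R_{t,-1}\|^2)$, the triangle inequality gives \eqref{eq:finite-function-coefficient-transfer} uniformly in $k$, with the constant absorbed. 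The identification with $d^{-1+o(1)}$ follows from \eqref{eq:growing-transfer-budget}.

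For positivity under $\nu$, it is enough to have a lower bound on the MP coefficients of the form $\exp\{-CT\log(e+T)\}(d/n)^{CT}$ that dominates $d^{-1+o(1)}$. From the explicit formulas, $b_{k,\mu}=(d/n)^k/S_k\ge(1-d/n)(d/n)^{T}$ and $c_{k,\mu}\ge(d/n)^k$. Under \eqref{eq:common-growing-regime}, $(d/n)^T=d^{-o(1)}$, so for large $d$ the perturbation is smaller than these lower bounds and positivity transfers.

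The main obstacle is making sure that the identification of $b_{k,\nu},c_{k,\nu}$ as quadratic-form coefficients in $\ell$ is exact. This requires the sign conventions between $\widetilde\theta_2^{t-1}$ and $(-\widetilde\theta_2)^k$ in \eqref{eq:stage2-finite-BC-expectations} to match the index shift $t=k+1$. If they do not match, I would instead work directly with the inner products, bounding $|\langle R^\nu,\cdot\rangle_{d,n}-\langle R^\mu,\cdot\rangle_{\mathrm{MP}}|$ via the coefficient-norm estimate in Lemma~\ref{lem:projection-transfer} together with Lemma~\ref{lem:finite-transfer}.
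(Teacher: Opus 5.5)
Your proposal is correct and follows essentially the same route as the paper: read off $b_{k,\sigma},c_{k,\sigma}$ as the coefficients of the quadratic $\ell\mapsto\|R_{k+1,\ell}^\sigma\|_\sigma^2$, evaluate it at $\ell\in\{-1,0,1\}$, compare those three values via Lemma~\ref{lem:projection-transfer} with $A=1$, and transfer positivity because the MP coefficients are bounded below by a multiple of $(d/n)^T=d^{-o(1)}$, which dominates the $d^{-1+o(1)}$ error. Your sign bookkeeping, with the factor $(-1)^{k+1}$ in $b_{k,\sigma}$, is slightly more careful than the paper's displayed identity for $4b_{k,\sigma}$, which holds only up to a sign $(-1)^k$; that sign is irrelevant for the absolute-value bound.
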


\begin{proof}
The formula in the proof of
Lemma~\ref{lem:stage2-loss-decomposition} gives
\begin{equation}
 \min_{\substack{R(0)=-1,\ \deg R\le k+1\\
                 [\lambda^{k+1}]R=(-1)^k\ell}}
       \|R\|_\sigma^2
 =\|R_{k+1,0}^\sigma\|_\sigma^2
       -2b_{k,\sigma}\ell+c_{k,\sigma}\ell^2.
 \label{eq:finite-function-quadratics}
\end{equation}
For the three values \(\ell=-1,0,1\), this quadratic identity is exactly
\begin{equation*}
\begin{aligned}
 \|R_{k+1,-1}^\sigma\|_\sigma^2
 -\|R_{k+1,1}^\sigma\|_\sigma^2&=4b_{k,\sigma},\\
 \|R_{k+1,-1}^\sigma\|_\sigma^2
 +\|R_{k+1,1}^\sigma\|_\sigma^2
 -2\|R_{k+1,0}^\sigma\|_\sigma^2&=2c_{k,\sigma}.
\end{aligned}
\end{equation*}
Lemma~\ref{lem:projection-transfer} compares the three terms on the left,
so these two identities, together with the transfer bound, give
\eqref{eq:finite-function-coefficient-transfer}.  By \eqref{eq:finite-function-mp-coefficients} and
\(1\le S_k\le(1-d/n)^{-1}\),
\[
 b_{k,\mu}=\frac{(d/n)^k}{S_k}\ge(1-d/n)(d/n)^k,\qquad
 c_{k,\mu}=S_{k+1}b_{k,\mu}\ge b_{k,\mu}.
\]
Since the error in \eqref{eq:finite-function-coefficient-transfer}
is \(o((d/n)^T)\), the finite-Wishart coefficients are positive as well.
\end{proof}

\begin{lemma}
\label{lem:finite-coefficient-ratio-comparison}
Under \eqref{eq:common-growing-regime}, write
\(\nu=\nu_{d,n}\), \(\mu=\mu_{d/n}\). For
\[
 r_{j,\sigma}
 =\frac{[x^j](1+x)^2B_\sigma(x)^2}
        {[x^j](1+x)^2C_\sigma(x)},\qquad
 0\le j\le2T,\quad\sigma\in\{\nu,\mu\},
\]
the denominator coefficients are at least \(c(d/n)^T\), and
\begin{equation}
 \max_{0\le j\le2T}|r_{j,\nu}-r_{j,\mu}|
 \le CT\exp\{CT\log(e+T)\}(n/d)^{8T}
       \left(\frac1d+\frac1n\right)=o((d/n)^T).
 \label{eq:finite-function-ratio-transfer}
\end{equation}
Here \(c,C>0\) are independent of \(d,n,T\), and \(d\) is sufficiently large.
\end{lemma}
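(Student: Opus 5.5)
The plan is to work coefficient by coefficient. We expand the numerator and denominator coefficients of $(1+x)^2B_\sigma^2$ and $(1+x)^2C_\sigma$ in terms of $b_{k,\sigma},c_{k,\sigma}$, use the transfer bound \eqref{eq:finite-function-coefficient-transfer} of Lemma~\ref{lem:finite-scalar-comparison}, and then control the quotient through a lower bound on the denominators. Write $A_{j,\sigma}:=[x^j](1+x)^2B_\sigma(x)^2$ and $D_{j,\sigma}:=[x^j](1+x)^2C_\sigma(x)$. As in the proof of Lemma~\ref{lem:mp-function-unique}, the denominators are
\[
D_{2m,\sigma}=c_{m,\sigma}+c_{m-1,\sigma},\qquad D_{2m+1,\sigma}=2c_{m,\sigma},
\]
with the conventions $c_{-1}=c_T=0$. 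The numerator $A_{j,\sigma}$ is a sum of at most $3T$ products $b_{a,\sigma}b_{b,\sigma}$.

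The first step is the denominator lower bound. Under $\mu$, every $D_{j,\mu}$ contains some $c_{m,\mu}$ with $m\le T-1$. Moreover $c_{m,\mu}\ge b_{m,\mu}\ge(1-d/n)(d/n)^m\ge c(d/n)^T$, which follows from \eqref{eq:finite-function-mp-coefficients}. Since $|c_{k,\nu}-c_{k,\mu}|\le d^{-1+o(1)}=o((d/n)^T)$ because $T=\Theta((\log d)^{1/4})$ and $d/n=\Theta(1)$, the same bound $D_{j,\nu}\ge \tfrac c2(d/n)^T$ holds under $\nu$ for large $d$.

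The second step bounds the numerator and denominator differences. The $b_{k,\mu}$ are bounded by $1$, and the transfer bound makes the $b_{k,\nu}$ bounded by $2$. Hence
\[
|A_{j,\nu}-A_{j,\mu}|\le CT\max_k|b_{k,\nu}-b_{k,\mu}|,\qquad |D_{j,\nu}-D_{j,\mu}|\le 2\max_k|c_{k,\nu}-c_{k,\mu}|.
\]
Both are at most $CT\exp\{CT\log(e+T)\}(n/d)^{6T}(1/d+1/n)$. We also need $A_{j,\mu}\le CT$.

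The third step is the quotient estimate. We use
\[
|r_{j,\nu}-r_{j,\mu}|\le \frac{|A_{j,\nu}-A_{j,\mu}|}{D_{j,\nu}}+\frac{A_{j,\mu}\,|D_{j,\nu}-D_{j,\mu}|}{D_{j,\nu}D_{j,\mu}}.
\]
Inserting $D\ge c(d/n)^T$ gives extra factors of at most $(n/d)^{2T}$, so the total is at most $CT\exp\{CT\log(e+T)\}(n/d)^{8T}(1/d+1/n)$. This is $d^{-1+o(1)}=o((d/n)^T)$, because $(n/d)^{CT}=d^{o(1)}$ in the regime \eqref{eq:common-growing-regime}.

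The only delicate point is the denominator lower bound at the top indices $j=2T-1$ and $j=2T$. At $j=2T$ the denominator reduces to $c_{T-1}$, so the exponent in the bound is really $T-1$, which is still $\ge c(d/n)^T$. Beyond that, the main care is bookkeeping to confirm that the transfer error $d^{-1+o(1)}$ indeed beats $(d/n)^{T}$ times the polynomial and exponential-in-$T$ factors. This holds since $T\log(n/d)=o(\log d)$.
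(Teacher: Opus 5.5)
Your proposal is correct and follows essentially the same route as the paper. Both arguments lower-bound the MP denominator coefficients $D_{j,\mu}$ (each containing some $c_{m,\mu}\ge(d/n)^{T-1}$) by $c(d/n)^T$, transfer the coefficients of $(1+x)^2B^2$ and $(1+x)^2C$ via Lemma~\ref{lem:finite-scalar-comparison} to get the same lower bound under $\nu$, and then subtract the two fractions, giving the factor $(n/d)^{8T}$.

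The one cosmetic difference is that you bound $A_{j,\mu}$ by $CT$ where the paper uses $C$ (which holds since $b_{k,\mu}\le(d/n)^k$). The extra $T$ is absorbed into the prefactor.
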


\begin{proof}
The MP coefficients of \((1+x)^2C_\mu\) are bounded below by
\(c(d/n)^T\), while those of \((1+x)^2B_\mu^2\) are bounded above by \(C\).
Expanding the products and using
\eqref{eq:finite-function-coefficient-transfer} gives
\[
\begin{aligned}
 &\max_{0\le j\le2T}
   \left|[x^j](1+x)^2(B_\nu(x)^2-B_\mu(x)^2)\right|\\
 &\qquad\le CT\exp\{CT\log(e+T)\}(n/d)^{6T}
                \left(\frac1d+\frac1n\right),\\
 &\max_{0\le j\le2T}
   \left|[x^j](1+x)^2(C_\nu(x)-C_\mu(x))\right|\\
 &\qquad\le C\exp\{CT\log(e+T)\}(n/d)^{6T}
                \left(\frac1d+\frac1n\right).
\end{aligned}
\]
The second error is \(o((d/n)^T)\), proving the denominator lower bound
also under \(\nu\). Subtracting the two fractions proves
\eqref{eq:finite-function-ratio-transfer}.
\end{proof}

\begin{lemma}
\label{lem:finite-BC-comparison}
Under \eqref{eq:common-growing-regime}, write
\(\nu=\nu_{d,n}\), \(\mu=\mu_{d/n}\). For all sufficiently large \(d\),
uniformly on \(0\le x\le10/3\), \((d/n)x^2\le9/10\),
\begin{equation}
 C_\sigma(x)\ge1,\qquad
 \max_{0\le j\le2}
 \bigl(|B_\sigma^{(j)}(x)|+|C_\sigma^{(j)}(x)|\bigr)\le C,
 \quad\sigma\in\{\nu,\mu\},
 \label{eq:finite-BC-extended-bounds}
\end{equation}
and
\begin{equation}
 \max_{0\le j\le2}
 \bigl(|B_\nu^{(j)}(x)-B_\mu^{(j)}(x)|
       +|C_\nu^{(j)}(x)-C_\mu^{(j)}(x)|\bigr)
 \le d^{-1+o(1)}.
 \label{eq:finite-scalar-derivative-transfer}
\end{equation}
The constant \(C\) is independent of \(d,n,T\).
\end{lemma}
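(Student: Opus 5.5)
The plan is to reduce everything to the MP case through the coefficient comparison in Lemma~\ref{lem:finite-scalar-comparison}, and then transfer derivatives termwise. On the region $0\le x\le 10/3$ with $(d/n)x^2\le 9/10$, write $B_\sigma(x)=\sum_{k<T} b_{k,\sigma}x^k$ and $C_\sigma(x)=\sum_{k<T} c_{k,\sigma}x^{2k}$. The key structural observation is that $x^2\le \frac{9}{10}(n/d)$, so $(d/n)^k x^{2k}\le (9/10)^k$.

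For the MP bounds we use the explicit coefficients \eqref{eq:finite-function-mp-coefficients}: $b_{k,\mu}=(d/n)^k/S_k\le (d/n)^k$ and $c_{k,\mu}=S_{k+1}b_{k,\mu}\le (1-d/n)^{-1}(d/n)^k$. Then $C_\mu(x)\le C\sum_k (9/10)^k$. Differentiating twice introduces only factors $(2k)^2x^{-2}$ at worst; more precisely, $x^{2k-j}$ with $j\le2$ is controlled by $x^{2k}\max(1,x^{-2})$. For small $x$ the terms with $k\ge1$ are bounded by $k^2 (d/n)^k x^{2k-2}\le k^2(d/n)(9/10)^{k-1}$. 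Summing $k^2(9/10)^k$ gives an absolute constant.

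For $B_\mu$ we use $|b_{k,\mu}x^k|\le (d/n)^{k/2}\bigl((d/n)x^2\bigr)^{k/2}\le (d/n)^{k/2}(9/10)^{k/2}$, which is summable with polynomial weights $k^2$. The lower bound $C_\sigma\ge1$ comes from $c_{0,\sigma}$. For $\mu$ we have $c_{0,\mu}=S_1\ge1$ and all other terms are nonnegative. For $\nu$ we use that $c_{k,\nu}>0$ by Lemma~\ref{lem:finite-scalar-comparison}, together with $c_{0,\nu}=\|\lambda\|_{d,n}^2=1+(d+1)/n\ge1$ (since $R_{1,\ell}=\ell\lambda-1$, the quadratic coefficient is $\|\lambda\|^2_{d,n}$). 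Alternatively $c_{0,\nu}\ge c_{0,\mu}-d^{-1+o(1)}$, which fails only marginally, so the direct identity is preferable.

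Next comes the transfer \eqref{eq:finite-scalar-derivative-transfer}. We bound
\[
|B_\nu^{(j)}(x)-B_\mu^{(j)}(x)|\le \sum_{k<T} k^2|b_{k,\nu}-b_{k,\mu}|\,|x|^{k-j}\le T^3 (10/3)^{2T}\max_k|b_{k,\nu}-b_{k,\mu}|,
\]
with $x^{k-j}$ interpreted as absent when $k<j$, and similarly for $C$ with $(10/3)^{2T}$. By \eqref{eq:finite-function-coefficient-transfer} the coefficient error is $\exp\{CT\log(e+T)\}(n/d)^{6T}(1/d+1/n)$. Since $T=\Theta((\log d)^{1/4})$, the factor $T^3(10/3)^{2T}$ is $d^{o(1)}$, so the total is $d^{-1+o(1)}$. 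This gives \eqref{eq:finite-scalar-derivative-transfer}.

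Finally, the $\nu$ bounds in \eqref{eq:finite-BC-extended-bounds} follow from the $\mu$ bounds plus this transfer, for $d$ large. The only delicate point is keeping the MP derivative bounds uniform near the boundary $(d/n)x^2=9/10$ and for large $x$ (up to $10/3$). There geometric decay holds only at rate $9/10$, which is still an absolute constant, so the polynomial-weighted sums stay bounded independently of $T$. That is the step to verify carefully.
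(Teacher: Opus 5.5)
Your proposal is correct and follows the paper's proof essentially step for step. The transfer estimate multiplies the coefficient errors of Lemma~\ref{lem:finite-scalar-comparison} by $T^3(10/3)^{2T}=d^{o(1)}$, and the MP bounds use the explicit coefficients together with $(d/n)^kx^{2k}\le(9/10)^k$. The bound $C_\sigma\ge1$ comes from the constant coefficient $c_{0,\sigma}=\|\lambda\|_\sigma^2$ plus nonnegativity of the remaining coefficients, which is automatic since $c_{k,\sigma}=\|R^\sigma_{k+1,1}-R^\sigma_{k+1,0}\|_\sigma^2$. One small correction: your fallback $c_{0,\nu}\ge 1+d/n-d^{-1+o(1)}$ does not fail, because $d/n$ is bounded below under $n=\Theta(d)$, so it would also have worked.
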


\begin{proof}
For \(0\le j\le2\), differentiating the coefficient differences in
Lemma~\ref{lem:finite-scalar-comparison} gives
\[
\begin{aligned}
 &|B_\nu^{(j)}(x)-B_\mu^{(j)}(x)|
       +|C_\nu^{(j)}(x)-C_\mu^{(j)}(x)|\\
 &\quad\le CT^3(10/3)^{2T}
       \exp\{CT\log(e+T)\}(n/d)^{6T}
       \left(\frac1d+\frac1n\right)=d^{-1+o(1)}.
\end{aligned}
\]
This proves \eqref{eq:finite-scalar-derivative-transfer}.
The MP formulas give, on the stated region,
\[
 \max_{0\le j\le2}
 \bigl(|B_\mu^{(j)}(x)|+|C_\mu^{(j)}(x)|\bigr)
 \le C\sum_{k\ge0}(k+1)^2(9/10)^k\le C.
\]
Moreover, \eqref{eq:stage2-finite-BC-expectations} and the direct
second-moment identity
\[
 \E_{\lambda\sim\nu}[\lambda^2]
 =\frac1d\E\tr(\Sb^2)=1+\frac{d+1}{n}
 \ge1+\frac dn=\E_{\lambda\sim\mu}[\lambda^2]
\]
give
\[
 C_\nu(x)\ge1+\frac{d+1}{n}\ge1,
 \qquad C_\mu(x)\ge1+\frac dn\ge1.
\]
Together these prove \eqref{eq:finite-BC-extended-bounds}.
\end{proof}

\begin{lemma}
\label{lem:finite-objective-comparison}
Under \eqref{eq:common-growing-regime}, write
\(\nu=\nu_{d,n}\), \(\mu=\mu_{d/n}\). For all sufficiently large \(d\),
\begin{equation}
 \max_{0\le j\le2}\sup_{\substack{0\le x\le10/3\\
                                  (d/n)x^2\le9/10}}
 \left|\left(\frac{B_\nu^2}{C_\nu}\right)^{(j)}(x)
       -\left(\frac{B_\mu^2}{C_\mu}\right)^{(j)}(x)\right|
 \le d^{-1+o(1)}.
 \label{eq:finite-ratio-derivative-transfer}
\end{equation}
\end{lemma}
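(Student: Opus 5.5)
The plan is to reduce the comparison of the ratio $B^2/C$ and its first two $x$-derivatives to the coefficientwise and derivative transfer bounds already available. On the stated region $0\le x\le 10/3$, $(d/n)x^2\le 9/10$, Lemma~\ref{lem:finite-BC-comparison} supplies two ingredients for both $\sigma\in\{\nu,\mu\}$. First, it gives the uniform bounds $C_\sigma\ge1$ and $|B_\sigma^{(j)}|+|C_\sigma^{(j)}|\le C$ for $j\le2$. Second, \eqref{eq:finite-scalar-derivative-transfer} gives $|B_\nu^{(j)}-B_\mu^{(j)}|+|C_\nu^{(j)}-C_\mu^{(j)}|\le d^{-1+o(1)}$ for $j\le 2$. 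The argument will then be a purely algebraic perturbation of the smooth map $(B,B',B'',C,C',C'')\mapsto (B^2/C)^{(j)}$.

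I will write $F_\sigma=B_\sigma^2/C_\sigma$ and expand explicitly:
\[
F_\sigma'=\frac{2B_\sigma B_\sigma'}{C_\sigma}-\frac{B_\sigma^2C_\sigma'}{C_\sigma^2},
\]
and $F_\sigma''$ is a finite sum of terms of the form (monomial in $B_\sigma,B_\sigma',B_\sigma'',C_\sigma',C_\sigma''$ of bounded degree)$\,/\,C_\sigma^{k}$ with $k\le3$. Each such expression is a function $\Phi_j(b_0,b_1,b_2,c_0,c_1,c_2)$ that is smooth on the set $\{c_0\ge 1/2,\ |b_i|,|c_i|\le 2C\}$. On this compact set $\Phi_j$ is Lipschitz with a constant depending only on $C$.

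Since both argument vectors, for $\nu$ and for $\mu$, lie in this set for large $d$, the Lipschitz bound combined with \eqref{eq:finite-scalar-derivative-transfer} gives $|F_\nu^{(j)}(x)-F_\mu^{(j)}(x)|\le C'\,d^{-1+o(1)}=d^{-1+o(1)}$ uniformly in $x$ on the region and for $j=0,1,2$. Concretely, I will telescope each difference. For example,
\[
\frac{B_\nu^2}{C_\nu}-\frac{B_\mu^2}{C_\mu}=\frac{(B_\nu-B_\mu)(B_\nu+B_\mu)}{C_\nu}+B_\mu^2\frac{C_\mu-C_\nu}{C_\nu C_\mu},
\]
and the same pattern handles the derivative terms. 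The lower bound $C_\sigma\ge1$ keeps every denominator at least $1$.

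There is no real obstacle here. The only point needing care is that the constants in Lemma~\ref{lem:finite-BC-comparison} are uniform in $d,n,T$. That uniformity ensures that the $T$-dependence is entirely absorbed into the $d^{-1+o(1)}$ factor already proved there, and that $C_\nu\ge1$ holds on the whole region so that the denominators cause no blow-up.
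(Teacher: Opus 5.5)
Your proposal is correct and is the same argument as the paper's: apply the quotient rule through second order, use the lower bound $C_\nu,C_\mu\ge1$ from Lemma~\ref{lem:finite-BC-comparison} to control the denominators, and use its derivative transfer bound \eqref{eq:finite-scalar-derivative-transfer} to control the differences of the numerator factors. Your Lipschitz and telescoping formulation just writes out that one-line argument explicitly.
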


\begin{proof}
Apply the quotient rule through order two and
Lemma~\ref{lem:finite-BC-comparison}. Its lower bounds
\(C_\nu,C_\mu\ge1\) control the denominators, and its derivative
comparison controls each difference of numerator factors.
\end{proof}

\begin{lemma}
\label{lem:mp-scalar-landscape}
Under \eqref{eq:common-growing-regime}, for all sufficiently large \(d\),
the MP maximizer \(-\widetilde\theta_2^{\mathrm{MP}*}\) in
Lemma~\ref{lem:mp-function-unique} satisfies the following estimates
for fixed \(c,C>0\), depending only on the range of \(d/n\):
\begin{equation}
\begin{aligned}
 -\left(\frac{B_\mu^2}{C_\mu}\right)''(x)
 &\ge C^{-1},
 &&|x+\widetilde\theta_2^{\mathrm{MP}*}|\le2c,\\
 \left|\left(\frac{B_\mu^2}{C_\mu}\right)'(x)\right|
 &\ge C^{-1},
 &&|x+\widetilde\theta_2^{\mathrm{MP}*}|\ge c/2,\\
 \frac{B_\mu(-\widetilde\theta_2^{\mathrm{MP}*})^2}
      {C_\mu(-\widetilde\theta_2^{\mathrm{MP}*})}
 -\frac{B_\mu(x)^2}{C_\mu(x)}
 &\ge C^{-1}|x+\widetilde\theta_2^{\mathrm{MP}*}|^2.
\end{aligned}
 \label{eq:mp-scalar-uniform-landscape}
\end{equation}
These estimates hold on
\(1/150\le x\le10/3\), \((d/n)x^2\le9/10\),
with \(\mu=\mu_{d/n}\).
\end{lemma}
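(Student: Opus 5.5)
The proof will work entirely with the explicit MP formulas of Lemma~\ref{lem:mp-function-unique} and transfer information from the infinite-horizon function of Lemma~\ref{lem:function-sign-support}. Write $F_\mu:=B_\mu^2/C_\mu$ and $x_\star:=-\widetilde\theta_2^{\mathrm{MP}*}$. The first step is to record the a priori facts on the region $\mathcal R=\{1/150\le x\le 10/3,\ (d/n)x^2\le 9/10\}$:
\begin{itemize}
\item $B_\mu$ and $C_\mu$, together with their first three derivatives, are bounded by an absolute constant, since the coefficients are $O((d/n)^k)$ and $(d/n)x^2\le 9/10$;
\item $C_\mu\ge 1$.
\end{itemize}
Consequently $F_\mu$ is $C^3$ on $\mathcal R$, with $\|F_\mu'''\|_\infty\le C$, uniformly in $T$ and in $d/n$ ranging over a compact subinterval of $(0,1/10]$. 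By Lemma~\ref{lem:mp-function-unique}, $x_\star$ lies in $[S_1/S_2,\,1-(d/n)^2/128]$, which is strictly inside $\mathcal R$.

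\textbf{Curvature bound.} The key step is a quantitative lower bound on $-F_\mu''(x_\star)$ that is uniform in $T\ge 8$ and in $d/n\in[a,1/10]$, where $n=\Theta(d)$ gives $d/n\ge a>0$. I would argue by compactness and continuity. As $T\to\infty$, $B_\mu$ and $C_\mu$ converge in $C^2$ on $\mathcal R$ to the functions $B,C$ of \eqref{eq:infinite-function}, with geometric error $O((9/10)^{T})$. Lemma~\ref{lem:function-sign-support} gives a unique nondegenerate maximizer $x_0$ of $B^2/C$, with $(B^2/C)''(x_0)<0$. The limit function depends continuously on $d/n$, so $\inf_{d/n\in[a,1/10]}\bigl(-(B^2/C)''(x_0)\bigr)>0$, and by continuity this curvature persists on a neighborhood of fixed radius $3c$ around $x_0$. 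For large $T$ the finite maximizer $x_\star$ is within $c$ of $x_0$, by the implicit function theorem or the sign change of the derivative. The finitely many remaining $T$ are handled by the nondegeneracy already proved in Lemma~\ref{lem:mp-function-unique} together with continuity in $d/n$. Since $T=\Theta((\log d)^{1/4})\to\infty$, the large-$T$ regime is in fact all we need for large $d$. This yields the first inequality of \eqref{eq:mp-scalar-uniform-landscape}.

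\textbf{Gradient bound and value gap.} For $|x-x_\star|\ge c/2$ I would use the sign structure: $F_\mu'>0$ on $(0,x_\star)$ and $F_\mu'<0$ on $(x_\star,\infty)$, because $x_\star$ is the only positive critical point. A quantitative lower bound on $|F_\mu'|$ on the compact set $\mathcal R\setminus(x_\star-c/2,x_\star+c/2)$ again follows from the same uniform limit argument. The limit $(B^2/C)'$ is continuous and nonvanishing on the corresponding compact set, with the infimum taken jointly over $d/n\in[a,1/10]$ and over $x$. Uniform $C^1$ convergence then transfers the bound to $F_\mu'$ for large $T$.

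The value gap $F_\mu(x_\star)-F_\mu(x)\ge C^{-1}|x-x_\star|^2$ follows by splitting into two cases.
\begin{itemize}
\item Near $x_\star$, Taylor expansion with the curvature bound gives the quadratic gap directly.
\item Away from $x_\star$, integrating $|F_\mu'|\ge C^{-1}$ from $x_\star\pm c/2$ gives a gap of at least $c/(2C)$ plus the near-field gap. Since $|x-x_\star|\le 4$ on $\mathcal R$, a constant gap dominates $|x-x_\star|^2/C'$.
\end{itemize}

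\textbf{Main obstacle.} The hardest step is the uniformity in $T$ near the right edge of $\mathcal R$, where $(d/n)x^2$ can approach $9/10$. There the infinite series $C$ is still finite but the convergence $B_\mu\to B$ is only geometric with ratio $9/10$. It is not immediately obvious that the curvature and gradient infima over the whole region, as opposed to a neighborhood of $x_0$, are positive. I would first try to avoid the issue by noting that only the sign and a lower bound on $|F'|$ are needed there, and the sign is explicit from \eqref{eq:function-pairing}: every summand is negative for $x\ge 1$. This suggests keeping just the $(1,0)$ term as in \eqref{eq:function-sign-at-one}, giving $|(x/2)(2B'C-BC')|\ge c(d/n)^3$. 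Dividing by the bounded $C^2$ then gives the gradient bound on $x\ge 1$ without any limit argument. Only the compact middle range $[1/150,1]$ is left to the continuity argument.
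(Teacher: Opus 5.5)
Your proposal is correct and follows essentially the paper's argument. You compare $B_\mu,C_\mu$ with the infinite-horizon $B,C$ of Lemma~\ref{lem:function-sign-support}, whose truncation error is $O(T^2(9/10)^T)$ uniformly on the closed region $(d/n)x^2\le 9/10$, so the right edge is not actually an obstacle. You then use continuity in $d/n$ over a compact interval to get the curvature and gradient bounds around $x_0$, and finish with Taylor near the maximizer and monotonicity away from it.

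One detail should be made explicit. The gradient bound on $|x-x_\star|\ge c/2$, where $x_\star=-\widetilde\theta_2^{\mathrm{MP}*}$, needs $|x_\star-x_0|\le c/4$, not just $|x_\star-x_0|<c$. The paper gets this from the mean-value theorem applied to $(B^2/C)'$ at $x_\star$. Your explicit $(1,0)$-term bound for $x\ge 1$ is valid but not needed.
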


\begin{proof}
Lemmas~\ref{lem:function-sign-support} and~\ref{lem:mp-function-unique}
give the unique nondegenerate maximizers \(x_0\) and
\(-\widetilde\theta_2^{\mathrm{MP}*}\) of \(B^2/C\) and
\(B_\mu^2/C_\mu\), respectively. It remains to prove
\eqref{eq:mp-scalar-uniform-landscape} uniformly under
\eqref{eq:common-growing-regime}.

\medskip\noindent\textup{(i)}
We first use Lemma~\ref{lem:function-sign-support}.
The functions and \(x_0\) are continuous in \(d/n\), which ranges over
a fixed closed subinterval of \((0,1/10]\). Uniqueness of \(x_0\) and
\((B^2/C)''(x_0)<0\) give fixed \(c,m>0\) such that
\[
\begin{aligned}
 -\left(\frac{B^2}{C}\right)''(x)&\ge2m,
 &&|x-x_0|\le3c,\\
 \left|\left(\frac{B^2}{C}\right)'(x)\right|&\ge2m,
 &&|x-x_0|\ge c/4.
\end{aligned}
\]
Here and below, \(x\) is in the stated region, and \(c\) is small enough that
\([x_0-3c,x_0+3c]\) lies in that region.

We then compare the finite and infinite sums. The omitted terms satisfy
\[
 \sum_{k\ge T}(k+1)^2(9/10)^k\le CT^2(9/10)^T,
\]
so \eqref{eq:infinite-function} and \eqref{eq:finite-function-BC} give
\begin{equation}
 \max_{0\le j\le2}
 \left(\left|B_\mu^{(j)}(x)-B^{(j)}(x)\right|
 +\left|C_\mu^{(j)}(x)-C^{(j)}(x)\right|\right)
 \le CT^2(9/10)^T.
 \label{eq:finite-scalar-extended-convergence}
\end{equation}
Since \(T=\Theta((\log d)^{1/4})\), \(T^2(9/10)^T=o(1)\).
Using the bounds on \(B_\mu,B,C_\mu,C\) and their first two derivatives,
together with \(C_\mu,C\ge1\), differentiating the quotients gives
\[
 \max_{0\le j\le2}
 \left|\left(\frac{B_\mu^2}{C_\mu}\right)^{(j)}(x)
       -\left(\frac{B^2}{C}\right)^{(j)}(x)\right|
 \le CT^2(9/10)^T.
\]
For all sufficiently large \(d\), this gives
\[
 \left(\frac{B_\mu^2}{C_\mu}\right)'(x_0-c)>0>
 \left(\frac{B_\mu^2}{C_\mu}\right)'(x_0+c).
\]
By Lemma~\ref{lem:mp-function-unique},
\(|x_0+\widetilde\theta_2^{\mathrm{MP}*}|<c\). The mean-value theorem
and the zero derivative at the finite maximizer then give
\[
 2m|x_0+\widetilde\theta_2^{\mathrm{MP}*}|
 \le\left|\left(\frac{B^2}{C}\right)'
                 (-\widetilde\theta_2^{\mathrm{MP}*})\right|
 \le CT^2(9/10)^T,
 \qquad |x_0+\widetilde\theta_2^{\mathrm{MP}*}|\le c/4.
\]
Consequently,
\[
\begin{aligned}
 |x+\widetilde\theta_2^{\mathrm{MP}*}|\le2c
 &\ \Longrightarrow\ |x-x_0|\le9c/4<3c,\\
 |x+\widetilde\theta_2^{\mathrm{MP}*}|\ge c/2
 &\ \Longrightarrow\ |x-x_0|\ge c/4.
\end{aligned}
\]
Combining the preceding bounds gives
\[
\begin{aligned}
 -\left(\frac{B_\mu^2}{C_\mu}\right)''(x)
 &\ge2m-CT^2(9/10)^T\ge m,
 &&|x+\widetilde\theta_2^{\mathrm{MP}*}|\le2c,\\
 \left|\left(\frac{B_\mu^2}{C_\mu}\right)'(x)\right|
 &\ge2m-CT^2(9/10)^T\ge m,
 &&|x+\widetilde\theta_2^{\mathrm{MP}*}|\ge c/2.
\end{aligned}
\]

\medskip\noindent\textup{(ii)}
We finally use Taylor's formula and
\((B_\mu^2/C_\mu)'(-\widetilde\theta_2^{\mathrm{MP}*})=0\).
For \(|x+\widetilde\theta_2^{\mathrm{MP}*}|\le c\),
\[
\begin{aligned}
 &\frac{B_\mu(-\widetilde\theta_2^{\mathrm{MP}*})^2}
       {C_\mu(-\widetilde\theta_2^{\mathrm{MP}*})}
  -\frac{B_\mu(x)^2}{C_\mu(x)}\\
 &\quad=-(x+\widetilde\theta_2^{\mathrm{MP}*})^2
  \int_0^1(1-s)\left(\frac{B_\mu^2}{C_\mu}\right)''
  \!\left(-\widetilde\theta_2^{\mathrm{MP}*}
      +s(x+\widetilde\theta_2^{\mathrm{MP}*})\right)\,\mathrm ds\\
 &\quad\ge\frac m2|x+\widetilde\theta_2^{\mathrm{MP}*}|^2.
\end{aligned}
\]
For \(|x+\widetilde\theta_2^{\mathrm{MP}*}|>c\),
Lemma~\ref{lem:mp-function-unique} makes the same difference no smaller
than at distance \(c\) from the maximizer toward \(x\). Thus
\[
 \frac{B_\mu(-\widetilde\theta_2^{\mathrm{MP}*})^2}
      {C_\mu(-\widetilde\theta_2^{\mathrm{MP}*})}
 -\frac{B_\mu(x)^2}{C_\mu(x)}
 \ge\frac{mc^2}{2}
 \ge\frac{9mc^2}{200}|x+\widetilde\theta_2^{\mathrm{MP}*}|^2,
\]
since \(|x+\widetilde\theta_2^{\mathrm{MP}*}|\le10/3\).
Taking \(C\ge\max\{2/m,200/(9mc^2)\}\) proves
\eqref{eq:mp-scalar-uniform-landscape}.
\end{proof}

\label{app:wishart-supporting}

The following estimate follows from the exact real Wishart moment expansion
in \citet[Theorem~5.3 and Section~6]{redelmeier2011genus} and the MP moment
formula \citep{marchenko1967}.

\begin{lemma}\label{lem:wishart}
Let \(\Sb=n^{-1}\sum_{a=1}^n\xb_a\xb_a^{\mathsf T}\), where
\(\xb_1,\ldots,\xb_n\) are independent \(N(0,\Ib_d)\) vectors.
For every integer \(k\ge1\) and \(1\le d\le n\),
\begin{equation}
 0\le\frac1d\E\,\tr(\Sb^k)
 -\E_{\lambda\sim\mu_{d/n}}[\lambda^k]
 \le\frac{(2k-1)!!}{n}
 \le\frac{\exp\{Ck\log(e+k)\}}{n}.
 \label{eq:wishart-rate}
\end{equation}
The difference is zero for \(k=0\).
\end{lemma}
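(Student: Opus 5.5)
The plan is to expand both sides in terms of genus/combinatorial expansions of moments of the real Wishart matrix. The key tool is the exact expansion of \(\E\,\tr(\Sb^k)\) for real Gaussian Wishart matrices as a sum over pairings (or over pairs of permutations / maps), each weighted by \(d^{\#\text{faces}}n^{\#\text{vertices}-k}\) type factors, as in \citet[Theorem~5.3 and Section~6]{redelmeier2011genus}.

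\textbf{Step 1: expand \(\frac1d\E\,\tr(\Sb^k)\).} Write \(\E\,\tr(\Sb^k)=n^{-k}\sum\E[\prod x_{\cdot}]\) and apply Wick's formula to the \(2k\) Gaussian factors. Each of the \((2k-1)!!\) pairings contributes a monomial \(d^{a}n^{b}\) with \(a+b\le k+1\), and all coefficients are nonnegative. After normalizing, each term equals \(d^{a-1}n^{b-k}\) with \(a+b\le k+1\).

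\textbf{Step 2: identify the leading part with the MP moment.} The pairings with \(a+b=k+1\) are the planar (genus-zero, orientable) ones. They are in bijection with non-crossing partitions, weighted by Narayana numbers. Their total is exactly
\[
\sum_{j=1}^k \frac1k\binom kj\binom k{j-1}(d/n)^{j-1}=\E_{\mu_{d/n}}[\lambda^k],
\]
which is the MP moment formula \citep{marchenko1967}. So the difference \(\frac1d\E\,\tr(\Sb^k)-\E_{\mu_{d/n}}[\lambda^k]\) is a sum of nonnegative terms, which gives the lower bound \(0\).

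\textbf{Step 3: bound the remainder.} Every non-planar pairing has \(a+b\le k\). Its normalized weight is therefore \(d^{a-1}n^{b-k}\le n^{a-1+b-k}\le n^{-1}\), using \(d\le n\) and \(a\ge1\). Since there are at most \((2k-1)!!\) pairings in total, the remainder is at most \((2k-1)!!/n\). Finally, \((2k-1)!!\le (2k)^k\le \exp\{Ck\log(e+k)\}\). For \(k=0\), both sides equal \(1\).

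\textbf{Main obstacle.} The hard part is rigorously justifying the genus bound \(a+b\le k+1\), with equality exactly on the planar (non-crossing) pairings, in the \emph{real} case. Here non-orientable pairings also appear, with Euler characteristic at most \(1\). I would cite Redelmeier's map-counting theorem for this rather than reprove it. If I had to argue it directly, I would use Euler's formula for the locally orientable surface built by gluing a \(2k\)-gon according to the pairing.
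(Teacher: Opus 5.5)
Your proposal is correct and takes essentially the same route as the paper. The paper's proof consists only of citing the real Wishart genus expansion of \citet[Theorem~5.3 and Section~6]{redelmeier2011genus} together with the MP moment formula. Your Wick-pairing argument makes explicit what that citation supplies: the glued polygon has one face, $k$ edges and $V_a+V_i$ vertices, so its Euler characteristic is $V_a+V_i-k+1\le 2$, with equality exactly for the non-crossing pairings. The bound $d^{V_i-1}n^{V_a-k}\le n^{-1}$ on each of the remaining at most $(2k-1)!!$ pairings then gives the stated estimate.
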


\label{app:supporting-training}
All estimates below are uniform over the full interval
\(0.4\le\eta_0\le0.9\). Since \(n=\Theta(d)\),
\[
 \eta_0^T\ge0.4^T=e^{-O(T)}=d^{-o(1)}.
\]

\begin{lemma}
\label{lem:finite-teacher-transfer}
Under~\eqref{eq:common-growing-regime}, the following estimates hold
uniformly for \(0.4\le\eta_0\le0.9\). On the support
\([(1-\sqrt{d/n})^2,(1+\sqrt{d/n})^2]\) of the MP law,
\begin{equation*}
 |1-\eta_0\lambda|
 \le\max\!\left\{
 1-0.4(1-\sqrt{d/n})^2,\,
 0.9(1+\sqrt{d/n})^2-1\right\}
 <\frac{17}{20}.
\end{equation*}
Moreover, for all integers $a,b\ge0$ with $a+b\le2T$,
\begin{equation*}
\left|\frac1d\E\tr
 [\Sb^b(\Ib-\eta_0\Sb)^a]
-\E_{\lambda\sim\mu_{d/n}}
 [\lambda^b(1-\eta_0\lambda)^a]\right|
\le \exp\{CT\log(e+T)\}\left(\frac1d+\frac1n\right)
=d^{-1+o(1)}.
\end{equation*}
\end{lemma}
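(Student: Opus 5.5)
The statement has two parts: a pointwise bound on the support and a moment comparison. I would prove them in that order.

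\textbf{Pointwise bound.} On the support, $\lambda\in[(1-\sqrt{d/n})^2,(1+\sqrt{d/n})^2]$. The function $1-\eta_0\lambda$ is affine in both $\eta_0$ and $\lambda$, so $|1-\eta_0\lambda|$ is maximized at a corner of the rectangle $[0.4,0.9]\times\mathrm{supp}$. This gives exactly the two displayed candidates:
\begin{itemize}
\item at $(\eta_0,\lambda)=(0.4,(1-\sqrt{d/n})^2)$, the value $1-0.4(1-\sqrt{d/n})^2$;
\item at $(\eta_0,\lambda)=(0.9,(1+\sqrt{d/n})^2)$, the value $0.9(1+\sqrt{d/n})^2-1$.
\end{itemize}
Both are increasing in $d/n$, so it suffices to evaluate them at $d/n=1/10$, where $\sqrt{0.1}\approx0.316$. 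This gives $1-0.4\cdot0.4675\approx0.813$ and $0.9\cdot1.732-1\approx0.559$, both below $17/20$.

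\textbf{Moment comparison.} Expand the polynomial binomially:
\[
\lambda^b(1-\eta_0\lambda)^a=\sum_{j=0}^a\binom aj(-\eta_0)^j\lambda^{b+j}.
\]
Taking expectations termwise, the difference between $\frac1d\E\tr[\cdot]$ and the MP expectation is a signed sum of moment differences $\frac1d\E\tr(\Sb^k)-\E_{\mu_{d/n}}[\lambda^k]$ with $k=b+j\le a+b\le 2T$. By Lemma~\ref{lem:wishart}, each such difference lies in $[0,(2k-1)!!/n]$, and $(2k-1)!!\le(4T)^{2T}$. The combinatorial weights are harmless, since $\sum_j\binom aj\eta_0^j\le 2^a\le 4^T$. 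Hence the total error is at most
\[
4^T(4T)^{2T}/n=\exp\{CT\log(e+T)\}/n\le\exp\{CT\log(e+T)\}(1/d+1/n).
\]

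\textbf{Size of the error.} Under \eqref{eq:common-growing-regime}, $T=\Theta((\log d)^{1/4})$, so $T\log(e+T)=o(\log d)$ and the prefactor is $d^{o(1)}$. Together with $n=\Theta(d)$, this gives $d^{-1+o(1)}$. Every bound above is uniform in $\eta_0\in[0.4,0.9]$ because only $\eta_0\le1$ was used.

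The only step needing care is checking that the corner analysis really gives the maximum of $|1-\eta_0\lambda|$: one must confirm that the sign of $1-\eta_0\lambda$ may change inside the rectangle without producing a larger interior value. This holds because $|\cdot|$ of an affine function is convex, so its maximum over the rectangle is attained at a vertex. The remaining two corners, $(0.4,(1+\sqrt{d/n})^2)$ and $(0.9,(1-\sqrt{d/n})^2)$, give values of magnitude below $0.31$ and are dominated.
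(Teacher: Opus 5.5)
Your proposal is correct and follows the paper's route. The pointwise bound comes from $d/n\le 1/10$. For the moment comparison, you expand $\lambda^b(1-\eta_0\lambda)^a$ binomially, apply Lemma~\ref{lem:wishart} term by term, and absorb $\sum_j\binom aj\eta_0^j\le 2^a$ into $\exp\{CT\log(e+T)\}$.

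One small correction concerns your claim that the two remaining corners have magnitude below $0.31$, which is not true in general. For example, $1-0.9(1-\sqrt{d/n})^2\approx0.58$ at $d/n=1/10$, and $1-0.4(1+\sqrt{d/n})^2$ approaches $0.6$ for small $d/n$. Both values are nonetheless positive, and $\eta_0\lambda\ge0.4(1-\sqrt{d/n})^2$, so they are still dominated by $1-0.4(1-\sqrt{d/n})^2$. Also, $1-\eta_0\lambda$ is bilinear rather than jointly affine, so the vertex argument should rest on convexity in each variable separately. More simply, note that $\eta_0\lambda$ ranges over $[0.4(1-\sqrt{d/n})^2,\,0.9(1+\sqrt{d/n})^2]$.
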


\begin{proof}
The first estimate follows from $d/n\le1/10$. For the second, expand
\(\Sb^b(\Ib-\eta_0\Sb)^a\) and apply \eqref{eq:wishart-rate}
term by term. Since $0<\eta_0\le1$, the sum of the absolute binomial
coefficients is at most $2^a$, which is absorbed into
\(\exp\{CT\log(e+T)\}\). This proves the displayed transfer estimate.
The projection spaces are independent of the particular allowed step, since
\(\lambda,1-(1-\eta_0\lambda)^2,\ldots,
1-(1-\eta_0\lambda)^t\) span all zero-constant polynomials of degree at most
\(t\). Their triangular change of basis and its inverse have cost at most
\(d^{o(1)}\), because \(\eta_0^T\ge0.4^T=d^{-o(1)}\). Combining
\eqref{eq:finite-norm-equivalence} and \eqref{eq:mp-coefficient-norm}
with this change of basis gives the stated uniform scales.
\end{proof}

\begin{lemma}
\label{lem:finite-wishart-second-moment}
For $1\le d\le n$, the finite-Wishart second moment satisfies
\begin{equation}
 \frac1d\E\tr(\Sb^2)=1+\frac{d+1}{n}.
\label{eq:finite-wishart-second-moment}
\end{equation}
\end{lemma}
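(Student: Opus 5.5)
We compute \(\E\tr(\Sb^2)\) directly from the definition \(\Sb=n^{-1}\sum_{a=1}^n\xb_a\xb_a^{\mathsf T}\), with \(\xb_a\) i.i.d.\ \(N(0,\Ib_d)\). The identity is exact, so no transfer to the MP law or to Lemma~\ref{lem:wishart} is needed; we only need the fourth-moment structure of Gaussian vectors. Expanding the square gives
\[
 \tr(\Sb^2)=\frac1{n^2}\sum_{a,b=1}^n\tr\bigl(\xb_a\xb_a^{\mathsf T}\xb_b\xb_b^{\mathsf T}\bigr)
 =\frac1{n^2}\sum_{a,b=1}^n(\xb_a^{\mathsf T}\xb_b)^2,
\]
and we split the double sum into diagonal terms \(a=b\) and off-diagonal terms \(a\ne b\).

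For \(a\ne b\), independence gives \(\E(\xb_a^{\mathsf T}\xb_b)^2=\E[\xb_b^{\mathsf T}\xb_a\xb_a^{\mathsf T}\xb_b]=\E[\xb_b^{\mathsf T}\Ib_d\xb_b]=d\). There are \(n(n-1)\) such ordered pairs. For \(a=b\), we have \(\E\|\xb_a\|^4=\E(\chi^2_d)^2=\mathrm{Var}(\chi^2_d)+d^2=2d+d^2=d(d+2)\). This can also be checked coordinatewise via \(\E x_i^4=3\) and \(\E x_i^2x_j^2=1\) for \(i\ne j\). There are \(n\) such terms.

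Combining the two contributions,
\[
 \frac1d\E\tr(\Sb^2)=\frac1{dn^2}\bigl[n(n-1)d+n\,d(d+2)\bigr]
 =\frac{(n-1)+(d+2)}{n}=1+\frac{d+1}{n},
\]
which is exactly \eqref{eq:finite-wishart-second-moment}. The condition \(d\le n\) is not actually used; it only matches the regime stated elsewhere. There is no real obstacle here. The one point that needs care is the diagonal Gaussian fourth moment \(d(d+2)\), which produces the extra \(1/n\) beyond the MP value \(1+d/n\).
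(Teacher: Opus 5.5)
Your proof is correct and is essentially the paper's argument: the paper simply states that Wick's formula gives $\frac1d\E\tr(\Sb^2)=1+\frac{d+1}{n}$, and your split into the off-diagonal terms $\E(\xb_a^{\mathsf T}\xb_b)^2=d$ and the diagonal terms $\E\|\xb_a\|^4=d(d+2)$ is exactly that Gaussian fourth-moment computation carried out explicitly. You are also right that the hypothesis $d\le n$ is not needed for this identity.
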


\begin{proof}
Wick's formula gives
\[
 \frac1d\E\tr(\Sb^2)=1+\frac{d+1}{n}.
\]
\end{proof}

\begin{lemma}
\label{lem:finite-wishart-quadratic-gap}
For $1\le d\le n$, the finite-Wishart quadratic gap satisfies
\begin{equation}
 \inf_{b\in\R}\frac1d\E\tr\bigl((\Sb^2-b\Sb)^2\bigr)
 =\frac1d\E\tr(\Sb^4)
 -\frac{(\E\tr(\Sb^3))^2}{d\,\E\tr(\Sb^2)}
 \ge\frac dn.
 \label{eq:finite-wishart-quadratic-gap}
\end{equation}
\end{lemma}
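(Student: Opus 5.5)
The plan is to prove the identity by completing the square, and then to reduce the inequality to an explicit polynomial inequality in the exact real-Wishart moments. Write \(m_k:=\frac1d\E\tr(\Sb^k)\). Then
\[
\frac1d\E\tr\bigl((\Sb^2-b\Sb)^2\bigr)=m_4-2bm_3+b^2m_2 .
\]
Since \(m_2>0\), this quadratic in \(b\) is minimized at \(b=m_3/m_2\), and its minimum value is \(m_4-m_3^2/m_2\). This gives the displayed equality. It then remains to show
\[
m_4m_2-m_3^2\ \ge\ \frac dn\,m_2 .
\]

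\textbf{Marchenko--Pastur heuristic.} First I check the inequality in the MP limit, writing \(a=d/n\). There \(m_2=1+a\), \(m_3=1+3a+a^2\) and \(m_4=1+6a+6a^2+a^3\). A direct expansion gives
\[
m_4m_2-m_3^2=a(1+a+a^2)\ \ge\ a(1+a)=a\,m_2 ,
\]
with slack \(a^3\). So the claim holds at the level of the limiting law, and the finite-\(n\) corrections only need to be controlled.

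\textbf{Exact finite moments.} For \(W=n\Sb\sim W_d(n,\Ib_d)\), I use the exact real-Wishart moments, obtained from Wick/genus expansions (the same source as Lemma~\ref{lem:wishart}, namely \citealp{redelmeier2011genus}):
\[
\begin{aligned}
\E\tr W^2&=nd(n+d+1),\\
\E\tr W^3&=nd(n^2+d^2+3nd+3n+3d+4),\\
\E\tr W^4&=nd\bigl(n^3+d^3+6n^2d+6nd^2+6n^2+6d^2+17nd+21n+21d+20\bigr).
\end{aligned}
\]
The first formula is consistent with Lemma~\ref{lem:finite-wishart-second-moment}. I will double-check these by low-dimensional sanity cases: \(d=1\) gives \(\chi^2_n\) moments, and \(n=1\) gives rank-one moments.

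\textbf{Reduction to a polynomial inequality.} Multiplying through by \(n^6\) (and by \(d^2\) after clearing the normalization), the target becomes
\[
P(n,d):=n^2\bigl[\E\tr W^4\,\E\tr W^2-(\E\tr W^3)^2\bigr]-d\,n^3\,\E\tr W^2\cdot(\text{normalization})\ \ge 0 ,
\]
which is a polynomial inequality in the integers \(1\le d\le n\). I will expand \(P\), group it by total degree, and show the following:
\begin{itemize}
\item the top-degree part equals the MP slack, which is nonnegative (essentially \(d^3\cdot(\cdot)\ge0\));
\item each lower-order part is either manifestly nonnegative or is dominated by a top-order monomial, using \(d\le n\) and \(d\ge1\).
\end{itemize}

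\textbf{Main obstacle.} I expect the bookkeeping of this expansion to be the main difficulty, in particular verifying that the finite-\(n\) corrections to \(m_3^2\) do not overwhelm the small slack \(a^3\). If a direct term-by-term comparison fails, my fallback is a structural argument. One can write \(m_4m_2-m_3^2=m_2^2\operatorname{Var}_{\tilde\nu}(\lambda)\), where \(\tilde\nu\) is the \(\lambda^2\)-tilted spectral law. I would then lower bound this variance by comparing it with its MP value using the one-sided bound \(0\le m_k-\E_{\mu}\lambda^k\) from Lemma~\ref{lem:wishart}, treating the \(O(1/n)\) terms as explicit positive perturbations.
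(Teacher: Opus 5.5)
Your route is the paper's: complete the square in $b$, substitute the exact Wick moments (yours agree with the paper's $m_2,m_3,m_4$), and expand. The bookkeeping obstacle you anticipate does not arise. With $a=d/n$ one gets $m_2m_4-m_3^2-a\,m_2=a^3+\frac{1+4a+5a^2+a^3}{n}+\frac{10+17a+10a^2}{n^2}+\frac{17(1+a)}{n^3}+\frac{4}{n^4}$, in which every term is nonnegative, so no fallback is needed. Also, your cleared form should read $\E\tr W^4\,\E\tr W^2-(\E\tr W^3)^2\ge d^2n^3\,\E\tr W^2$, without the extra factor $n^2$.
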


\begin{proof}
Wick's formula for the second, third, and fourth moments gives
\begin{align*}
 \frac1d\E\tr(\Sb^2)&=1+\frac{d+1}{n},\\
 \frac1d\E\tr(\Sb^3)&=1+\frac{3(d+1)}n+\frac{d^2+3d+4}{n^2},\\
 \frac1d\E\tr(\Sb^4)&=1+\frac{6(d+1)}n
 +\frac{6d^2+17d+21}{n^2}+\frac{d^3+6d^2+21d+20}{n^3}.
\end{align*}
Substitution yields
\begin{align*}
 &\frac{\E\tr(\Sb^2)\E\tr(\Sb^4)-(\E\tr(\Sb^3))^2}{d^2}
 -\frac dn\frac1d\E\tr(\Sb^2)\\
 &\quad=\left(\frac dn\right)^3
 +\frac{1+4(d/n)+5(d/n)^2+(d/n)^3}{n}
 +\frac{10+17(d/n)+10(d/n)^2}{n^2}\\
 &\qquad+\frac{17(1+d/n)}{n^3}+\frac4{n^4}\ge0.
\end{align*}
Completing the square in $b$ proves~\eqref{eq:finite-wishart-quadratic-gap}.
\end{proof}

\begin{lemma}
\label{lem:finite-wishart-moment-lower}
Under~\eqref{eq:common-growing-regime}, for all sufficiently large $d$,
every $1\le t\le T$ and $u_1,\ldots,u_t\in\R$ satisfy
\begin{equation}
\begin{aligned}
 \frac1d\E\tr\!\left[\left(\sum_{k=1}^t u_k\Sb^k\right)^2\right]
 &=\frac1d\sum_{j,k=1}^t u_ju_k\E\tr(\Sb^{j+k})\\
 &\ge\frac{(d/n)^t}{2(t+1)16^t}\sum_{k=1}^t u_k^2.
\end{aligned}
\label{eq:finite-wishart-moment-lower}
\end{equation}
\end{lemma}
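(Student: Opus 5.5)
The bound in Lemma~\ref{lem:finite-wishart-moment-lower} will be obtained by transferring to the Marchenko--Pastur law and then reading coefficients off in the orthogonal basis of Lemma~\ref{lem:mp-family}. Put $f(\lambda)=\sum_{k=1}^t u_k\lambda^k$. The left-hand side of \eqref{eq:finite-wishart-moment-lower} is then exactly $\|f\|_{d,n}^2$.

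\textbf{Step 1: transfer to MP.} Since $\deg f\le t\le T$, the norm equivalence \eqref{eq:finite-norm-equivalence} in Lemma~\ref{lem:finite-transfer} gives
\[
\|f\|_{d,n}^2\ge\tfrac12\|f\|_{\mathrm{MP}}^2 .
\]
That lemma in turn relies on Lemma~\ref{lem:mp-coefficient-norm} and on the Wishart moment bound of Lemma~\ref{lem:wishart}, and neither of these uses the present lemma, so there is no circularity.

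\textbf{Step 2: lower bound under MP.} This is the degree-$t$ refinement of Lemma~\ref{lem:mp-coefficient-norm}. Expand
\[
f=\sum_{j=0}^t a_j\pi_j,\qquad a_j=\langle f,\pi_j\rangle_{\mathrm{MP}}\,(d/n)^{-j}.
\]
By \eqref{eq:mp-identities},
\[
\|f\|_{\mathrm{MP}}^2=\sum_{j=0}^t a_j^2 (d/n)^j\ \ge\ (d/n)^t\sum_{j=0}^t a_j^2,
\]
using $d/n<1$. The induction in the proof of Lemma~\ref{lem:mp-coefficient-norm} shows that the absolute coefficients of $\pi_j$ sum to at most $4^j$. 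Hence
\[
|u_k|\le\sum_j|a_j|\,4^j\le 4^t\sum_j|a_j|,
\]
and Cauchy--Schwarz gives
\[
\Bigl(\sum_{k=1}^t u_k^2\Bigr)^{1/2}\le\|f\|_{\mathrm{coeff}}\le\sum_{j=0}^t 4^j|a_j|\le 4^t\sqrt{t+1}\,\Bigl(\sum_j a_j^2\Bigr)^{1/2}.
\]
Squaring yields
\[
\sum_k u_k^2\le 16^t(t+1)\sum_j a_j^2\le 16^t(t+1)(d/n)^{-t}\|f\|_{\mathrm{MP}}^2 .
\]

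\textbf{Step 3: combine.} Putting Steps 1 and 2 together,
\[
\|f\|_{d,n}^2\ge \frac{(d/n)^t}{2(t+1)16^t}\sum_{k=1}^t u_k^2,
\]
which is \eqref{eq:finite-wishart-moment-lower}. The first equality in the statement is just the expansion of the trace.

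The only delicate point is Step 1. We must check that Lemma~\ref{lem:finite-transfer} applies uniformly for all $t\le T$ once $d$ is large. This holds because its error factor $\exp\{CT\log(e+T)\}(d/n)^{-T}(1/d+1/n)=d^{-1+o(1)}$ under \eqref{eq:common-growing-regime}. If one prefers to avoid depending on that lemma, the alternative is direct: use Lemma~\ref{lem:wishart}, whose nonnegativity gives an entrywise comparison of moments, so
\[
\|f\|_{d,n}^2-\|f\|_{\mathrm{MP}}^2\ge -\frac{\exp\{CT\log(e+T)\}}{n}\,\|f\|_{\mathrm{coeff}}^2(t+1),
\]
and this error can then be absorbed into half of the Step 2 bound for large $d$.
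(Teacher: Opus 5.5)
Your proposal is correct and follows the paper's proof: pass to the Marchenko--Pastur norm with \eqref{eq:finite-norm-equivalence}, then apply the orthogonal-basis coefficient bound \eqref{eq:mp-coefficient-norm} at degree $t$, which your Step~2 re-derives explicitly. Your fallback via Lemma~\ref{lem:wishart} also works, though it relies on the size of the moment error rather than its sign.
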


\begin{proof}
For the higher moments, apply~\eqref{eq:finite-norm-equivalence} and
\eqref{eq:mp-coefficient-norm} to $\sum_{k=1}^t u_k\lambda^k$:
\[
 \frac1d\E\tr\!\left[\left(\sum_{k=1}^t u_k\Sb^k\right)^2\right]
 \ge\frac12\left\|\sum_{k=1}^t u_k\lambda^k\right\|_{\mathrm{MP}}^2
 \ge\frac{(d/n)^t}{2(t+1)16^t}\sum_{k=1}^t u_k^2.
\]
Expanding the square gives the finite-moment identity in
\eqref{eq:finite-wishart-moment-lower}.
\end{proof}

\begin{lemma}
\label{lem:finite-teacher-trace-estimates}
Under~\eqref{eq:common-growing-regime}, for all sufficiently large $d$,
every $1\le t<T$ satisfies
\begin{equation}
 \frac1d\E\tr\bigl(\Sb^2p_t^{\rm GD}(\Sb)^2\bigr)\le C.
\end{equation}
\end{lemma}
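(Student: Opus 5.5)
The statement is Lemma~\ref{lem:finite-teacher-trace-estimates}: for every $1\le t<T$, $\frac1d\E\tr\bigl(\Sb^2p_t^{\rm GD}(\Sb)^2\bigr)\le C$. The plan is to reduce it to the MP law via Lemma~\ref{lem:finite-teacher-transfer} and then use the uniform contraction $|1-\eta_0\lambda|<17/20$ on the MP support.

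First, write $p_t^{\rm GD}(\lambda)^2=\bigl(1-(1-\eta_0\lambda)^t\bigr)^2=1-2(1-\eta_0\lambda)^t+(1-\eta_0\lambda)^{2t}$. Multiplying by $\lambda^2$ splits the trace into three terms of the form $\frac1d\E\tr[\Sb^2(\Ib-\eta_0\Sb)^a]$ with $a\in\{0,t,2t\}$. Since $t<T$, we have $a+2\le 2T$ for all three, so Lemma~\ref{lem:finite-teacher-transfer} applies to each with $b=2$. Each finite-Wishart term then equals its MP counterpart up to an error $d^{-1+o(1)}$, uniformly in $\eta_0\in[0.4,0.9]$ and $t$.

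Second, bound the MP quantity $\E_{\lambda\sim\mu_{d/n}}[\lambda^2 p_t^{\rm GD}(\lambda)^2]$ directly. On the MP support $\lambda\le(1+\sqrt{d/n})^2<3$. By the first part of Lemma~\ref{lem:finite-teacher-transfer}, $|1-\eta_0\lambda|<17/20$ there, so $|p_t^{\rm GD}(\lambda)|\le 1+(17/20)^t\le 2$. Hence the integrand is at most $9\cdot 4=36$, pointwise on the support.

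Adding the three transfer errors, at most $3d^{-1+o(1)}\le 1$ for large $d$, gives the claimed bound with $C=37$. The only delicate point is that the transfer must be uniform in $t$ up to $T-1$; this holds because the constant $\exp\{CT\log(e+T)\}$ in Lemma~\ref{lem:finite-teacher-transfer} already depends only on $T$. A direct alternative would bound the trace by operator norms of $\Sb$, but that needs tail control of $\|\Sb\|$, so the moment-transfer route is preferable.
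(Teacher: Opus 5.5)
Your proof is correct and is essentially the paper's argument: transfer to the MP law via Lemma~\ref{lem:finite-teacher-transfer} and use the contraction $|1-\eta_0\lambda|<17/20$ on the support. The only difference is that the paper first splits with $(x-y)^2\le 2x^2+2y^2$ and uses the exact second moment $1+(d+1)/n$, where you expand into three terms. One trivial slip: the middle term carries coefficient $-2$, so the accumulated transfer error is at most $4d^{-1+o(1)}$ rather than $3d^{-1+o(1)}$, which is still below $1$ for large $d$.
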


\begin{proof}
Since $p_t^{\rm GD}(\Sb)=\Ib_d-(\Ib_d-\eta_0\Sb)^t$,
\begin{align*}
 \frac1d\E\tr\bigl(\Sb^2p_t^{\rm GD}(\Sb)^2\bigr)
 &\le\frac2d\E\tr(\Sb^2)
 +\frac2d\E\tr\bigl(\Sb^2(\Ib-\eta_0\Sb)^{2t}\bigr)\\
 &\le2\left(1+\frac{d+1}{n}\right)+C(17/20)^{2t}
 +\exp\{CT\log(e+T)\}\left(\frac1d+\frac1n\right)
 \le C,
\end{align*}
where the second line follows from Lemma~\ref{lem:finite-teacher-transfer}.
This proves the lemma.
\end{proof}

\begin{lemma}
\label{lem:finite-teacher-gram-lower}
Under~\eqref{eq:common-growing-regime}, for every $1\le t<T$ and every
$u_1,\ldots,u_t\in\R$,
\begin{equation*}
 \begin{aligned}
 &\sum_{\ell=1}^{t}\left|
 \left\langle u_1\lambda+\sum_{j=2}^{t}u_jp_j^{\rm GD},p_\ell^{\rm GD}\right\rangle_{d,n}\right|^2\\
 &\qquad\ge \exp\{-CT\log(e+T)\}(d/n)^{CT}
 \left\|u_1\lambda+\sum_{j=2}^{t}u_jp_j^{\rm GD}\right\|_{d,n}^2,
 \qquad p_\ell^{\rm GD}(\lambda)=1-(1-\eta_0\lambda)^\ell.
 \end{aligned}
\end{equation*}
\end{lemma}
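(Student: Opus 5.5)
The statement to prove is Lemma~\ref{lem:finite-teacher-gram-lower}. It is a lower bound on the sum of squared inner products of $f:=u_1\lambda+\sum_{j=2}^t u_jp_j^{\rm GD}$ against the basis $\{p_\ell^{\rm GD}\}_{\ell=1}^t$ (with $p_1^{\rm GD}=\eta_0\lambda$). The plan is to write this sum as $\|G\,\ub\|^2$, where $G$ is the Gram matrix $G_{\ell j}=\langle q_j,p_\ell^{\rm GD}\rangle_{d,n}$, with basis $q_1=\lambda$ and $q_j=p_j^{\rm GD}$ for $j\ge2$. Since $\|f\|_{d,n}^2=\ub^{\mathsf T}\tilde G\ub$ for the symmetric Gram matrix $\tilde G_{ij}=\langle q_i,q_j\rangle_{d,n}$, and $G=D\tilde G$ with $D=\diag(\eta_0,1,\ldots,1)$, it suffices to show
\[
\|D\tilde G\ub\|^2\ \ge\ \eta_0^2\,\lambda_{\min}(\tilde G)\,\ub^{\mathsf T}\tilde G\ub .
\]
This holds because $\|\tilde G\ub\|^2=(\tilde G^{1/2}\ub)^{\mathsf T}\tilde G(\tilde G^{1/2}\ub)\ge\lambda_{\min}(\tilde G)\,\ub^{\mathsf T}\tilde G\ub$. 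So the whole lemma reduces to a lower bound on $\lambda_{\min}(\tilde G)$ of the form $\exp\{-CT\log(e+T)\}(d/n)^{CT}$; the factor $\eta_0^2\ge0.16$ is absorbed into the constant.

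To bound $\lambda_{\min}(\tilde G)$, I change to the monomial basis. Write $f=\sum_{k=1}^t c_k\lambda^k$. Lemma~\ref{lem:finite-wishart-moment-lower} gives
\[
\|f\|_{d,n}^2\ \ge\ \frac{(d/n)^t}{2(t+1)16^t}\,\|c\|^2 .
\]
The coefficient inversion in the proof of Lemma~\ref{lem:guide-auxiliary-nondegenerate} (with $z=1-\eta_0\lambda$) gives $\sum_\ell u_\ell^2\le t^2 25^t\sum_k c_k^2$, i.e. $\|c\|^2\ge t^{-2}25^{-t}\|\ub\|^2$. Combining the two,
\[
\ub^{\mathsf T}\tilde G\ub\ \ge\ \frac{(d/n)^t}{2t^2(t+1)400^t}\,\|\ub\|^2,
\]
which is $\exp\{-CT\log(e+T)\}(d/n)^{CT}\|\ub\|^2$ since $t<T$. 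Plugging this into the reduction yields the claim, with the $(d/n)^{CT}$ factor coming from $(d/n)^t\le$ that bound after enlarging $C$.

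The only delicate point is the first step: the bound must be stated against $\|f\|^2=\ub^{\mathsf T}\tilde G\ub$ rather than against $\|\ub\|^2$. Using $\|\tilde G\ub\|^2\ge\lambda_{\min}(\tilde G)\,\ub^{\mathsf T}\tilde G\ub$, rather than $\lambda_{\min}^2\|\ub\|^2$, avoids needing an upper bound on $\tilde G$ and loses only one factor of $\lambda_{\min}$. Everything else is direct citation of Lemma~\ref{lem:finite-wishart-moment-lower} and the earlier coefficient inversion. Uniformity in $\eta_0\in[0.4,0.9]$ holds because all constants used ($\eta_0^{-1}\le2.5$, $\eta_0\ge0.4$) are uniform.
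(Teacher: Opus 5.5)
Your proposal is correct and takes the same route as the paper: use the triangular change of basis $(\lambda,p_2^{\rm GD},\ldots,p_t^{\rm GD})\leftrightarrow(\lambda,\ldots,\lambda^t)$ together with Lemma~\ref{lem:finite-wishart-moment-lower} to lower-bound the smallest eigenvalue of the Gram matrix, then convert that bound into the stated inequality. At that final step you are more explicit than the paper's sketch: writing the left side as $\|D\tilde G u\|^2$ and using $\|\tilde G u\|^2\ge\lambda_{\min}(\tilde G)\,u^{\mathsf T}\tilde G u$ means you never need an upper bound on the Gram matrix, whereas the paper cites bounds on both coefficient maps. One small slip: the final comparison should read $(d/n)^t\ge(d/n)^{CT}$, which holds because $d/n<1$.
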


\begin{proof}
For \(1\le j\le t\),
\begin{equation*}
 p_j^{\rm GD}(\lambda)=1-(1-\eta_0\lambda)^j
 =\sum_{m=1}^{j}(-1)^{m+1}\binom jm\eta_0^m\lambda^m.
\end{equation*}
Thus \((\lambda,p_2^{\rm GD},\ldots,p_t^{\rm GD})\) and
\((\lambda,\ldots,\lambda^t)\) are triangular bases, and both coefficient
maps have norm at most \(\exp\{CT\log(e+T)\}\), since
\(0.4\le\eta_0\le0.9\).  Lemma~\ref{lem:finite-wishart-moment-lower}
gives, for every coefficient vector in the monomial basis,
\begin{equation*}
 \frac1d\E\tr\left[\left(\sum_{m=1}^t v_m\Sb^m\right)^2\right]
 \ge\frac{(d/n)^t}{2(t+1)16^t}\sum_{m=1}^t v_m^2.
\end{equation*}
Applying the two triangular coefficient bounds to the associated Gram
matrix gives the displayed inequality.
\end{proof}

\begin{lemma}
\label{lem:finite-teacher-gram-coefficients}
Under~\eqref{eq:common-growing-regime}, for every $1\le t<T$ and every
$u_1,\ldots,u_t\in\R$,
\begin{equation*}
 \begin{aligned}
 \sum_{\ell=1}^{t}\left|
 \left\langle u_1\lambda+\sum_{j=2}^{t}u_jp_j^{\rm GD},p_\ell^{\rm GD}\right\rangle_{d,n}\right|^2
 &\ge \exp\{-CT\log(e+T)\}(d/n)^{CT}
 \sum_{j=1}^{t}u_j^2,\\[-1pt]
 &\hspace{35mm}p_\ell^{\rm GD}(\lambda)=1-(1-\eta_0\lambda)^\ell.
 \end{aligned}
\end{equation*}
\end{lemma}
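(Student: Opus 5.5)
This is a Gram-matrix conditioning estimate in coefficient space. Write $g:=u_1\lambda+\sum_{j=2}^t u_jp_j^{\rm GD}$ and $\mathbf G:=[\langle p_j^{\rm GD},p_\ell^{\rm GD}\rangle_{d,n}]_{j,\ell=1}^t$, where the first basis element is $\lambda$ in place of $p_1^{\rm GD}=\eta_0\lambda$. This rescaling of the first row and column changes things only by the factor $\eta_0\in[0.4,0.9]$. The left-hand side is then, up to that factor, $\|\mathbf G\mathbf u\|_2^2\ge\lambda_{\min}(\mathbf G)^2\|\mathbf u\|_2^2$. So it suffices to show $\lambda_{\min}(\mathbf G)\ge\exp\{-CT\log(e+T)\}(d/n)^{CT}$.

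To bound $\lambda_{\min}(\mathbf G)$ from below, write $\mathbf u^{\mathsf T}\mathbf G\mathbf u=\|g\|_{d,n}^2$. Expand $g$ in the monomial basis, $g=\sum_{m=1}^t v_m\lambda^m$ with $\mathbf v=\mathbf P\mathbf u$. Here $\mathbf P$ is the triangular change-of-basis matrix with entries $(-1)^{m+1}\binom jm\eta_0^m$ for $j\ge2$, and first column $\mathbf e_1$. Lemma~\ref{lem:finite-wishart-moment-lower} gives $\|g\|_{d,n}^2\ge\frac{(d/n)^t}{2(t+1)16^t}\|\mathbf P\mathbf u\|_2^2$. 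It then remains to bound $\sigma_{\min}(\mathbf P)=\|\mathbf P^{-1}\|_2^{-1}$.

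The only real step is this bound on $\mathbf P^{-1}$. It was already carried out explicitly in the proof of Lemma~\ref{lem:guide-auxiliary-nondegenerate} via the substitution $z=1-\eta_0\lambda$. That computation gave $u_1=\sum_k k\eta_0^{1-k}c_k$ and $u_\ell=(-1)^{\ell+1}\sum_{k\ge\ell}\binom k\ell\eta_0^{-k}c_k$, hence $\|\mathbf u\|_2^2\le t^2 25^t\|\mathbf v\|_2^2$. I will reuse this inversion formula, so that $\|\mathbf P\mathbf u\|_2^2\ge t^{-2}25^{-t}\|\mathbf u\|_2^2$. Combining the two bounds gives
\[
\lambda_{\min}(\mathbf G)\ge\frac{(d/n)^t}{2(t+1)t^2400^t}\ge\exp\{-CT\log(e+T)\}(d/n)^{T},
\]
using $t<T$ and $d/n\le1$ to replace $t$ by $T$ in the exponents.

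Squaring this bound produces the factor $(d/n)^{2T}$, which is absorbed into $(d/n)^{CT}$ once $C\ge2$. The polynomial and exponential factors in $T$, together with the constants from the $\eta_0$ rescaling, are absorbed into $\exp\{-CT\log(e+T)\}$. This yields the claimed inequality. I expect no real obstacle: the only care needed is to track that the symmetric positive-definite $\mathbf G$ satisfies $\|\mathbf G\mathbf u\|\ge\lambda_{\min}\|\mathbf u\|$, and that the $\eta_0$ normalization of the first coordinate is harmless.
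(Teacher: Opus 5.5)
Your proposal is correct and follows essentially the paper's route. The paper also bounds the smallest eigenvalue of the Gram matrix in the basis $(\lambda,p_2^{\rm GD},\ldots,p_t^{\rm GD})$ via the triangular change to monomials and Lemma~\ref{lem:finite-wishart-moment-lower}, then combines $\sum_\ell|\langle g,p_\ell^{\rm GD}\rangle_{d,n}|^2\gtrsim\|g\|_{d,n}^2$ (Lemma~\ref{lem:finite-teacher-gram-lower}) with $\|g\|_{d,n}^2\gtrsim\|\mathbf u\|_2^2$; this is exactly your inequality $\|\mathbf G\mathbf u\|_2\ge\lambda_{\min}(\mathbf G)\|\mathbf u\|_2$, split into two factors of $\lambda_{\min}$, and your explicit reuse of the $z=1-\eta_0\lambda$ inversion bound and of the $\eta_0$ rescaling only makes the constants more transparent.
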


\begin{proof}
With the same triangular coefficient formula,
\begin{equation*}
 \left\|u_1\lambda+\sum_{j=2}^{t}u_jp_j^{\rm GD}\right\|_{d,n}^2
 \ge \exp\{-CT\log(e+T)\}(d/n)^{CT}
       \sum_{j=1}^{t}u_j^2.
\end{equation*}
Combining this inequality with Lemma~\ref{lem:finite-teacher-gram-lower}
yields the displayed bound after enlarging \(C\).
\end{proof}

\begin{lemma}
\label{lem:finite-teacher-projection-upper}
Under~\eqref{eq:common-growing-regime}, for all sufficiently large $d$,
\begin{equation}
 \sum_{t=1}^{T-1}
 \bigl\|\lambda p_t^{\rm GD}-P_t[\lambda p_t^{\rm GD}]\bigr\|_{d,n}^2
 \le 2\frac dn .
\end{equation}
\end{lemma}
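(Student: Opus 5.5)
The plan is to work with the orthogonal decomposition $\lambda p_t^{\rm GD}=P_t[\lambda p_t^{\rm GD}]+(\lambda p_t^{\rm GD}-P_t[\lambda p_t^{\rm GD}])$ and to bound the residual norm by the distance from $\lambda p_t^{\rm GD}$ to the projection space. The span of $\{\lambda,p_2^{\rm GD},\ldots,p_t^{\rm GD}\}$ equals the span of all zero-constant polynomials of degree at most $t$ (the change of basis is triangular). Hence
\[
\bigl\|\lambda p_t^{\rm GD}-P_t[\lambda p_t^{\rm GD}]\bigr\|_{d,n}^2
=\min_{\deg q\le t,\ q(0)=0}\|\lambda p_t^{\rm GD}-q\|_{d,n}^2 .
\]
Since $\lambda p_t^{\rm GD}$ has zero constant term, the only part that cannot be matched is its top coefficient. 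The minimum therefore equals $\ell_t^2\min\{\|\lambda^{t+1}-q\|_{d,n}^2:\deg q\le t,\ q(0)=0\}$, where $\ell_t=[\lambda^{t+1}](\lambda p_t^{\rm GD})=-(-\eta_0)^t$.

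Next we bound the remaining distance from above by a competitor. The MP monic orthogonal family $\pi_{t+1}$ of Lemma~\ref{lem:mp-family} has zero-constant version $\pi_{t+1}-\pi_{t+1}(0)$. Using it, or more simply bounding by the unconstrained quantity plus a constant correction, gives
\[
\min\|\lambda^{t+1}-q\|_{\mathrm{MP}}^2\le\|\pi_{t+1}\|_{\mathrm{MP}}^2+1\cdot(\text{constant adjustment}).
\]
A cleaner route is to write the minimizer through Lemma~\ref{lem:projection}: with $R=-\ell^{-1}(\ldots)$ normalized, the constrained minimum of $\|R\|^2$ with $R(0)=-1$ and leading coefficient $\ell$ gives exactly
\[
\min_{q(0)=0,\deg q\le t}\|\ell\lambda^{t+1}-q\|_{\mathrm{MP}}^2=\ell^2\frac{(d/n)^{t+1}\,\cdot\,(\text{weight})}{\ldots}.
\]
Concretely, I would apply Lemma~\ref{lem:projection} to the polynomial $\ell\lambda^{t+1}-q$ with zero constant term; this amounts to the same projection with constant $0$ instead of $-1$, which gives the minimum $\ell^2(d/n)^{t+1}$. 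The reason is that a zero-constant degree-$(t+1)$ polynomial is $\ell\pi_{t+1}+r$ with $\deg r\le t$ and $r(0)=-\ell\pi_{t+1}(0)$, and by Lemma~\ref{lem:mp-family} $\|r\|^2\ge\ell^2(d/n)^t/S_t$. The MP value is then $\ell^2\bigl[(d/n)^{t+1}+(d/n)^t/S_t\bigr]\le 2\eta_0^{2t}(d/n)^t$.

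The last step transfers to the finite Wishart norm using Lemma~\ref{lem:finite-transfer}, which costs a factor $1+d^{-1+o(1)}$ (or directly the $\frac32$ norm equivalence applied to the MP minimizer as a competitor). Summing over $t$ then gives
\[
\sum_{t\ge1}\eta_0^{2t}\bigl[(d/n)^{t+1}+(d/n)^t\bigr](1+o(1))\le(1+o(1))\,\frac{\eta_0^2(d/n)(1+d/n)}{1-\eta_0^2 d/n}.
\]
With $\eta_0\le0.9$ and $d/n\le1/10$, this is at most $0.81\cdot1.1/0.919\cdot(d/n)(1+o(1))<2\,d/n$. The main obstacle is getting the exact constrained minimum right: it has a zero constant term rather than $-1$, so Lemma~\ref{lem:projection} must be adapted by scaling, and the finite-Wishart transfer must be done with a competitor polynomial whose coefficients are $d^{o(1)}$ so that the error stays negligible against $d/n$.
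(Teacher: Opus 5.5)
Your proposal is correct and follows essentially the paper's route. You identify the projection space with all zero-constant polynomials of degree at most $t$, compute the exact MP residual $\eta_0^{2t}\bigl[(d/n)^{t+1}+(d/n)^t/S_t\bigr]=\eta_0^{2t}(d/n)^tS_{t+1}/S_t$ via $\pi_{t+1}$ and $R_t^*$, transfer to $\nu_{d,n}$, and sum a geometric series. Your transfer step uses the MP minimizer as a competitor for the $\langle\cdot,\cdot\rangle_{d,n}$-projection, together with the relative bound of Lemma~\ref{lem:finite-transfer} (even the crude factor $3/2$ suffices). Since only an upper bound is needed, this is a bit cleaner than the paper's two-sided ``inverse-Gram transfer''.

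Do delete the stray claim that the zero-constant minimum is $\ell^2(d/n)^{t+1}$, and the vague ``constant adjustment'' line. Both contradict the correct value you derive immediately afterwards and actually use.
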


\begin{proof}
Put $r=d/n$ and $S_t=\sum_{j=0}^t r^j$.  Since
$p_1^{\rm GD},\ldots,p_t^{\rm GD}$ span the zero-constant polynomials of degree at most $t$,
the finite projection calculation based on
\eqref{eq:finite-function-mp-coefficients} gives
\begin{equation}
 \min_{\substack{q(0)=0\\ \deg q\le t}}
 \bigl\|\lambda p_t^{\rm GD}-q\bigr\|_{\mathrm{MP}}^2
 =\eta_0^{2t}r^t\frac{S_{t+1}}{S_t}.
 \label{eq:finite-teacher-mp-projection}
\end{equation}
Lemma~\ref{lem:finite-teacher-transfer} and the inverse-Gram transfer
therefore give, uniformly over $1\le t<T$,
\begin{align*}
 \sum_{t=1}^{T-1}
 \bigl\|\lambda p_t^{\rm GD}-P_t[\lambda p_t^{\rm GD}]\bigr\|_{d,n}^2
 &\le \sum_{t=1}^{T-1}
       \eta_0^{2t}r^t\frac{S_{t+1}}{S_t}+d^{-1+o(1)}\\
 &\le \frac1{(1-r)^2}\sum_{t=1}^{\infty}r^t+d^{-1+o(1)}\\
 &=\frac{r}{(1-r)^2}+d^{-1+o(1)}\\
 &\le 2r
 =2\frac dn,
\end{align*}
where $r\le1/10$ and $d^{-1+o(1)}=o(r)$ under
\eqref{eq:common-growing-regime}.  This proves the lemma.
\end{proof}

\begin{lemma}
\label{lem:stage1-global-flow}
Under Assumption~\ref{ass:init}, the raw gradient flows in both stages
exist uniquely for all training times \(\tau\ge0\).
During Stage 1, for \(0\le s\le\tau\),
\begin{align*}
 &\|\Vb(\tau)-\Vb(s)\|_F^2+\|\Wb(\tau)-\Wb(s)\|_F^2\\
 &\quad\le(\tau-s)
       \bigl(\mathcal L_{\rm GD}(s)-\mathcal L_{\rm GD}(\tau)\bigr)
 \le(\tau-s)\mathcal L_{\rm GD}(s).
\end{align*}
The same bound holds during Stage 2 with
\(\mathcal L_{\rm GD}\) replaced by \(\mathcal L\).
\end{lemma}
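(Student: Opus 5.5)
The statement is Lemma~\ref{lem:stage1-global-flow}: global existence and uniqueness of the raw flows, plus the displacement bound. I would obtain global existence from local existence together with a priori boundedness, and derive the displacement bound from the energy identity combined with Cauchy--Schwarz.

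\emph{Local existence and uniqueness.} The population losses $\mathcal L_{\rm GD}$ and $\mathcal L$ are polynomials in the entries of $(\Vb,\Wb)$. In Stage 1 this is immediate from \eqref{eq:full-open-update}, since the inputs $\Zb_t^{\rm GD}$ do not depend on the parameters. In Stage 2 it follows from \eqref{eq:full-open-retained}: each $\wb_t$ is obtained from the previous iterates by finitely many multiplications by parameter blocks, so it is a polynomial in the parameters with coefficients that are polynomial in $(\Xb,\yb,\wstar)$. Gaussian data have all moments finite, so the expectations are finite polynomials in the parameters. The gradients are therefore locally Lipschitz, and Picard--Lindel\"of gives a unique maximal solution on some interval $[0,\tau^\ast)$.

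\emph{Displacement bound.} Along the flow we have $\frac{\mathrm d}{\mathrm d\tau}\mathcal L_{\rm GD}=-\|\dot\Vb\|_F^2-\|\dot\Wb\|_F^2$. Integrating from $s$ to $\tau$ gives
\[
\int_s^\tau\bigl(\|\dot\Vb\|_F^2+\|\dot\Wb\|_F^2\bigr)\,\mathrm dr=\mathcal L_{\rm GD}(s)-\mathcal L_{\rm GD}(\tau).
\]
By Cauchy--Schwarz,
\[
\|\Vb(\tau)-\Vb(s)\|_F^2\le(\tau-s)\int_s^\tau\|\dot\Vb\|_F^2\,\mathrm dr ,
\]
and the same holds for $\Wb$. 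Adding the two inequalities and substituting the energy identity yields the first inequality of the lemma. The second inequality follows because $\mathcal L_{\rm GD}(\tau)\ge0$. The Stage-2 argument is identical with $\mathcal L$ in place of $\mathcal L_{\rm GD}$, since \eqref{eq:outcome-gradient-flow} is also an exact gradient flow of a nonnegative loss.

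\emph{Globality.} I expect this to be the only real obstacle, because a polynomial vector field could blow up in finite time. The displacement bound excludes this. If $\tau^\ast<\infty$, then for every $\tau<\tau^\ast$,
\[
\|\Vb(\tau)\|_F+\|\Wb(\tau)\|_F\le\|\Vb(0)\|_F+\|\Wb(0)\|_F+\sqrt{2\tau^\ast\mathcal L(0)}<\infty .
\]
So the solution stays in a compact set, and a locally Lipschitz ODE cannot leave every compact set at a finite maximal time, which is a contradiction. The same argument applies in Stage 2 starting from the Stage-1 endpoint, whose loss is finite.

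One detail needs care: the frozen block $\Wb_{33}=\zero$. The flow runs on the remaining coordinates, which simply removes the $\Wb_{33}$ components of the gradient. The energy identity still holds with $\|\dot\Wb\|_F$ taken over the trainable entries, and $\Wb_{33}$ contributes no displacement, so the stated bound is unaffected.
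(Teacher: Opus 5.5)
Your proposal is correct and follows essentially the same route as the paper. The losses are polynomial, so local well-posedness holds; the energy identity plus Cauchy--Schwarz gives the displacement bound; and that bound excludes finite-time blow-up. The paper phrases the last step as the trajectory being Cauchy near a finite endpoint, while you phrase it as confinement to a compact set, which is an equivalent continuation argument; your explicit remark on the frozen block $\Wb_{33}$ is a welcome detail the paper leaves implicit.
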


\begin{proof}
Both losses are nonnegative polynomials in the trainable entries
(the rollout length is finite and all Gaussian task moments are finite).
Their gradients are locally Lipschitz, so both flows have unique local
solutions. For Stage 1, Cauchy--Schwarz gives
\begin{align*}
 &\|\Vb(\tau)-\Vb(s)\|_F^2+\|\Wb(\tau)-\Wb(s)\|_F^2\\
 &\quad\le(\tau-s)\int_s^\tau
       \bigl(\|\dot\Vb(r)\|_F^2+\|\dot\Wb(r)\|_F^2\bigr)\,\mathrm dr\\
 &\quad=(\tau-s)
       \bigl(\mathcal L_{\rm GD}(s)-\mathcal L_{\rm GD}(\tau)\bigr)
 \le(\tau-s)\mathcal L_{\rm GD}(s).
\end{align*}
The calculation is identical for Stage 2. At any finite endpoint of a
maximal existence interval, this bound makes the raw trajectory Cauchy
with a finite limit. Local existence at that limit extends the solution.
Thus both maximal intervals are infinite.
\end{proof}

\begin{lemma}
\label{lem:auxiliary-surjective}
For every \(\widetilde\theta_1>0>\widetilde\theta_2\) and finite active array
\(\widetilde{\bTheta}_4\), the recurrence \eqref{eq:global-recurrence}
has a raw representative satisfying \(\Wb_{33}=\zero_{d\times d}\) and
\eqref{eq:spatial-invariant}--\eqref{eq:positional-kernel-invariant}.
In particular, the auxiliary minimum in
Lemma~\ref{lem:stage2-loss-decomposition} is attainable.
\end{lemma}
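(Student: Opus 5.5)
The plan is to build the raw representative explicitly inside the invariant manifold of Lemma~\ref{lem:invariant-parameters}. We keep every block of $(\Vb,\Wb)$ that Lemma~\ref{lem:invariant-parameters} forces to vanish equal to zero, including $\Wb_{33}=\zero_{d\times d}$. We then choose only the active scalars $v_{31},w_{13},v_{33},w_{24},\wb_{25},w_{44},\wb_{45},\wb_{54},\Wb_{55}$. By \eqref{eq:stage2-raw-polynomial-recurrence} and the definitions \eqref{eq:global-auxiliary}--\eqref{eq:global-theta4}, the recurrence \eqref{eq:global-recurrence} depends on the raw parameters only through $(\widetilde\theta_1,\widetilde\theta_2,\widetilde{\bTheta}_4)$. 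It therefore suffices to solve three sets of equations simultaneously:
\[
v_{31}(w_{24}+\wb_{25}\eb_0)=\widetilde\theta_1,\qquad v_{31}w_{13}=\widetilde\theta_2,
\]
the prescribed entries of $\widetilde{\bTheta}_4$, and the three conserved quantities \eqref{eq:spatial-invariant}--\eqref{eq:positional-kernel-invariant}.

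The spatial block goes first. Take $w_{44}=0$, $\wb_{45}=\zero$, $\wb_{54}=\zero$, and let $\Wb_{55}$ carry all of $\widetilde{\bTheta}_4$. We will put $w_{24}=0$, so that the positional invariant \eqref{eq:positional-kernel-invariant} becomes $\sum_t(\wb_{25})_t=-\sigma>0$. Set $(\wb_{25})_0=a$, and take $(\wb_{25})_t=b_t$ for $t\ge1$, with the $b_t$ still to be chosen. Then $\widetilde\theta_1=v_{31}a$ and $\widetilde\theta_2=v_{31}w_{13}$. Picking $v_{31}<0$ gives $a=\widetilde\theta_1/v_{31}<0$ and $w_{13}=\widetilde\theta_2/v_{31}>0$.

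The $b_t$ enter $(\widetilde{\bTheta}_4)_{1t}$ only through $v_{31}b_t/\widetilde\theta_1$. Any choice of the $b_t$ can be compensated by $(\Wb_{55})_{1t}$, because $v_{33}\ne 0$. So we first fix the $b_t$ to meet the two spatial constraints, and afterwards define $(\Wb_{55})_{\ell t}$ from the target $\widetilde{\bTheta}_4$ via \eqref{eq:global-theta4}. The spatial invariant \eqref{eq:spatial-invariant} reads
\[
v_{31}^2-\frac{\widetilde\theta_2^2}{v_{31}^2}-\frac{a^2+\sum_{t\ge1} b_t^2}{d}=-\frac{\sigma^2}{d}.
\]
The positional invariant requires $\sum_{t\ge1} b_t=-\sigma-a$. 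To satisfy both, take the $b_t$ equal to a common value with the prescribed sum, plus a component orthogonal to $\one$ that is free to increase $\sum b_t^2$. Then choose $|v_{31}|$ by continuity: as $|v_{31}|\to0$ the left side tends to $-\infty$, and as $|v_{31}|\to\infty$ it tends to $+\infty$. Since $a$ depends on $v_{31}$, we will handle this by an intermediate value argument in $v_{31}$, using the orthogonal component to absorb any slack. This is the step I expect to be the main obstacle: the two spatial constraints must hold at once while $a=\widetilde\theta_1/v_{31}$ moves, and $T\ge2$ is needed for the orthogonal freedom.

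Finally we handle the history invariant \eqref{eq:history-invariant}. Given $\Wb_{55}$, which scales like $1/v_{33}$, the equation
\[
d v_{33}^2-\|\Wb_{55}\|_F^2=d(v_{33}^{\rm init})^2
\]
becomes $d v_{33}^2-K/v_{33}^2=d(v_{33}^{\rm init})^2$ for a fixed $K\ge0$. The left side is strictly increasing in $v_{33}>0$ and ranges over $\R$, so it has a unique positive solution with $v_{33}\ge v_{33}^{\rm init}$. With these choices the raw point reproduces $(\widetilde\theta_1,\widetilde\theta_2,\widetilde{\bTheta}_4)$ and lies on the invariant manifold.

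Attainability then follows. The minimizer in Lemma~\ref{lem:stage2-loss-decomposition} has $\widetilde\theta_1^\star>0>\widetilde\theta_2^\star$ by Lemma~\ref{lem:finite-function-unique}, Lemma~\ref{lem:auxiliary-parameter-bounds}, and the positivity of $B_\nu$. Its $\widetilde\theta_5^\star$ corresponds to a finite $\widetilde{\bTheta}_4$ through the triangular change of basis of Lemma~\ref{lem:triangular}. Applying the construction above to this point gives a raw representative at which the auxiliary minimum is achieved.
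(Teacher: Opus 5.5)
Your construction is correct and is essentially the paper's: both build an explicit point on the invariant manifold with $w_{13}=\widetilde\theta_2/v_{31}$ and the $\eb_0$-entry of $\wb_{25}$ fixed by $\widetilde\theta_1/v_{31}$, satisfy the positional invariant linearly, solve the spatial invariant by a one-dimensional continuity argument in $v_{31}$, and solve the history invariant by monotonicity in $v_{33}$. The only difference is bookkeeping: the paper routes the first row of $\widetilde{\bTheta}_4$ through $w_{24},\wb_{25}$ and keeps that row of $\Wb_{55}$ zero, whereas you set $w_{24}=0$ and route it through $\Wb_{55}$. Also, the step you flag as the ``main obstacle'' is already settled by the limits you state: with the orthogonal component set to zero, the intermediate value theorem applies directly, since $a=\widetilde\theta_1/v_{31}$ varies continuously, so no extra slack is needed.
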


\begin{proof}
Set the unused blocks, \(w_{44},\wb_{45},\wb_{54}\), and the first row
of \(\Wb_{55}\) to zero. For \(v_{31}<0\), choose
\begin{align*}
 w_{13}
 &=\frac{\widetilde\theta_2}{v_{31}},\\
 w_{24}
 &=\frac{\widetilde\theta_1
          \left(1+\sum_{t=1}^{T-1}(\widetilde{\bTheta}_4)_{1t}\right)
                 +\sigma v_{31}}{v_{31}(T+1)},\\
 (\wb_{25})_0
 &=\frac{\widetilde\theta_1}{v_{31}}-w_{24},&
 (\wb_{25})_t
 &=\frac{\widetilde\theta_1(\widetilde{\bTheta}_4)_{1t}}{v_{31}}
       -w_{24},\qquad 1\le t<T.
\end{align*}
Then
\[
 \sum_{t=0}^{T-1}(\wb_{25})_t-w_{24}=-\sigma,\qquad
 v_{31}(w_{24}+(\wb_{25})_0)=\widetilde\theta_1,\qquad
 \frac{v_{31}(w_{24}+(\wb_{25})_t)}{\widetilde\theta_1}
       =(\widetilde{\bTheta}_4)_{1t}.
\]
Moreover,
\begin{align*}
 w_{24}^2+\|\wb_{25}\|_2^2
 &=\frac{\widetilde\theta_1^2}{v_{31}^2}
   \left[
    1+\sum_{t=1}^{T-1}(\widetilde{\bTheta}_4)_{1t}^2
    -\frac{\left(1+\sum_{t=1}^{T-1}(\widetilde{\bTheta}_4)_{1t}\right)^2}{T+1}
   \right]+\frac{\sigma^2}{T+1},\\
 v_{31}^2-w_{13}^2-\frac{w_{24}^2+\|\wb_{25}\|_2^2}{d}
 &=v_{31}^2-\frac{\widetilde\theta_2^2}{v_{31}^2}
       -\frac{\sigma^2}{d(T+1)}\\
 &\quad-\frac{\widetilde\theta_1^2}{dv_{31}^2}
   \left[
    1+\sum_{t=1}^{T-1}(\widetilde{\bTheta}_4)_{1t}^2
    -\frac{\left(1+\sum_{t=1}^{T-1}(\widetilde{\bTheta}_4)_{1t}\right)^2}{T+1}
   \right].
\end{align*}
(The square bracket is positive by Cauchy--Schwarz.)
As a function of \(v_{31}^2>0\), the last expression increases strictly
from \(-\infty\) to \(+\infty\). Choose its unique value equal to
\(-\sigma^2/d\), which proves \eqref{eq:spatial-invariant}.

Finally, choose \(v_{33}>0\) by
\[
\begin{aligned}
 v_{33}^2
 &=\frac{(v_{33}^{\rm init})^2+
    \sqrt{(v_{33}^{\rm init})^4+
          \frac4d\sum_{1\le t<T}\sum_{\ell=2}^t
                    (\widetilde{\bTheta}_4)_{\ell t}^2}}2,\\
 (\Wb_{55})_{\ell t}
 &=\frac{(\widetilde{\bTheta}_4)_{\ell t}}{v_{33}}
 \quad(2\le\ell\le t<T).
\end{aligned}
\]
Thus
\[
 d v_{33}^2-\|\Wb_{55}\|_F^2
 =d v_{33}^2-\frac1{v_{33}^2}
       \sum_{1\le t<T}\sum_{\ell=2}^t(\widetilde{\bTheta}_4)_{\ell t}^2
 =d(v_{33}^{\rm init})^2.
\]
All three invariants and \eqref{eq:global-auxiliary}--\eqref{eq:global-theta4}
now hold.
\end{proof}

\begin{lemma}
\label{lem:finite-leading-gap}
Under \eqref{eq:common-growing-regime}, write \(\nu=\nu_{d,n}\) and $\mu = \mu_{d,n}$.
For all sufficiently large \(d\), every \(0\le t<T\) and \(\ell\in\R\)
satisfy
\begin{equation}
 \|R_{t+1,\ell}^\nu-R_{t+1,0}^\nu\|_{d,n}^2
 =\ell^2\|R_{t+1,1}^\nu-R_{t+1,0}^\nu\|_{d,n}^2,\qquad
 \frac{\|R_{t+1,\ell}^\nu-R_{t+1,0}^\nu\|_{d,n}^2}
      {\|R_{t+1,0}^\nu\|_{d,n}^2}
 \le\frac{4\ell^2}{1-d/n}.
 \label{eq:finite-leading-gap}
\end{equation}
\end{lemma}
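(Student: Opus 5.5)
The first identity in \eqref{eq:finite-leading-gap} follows from the affine structure \eqref{eq:stage2-projection-affine} established in the proof of Lemma~\ref{lem:stage2-loss-decomposition}. That identity gives
\[
R_{t+1,\ell}^\nu-R_{t+1,0}^\nu=\ell\,(R_{t+1,1}^\nu-R_{t+1,0}^\nu).
\]
Squaring in $\langle\cdot,\cdot\rangle_{d,n}$ gives the equality directly. The case $t=0$ needs a separate remark: there $R_{1,\ell}^\nu=\ell\lambda-1$, so the difference is $\ell\lambda$ and the identity still holds. For the ratio bound, it therefore suffices to show
\[
\|R_{t+1,1}^\nu-R_{t+1,0}^\nu\|_{d,n}^2\le \frac{4}{1-d/n}\,\|R_{t+1,0}^\nu\|_{d,n}^2 .
\]

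The plan is to compute both sides under the MP law and then transfer to the finite-Wishart law. Under $\mu$, Lemma~\ref{lem:projection} gives
\[
R^\mu_{t+1,\ell}=\{1+(-1)^{t+1}\ell\}R^*_t+\ell\pi_{t+1}, \qquad R^\mu_{t+1,0}=R_t^*,
\]
so the difference is $\ell\bigl((-1)^{t+1}R_t^*+\pi_{t+1}\bigr)$. Since $R_t^*\perp\pi_{t+1}$ by the degree argument, Lemma~\ref{lem:mp-family} yields
\[
\|R^\mu_{t+1,1}-R^\mu_{t+1,0}\|^2_{\mathrm{MP}}=\frac{(d/n)^t}{S_t}+\Bigl(\frac dn\Bigr)^{t+1},
\qquad \|R^\mu_{t+1,0}\|_{\mathrm{MP}}^2=\frac{(d/n)^t}{S_t}.
\]
The MP ratio is therefore $1+(d/n)S_t=S_{t+1}\le (1-d/n)^{-1}$. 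Equivalently, this ratio is $c_{t,\mu}/\|R^\mu_{t+1,0}\|^2$, consistent with \eqref{eq:finite-function-mp-coefficients}.

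For the transfer I would use Lemma~\ref{lem:projection-transfer} with $A=1$, which controls $\|R^\nu_{t+1,\ell}-R^\mu_{t+1,\ell}\|_{\mathrm{coeff}}$ and the norm discrepancy for $\ell\in\{0,1\}$ with error $\exp\{CT\log(e+T)\}(d/n)^{-6T}(1/d+1/n)=d^{-1+o(1)}$. The numerator difference then satisfies $\|D^\nu-D^\mu\|_{\mathrm{coeff}}\le d^{-1+o(1)}$. Combined with the moment-matrix norm bound from Lemma~\ref{lem:wishart} and the norm equivalence \eqref{eq:finite-norm-equivalence}, this gives $\|D^\nu\|_{d,n}^2\le \tfrac32\|D^\mu\|_{\mathrm{MP}}^2+d^{-1+o(1)}$. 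The same numerator is also $c_{t,\nu}$, so one can alternatively cite Lemma~\ref{lem:finite-scalar-comparison} directly. For the denominator, Lemma~\ref{lem:projection-transfer} gives
\[
\|R^\nu_{t+1,0}\|^2_{d,n}\ge \|R^\mu_{t+1,0}\|^2_{\mathrm{MP}}-d^{-1+o(1)}.
\]

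The only real obstacle is making sure these additive errors $d^{-1+o(1)}$ are negligible relative to the MP quantities, which are as small as $(d/n)^T(1-d/n)$. This holds because $(d/n)^{-T}=d^{o(1)}$ under \eqref{eq:common-growing-regime}, so for large $d$ the errors are at most a factor $o(1)$ times $(d/n)^{t}/S_t$. The ratio under $\nu$ is then at most
\[
\frac{\tfrac32 S_{t+1}+o(1)}{1-o(1)}\cdot(1+o(1))\le \frac{2}{1-d/n}\le\frac{4}{1-d/n},
\]
uniformly in $0\le t<T$. Multiplying by $\ell^2$ via the first identity completes the argument.
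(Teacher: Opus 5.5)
Your proposal is correct and follows essentially the paper's route: the affine identity from \eqref{eq:stage2-projection-affine}, the explicit MP values, and an additive transfer of the numerator whose $d^{-1+o(1)}$ error is $o((d/n)^T)$; the paper uses exactly your stated alternative for this step, namely $c_{t,\nu}$ versus $c_{t,\mu}$ from Lemma~\ref{lem:finite-scalar-comparison}. The one small difference is the denominator, which the paper bounds by minimality of $R^\nu_{t+1,0}$ over degree at most $t$ together with \eqref{eq:finite-norm-equivalence}, giving $\|R^\nu_{t+1,0}\|_{d,n}^2\ge (d/n)^t/\bigl(2\sum_{r=0}^t(d/n)^r\bigr)$ without an additive error to absorb (your explicit $t=0$ remark is a detail the paper leaves implicit).
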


\begin{proof}
By \eqref{eq:stage2-projection-affine} and
\eqref{eq:finite-function-quadratics},

\[
 R_{t+1,\ell}^\nu-R_{t+1,0}^\nu
 =\ell(R_{t+1,1}^\nu-R_{t+1,0}^\nu).
\]
Since \(R_{t+1,0}^\nu\) minimizes over all polynomials of degree at most
\(t\), \eqref{eq:finite-norm-equivalence} and Lemma~\ref{lem:mp-family} imply
\begin{align*}
 \|R_{t+1,0}^\nu\|_{d,n}^2
 =\min_{\substack{R(0)=-1\\\deg R\le t}}\|R\|_{d,n}^2\ge\frac12\min_{\substack{R(0)=-1\\\deg R\le t}}\|R\|_{\mathrm{MP}}^2
 =\frac{(d/n)^t}{2\sum_{r=0}^t(d/n)^r}.
\end{align*}
By Lemma~\ref{lem:finite-scalar-comparison},
\begin{align*}
 |\|R_{t+1,1}^\nu-R_{t+1,0}^\nu\|_{d,n}^2-\|R_{t+1,1}^\mu-R_{t+1,0}^\mu\|_{d,n}^2|
 &\le\exp\{CT\log(e+T)\}(n/d)^{6T}
                   \left(\frac1d+\frac1n\right)
 =o((d/n)^T),\\
 \|R_{t+1,1}^\mu-R_{t+1,0}^\mu\|_{d,n}^2
 &=\frac{(d/n)^t\sum_{r=0}^{t+1}(d/n)^r}
                  {\sum_{r=0}^t(d/n)^r}\ge(d/n)^t.
\end{align*}
Thus, uniformly for \(0\le t<T\),
\begin{align*}
 &\|R_{t+1,1}^\nu-R_{t+1,0}^\nu\|_{d,n}^2\le2\|R_{t+1,1}^\mu-R_{t+1,0}^\mu\|_{d,n}^2
 =\frac{2(d/n)^t\sum_{r=0}^{t+1}(d/n)^r}
                   {\sum_{r=0}^t(d/n)^r}, \\
 &\frac{\|R_{t+1,1}^\nu-R_{t+1,0}^\nu\|_{d,n}^2}{\|R_{t+1,0}^\nu\|_{d,n}^2}
 \le4\sum_{r=0}^{t+1}(d/n)^r\le\frac4{1-d/n}.
\end{align*}
Multiplying by \(\ell^2\) proves \eqref{eq:finite-leading-gap}.
\end{proof}

\begin{lemma}
\label{lem:hb-coefficient-comparison}
Under \eqref{eq:common-growing-regime}, for all sufficiently large \(d\),
uniformly for \(1\le t\le T\),
\begin{equation}
 \frac{\E\|\wb_t^{\rm HB}-\wstar-R_t^*(\Sb)\wstar\|^2}
      {d\|R_t^*\|_{\mathrm{MP}}^2}
 \le\exp\{CT\log(e+T)\}(d/n)^{-2T}
                      \left(\frac1d+\frac1n\right)^2.
 \label{eq:finite-tvhb-mp-transfer}
\end{equation}
Moreover, uniformly for \(0\le t<T\),
\begin{equation}
 |\alpha_t^{\rm HB}-1|+|\beta_t^{\rm HB}-d/n|
 \le C(d/n)^{t+1}
 +\exp\{CT\log(e+T)\}(d/n)^{-2T}
                      \left(\frac1d+\frac1n\right).
 \label{eq:tv-hb}
\end{equation}
\end{lemma}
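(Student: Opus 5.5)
The two estimates will be proved through the explicit MP objects of Lemma~\ref{lem:mp-family}, with the finite-Wishart discrepancy controlled by Lemma~\ref{lem:finite-transfer} and Lemma~\ref{lem:projection-transfer}.

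\textbf{Residual bound \eqref{eq:finite-tvhb-mp-transfer}.} By Lemma~\ref{lem:tv-hb}, $\wb_t^{\rm HB}-\wstar=(p_t^\star-1)(\Sb)\wstar$, and $p_t^\star-1=R_{t+1,0}^\nu$ is the $\nu_{d,n}$-minimizer of $\|R\|^2$ over $R(0)=-1$, $\deg R\le t$. Likewise $R_t^*=R_{t+1,0}^\mu$ is the MP minimizer. Conditioning on $\Sb$ turns the numerator into $d\|R_{t+1,0}^\nu-R_{t+1,0}^\mu\|_{d,n}^2$.

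I will repeat the orthogonality argument from the proof of Lemma~\ref{lem:projection-transfer}, using the constraint set without a fixed leading coefficient. The difference $D=R^\nu-R^\mu$ has zero constant term and degree at most $t$. Hence $\langle R^\nu,D\rangle_{d,n}=0$ and $\langle R^\mu,D\rangle_{\rm MP}=0$. This gives
\[
\|D\|_{d,n}^2=\langle R^\mu,D\rangle_{\rm MP}-\langle R^\mu,D\rangle_{d,n}.
\]
Lemma~\ref{lem:finite-transfer} bounds the right side by $\varepsilon\|R^\mu\|_{\rm MP}\|D\|_{\rm MP}$, where $\varepsilon=\exp\{CT\log(e+T)\}(d/n)^{-T}(1/d+1/n)$. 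The norm equivalence \eqref{eq:finite-norm-equivalence} then gives $\|D\|_{d,n}^2\le C\varepsilon^2\|R_t^*\|_{\rm MP}^2$. Dividing by $\|R_t^*\|_{\rm MP}^2$ yields the claim after enlarging $C$, since $(d/n)^{-2T}$ absorbs $\varepsilon^2$'s power.

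\textbf{Coefficient bound \eqref{eq:tv-hb}, MP part.} Use \eqref{eq:hb-optimal-coefficients}, which expresses $\alpha_t^{\rm HB},\beta_t^{\rm HB}$ through the leading and subleading coefficients of $p_t^\star-1$ and $p_{t+1}^\star-1$. I first compute the MP analogues from \eqref{eq:Rstar} and \eqref{eq:mp-jacobi}, with $r=d/n$ and $S_t=\sum_{j\le t}r^j$:
\begin{itemize}
\item the leading coefficient of $R_t^*$ is $-(-1)^t/S_t$, since only $\pi_t$ contributes, with weight $r^t\cdot r^{-t}$;
\item the subleading coefficient follows from the Jacobi recurrence, since $[\lambda^{t-1}]\pi_t=-t(1+r)+1\cdot r$-type sums are explicit.
\end{itemize}
The MP coefficients are then $\alpha_t^{\mu}=S_t/S_{t+1}$ for $t\ge1$, together with $\alpha_0=1/(1+r)$. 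A similar explicit formula holds for $\beta_t^{\mu}$, matching the Pedregosa--Scieur MP heavy-ball. Both satisfy $|\alpha_t^\mu-1|\le r^{t+1}$ and $|\beta_t^\mu-r|\le Cr^{t+1}$, by direct algebra with $S_t/S_{t+1}=1-r^{t+1}/S_{t+1}$.

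\textbf{Coefficient bound, finite-Wishart transfer.} The coefficient-norm bound in Lemma~\ref{lem:mp-coefficient-norm} converts the $\|D\|_{\rm MP}$ estimate into coefficient errors of size $(d/n)^{-T/2}\cdot$ the first step. The ratios in \eqref{eq:hb-optimal-coefficients} have MP denominators $|[\lambda^t]R_t^*|=1/S_t\ge 1-r$, bounded below. A first-order perturbation of each ratio therefore changes it by a constant times the coefficient error. This gives the $\exp\{CT\log(e+T)\}(d/n)^{-2T}(1/d+1/n)$ term.

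The main obstacle is the MP subleading coefficient and the closed form of $\beta_t^\mu$. If the direct computation is messy, I would instead plug the MP three-term recurrence for $R_t^*$ into the identification argument of Lemma~\ref{lem:tv-hb}. That argument applies verbatim with $\mu$ in place of $\nu$, and would read off $\alpha_t^\mu,\beta_t^\mu$ by matching with \eqref{eq:mp-jacobi} via \eqref{eq:Rstar}.
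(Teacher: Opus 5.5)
Your proposal is correct and follows the paper's proof essentially step for step. For \eqref{eq:finite-tvhb-mp-transfer} you use the same orthogonality identity $\|D\|_{d,n}^2=\langle R_t^*,D\rangle_{\mathrm{MP}}-\langle R_t^*,D\rangle_{d,n}$ with $D=p_t^\star-1-R_t^*$, followed by Lemma~\ref{lem:finite-transfer} and norm equivalence. For \eqref{eq:tv-hb} you pass to coefficients via Lemma~\ref{lem:mp-coefficient-norm}, use the explicit MP leading coefficient $(-1)^{t+1}/S_t$ and subleading ratio $-t(1+d/n)$ to get $\alpha_t^\mu=S_t/S_{t+1}$ and $\beta_t^\mu=(d/n)S_{t-1}/S_{t+1}$, and then perturb the ratios in \eqref{eq:hb-optimal-coefficients}. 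The one thing to tighten is that the perturbation constant for $\beta_t$ is $O(T)$ rather than absolute, because the subleading coefficients have size $t(1+d/n)/S_t$; this factor is harmlessly absorbed into $\exp\{CT\log(e+T)\}$.
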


\begin{proof}
The normal equations \eqref{eq:hb-optimal-normal-equations} and
Lemma~\ref{lem:mp-family} give
\begin{align*}
 \|p_t^\star-1-R_t^*\|_{d,n}^2
 &=\langle R_t^*,p_t^\star-1-R_t^*\rangle_{\mathrm{MP}}
   -\langle R_t^*,p_t^\star-1-R_t^*\rangle_{d,n}.
\end{align*}
Applying \eqref{eq:finite-inner-product-transfer} and
\eqref{eq:finite-norm-equivalence} yields
\[
 \|p_t^\star-1-R_t^*\|_{\mathrm{MP}}
 \le2\exp\{CT\log(e+T)\}(d/n)^{-T}
       \left(\frac1d+\frac1n\right)\|R_t^*\|_{\mathrm{MP}}.
\]
The identity \(\E\|f(\Sb)\wstar\|^2=d\|f\|_{d,n}^2\) proves
\eqref{eq:finite-tvhb-mp-transfer}. By \eqref{eq:mp-coefficient-norm},
\[
 \|p_t^\star-1-R_t^*\|_{\mathrm{coeff}}
 \le\exp\{CT\log(e+T)\}(d/n)^{-2T}
                      \left(\frac1d+\frac1n\right)=d^{-1+o(1)}.
\]
The two highest coefficients in \eqref{eq:Rstar} are
\[
 [\lambda^t]R_t^*=\frac{(-1)^{t+1}}{\sum_{r=0}^t(d/n)^r},\qquad
 \frac{[\lambda^{t-1}]R_t^*}{[\lambda^t]R_t^*}=-t(1+d/n),\qquad
 |[\lambda^t]R_t^*|\ge1-d/n.
\]
Substitution into the coefficient formulas
\eqref{eq:hb-optimal-coefficients} gives
\begin{align*}
 &\left|\alpha_t^{\rm HB}
      -\frac{\sum_{r=0}^t(d/n)^r}{\sum_{r=0}^{t+1}(d/n)^r}\right|
 +\left|\beta_t^{\rm HB}
      -\frac{(d/n)\sum_{r=0}^{t-1}(d/n)^r}
                   {\sum_{r=0}^{t+1}(d/n)^r}\right|\\
 &\quad\le\exp\{CT\log(e+T)\}(d/n)^{-2T}
                         \left(\frac1d+\frac1n\right).
\end{align*}
(For \(t=0\), the empty sum is zero and the first line of
\eqref{eq:hb-optimal-coefficients} applies.) Finally,
\[
 \left|\frac{\sum_{r=0}^t(d/n)^r}{\sum_{r=0}^{t+1}(d/n)^r}-1\right|
 +\left|\frac{(d/n)\sum_{r=0}^{t-1}(d/n)^r}
                   {\sum_{r=0}^{t+1}(d/n)^r}-\frac dn\right|
 =\frac{(2+d/n)(d/n)^{t+1}}{\sum_{r=0}^{t+1}(d/n)^r}
 \le3(d/n)^{t+1},
\]
which proves \eqref{eq:tv-hb}.
\end{proof}

\bibliography{trajectory_hb_references}

\bibliographystyle{ims}

\end{document}